\newif\ifauthors
\authorstrue

\documentclass[11pt, oneside]{article}
\usepackage{geometry}\geometry{margin=1in}
\geometry{letterpaper}
\usepackage[utf8]{inputenc} 

\usepackage{geometry}[1in]

\usepackage[T1]{fontenc}    
\usepackage{hyperref}       

\usepackage{url}            
\usepackage{booktabs}       
\usepackage{amsfonts}       
\usepackage{latexsym,amssymb,amsthm,amscd,amsopn,amsmath}
\usepackage{bbm}
\usepackage{mathrsfs}
\usepackage{nicefrac}       
\usepackage{microtype}      
\usepackage{xcolor}         
\usepackage{algorithm}
\usepackage{verbatim}
\usepackage[noend]{algpseudocode}

\usepackage{xspace}
\usepackage{mathtools}
\usepackage{svg}
\usepackage[nameinlink,capitalize]{cleveref}
\usepackage{makecell}

\definecolor{dgreen}{rgb}{0,0.5,0}
\hypersetup{
  colorlinks=true,
  linkcolor=blue,
  filecolor=blue,
  citecolor = dgreen,      
  urlcolor=cyan,
}

\crefformat{equation}{#2(#1)#3}
\Crefformat{equation}{#2(#1)#3}
\Crefname{construction}{Construction}{Constructions}

\Crefformat{figure}{#2Figure #1#3}
\Crefname{assumption}{Assumption}{Assumptions}
\Crefformat{assumption}{#2Assumption #1#3}
\Crefname{subsubsection}{Section}{Sections}
\crefformat{subsubsection}{#2Section #1#3}
\Crefformat{subsubsection}{#2Section #1#3}

\theoremstyle{plain}
\newtheorem{theorem}{Theorem}
\newtheorem{lemma}[theorem]{Lemma}

\newtheorem{proposition}[theorem]{Proposition}

\newtheorem{assumption}[theorem]{Assumption}
\theoremstyle{definition}
\newtheorem{definition}[theorem]{Definition}
\newtheorem{defn}[theorem]{Definition}

\newtheorem{remark}[theorem]{Remark}
\newtheorem{question}[theorem]{Question}

\newtheoremstyle{named}%
    {}{}{\itshape}{}{\bfseries}{.}{.5em}{\thmnote{#3}}
\theoremstyle{named}
\newtheorem*{namedproblem}{Problem}

\newcommand\bref[3][blue]{%
    \begingroup%
    \hypersetup{linkcolor=#1}%
    \hyperlink{#2}{\textup{#3}}%
    \endgroup}

\numberwithin{theorem}{section}

\newenvironment{namedproof}[1]{\paragraph{Proof of #1.}\hspace{-1em}}{\hfill$\blacksquare$\vspace{1em}}

\newcommand{\nc}{\newcommand}
\nc{\DMO}{\DeclareMathOperator}
\newcount\Comments  
\Comments=1

\DeclareMathOperator*{\argmin}{arg\,min} 
\DeclareMathOperator*{\argmax}{arg\,max}

\nc{\Moracle}{\MM^{\mathsf{oracle}}}
\nc{\tilMoracle}{\til{\MM}^{\mathsf{oracle}}}
\nc{\SimulateReduction}{\texttt{SimulateReduction}\xspace}
\nc{\TestReduction}{\texttt{DistinguishReduction}\xspace}
\nc{\SimulateSampling}{\texttt{SimulateSampling}\xspace}
\nc{\SimulateRegression}{\texttt{SimulateRegression}\xspace}
\nc{\Osample}{{\MO_{\mathsf{samp}}}}
\nc{\Oregress}{{\MO_{\mathsf{regress}}}}
\nc{\Oregressp}{{\MO'_{\mathsf{regress}}}}
\nc{\Obandits}{{\MO_{\mathsf{bandits}}}}
\nc{\dom}{\mathsf{dom}}
\nc{\SF}{\mathscr{F}}
\nc{\Fchernoff}{\MF^{\mathsf{chernoff}}}

\DMO{\prox}{prox}
\DMO{\Span}{span}
\DMO{\UCB}{UCB}
\DMO{\LCB}{LCB}
\nc{\expl}[2]{\ME^{#1}_{#2}}
\nc{\tilmdp}[1]{\til \MM({#1})}
\nc{\barpdp}[2]{\ol \MP_{#1}({#2})}
\nc{\barmdp}[1]{\ol \MM({#1})}
\nc{\hatmdp}[1]{\wh \MM({#1})}
\nc{\rem}[2]{\MR_{#1}({#2})}
\nc{\Pigen}{\Pi^{\rm gen}}
\nc{\Pidet}{\Pi^{\rm det}}
\nc{\PiZ}{\Pi_{\SZ}^{\rm markov}}
\nc{\und}[3]{\MU_{{#1}}^{{#2}}({#3})}
\nc{\zlow}[2]{\MZ_{{#1}}^\lowv({#2})}
\nc{\dg}{\dagger}
\nc{\bB}{\mathbf{B}}
\nc{\unif}{\mu_{\rm unif}}
\nc{\indsig}[2]{\mathcal{I}_{#1}({#2})}
\nc{\total}{{\rm fin}}
\nc{\early}{{\rm pre}}
\nc{\zsink}{z_{\rm sink}}
\nc{\lowv}{{\rm low}}
\nc{\oo}[1]{\texttt{o}({#1})}
\nc{\posnrm}[1]{\left[ {#1} \right]_+}
\nc{\negnrm}[1]{\left[ {#1} \right]_-}
\nc{\tvnrm}[1]{\left\| {#1} \right\|_1}
\nc{\absval}[1]{\left| {#1} \right|}
\nc{\normalize}[1]{\mathfrak{n}\left({#1}\right)}

\nc{\SZ}{\textsf{Z}}
\nc{\SO}{\textsf{O}}
\nc{\suff}[2]{{\rm suff}_{#1}({#2})}
\nc{\UPhi}{\mathscr{U}_{X,H}}
\nc{\UPhis}{\til{\mathscr{U}}_{X,H,\MF}}
\nc{\SV}{\mathscr{V}}
\nc{\Phiset}{\Phi_{X,H}}
\nc{\Phisets}{\til{\Phi}_{X,H,\MF}}
\nc{\Lyu}{{\mathtt{Lyu}}}
\nc{\wAlg}{{\widetilde \Alg}}

\nc{\ApproxMDP}{\texttt{ConstructMDP}\xspace}
\nc{\mainalg}{\texttt{BaSeCAMP}\xspace} 
\nc{\bspanner}{\texttt{BarySpannerPolicy}\xspace}

\nc{\gamvec}{\gamma}
\nc{\til}{\widetilde}
\nc{\td}{\tilde}
\nc{\wh}{\widehat}
\nc{\todo}[1]{\ifnum\Comments=1 {\color{red}  [TODO: #1]}\fi}
\nc{\old}[1]{\ifnum\Comments=1 {\color{brown}  [COPIED: #1]}\fi}
\definecolor{darkgreen}{rgb}{0.0, 0.5, 0.0}
\nc{\noah}[1]{\ifnum\Comments=1 {\color{darkgreen} [ng: #1]}\fi}
\nc{\dhruv}[1]{\ifnum\Comments=1 {\color{purple} [dr: #1]}\fi}
\nc{\BP}{\mathbb{P}}
\nc{\BM}{\mathbb{M}}
\nc{\bbapx}{\bb^{\rm apx}}
\nc{\bbapxs}[1]{\bb^{\rm apx, {#1}}}

\nc{\fools}[3]{\MF_{#3}({#1}, {#2})}
\nc{\fool}[2]{\MF({#1},{#2})}
\nc{\clip}[2]{{\rm clip}\left[ \left. {#1} \right| {#2} \right]}
\nc{\imax}{\omega}
\DMO{\conv}{conv}
\nc{\MH}{\mathcal{H}}
\nc{\CH}{\mathscr{H}}
\nc{\CB}{\mathscr{B}}
\nc{\cD}{\mathscr{D}}
\nc{\MC}{\mathcal{C}}
\nc{\st}{\star}
\nc{\lng}{\langle}
\nc{\rng}{\rangle}
\DMO{\OOPT}{opt}
\nc{\dopt}[2]{\ell_{\OOPT}({#1},{#2})}
\nc{\grad}{\nabla}
\nc{\MG}{\mathcal{G}}
\nc{\MP}{\mathcal{P}}
\nc{\PP}{\mathbb{P}}
\nc{\TT}{\mathbb{T}}
\nc{\TTmax}{\TT_{\max}}
\DMO{\Ham}{Ham}
\DMO{\Gap}{Gap}
\DMO{\GD}{GD}
\DMO{\GDA}{GDA}
\DMO{\EG}{EG}
\DMO{\OGDA}{OGDA}
\DMO{\Unif}{Unif}
\DMO{\Tr}{Tr}
\nc{\ul}{\underline}
\nc{\ol}{\overline}
\nc{\Qu}{\ul{Q}}
\nc{\Qo}{\ol{Q}}
\nc{\Ro}{\ol{R}}
\nc{\Vu}{\ul{V}}
\nc{\Vo}{\ol{V}}
\nc{\RanQ}{\Delta Q}
\nc{\RanV}{\Delta V}
\nc{\clipQ}{\Delta \breve{Q}}
\nc{\frzQ}{\Delta \mathring{Q}}
\nc{\clipV}{\Delta \breve{V}}
\nc{\clipdelta}{\breve{\delta}}
\nc{\cliptheta}{\breve{\theta}}
\nc{\delmin}{\Delta_{{\rm min}}}
\nc{\delmins}[1]{\Delta_{{\rm min},{#1}}}
\nc{\gapfinal}[1]{\max \left\{ \frac{\frzQ_{{#1}}^{k^\st}(x,a)}{2H}, \frac{\delmin}{4H} \right\}}
\nc{\post}[2]{R({#1}; {#2})}
\nc{\posts}[3]{R_{#3}({#1}; {#2})}
\nc{\MAJ}{\mathsf{MAJ}}
\nc{\Dnull}{D^{\circ}}
\DeclareMathOperator{\CBer}{CBer}
\nc{\BKW}{\mathtt{BKW}}
\nc{\Dec}{\mathtt{Dec}}
\nc{\delreg}{\delta_{\mathsf{reg}}}
\nc{\delreal}{\delta_{\mathsf{real}}}
\nc{\Sreg}{S_{\mathsf{Reg}}}
\nc{\Treg}{T_{\mathsf{Reg}}}
\nc{\PC}{\mathtt{PC}}
\nc{\alphaPC}{\alpha_{\mathsf{PC}}}
\nc{\TPC}{T_{\mathsf{PC}}}
\nc{\SPC}{S_{\mathsf{PC}}}
\nc{\delsmall}{{\delta_{\mathsf{small}}}}
\nc{\CL}{\mathtt{ContrastLearn}}
\nc{\Select}{\mathtt{Select}}
\nc{\piunif}{{\pi_{\mathsf{unif}}}}
\nc{\picov}{{\pi_{\mathsf{cov}}}}
\nc{\ZZ}{\mathbb{Z}}
\nc{\sk}{{\mathsf{sk}}}
\nc{\Enc}{\mathtt{Enc}}
\nc{\EntLPN}{\mathtt{EntangleLPN}}
\nc{\mureg}{{\mu_{\mathsf{reg}}}}
\nc{\PPE}{\mathtt{PPE}}
\nc{\FQI}{\mathtt{FQI}}
\nc{\False}{\mathtt{False}}
\nc{\True}{\mathtt{True}}
\nc{\epreg}{\epsilon_{\mathsf{reg}}}
\nc{\algnst}[1]{\begin{align*}#1\end{align*}}
\nc{\algn}[1]{\begin{align}#1\end{align}}
\nc{\matx}[1]{\left(\begin{matrix}#1\end{matrix}\right)}

\nc{\pimix}{{\pi_{\mathsf{mix}}}}
\nc{\BPC}{{B_{\mathsf{PC}}}}
\nc{\size}{\mathrm{size}}
\nc{\OLIVE}{\texttt{OLIVE}}
\nc{\NP}{\textsf{NP}}
\nc{\RP}{\textsf{RP}}
\nc{\cprp}{c_{\mathsf{PRP}}}
\nc{\Mtoy}{\MM_{\mathsf{toy}}}
\nc{\Brute}{\mathtt{Brute}}

\nc{\nuu}{\nu}

\nc{\bel}[1]{\mathbf{b}({#1})}
\nc{\nbel}[1]{\bar{\mathbf{b}}({#1})}
\nc{\sbel}[2]{\mathbf{b}'_{#1}({#2})}
\nc{\nsbel}[2]{\bar{\mathbf{b}}'_{#1}({#2})}

\nc{\bv}{\mathbf{v}}
\nc{\bone}{\mathbf{1}}
\nc{\bX}{\mathbf{X}}
\nc{\be}{\mathbf{e}}
\nc{\bY}{\mathbf{Y}}
\nc{\bG}{\mathbf{G}}
\nc{\bz}{\mathbf{z}}
\nc{\bw}{\mathbf{w}}
\nc{\bA}{\mathbf{A}}
\nc{\bJ}{\mathbf{J}}
\nc{\bK}{\mathbf{K}}
\nc{\bb}{\mathbf{b}}
\nc{\ba}{\mathbf{a}}
\nc{\bs}{\mathbf{s}}
\nc{\bzero}{\mathbf{0}}
\nc{\bi}{\mathbf{i}}
\nc{\Edistinct}{\ME^{\mathsf{distinct}}}
\nc{\bc}{\mathbf{c}}
\nc{\bC}{\mathbf{C}}
\nc{\BR}{\mathbb R}
\nc{\BA}{\mathbb{A}}
\nc{\SA}{\mathscr{A}}
\nc{\BC}{\mathbb C}
\nc{\bx}{\mathbf{x}}
\nc{\bS}{\mathbf{S}}
\nc{\bM}{\mathbf{M}}
\nc{\bR}{\mathbf{R}}
\nc{\bN}{\mathbf{N}}
\nc{\by}{\mathbf{y}}
\nc{\sy}{y}
\nc{\sx}{x}

\nc{\MO}{\mathcal O}
\nc{\MQ}{\mathcal{Q}}
\nc{\CO}{\mathscr{O}}
\nc{\MU}{\mathcal{U}}
\nc{\ME}{\mathcal{E}}
\nc{\MN}{\mathcal{N}}
\nc{\MK}{\mathcal{K}}
\nc{\MM}{\mathcal{M}}
\nc{\MS}{\mathcal{S}}
\nc{\MT}{\mathcal{T}}
\nc{\BF}{\mathbb F}
\nc{\BQ}{\mathbb Q}
\nc{\MX}{\mathcal{X}}
\nc{\MA}{\mathcal{A}}
\nc{\MD}{\mathcal{D}}
\nc{\MB}{\mathcal{B}}
\nc{\MZ}{\mathcal{Z}}
\nc{\MJ}{\mathcal{J}}
\nc{\MW}{\mathcal{W}}
\nc{\MR}{\mathcal{R}}
\nc{\MY}{\mathcal{Y}}
\nc{\ML}{\mathcal{L}}
\nc{\BZ}{\mathbb Z}
\nc{\BN}{\mathbb N}
\nc{\ep}{\epsilon}
\nc{\gapfn}[1]{\varepsilon_{#1}}
\nc{\ggapfn}[2]{\varphi_{#1}({#2})}
\nc{\epsahk}{\gapfn{0}}
\nc{\BH}{\mathbb H}
\nc{\BG}{\mathbb{G}}
\nc{\D}{\Delta}
\nc{\MF}{\mathcal{F}}
\nc{\One}{\mathbbm{1}}
\nc{\bOne}{\mathbf{1}}
\nc{\Aopt}{\mathcal{A}^{\rm opt}}
\nc{\Amul}{\mathcal{A}^{\rm mul}}

\nc{\SP}{\mathsf P}
\nc{\SQ}{\mathsf Q}

\nc{\DO}{\accentset{\circ}{\D}}
\nc{\mf}{\mathfrak}
\nc{\mfp}{\mathfrak{p}}
\nc{\mfq}{\mf{q}}
\nc{\Sp}{\mbox{Spec}}
\nc{\Spm}{\mbox{Specm}}
\nc{\hookuparrow}{\mathrel{\rotatebox[origin=c]{90}{$\hookrightarrow$}}}
\nc{\hookdownarrow}{\mathrel{\rotatebox[origin=c]{-90}{$\hookrightarrow$}}}
\nc{\hra}{\hookrightarrow}
\nc{\tra}{\twoheadrightarrow}
\nc{\sgn}{{\rm sgn}}
\nc{\aut}{{\rm Aut}}
\nc{\Hom}{{\rm Hom}}
\nc{\img}{{\rm Im}}
\DMO{\id}{Id}
\DMO{\supp}{supp}
\DMO{\KL}{KL}
\nc{\kld}[2]{\KL({#1}||{#2})}
\nc{\ren}[2]{D_2({#1}||{#2})}
\nc{\chisq}[2]{\chi^2({#1}||{#2})}
\nc{\tvd}[2]{D_{\mathsf{TV}}\left({#1}, {#2}\right)}
\nc{\hell}[2]{H^2({#1}, {#2})}
\DMO{\BSS}{BSS}
\DMO{\BES}{BES}
\DMO{\BGS}{BGS}
\DMO{\poly}{poly}
\nc{\indep}{\perp}
\DMO{\sink}{sink}
\DMO{\nosink}{nosink}
\nc{\sinks}{s^{\sink}}
\nc{\sinkobs}{o^{\sink}}
\nc{\fp}[1]{\MP_1({#1})}
\nc{\BO}{\mathbb{O}}
\nc{\BT}{\mathbb{T}}

\nc{\RR}{\mathbb{R}}
\nc{\NN}{\mathbb{N}}
\nc{\Gradient}{\nabla}
\DMO{\diag}{diag}
\nc{\norm}[1]{\left \lVert #1 \right \rVert}
\DMO*{\EE}{\mathbb{E}}
\nc{\LPN}{\mathsf{LPN}}
\DMO{\Ber}{Ber}
\nc{\Regress}{\mathtt{Regress}}
\nc{\LFC}{\mathtt{LearnFromCorr}}
\nc{\RegressAlg}{\mathtt{RegressAlg}}
\nc{\DrawTraj}{\mathtt{DrawTrajectory}}
\nc{\pizero}{{\pi_{\mathsf{zero}}}}
\nc{\Tred}{{T_{\mathsf{red}}}}
\nc{\epred}{{\epsilon_{\mathsf{red}}}}
\nc{\TriAlg}{\mathtt{GenerateTriangleLPN}}
\nc{\Alg}{\mathtt{Alg}}
\nc{\AffSample}{\mathtt{AffSample}}
\newcommand{\Var}{\mathbf{Var}}
\nc{\br}{\mathbf{r}}
\nc{\TV}{{\mathsf{TV}}}
\DMO{\Law}{Law}
\DMO{\Sym}{Sym}
\nc{\bu}{\mathbf{u}}
\nc{\Reg}{\mathtt{Reg}}
\nc{\Breg}{B_{\mathsf{Reg}}}
\DMO{\dc}{dc}

\DMO{\PR}{Pr}
\renewcommand{\Pr}{\PR}
\DMO*{\Prr}{Pr}
\nc{\E}{\mathbb{E}}
\nc{\ra}{\rightarrow}
\renewcommand{\t}{\top}

\title{Exploration is Harder than Prediction: Cryptographically Separating Reinforcement Learning from Supervised Learning}

\ifauthors
\author{Noah Golowich\thanks{Email: \texttt{nzg@mit.edu}. Supported by a Fannie \& John Hertz Foundation Fellowship and an NSF Graduate Fellowship.} \\ MIT \and Ankur Moitra\thanks{Email: \texttt{moitra@mit.edu}. Supported in part by a Microsoft Trustworthy AI Grant, an ONR grant and a David and Lucile Packard Fellowship.} \\ MIT \and Dhruv Rohatgi\thanks{Email: \texttt{drohatgi@mit.edu}. Supported by a U.S. DoD NDSEG Fellowship.} \\ MIT}
\fi
\date{\today}

\begin{document}

\maketitle

\begin{abstract}
  Supervised learning is often computationally easy in practice. But to what extent does this mean that other modes of learning, such as reinforcement learning (RL), ought to be computationally easy by extension? In this work we show the first cryptographic separation between RL and supervised learning, by exhibiting a class of block MDPs and associated decoding functions where reward-free exploration is provably computationally harder than the associated regression problem. We also show that there is no computationally efficient algorithm for reward-directed RL in block MDPs, even when given access to an oracle for this regression problem.
  
  It is known that being able to perform regression in block MDPs is necessary for finding a good policy; our results suggest that it is not sufficient. Our separation lower bound uses a new robustness property of the {\em Learning Parities with Noise (LPN)} hardness assumption, which is crucial in handling the dependent nature of RL data. 
We argue that separations and oracle lower bounds, such as ours, are a more meaningful way to prove hardness of learning because the constructions better reflect the practical reality that supervised learning by itself is often not the computational bottleneck.

\end{abstract}

\newpage
\setcounter{tocdepth}{2}
\tableofcontents
\newpage

\section{Introduction}
Supervised learning is often computationally easy in practice. Fueled by advances in deep learning, we can now achieve, and sometimes even surpass, human-level performance in a variety of tasks from speech recognition \cite{radford2023robust} to protein folding \cite{jumper2021highly} and beyond. Unfortunately, these success stories have largely eluded our attempts to theoretically explain them. We know that the ability of deep neural networks to generalize beyond their training data revolves around properties of natural data \cite{zhang2021understanding}. And yet it is hard to articulate precisely what structures natural data has that make supervised learning tractable, and a growing literature of computational lower bounds has shown that standard assumptions are not enough \cite{goel2020superpolynomial,daniely2021local,chen2022hardness,daniely2024computational}. 


Even worse, modern machine learning is about much more than predicting the labels of data points. For instance, in \emph{reinforcement learning}, an agent interacts with its environment over a sequence of episodes and receives rewards. It seeks to maximize its reward. The main challenge is that the agent's actions affect the environment it operates in. Thus reinforcement learning is not merely about prediction \--- e.g., of the cumulative rewards of a given policy \--- but also about how to use these predictions to guide exploration \cite{kaelbling1996reinforcement, sutton2018reinforcement}. Other examples abound, including online learning \cite{littlestone1988learning} and private learning \cite{dwork2006differential}. In each case, supervised learning is a key building block, but is far from sufficient to solve the entire problem. 

Given the gaps in our understanding of the assumptions that make supervised learning tractable, what can we say about richer modes of learning? We can still work in models where we \emph{assume} that supervised learning is easy, and then explore the consequences for other downstream learning tasks. 
This perspective is akin to approaches in cryptography and complexity theory. We may not know which of Impagliazzo's five worlds \cite{impagliazzo1995personal} we live in, but we can still conditionally explore the relationships between different average-case assumptions and their implications for cryptography. 


In the context of learning theory, this perspective underlies \emph{oracle-efficient} algorithm design, where the goal is to design a computationally efficient algorithm, except that the algorithm is allowed to make a polynomial number of queries to an oracle for solving certain optimization problems. This design paradigm is pervasive throughout modern machine learning \cite{hazan2016computational,foster2020beyond,vietri2020new} and particularly in reinforcement learning \cite{du2019provably} \--- see \cref{sec:related} for further references. It sidesteps the ``supervised learning barrier'', as we wanted, but it has its own shortcomings. The main drawback, in the context of reinforcement learning, is that the \emph{choice of oracle} is typically ad hoc. When every algorithm uses a different set of oracles, how do we compare different algorithms? What is the ``right'' oracle for a problem? See \cref{sec:oracle-intro} for further discussion. 

In this work we are motivated by these questions in the setting of reinforcement learning, as well as the loftier goal of developing a complexity theory for reinforcement learning \emph{relative to supervised learning}. Our first main contribution is to use cryptographic techniques to give the first \emph{separation} between \emph{reward-free} reinforcement learning and supervised learning. Our second main contribution is to show that even in \emph{reward-directed} reinforcement learning, the natural regression oracle is not sufficient for oracle-efficient learning.

\subsection{Background: Supervised learning and reinforcement learning}
\paragraph{Supervised learning.} First let us discuss \emph{prediction}: given a series of independent and identically distributed samples $(x^i,y^i)$, where $x^i$ is a covariate and $y^i$ is a label, the goal is to learn how to estimate $y$ for a fresh, unlabelled covariate $x$. 
When the covariate space is larger than the number of samples, sample-efficient prediction requires incorporating some inductive bias about $y|x$. The PAC model for classification \cite{valiant1984theory} formalizes inductive bias via a class $\Phi$ of binary-valued functions. Under the assumption that $\Pr[y=\phi^\st(x)|x] = 1/2+\eta$ for some $\phi^\st \in \Phi$ and constant $\eta>0$, the theory of PAC learning with noise asserts that near-optimal prediction is possible with only $O(\log|\Phi|)$ samples (see e.g. \cite{shalev2014understanding}). Moreover, this theory can be generalized to real-valued settings, i.e., regression. For reasons that will become clear when we introduce the reinforcement learning models that we care about (namely, \emph{block MDPs}; see \cref{def:block-mdp-informal}), we focus on regression problems where the hypothesis class is induced by a class of multiclass predictors. 
Concretely, fix a covariate space $\MX$, a finite \emph{latent state space} $\MS$, and any class $\Phi$ of multiclass predictors $\phi : \MX\to \MS$, which we will also refer to as \emph{decoding functions}. We define 
\emph{regression over $\Phi$} as follows:

\begin{definition}\label{def:phi-regression}
Let $\epsilon>0$. Given independent and identically distributed samples $(x^i, y^i)$ where $x^i \in \MX$, $y^i \in \{0,1\}$, and $\EE[y^i|x^i] = f(\phi^\st(x^i))$ for some $\phi^\st\in\Phi$ and $f:\MS\to[0,1]$, the goal is to produce a predictor $\MR:\MX\to[0,1]$ such that
\[\EE_x \left(\MR(x) - f(\phi^\st(x))\right)^2 \leq \epsilon.\]
\end{definition}

The key assumption is that $y^i|x^i$ is well-specified with respect to $\phi^\st$, i.e. the law of $y^i|x^i$ only depends on $\phi^\st(x^i)$. Note that this dependence is specified by an arbitrary function $f : \MS \to [0,1]$, but we think of $\MS$ as small, so the space of all such $f$ has bounded complexity. 
Thus, standard arguments imply that the statistical complexity of regression over $\Phi$ is at most $\poly(|\MS|,\log|\Phi|)$ for any constant $\epsilon>0$. Tight bounds are attainable via appropriate generalizations of VC dimension \cite{natarajan1989learning, shalev2014understanding,alon1997scale}. But the computational complexity of learning is a much thornier issue. Empirical Risk Minimization (ERM) has time complexity $O(|\Phi|)$, and for many expressive function classes of interest, substantial improvements seem unlikely \cite{blum2003noise, chen2022hardness}.

\paragraph{Episodic reinforcement learning (RL).} The field of RL formalizes the algorithmic tasks faced by an \emph{agent} that must learn how to interact with an unknown \emph{environment}. The agent learns by doing: over a series of independent episodes of interaction, the agent plays some \emph{policy}, observes how the environment responds, and repeats. The ultimate goal is to discover a ``good'' policy or set of policies. This basic framework is central to modern machine learning pipelines in applications ranging from robotics \cite{kober2013reinforcement} and healthcare \cite{liao2020personalized, yu2021reinforcement} to games \cite{silver2018general,paquette2019no}.

In \emph{reward-directed} RL, a good policy is one that approximately maximizes some reward function. In the closely related problem of \emph{reward-free} RL, a good set of policies is one that explores the entire feasible \emph{state space} of the environment. In either case, it is typical to model the environment as a \emph{Markov decision process} (MDP), which is defined by a set of states, a set of \emph{actions}, and a unknown transition function that describes the dynamics of the environment: if the environment is in a given, observed state and the agent takes a given action, it specifies the distribution over the next observed state. A policy is a description of which action the agent should take for any given observation history. A \emph{trajectory} is the sequence of observations and actions across an entire episode.

In many applications, the main challenge is that the state space of the environment is far too large to even write down \cite{kober2013reinforcement, silver2018general}. Taming the complexity of exploration in such environments necessitates making structural assumptions about the state space and dynamics. One assumption that has received intense interest in theoretical reinforcement learning is the \emph{block MDP} assumption \cite{du2019provably}, which informally asserts that the ``states'' observed by the agent are in fact stochastic emissions from a much smaller \emph{latent} MDP. This assumption is motivated by applications where the agent has access to rich observations such as images, but the underlying dynamics of the environment are simple:

\begin{definition}[Informal; see \cref{sec:block-rl}]
  \label{def:block-mdp-informal}
For sets $\MX$, $\MS$, let $\Phi$ be a set of functions $\phi: \MX\to\MS$. An MDP with state space $\MX$ and action space $\MA$ is a \emph{$\Phi$-decodable block MDP} with \emph{latent state space} $\MS$ if there is some \emph{decoding function} $\phi^\st \in \Phi$ so that the transition probability between any two states $x,x' \in \MX$ under action $a \in \MA$ only depends on $(\phi^\st(x),\phi^\st(x'),a)$, and the reward at state $x$ under action $a$ only depends on $(\phi^\st(x),a)$.
\end{definition}

Equivalently, a block MDP $M$ with latent state space $\MS$ and decoding function $\phi^\st$ can be described by a standard MDP $M_\mathsf{latent}$ on state space $\MS$, together with an \emph{emission distribution} $\til \BO(\cdot|s) \in \Delta(\MX)$ for each state $s \in \MS$, such that the support of $\til \BO(\cdot|s)$ is contained in $(\phi^\st)^{-1}(s)$ for all $s$. 

To distinguish the observed states from the latent states, we will refer to elements $x \in \MX$ as \emph{emissions} or \emph{observations} \--- or \emph{covariates}, in analogy with the supervised learning setting. Unlike in generic partially observable MDPs (POMDPs) \cite{jaakkola1994reinforcement}, every emission $x$ in a block MDP uniquely determines the underlying latent state, via $\phi^\st$. If $\phi^\st$ were known, then RL in the block MDP would reduce to RL in the latent MDP. The challenge comes from $\phi^\st$ being unknown: thus, the standard RL problem is intertwined with the problem of learning a good representation for the emissions in order to exploit the block structure.

Mirroring the story of regression over $\Phi$, the \emph{sample complexity} of RL in $\Phi$-decodable block MDPs, i.e. the number of episodes of interaction needed by the learner, is known to be polynomial in $\log|\Phi|$ and the size of the latent MDP. Note that there is no dependence on the size of the observed state space $\MX$. However, absent additional assumptions on $\Phi$, the \emph{time complexity} of the learning algorithm scales polynomially with $|\Phi|$, which is prohibitively expensive (for example, consider the set of neural networks with $n$ neurons, which has size $\exp(n)$ under any reasonable discretization). See \cref{sec:related} for details and references.


\paragraph{Is RL in block MDPs harder than regression?} It's known that reward-directed RL in $\Phi$-decodable block MDPs is provably no easier than regression over $\Phi$ \cite[Appendix F]{golowich2023exploring}. Since PAC learning \--- and, by extension, regression \--- is believed to be computationally intractable for many practically useful concept classes such as neural networks \cite{chen2022hardness}, it follows that RL in block MDPs is intractable for such classes as well.

But the above argument suggests that the source of hardness in RL is supervised learning. This is contradicted by the widespread empirical successes of machine learning heuristics for prediction tasks \cite{krizhevsky2012imagenet, taigman2014deepface, jumper2021highly}.
Such mismatches between theory and practice have motivated seminal methodologies such as smoothed analysis \cite{spielman2004smoothed}, which was an early example of \emph{beyond worst-case analysis} \cite{roughgarden2019beyond}. Broadly, this paradigm seeks modes of analysis that are better correlated with different algorithms' empirical performance. In our context, we seek a theory where computational lower bounds better reflect the source of hardness faced by empirical approaches or heuristics.
To this end, we believe it is more useful to ask whether RL is \emph{harder} than supervised learning. More concretely, we ask:

\begin{question}\label{question:first-take}
Is there a concept class $\Phi$ for which RL in $\Phi$-decodable block MDPs is computationally harder than regression over $\Phi$?
\end{question}

Out of technical necessity, we will refine this question further in \cref{sec:separation-intro}, but broadly this is the first question that our paper seeks to address.



\paragraph{Discussion: exploration versus prediction.} At one level, RL in $\Phi$-decodable block MDPs and regression over $\Phi$ appear similar. Both involve learning the decoding function $\phi^\st$ in some appropriate, implicit sense. As the concept class $\Phi$ becomes more expressive, both problems should become correspondingly more challenging. Additionally, prediction is intuitively a very valuable primitive for RL: after observing a set of independent emissions, the RL algorithm may compute a real-valued label for each emission, representing e.g. some simulated value function, and if these labels have discriminative power for the underlying latent state, then it would be useful to predict the label of a fresh emission. Indeed, in \cref{sec:special-cases}, we will see that this intuition can be made formal for several natural special cases of RL in block MDPs. 


But there also appear to be deep differences between RL and regression. The latent states are never observed in RL, so besides the observed rewards, it's not clear what labels to predict. Also, while regression is a static problem, RL is dynamic: the data collected by the learning algorithm in any episode intimately depends on the policy chosen by the algorithm. Moreover, the algorithm must necessarily update its policy over time \--- or otherwise it would likely never visit most of the state space. At its heart, RL is fundamentally about \emph{exploration} \--- a goal made explicit in the reward-free formulation \--- rather than prediction.



\subsection{Main result: a cryptographic separation}\label{sec:separation-intro}


Our first main result is that in fact exploration is strictly harder than prediction. In particular, we construct a family of block MDPs $\MM$ so that \emph{reward-free RL} in $\MM$ (\cref{def:rf-rl}) requires more computation than \emph{$\MM$-realizable regression} (\cref{def:regression-algorithm}), under a plausible cryptographic hardness assumption.




\begin{theorem}[Informal version of \cref{thm:main-separation}]\label{thm:separation-intro}
  Under \cref{assm:fine-lpn}, for any constant $C>0$, there is a block MDP family $\MM$ for which the time complexity of reward-free reinforcement learning (\cref{def:rf-rl}) is larger than that of $1/(HAS)^C$-accurate $\MM$-realizable regression (\cref{def:regression-algorithm}), by a multiplicative factor of at least $(HAS\log|\Phi|)^C$, where $H$ is the horizon, $A$ is the number of actions, $S$ is the number of states, and $\Phi$ is the decoding function class.\footnote{With an additional reasonable restriction on the sample complexities of the respective algorithms; see \cref{thm:main-separation}.}
\end{theorem}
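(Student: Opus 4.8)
The plan is to prove the two halves separately: exhibit an unconditional polynomial-time algorithm for $\MM$-realizable regression on the constructed family, and give a reduction showing that any fast reward-free RL algorithm for the family would break Learning Parities with Noise, so that \cref{assm:fine-lpn} forbids it. I would parametrize $\Phi$ by LPN secrets $s\in\{0,1\}^n$ (hence $\log|\Phi|=\Theta(n)$) and take $H,S=\poly(n)$ with $A$ a constant (two actions suffice). Then for any fixed $C$, an $\epsilon=(HAS)^{-C}$-accurate regression algorithm running in $\poly(HAS\log|\Phi|,1/\epsilon)=\poly(n)$ time exists, whereas reward-free RL requires time $n^{\omega(1)}$ under the fine-grained hardness of LPN (as quantified by \cref{assm:fine-lpn}); since $n^{\omega(1)}$ dominates $\poly(n)\cdot(HAS\log|\Phi|)^C=\poly(n)$ for every constant $C$ once $n$ is large, the claimed multiplicative gap follows. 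The sample-complexity restriction in the footnote is what makes "time complexity" well-posed on both sides: the regression algorithm is required to be statistically correct from $\poly$ samples, and the RL lower bound is asserted only for algorithms whose episode count is not itself super-polynomial (else the time bound would be vacuous).

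For the construction I would take a block MDP in which almost the entire latent state space is ``transparent'' \--- the emission at such a state contains the latent label in the clear, so $\phi^\st$ is trivially computable there \--- but with a distinguished ``treasure'' latent state $\sinks$ reachable only by traversing a length-$H'$ parity-gated corridor: at the $h$-th corridor node the environment emits a covariate encoding a fresh uniform $a_h\in\{0,1\}^n$, and the unique advancing action is $\langle a_h,s\rangle$ (any other action falls into an absorbing sink). Because a computationally bounded policy can match the required action at a corridor node only with probability $\approx 1/2$, the corridor nodes beyond shallow depth carry negligible mass under every efficiently realizable occupancy measure; this is exactly why $\MM$-realizable regression is easy \--- an efficient algorithm only ever sees transparent covariates (plus $O(\log n)$ shallow corridor levels), where it reads off the latent label and outputs the empirical conditional mean of $y$. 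I would still need to verify that $|\Phi|=2^{\Omega(n)}$, and that corridor-node latent states never have to be separated by a realizable regression instance; I expect this to be routine but unavoidable bookkeeping.

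Reward-free RL, by contrast, must return a policy cover reaching every reachable latent state up to the threshold in \cref{def:rf-rl}, and $\sinks$ \emph{is} reachable \--- by the $s$-dependent policy ``play $\langle a_h,s\rangle$ at corridor node $h$.'' So any valid cover contains a policy navigating the corridor with non-negligible probability, and reading off its actions on a few random corridor trajectories recovers $s$. To turn this into a reduction I must simulate the RL algorithm's interaction with the unknown-$s$ environment using only an LPN sample oracle: transparent parts are simulated with no knowledge of $s$, while at a corridor node, when the agent plays $b$ against a covariate encoding $a_h$, I decide ``advance'' versus ``sink'' by comparing $b$ to a \emph{noisy} value $\langle a_h,s\rangle\oplus e_h$ read from an LPN sample. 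The main obstacle is precisely the dependent noise flagged in the abstract: within a single episode the corridor is traversed only up to the first failure, so the handful of LPN samples consumed per episode are disclosed to the agent through a truncated success/sink pattern, and (in the full construction, once corridor emissions must be correlated across levels) their noise bits are not independent within the episode. Ordinary LPN hardness is not obviously preserved under this sampling interface, so the crux of the proof is to establish \--- and then invoke \--- a robustness strengthening of LPN, derived from \cref{assm:fine-lpn}: LPN remains hard even when samples arrive in small batches with weakly dependent within-batch noise and truncated feedback. Pinning down the quantitative trade-off between the corridor length $H'$, the LPN noise rate, and the $n^{\omega(1)}$ lower bound we want to extract is where the remaining care goes.
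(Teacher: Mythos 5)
Your plan has a genuine gap on the regression side, and it is exactly the gap the paper identifies as the central difficulty. You claim that $\MM$-realizable regression on your construction is polynomial-time because ``an efficient algorithm only ever sees transparent covariates (plus $O(\log n)$ shallow corridor levels).'' But \cref{def:regression-algorithm} (and \cref{rmk:m-vs-phi}) only requires the covariate distribution to be the visitation distribution of \emph{some} policy in some $M\in\MM$ --- not of an efficiently computable one. The secret-following policy ``play $\langle a_h,\sk\rangle$ at corridor node $h$'' is a legitimate policy, and its visitation distribution concentrates on deep corridor states; a regression instance with $\beta$ supported on the two corridor states at depth $h$ and labels correlated with the advancing bit is precisely (noisy) LPN. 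So regression on your family is as hard as the problem you use to make RL hard, and no multiplicative gap is established. Relatedly, your corridor is a combinatorial lock with action-discriminative, essentially deterministic latent dynamics, which the paper argues is the wrong regime (\cref{sec:overview}, \cref{sec:deterministic}): contrasting actions (or simply reading the advance/sink feedback, which in your simulation literally hands the agent noisy labels $\langle a_h,\sk\rangle\oplus e_h$) gives the RL agent labeled LPN samples at the \emph{same} noise level the regression solver faces, so even conditionally both tasks sit at the same hardness and the $(HAS\log|\Phi|)^C$ factor cannot be extracted. The paper also explicitly flags ``regression in poly time, RL cryptographically hard'' as an open problem (\cref{sec:discussion}), so the unconditional poly-time regression half of your plan would be a substantially stronger result than what is proved.

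The paper's route is structurally different: the latent MDP is a counter MDP satisfying open-loop indistinguishability (\cref{def:open-loop-indist}), so no fixed action sequence is informative, and the emissions are LPN samples additively masked by the latent bits together with an LPN-based encryption that preserves decodability. The separation is then a \emph{relative, conditional} one between two noise regimes: $\epsilon$-accurate realizable regression reduces to LPN at noise $1/2-\Omega_H(\delta\epsilon)$ (\cref{lemma:realizable-regression-alg}), because a non-constant label correlates with some prefix sum of the masking bits and the correlated-Bernoulli noise keeps aggregation at bias $\delta$, while reward-free RL lets one simulate trajectories from a static ``batch LPN'' distribution with Santha--Vazirani-type dependent noise, which is only as hard as LPN at noise $1/2-O_H(\delta^2)$ (\cref{lemma:construct-corr-lpn,lemma:triangle-lpn,lemma:policy-cover-to-lpn}). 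Neither task is polynomial-time; the quantitative gap comes from the $\delta$-versus-$\delta^2$ asymmetry, and the final contradiction with \cref{assm:fine-lpn} is obtained by chaining the two reductions to bootstrap an improved LPN algorithm (this chaining, not well-posedness of runtime, is also why the sample-complexity restriction and Lyubashevsky's sample-efficient solver appear; see \cref{sec:overview-chaining}). Your instinct that dependent within-episode noise is the crux and that a robustness strengthening of LPN is needed does match the paper's \cref{lemma:batch-lpn}, but without an asymmetry mechanism analogous to the noise-squaring from contrasting two noisy emissions of the same latent bit, your construction cannot yield the claimed separation.
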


\begin{remark}\label{rmk:m-vs-phi}
Compared to \cref{question:first-take}, the above result is framed in terms of a family of block MDPs $\MM$ rather than a concept class $\Phi$; in the proof, $\MM$ will be a family of $\Phi$-decodable block MDPs for some $\Phi$, but it will not be \emph{all} $\Phi$-decodable block MDPs. This reframing is necessary for technical reasons: in our construction, regression over $\Phi$ (\cref{def:phi-regression}) is only computationally tractable under an additional distributional assumption on the regression samples $(x^i,y^i)$, which we term $\MM$-realizability. Specifically, the distribution of the covariate $x^i$ must be expressible as the visitation distribution of some policy in some block MDP $M \in \MM$, and $\EE[y^i|x^i]$ must be well-specified with respect to the decoding function of $M$ specifically (not just an arbitrary function in $\Phi$). 

This assumption is fair since we are separating $\MM$-realizable regression from RL in $\MM$, and any natural dataset that an RL algorithm might construct during interaction with an MDP $M \in \MM$ will likely be $\MM$-realizable. Indeed, as we discuss in \cref{sec:special-cases}, there are a number of special cases of RL in block MDPs that \emph{are} efficiently reducible to $\MM$-realizable regression. Still, one might ask whether it is possible to remove this assumption and directly answer \cref{question:first-take}. We discuss the technical obstacles to doing so in \cref{sec:discussion}.
\end{remark}

\paragraph{Our cryptographic toolbox.} \cref{assm:fine-lpn} is a variant of the \emph{Learning Parities with Noise (LPN)} hardness assumption; essentially, it asserts that at high noise levels $1/2-\delta$, the time complexity of learning scales super-polynomially with $\delta^{-1}$. There is a long history of using cryptographic assumptions to prove computational lower bounds for learning \cite{valiant1984theory, kearns1994cryptographic, daniely2014average, daniely2021local}. Our result is one of comparatively fewer cryptographic \emph{separations} \--- see e.g. \cite{bun2020computational,bun2024private}. As a key technical lemma that may be of independent interest, we prove that the LPN hardness assumption is robust to weak dependencies in the noise distribution among small batches of samples:

\begin{lemma}[Informal statement of \cref{lemma:construct-corr-lpn}]\label{lemma:batch-lpn}
Let $k\in\NN$ be a constant and let $\delta \in (0, 1/2^{k+3})$. Let $p \in \Delta(\BF_2^k)$ be a \emph{$\delta$-Santha-Vazirani source} (\cref{def:sv-source}) and let $n \in \NN$. Then LPN with noise level $1/2 - 2^{k+2}\delta$ is polynomial-time reducible to \emph{batch LPN} with batch size $k$ and joint noise distribution $p$. 
\end{lemma}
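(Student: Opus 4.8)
The plan is to give a \emph{distribution-exact} reduction: from a stream of standard LPN samples at noise level $\eta_0 := 1/2 - 2^{k+2}\delta$ — which is a valid noise level precisely because $\delta < 2^{-(k+3)}$ — we simulate online, in $O(1)$ time per batch, a stream of batch-LPN samples with batch size $k$, joint noise distribution exactly $p$, and the same secret $s$. Because the simulation is exact (not merely statistically close), composing any solver for batch LPN with noise $p$ with the simulator solves standard LPN at noise $\eta_0$, consuming one standard sample per simulated batch; this yields the stated reduction, with the sample-complexity relation being one-to-one at the level of batches (as tracked in \cref{thm:main-separation}).

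The core construction routes all $k$ noise coordinates of a simulated batch through a \emph{single} standard-LPN noise bit. To produce one batch, draw a fresh standard sample $(a,b)=(a,\langle a,s\rangle\oplus\epsilon)$ with $\epsilon\sim\Ber(\eta_0)$, draw $(d_1,\dots,d_k)$ from an auxiliary distribution $D\in\Delta(\BF_2^k)$ (fixed below) independently of everything, and emit the batch consisting of the $k$ samples that share the covariate $a$ and have labels $b\oplus d_1,\dots,b\oplus d_k$ — this matches the structure of the batch-LPN instances arising in our MDP construction, where the $k$ samples of a batch share their covariate. The noise vector of this batch is $(N_1,\dots,N_k)=(\epsilon\oplus d_1,\dots,\epsilon\oplus d_k)$, and conditioning on $\epsilon$ shows it has law $(1-\eta_0)\,D + \eta_0\,\bar D$ where $\bar D(x):=D(x\oplus\mathbf{1})$. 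We therefore need $D$ to solve $(1-\eta_0)D+\eta_0\bar D = p$.

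Pairing that identity at $x$ and at $x\oplus\mathbf{1}$ and inverting the resulting $2\times 2$ system yields the unique candidate
\[ D(x) \;=\; \frac{(1-\eta_0)\,p(x)\;-\;\eta_0\,p(x\oplus\mathbf{1})}{1-2\eta_0}. \]
That $\sum_x D(x)=1$ is immediate from $\sum_x p(x)=\sum_x p(x\oplus\mathbf{1})=1$; the content is $D(x)\ge 0$, and this is exactly where the noise gap is spent. Since $p$ is a $\delta$-Santha--Vazirani source, expanding $p(x)$ as a product of conditional probabilities each in $[\tfrac12-\delta,\tfrac12+\delta]$ gives $(\tfrac12-\delta)^k\le p(x)\le(\tfrac12+\delta)^k$ for every $x$ (alternatively one bounds $|p(x)-2^{-k}|$ through the Fourier estimates $|\hat p(T)|\le 2\delta$, obtained by conditioning on the coordinate of largest index in $T$). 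Substituting the lower bound on $p(x)$ and upper bound on $p(x\oplus\mathbf{1})$ into the numerator and using $1-2\eta_0 = 2^{k+3}\delta$, a short computation that invokes $\delta<2^{-(k+3)}$ shows the numerator is strictly positive; in fact any constant comfortably larger than $k$ in place of $2^{k+2}$ would already suffice. Hence $D$ is a genuine probability distribution, it depends only on $p$, and it can be tabulated in $O(2^k)=O(1)$ time, so the simulator is efficient.

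The step I expect to be the crux — and the reason the reduction only reaches noise level $1/2-2^{k+2}\delta$ rather than $1/2-\delta$ — is establishing the non-negativity of $D$, together with finding the right construction. It is instructive to see why the naive alternative fails: if one builds a batch from $k$ \emph{independent} fresh standard samples and XORs in a correlated correction $c\sim C$, matching $p$ forces $\hat C(T)=\hat p(T)/(1-2\eta_0)^{|T|}$, which exceeds $1$ in magnitude once $|T|\ge 2$ and $\delta$ is small, so $C$ cannot be a probability distribution at any noise level that is linear in $\delta$. Channeling all $k$ coordinates through one noise bit — which is possible precisely because a batch shares its covariate — incurs only a single factor $(1-2\eta_0)^{-1}$, and the gap between $2\delta$ and $2^{k+3}\delta$ absorbs it. Finally, one should note that exactness (rather than closeness) of the simulation is essential: a worst-case batch-LPN solver need not be robust, and the number of batches it may consume is $\gg 1/\delta$, so even a per-batch total-variation error of order $\delta$ would accumulate to a constant.
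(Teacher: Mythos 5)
There is a genuine gap, and it lies exactly where you located the crux. The batch LPN problem you must reduce to (the formal target in \cref{lemma:construct-corr-lpn}, and the form that actually arises from the MDP construction) is a batch of $k$ LPN samples whose covariates are \emph{independent and uniform}, i.e.\ the output must satisfy $(a'_1,\dots,a'_k,(y'_i-\langle a'_i,\sk\rangle)_{i=1}^k)\sim \Unif(\BF_2^n)^{\otimes k}\times p$; only the noise bits are correlated. Your simulator instead emits a batch in which all $k$ samples share a single covariate $a$, and your justification (``the $k$ samples of a batch share their covariate'' in the MDP construction) is not true of the paper's construction: in \cref{def:emission} and \cref{def:mu-defn} every row of an emission carries a fresh uniform $u_i$, and the downstream arguments rely on this (e.g.\ the regression-to-LPN step aggregates rows as $\sum_{i\le m} Z_i + rZ_h$ and needs $\sum_i u_i + ru_h$ to be uniform, and the trajectory simulation hands distinct rows to different steps). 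So your simulation is not ``exact'' with respect to the required distribution --- it is statistically far from it (shared covariates occur with probability $2^{-n(k-1)}$ under the true batch distribution) --- and by your own (correct) remark that exactness is essential, a solver guaranteed only on genuine batch LPN need not do anything useful on your simulated batches. Your within-construction algebra is fine: the mixture identity $(1-\eta_0)D+\eta_0\bar D=p$, the closed form for $D$, and its non-negativity from the Santha--Vazirani bounds all check out; but they establish a reduction to a different, easier-looking ``shared-covariate'' problem, not to the stated one. Your analysis of why the naive route fails (independent fresh noise bits force Fourier coefficients $\hat p(T)/(1-2\eta_0)^{|T|}$ to blow up for $|T|\ge 2$) is a real observation, but the shared-covariate trick escapes it only by changing the covariate law.

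The missing idea is how to inject \emph{correlated} noise across samples that keep \emph{distinct} fresh covariates. The paper does this iteratively: having built $(a'_j,y'_j)_{j<i}$, the conditional law of the $i$-th noise bit is a Bernoulli whose bias ($\le\delta$) depends on the previous noise bits $y'_j-\langle a'_j,\sk\rangle$, hence on $\sk$, so one cannot just add an independent correction. Instead, the correction is realized as a \emph{random affine function of $\sk$}: one samples coefficients $(F_0,\dots,F_{i-1})$ from a distribution engineered (via the perturbation/least-singular-value argument of \cref{lemma:cond-dist-linearization}; this is where the $2^{k+2}$ slack and the non-negativity issue actually get spent) so that $F_0+\sum_{j<i}F_j z_j$ has exactly the right conditional Bernoulli law for every fixed $z$, and then this affine-in-$\sk$ quantity is ``added to the noise'' homomorphically by shifting both the label and the covariate of a fresh standard sample ($a'_i:=a_i+\sum_j F_j a'_j$, $y'_i:=y_i+F_0+\sum_j F_j y'_j$), which preserves independence and uniformity of the covariates. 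If you want to salvage your writeup, that key-dependent-message step (and the conditioning argument behind the coefficient distribution) is what needs to replace the shared-covariate shortcut.
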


As a preview, \cref{lemma:batch-lpn} is needed because the data observed by an RL agent comes in trajectories. The emissions in each trajectory are dependent due to the underlying state. See \cref{sec:techniques} for a high-level overview of the other techniques involved in proving \cref{thm:separation-intro}, and \cref{sec:overview} for a comprehensive development of the construction of $\MM$ (as well as the various technical challenges that arise).



\subsection{Oracle-efficiency in reinforcement learning}\label{sec:oracle-intro}


As a byproduct, our main result also helps clarify what sorts of oracles can be the basis for \emph{oracle-efficient algorithms} for RL. 
Formally, an oracle $\mathcal{O}$ is a solver for some optimization problem on a set of trajectories, and an RL algorithm with oracle access to $\mathcal{O}$ is termed ``oracle-efficient'' with respect to $\MO$ if it is computationally efficient aside from the oracle calls. Due to the dearth of end-to-end computationally efficient RL algorithms, much of the theoretical RL literature focuses on oracle-efficient algorithms instead. Recently there has been particular interest in designing oracle-efficient algorithms for RL in block MDPs \cite{dann2018oracle,du2019provably, misra2020kinematic,modi2021model, zhang2022efficient, mhammedi2023representation}. 

Unfortunately, almost every new algorithm for RL in block MDPs uses a new set of oracles. The particular choice of oracle could have a tremendous impact on the empirical performance of a particular algorithm. For instance, many of the aforementioned algorithms utilise min-max optimization oracles; however, min-max optimization faces both theoretical \cite{daskalakis2021complexity} and practical \cite{razaviyayn2020nonconvex} difficulties not encountered by pure minimization. Even among minimization problems, convergence of gradient descent to a good solution is highly dependent on subtle features of the optimization landscape \cite{wen2024sharpness}. Yet little basis for comparison between oracles has been proposed. 

\paragraph{An oracle lower bound.} 

Motivated by the above considerations, we ask: {\em Is there an oracle that is both necessary and sufficient for RL in block MDPs?} We call such an oracle {\em minimal}. Prior to our work, the natural candidate was $\MM$-realizable regression, which was at least known to be necessary for RL in block MDPs \cite{golowich2023exploring}. \cref{thm:separation-intro} gives a \emph{separation} between regression and reward-free RL, which implies that regression is an insufficient oracle for reward-free RL. But there is also a direct argument for proving oracle lower bounds. This argument applies to reward-directed RL, does not require the regression labels to be realizable (see \cref{def:regression-oracle}), and yields quantitatively stronger bounds:


\begin{theorem}[Informal statement of \cref{thm:reduction-prp}]\label{thm:no-reduction-intro}
Suppose that a pseudorandom permutation family with sub-exponential hardness exists (\cref{asm:prf}). Then there is a constant $c>0$, a function class $\Phi$, and a family of $\Phi$-decodable block MDPs $\MM$ with succinctly describable optimal policies so that any reinforcement learning algorithm for $\MM$, with time complexity $T$ and access to an $\epsilon$-accurate regression oracle, must satisfy either $\epsilon \leq 2^{-(HAS\log|\Phi|)^c}$ or $T \geq 2^{(HAS\log|\Phi|)^c}$, where $H$ is the horizon of the MDP, $A$ is the number of actions, and $S$ is the number of latent states.
\end{theorem}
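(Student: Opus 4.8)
The plan is to instantiate $\MM$ from a sub-exponentially secure pseudorandom permutation (PRP) family $\{\pi_{sk} : \{0,1\}^m \to \{0,1\}^m\}_{sk \in \{0,1\}^\lambda}$, using $sk$ to simultaneously hide three things: the decoding function, the emission distributions, and a ``combination-lock'' sequence of correct actions. Concretely, the latent MDP of $M_{sk}$ has horizon $H$, $A = O(1)$ actions, and at each level a ``live'' state and an absorbing ``dead'' state (so $S = O(1)$); from the live state at level $h$ the single action $a^\st_h = a^\st_h(sk) := \pi_{sk}(\langle\text{``act''},h\rangle) \bmod A$ advances to the next live state and every other action transitions to ``dead'', and the only nonzero reward is $+1$ for occupying the live state at level $H$. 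Emissions are $\til\BO(\cdot \mid \text{live}, h) = \Law\big(\pi_{sk}(\langle \text{live}, h, \rho\rangle)\big)$ over a fresh uniform nonce $\rho$ (similarly for ``dead''), and $\phi_{sk}(x)$ reads the latent-state coordinate of $\pi_{sk}^{-1}(x)$; we take $\Phi = \{\phi_{sk'} : sk' \in \{0,1\}^\lambda\}$, so $\log|\Phi| = \lambda$, which we tie to the horizon by setting $\lambda = H$. Then $M_{sk}$ is $\Phi$-decodable, the emission supports at distinct $(\text{state},h)$ are disjoint, and the optimal policy ``decode $x$ via $\pi_{sk}^{-1}$; at the live state of level $h$, play $a^\st_h(sk)$'' has description size $O(\lambda) = O(\log|\Phi|)$, as required.

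The heart of the proof is a reduction: assume for contradiction that some algorithm $\Alg$, running in time $T \le 2^{(HAS\log|\Phi|)^c}$ with access to an $\epsilon$-accurate regression oracle (\cref{def:regression-oracle}) for some $\epsilon \ge 2^{-(HAS\log|\Phi|)^c}$, outputs a policy of value $\ge 1/2$ on $M_{sk}$ (for uniformly random $sk$) with non-negligible probability. I would build an adversary $\MD$ that, given oracle access to a permutation $\pi$ which is either $\pi_{sk}$ (random $sk$) or a uniformly random $\Pi$, distinguishes the two cases with advantage contradicting \cref{asm:prf}. The adversary runs $\Alg$, simulating its interaction with ``$M_\pi$'' using a constant number of forward queries to $\pi$ per step (to produce emissions and to compute the $a^\st_h$), and outputs ``PRP'' iff the policy $\Alg$ returns reaches the level-$H$ live state on a fresh test episode. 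In the random-$\Pi$ world the emitted covariates are simply a stream of uniform distinct strings, statistically independent of the latent states and of the unknown sequence $a^\st_{1:H}(\Pi)$; hence — regardless of how the oracle queries are answered — the reward bit is $0$ on all episodes except with probability $O(T A^{-H})$, and the output policy is independent of $a^\st_{1:H}(\Pi)$, so it has value $\le O(T A^{-H}) = O(2^{(HAS\log|\Phi|)^c - H})$, which is negligible for $\lambda = H$. In the $\pi_{sk}$ world, $\Alg$ interacts with a genuine member of $\MM$, so provided $\MD$ faithfully answers its regression queries, $\Alg$ succeeds with non-negligible probability; the resulting gap in success probability is the distinguishing advantage, contradicting sub-exponential PRP security once $c$ is taken smaller than the hardness exponent (so the ``negligible'' terms, each of the form $2^{-\lambda^{\Omega(1)}}$, fall below $\epsilon$ and $\MD$'s running time stays in the hardness regime).

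The remaining — and principal — obstacle is answering $\Alg$'s regression queries without knowledge of $sk$. Here I would exploit that in $M_{sk}$ both the emissions and the rewards are pseudorandom from the viewpoint of a $sk$-oblivious party: the reward is identically $0$ on every episode in which $\Alg$ has not already played $a^\st_{1:H}(sk)$ (a negligibly likely event, by the argument of the previous paragraph applied inside a hybrid), and emissions from distinct latent states, though supported on disjoint sets, are computationally unlinkable to those states. Consequently any \emph{realizable} query $\Alg$ can form — one with $\EE[y \mid x] = f(\phi_{sk}(x))$ for some $f : \MS \to [0,1]$ — necessarily has $f$ constant up to negligible error, so the mean-label regressor $\MR \equiv \overline y_D$ is $\epsilon$-accurate; $\MD$ answers every such query this way. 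For non-realizable queries, where $\Alg$ labels covariates by an arbitrary efficiently computable rule $h$ provided as part of the query, the oracle's answer is a function of $h$ and the covariates alone and cannot leak $sk$; I would have $\MD$ return the best fit to $h$ along the observed (pseudorandom) trajectory, and argue this meets the $\epsilon$-accuracy bar for the same reason. Making this last step fully rigorous — pinning down exactly which queries \cref{def:regression-oracle} permits, and verifying that the simulated oracle is $\epsilon$-accurate on \emph{all} of $\Alg$'s adaptively chosen queries in the $\pi_{sk}$ world — is where the bulk of the work lies; it is handled by a hybrid over $\Alg$'s queries in which the prior transcript is shown to be simulatable from $\pi$-oracle access alone, so that any query on which the simulated oracle were \emph{not} $\epsilon$-accurate would itself yield a PRP distinguisher.

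I expect this oracle-simulation step to be the crux: the delicate point is that the value $\inf_{\phi \in \Phi, f}\EE_x(f(\phi(x)) - y)^2$ of the regression problem must be controlled at the \emph{population} level, where pseudorandomness applies cleanly, rather than the empirical level, where a ``wrong'' $\phi_{sk'}$ could spuriously overfit a polynomial sample; the formulation of \cref{def:regression-oracle} (in particular whether its guarantee is stated against the true covariate distribution) is therefore load-bearing, and if it is not, one instead argues that the oracle's — possibly nontrivial — answer is a spurious overfit reproducible by $\MD$ and provably independent of $sk$. Everything else (the combination-lock sample/time lower bound in the ideal world, the parameter balancing $(HAS\log|\Phi|)^c = \Theta(H^{2c}) \ll H \le \log A^H$ and $\le$ PRP hardness exponent, and the succinctness of the optimal policy) is routine.
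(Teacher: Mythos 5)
There is a genuine gap, and it is located exactly where you predicted the crux would be: the claim that in the $\pi_{sk}$ world every realizable regression query has a nearly constant Bayes predictor, so the oracle can be simulated by mean labels. That claim is false for your construction, because your latent MDP is a combination lock with a \emph{fixed} secret action sequence $a^\st_{1:H}(sk)$ and deterministic latent dynamics. An algorithm that knows nothing about $sk$ can play a uniformly random action $a_1$ at step $1$ and query the regression oracle at step $h=2$ with the label function $L(x_{1:H},a_{1:H},r_{1:H}) := a_1$. Conditioned on the step-$2$ latent state, this label is deterministic: $f(\mathrm{live}_2)=a^\st_1$ and $f(\mathrm{dead}_2)=1-a^\st_1$, and both states have visitation probability $1/2$, so the Bayes predictor is far from constant. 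An $\epsilon$-accurate oracle (which, per \cref{def:regression-oracle}, is accurate with respect to the true MDP and in particular ``knows'' $\phi_{sk}$) must therefore return a predictor that effectively distinguishes live from dead emissions and encodes $a^\st_1$; your distinguisher $\MD$ cannot reproduce this answer from forward queries to $\pi$ alone, so the simulation breaks, and the correlation here has nothing to do with the computational unlinkability of emissions --- it comes purely from the latent dynamics. Worse, this is not merely a hole in the argument: your family has deterministic latent dynamics, and the paper's \cref{sec:deterministic} (the \texttt{PPE}-style contrastive-learning reduction plus \texttt{FQI}) gives a polynomial-time, polynomial-query reduction from RL to an inverse-polynomially accurate regression oracle for \emph{every} such family. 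So no combination-lock construction of this shape can witness the theorem.

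The paper's construction is designed precisely to kill this attack: its latent MDP satisfies \emph{open-loop indistinguishability} (\cref{def:open-loop-indist}) --- the correct action at step $h$ is a fresh uniformly random bit encoded only in the current state, so every fixed action sequence induces the same latent visitation distribution (\cref{lem:qs-formula}). This is what makes the key simulation lemma true (\cref{lem:constant-fn-predictor}): for any policy and label function chosen without knowledge of $\phi^\st$ (and with rewards almost surely zero), the Bayes predictor is nearly constant, so the regression oracle can be answered by an empirical mean, yielding first an unconditional lower bound against reductions with a truly random decoder (\cref{thm:oracle-lb}) and then, by swapping in the PRP and running a distinguishing argument, the computational statement (\cref{thm:reduction-prp}). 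In the paper the PRP's role is only to make the decoding function succinct and the family computable; the secret action structure you build into $\pi_{sk}$ is exactly the ingredient that must be avoided. To repair your proposal you would have to replace the fixed lock sequence with a per-step randomized ``guess the current bit'' latent structure (or otherwise enforce open-loop indistinguishability) and then prove an analogue of the constant-predictor lemma; at that point you would essentially be reconstructing the paper's argument.
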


The main idea behind \cref{thm:no-reduction-intro} is to design a family of block MDPs where RL and regression are \emph{both} intractable, but no oracle-efficient algorithm can even pose a non-trivial query to the regression oracle. Thus the oracle can be implemented efficiently for all intents and purposes, so an oracle-efficient algorithm would contradict the hardness of RL. 
The challenge is in ensuring the second property: that no efficient algorithm can pose a non-trivial query. This is not immediate; as we discuss in \cref{sec:overview,sec:special-cases}, there are some natural families of block MDPs where RL is intractable without an oracle but it is possible to construct non-trivial queries (and in fact there is an efficient reduction to regression). Our lower bound relies on a simple structural property of the latent MDPs that we call \emph{open-loop indistinguishability} (\cref{def:open-loop-indist}). The key insight is the following. If the block MDP has two different actions (say, at the first step) that induce different latent state visitation distributions, then the algorithm may construct a non-trivial regression query by contrasting these two actions. However, this is essentially \emph{all it can do}, if the decoding function class is sufficiently rich. Thus, to show that no efficient algorithm can make non-trivial queries to the regression oracle, it essentially suffices to choose a latent MDP where any two action sequences (i.e. \emph{open-loop} policies) induce the same latent state visitation distributions at each step, and to choose emission distributions that are intractable to decode.



\paragraph{So is there a minimal oracle?} 
The moral of \cref{thm:no-reduction-intro} is that regression over $\Phi$ is an insufficient oracle because the algorithm cannot efficiently construct non-trivial labels for its regression data. The algorithms proposed by \cite{modi2021model, zhang2022efficient,mhammedi2023efficient} avoid this issue by refining the oracle to also maximize over label functions. Similarly, the algorithm proposed by \cite{mhammedi2023representation} refines the oracle to condition on two latent states from different timesteps (rather than just one). This gives the algorithm greater flexibility in generating its own labels. Our results show that some such refinement is necessary. The oracle used in \cite{mhammedi2023representation} is conceptually particularly similar to regression over $\Phi$, lending credence to the possibility that it is the ``right'' oracle for reinforcement learning in block MDPs. See \cref{tab:oracle-overview} for an overview of the different oracles that have been studied. 

\subsection{Discussion: what makes RL tractable?}\label{sec:special-cases}


The main results of this paper give concrete evidence \--- modulo technical restrictions and hardness assumptions \--- that RL in $\Phi$-decodable block MDPs is likely computationally harder than regression over $\Phi$. As discussed in \cref{sec:oracle-intro}, this motivates comparison with more intricate variants of regression, to understand if there is a minimal oracle for RL in block MDPs. But it also suggests more instance-dependent questions: assuming access to only the basic regression oracle, what is the most general family of block MDPs in which RL is tractable? From the complexity-theoretic perspective, which structural assumptions cause RL to be harder than regression, and which don't?


Some partial answers are already known. On the side of lower bounds, the proof of \cref{thm:no-reduction-intro} illustrated that a key source of hardness in block MDPs is open-loop indistinguishability (\cref{def:open-loop-indist}) of the latent MDP. On the side of algorithms, below are several special cases of RL that do reduce to realizable regression:

\begin{itemize}
    \item Offline reinforcement learning in $\Phi$-decodable block MDPs, under all-policy concentrability: in this setting, exploration is a non-issue, because the given dataset is assumed to be already exploratory (this setting is also roughly equivalent to online RL assuming that an exploratory policy is known). Thus, using an algorithm such as Fitted $Q$-Iteration \cite{ernst2005tree, chen2019information}, RL in this setting reduces to regression (\cref{sec:offline}).
    \item Reinforcement learning in $\Phi$-decodable block contextual bandits (i.e. block MDPs with horizon one): in this setting, the uniformly random policy is exploratory, so again the only computational challenge is prediction. In fact, this special case of RL is computationally \emph{equivalent} to regression (\cref{sec:bandits}).
    \item Reinforcement learning in $\Phi$-decodable block MDPs with deterministic dynamics: such block MDPs are, in a sense, the opposite of those satisfying open-loop indistinguishability, because they can be explored using only open-loop policies (i.e. fixed action sequences). Moreover, although the set of such policies is still exponentially large, it can be pruned to a succinct policy cover via contrastive learning with the regression oracle (\cref{sec:deterministic}).
\end{itemize}

All of these results are either folklore or essentially known from prior work (though not explicitly stated in our notation); for completeness, we have included both proofs as well as the relevant references in the appendices. But in combination with \cref{thm:separation-intro,thm:no-reduction-intro}, these results have interesting implications. Offline RL with all-policy concentrability, RL with horizon one, and RL with deterministic dynamics are all widely-studied special cases/easier models that could be considered ``stepping stones'' towards full-blown RL. Our results imply some of the first concrete separations between these cases and the general problem of online RL.

We also remark that all of these nuances would have been lost by an oracle-free computational analysis: e.g. in that model, both block MDPs with deterministic dynamics and block MDPs with open-loop indistinguishability would appear equally intractable, due to the lower bound from \cite{golowich2023exploring}. At the other extreme, with a strong enough oracle all of these problems would appear equally tractable \cite{jiang2017contextual}. Regression over $\Phi$ is the simplest oracle that fully encompasses the known difficulties of supervised learning, and hence may be the right baseline for understanding what assumptions make RL tractable. Of course, there is still a considerable gap between our lower bound constructions and the above special cases. We see narrowing this gap (e.g. via more general algorithms) as a compelling open problem for future research.

\subsection*{Outline of the paper}\label{sec:outline}

In \cref{sec:techniques} we give a high-level overview of the main techniques involved in the proof of \cref{thm:separation-intro}. In \cref{sec:related} we survey related work on RL for block MDPs, as well as the current landscape of computationally efficient RL and computational lower bounds for RL. In \cref{sec:prelim} we formally define block MDPs, the episodic RL model, and the oracles, computational problems, and hardness assumptions studied in this paper. In \cref{sec:overview} we give a detailed technical overview for both \cref{thm:separation-intro,thm:no-reduction-intro}, and discuss some directions for future technical improvement.

In \cref{sec:construction} we formally state and prove \cref{thm:separation-intro}, drawing on the results of \cref{sec:regression-to-lpn,sec:lpn-to-rl}. In \cref{sec:oracle-lb} we formally state and prove \cref{thm:no-reduction-intro}. 

\section{Technical overview}\label{sec:techniques}

In this section we give an overview of the proof of \cref{thm:separation-intro}. See \cref{sec:overview} for a more comprehensive treatment, as well as the proof overview for \cref{thm:no-reduction-intro}.

\cref{assm:fine-lpn} is a variant of the classical \emph{Learning Parities with Noise (LPN)} hardness assumption 
\cite{blum2003noise,pietrzak2012cryptography}. Essentially, it asserts that when the noise level is $1/2-\delta$ for very small $\delta$, the time complexity of learning must scale super-polynomially in $\delta^{-1}$.\footnote{In comparison, the standard LPN hardness assumption typically operates in a regime with less noise (i.e. $\delta$ is bounded away from $0$, or is even near $1/2$) but asserts a quantitatively weaker computational lower bound.} In particular, this means that there is some function $\delta = \delta(n)$ so that learning $n$-variable parities with noise level $1/2-O(\delta^2)$ is strictly harder than learning $n$-variable parities with noise level $1/2-O(\delta)$. We prove \cref{thm:separation-intro} by constructing a block MDP family where on the one hand, reward-free RL is as hard as learning parities with noise level $1/2-O(\delta^2)$, and on the other hand, realizable regression is as easy as learning parities with noise level $1/2 - O(\delta)$. Ensuring that both of these properties hold requires careful design of both the latent MDP dynamics as well as the emission distributions.

\subsection{Proof techniques I: a warm-up separation}\label{sec:intro-warmup-overview}

We start by sketching a separation between realizable regression and \emph{strong} reward-free RL, which is substantially simpler than \cref{thm:separation-intro} but illustrates many of the key ideas. In the standard formulation of reward-free RL, the goal is to find an \emph{$(\alpha,\gamma)$-policy cover} (\cref{def:pc}), which visits every state with approximately maximal probability, allowing for both multiplicative approximation error $\alpha$ and additive approximation error $\gamma$. In the strong formulation, the goal is to find a $(1,\gamma)$-policy cover. Notably, in an MDP where some state $s$ is reachable by some policy with probability $1$, a $(1,\gamma)$-policy cover must contain a policy that reaches $s$ with probability at least $1-\gamma$. Throughout this overview, we think of both $\gamma \in (0,1/2)$ and the regression accuracy parameter $\epsilon>0$ as constants.

\paragraph{A horizon-two block MDP.} We design a simple family of block MDPs $\Mtoy = \{M^\sk: \sk\in\BF_2^n\}$ indexed by vectors $\sk \in \BF_2^n$. Each MDP $M^\sk$ has the same latent structure, with latent state space $\BF_2$, action space $\BF_2$, and episodes of length two. The initial latent distribution is uniform over $\BF_2$, and the latent transition from the first step to the second is defined by addition of the latent state and action over $\BF_2$ (see \cref{fig:horizon-two}). Finally, for the MDP $M^\sk$, the emission from a state $b \in \BF_2$ is
\[(u, \langle u, \sk\rangle + e + b, \Enc_\sk(b))\]
where $u \sim \Unif(\BF_2^n)$ and $e \sim \Ber(1/2-\delta)$, and $\Enc$ is the (randomized) encryption function for some private-key encryption scheme $(\Enc,\Dec)$. The decoding function is $(u,y,c)\mapsto \Dec_\sk(c)$.

\begin{figure}[t]
\centering
\includegraphics[width=0.5\textwidth]{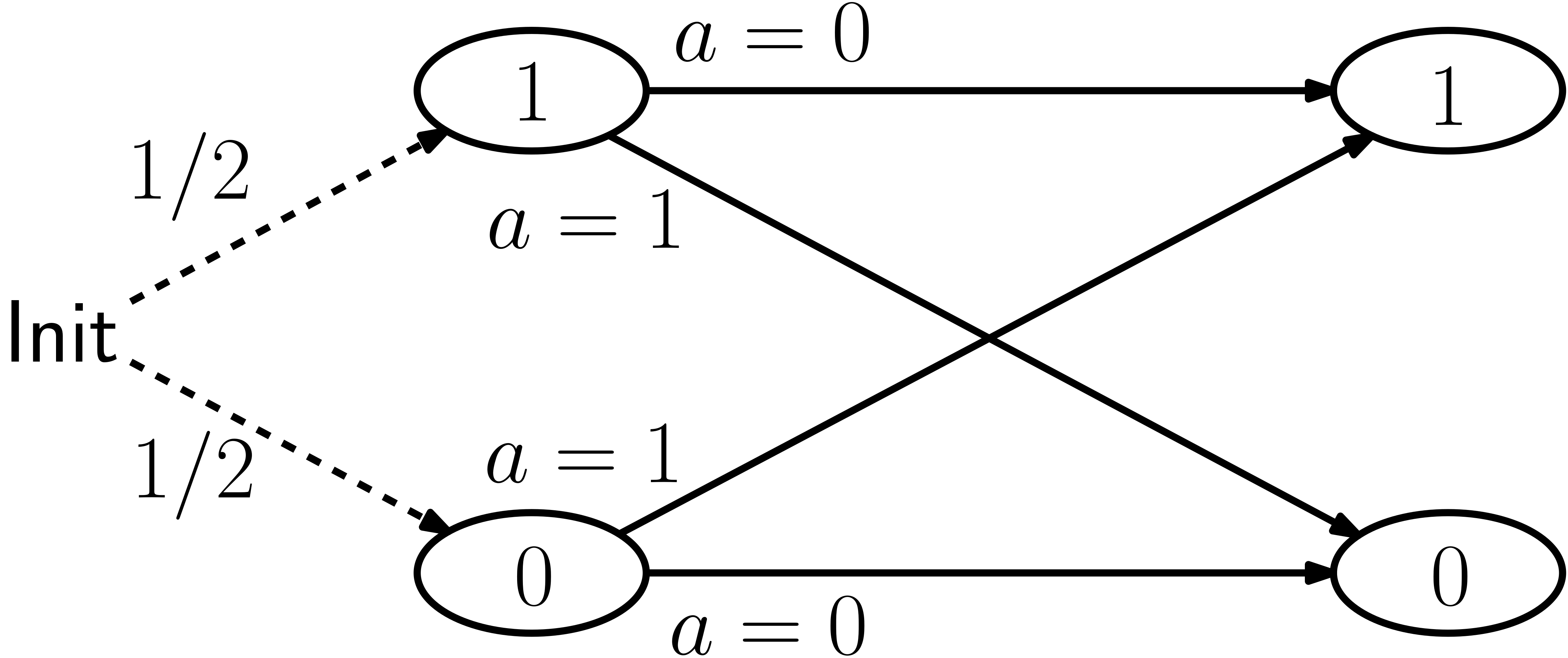}
\caption{The horizon-two latent MDP}
\label{fig:horizon-two}
\end{figure}

\paragraph{Intuition.} Since there exists a policy that reaches state $0$ deterministically, any strong reward-free RL algorithm must construct a policy where the action strongly correlates with the latent state. Similarly, any realizable regression algorithm must learn to predict the latent state (if the labels are not too noisy). Thus, both problems have essentially the same goal; the difference is in the structure of the available data. Regression is a \emph{supervised} problem, where the learner has access to a label for each emission. In contrast, reward-free RL is somehow \emph{self-supervised}: in each episode of interaction, the learner observes two correlated emissions, and must learn by contrasting these emissions. Since each emission is highly noisy, contrasting compounds the noise. Intuitively, this compounding is the reason why RL corresponds to LPN with noise level $1/2-O(\delta^2)$ rather than $1/2-O(\delta)$.

To formalize this intuition, we define two toy problems that are variants of LPN \--- one ``supervised'' and one ``self-supervised''. For both problems, fix some unknown vector $\sk \in \BF_2^n$:
\nc{\problemAname}{Supervised LPN}
\begin{namedproblem}[\hypertarget{problem:supervised}{\problemAname}]
Consider the random variables $x = (u, \langle u, \sk\rangle + e + b)$ and $y = b + \xi$ where $u \sim \Unif(\BF_2^n)$, $e \sim \Ber(1/2 - \delta)$, $b \sim \Ber(1/2)$, and $\xi \sim \Ber(1/2-\epsilon)$ are independent. Given access to independent samples distributed as $(x,y)$, we would like to recover $\sk$.
\end{namedproblem}
\nc{\problemBname}{Self-supervised LPN}
\begin{namedproblem}[\hypertarget{problem:self-supervised}{\problemBname}]
Consider the random variables $x_1 = (u_1, \langle u_1, \sk\rangle + e_1 + b)$ and $x_2 = (u_2, \langle u_2, \sk\rangle + e_2 + b)$ where $u_1,u_2 \sim \Unif(\BF_2^n)$, $e_1, e_2 \sim \Ber(1/2-\delta)$, and $b \sim \Ber(1/2)$ are independent. Given access to independent samples distributed as $(x_1,x_2)$, we would like to recover $\sk$.
\end{namedproblem}

\begin{figure}[t]
    \centering
    \includegraphics[width=\textwidth]{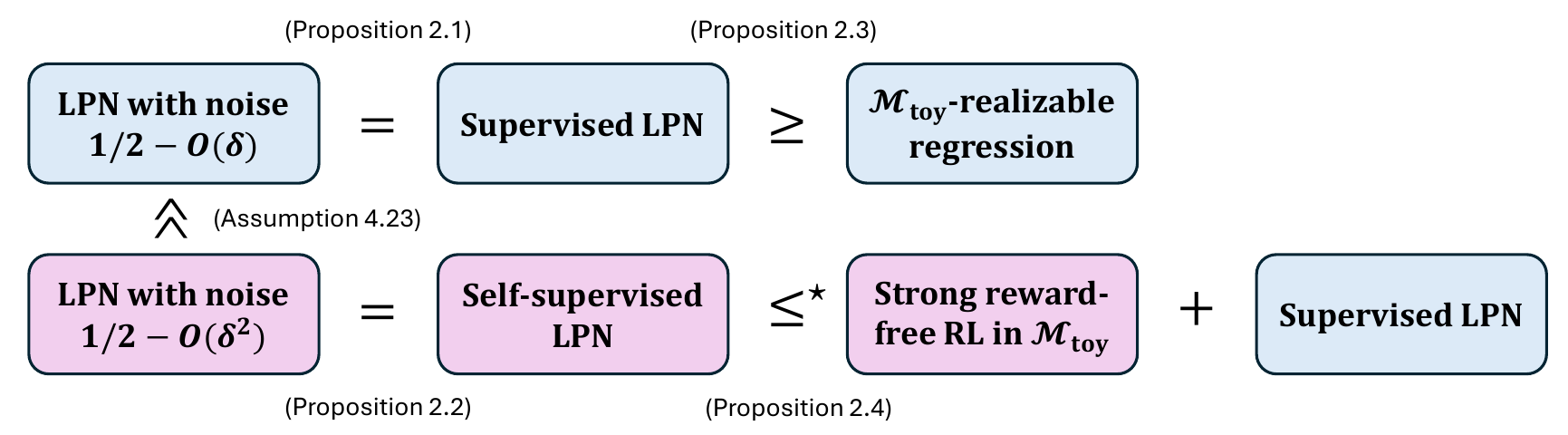}
    \caption{Diagram of the separation between realizable regression and strong reward-free RL under \cref{assm:fine-lpn}. Here, the inequalities refer to polynomial-time reducibility. The starred inequality is only true for an idealized encryption scheme $(\Enc,\Dec)$, but can be made rigorous with an explicit LPN-based encryption scheme and a modification of \bref{problem:self-supervised}{\problemBname}. See \cref{sec:warmup-overview}.}
    \label{fig:toy-flowchart}
\end{figure}

We relate these problems to standard LPN with noise level $1/2-O(\delta)$ and $1/2-O(\delta^2)$, and then explain how they relate back to $\Mtoy$-realizable regression and strong reward-free RL in $\Mtoy$ respectively (the full chain of reductions is diagrammed in \cref{fig:toy-flowchart}). The first two reductions are straightforward:
\begin{proposition} \bref{problem:supervised}{\problemAname} is efficiently reducible to learning parities with noise level $1/2-2\delta\epsilon$:\footnote{In fact, the problems are equivalent, but we only need one direction.} 
\end{proposition}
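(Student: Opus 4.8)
The plan is to give a simple, sample-by-sample reduction that preserves the number of samples. Given a sample $((u,z),y)$ from \bref{problem:supervised}{\problemAname}, where $z = \langle u,\sk\rangle + e + b$ and $y = b + \xi$, I would form the pair $(u,\, z+y)$ and feed the resulting stream to any algorithm for learning $n$-variable parities. The point is that adding the label into the second coordinate eliminates the latent bit $b$, which appears in both $z$ and $y$:
\[
z + y \;=\; \langle u,\sk\rangle + e + b + (b + \xi) \;=\; \langle u,\sk\rangle + (e + \xi).
\]

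It then remains only to identify the law of $(u, z+y)$. Since $u$ is uniform on $\BF_2^n$ and independent of $(e,\xi)$, it suffices to compute the distribution of the noise bit $e+\xi$. As $e \sim \Ber(1/2-\delta)$ and $\xi \sim \Ber(1/2-\epsilon)$ are independent,
\[
\Pr[e + \xi = 0] \;=\; (\tfrac12+\delta)(\tfrac12+\epsilon) + (\tfrac12-\delta)(\tfrac12-\epsilon) \;=\; \tfrac12 + 2\delta\epsilon,
\]
so $e+\xi \sim \Ber(1/2 - 2\delta\epsilon)$ — equivalently, the correlation of an XOR of independent bits is the product of the correlations, $(2\delta)(2\epsilon)$. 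Distinct samples of \problemAname\ are drawn independently, hence the transformed pairs are i.i.d.\ LPN samples for the same secret $\sk$ at noise level $1/2 - 2\delta\epsilon$. Therefore any learner that recovers $\sk$ from such parity samples recovers $\sk$ from \problemAname\ samples, with only an additional linear-time pass over the data; this is the claimed efficient reduction.

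There is essentially no obstacle here: the entire content is the observation that the shared bit $b$ is cancelled by the sum $z+y$, together with the elementary bias calculation above. (The converse reduction also holds, so the two problems are in fact equivalent, but only this direction is needed, so I would omit it.)
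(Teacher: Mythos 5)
Your proof is correct and is essentially the same argument as the paper's: add the label to the response coordinate to cancel the shared bit $b$, obtaining $(u, \langle u,\sk\rangle + e + \xi)$, and note that the XOR of independent $\Ber(1/2-\delta)$ and $\Ber(1/2-\epsilon)$ bits is $\Ber(1/2-2\delta\epsilon)$ (the paper's \cref{lem:bernoulli-convolve}). The only difference is that you spell out the bias computation and the i.i.d.\ preservation explicitly, which the paper leaves implicit.
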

\begin{proof}[Proof sketch]
For any sample $(x, y)$ from \bref{problem:supervised}{\problemAname}, adding the label $y$ to the second part of $x$ yields the tuple $(u, \langle u,\sk\rangle + e + \xi)$. Since $e$ and $\xi$ are independent, the distribution of $e+\xi$ is precisely $\Ber(1/2-2\delta\epsilon)$.
\end{proof}

\begin{proposition}
Learning parities with noise level $1/2-2\delta^2$ is efficiently reducible to \bref{problem:self-supervised}{\problemBname}.
\end{proposition}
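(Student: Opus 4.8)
The plan is to construct a sample-preserving simulation: from each sample of LPN at noise level $1/2 - 2\delta^2$ we will produce one sample of \bref{problem:self-supervised}{\problemBname} with the \emph{same} hidden vector $\sk$, so that running a solver for Self-supervised LPN on the simulated samples outputs $\sk$ --- which is exactly the answer required for the LPN instance.

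The first step is to understand the law of a Self-supervised LPN sample after a convenient linear change of coordinates. Writing $x_1 = (u_1, \ell_1)$ and $x_2 = (u_2, \ell_2)$ with $\ell_i = \langle u_i, \sk\rangle + e_i + b$, the pair $(x_1,x_2)$ is information-equivalent to the tuple $(u_1, u_2, \ell_1, \ell_1 + \ell_2)$ (one recovers $\ell_2$ as $\ell_1 + (\ell_1+\ell_2)$). Now $\ell_1 + \ell_2 = \langle u_1 + u_2, \sk\rangle + (e_1 + e_2)$, and since $\Pr[e_1 + e_2 = 1] = 2(1/2 - \delta)(1/2+\delta) = 1/2 - 2\delta^2$, we have $e_1 + e_2 \sim \Ber(1/2 - 2\delta^2)$, independent of $(u_1, u_2)$. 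Moreover $\ell_1 = \langle u_1, \sk\rangle + e_1 + b$ is uniform on $\BF_2$ and independent of $(u_1, u_2, e_1, e_2)$ --- this is precisely because $b \sim \Ber(1/2)$ is independent of everything else --- and hence $\ell_1$ is independent of the pair $(u_1, u_2, \ell_1 + \ell_2)$. So, in these coordinates, a Self-supervised LPN sample is: a uniform $(u_1, u_2) \in (\BF_2^n)^2$, an independent uniform bit $\ell_1$, and $\ell_1 + \ell_2 = \langle u_1+u_2, \sk\rangle + g$ with $g \sim \Ber(1/2-2\delta^2)$ drawn independently.

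Given this, the reduction writes itself. On input an LPN sample $(v, w)$ with $w = \langle v, \sk\rangle + e'$ and $e' \sim \Ber(1/2-2\delta^2)$, draw fresh $u_1 \sim \Unif(\BF_2^n)$ and $\ell_1 \sim \Unif(\BF_2)$, set $u_2 := v + u_1$ and $\ell_2 := \ell_1 + w$, and output $x_1 = (u_1,\ell_1)$, $x_2 = (u_2, \ell_2)$. Then $(u_1, u_2)$ is uniform on $(\BF_2^n)^2$ (the map $(u_1,v)\mapsto(u_1, v+u_1)$ is a bijection and $v$ is uniform and independent of $u_1$), $\ell_1$ is uniform and independent, and $\ell_1 + \ell_2 = w = \langle u_1 + u_2, \sk\rangle + e'$ with $e'$ independent of $(u_1,u_2,\ell_1)$; by the previous paragraph this is exactly the Self-supervised LPN law with secret $\sk$ and noise parameter $\delta$. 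I would then run the Self-supervised LPN algorithm on polynomially many such independently simulated samples to recover $\sk$; the per-sample overhead is $O(n)$, so the reduction is efficient.

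The only delicate point is the claim, used in the second step, that after the change of coordinates $\ell_1$ is genuinely independent of $(u_1, u_2, \ell_1 + \ell_2)$ --- i.e. that the ``first-label'' bit carries no information about $\sk$ on its own. This is exactly where the uniform shared bit $b$ is needed (in the block MDP $\Mtoy$ it is the uniform initial latent distribution). Everything else is a routine $\BF_2$-linear substitution, so I do not expect any real obstacle.
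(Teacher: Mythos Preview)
The proposal is correct and takes essentially the same approach as the paper: both construct $x_1 = (u', b')$ with $u' \sim \Unif(\BF_2^n)$, $b' \sim \Unif(\BF_2)$ fresh, and $x_2 = (u'+v, b'+w)$ from a single LPN sample $(v,w)$ at noise $1/2-2\delta^2$. Your change-of-coordinates argument (showing $\ell_1$ is independent of $(u_1,u_2,\ell_1+\ell_2)$ via the uniform bit $b$) is a more detailed justification of the distributional equality that the paper's sketch simply asserts.
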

\begin{proof}[Proof sketch]
Intuitively, this is because for any sample $(x_1,x_2)$ from \bref{problem:self-supervised}{\problemBname}, the marginal distributions of $x_1$ and $x_2$ possess no information about $\sk$, and it appears that the only thing a learning algorithm can do is add $x_1$ and $x_2$ element-wise, which yields an LPN sample \[(u_1+u_2, \langle u_1+u_2,\sk\rangle + e_1 + e_2)\] with noise level $1/2-2\delta^2$. This is of course only intuition, but it can be formalized into a simple average-case reduction: given an LPN sample $(u,b)$ with noise level $1/2-2\delta^2$, let $(u',b')$ be an independent random variable with $u' \sim \Unif(\BF_2^n)$ and $b' \sim \Unif(\BF_2)$. Then the joint distribution of $x_1 := (u',b')$ and $x_2 := (u+u',b+b')$ is exactly that of \bref{problem:self-supervised}{\problemBname}.
\end{proof} 

It remains to relate \bref{problem:supervised}{\problemAname} and \bref{problem:self-supervised}{\problemBname} to $\Mtoy$-realizable regression and strong reward-free RL in $\Mtoy$ respectively.
\begin{proposition}
$\Mtoy$-realizable regression with error tolerance $O(\epsilon)$ can be reduced to \bref{problem:supervised}{\problemAname}.
\end{proposition}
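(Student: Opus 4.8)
The plan is to reduce $\Mtoy$-realizable regression to the task of \emph{recovering the key} $\sk$: once $\sk$ is known, the decoding function $\phi^\st = \Dec_\sk(\cdot)$ is known, so we can estimate $f(0)$ and $f(1)$ by empirical label averages over the samples $i$ with $\Dec_\sk(c_i)=0$ and $=1$ respectively (and fall back to the global average $\bar\mu := \tfrac1m\sum_i y^i$ for any latent state of probability below $\epsilon$, which then contributes only $O(\epsilon)$ to the risk), and output $\MR(x) = \hat f(\Dec_\sk(c))$ for $x=(u,v,c)$. Writing $g := |f(0)-f(1)|$, there is one easy regime to dispatch first: if $g \lesssim \epsilon$, then the constant predictor $\MR \equiv \bar\mu$ already has risk $\EE_x[(\bar\mu - f(\phi^\st(x)))^2] \le \tfrac14 g^2 + O(\epsilon) = O(\epsilon)$ (this bound holds for any latent marginal, since the best constant is $\EE[y]$), so we would be done. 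Since $g$ is unknown, we would run both the constant branch and the key-recovery branch and output whichever candidate predictor has smaller empirical squared loss $\tfrac1m\sum_i (\MR(x^i)-y^i)^2$ on a fresh batch; this is legitimate because $\EE[(\MR(x)-y)^2]$ equals the regression risk plus the $\MR$-independent quantity $\EE_x[\Var(y\mid x)]$.

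For the key-recovery branch we would turn the regression samples into samples of \bref{problem:supervised}{\problemAname}. Given $(x^i,y^i)$ with $x^i=(u_i,v_i,c_i)$, note that by construction of $M^\sk$ the pair $(u_i,v_i)$ has \emph{exactly} the law of the first component of a \bref{problem:supervised}{\problemAname} sample: $v_i = \langle u_i,\sk\rangle + e_i + b_i$ with $u_i\sim\Unif(\BF_2^n)$, $e_i\sim\Ber(1/2-\delta)$, and $b_i=\phi^\st(x^i)$ the latent state, all independent. So it remains to manufacture, from $y^i$, a bit $\tilde y_i$ with the law $b_i + \xi_i$, $\xi_i \sim \Ber(1/2-\epsilon')$ for a known $\epsilon'$, independent of $(u_i,e_i,b_i)$. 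First draw a bit $z_i \sim \Ber(y^i)$; since $z_i$ is binary and $\EE[z_i\mid x^i] = \EE[y^i\mid x^i] = f(b_i)$, the conditional law of $z_i$ given $x^i$ is exactly $\Ber(f(b_i))$, hence depends only on $b_i$. The two conditional success probabilities are not symmetric about $1/2$, so we \emph{recenter} using $\mu := \EE[y^i] = \tfrac12(f(0)+f(1))$ (this identity needs the covariate's latent marginal to be uniform; see below): take $\tilde y_i$ to be the AND of $z_i$ with an independent $\Ber(1/(2\mu))$ bit if $\mu \ge 1/2$, and the OR of $z_i$ with an independent $\Ber((1-2\mu)/(2(1-\mu)))$ bit otherwise. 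A one-line computation then gives $\Pr[\tilde y_i = 1 \mid b_i = b] = \tfrac12 + (-1)^{1-b}\epsilon'$ with $\epsilon' = \Theta(g)$, i.e. $\tilde y_i = b_i + \xi_i$ with $\xi_i\sim\Ber(1/2-\epsilon')$ independent of $(u_i,e_i,b_i)$ — provided $f(1)\ge f(0)$. Since we do not know the sign of $f(1)-f(0)$, we would run the reduction twice, once with $z_i$ and once with $1-z_i$; one run yields genuine \bref{problem:supervised}{\problemAname} samples for the same $\sk$. Finally, in the regime $g\gtrsim\epsilon$ we have $\epsilon'\gtrsim\epsilon$, so we would XOR each label with an independent $\Ber(1/2-\epsilon/\epsilon')$ bit to drive the noise level down to exactly $1/2-\epsilon$, and feed the result to the assumed \bref{problem:supervised}{\problemAname} solver to obtain a candidate $\hat\sk$.

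Each of the (at most two) candidates $\hat\sk$ is cheap to verify: compute $\tfrac1m\sum_i \One[v_i + \langle u_i,\hat\sk\rangle + \Dec_{\hat\sk}(c_i) = 1]$; if $\hat\sk=\sk$ this is $\tfrac1m\sum_i\One[e_i=1]\approx \tfrac12-\delta$, whereas if $\hat\sk\ne\sk$ then $\langle u_i,\sk+\hat\sk\rangle$ is a fresh uniform bit (as $u_i$ is independent of $(b_i,c_i,e_i)$), so the average concentrates at $\tfrac12$, and these are distinguished with $\poly(1/\delta)$ samples. We would then build $\MR$ from a verified $\hat\sk$ as above. Everything here is a randomized polynomial-time procedure that blows up the sample count by only a constant factor (plus $\poly(1/\delta,1/\epsilon)$ extra samples for estimating $\mu$, verifying candidates, comparing branches, and estimating $\hat f$), so this is an efficient reduction in the required sense.

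The step we expect to be the main obstacle — and the one place the argument uses more than the raw syntax of the emission distribution — is the recentering, which needs the covariate distribution's latent marginal to be (approximately) uniform over $\BF_2$, so that $\EE[y^i]$ is the midpoint of $f(0)$ and $f(1)$. This is exact for first-step emissions of $M^\sk$, and for second-step emissions it holds up to negligible error for any \emph{computationally bounded} roll-in policy, since the security of $\Enc$ makes the first-step action computationally independent of the first-step latent state. Making this airtight for the full class of distributions in \cref{def:regression-algorithm} is the crux: one either restricts attention to bounded roll-in policies, or observes that a mixture of two LPN instances at noise levels $\tfrac12\mp\delta_0$ and $\tfrac12\mp\delta_1$ (which is what a non-uniform marginal produces after recentering) is no harder than LPN at the weighted-average noise level, and carries out the bookkeeping with an explicit LPN-based $(\Enc,\Dec)$. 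This is the content of \cref{sec:warmup-overview}.
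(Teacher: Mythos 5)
Your skeleton is the same as the paper's: split into a constant-predictor branch and a key-recovery branch, use the Supervised LPN solver to get $\sk$ in the second branch, estimate $f$ per decoded state, and select between the two candidate predictors by empirical squared loss on held-out samples. The divergence—and the genuine gap—is in how you manufacture Supervised LPN samples from the regression samples. Your recentering step (AND/OR-ing the label with fresh randomness calibrated by $\mu=\EE[y]$) makes the conditional law of the new label symmetric about $1/2$ only when the latent marginal $\beta$ is uniform, since it needs $\EE[y]=(f(0)+f(1))/2$. But \cref{def:regression-algorithm} places no restriction on $\beta$, and $\beta$ cannot be estimated from the covariates without already breaking the construction. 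Your two proposed patches do not close this: restricting to computationally bounded roll-in policies changes the problem (the definition quantifies over arbitrary $\beta$, not over efficiently generated ones), and the ``mixture of two LPN noise levels'' observation does not address the actual failure mode, which is that recentering at the wrong $\mu$ produces a label that is \emph{not} of the form $b+\xi$ with $\xi$ independent of $b$, so the samples are not Supervised LPN samples at any noise level (e.g.\ with $\beta=(3/4,1/4)$ and $(f(0),f(1))=(0.6,0.8)$ the conditional flip rates after your recentering are $\approx 0.46$ and $\approx 0.38$), and you cannot detect the miscalibration. Deferring this to \cref{sec:warmup-overview} does not help: the paper's overview has the same looseness, and moreover Problem A itself stipulates $b\sim\Ber(1/2)$, so even a perfect symmetrization leaves a covariate mismatch for non-uniform $\beta$.

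Where the paper actually resolves this (in the formal development that the toy sketch is a stand-in for) is instructive, because it avoids symmetrization entirely: \cref{lemma:f-corr} shows that if $f$ is $\epsilon$-far from constant under $\beta$, then at least one of a small family of parity statistics is biased—in the two-state case, either $b+y$ is biased, or (via the comparison against the label-independent distribution $\mu_{\beta,\alpha}$, i.e.\ the $r'=0$ statistic) $b$ itself is biased—and \cref{lemma:learn-from-corr} turns whichever statistic is biased into LPN samples with bias $\Omega(\delta\epsilon)$, fed to a learner for LPN with \emph{unknown} noise level (\cref{def:lpn-unknown-noise}, \cref{lemma:lpn-unknown-delta}), which also removes your need to pad the noise to exactly $1/2-\epsilon$ (the true bias is never known exactly). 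The piece missing from your argument is exactly the second statistic: when $\beta$ is non-uniform, the raw response $\langle u,\sk\rangle+e+b$ is already an LPN sample with bias $2\delta\,|\beta(1)-\tfrac12|$ and yields $\sk$ directly, while when $\beta$ is uniform your recentering (or simply adding the raw label to the response) works; trying both, without knowing which case holds, and verifying candidate keys as you describe, closes the gap. The remaining components of your proposal (validation-based selection, key verification via the $\approx\tfrac12-\delta$ test, building $\MR$ from a verified key) match the paper and are fine.
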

\begin{proof}[Proof sketch]
This reduction leverages the fact that there are only two latent states. Either some constant function is a near-optimal regressor, or the regression labels are at least $\epsilon$-correlated with the latent state, and hence each label $y$ can be written as $b+\xi$ where $b$ is the latent state and $\xi \sim \Ber(1/2-\epsilon')$ for some $\epsilon' \geq \epsilon$. This is precisely the setting of \bref{problem:supervised}{\problemAname}. Moreover, after computing $\sk$, it's easy to construct a near-optimal regressor.
\end{proof}

\begin{proposition}
\bref{problem:self-supervised}{\problemBname} can be reduced to strong reward-free RL in $\Mtoy$, if the reduction is given access to an oracle for \bref{problem:supervised}{\problemAname}.
\end{proposition}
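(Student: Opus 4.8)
The plan is to use a strong reward-free RL algorithm for $\Mtoy$ as a black box that, when run on a \emph{simulation} of $M^\sk$, hands us a near-perfect decoder for the latent bit; we then convert this decoder into a stream of \bref{problem:supervised}{\problemAname} samples for the same $\sk$ and invoke the assumed oracle to read off $\sk$. The reduction thus has three phases: (i) simulate the episodic interaction with $M^\sk$ using only \bref{problem:self-supervised}{\problemBname} samples; (ii) extract a decoder from the returned $(1,\gamma)$-policy cover; (iii) feed decoder-labeled samples to the \bref{problem:supervised}{\problemAname} oracle.

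For phase (i), I would answer each episode with one fresh sample $(x_1,x_2)$ from \bref{problem:self-supervised}{\problemBname}, writing $x_j=(u_j,y_j)$ so that $y_j=\lng u_j,\sk\rng+e_j+b$ for the shared latent bit $b$. When the algorithm starts the episode I return the step-one emission $(u_1,y_1,c_1)$, where $c_1$ is a dummy ciphertext drawn from the (idealized) ciphertext distribution; when the algorithm plays $a_1\in\BF_2$ I return the step-two emission $(u_2,y_2+a_1,c_2)$ with $c_2$ another dummy ciphertext. The ``$+a_1$'' implements exactly the latent transition $b\mapsto b+a_1$: indeed $(u_2,y_2+a_1)=(u_2,\lng u_2,\sk\rng+e_2+(b+a_1))$ is the $(u,y)$-part of an emission from latent state $b+a_1$, and $e_1,e_2$ independent matches the fresh emission noise of $M^\sk$. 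Using a new sample per episode makes the whole transcript i.i.d.\ as in a true interaction with $M^\sk\in\Mtoy$, so --- provided the dummy ciphertexts are indistinguishable from $\Enc_\sk(\cdot)$ --- the algorithm outputs a valid $(1,\gamma)$-policy cover $\Psi$ for $M^\sk$ with high probability.

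For phase (ii): in $M^\sk$, latent state $0$ at step two is reachable with probability one (from step-one latent state $b_1$, play $a_1=b_1$), so $\Psi$ must contain a policy $\pi$ reaching it with probability $\ge 1-\gamma$; equivalently its step-one action map $D:=\pi_1$ satisfies $\Pr[D(x)=b_1]\ge 1-\gamma$ over a step-one emission $x$ with uniform latent bit $b_1$ (if $\pi$ is randomized, fix a favourable seed). For phase (iii) I would draw a fresh \bref{problem:self-supervised}{\problemBname} sample $(x_1',x_2')$ with latent bit $b'$ and an independent fair coin $r$, set the covariate to $x_1'$ (which is exactly $(u_1',\lng u_1',\sk\rng+e_1'+b')$) and the label to $D(u_2',y_2'+r,c)\oplus r$ with $c$ a dummy ciphertext. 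Since the emission $(u_2',y_2'+r,c)$ has latent bit $b'+r$, which is uniform and independent of $b'$, the quantity $\xi:=\mathrm{label}\oplus b'$ is independent of both $b'$ and $(u_1',e_1')$ and satisfies $\Pr[\xi=1]\le\gamma$. Choosing the constant $\gamma<1/2-\epsilon$ and adding an appropriate independent amount of label noise (calibrated, if necessary, by estimating $D$'s accuracy on simulated rollouts) makes $\xi\sim\Ber(1/2-\epsilon)$, so $(x_1',\mathrm{label})$ is a genuine \bref{problem:supervised}{\problemAname} sample for key $\sk$; polynomially many of them, fed to the oracle, recover $\sk$.

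The one non-elementary step --- and the place where the idealization is used --- is the claim in phase (i) that the simulator can supply correctly distributed ciphertexts without knowing $\sk$, and that no efficient algorithm can tell these apart from genuine encryptions across the many (and within-episode correlated) ciphertexts it sees. This is exactly where I expect the real difficulty to be. To make it rigorous one instantiates $(\Enc,\Dec)$ with an explicit LPN-based private-key scheme and augments \bref{problem:self-supervised}{\problemBname} to also include the corresponding LPN-encoded copies of $b$ in each sample, so that the simulator can pass these encodings through verbatim and a standard hybrid argument shows the RL algorithm's behavior is unchanged. Everything else --- the flip trick, the policy-cover argument, and the symmetrized conversion to \bref{problem:supervised}{\problemAname} --- is elementary.
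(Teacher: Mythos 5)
Your reduction is correct and follows essentially the same route as the paper's: simulate each episode of $M^\sk$ from a single \bref{problem:self-supervised}{\problemBname} sample by homomorphically adding the agent's action to the second emission (with the ciphertexts handled under the same idealized/LPN-based-encryption caveat the paper itself invokes here), extract from the $(1,\gamma)$-policy cover a policy whose first-step action decodes the latent bit with error at most $\gamma$, and use it to label fresh emissions as \bref{problem:supervised}{\problemAname} samples for the oracle. Your coin-flip symmetrization of the labels (which makes the label noise exactly independent of the latent bit) is a small refinement that the paper's sketch glosses over, but the decomposition and key ideas are the same.
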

\begin{proof}[Proof sketch]
For the purposes of this overview, we'll assume that the encryptions are simply random noise, i.e. the emission from latent state $b$ is $(u,\langle u,\sk\rangle+e+b, W)$ for some uniform noise vector $W$.\footnote{Obviously this no longer corresponds to a valid block MDP emission distribution, since the latent state $b$ is not perfectly decodable from the emission. Intuitively, for an ``ideal'' encryption scheme, this simplification might seem without loss of generality, since $\Enc_\sk(b)$ may be \emph{computationally indistinguishable} from random noise $W$. The issue with formalizing this intuition (under any standard cryptographic assumption on $(\Enc,\Dec)$) is that the RL algorithm has access to not just $\Enc_\sk(b)$ but also side-information in each emission that depends on $\sk$.

Of course, the side-information in our setting is computationally hard to invert. Designing encryption schemes that are robust to such side-information is an active area of research known as cryptography with auxiliary input; see e.g. \cite{dodis2009cryptography}. Unfortunately, these general-purposes results do not directly apply to our setting, and in any case would likely require strong additional cryptographic assumptions beyond \cref{assm:fine-lpn}. Instead, we leverage the fact that the side-information essentially consists of LPN samples to show that an \emph{LPN-based} encryption scheme is secure even in the presence of this side-information, all using only \cref{assm:fine-lpn}. We defer the overview of this part of the reduction to \cref{sec:warmup-overview}.
} With this caveat, given a sample $(x_1, x_2)$ from \bref{problem:self-supervised}{\problemBname}, it is possible to efficient simulate an episode of interaction with $M^\sk$. Indeed, suppose $x_1 = (u_1, y_1)$ and $x_2 = (u_2, y_2)$. Then the first simulated emission is $(u_1, y_1, W_1)$ for random noise $W_1$, and the second simulated emission is $(u_2, y_2+a, W_2)$ for random noise $W_2$, where $a$ is the action produced by the RL agent after seeing the first emission. If $b \in \BF_2$ was the latent random variable in $x_1$ and $x_2$, then the first emission is exactly a random emission from state $b$, and the second emission is exactly a random emission from state $b+a$, as desired. Note that this argument crucially uses the \emph{additive} structure of the latent MDP and the emissions over $\BF_2$.

It follows that any strong reward-free RL algorithm can be simulated on $M^\sk$ using samples from \bref{problem:self-supervised}{\problemBname}. Say that the RL algorithm produces a $(1,1/3)$-policy cover. Then there is some policy $\hat\pi$ in the cover that visits state $0$ with probability at least $2/3$. Hence, the output of $\hat\pi$ on an emission $x$ is non-trivially correlated with the latent state of $x$, so it can be used to \emph{label} emissions and thereby generate samples from \bref{problem:supervised}{\problemAname}. Invoking the oracle for \bref{problem:supervised}{\problemAname} then enables recovering $\sk$.
\end{proof}

Combining the above propositions, we see that if strong reward-free RL in $\Mtoy$ were as easy as $\Mtoy$-realizable regression, then LPN with noise $1/2-O(\delta^2)$ would be roughly as easy as LPN with noise $1/2-O(\delta)$, which would imply that \cref{assm:fine-lpn} is false. This completes our high-level overview of a separation between strong reward-free RL and realizable regression. We give a more detailed overview in \cref{sec:warmup-overview}.

\subsection{Proof techniques II: the full separation}\label{sec:intro-full-overview}

The block MDP family $\Mtoy$, constructed above, exhibits a separation between regression and finding a $(1,\gamma)$-policy cover, but it cannot separate regression from the more standard problem of finding an $(\alpha,\gamma)$-policy cover, where the multiplicative approximation factor $\alpha$ is typically allowed to be polynomially small in the size of the latent MDP. The reason is that each MDP in $\Mtoy$ has constant horizon, and thus the policy that plays uniformly random actions already has good coverage. Thus, proving \cref{thm:separation-intro} requires replacing $\Mtoy$ by a family of block MDPs with super-constant horizon \--- and where the random policy has bad coverage. The overall proof structure remains similar to \cref{fig:toy-flowchart}, but each step becomes significantly more technically involved. Below, we enumerate some of the high-level obstacles and discuss how we circumvent them. We give a more detailed overview of the construction and proof in \cref{sec:full-argument-overview}.


\paragraph{Static-to-dynamic reduction.} Cryptographic assumptions have long been employed to prove hardness of learning problems \cite{valiant1984theory, kharitonov1993cryptographic, klivans2009cryptographic}. Indeed, cryptographic assumptions and classical learning and testing problems are two sides of the same coin. Reinforcement learning is fundamentally different, in that it is a \emph{dynamic} problem. In each episode of interaction, the distribution of the sample trajectory is jointly determined by the environment and the arbitrary learning algorithm that we are trying to rule out. Thus, to reduce a standard, \emph{static} learning problem like LPN to RL in some family of block MDPs, we need to be able to simulate any distribution over trajectories that might arise from any policy, given only samples from a single distribution. Moreover, when generating a trajectory we have to implicitly manipulate the latent state without ever explicitly ``knowing'' it, since if we had even a noisy estimate of the latent state, we could reduce to supervised learning. 

One might ask: why start with a static hardness assumption? There is a wide array of cryptographic primitives with \emph{dynamic} security guarantees that may, at first glance, seem useful to design an emission distribution around. For instance, with a pseudorandom function family (PRF) \cite{goldreich1986construct}, one could simulate a trajectory by directly encrypting a sequence of latent states. Alternatively, fully homomorphic encryption (FHE) \cite{gentry2009fully,brakerski2014efficient} enables implicit, arbitrary manipulations of an encrypted state. As we discuss further in \cref{sec:overview-dynamicity}, all of these approaches run into a fundamental obstacle stemming from the fact that we are try to prove a computational \emph{separation}, not just computational hardness. Indeed, with a PRF, it is straightforward to construct a block MDP family for which RL is hard, and in fact we will use such a construction to give a direct proof that there is no reduction from RL to regression (\cref{thm:no-reduction-intro}). However, regression is then equally intractable, so there is no computational separation. In broad strokes, general-purpose primitives either lack the flexibility needed to simulate RL, or are so secure that regression is completely intractable. 

The LPN problem occupies a sweet spot where security satisfies non-trivial robustness guarantees (see e.g. \cref{lemma:batch-lpn}) but there are also non-trivial algorithms. In the sketch above, we reduced the static \bref{problem:self-supervised}{\problemBname} to strong reward-free RL in $\Mtoy$ by leveraging both the additive latent dynamics and the additively homomorphic nature of LPN samples. In the full proof, the latent structure will necessarily be more complex, and the idea of adding the action to the latent state (over a finite field) will no longer be sufficient.

\paragraph{Batch LPN.} Notice that \bref{problem:self-supervised}{\problemBname} can be interpreted as a variant of LPN with batches of samples, where the samples in each batch have correlated noise terms. Essentially, this was a consequence of the static-to-dynamic reduction, and the fact that the emissions in a single episode of RL are correlated via the latent state. A key piece of the toy separation sketched above was a tight reduction to this \emph{batch LPN} problem from standard LPN.\footnote{In particular, while it's trivial to show that \bref{problem:self-supervised}{\problemBname} is as hard as LPN with noise level $1/2-\delta$, this would fail to establish the claimed separation between strong reward-free RL and regression.} Fortunately, there was a simple equivalence between \bref{problem:self-supervised}{\problemBname} and LPN with noise level $1/2-2\delta^2$.

In the full proof, the analogue of \bref{problem:self-supervised}{\problemBname} has larger batches and a more complex correlation structure, so there is no longer an evident equivalence. Moreover, prior work gives some reason to be skeptical of hardness: for seemingly innocuous variants such as batch LPN with one-out-of-three noise,\footnote{Formally, each batch has size three, and the noise vector in each batch is uniformly random subject to having Hamming weight one.} the Arora-Ge linearization attack recovers the parity function in polynomial time \cite{arora2011new}. As a key step in the proof of \cref{thm:separation-intro}, we identify a general condition on the joint noise distribution under which such attacks can be avoided, and in fact batch LPN is provably hard under standard LPN:

\begin{definition}[c.f. \cite{santha1986generating}]\label{def:sv-source}
Let $k\in\NN$, $\delta \in (0,1/2)$, and $p \in \Delta(\BF_2^k)$. We say $p$ is a \emph{$\delta$-Santha-Vazirani source} if for all $i \in [k]$ and $x \in \BF_2^k$ it holds that
\[\Prr_{X\sim p}[X_i=1|X_{<i}=x_{<i}] \in [1/2-\delta, 1/2+\delta].\]
\end{definition}

For example, the joint noise distribution in \bref{problem:self-supervised}{\problemBname} is an $O(\delta^2)$-Santha-Vazirani source, as is its analogue in the full proof. On the other hand, the one-out-of-three noise distribution is \emph{not} a $\gamma$-Santha-Vazirani source for any $\gamma < 1/2$, since fixing the first two noise terms determines the third.


Our reduction from batch LPN with Santha-Vazirani noise to standard LPN is stated informally as \cref{lemma:batch-lpn}, and formally as \cref{lemma:construct-corr-lpn}. For context, the LPN hardness assumption is generally regarded as robust to non-uniformity in the covariates and the secret \cite{pietrzak2012cryptography, dodis2009cryptography}, but little was previously known about its robustness to dependent noise, besides the negative result of \cite{arora2011new} and a positive result for some specific structured noise distributions \cite{bartusek2019new}. \cref{lemma:batch-lpn} sheds further light on this question, and may be thought of as a partial converse to \cite{arora2011new}. See \cref{sec:dependent-lpn} for the proof. 

\section{Related work}\label{sec:related}


\paragraph{RL for block MDPs.} Since any block MDP with $S$ latent states has Bellman rank at most $S$, the seminal algorithm \OLIVE{} for learning in contextual decision processes \cite{jiang2017contextual} is statistically efficient. In particular, for any (finite) decoding function class $\Phi$, \OLIVE{} learns an $\epsilon$-suboptimal policy in a $\Phi$-decodable block MDP with sample complexity $\poly(H,|\MA|,|\MS|,\log|\Phi|,\epsilon^{-1})$, where $H$ is the horizon, $\MA$ is the set of actions, and $\MS$ is the set of latent states (see \cref{sec:block-rl} for formal definitions of these parameters). However, \OLIVE{} is generally considered computationally impractical \cite{du2019provably}, since it relies on \emph{global optimism}, i.e. explicitly maintaining the set of all value functions consistent with data collected thus far. It has been shown that the steps comprising \OLIVE{} cannot be implemented in polynomial time for even tabular MDPs \cite{dann2018oracle}, implying that it cannot be made oracle-efficient with respect to any oracles that are efficiently implementable for tabular MDPs.

Hence, subsequent works have sought to match the statistical performance of \OLIVE{} on block MDPs, with more practical algorithms \--- i.e., algorithms that are oracle-efficient with respect to oracles that are commonly implemented by machine learning heuristics. The first attempts in this direction \cite{dann2018oracle, du2019provably} required additional assumptions on the dynamics of the latent MDP. In particular, \cite{dann2018oracle} studied block MDPs with deterministic dynamics (\cref{def:det-dynamics}), and \cite{du2019provably} made a reachability assumption as well as a ``backwards separability'' assumption which generalizes determinism but excludes many natural scenarios. In the former work, the algorithms require a cost-sensitive classification oracle, among others. In the latter work, the most natural instantiation of the algorithm requires a proper regression oracle over a class $\MG$ that consists of decoding functions $\phi \in \Phi$ composed with maps from latent states to real vectors. This oracle is similar though slightly more complex than the regression problem we study. 

A more recent line of work has developed oracle-efficient\footnote{With plausible oracles, as discussed above, in contrast to \OLIVE{}.} algorithms for block MDPs with only the (largely technical) assumption of reachability \cite{misra2020kinematic, modi2021model} or even with no additional assumptions \cite{zhang2022efficient, mhammedi2023representation,mhammedi2023efficient}. Among these, \cite{modi2021model, zhang2022efficient,mhammedi2023efficient} require a regression oracle, as well as a min-max oracle that finds a discriminator label function inducing the maximum regression error with respect to the current estimated decoding function. In contrast, \cite{misra2020kinematic} uses a contextual bandits / cost-sensitive classification oracle over the policy space, as well as a regression oracle over \emph{pairs} of emissions. Finally, \cite{mhammedi2023representation} uses only a maximum likelihood oracle over pairs of emissions. The regression oracle over pairs of emissions would also work with their algorithm \cite[Footnote 5]{mhammedi2023representation}. See \cref{tab:oracle-overview} for an informal comparison between the oracles that suffice for RL in block MDPs (with no further assumptions) versus the regression oracle considered in our work.

\begin{table}[t]

\centering

\begin{tabular}{|c|c|c|}\hline
\thead{Oracle} & \thead{Necessary for \\ oracle-efficiency?} & \thead{Sufficient for \\ oracle-efficiency?} \\\hline

\makecell{No oracle} & 
\makecell{Yes (trivial)} & \makecell{Likely not \\ \cite{golowich2023exploring}} \\\hline

\makecell{
\small{$\argmin\limits_{\substack{\hat f: \MS \to [0,1] \\ \hat \phi \in \Phi}} \sum\limits_{(x,y) \in \MD} (\hat f(\hat \phi(x)) - y)^2$}
}& 
Yes \cite{golowich2023exploring}
 &
\makecell{Likely not \\ \textbf{(this paper)}} \\ \hline

\makecell{
\small{$\argmin\limits_{\hat \phi\in\Phi} \max\limits_{\substack{f: \MS \to [0,1] \\ \phi\in\Phi}} \min\limits_{\hat f: \MS\times\MA \to [0,1]}$} \\ \small{$ \sum\limits_{(x,a)\in\MD} (\hat f(\hat\phi(x),a) - \EE_{x'|x,a} f(\phi(x')))^2$}
}
&
\makecell{?}
& 
\makecell{Yes \\  \cite{modi2021model,zhang2022efficient}\\\cite{mhammedi2023efficient}}
\\\hline

\makecell{
\small{$\argmax\limits_{\substack{\mu: \MS^2 \to \Delta(\MA \times\MS)\\ \hat \phi\in\Phi}} \sum\limits_{(j,a,x,x') \in \MD} \log \mu((a,j)|\hat \phi(x),\hat \phi(x'))$}
}& 
\makecell{?} & 
\makecell{Yes \cite{mhammedi2023representation}} 
\\\hline

    \end{tabular}
    \caption{Overview of the oracles that have been studied for RL in $\Phi$-decodable block MDPs. The definitions of the oracles are given informally in the table; see the respective papers for formal definitions. We remark that \cite{modi2021model,zhang2022efficient,mhammedi2023efficient} also show sufficiency of a slightly weaker oracle than the one shown in the third row, via their \texttt{RepLearn} algorithm. Also, the oracle in the fourth row can be replaced by an analogous squared-loss minimization oracle \cite[Footnote 5]{mhammedi2023representation}.
    }
    \label{tab:oracle-overview}
\end{table}

See also \cite{feng2020provably}, which solves RL in block MDPs using only an unsupervised clustering oracle for the emission distributions, but hence requires the additional assumption that the emissions are clusterable; and \cite{foster2021instance}, which solves RL in block MDPs using the regression oracle, under the additional assumption that the optimal $Q$-function exhibits a gap.





\paragraph{Computationally efficient RL.} Most of the literature in theoretical reinforcement learning is focused on developing statistically efficient or oracle-efficient algorithms under as broad structural assumptions as possible. A complementary paradigm is to develop end-to-end computationally efficient algorithms under more restrictive \--- but hopefully still plausible \--- assumptions. Our work draws motivation from both paradigms: we are fine with using oracles, but we are interested in using the least computationally burdensome oracles.

Unfortunately, beyond tabular MDPs (i.e. those with small state space) \cite{kearns2002near, brafman2002r} and linear MDPs \cite{jin2020provably}, few positive results are known for computationally efficient RL. It is known to be possible for $\Phi$-decodable block MDPs when $\Phi$ is small (i.e. the time complexity scales polynomially in $|\Phi|$) \cite{modi2021model, zhang2022efficient,golowich2023exploring},\footnote{It's essentially immediate to get time and sample complexity both polynomial in $|\Phi|$, but these works (implicitly) show how to achieve time complexity $\poly(|\Phi|)$ while the sample complexity still only scales with $\poly(\log|\Phi|)$ \--- unlike in the PAC learning setting, this is non-trivial. See \cite[Section 8.6]{modi2021model}, which can also be used to implement the algorithm of \cite{zhang2022efficient}. Alternatively, the same guarantee, albeit with a larger polynomial, follows as an immediate consequence of the main result of \cite{golowich2023exploring}.} which essentially corresponds to brute-force computation of the empirical risk minimizer in PAC learning. This can be improved when $\Phi$ is the class of decoding functions induced by low-depth decision trees \cite{golowich2023exploring}.

For a broader discussion on computationally efficient reinforcement learning, see e.g. \cite{golowich2023exploring} and references.



\paragraph{Computational lower bounds in RL.} Most known computational lower bounds in reinforcement learning are simply those inherited from corresponding statistical lower bounds. Exceptions (i.e. lower bounds exceeding the achievable sample complexity) include hardness for learning POMDPs with polynomially small observability \cite{jin2020sample, golowich2023planning} and hardness for learning MDPs with linear $Q^\st$ and $V^\st$ \cite{kane2022computational, liu2023exponential}, all of which hold under some version of the Exponential Time Hypothesis.

Notably, this is a worst-case hardness assumption. In classical PAC learning theory, there is a clear distinction between \emph{proper}/\emph{semi-proper} learning, where intractability can often be based on worst-case assumptions such as $\NP \neq \RP$ \cite{pitt1988computational}, and \emph{improper} learning, where intractability via $\NP$-hardness is generally believed to be unlikely \cite{applebaum2008basing}, and all known lower bounds are based on average-case and/or cryptographic assumptions \--- see e.g. \cite{valiant1984theory,klivans2009cryptographic,daniely2021local}. For reinforcement learning, no analogous distinction is evident. However, to date, there is no known computational hardness for reinforcement learning in block MDPs based on a worst-case hardness assumption.\footnote{Indeed, it seems plausible that there is a complexity-theoretic obstruction to any natural reduction from an $\NP$-hard problem, as is known for improper learning \cite{applebaum2008basing}, but formalizing why there may be such an obstruction for block MDPs and not e.g. general POMDPs is unclear.}

In fact, until the present work, the only known computational lower bound for block MDPs was the previously-discussed reduction stated in \cite{golowich2023exploring}, which implies that reinforcement learning in a family of block MDPs inherits the hardness of the corresponding (improper) supervised learning problem. The lower bound therefore holds under average-case assumptions such as hardness of learning noisy parities. Our work is in the same vein. However, we emphasize that the prior result equally applied to contextual bandits, whereas our computational separations fundamentally use the full complexity of reinforcement learning \--- as we show in \cref{sec:bandits}, there is no such separation for contextual bandits.

\section{Preliminaries}\label{sec:prelim}
For a finite set $S$, we write $\Delta(S)$ to denote the set of distributions over $S$, and $\Unif(S)$ to denote the uniform distribution over $S$. For $\gamma \in [0,1]$ we write $\Ber(\gamma)$ to denote the distribution of a Bernoulli random variable $X$ with $\Pr[X=1]=\gamma$. For a distribution $p$ and integer $n \in \NN$ we write $p^{\otimes n}$ to denote the distribution of $(X_1,\dots,X_n)$ where all $X_i \sim p$ are independent.

\subsection{Block MDPs and episodic RL}\label{sec:block-rl}

We work with the finite-horizon episodic reinforcement learning model, as is standard for recent work on RL in block MDPs. We start by formally defining block MDPs in this model. A \emph{block MDP} \cite{du2019provably} is a tuple
\begin{equation}
M = (H, \MS, \MX, \MA, \til \BP_0, (\til \BP_h)_{h \in [H]}, (\til \BO_h)_{h \in [H]}, (\til \br_h)_{h \in [H]}, \phi^\st)
\label{eq:mdp-tuple}
\end{equation}
where $H \in \NN$ is the \emph{horizon}, $\MS$ is the \emph{latent state space}, $\MX$ is the \emph{emission space}, $\MA$ is the \emph{action set}, $\til \BP_0 \in \Delta(\MS)$ is the \emph{latent initial distribution}, $\til \BP_h: \MS \times \MA \to \MS$ is the \emph{latent transition distribution} at step $h$, $\til \BO_h: \MS \to \Delta(\MX)$ is the \emph{emission distribution} at step $h$, $\til \br_h: \MS \times \MA \to [0,1]$ is the \emph{latent reward function}, and $\phi^\st: \MX \to \MS$ is the \emph{decoding function}. 
It is required that $\phi^\st(x_h) = s_h$ with probability $1$ over $x_h \sim \til \BO_h(\cdot|s_h)$, for all $h \in [H]$ and $s_h \in \MS$. Note that this implies that $\til \BO_h(\cdot | s_h), \til \BO_h(\cdot | s_h')$ have disjoint supports for all $s_h \neq s_h'$. 

For any function class $\Phi$ that contains $\phi^\st$, we say that $M$ is $\Phi$-decodable. Also, for any $x,x'\in\MX$ and $a \in \MA$, we write $\BP_h(x'|x,a)$ to denote $\til\BP_h(\phi^\st(x')|\phi^\st(x),a) \til\BO_{h+1}(x'|\phi^\st(x'))$. We similarly define $\BP_0(x) = \til\BP_0(\phi^\st(x))\til\BO_1(x|\phi^\st(x))$ and $\br_h(x,a) = \til\br_h(\phi^\st(x),a)$. Observe that $(H, \MX, \MA, \BP_0, (\BP_h)_h, (\br_h)_h)$ is an MDP (with the potentially large state space $\MX$).

\paragraph{Episodic RL access model.} Fix a block MDP $M$ specified as in \cref{eq:mdp-tuple}. We say that an algorithm $\Alg$ has interactive, episodic access to $M$ to mean that $\Alg$ is executed in the following model. First, $\Alg$ is given $H$ and $\MA$ as input. At any time, $\Alg$ can request a new \emph{episode}. The model then draws $s_1 \sim \til \BP_0$ and $x_1 \sim \til\BO_1(\cdot|s_1)$, and sends $x_1$ to $\Alg$. The timestep of the episode is set to $h=1$. So long as $h \leq H$, the algorithm $\Alg$ can at any time play an action $a_h \in \MA$, at which point the model draws $r_h \sim \Ber(\til \br_h(s_h, a_h))$, $s_{h+1} \sim \til \BP_h(\cdot|s_h, a_h)$, and $x_{h+1} \sim \til\BO_{h+1}(\cdot|s_{h+1})$ (the latter two only if $h < H$). The model sends $(r_h, x_{h+1})$ to $\Alg$ (or just $r_h$, if $h=H$) and increments $h$. The episode concludes once $h = H+1$. Note that $\Alg$ never observes the latent states $s_{1:H}$.

\paragraph{Layered state spaces.} For simplicity, we will assume that the latent state space $\MS$ is \emph{layered}, meaning that $\MS$ is the disjoint union of sets $\MS[1],\dots,\MS[H]$, where $\MS[h]$ is the set of states that are reachable at step $h$. This means that for any $h \in [H]$ and reachable $s_h \in \MS[h]$, the step $h$ is fully determined by any given emission $x_h \sim \til\BO_h(\cdot|s_h)$. We also assume that $h$ can be computed efficiently from $x_h$. These assumptions are without loss of generality up to a factor of $H$ in the size of the latent state space and emission space: simply redefine the state space $\MS$ to $\MS\times[H]$, the emission space $\MX$ to $\MX\times[H]$, and the decoding function class $\Phi$ to the set of maps $(x,h) \mapsto (\phi(x),h)$ for $\phi\in\Phi$. Any algorithm in the episodic RL access model for an arbitrary block MDP can simulate access to this layered block MDP by tracking $h$ and appending it to each emission. 
Accordingly, at times we will drop the (superfluous) subscript $h$ from the quantities $\til \BP_h, \til \BO_h, \til \br_h$, and write e.g. $\til \BO(\cdot|s)$ to denote $\til \BO_h(\cdot|s)$ for the unique $h \in [H]$ such that $s \in \MS[h]$.

\subsection{Policies, trajectories, and visitation distributions}

Fix a block MDP with horizon $H$, emission space $\MX$, and action space $\MA$. For $h \in [H]$, the space of \emph{histories} at step $h$ is $\MH_h := (\MX \times \MA \times \{0,1\})^{h-1} \times \MX$. A (randomized, general) \emph{policy} $\pi = (\pi_h)_{h=1}^H$ is a collection of mappings $\pi_h: \MH_h \to \Delta(\MA)$; we let $\Pi$ denote the space of policies. A \emph{trajectory} is a sequence $(s_1, x_1, a_1, r_1,\dots, s_H, x_H, a_H, r_H)$, which we abbreviate as $(s_{1:H}, x_{1:H}, a_{1:H}, r_{1:H})$, where each $s_h$ is a latent state, $x_h$ is an emission, $a_h$ is an action, and $r_h \in \{0,1\}$ is a reward. 

Any block MDP $M$ and policy $\pi$ together define a distribution $\BP^{M,\pi}$ over trajectories. Specifically, $\BP^{M,\pi}$ is the distribution of the random trajectory drawn during the interaction of an algorithm $\Alg$ with $M$, where at step $h$ the algorithm $\Alg$ plays an action $a_h \sim \pi_h(x_{1:h}, a_{1:h-1}, r_{1:h-1})$. For any event $\ME(s_{1:H},x_{1:H},a_{1:H},r_{1:H})$ on the set of trajectories, we write $\BP^{M,\pi}[\ME]$ to denote $\Pr_{\tau \sim \BP^{M,\pi}}[\tau \in \ME]$, and we similarly define expectations $\EE^{M,\pi}$. For example, in the below definition, $\BP^{M,\pi}[s_h=s]$ denotes the probability that a trajectory $(s_{1:H},x_{1:H},a_{1:H},r_{1:H}) \sim \BP^{M,\pi}$ satisfies $s_h = s$.

\begin{definition}[State visitation distribution]
For an MDP $M$ (with parameters as specified above), policy $\pi$, and step $h \in [H]$, the \emph{state visitation distribution} $d^{M,\pi}_h \in \Delta(\MS)$ is defined by $d^{M,\pi}_h(s) := \BP^{M,\pi}[s_h = s]$.
\end{definition}

\subsection{Block MDP families and complexity measures}

To be concrete about computational complexity, we must be able to discuss asymptotics of learning algorithms as the size of the block MDP grows. Thus, we make the following definition of a family of block MDPs, where each MDP in the family is parametrized by a positive integer $n$ that determines the latent state space, action space, horizon, and emission space. Formally:

\begin{definition}\label{def:block-family}
A \emph{block MDP family indexed by $n$} is a tuple \[\MM = ((\MS_n)_n,(\MA_n)_n,(H_n)_n,(\ell_n)_n,(\Phi_n)_n,(\MM_n)_n)\] consisting of the following data: 
\begin{itemize} 
\item sequences of sets $(\MS_n)_n$, $(\MA_n)_n$, and a sequence of positive integers $(H_n)_n$,
\item a sequence of positive integers $(\ell_n)_n$ (the ``emission lengths''),
\item a sequence of function classes $(\Phi_n)_n$ where each element of $\Phi_n$ is a function $\phi: \{0,1\}^{\ell_n} \to \MS_n$, and
\item a sequence of sets $(\MM_n)_n$, where each $M \in \MM_n$ is a $\Phi_n$-decodable block MDP with horizon $H_n$, latent state space $\MS_n$, emission space $\{0,1\}^{\ell_n}$, and action space $\MA_n$. 
\end{itemize}
\end{definition}

\begin{remark}[Booleanity and circuits]
We explicitly require the emission spaces to be over binary strings since all of our constructions have that form. We will also implicitly assume that the latent states and actions have succinct binary representations (i.e. states $s \in \MS_n$ can be efficiently mapped to/from strings of length $\log|\MS_n|$, and so forth), which will again be evident for our constructions. This ensures that the decoding functions and policies are expressible as Boolean circuits, and when we discuss the circuit size of a decoding function or policy, it will be with respect to these implicit mappings. 
\end{remark}

\begin{remark}[Randomized circuits]
For maximum generality, we will allow circuits describing policies (and regression label functions, as discussed below) to be randomized. Formally, a randomized circuit is a circuit that takes some number of extra Boolean inputs, and the output is defined to be the random variable obtained by setting these extra inputs to be independent with distribution $\Ber(1/2)$.
\end{remark}

We will have theorem statements that e.g. assume that there exists an algorithm for learning in a particular block MDP family with some given, unspecified, time or sample complexities. To make such statements more clear, we will use the following simple terminology.

\begin{definition}\label{def:complexity-measure}
A \emph{complexity measure} for a block MDP family indexed by $n$ is any positive, real-valued function of $n$.
\end{definition}

To formally state \cref{thm:no-reduction-intro}, we will also need the following definition.

\begin{defn}[Computable block MDP family]
  \label{def:computable-family}
  For a sequence $B = (B_n)_{n \in \BN}$ of natural numbers, a block MDP family $\MM = ((\MS_n)_n, (\MA_n)_n, (H_n)_n, (\ell_n)_n, (\Phi_n)_n, (\MM_n)_n)$ (\cref{def:block-family}) is said to be \emph{$B$-computable} if the following conditions hold for all $n \in \NN$:
  \begin{itemize}
  \item $\max \{ \ell_n, |\MA_n|, |\MS_n|, H_n,  \log |\Phi_n|\} \leq B_n$.
  \item There is a circuit $\MC_{\Phi_n}$ of size at most $B_n$ that takes as input a pair $(\phi, x) \in \Phi_n \times \MX_n$ (where $\phi$ is represented by an integer in $[|\Phi_n|]$ in binary) and outputs $\phi(x)$. 
  \end{itemize}
  Furthermore, we say that the block MDP family is \emph{polynomially horizon-computable} if it is $\poly(H_n)$-computable. 
\end{defn}
\begin{remark}[Succinct optimal policies]
Note that for any $B$-computable block MDP family $\MM$ and $n \in \NN$, any MDP $M \in \MM_n$ has an optimal policy that can be computed by a circuit of size $\poly(B_n)$: 
let $\pi^\st : \MS_n \to \MA_n$ denote an optimal policy for its underlying latent MDP, which is efficiently computable since $|\MS_n| \leq \poly(B_n)$. Also let $\phi^\st \in \Phi_n$ be the decoding function for $M$. Then the policy $\bar \pi^\st(x) = \pi^\st(\phi^\st(x))$ is an optimal policy for the block MDP $M$ and, by the second guarantee of \cref{def:computable-family}, can be computed by a circuit of size $\poly(B_n)$.
\end{remark} 

\subsection{Computational problems}

Our computational separation result (\cref{thm:separation-intro}) is between reward-free reinforcement learning and realizable regression. Below, we formally define what it means for an algorithm to solve each of these problems, for a given block MDP family, with given resource constraints.

A \emph{policy cover} is a natural solution concept for reward-free RL: a set of policies that (on average) explore the entire state space almost as well as possible. Many algorithms for reward-directed RL learn a policy cover as an intermediate step \cite{du2019provably, misra2020kinematic, mhammedi2023representation}, and subsequently optimize for the rewards using \emph{Fitted $Q$-Iteration} \cite{ernst2005tree, chen2019information} or \emph{Policy Search by Dynamic Programming} \cite{bagnell2003policy} in conjunction with trajectories drawn via the policy cover.

\begin{definition}[Policy cover]\label{def:pc}
Let $M$ be a block MDP and let $\Psi$ be a set of policies for $M$. For $\alpha,\gamma > 0$, we say that $\Psi$ is an $(\alpha,\gamma)$-\emph{policy cover} for $M$ if for every latent state $s$ of $M$ it holds that
that
\begin{equation}\EE_{\pi\sim\Unif(\Psi)} d^{M,\pi}(s) \geq \alpha \cdot \left(\max_{\pi'\in\Pi} d^{M,\pi'}(s) - \gamma\right).\label{eq:pc}\end{equation}
\end{definition}

To be clear, a more common (weaker) definition replaces the expectation over $\pi \sim \Unif(\Psi)$ by a maximum over $\pi \in \Psi$. For technical reasons, we cannot prove our separation under such a definition without introducing an upper bound $|\Psi| \leq P := \poly(H,|\MA|,|\MS|)$ (note that the two definitions are then equivalent up to a factor of $P$ in $\alpha$). But we do not believe this to be a substantive shortcoming, since such a bound does hold for the aforementioned RL algorithms that learn policy covers.


We now define what it means to learn a policy cover for a family of block MDPs.

\begin{definition}[Policy cover learning algorithm]\label{def:rf-rl}
Let $\MM$ be a block MDP family. Let $\alpha: \NN \to \RR_{>0}$, $S, T, B: \NN \to \NN$ 
be complexity measures. A Turing Machine $\PC$ with episodic access to an MDP is a $(S,T,\alpha,B)$-\emph{policy cover learning algorithm} for $\MM$ if the following holds. For every $n \in \NN$ and $M \in \MM_n$, 
with probability at least $2/3$, the output of $\PC(n)$ on interaction with $M$ is a set of policies $\Psi \subseteq \Pi$ where each $\pi \in \Psi$ is represented as a circuit $\MC_\pi$ of size at most $B(n)$, and $\Psi$ is an $(\alpha(n),1/4)$-policy cover (\cref{def:pc}) for $M$. 
Moreover, the time complexity of $\PC$ is at most $T(n)$ and the sample complexity is at most $S(n)$.
\end{definition}


Throughout this paper, we informally consider (standard) reward-free RL to be the problem of policy cover learning with $\alpha \geq 1/\poly(|\MS_n|,|\MA_n|,H_n)$, and strong reward-free RL to be the problem of policy cover learning with $\alpha = 1$. Note that we are fixing the additive approximation error of the policy covers to be the constant $\gamma := 1/4$. In practice, one would like $\gamma = o(1)$, but since we are proving \emph{hardness} of policy cover learning, our definition only makes our results stronger. 

Next, the following key definition describes the solution concept for regression with respect to a conditional distribution $s \mapsto \MD(\cdot|s)$.

\begin{definition}[Accurate regression predictor]
  \label{def:reg-predictor}
Let $\MS$ be a set with an associated conditional distribution $s \mapsto \MD(\cdot|s) \in \Delta(\MX)$. Let $\beta \in \Delta(\MS)$, $f: \MS \to [0,1]$, and $\epsilon>0$. We say that a circuit $\MR$ is a \emph{$(\beta,f,\epsilon)$-predictor with respect to $\MD$} if it defines a mapping $\MX \to [0,1]$ such that
\[\EE_{\substack{s \sim \beta \\ x \sim \MD(\cdot|s) \\ y \sim \Ber(f(s))}} (\MR(x) - y)^2 \leq \epsilon + \EE_{\substack{s \sim \beta \\ y \sim \Ber(f(s))}} (f(s) - y)^2.\]
The quantity $\ep$ is referred to as (an upper bound on) the \emph{excess risk} of the predictor. Note that the above inequality can be equivalently stated as \[\EE_{\substack{s\sim\beta \\ x \sim \MD(\cdot|s)}} (\MR(x) - f(s))^2 \leq \epsilon.\]
\end{definition}

A realizable regression algorithm for $\MM$ is an algorithm that, given samples where the covariate distribution is realizable by $\MM$ (i.e. obtainable as a mixture of emission distributions for some MDP $M$ in the family) and the labels are realizable with respect to the decoding function (i.e. only depend on the latent state), produces an accurate predictor:

\begin{definition}[Realizable regression algorithm]\label{def:regression-algorithm}
Let $\MM$ be a block MDP family. Let $\epsilon: \NN \to \RR_{>0}$, $S, T, B: \NN \to \NN$ 
be complexity measures. A Turing Machine $\Reg$ is a $(S,T,\epsilon,B)$-\emph{realizable regression algorithm} for $\MM$ if the following holds. 

Fix $n \in \NN$ and $M \in \MM_n$, and let $H$, $\MX$, $\MS$, $(\til\BO_h)_h$, and $\phi^\st: \MX\to\MS$ denote the horizon, emission space, latent state space, emission distributions, and decoding function of $M$ respectively. Let $h \in [H]$, and $\beta\in\Delta(\MS[h])$, where $\MS[h]$ denotes the set of states reachable at step $h$. Let $(x^i, y^i)_{i=1}^{S(n)}$ be i.i.d. samples where $x^i \sim \sum_{s\in\MS[h]} \beta(s) \til\BO_h(\cdot|s)$ and $y^i \in \{0,1\}$ satisfies $y^i \perp x^i | \phi^\st(x^i)$. Then with probability at least $2/3$, the output $\MR \gets \Reg((x^i,y^i)_{i=1}^{S(n)},n)$ is a circuit of size at most $B(n)$, and a $(\beta, f, \epsilon(n))$-predictor with respect to $\til\BO_h(\cdot|s)$, where $f: \MS[h] \to [0,1]$ is the function $f(s) := \EE[y^i|\phi^\st(x^i)=s]$.

Moreover, the time complexity of $\Reg$ on this input is at most $T(n)$.
\end{definition}

\begin{remark}
We are restricting the latent state distribution $\beta$ to be supported on the states reachable at a particular step $h$ (rather than across all steps). This simplifies the proof somewhat; moreover, it is without loss of generality up to factors of $\poly(H_n, \epsilon(n)^{-1})$ in the sample complexity and runtime, since the algorithm can partition the samples by step and solve a regression individually at each step.
\end{remark}

For the most part, low-level details of the model of computation for our algorithms will be unimportant. However, to be precise, we will consider a uniform algorithm to be a one-tape Turing Machine with alphabet $\{0,1\}$, and we will define its description complexity as follows.

\begin{definition}\label{def:desc-complexity}
The \emph{description complexity} of a uniform algorithm $\Alg$, which we denote by $\dc(\Alg)$, is the size of the state set of the corresponding Turing Machine.
\end{definition}

Note that the Turing Machine of a uniform algorithm $\Alg$ can be described as a string of length $\poly(\dc(\Alg))$. Moreover, given this description as input, a Turing Machine can simulate $\Alg$ with multiplicative overhead $\poly(\dc(\Alg))$. See e.g. Sections 1.2.1 and 1.3.1 of \cite{arora2009computational} for a reference.

\subsection{Regression oracle and reductions}

We now formally define the oracles that we use in our lower bound against oracle-efficient algorithms (\cref{thm:no-reduction-intro}). To make the proof cleaner, instead of considering algorithms in the episodic RL access model, we instead give the algorithm access to a \emph{sampling oracle} that draws a trajectory from the block MDP $M$ given a succinct description of a policy. 
All episodic RL algorithms that we are aware of can easily be expressed using a sampling oracle.

\begin{defn}[Sampling oracle]
  \label{def:sampling-oracle}
Let $M$ be a block MDP with horizon $H$. A \emph{sampling oracle} $\Osample$ for $M$ takes as input a circuit $\MB_\pi$ representing a general policy $\pi$, and outputs a trajectory $(x_{1:H}, a_{1:H}, r_{1:H}) \sim \BP^{M, \pi}$ consisting of emissions, actions, and rewards drawn from $M$ under policy $\pi$. 
\end{defn}

\begin{defn}[Regression oracle]
  \label{def:regression-oracle}
  Let $M = (H, \MS, \MX, \MA, \til\BP_0, (\til\BP_h)_h,(\til\BO_h)_h, (\til\br_h)_h, \phi^\st)$ be a block MDP. For $B \in \NN$, a \emph{$B$-bounded regression oracle} for $M$ is a nondeterministic function $\Oregress$ which takes as input a step $h \in [H]$ and (randomized) circuits $\MB_\pi, \MB_L$ describing a general policy $\pi$ and a \emph{labeling function} $L : (\MX \times \MA \times \{0,1\})^H \to \Delta(\{0,1\})$, and which outputs a circuit $\MC_\MR$ describing a mapping $\MR : \MX \to [0,1]$, where $\size(\MC_\MR) \leq B$. 
  A \emph{regression oracle} for $M$ is one which is $\infty$-bounded, i.e., for which there is no constraint on $\size(\MC_\MR)$. 
  
  Furthermore, we say that $\Oregress$ is \emph{$\ep$-accurate} for $M$ if for each tuple $(\MB_\pi, \MB_L, h)$, the output $\MR := \Oregress(\MB_\pi, \MB_L, h)$ is a $(d_h^{M,\pi}, f,  \ep)$-accurate predictor (\cref{def:reg-predictor}) with respect to $\til\BO_h(\cdot \mid s)$, where $f: \MS \to [0,1]$ is the function $f(s) := \E^{M,\pi}[L(x_{1:H}, a_{1:H}, r_{1:H}) \mid \phi^\st(x_h) = s]$.

\end{defn}

\begin{remark}[Existence of regression oracle]\label{rmk:reg-oracle-existence}
Let $M$ be a block MDP with latent state space $\MS$, emission space $\{0,1\}^\ell$ and decoding function $\phi^\st$. If $\phi^\st$ can be represented by a circuit of size $B$, then for any $\epsilon \in (0,1/2)$ there exists a $O(B + |\MS| \log(1/\epsilon))$-bounded, $\epsilon$-accurate regression oracle for $M$. Given input $(\MB_\pi,\MB_L,h)$, the output predictor is a circuit for $x \mapsto \epsilon \lfloor f(\phi^\st(x))/\epsilon\rfloor$, where $f$ is as defined above.

Thus, in particular, if $\MM$ is a $K(n)$-computable block MDP family indexed by $n$ (\cref{def:computable-family}), then for any function $\epsilon:\NN\to(0,1/2)$ there is a $\poly(K(n), \log(1/\epsilon(n)))$-bounded, $\epsilon(n)$-accurate regression oracle for each $n \in \NN$ and $M \in \MM_n$.
\end{remark}

\begin{remark}
Note that an $\ep$-accurate regression oracle is modeled as a \emph{nondeterministic} function, meaning that, on each regression oracle call, the oracle may return an \emph{arbitrary} predictor $\MR : \MX \to [0,1]$ subject to the accuracy condition in \cref{def:regression-oracle} (in particular, two identical oracle calls may return different outputs). This definition greatly eases the proof of \cref{thm:no-reduction-intro}, but we do not believe that the non-determinism is essential, and in any case it seems unlikely that the success of any natural learning algorithm would be contingent on determinism of the oracle.

Also, like in \cref{def:regression-algorithm}, we defined the regression oracle so that the covariate distribution is always emitted from a single step $h$. One could again imagine requiring the oracle to perform regression on mixture distributions across steps. However, for the same reasons as above, this is essentially without loss of generality. 
\end{remark}

\begin{remark}
Note that \cref{def:regression-oracle} does not require that the label distribution $L(x_{1:H},a_{1:H},r_{1:H})$ is independent of $x_h$ given $\phi^\st(x_h)$, as we required in \cref{def:regression-algorithm}. However, label functions satisfying independence are particularly natural in the context of \cref{def:regression-oracle} since when independence holds, we have that $\E^{M,\pi}[L(x_{1:H}, a_{1:H}, r_{1:H}) \mid x_h=x] = \E^{M,\pi}[L(x_{1:H}, a_{1:H}, r_{1:H}) \mid s_h = \phi^\st(x)] = f(\phi^\st(x))$, which implies that the requirement of $\MR$ being a $(d_h^\pi, f, \ep)$-accurate predictor is equivalent to
  \begin{align}
\EE_{\substack{s_h \sim d_h^\pi\\ x_h \sim \til\BO_h(\cdot \mid s_h) \\ y \sim \Ber(f(s_h))}}\left[ (\MR(x_h) - y)^2 \right] \leq \ep + \inf_{\substack{\phi' \in \Phi \\ f' : \MS \to \BR}} \EE_{\substack{s_h \sim d_h^\pi\\ x_h \sim \til\BO_h(\cdot \mid s_h) \\ y \sim \Ber(f(s_h))}} \left[ (f'(\phi'(x_h)) - y)^2 \right]\nonumber,
  \end{align}
  i.e., the mapping $x_h \mapsto \MR(x_h)$ is approximately as good as the best mapping $x_h \mapsto f'(\phi'(x_h))$, for $\phi' \in \Phi$ and $f':\MS\to[0,1]$. 
\end{remark}


In \cref{def:rl-reg-reduction-2}, we formally define the notion of a reduction from RL to regression: it is an algorithm for the online RL setting which has access to a sampling oracle (\cref{def:sampling-oracle}) and a regression oracle (\cref{def:regression-oracle}).

\begin{definition}[Reduction from RL to regression] 
  \label{def:rl-reg-reduction-2}
Let $\MM$ be a block MDP family. Let $\epsilon: \NN \to \RR_{>0}$ and $T, B: \NN \to \NN$ be complexity measures. We say that an oracle Turing Machine $\Alg^{\Oregress, \Osample}$, that takes as input a natural number $n$ and has access to oracles $\Oregress, \Osample$, is a \emph{\emph{$(T, \ep)$}-reduction from RL to regression} for $\MM$ if the following holds.

Let $n \in \NN$ and $M \in \MM_n$. The number of oracle calls made by $\Alg$ is at most $T(n)$. Additionally, if $\Oregress$ is $\epsilon(n)$-accurate for $M$ (\cref{def:regression-oracle}) and $\Osample$ is a sampling oracle for $M$ (\cref{def:sampling-oracle}), then with probability at least $1/2$, $\Alg^{\Oregress,\Osample}(n)$ produces a circuit $\MC_{\hat\pi}$ describing a (general) policy $\hat \pi$ with suboptimality at most $1/2$. 

We say that $\Alg$ is a \emph{computational $(T,\epsilon, B)$-reduction} if, in addition to the above, the following property holds. For each $n \in \NN$ and $M\in \MM_n$, suppose that $\Oregress$ is $B(n)$-bounded for $M$ (\cref{def:regression-oracle}). 
Then the running time of $\Alg(n)$ is at most $T(n)$.
\end{definition}

\begin{remark}[Proper vs. improper]
In the above definitions, we let the output of a regression algorithm or oracle be an arbitrary, bounded-size circuit. This corresponds to \emph{improper} PAC learning, and it suffices for the reductions described in \cref{sec:offline,sec:bandits,sec:deterministic}. Moreover, it can be checked that \cref{thm:separation-intro,thm:no-reduction-intro} still hold when the regression algorithms/oracles are required to be proper, i.e. to output the composition of a decoding function $\phi \in \Phi$ with a map $f: \MS \to [0,1]$. This is more in line with the oracles used in theoretical reinforcement learning for block MDPs.
\end{remark}


\subsection{Learning parities with noise}

We formally introduce the Learning Parities with Noise (LPN) problem and the hardness assumption on which \cref{thm:separation-intro} is based.

\begin{definition}\label{def:lpn}
Fix $n \in \NN$, $\delta \in [-1/2,1/2]$, and $\sk \in \BF_2^n$. We define $\LPN_{n,\delta}(\sk)$ to be the distribution of the pair $(u, y)$ where $u \sim \Unif(\BF_2^n)$ and $y = \langle u, \sk\rangle + e \in \BF_2$, where $e \sim \Ber(1/2-\delta)$ is independent of $a$.
\end{definition}

The noisy parity learning problem is the algorithmic task of recovering $\sk$ from independent samples from $\LPN_{n,\delta}(\sk)$. Since the noise is drawn from $\Ber(1/2 - \delta)$, \emph{smaller} values of $\delta$ (in absolute value) corresponding to \emph{harder} instances of learning parity with noise. 

\begin{definition}[Learning noisy parities]\label{def:lpn-alg-advice}
For any algorithm $\Alg$, we say that it learns noisy parities with time complexity $T(n,\delta,\eta)$ and sample complexity $S(n,\delta,\eta)$\footnote{We may assume without loss of generality that any algorithm for learning noisy parities must read its entire input, and thus $S(n,\delta,\eta)$ is at most $T(n,\delta,\eta)$.} if the following holds. For every $n \in \NN$, $\delta \in (0,1/2)$, and $\eta \in (0,1)$, for all $\sk \in \BF_2^n$, if $(u_i,y_i)_{i=1}^{S(n,\delta,\eta)}$ are independent draws from $\LPN_{n,\delta}(\sk)$, then 
\[\Pr[\Alg((u_i,y_i)_{i=1}^{S(n,\delta,\eta)},\delta,\eta) = \sk] \geq 1-\eta,\]
and the time complexity of $\Alg$ on this input is $T(n,\delta,\eta)$.
\end{definition}

With this notation, we can formally state our assumption.

\begin{assumption}\label{assm:fine-lpn}
For every constant $c>0$, there is no non-uniform algorithm for learning noisy parities with $\poly(n)$ advice and time complexity $T(n,\delta,\eta)$ that satisfies $T(n, 2^{-n/\log \log n}, 1/2) \leq 2^{cn/\log \log n}$. 
\end{assumption}

For context, note that there is an algorithm for learning noisy parities with \emph{statistical} complexity $S(n,\delta,\eta)$ satisfying $S(n,2^{-n/\log \log n}, 1/2) \leq 2^{O(n/\log \log n)}$ (\cref{lemma:brute-force-lpn}), but the best-known bound on \emph{time} complexity is $T(n,2^{-n/\log \log n},1/2) \leq 2^{O(n/\log \log \log n)}$ (see \cref{thm:bkw} due to \cite{blum2003noise}). Improving the noise tolerance is mentioned as an open problem in \cite{blum2003noise} and more recently in \cite{reyzin2020statistical}. 

A \emph{(non-uniform) algorithm with $\poly(n)$ advice} is a Turing Machine where for each $n$ (in this case, corresponding to the number of variables in the LPN instance), the input is augmented with a binary string of length $\poly(n)$, which may depend on $n$ but \emph{not} the input (or $\sk$); see e.g. \cite[Definition 6.9]{arora2009computational}. While the problem of learning noisy parities has thus far only been studied in the uniform model of computation, we are not aware of any natural learning problems where access to polynomial advice is known to decrease the asymptotic computational complexity. We discuss the technical reason why non-uniformity is needed for \cref{assm:fine-lpn} in \cref{sec:overview-chaining}.

\subsubsection{Algorithms for LPN} 

The following seminal result remains the best-known bound on the time complexity of learning noisy parities (when $1/2-\delta$ is bounded away from $0$).

\begin{theorem}[\cite{blum2003noise}]
  \label{thm:bkw}
There is a universal constant $C>0$ and an algorithm $\BKW$ that learns noisy parities with time complexity $T_{\BKW}(n,\delta,\eta)$ and sample complexity $S_{\BKW}(n,\delta,\eta)$ satisfying
\[S_{\BKW}(n,\delta,\eta) \leq T_{\BKW}(n,\delta,\eta) \leq \min_{1 \leq a \leq n} \delta^{-C \cdot 2^a} 2^{C \cdot n/a} \log \frac{1}{\eta}\]
for all $n \in \NN$, $\delta \in (0,1/2)$, and $\eta \in (0,1/2)$.
\end{theorem}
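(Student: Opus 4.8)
The plan is to reprove the bound via an adaptation of the Blum--Kalai--Wasserman algorithm \cite{blum2003noise}. The first step is a reduction to single-coordinate recovery: it suffices to exhibit, for each fixed $i \in [n]$, a subroutine that outputs $\sk_i$ with probability at least $1 - \eta/n$; running it for all $i$ and taking a union bound then recovers $\sk$ with probability at least $1 - \eta$, at the cost of a factor $n$ in time and sample complexity. Without loss of generality we target $i = 1$. The sample complexity is trivially at most the time complexity, so it is enough to bound the latter.

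Fix a block count $a \in [n]$, to be optimized at the end. Set $b := \lceil (n-1)/a \rceil$ and partition $\{2,\dots,n\}$ into at most $a$ nonempty blocks $B_1,\dots,B_a$ of size at most $b$, keeping coordinate $1$ aside. Draw $N$ fresh i.i.d.\ samples from $\LPN_{n,\delta}(\sk)$ (with $N$ chosen below) and perform $a$ rounds of the following operation: sort the current list of pairs $(u,y)$ into classes by the value of $u|_{B_j}$; in each nonempty class designate one pair as a representative, XOR it into every other member of the class, and then discard the representative. After round $j$, every surviving pair $(u,y)$ has $u$ supported on $\{1\} \cup B_{j+1} \cup \dots \cup B_a$ and satisfies $y = \langle u, \sk \rangle + \sum_{\ell \in I}e_\ell$, where $I$ indexes the (at most $2^j$) original samples that were combined to form it. After all $a$ rounds the surviving pairs lie in $\{(0,\cdot),(e_1,\cdot)\}$; we retain those of the form $(e_1,\cdot)$, which are exactly samples $(e_1, \sk_1 + \xi)$ with $\xi$ a sum of at most $2^a$ independent $\Ber(1/2-\delta)$ bits.

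Three quantitative facts drive the analysis. \emph{(i) Noise amplification:} the XOR of $m$ independent $\Ber(1/2-\delta)$ bits is $\Ber(1/2 - \tfrac12(2\delta)^m)$, immediate from $\E[(-1)^{X_1 + \dots + X_m}] = (2\delta)^m$; hence every retained sample has noise bias at least $\delta' := \tfrac12(2\delta)^{2^a}$. \emph{(ii) Survivor count:} each round discards at most $2^b$ pairs (one representative per class), so at least $N - a 2^b$ pairs survive all rounds; moreover the combinatorial structure recording which samples are merged together depends only on coordinates in $\{2,\dots,n\}$, so conditioned on that structure the first coordinates of the original samples remain i.i.d.\ uniform, and each survivor independently has $u = e_1$ with probability exactly $1/2$. \emph{(iii) Majority vote:} given $K$ retained samples $(e_1, \sk_1 + \xi_1),\dots,(e_1,\sk_1+\xi_K)$ with \emph{independent} $\xi_j$, the majority of the second coordinates equals $\sk_1$ except with probability $\exp(-\Omega(K \delta'^2))$ by Hoeffding's inequality, so $K = \Theta(\delta'^{-2}\log(n/\eta))$ suffices.

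The one genuine subtlety --- and the step I expect to be the main obstacle in a careful writeup --- is that within a single run the retained samples do \emph{not} have independent noises, since distinct survivors of a common class share the representative that was XORed into them. I would circumvent this either by executing the entire block procedure $K = \Theta(\delta'^{-2}\log(n/\eta))$ times with fresh raw samples and keeping a single retained sample from each run (which by fact (ii) succeeds per run with constant probability once $N - a2^b = \Omega(\log(Kn/\eta))$, yielding $K$ genuinely independent samples for the vote), or by pairing up the members of each class disjointly rather than using a shared representative (so that all survivors within a run are built from disjoint original samples, at the cost of an extra $2^{O(a)}$ factor in the required $N$). With $N = \Theta(a2^{b} + 2^{a}\log(n/\eta) + 2^{a}\log(1/\delta))$ per run, the total time to recover all of $\sk$ is at most $2^{O(a)}\cdot\poly(n)\cdot\log(1/\eta)\cdot\delta'^{-2}\cdot(a2^{b}+\log(1/\delta))$; using $\delta'^{-2} = 4(2\delta)^{-2^{a+1}} \le 4\delta^{-2^{a+1}}$, the crude bounds $2^a \le \delta^{-2^a}$, $\log(1/\delta)\le\delta^{-1}$, and $b \le n/a + 1$, and absorbing the $\poly(n)$ and $2^{O(a)}$ overheads into $2^{Cn/a}$ (when $a = O(\log n)$) or into $\delta^{-C2^a}$ (otherwise), this is at most $\delta^{-C\cdot 2^a}\,2^{Cn/a}\log(1/\eta)$ for a universal constant $C$. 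Minimizing over $a \in [n]$, and noting that each round runs in time near-linear in $N$ up to an $O(n)$ factor for the vector operations (also absorbed), gives the claimed bound.
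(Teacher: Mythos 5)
The paper does not prove this statement; it is imported verbatim from the cited source \cite{blum2003noise}, so there is no in-paper argument to compare against. Your reconstruction is essentially the standard BKW argument (block-wise elimination, noise amplification $(2\delta)^{2^a}$, majority vote per coordinate), and in outline it is correct and does yield the stated bound, including the absorption of the $\poly(n)$, $2^{O(a)}$, and $\log(1/\delta)$ overheads into $\delta^{-C\cdot 2^a}2^{Cn/a}\log(1/\eta)$ for a universal $C$.

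Two remarks on the details. First, your fact (ii) overclaims: the survivors' index sets overlap (they share class representatives), so ``each survivor independently has $u=e_1$ with probability exactly $1/2$'' does not follow merely from the first coordinates of the raw samples being i.i.d.\ uniform given the merging structure. What you actually get for free is that each survivor's first coordinate is a parity over a distinct nonempty index set, hence marginally uniform and pairwise independent; Chebyshev then already gives the per-run constant success probability you need, so the argument survives with this weaker statement. (Full mutual independence does in fact hold, because the index-set characteristic vectors stay linearly independent over $\BF_2$ throughout the rounds, but you would have to prove that, and you do not need it.) Second, you correctly identified the real issue --- the survivors' noises are \emph{not} independent, since parities of biased bits over linearly independent but overlapping sets are correlated --- and your fix of rerunning the whole elimination with fresh raw samples and keeping one retained sample per run, so that the $K$ votes are genuinely independent, is the simplest rigorous route; the conditioning on ``at least one survivor has $u=e_1$'' is harmless because the noises are independent of the covariates, so the retained sample's noise bias is still at least $\tfrac12(2\delta)^{2^a}$ in the correct direction. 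With those two points tightened, your writeup is a valid proof of the quoted bound.
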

In particular, choosing $a = \log n - 2 \log \log n$, we see from \cref{thm:bkw} that $T(n, 2^{-n^{1-\gamma}}, 2^{-n}) \leq 2^{O(n/\log n)}$ for any constant $\gamma \in (0,1)$. However, for $\delta(n) = 2^{-n/\log \log n}$, the $\BKW$ algorithm requires time complexity $2^{\Omega(n/\log \log \log n)}$, which is consistent with \cref{assm:fine-lpn}.

To prove \cref{thm:separation-intro}, we will actually make use of the following result, which improves upon the sample complexity of $\BKW$, at the cost of somewhat worse (but still better than brute-force) time complexity:

\begin{theorem}[\cite{lyubashevsky2005parity}]\label{theorem:lyu}
Let $\epsilon, c\in(0,1)$ be constants. There is an algorithm $\Lyu$ that learns noisy parities with time complexity $T(n,\delta,\eta)$ and sample complexity $S(n,\delta,\eta)$ satisfying $T(n, 2^{-\log^c n}, e^{-n}) \leq 2^{O(n/\log \log n)}$ and $S(n, 2^{-\log^c n}, e^{-n}) \leq n^{1+\epsilon}$.
\end{theorem}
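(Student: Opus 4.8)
Since Theorem~\ref{theorem:lyu} is a result of \cite{lyubashevsky2005parity}, the plan is to recall and adapt its argument, which is a sample-reusing variant of the Blum--Kalai--Wasserman algorithm (Theorem~\ref{thm:bkw}). Recall that $\BKW$ partitions the $n$ coordinates into $a$ blocks of size $n/a$ and, over $a-1$ rounds, produces samples vanishing on ever more blocks by XOR-ing pairs of current samples that already agree on the next block; after $a-1$ rounds the surviving samples are supported on a single block, from which each desired secret bit is read off by a majority vote over the mutually independent surviving samples whose $u$-part is the right basis vector. A combined sample is an XOR of $\Theta(2^{a})$ fresh samples, hence has noise bias $(2\delta)^{\Theta(2^{a})}$ by the piling-up lemma, so $\BKW$ must draw enough fresh samples both to fill the $2^{n/a}$ buckets at each round and to supply $(2\delta)^{-\Theta(2^{a})}$ near-independent votes per bit; for $\delta = 2^{-\log^{c} n}$ its best setting ($a$ just below $\log n$) achieves time and sample complexity $2^{\Theta(n/\log n)}$.

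The crucial modification of \cite{lyubashevsky2005parity} is to reuse each fresh sample in exponentially many combinations rather than one: one retains all intermediate samples and, in each bucket, combines many pairs rather than a single matching, so that a pool of merely $n^{1+\epsilon}$ fresh samples is inflated into the exponentially many combined samples the algorithm consumes. The first step is to choose the block structure so that this inflation is possible. The key tension is that generating enough level-$1$ samples (each an XOR of some $m$ fresh samples, vanishing on the first block of size $n/a$) to seed the process requires $\binom{n^{1+\epsilon}}{m}\gtrsim 2^{\Omega(n/a)}$, which forces $m \gtrsim n/(a\log n)$; but the final combined samples are XORs of $\approx m\cdot 2^{a-1}$ fresh samples, so keeping their noise bias at least $2^{-O(n/\log\log n)}$ (which is what makes $2^{O(n/\log\log n)}$ votes suffice, using that $\log^{c} n \ll \log n$) forces $m \cdot 2^{a-1} = O(n/\log\log n)$. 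Combining these two constraints pins $a = \Theta(\log\log n)$, hence blocks of size $\Theta(n/\log\log n)$ --- a coarser regime than is optimal for $\BKW$ --- and a running time of $2^{\Theta(n/\log\log n)}$, which also bounds the cost of reading off the secret on the final block by brute force over its $2^{O(n/\log\log n)}$ restrictions. With the votes collected, a Chernoff bound and a union bound over the $n$ coordinates give confidence $1-e^{-n}$.

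The main obstacle --- and the heart of \cite{lyubashevsky2005parity} --- is that the combined samples feeding the final majorities are no longer mutually independent: since only $n^{1+\epsilon}$ fresh samples underlie exponentially many combined ones, each fresh sample is reused astronomically often, and two combined votes, viewed as $\pm 1$ variables $(-1)^{\bigoplus_{i\in T}e_i}$ and $(-1)^{\bigoplus_{i\in T'}e_i}$, have covariance exactly $(2\delta)^{|T\triangle T'|} - (2\delta)^{|T|+|T'|}$, which is non-negligible precisely when the index sets $T,T'$ overlap substantially. I would handle this by a second-moment argument: show, via the combinatorial structure of the combination tree (and by arranging the within-bucket pairings to be sufficiently spread out, e.g. pairwise-independent-like), that only a vanishing fraction of pairs of combined votes have large intersection, so the sum of covariances is dominated by the diagonal and the empirical majority concentrates around its mean. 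Upgrading the resulting constant success probability to the claimed $1-e^{-n}$ can be done either through a concentration inequality for sums of variables with a bounded dependency structure, or by splitting the combined votes into $\Theta(n)$ nearly-independent batches. Finally, since the algorithm's structure is oblivious to $\delta$ and only the vote count depends on it, passing from Lyubashevsky's original constant-noise analysis to $\delta = 2^{-\log^{c} n}$ just requires re-checking the noise-budget bookkeeping above; putting the pieces together gives the stated time and sample bounds.
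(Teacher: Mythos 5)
You should first note that the paper does not prove \cref{theorem:lyu} at all: it is imported verbatim from \cite{lyubashevsky2005parity} and used as a black box (in the proof of \cref{thm:main-separation}, as the base algorithm $\Alg_0$), so there is no internal proof to compare against. Judged as a reconstruction of Lyubashevsky's argument, your sketch identifies the right ingredients: re-using a pool of $n^{1+\epsilon}$ samples by XOR-ing subsets, the piling-up lemma for the noise of combined samples, the parameter tension that forces roughly $\Theta(\log\log n)$ blocks of size $\Theta(n/\log\log n)$ (hence the weaker exponent $n/\log\log n$ instead of BKW's $n/\log n$), a covariance computation showing votes with small index-set overlap are nearly uncorrelated, and a final amplification to confidence $1-e^{-n}$.

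However, as a proof it has a genuine gap, and it also deviates from the cited argument in a way that matters. The crux of \cite{lyubashevsky2005parity} is precisely the part you defer ("arranging the within-bucket pairings to be sufficiently spread out"): Lyubashevsky does \emph{not} reuse samples inside the BKW recursion. He forms pseudo-samples \emph{up front} by XOR-ing uniformly random $k$-subsets of the pool, with $k \approx n/((1+\epsilon)\log n)$ chosen so that $\binom{n^{1+\epsilon}}{k} \gg 2^{n}$, proves that (with high probability over the pool) the resulting covariate vectors are statistically close to uniform, and controls the vote correlations via the fact that two independent random $k$-subsets of an $n^{1+\epsilon}$-size pool have small intersection; only then is standard BKW run on these pseudo-samples. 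Your variant, which entangles the reuse with the bucketing, creates a harder problem than the one you analyze: the bucket structure and the surviving combinations are determined by the (now dependent, non-uniform) covariates, so the dependence you must control is not only between the noise bits of the final votes but also in which combinations exist at all, and a second-moment bound on the votes alone does not address this. Relatedly, your noise-budget bookkeeping ($m\cdot 2^{a-1} = O(n/\log\log n)$) drops the $\log^c n$ factor coming from $\delta = 2^{-\log^c n}$; carrying it through still yields $a = \Theta(\log\log n)$ only because $c<1$, and this is exactly where the hypothesis $\delta \geq 2^{-\log^c n}$ with $c<1$ is used, so it should be made explicit rather than absorbed. For the paper's purposes, citing \cite{lyubashevsky2005parity} is the intended justification; if you want a self-contained proof, you should follow the up-front subset-combination scheme and supply the near-uniformity and pairwise-near-independence lemmas in full.
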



Finally, we recall that there is a brute-force estimator for learning noisy parities, which picks the parity function with minimal empirical labelling error. It satisfies the following guarantee, which can be deduced from standard concentration bounds:

\begin{lemma}\label{lemma:brute-force-lpn}
There is a universal constant $C>0$ and an algorithm $\Brute$ that learns noisy parities with time complexity $T_\Brute(n,\delta,\eta) \leq C\delta^{-2}n 2^n \log(1/\eta)$ and sample complexity $S_\Brute(n,\delta,\eta) \leq C\delta^{-2}n \log(1/\eta)$ for all $n \in \NN$, $\delta \in (0,1/2)$, and $\eta \in (0,1/2)$.
\end{lemma}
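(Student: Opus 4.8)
The plan is to analyze the natural empirical-risk-minimizer. Given parameters $n, \delta, \eta$, the algorithm $\Brute$ draws $m$ samples $(u_i, y_i)_{i=1}^m$ from $\LPN_{n,\delta}(\sk)$, where $m := \lceil C' \delta^{-2} n \log(1/\eta)\rceil$ for a suitable absolute constant $C'$, computes for every candidate $v \in \BF_2^n$ the empirical labeling error $\wh{\mathrm{err}}(v) := \tfrac1m \sum_{i=1}^m \One[\langle u_i, v\rangle \neq y_i]$, and outputs $\wh\sk := \argmin_{v \in \BF_2^n} \wh{\mathrm{err}}(v)$, breaking ties arbitrarily. Correctness hinges on a mean-gap computation: writing $y_i = \langle u_i, \sk\rangle + e_i$ with $e_i \sim \Ber(1/2-\delta)$, we have $\E\,\wh{\mathrm{err}}(\sk) = \Pr[e_i=1] = 1/2-\delta$, while for any $v \neq \sk$ the event $\{\langle u_i,v\rangle \neq y_i\}$ equals $\{\langle u_i, v-\sk\rangle \neq e_i\}$, and since $v - \sk \neq 0$ and $u_i \sim \Unif(\BF_2^n)$, the bit $\langle u_i, v-\sk\rangle$ is uniform and independent of $e_i$; hence $\E\,\wh{\mathrm{err}}(v) = 1/2$ for every $v \neq \sk$.

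Next I would invoke a two-sided Hoeffding bound on each $\wh{\mathrm{err}}(v)$ — an average of $m$ i.i.d.\ $\{0,1\}$ variables — with deviation threshold $\delta/3$, so that $\Pr[|\wh{\mathrm{err}}(v) - \E\,\wh{\mathrm{err}}(v)| \geq \delta/3] \leq 2\exp(-2m\delta^2/9)$, and take a union bound over all $2^n$ candidates. On the complementary event, $\wh{\mathrm{err}}(\sk) < 1/2 - 2\delta/3 < 1/2 - \delta/3 < \wh{\mathrm{err}}(v)$ for every $v \neq \sk$, so the minimizer is unique and equals $\sk$. For this event to have probability at least $1-\eta$ it suffices that $2^{n+1}\exp(-2m\delta^2/9) \leq \eta$, i.e.\ $m \geq \tfrac{9}{2\delta^2}\bigl((n+1)\ln 2 + \ln(1/\eta)\bigr)$; using $n \geq 1$ together with $\eta < 1/2$ (so $\ln(1/\eta) > \ln 2$), the bracketed quantity is at most $3n\ln(1/\eta)$, which shows that the stated choice of $m$ works and that $m \leq C'\delta^{-2}n\log(1/\eta)$, establishing the claimed sample complexity $S_\Brute$.

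For the running time I would take a little care to avoid a spurious factor of $n$: recomputing each $\wh{\mathrm{err}}(v)$ from scratch costs $O(n)$ per sample and thus $O(2^n m n)$ overall, which exceeds the target. Instead, enumerate the $2^n$ candidates in Gray-code order so consecutive candidates $v, v'$ differ in a single coordinate $j$; maintaining the array $(\langle u_i, v\rangle)_{i\in[m]}$ and a running count of errors, each step only flips $\langle u_i, v\rangle$ and adjusts the count for those $i$ with $(u_i)_j = 1$, which is $O(m)$ work, so the whole enumeration costs $O(2^n m)$, plus $O(mn)$ to read the input and initialize. Since $n \leq 2^n$, the total time is $O(2^n m) = O(\delta^{-2} n 2^n \log(1/\eta))$, matching $T_\Brute$. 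This bookkeeping is the only point that requires any thought; the rest is a standard concentration-plus-union-bound argument, so I do not anticipate a genuine obstacle.
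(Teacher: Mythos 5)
Your proposal is correct and is exactly the standard ERM-plus-Hoeffding argument the paper has in mind (the paper states \cref{lemma:brute-force-lpn} without proof, attributing it to "standard concentration bounds"): the mean gap $1/2-\delta$ versus $1/2$, a $\delta/3$ deviation bound, and a union bound over the $2^n$ candidates give the stated sample complexity. The Gray-code bookkeeping is a sensible extra step that makes the time bound come out as literally stated, $O(\delta^{-2} n 2^n \log(1/\eta))$, rather than with an additional factor of $n$ from naive re-evaluation; otherwise the argument matches the intended one.
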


\subsubsection{Technical lemmas for LPN}

Recall that in \cref{def:lpn-alg-advice}, the algorithm is given the noise level $\delta$ as part of the input. We will also need the following definition about algorithms that are not given the true noise level, but rather an upper bound on the true noise level. The performance is measured as a function of the given upper bound.

\begin{definition}[Learning noisy parities with unknown noise level]\label{def:lpn-unknown-noise}
For any algorithm $\Alg$, we say that it learns noisy parities \emph{with unknown noise level} with time complexity $T(n,\delta,\eta)$ and sample complexity $S(n,\delta,\eta)$ if the following holds. For every $n \in \NN$, $0 < \delta \leq 1/2$, and $\delreal \in [-1/2,1/2]$ with $|\delreal| \geq \delta$, and $\eta \in (0,1)$, for all $\sk \in \BF_2^n$, if $(u_i,y_i)_{i=1}^{S(n,\delta,\eta)}$ are independent draws from $\LPN_{n,\delreal}(\sk)$, then 
\[\Pr[\Alg((u_i,y_i)_{i=1}^{S(n,\delta,\eta)},\delta,\eta) = \sk] \geq 1-\eta,\]
and the time complexity of $\Alg$ on this input is $T(n,\delta,\eta)$.
\end{definition}

While natural algorithms such as $\BKW$ achieve identical guarantees for learning noisy parities with unknown noise level as for learning noisy parities with known noise level, it is not a priori clear that these two problems have the same computational complexity. Thus, we will need the following lemma which states that there is a way to ``guess'' the noise level that multiplicatively blows up the time complexity by a factor of roughly the sample complexity.

\begin{lemma}[Guessing the noise level]\label{lemma:lpn-unknown-delta}
Let $\Alg$ be an algorithm for learning noisy parities with time complexity $T(n,\delta,\eta)$ and sample complexity $S(n,\delta,\eta)$, where $T$ and $S$ are non-increasing in $\delta$. Then there is an algorithm $\widetilde\Alg$ for learning parities with unknown noise level, with time complexity $\til T(n,\delta,\eta)$ and sample complexity $\til S(n,\delta,\eta)$ satisfying
\[\til T(n,\delta,\eta) \leq O(dn + T(n,\delta,1/2))\cdot S(n,\delta,1/2) \log(2/\eta)\]
\[\til S(n,\delta,\eta) = 4\cdot S(n, \delta, 1/2)\log(2/\eta) + 9\delta^{-2}\log \frac{32\cdot S(n,\delta,1/2)\log(2/\eta)}{\eta}\]
for all $n \in \NN$, $\delta \in (0,1/2]$, and $\eta \in (0,1)$.

\end{lemma}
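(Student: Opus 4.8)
The plan is to run $\Alg$ on a grid of \emph{guessed} noise levels, amplify its success probability by repetition, and then select the correct secret by cross-validation. The one conceptual point to get right is this: since data processing can only shrink the bias of a Bernoulli channel by a fixed multiplicative factor, one cannot exactly ``re-noise'' the given samples down to a prescribed target bias without already knowing $|\delreal|$. Instead I would exploit the fact that $\Alg$ is robust to a small perturbation of its input distribution in total variation, which lets one run $\Alg$ with a noise parameter that only \emph{approximately} matches the truth — provided the approximation is fine enough relative to $\Alg$'s sample complexity.

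Concretely, I would first dispose of the sign of $\delreal$: flipping every label, $y_i \mapsto 1 \oplus y_i$, sends $\LPN_{n,\delreal}(\sk)$ to $\LPN_{n,-\delreal}(\sk)$, so after also considering the label-flipped stream one may assume that at least one of the two streams has bias $\delreal' \in [\delta, 1/2]$. Next, fix a grid $G \subseteq [\delta, 1/2)$ of spacing $s := c_0 / S(n,\delta,1/2)$ for a small constant $c_0$, so that $|G| = O(S(n,\delta,1/2))$ and there is a grid point $\hat\delta \in G$ with $|\hat\delta - \delreal'| \le s$. Feed $\Alg$ the parameter $\hat\delta$, confidence $1/2$, and the first $S(n,\hat\delta,1/2) \le S(n,\delta,1/2)$ samples of the correctly-signed stream (using that $S$ is non-increasing in $\delta$). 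Since one sample of bias $\hat\delta$ and one of bias $\delreal'$ are within total variation $|\hat\delta - \delreal'| \le s$, the whole input is within total variation $S(n,\hat\delta,1/2)\cdot s \le c_0$ of the ideal $\LPN_{n,\hat\delta}(\sk)^{\otimes S(n,\hat\delta,1/2)}$; hence on the \emph{real} samples $\Alg$ still outputs $\sk$ with probability at least $1/2 - c_0 \ge 2/5$. I would run $\Alg$ this way for every $\hat\delta \in G$ and both signs, over $K = O(\log(2/\eta))$ independent batches of $S(n,\delta,1/2)$ samples each (reusing each batch across all grid points and signs), collecting all outputs into a candidate set $L$ with $|L| = O(S(n,\delta,1/2)\log(2/\eta))$. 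The ``correct'' run — right sign, grid point $\hat\delta$ — fails on a single batch with probability at most $3/5$, hence fails on all $K$ batches with probability at most $(3/5)^K \le \eta/2$, so $\sk \in L$ with probability at least $1-\eta/2$. This accounts for $K\cdot S(n,\delta,1/2) = O(S(n,\delta,1/2)\log(2/\eta))$ samples — the first term of $\til S$ — and $O(|G|\cdot K) = O(S(n,\delta,1/2)\log(2/\eta))$ invocations of $\Alg$, each costing $O(T(n,\delta,1/2))$ time (as $T$ is non-increasing in $\delta$) plus $O(n)$-per-sample preprocessing and the overhead of simulating $\Alg$, which together give the $O(dn + T(n,\delta,1/2))$ factor of $\til T$ with $d := \dc(\Alg)$.

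It then remains to identify $\sk$ within $L$. I would draw a fresh verification batch $(u'_i, y'_i)_{i=1}^{m_v}$ of the original samples (bias $\delreal$), and for each $\sk' \in L$ compute the empirical disagreement $\widehat{\mathrm{err}}(\sk') := \tfrac1{m_v}\sum_{i=1}^{m_v} \One\!\left[\, y'_i \ne \langle u'_i, \sk'\rangle \,\right]$. For the true secret, $y'_i \oplus \langle u'_i,\sk\rangle = e'_i \sim \Ber(1/2-\delreal)$, so $\widehat{\mathrm{err}}(\sk)$ concentrates near $1/2-\delreal$ and $|\widehat{\mathrm{err}}(\sk) - 1/2|$ is close to $|\delreal| \ge \delta$; for any $\sk' \ne \sk$, $\langle u'_i, \sk\oplus\sk'\rangle$ is an unbiased bit independent of $e'_i$, so $\widehat{\mathrm{err}}(\sk')$ concentrates near $1/2$. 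By Hoeffding's inequality and a union bound over the (at most $|L|$) candidates, taking $m_v = \Theta(\delta^{-2}\log(|L|/\eta))$ — which, since $|L| = O(S(n,\delta,1/2)\log(2/\eta))$, has the form $9\delta^{-2}\log\!\big(32\, S(n,\delta,1/2)\log(2/\eta)/\eta\big)$ for appropriate constants — guarantees that with probability at least $1-\eta/2$ every $\sk' \ne \sk$ in $L$ has $|\widehat{\mathrm{err}}(\sk') - 1/2| < \delta/2$ while $|\widehat{\mathrm{err}}(\sk) - 1/2| > \delta/2$. Outputting $\argmax_{\sk' \in L} |\widehat{\mathrm{err}}(\sk') - 1/2|$ then returns $\sk$, and a union bound over the two failure events gives overall success probability at least $1-\eta$. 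This verification step adds $m_v$ samples (the second term of $\til S$) and $O(|L|\, m_v\, n)$ time, which is subsumed in the stated bound on $\til T$.

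The main obstacle is exactly the point flagged in the first paragraph: because $\Alg$'s correctness is guaranteed only when the sample bias \emph{exactly} equals the supplied noise parameter, one cannot simply ``correct'' the noise and must instead control the total-variation gap introduced by a mismatched parameter. This is what forces the noise grid to have spacing $\Theta(1/S(n,\delta,1/2))$, hence $\Theta(S(n,\delta,1/2))$ points, which is the source of the multiplicative $S(n,\delta,1/2)$ blow-up in $\til T$ and $\til S$ over a single run of $\Alg$. The remaining ingredients — label flipping to handle the sign, repetition to boost confidence, and held-out cross-validation to pick out $\sk$ — are routine, as is the bookkeeping to match the exact constants in the statement.
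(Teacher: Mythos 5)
Your proposal is correct and follows essentially the same route as the paper's proof: a grid of guessed noise levels with spacing $\Theta(1/S(n,\delta,1/2))$, a total-variation argument showing $\Alg$ still succeeds with constant probability at the nearest grid point, label flipping to handle the sign of $\delreal$, repetition over $O(\log(1/\eta))$ disjoint batches, and final hypothesis selection by empirical bias on a fresh holdout (the paper's $\Select$ subroutine). The only cosmetic discrepancy is your reading of $d$ in the time bound as $\dc(\Alg)$, whereas the paper's proof sets $d := \til S(n,\delta,\eta)$; this does not affect the argument.
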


\begin{proof}
The algorithm $\widetilde\Alg$ proceeds as follows. For notational simplicity, let $m := S(n,\delta,1/2)$, let $d := \til S(n,\delta,\eta)$ as defined in the lemma statement, and let $d' := 4m\log(2/\eta)$. Let $\MR \subset [\delta, 1/2]$ be the set of $2m$ real numbers evenly spaced from $\delta$ to $1/2$ inclusive. For each $0 \leq j < 4\log(2/\eta)$ and $r \in \MR$, compute
\[\wh\sk^{j,r} := \Alg((u_i,y_i)_{i=1+jm}^{(1+j)m}, r, 1/2)\]
\[\wh\sk^{j,-r} := \Alg((u_i,1-y_i)_{i=1+jm}^{(1+j)m}, r, 1/2).\]
Finally, compute and return
\[\wh\sk \gets \Select((u_i,y_i)_{i=d'+1}^d, \{\wh \sk^{j,r}: 0 \leq j < 4 \log(2/\eta) \land r \in \MR\cup -\MR\}),\]
where $\Select$ is defined in \cref{lemma:select-alg} below. 

\paragraph{Analysis.} Since $|\delreal| \in [\delta,1/2]$, there is some $r^\st \in \MR$ such that $|r^\st - |\delreal|| \leq 1/(4m)$. If $\delreal>0$, then $|r^\st-\delreal| \leq 1/(4m)$, so the distribution of each sample $(u_i,y_i)$ has total variation distance at most $1/(4m)$ from $\LPN_{n,r^\st}(\sk)$; otherwise, the distribution of $(u_i,1-y_i)$ has total variation distance at most $1/(4m)$ from $\LPN_{n,r^\st}(\sk)$. Consider the case $\delreal>0$. It follows that for each $0 \leq j < \log(1/\eta)$, the samples $(u_i,y_i)_{i=1+jm}^{(1+j)m}$ have distribution within total variation distance $1/4$ of $\LPN_{n,r^\st}(\sk)^{\otimes m}$. Since $r^\st \geq \delta$ and we have assumed that $S$ is monotonic non-increasing in $\delta$, we have $m = S(n,\delta,1/2) \geq S(n, r^\st, 1/2)$, so it holds with probability at least $1/4$ that $\wh\sk^{j,r^\st} = \sk$. By independence of the samples $(u_i,y_i)_{i=1+jm}^{(1+j)m}$ as $j$ varies, we get
\[\Pr[\exists j: \wh\sk^{j,r^\st} = \sk] \geq 1 - (3/4)^{4\log(2/\eta)} \geq 1 - \frac{\eta}{2}.\]
In the case $\delreal < 0$, we similarly get
\[\Pr[\exists j: \wh\sk^{j, -r^\st} = \sk] \geq 1 - \frac{\eta}{2}.\]
Let $\ME$ be the event $\wh\sk^{j,r} = \sk$ for some $0 \leq j < 4\log(4/\eta)$ and $r \in \MR \cup -\MR$. Condition on $\ME$, which by the above argument occurs with probability at least $1-\eta/2$. We now apply \cref{lemma:select-alg} with noise level $\delreal$, sample size $d-d'$, and hypothesis set size $8m\log(2/\eta)$. Since $d-d' \geq 9\delreal^{-2} \log \frac{16m\log(2/\eta)}{\eta/2}$, it follows that $\wh \sk = \sk$ with probability at least $1-\eta/2$. The union bound completes the correctness analysis.

\paragraph{Time complexity.} Immediate from the algorithm description, the assumption that $T$ is monotonic non-increasing in $\delta$, and the time complexity guarantee of $\Select$ (\cref{lemma:select-alg}).
\end{proof}

\begin{lemma}[Hypothesis selection for LPN]\label{lemma:select-alg}
There is an algorithm $\Select$ with the following property. Let $n,d \in \NN$, $\delta \in [-1/2,1/2]\setminus\{0\}$, $\eta \in (0,1)$, $\sk \in \BF_2^n$, and $\MH \subseteq \BF_2^n$. If $d \geq 9\delta^{-2} \log \frac{2|\MH|}{\eta}$ and $\sk \in \MH$, then
\[\Pr[\Select((u_i,y_i)_{i=1}^d, \MH) = \sk] \geq 1-\eta\]
where $(u_i,y_i)_{i=1}^d$ are independent draws from $\LPN_{n,\delta}(\sk)$. Moreover, the time complexity is $O((d+|\MH|)n)$.
\end{lemma}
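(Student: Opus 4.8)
The plan is a simple correlation tournament. On input $((u_i,y_i)_{i=1}^d,\MH)$, I would have $\Select$ compute, for every candidate $h\in\MH$, the empirical correlation
\[
\wh\rho(h)\;:=\;\frac{1}{d}\sum_{i=1}^{d}(-1)^{y_i+\langle u_i,h\rangle}\;\in\;[-1,1],
\]
and then output $\argmax_{h\in\MH}|\wh\rho(h)|$, breaking ties arbitrarily. One point worth flagging up front: $\Select$ is not handed the noise level $\delta$ (nor its sign), so it cannot use a signed score; taking the absolute value is what makes the rule sign-agnostic, and I will argue this is harmless.

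For correctness, note that for a sample $(u,y)\sim\LPN_{n,\delta}(\sk)$ we can write $y+\langle u,h\rangle=\langle u,\sk+h\rangle+e$ over $\BF_2$, where $e\sim\Ber(1/2-\delta)$ is independent of $u$. Hence each summand $(-1)^{y_i+\langle u_i,h\rangle}$ is $\{\pm1\}$-valued, the summands are i.i.d.\ over $i$, and their common mean factors as $\E[(-1)^{\langle u,\sk+h\rangle}]\cdot\E[(-1)^{e}]$. When $h=\sk$ this equals $\E[(-1)^e]=2\delta$, while for $h\neq\sk$ the first factor vanishes because $\sk+h\neq 0$ forces $\langle u,\sk+h\rangle$ to be uniform on $\BF_2$; so $\E\wh\rho(\sk)=2\delta$ and $\E\wh\rho(h)=0$ for all $h\neq\sk$. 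I would then invoke Hoeffding's inequality for the average of $d$ i.i.d.\ $[-1,1]$-valued variables: for each fixed $h$,
\[
\Pr\!\big[\,|\wh\rho(h)-\E\wh\rho(h)|\ge|\delta|\,\big]\;\le\;2\exp(-d\delta^2/2)\;\le\;\eta/|\MH|,
\]
where the last inequality uses the hypothesis $d\ge 9\delta^{-2}\log(2|\MH|/\eta)$ (in fact $d\ge 2\delta^{-2}\log(2|\MH|/\eta)$ already suffices, leaving comfortable slack). A union bound over $h\in\MH$, together with $\sk\in\MH$, gives that with probability at least $1-\eta$ every estimate is within $|\delta|$ of its mean; on that event $|\wh\rho(\sk)|>|2\delta|-|\delta|=|\delta|>|\wh\rho(h)|$ for every $h\neq\sk$, so $\sk$ is the unique maximizer of $|\wh\rho(\cdot)|$ and $\Select$ outputs $\sk$.

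For the running time, the work is dominated by evaluating the $d$ inner products $\langle u_i,h\rangle\in\BF_2$ for each $h\in\MH$ and tallying the signed counts; a direct implementation gives the stated $O((d+|\MH|)n)$ bound (and in any case a running time of $O(d\,|\MH|\,n)$, which is already within the budget needed by \cref{lemma:lpn-unknown-delta}). I do not expect a genuine obstacle here: the entire content is the Hoeffding-plus-union-bound estimate above. The only things requiring a little care are (i) that $\Select$ must work without knowing $\delta$, which is why the selection rule uses $|\wh\rho|$ rather than a signed correlation, and (ii) the matching of scales — the \emph{signal} $|\E\wh\rho(\sk)|=2\delta$ must clear twice the concentration radius $|\delta|$ while the sample size stays at $\Theta(\delta^{-2}\log(|\MH|/\eta))$ — which is precisely why the radius $|\delta|$ (rather than, say, $\delta/2$) is the right choice.
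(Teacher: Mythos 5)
Your proposal is correct and takes essentially the same route as the paper: your statistic $\wh\rho(h)=1-2\hat{E}^h$ is just an affine reparametrization of the paper's empirical error rate $\hat{E}^h=\frac1d\sum_i\mathbbm{1}[y_i\neq\langle u_i,h\rangle]$, and both proofs select $\argmax_h$ of the absolute deviation from the unbiased value and conclude via Hoeffding plus a union bound that the true key's bias (your $2\delta$, the paper's $\delta$) beats the concentration radius for all other hypotheses, with the same sign-agnostic handling of unknown $\delta$. The only cosmetic point is that with radius exactly $|\delta|$ your strict inequalities could degenerate to ties at the boundary; shrinking the radius to, say, $2|\delta|/3$ (which your slack in $d$ easily affords, and which is effectively what the paper does) removes this, and note also that the naive implementation runs in $O(d|\MH|n)$ rather than the stated $O((d+|\MH|)n)$ — a looseness already present in the paper's own statement and harmless for its application.
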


\begin{proof}
For each $t \in \MH$, the algorithm computes 
\[\hat{E}^t := \frac{1}{d}\sum_{i=1}^d \mathbbm{1}[y_i \neq \langle u_i, t\rangle].\]
Finally, the algorithm returns any $\wh \sk \in \argmax_{t \in \MH} |\hat{E}^t - 1/2|$.

By Hoeffding's inequality, for each $t \in \MH$ we have with probability at least $1 - \eta/|\MH|$ that  
\[\left|\hat E^t - \Pr_{(u,y)\sim\LPN_{n,\delta}}[y \neq \langle u,t\rangle]\right| \leq \frac{1}{\sqrt{d}} \log \frac{2|\MH|}{\eta} \leq \frac{\delta}{3}\]
where the last inequality is by choice of $d$. Suppose that all of these events hold simultaneously, which by the union bound holds with probability at least $1-\eta$. Then for all $t \in \MH$ with $t \neq \sk$, we have
$|\hat{E}^t - 1/2| \leq \frac{\delta}{3}$. On the other hand, recalling that $\sk \in \MH$, we have $|\hat{E}^\sk - 1/2| \geq 2\delta/3$. It follows that $\wh\sk = \sk$.
\end{proof}

\begin{lemma}
  \label{lem:bernoulli-convolve}
Let $\delta_1, \delta_2 \in [-1/2,1/2]$. If $Z_1 \sim \Ber(1/2 -\delta_1)$ and $Z_2 \sim \Ber(1/2 - \delta_2)$ are independent, and $Z_3 := Z_1 + Z_2 \mod{2}$, then $Z_3 \sim \Ber(1/2 - 2\delta_1\delta_2)$.
\end{lemma}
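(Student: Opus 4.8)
This is a direct computation, and the cleanest route is via the Fourier character $(-1)^{(\cdot)}$ on $\BF_2$. First I would record that a $\Ber(1/2-\delta)$ random variable $Z$ has "bias" $\E[(-1)^Z] = \Pr[Z=0] - \Pr[Z=1] = (1/2+\delta) - (1/2-\delta) = 2\delta$. Applying this to $Z_1$ and $Z_2$ gives $\E[(-1)^{Z_1}] = 2\delta_1$ and $\E[(-1)^{Z_2}] = 2\delta_2$.

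Next, since $Z_3 = Z_1 + Z_2 \bmod 2$, we have $(-1)^{Z_3} = (-1)^{Z_1}(-1)^{Z_2}$ as an identity of $\{\pm 1\}$-valued random variables. By independence of $Z_1$ and $Z_2$, it follows that $\E[(-1)^{Z_3}] = \E[(-1)^{Z_1}]\,\E[(-1)^{Z_2}] = 4\delta_1\delta_2$. Translating back, $\Pr[Z_3 = 0] - \Pr[Z_3 = 1] = 4\delta_1\delta_2$, and combining with $\Pr[Z_3=0] + \Pr[Z_3=1] = 1$ yields $\Pr[Z_3 = 1] = 1/2 - 2\delta_1\delta_2$, i.e. $Z_3 \sim \Ber(1/2 - 2\delta_1\delta_2)$, as claimed.

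As an alternative (or sanity check) I would instead expand directly: $\Pr[Z_3=1] = \Pr[Z_1=1,Z_2=0] + \Pr[Z_1=0,Z_2=1] = (1/2-\delta_1)(1/2+\delta_2) + (1/2+\delta_1)(1/2-\delta_2)$, and the cross terms $\pm\delta_1/2 \mp \delta_2/2$ cancel, leaving $1/2 - 2\delta_1\delta_2$. There is no real obstacle here; the only thing worth a half-sentence of care is noting that $2\delta_1\delta_2 \in [-1/2,1/2]$ so that $\Ber(1/2 - 2\delta_1\delta_2)$ is a well-defined distribution, which is immediate from $|\delta_1|,|\delta_2| \le 1/2$.
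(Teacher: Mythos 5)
Your proof is correct, and your ``alternative'' direct expansion is precisely the paper's own proof: the paper simply computes $\Pr[Z_3=1] = (1/2-\delta_1)(1/2+\delta_2)+(1/2+\delta_1)(1/2-\delta_2) = 1/2 - 2\delta_1\delta_2$. Your primary route via the character $(-1)^{(\cdot)}$ is just a repackaging of the same two-line computation (multiplicativity of the bias $\E[(-1)^Z]$ under XOR of independent bits), so there is nothing substantive to compare; both are fine, and the well-definedness remark is indeed immediate.
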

\begin{proof}
We can check that
\[\Pr[Z_3=1] = \left(\frac{1}{2}-
\delta_1\right)\left(\frac{1}{2}+\delta_2\right)+\left(\frac{1}{2}+\delta_1\right)\left(\frac{1}{2}- \delta_2\right) = \frac{1}{2}-2\delta_1\delta_2\]
as needed.
\end{proof}

\subsection{Pseudorandom permutations}




We formally introduce pseudorandom permutations and the hardness assumption on which \cref{thm:no-reduction-intro} is based.

\begin{definition}[Pseudorandom Permutation (i.e., block cipher), see e.g. \cite{luby1988construct}]\label{def:prp}
  For each $\ell \in \NN$, write $\MX_\ell := \{0,1\}^\ell$. Let $t,q : \BN \to \BN$ be functions. An ensemble of functions $F_\ell : \{0,1\}^\ell \times \MX_\ell \to \MX_\ell$ (indexed by $\ell \in \BN$) is a \emph{$(t,q)$-pseudorandom permutation (PRP)} if the following conditions hold:
\begin{enumerate}
\item \label{it:prp-indist}  Consider any $t(\ell)$-time probabilistic oracle Turing machine $\SA$. The algorithm $\SA$ is passed as input $\ell \in \BN$ and has oracle access to a function $J : \MX_\ell \to \MX_\ell$. At each step of its computation, it is allowed to choose a value $i \in \MX_\ell$ and make a query to $J(i)$, and at termination, it outputs a single bit. 
We require that for all such $\SA$ and all sufficiently large $\ell \in \NN$,
  \begin{align}
\left| \E_{\rho \sim \{0,1\}^\ell} \left[\SA^{F_\ell(\rho, \cdot)}(\ell)\right] - \E_{J \sim \Unif(\MX_\ell^{\MX_\ell})} \left[\SA^J(\ell)\right] \right| &\leq \frac{1}{q(\ell)}\label{eq:prp-distinguish}.
  \end{align}
\item \label{it:prp-invertible} For all $\ell \in \NN$ and $\rho \in \{0,1\}^\ell$, the function $F_\ell(\rho, \cdot)$ is a bijection. 
\item \label{it:prp-eff} There is a polynomial-time Turing machine that, on input $(\rho, x) \in \{0,1\}^\ell \times \MX_\ell$, returns $F_\ell(\rho, x)$. Moreover, there is a polynomial-time Turing machine that on input $(\rho, x) \in \{0,1\}^\ell \times \MX_\ell$, returns the (unique) $x' \in \MX_\ell$ for which $F_\ell(\rho, x') = x$. With slight abuse of notation, we denote this unique $x'$ by $x' := F_\ell^{-1}(\rho, x)$. \label{it:eff-decode}
\end{enumerate}
\end{definition}

Pseudorandom permutations are a fundamental cryptographic primitive, and can be constructed from pseudorandom functions via the Luby-Rackoff transformation \cite{luby1988construct}. Efficient invertibility is a consequence of the fact that the transformation is a composition of Feistel rounds. Moreover, the transformation gives at most $O(t(\ell)^2/2^\ell)$ additional advantage to any time-$t(\ell)$ adversary.

As stated below, we will need a pseudorandom permutation family that is secure against sub-exponential time distinguishers. Since a pseudorandom function family with sub-exponential security can be constructed from a one-way function family with sub-exponential security (see e.g. \cite[Definition 2.3]{kalai2017obfuscation} and discussion), and such families exist under concrete assumptions such as sub-exponential hardness of factoring (see e.g. \cite[Section 8.4]{katz2007introduction}), the below assumption is well-founded.

\begin{assumption}[PRPs with sub-exponential hardness]
  \label{asm:prf}
For some constant $c > 0$, a $(t,q)$-pseudorandom permutation exists with $t(\ell) = 2^{\ell^c}$ and $q(\ell) = 2^{\ell^c}$. 
\end{assumption}

We also remark that the sub-exponential growth of $t(\ell), q(\ell)$ in \cref{asm:prf} is not crucial for our application: if we were to make the weaker and more standard assumption that a $(t, q)$-PRP exists for functions $t,q$ growing faster than any polynomial, then our lower bound in \cref{thm:reduction-prp} would continue to hold with a weaker quantitative bound (namely, $\Tred(n), 1/\epred(n)$ would grow faster  than any polynomial in $H_n$).



\section{Detailed technical overview}\label{sec:overview}

To construct a family of block MDPs where reinforcement learning is harder than regression, two design choices need to be made: the family of tabular MDPs describing the latent structure, and the family of emission distributions. The computational difficulty of learning depends on both. In all of our constructions, the latent MDP will be fixed, and the source of hardness will be via the unknown emission distributions. We start with some basic observations about the needed latent structure, which will be useful for both \cref{thm:separation-intro} and \cref{thm:no-reduction-intro}. 

\paragraph{Exploration must be hard.} To prove a separation against reward-free RL, where exploration is itself the goal, this is obvious. However, exploration must be hard even to prove a separation against \--- or rule out a reduction from \--- reward-directed RL. For example, if the policy that plays uniformly random actions is exploratory for the latent tabular MDP, then a near-optimal policy in the corresponding block MDP can be found by Fitted $Q$-Iteration (FQI), a simple dynamic programming algorithm that can be implemented with $H$ regressions on a horizon-$H$ block MDP (\cref{sec:offline}). Thus, for reinforcement learning to be harder than regression in a block MDP, it must be that the corresponding latent MDP requires directed exploration. But even this is not sufficient for showing lower bounds.

\paragraph{Actions must be non-discriminative.} The canonical example of a tabular MDP that requires directed exploration is the \emph{combinatorial lock}. This MDP has two states $\{\mathsf{success},\mathsf{failure}\}$ and two actions $\{0,1\}$ at each of $H$ steps, and is parametrized by an unknown action sequence $a^\st \in \{0,1\}^H$. The transitions are designed so that the final state of a trajectory is $\mathsf{success}$ if and only if the agent exactly follows $a^\st$. There is a reward of $1$ for reaching the $\mathsf{success}$ state at step $H$, and all other rewards are $0$.

In reward-directed RL, the obvious way to use a regression oracle is to construct a dataset where each emission is labelled by the reward at some future step. In a block MDP where the latent structure is the combinatorial lock, directed exploration is necessary for finding a near-optimal policy. Moreover, it's highly unlikely to see non-zero rewards without performing exploration. So one might expect that access to a regression oracle does not help, since the rewards do not provide useful regression targets until the MDP has already been explored. However, there is another approach to construct labels for a regression problem (and it applies to reward-free RL as well): consider the mixture dataset where each sample is obtained by playing a uniformly random action $a$ at the first step and observing the subsequent emission $x$. Let the sample be $(x,a)$, i.e. use the action as a label. Since action $a^\st_1$ leads to latent state $\mathsf{success}$ whereas action $1-a^\st_1$ leads to $\mathsf{failure}$, regression on this dataset produces a predictor that can approximately decode any given emission (up to a global renaming of the states). This contrastive learning approach leads to an oracle-efficient reinforcement learning algorithm when the latent structure is a combinatorial lock, or more generally any MDP with deterministic dynamics (\cref{sec:deterministic}).

In both of our main results, the first key idea is to disable this algorithmic approach by designing a latent MDP satisfying the following property:
\begin{definition}\label{def:open-loop-indist}
An MDP $M$ with action space $\MA$ and horizon $H$ satisfies \emph{open-loop indistinguishability} if at every step $h \in [H]$, for all action sequences $\ba,\ba' \in \MA^H$, it holds that $d_h^{M,\ba} \equiv d_h^{M,\ba'}$. That is, every fixed action sequence induces the same state visitation distribution at that step.
\end{definition}

In an MDP with open-loop indistinguishability, exploration requires policies that actually depend on the state. There are a variety of MDPs that satisfy this property; for our purposes, we use a \emph{counter MDP}, which has an additive structure that will be crucial later. At each step $h$, the counter MDP has state space $\{0,\dots,h\} \times \BF_2$ and action space $\BF_2$. In a state $(k,b)$, $k$ denotes the number of correct ``guesses'' that the agent has made so far, and $b$ denotes a fresh bit sampled uniformly at random from $\BF_2$, which the agent is trying to guess with its next action. In the reward-directed formulation, there is a reward at the final step if all guesses were correct. Due to the underlying symmetries, any two action sequences indeed induce the same state visitation distributions, so intuitively a reinforcement learning algorithm on a block MDP with this latent structure cannot use the regression oracle to ``gain a toehold'' in learning the decoding function.

\paragraph{Outline of the section.} In \cref{sec:no-reduction-overview} we give an overview of the proof of \cref{thm:no-reduction-intro}, which combines the above basic observations with a simple cryptographic primitive. In \cref{sec:overview-dynamicity} we explain the core reasons why standard cryptographic primitives do not appear to be useful for proving \cref{thm:separation-intro}. In \cref{sec:warmup-overview} we give an overview of the separation between strong reward-free RL and realizable regression, expanding upon \cref{sec:intro-warmup-overview}. In \cref{sec:full-argument-overview} we complete the overview of \cref{thm:separation-intro}. In \cref{sec:discussion} we discuss technical directions for improvement.

\subsection{Ruling out an oracle reduction}\label{sec:no-reduction-overview}


Fix a decoding function class $\Phi$, and consider the family of $\Phi$-decodable block MDPs where the latent structure is the counter MDP. The non-existence of a computationally efficient \emph{reduction} from reinforcement learning to regression (namely, \cref{thm:no-reduction-intro})  follows fairly straightforwardly from the above observations, so long as $\Phi$ is sufficiently hard in a sense that we will explain below. There are two elements to the proof:
\begin{enumerate}
    \item First, in the absence of a regression oracle, reinforcement learning in $\Phi$-decodable block MDPs with the counter MDP latent structure is computationally hard. This uses hardness of $\Phi$ and the fact that the counter MDP requires non-trivial exploration. 
    \item Second, for any efficiently computable set of samples, the optimal predictor is (with high probability over the decoding function) nearly constant. Thus, the regression oracle calls made by any efficient reduction can be replaced by an efficient algorithm that simply returns an appropriate constant function. This crucially uses that the counter MDP satisfies open-loop indistinguishability.\footnote{In fact, any latent structure that (a) satisfies open-loop indistinguishability, and (b) requires directed exploration to see non-zero rewards with non-trivial probability will suffice. Hence, we actually use a slightly simpler latent MDP \--- see \cref{sec:oracle-lb-first}.}
    \end{enumerate}
    We now expand on the two points above. Let us first suppose that, for fixed horizon $H$, the emission space $\MX$ is given by $\MX = \{0,1\}^H$, that the decoding function $\phi^\st$ is a \emph{uniformly random function} from $\MX$ to $\MS$, and that the emission distribution of each state $s \in \MS$ is chosen to be uniform over $(\phi^\st)^{-1}(s)$, i.e. the set of emissions that decode to $s$. (Recall that for the counter MDP, we have $|\MS| = O(H)$, so that each state's emission distribution will be uniform over roughly $\Theta(2^H/H)$ contexts.) In order for the resulting block MDP to be $\Phi$-decodable with probability 1, we need to take $\Phi$ to be the class of all functions from $\MX$ to $\MS$; let us assume this is so for now.

    It is fairly straightforward to see that both bullet points above are satisfied with high probability over the draw of $\phi^\st$. In particular, the first point holds since if the algorithm has observed  $2^{o(H)}$ trajectories, each observed emission gives no information about the latent state. Therefore, the algorithm cannot find a policy that exhibits a different state visitation distribution than any fixed action sequence, meaning that any policy the algorithm chooses leads to a final latent state of the form $(H,0)$ or $(H,1)$ with probability roughly $O'(2^{-H})$ (\cref{lem:pq-approx}). Similar reasoning establishes the second point above: in particular, for any procedure the algorithm uses to choose the labels of the regression problem, even ignoring computational efficiency, the resulting labels cannot be correlated with the latent state (\cref{lem:constant-fn-predictor}). 

    The above argument does not quite suffice to prove \cref{thm:no-reduction-intro}, though: the class $\Phi$ has size $O(H)^{2^H}$, meaning that $2^{\log |\Phi|} \geq 2^{2^H}$. Thus \cref{thm:no-reduction-intro} would require a lower bound of $2^{2^H}$ on the time complexity, whereas the above argument only shows that we need samples (and thus time) at least $2^H$. Moreover, there is no optimal policy that can be efficiently evaluated, since to store the decoding function we need $2^{\Omega(H)}$ space. We can fix both of these issues by replacing the choice of a random decoder $\phi^\st$ with  a \emph{pseudorandom permutation (PRP)} (\cref{def:prp}). Now, the class of decoders $\Phi$ is indexed by the choice of a seed of the PRP, and so $\log |\Phi| = \poly(H)$. Moreover, given the seed, one could efficiently compute the value of the PRP at any input, which implies that the optimal policy can be efficiently evaluated. To show that no reduction succeeds on the resulting modified block MDP, we use the argument discussed above \--- that no reduction can succeed when $\phi^\st$ is a \emph{uniformly random function} \--- to show that a successful reduction would distinguish between a PRP and a uniformly random function, thus compromising the security of the PRP in violation of \cref{asm:prf}.


\subsection{Separating RL and regression: the challenge of dynamicity}\label{sec:overview-dynamicity}

Unfortunately, the above argument does not give a construction where reinforcement learning is computationally \emph{harder} than regression (except relative to a regression oracle), because there is no non-trivial regression algorithm for that choice of $\Phi$. In fact, this is an inherent consequence of the proof structure: to compromise security of the PRP, the distinguisher simulates the hypothesized oracle-efficient RL algorithm. Hence, it must generate trajectories. It does so by generating trajectories in the known latent MDP and then using the random function oracle to generate an independent emission from each latent state. In particular, this uses the following key property $(\dagger)$:
\begin{quotation}
\begin{center}
$(\dagger)$: for any given latent state, the emission distribution can be efficiently sampled.
\end{center}
\end{quotation}
But now fix any two latent states, and for the purpose of exposition let us label them by $0$ and $1$. One can construct a regression dataset where the covariates are emissions drawn uniformly from these two latent states, and the label of an emission is its latent state. If there were a non-trivial regression algorithm, it would output a predictor that clusters the emissions according to latent state (i.e., either $0$ or $1$). Moreover, this generalizes to more than two latent states. But clustering the emissions by latent state reduces the problem of RL in the block MDP to RL in the latent MDP (up to renamings), which is tractable. Thus, any family where $(\dagger)$ holds cannot exhibit a computational separation between RL and regression. 

\paragraph{Algorithms versus hardness.} As mentioned in \cref{sec:intro-full-overview}, the challenge with proving \cref{thm:separation-intro} is that we need to construct a family of block MDPs $\MM$ satisfying two opposing goals. First, learning from trajectories should be hard, even though the algorithm is able to adaptively shape the trajectory distribution through its choice of policy in each episode. Second, learning from emissions labelled by their latent states should be easy. The tension between these goals makes it unclear how to employ standard cryptographic primitives, at least in a black-box manner. If we design an emission distribution based on an encryption scheme, the first goal may be immediate from the scheme's security guarantees, but the second goal would seem impossible due to the lack of non-trivial algorithms. With powerful primitives such as public-key encryption or fully homomorphic encryption, this problem only becomes worse: the issue is that $(\dagger)$ then holds, yielding a concrete obstruction to proving a separation.\footnote{In fact, using fully homomorphic encryption to generate trajectories from a block MDP has a second issue: the correctness guarantee of the homomorphic manipulations is insufficient. In cryptographic applications, all that matters is that after a homomorphic manipulation, the emission has the correct decoding. But for our purposes we need it to also have the correct distribution (which in FHE constructions will depend on the sequence of past manipulations, not just the latent state \--- see e.g. \cite{brakerski2014efficient}).}


For these reasons, to prove a computational separation we must open up the box and work directly with a hard learning problem. Of course, this brings us back to the challenge that reinforcement learning is a dynamic problem, whereas most hardness assumptions in computational learning theory are about static learning problems. Concretely, proving \cref{thm:separation-intro} under some hardness assumption requires identifying computational problems $\MP$ and $\MQ$, where $\MP$ is conjectured to be strictly harder than $\MQ$, and a class $\MM$ of block MDPs, where we can prove the following implications:

\begin{itemize}
    \item $\MP$ is reducible to reward-free RL in $\MM$, and
    \item $\MM$-realizable regression is reducible to $\MQ$.
\end{itemize}
Say that $\MP$ is the problem of learning a
parameter $\theta$ from independent samples from some parametric distribution $\MD_\theta$. Then, on the one hand, a reduction from $\MP$ to reinforcement learning must be able to take a sample (or several samples) from $\MD_\theta$ and use them to simulate an episode of interactive access with some MDP $M^\theta \in \MM$. On the other hand, it must not be able to simulate emissions from arbitrary latent states. Thus, simulating a trajectory in the block MDP cannot be as simple as simulating a trajectory in the latent MDP and using a fresh sample from $\MD_\theta$ to draw an emission for each latent state. Given an observation history and a new action, it must be possible to generate a new emission from the correct distribution, independent of the previous emissions (conditioned on their latent states), all without knowing the latent states.

\paragraph{Solution: additive homomorphicity.} Fully homomorphic encryption is a tempting solution to the above problem of simulated trajectories. As discussed above, it is too strong, but \emph{partial} homomorphicity turns out to be just right. In particular, we will define $\MP$ using the Learning Parity with Noise (LPN) distribution (see \cref{def:lpn}). On the one hand, this distribution has additive structure over $\BF_2$ that will prove crucial to simulating trajectories. On the other hand, while LPN is believed to be very hard, non-trivial algorithmic improvements over brute-force search are possible \cite{blum2003noise,lyubashevsky2005parity},and this is crucial to designing a non-trivial regression algorithm.

\subsection{A warm-up separation: strong reward-free RL vs regression}\label{sec:warmup-overview}

We now sketch the proof of a simpler version of \cref{thm:separation-intro}, expanding on the brief overview given in \cref{sec:intro-warmup-overview}. We explain how we can use a variant of the LPN hardness assumption to separate regression from a \emph{strong} version of reward-free RL: specifically, the problem of finding a $(1,1/4)$-policy cover (see \cref{def:pc}). For this problem, unlike standard reward-free RL, one can hope for a separation even when the horizon $H$ is a constant, so we do not need to use the counter MDP as the latent structure. Instead, we take $H = 2$ and define the latent MDP to have state space $\MS = \BF_2$ and action space $\MA = \BF_2$ at each step. The initial state distribution is $\Unif(\BF_2)$, and the dynamics are defined as
\[\BP(s_2|s_1,a) := \mathbbm{1}[s_2 = s_1 + a].\]
Note that this MDP indeed satisfies open-loop indistinguishability, since for each fixed action, the visitation distribution at each step is $\Unif(\BF_2)$. Next, for a noise level $\delta \in (0,1/2)$ and a secret parameter $\sk \in \BF_2^n$, we define the emission distribution $\MD_\sk(\cdot|s)$ at state $s \in \BF_2$ to be the distribution of the random variable
\begin{align}(u, \langle u, \sk\rangle + s + e, \Enc_{\sk}(s))\label{eq:decsk-informal}
\end{align}
where $(\Enc_{\sk},\Dec_{\sk})$ is a private-key encryption scheme that we will specify later, and $u \sim \Unif(\BF_2^n)$ and $e \sim \Ber(1/2 - \delta)$ are independent. Above, the addition is over $\BF_2$ (as will be the case throughout the remainder of this section, unless specified otherwise). Up to the choice of encryption scheme, this specifies a family of block MDPs $\MM = \{M^{\sk}: \sk \in \BF_2^n\}$ parametrized by $\sk$, where the decoding function for an emission simply applies $\Dec_{\sk}$ to the third piece of the emission. The intuition behind this construction is that $\epsilon$-accurate $\MM$-realizable regression is roughly as easy as solving LPN with noise level $1/2 - 2\epsilon \delta$, whereas strong reward-free RL in $\MM$ is as hard as solving LPN with noise level $1/2 - 2\delta^2$. Taking $\delta \ll \epsilon$, we achieve a computational separation under current beliefs about hardness of LPN. We now sketch both implications, referring to $1/2-2\epsilon\delta$ as ``low-noise'' and $1/2-2\delta^2$ as ``high-noise''.

\paragraph{Reducing regression to low-noise LPN.} From \cref{def:regression-algorithm}, any realizable regression dataset $(x^i,y^i)_{i=1}^m$ has an associated ground truth label function $f: \MS \to [0,1]$, where $f(s) := \EE[y^i|\phi^\st(x^i) = s]$ and $\phi^\st$ is the true decoding function. For regression when there are only two states (as above), there are two cases: 

\begin{enumerate}
\item On the one hand, the constant function $x \mapsto \EE[y]$ may already be an $\epsilon$-accurate predictor, in which case the constant function that outputs the sample average $x\mapsto \frac{1}{m}\sum_{i=1}^m y^i$ is $O(\epsilon)$-accurate with high probability, for sufficiently large $m$.

\item On the other hand, if the constant function is not $\epsilon$-accurate, then the label $y^i$ necessarily has correlation at least $\epsilon$ with the underlying state $\phi^\st(x^i)$. Thus, writing each emission $x^i$ as $(u^i, \langle u^i, \sk\rangle + s^i + e^i, \Enc_{\sk}(s^i))$, we can construct a dataset
\begin{align}
  \label{eq:intro-reg-lpn-dataset}(u^i, \langle u^i, \sk\rangle + s^i + e^i + y^i)_{i=1}^m.
\end{align}
Since $y^i$ is $\epsilon$-correlated with $\rho(x^i) = s^i$, the Bernoulli random variable $s^i + y^i$ is biased at least $\epsilon$ away from uniform. By the composition law for independent Bernoulli random variables (\cref{lem:bernoulli-convolve}) and the fact that $e^i$ has bias $\delta$, the bias of $s^i + e^i + y^i$ is at least $2\epsilon\delta$, so we can apply the \emph{low-noise} LPN algorithm to recover $\sk$. Using $\sk$ and $\Dec_{\sk}$, we can then estimate the true label function $f$ and define a predictor.
\end{enumerate}
Note that we may not be able to explicitly check which case holds. However, we can simply compute both predictors and choose the predictor with lower empirical error on a validation set.

\paragraph{Reducing high-noise LPN to strong reward-free RL + low-noise LPN.} For this implication, we crucially need the emissions to be additively homomorphic in the following sense: given an emission of state $s$ and a bit $b \in \BF_2$, we can efficiently construct an emission of state $s+b$. Note that the first part of the emission $(u, \langle u,\sk\rangle + s + e)$ has this property; for now we will assume that $\Enc_{\sk}$ has this property as well.

To begin, we claim that a $(1,1/4)$-policy cover enables recovering $\sk$. The reason is that it contains a policy with state visitation distribution non-trivially distinct from that of the constant policy $\pi(x) = 0$. For any two policies $\pi^0, \pi^1$ with non-trivially distinct state visitation distributions, one can generate a set of emissions $(x^i)_{i=1}^m$ from both policies, and label each emission $x^i$ by the index (in $\{0,1\}$) of the policy from which it was generated. This produces a dataset where the label correlates with the latent state. In particular, if we denote the label of emission $x^i$ by $y^i$, then we can construct a dataset of the form in \cref{eq:intro-reg-lpn-dataset}, and can use the low-noise LPN algorithm to recover $\sk$. 

Thus, it remains to argue that interactive access to $M^{\sk}$ can be efficiently simulated using samples from high-noise LPN. The first key idea is that additive homomorphicity, together with the additive structure of the latent MDP, lets us simplify this challenge of dynamic simulation to simulation of static but correlated samples. In particular, given two independent emissions from the same (unknown) state $s$, we can simulate a trajectory by passing the first emission to the RL algorithm, receiving an action $a$, and homomorphically adding $a$ to the second emission:

\[\begin{tabular}{l} $(u^1, \langle u^1, \sk\rangle + s + e^1, \Enc_{\sk}(s))$ \\  $(u^2, \langle u^2, \sk\rangle + s + e^2, \Enc_{\sk}(s))$ \end{tabular} \implies \begin{tabular}{l} $(u^1, \langle u^1, \sk\rangle + s + e^1, \Enc_{\sk}(s))$ \\  $(u^2, \langle u^2, \sk\rangle + s + a + e^2, \Enc_{\sk}(s+a))$ \end{tabular}.\]
Thus, we only need to show that we can use high-noise LPN samples to construct two independent emissions from the same state $s$, where $s \sim \Unif(\BF_2)$ (recall that $\Unif(\BF_2)$ is the initial state distribution). Let us temporarily ignore the term $\Enc_{\sk}(s)$; we will return to it once we define it properly. Then it's obvious that we could construct a pair of samples $x^1 = (u^1, \langle u^1,\sk\rangle + s + e^1)$ and $x^2 = (u^2, \langle u^2,\sk\rangle + s + e^2)$ by drawing two independent samples from $\LPN_{n,\delta}(\sk)$, sampling $s \sim \Unif(\BF_2)$, and adding $s$ to both. But we only have access to high-noise LPN samples \--- i.e. samples from $\LPN_{n,2\delta^2}(\sk)$, not samples from $\LPN_{n,\delta}(\sk)$. 

Nonetheless, a reduction is possible. 
Given a sample $(u, b)$ from $\LPN_{n,2\delta^2}(\sk)$, it can be checked that the pair
\[\begin{tabular}{c} $(u',b')$ \\  $(u' + u, b' + b)$ \end{tabular}\]
where $u' \sim \Unif(\BF_2^n)$ and $b' \sim \Unif(\BF_2)$ are independent, has exactly the same distribution as $(x^1, x^2)$.

\begin{remark}
Another piece of intuition for why it should be possible to construct $(x^1,x^2)$ from a high-noise LPN sample is the following: $s+e^1$ has bias $0$, and even after conditioning on any realization of $s+e^1$, the second noise term $s+e^2$ has bias not $\delta$ but rather $2\delta^2$. That is, the joint noise distribution $(s+e^1, s+e^2)$ is an $O(\delta^2)$-Santha-Vazirani source (\cref{def:sv-source}). This perspective will be valuable later when we discuss the proof of the separation between regression and standard reward-free RL.
\end{remark}

\paragraph{Implementing the encryption scheme.} It remains to specify the encryption scheme, which is necessary because otherwise the emissions would not satisfy the unique decodability property of block MDPs. In order for the reduction from high-noise LPN to strong reward-free RL to still go through, the scheme must satisfy additive homomorphicity as defined above, and it must be possible to generate two independent copies of $\Enc_{\sk}(s)$ jointly with $(u^1,\langle u^1,\sk\rangle+s+e^1)$ and $(u^2,\langle u^2,\sk\rangle+s+e^2)$, using high-noise LPN samples. We accomplish these goals by using an LPN-based private-key encryption scheme \cite{gilbert2008encrypt}, which for binary messages is very simple:
\begin{align*}
\Enc_{\sk}(s) &:= \begin{bmatrix} (u'_1, \langle u'_1, \sk\rangle + s + e'_1) \\ \vdots \\ (u'_N, \langle u'_N, \sk\rangle + s + e'_N)\end{bmatrix} \\ 
\Dec_{\sk}(Z) &:= \MAJ\{y'_i(Z) - \langle u'_i(Z), \sk\rangle: i \in [N]\}
\end{align*}
where $u'_1,\dots,u'_N \sim \Unif(\BF_2^n)$ and $e'_1,\dots,e'_N \sim \Ber(1/2 - 2\delta^2)$ are independent, and for an encryption $Z$, $(u'_i(Z),y'_i(Z))$ refers to the $i$-th row of $Z$. By a standard Chernoff bound, so long as $N \gg \delta^{-2} n$, the decoding error is exponentially small.\footnote{Technically, the presence of error means that $M^\sk$ is no longer a block MDP, but its trajectories are exponentially close to a block MDP for any policy, and the deviation can be handled. We defer dealing with this issue to the formal proof.}

As defined, it's clear that $\Enc_{\sk}$ satisfies the desired additive homomorphicity property. The final piece is to generalize the argument above (which ignored the encryptions) to show that we can indeed generate two independent emissions from the same state $s$, where $s \sim \Unif(\BF_2)$. Formally, using high-noise LPN samples, we need to generate a random variable of the form
\[R = \begin{bmatrix} (u_1, \langle u_1,\sk\rangle + s + e_1) \\ (u_2, \langle u_2,\sk\rangle + s + e_2) \\ (u_3, \langle u_3,\sk\rangle + s + e_3) \\ \vdots \\ (u_{2N+2}, \langle u_{2N+2}, \sk\rangle + s + e_{2N+2})\end{bmatrix}\]
where $u_1,\dots,u_{2N+2} \sim \Unif(\BF_2^n)$, $s \sim \Ber(1/2)$, $e_1,e_2 \sim \Ber(1/2-\delta)$, and $e_3,\dots,e_{2N+2} \sim \Ber(1/2-2\delta^2)$. Notice that without the first two rows, it would be trivial to generate the remaining rows using $2N$ independent samples from $\LPN_{n,2\delta^2}$, by simply drawing $s \sim \Unif(\BF_2)$ and adding it to all samples. Additionally, without the last $2N$ rows, we would know how to generate the first two rows, as argued above. The problem is that there are unknown correlations: conditioned on the first two rows, the last $2N$ rows can be written as $\Enc'_\sk(s)$ (where $\Enc'$ is the variant of $\Enc$ with $2N$ rows), but the conditional distribution of $s$ depends on $\sk$. The solution is the combination of two facts:
\begin{itemize}
    \item For any known $\alpha,\beta$, a sample $\Enc'_{\sk}(\langle \alpha,\sk\rangle + \beta)$ can be constructed from a sample $\Enc'_{\sk}(0)$ (which is just $2N$ samples from $\LPN_{n,2\delta^2}$). This fact has been previously employed to show that LPN-based encryption is secure against a limited form of \emph{key-dependent message} encryption: specifically, affine functions of $\sk$ (see Lemma 11 of \cite{applebaum2009fast}).

    \item Let $(u_1,y_1)$ and $(u_2,y_2)$ be a realization of the first two rows. Then we can randomly construct an affine function $F: \BF_2^n \to \BF_2$ so that for any fixed $\sk$, the random variable $F(\sk)$ is distributed exactly according to the conditional distribution of $s$ given $(u_1,y_1)$ and $(u_2,y_2)$. In particular, we can construct $F$ as \[F(\til\sk) := c_0 + c_1(y_1 - \langle u_1,\sk\rangle) + c_2(y_2 - \langle u_2,\sk\rangle)\] for a carefully chosen random vector $(c_0,c_1,c_2) \in \Delta(\BF_2^3)$. 
\end{itemize}
Thus, to generate $R$, we generate the first two rows $(u_1,y_1)$ and $(u_2,y_2)$, compute the function $\til \sk \mapsto F(\til\sk)$, and then generate the last $2N$ rows by drawing $2N$ additional samples from $\LPN_{n,2\delta^2}$ and using them to construct $\Enc'_{\sk}(F(\sk))$. The same ideas, albeit in greater generality, are used to prove \cref{lemma:batch-lpn}, which is crucial for the full proof of \cref{thm:separation-intro} as discussed below. This completes the proof sketch for reducing high-noise LPN to strong reward-free RL.

\subsection{The full argument: reward-free RL vs regression}\label{sec:full-argument-overview}

We now give an overview of the proof of \cref{thm:separation-intro}, which separates regression from a more standard notion of reward-free reinforcement learning: producing an $(\alpha,1/4)$-policy cover (\cref{def:pc}) with $\alpha \geq 1/\poly(H,|\MA|,|\MS|)$. Note that the construction described in \cref{sec:warmup-overview} cannot be used to prove such a separation because the block MDPs had constant horizon, and the policy that plays uniformly random actions always constitutes an $(\alpha,0)$-policy cover with $\alpha := 1/|\MA|^H$. Thus, we need a family of MDPs with longer horizon. This significantly complicates the argument, but many of the key insights from \cref{sec:warmup-overview} carry over. As before, we will reduce ``high-noise'' LPN to reinforcement learning in this family, and we will reduce regression in this family to ``low-noise'' LPN. This implies a separation under the assumption that high-noise LPN is sufficiently harder than low-noise LPN; in \cref{sec:overview-chaining} we will discuss how to obtain a separation under \cref{assm:fine-lpn}.

We define a family of block MDPs where the latent MDP is the horizon-$H$ counter MDP. To expand on the informal description given earlier, the counter MDP has latent state space $\{0,\dots,h-1\}\times\BF_2$ and action space $\BF_2$ at step $h$. The initial state distribution is $\Unif(\{(0,0),(0,1)\})$. The transitions at latent state $(k,b_h)$ and action $a_h \in \BF_2$ are defined as
\[\BP_h((k+\mathbbm{1}[b_h=a_h], 0) | (k,b_h), a_h) = \BP_h((k + \mathbbm{1}[b_h=a_h], 1) | (k,b_h), a_h) = \frac{1}{2}\]
and there are no rewards. It remains to define the emission distribution. A key building block for the emission from state $(k,b_h)$ will be the distribution $\MD_\sk(\cdot|b_h)$ from \cref{sec:warmup-overview}. Of course, an emission from this distribution would enable decoding of $b_h$ but not $k$, so additional elements are needed. The precise construction requires some care.

\paragraph{The emission distribution: a first attempt.} One of the main challenges in \cref{sec:warmup-overview} was constructing an emission distribution for each state so that the dynamic problem of simulating a trajectory could be reduced to a static problem. The key insight was that given a sample from the static distribution $\frac{1}{2}\sum_{s \in \BF_2} \MD_\sk(\cdot|s) \times \MD_\sk(\cdot|s)$ (i.e. two independent emissions of the first latent state), the given action can be incorporated into the second emission via additive homomorphicity. It's necessary to start with a pair of samples from $\MD_\sk(\cdot|s)$ (and not just one sample) since the emissions in the trajectory need to be independent conditioned on the latent states and action.

We generalize this idea to the counter MDP, starting with a plausible static distribution $\mu_\sk$ and deducing what the emission distributions should look like. In the horizon-$H$ counter MDP, a trajectory is determined by $H$ independent random bits $b_1,\dots,b_H \sim \Unif(\BF_2)$ and the agent's actions $a_1,\dots,a_H$: the state at step $h$ is then $(\sum_{i=1}^{h-1} \mathbbm{1}[b_i=a_i], b_h)$. Thus, for an emission at step $h$ to uniquely determine the latent state, it must incorporate information about all of $b_1,\dots,b_h$. This suggests that the static distribution $\mu_\sk$, conditioned on $b \sim \Unif(\BF_2^H)$, should consist of $H-h+1$ independent draws $Z_{hh},\dots,Z_{hH} \sim \MD_\sk(\cdot|b_h)$ for all $h \in [H]$:
\[\begin{array}{cccc}
Z_{11} & Z_{12} & \dots & Z_{1H} \\
& Z_{22} & \dots & Z_{2H} \\ 
& & \ddots & \vdots \\ 
& & & Z_{HH}
\end{array}\]
How do we simulate a trajectory using this data? For each $h \in [H]$, we want to use $Z_{1h},\dots,Z_{hh}$ and given actions $a_1,\dots,a_{h-1}$ to construct an emission that (a) uniquely determines the latent state $(\sum_{i=1}^{h-1} \mathbbm{1}[b_i=a_i], b_h)$, and (b) contains no other information about $b_1,\dots,b_h$. We accomplish this converting $Z_{ih}$ into a sample $Z'_{ih} \sim \MD_\sk(\cdot|b_i+a_i)$ for each $i \in [h-1]$ (using additive homomorphicity) and then \emph{randomly permuting} $Z'_{1h},\dots,Z'_{h-1,h}$. The simulated emission at step $h$ is then $(Z'_{\sigma_h(1),h},\dots,Z'_{\sigma_h(h-1),h},Z_{hh})$ where $\sigma_h: [h-1] \to [h-1]$ is a uniformly random permutation.

Since each $Z'_{\sigma_h(i),h}$ can be decoded to $b_i+a_i$, it can be seen that this emission uniquely determines the latent state $(\sum_{i=1}^{h-1} \mathbbm{1}[b_i=a_i], b_h)$. Moreover, due to the random permutation, it does not ``leak'' any other information. In all, we have described an efficient reduction that takes a sample $Z \sim \mu_\sk$ and simulates interactive access with a block MDP $M_H^\sk$ where the latent dynamics are described by the counter MDP, and the emission distribution of state $(k,b_h)$ at step $h$ is defined as follows:
\begin{enumerate}
    \item Let $b_1,\dots,b_{h-1} \in \BF_2$ be uniformly random bits conditioned on the event that $\sum_{i=1}^{h-1} \mathbbm{1}[b_i=1] = k$.
    \item Emit $(Z_1,\dots,Z_h)$ where the samples $Z_i \sim \MD_\sk(\cdot|b_i)$ are independent (conditioned on $b_1,\dots,b_{h-1}$).
\end{enumerate}

This construction does not quite work. As we will see below, reward-free RL is indeed hard for this family of block MDPs, but to ensure that regression is easy we will have to make a small but crucial modification to the construction.

\paragraph{Reducing high-noise LPN to reward-free RL.} Note that the uniformly random policy reaches latent state $(H-1,0)$ with probability $1/2^H$, but there exists a policy that reaches that state with probability $1/2 - o(1)$ (by decoding each emission and playing the appropriate action). Thus, in an $(\alpha,1/4)$-policy cover with coverage coefficient $\alpha \geq 1/\poly(H,|\MA|,|\MS|)$, for sufficiently large $H$, there is some policy that reaches the state $(H-1,0)$ with probability significantly greater than $1/2^H$. By a similar argument as in \cref{sec:warmup-overview}, solving reward-free RL in $M_H^\sk$ therefore enables recovering $\sk$. Since we have shown above that interactive access to $M_H^\sk$ can be simulated using samples from $\mu_\sk$, it only remains to show that a sample from $\mu_\sk$ can be simulated with samples from high-noise LPN. We can accomplish this using \cref{lemma:batch-lpn}, which essentially generalizes the analogous arguments from \cref{sec:warmup-overview}. Note that $\mu_\sk$ is the distribution of a list of LPN samples where the covariates are independent and uniform, but the noise terms are correlated. We can show that the joint noise distribution is a $2^{O(H^2)} \delta^2$-Santha-Vazirani source (\cref{def:sv-source}), which by \cref{lemma:batch-lpn} (and a slight variant \--- see \cref{lemma:large-batch-lpn}) implies a generic reduction from standard LPN. This completes the proof sketch of hardness of reward-free RL in $\{M^\sk_H: \sk\in\BF_2^n\}$.

\paragraph{Reducing regression to low-noise LPN?} When there are only two latent states $\{0,1\}$ (as in the warm-up construction from \cref{sec:warmup-overview}), the labels in a regression either (a) are accurately predicted by a constant function, or (b) correlate with the latent state. One could similarly hope that in a regression on the counter MDP, the optimal predictor $f(k,b_h) := \EE[y|(k,b_h)]$ is either nearly constant or correlates with either $k$ or $b_h$. In the latter two cases (i.e., correlated with either $k$ or $b_h$), given an emission $Z$ and a label $y$, one could then generate a low-noise LPN sample by adding $y$ to either the first row $Z_1$ of $Z$, which would induce the random variable $(u, \langle u,\sk\rangle + b_1 + e + y)$ where $u \sim \Unif(\BF_2^n)$, $e \sim \Ber(1/2-\delta)$, and $b_1 \sim \Ber(k/(h-1))$; or the last row $Z_h$ of $Z$, which would induce the random variable $(u, \langle u,\sk\rangle + b_h + e + y)$. As before, this would enable recovering $\sk$ and solving the regression that way. Unfortunately, since $f$ has $2h$ degrees of freedom, for any $h>1$ it's possible that $f$ could be non-constant but also uncorrelated with $k$ and $b_h$. Thus, we need more test variables.

Fix any distribution $\beta$ over latent states and consider drawing $(k,b_h) \sim \beta$. Let $b_1,\dots,b_{h-1} \in \BF_2$ be uniform conditioned on the event that $\sum_{i=1}^{h-1} \mathbbm{1}[b_i=1] = k$. It turns out that for every function $f: \{0,\dots,h-1\}\times\BF_2 \to [0,1]$, the random variable $f(k,b_h)$ is either near-constant or has non-trivial correlation with $\sum_{i=1}^m b_i + rb_h$ for some $(m,r) \in \{0,\dots,h-1\} \times \{0,1\}$ with $(m,r) \neq (0,0)$ (see \cref{lemma:f-corr} for the precise statement). Thus, one might hope to construct a low-noise LPN sample from an emission $Z$ and label $y$ by guessing a ``good'' tuple $(m,r)$, and then adding the label $y$ to the aggregated sample $\sum_{i=1}^m Z_i + rZ_h$. Formally, if we write $Z_i = (u_i, \langle u_i, \sk\rangle + b_i + e_i, \Enc_\sk(b_i))$ for each $i$, then we could construct the LPN sample
\[\left(\sum_{i=1}^m u_i + ru_h, \left\langle \sum_{i=1}^m u_i + ru_h, \sk\right\rangle + \sum_{i=1}^m b_i + rb_h + y + \sum_{i=1}^m e_i + re_h\right).\]
Notably, if $(m,r)$ is a ``good'' tuple, then $\sum_{i=1}^m b_i + rb_h + y$ has non-trivial bias. The problem is that we are now adding multiple independent noise terms $e_i$. The bias of $\sum_{i=1}^m e_i + re_h$ will be roughly $O(\delta^{m+r})$, so the above sample will not be low-noise unless $m+r=1$. This is a fatal flaw of the emission distribution as defined above.

\paragraph{Fixing the emission distribution.} The solution is to redefine the emission distribution so that the noise terms are slightly \emph{correlated}. Concretely, for any $n \in \NN$ and $\delta \in (0,1/2)$, there is a random variable $X \sim \CBer(n,\delta)$ on $\BF_2^n$ so that for any non-empty set $S \subseteq [n]$, the sum $\sum_{i \in S} X_i$ has distribution $\Ber(1/2-\delta)$ (see \cref{def:cber}). Now, we redefine the emission distribution of latent state $(k,b_h)$ at step $h$ as follows:
\begin{enumerate}
    \item Let $b_1,\dots,b_{h-1} \in \BF_2$ be uniformly random bits conditioned on the event that $\sum_{i=1}^{h-1}\mathbbm{1}[b_i=1]=k$. Also let $e \sim \CBer(h, \delta)$ and $u_1,\dots,u_h \sim \Unif(\BF_2^n)$.
    \item For each $i \in [h]$, define 
    \[Z_i = (u_i, \langle u_i,\sk\rangle + b_i + e_i, \Enc_\sk(x_i)),\]
    and emit $(Z_1,\dots,Z_h)$.
\end{enumerate}
Compared to the previous definition of the emissions, the only change is in the distribution of $e$. This change enables aggregating rows of $Z$ without blowing up the noise. Additionally, the reduction from high-noise LPN to reward-free RL still works. Simulating a trajectory can still be reduced to drawing a sample from a (slightly redefined) static distribution $\mu_\sk$, where the columns $(Z_{1h},\dots,Z_{hh})$ now have correlated noise (\cref{def:mu-defn}). The key point is that after averaging over the choice of $b_1,\dots,b_H \sim \Unif(\BF_2)$, the correlations between noise terms can still be bounded by $2^{O(H^2)} \delta^2$ (\cref{lemma:z-near-unif}), so our generic reductions from standard LPN still apply. This completes the proof of a separation between reward-free RL and regression, assuming a separation between low-noise and high-noise LPN.

\subsubsection{Parameter choices, chaining, and technical caveats.}\label{sec:overview-chaining}

We now make the above discussion more quantitative. For any functions $\epsilon,\delta: \NN \to (0,1/2)$ and $H,N: \NN \to \NN$ with $N \gg \delta(n)^{-2}n$, the above arguments show that there is a family of block MDPs, indexed by $n \in \NN$, with the following two guarantees:
\begin{itemize}
    \item First, $\epsilon(n)$-accurate realizable regression is as easy as learning $n$-variable parities with noise level $1/2 - \delta(n)\epsilon(n)/2^{O(H(n)^2)}$ (\cref{lemma:realizable-regression-alg}).
    \item Second, learning $n$-variable parities with noise level $1/2 - 2^{O(H(n)^2)} \delta(n)^2$ is as easy as learning an $(\alpha(n),1/4)$-policy cover where $\alpha(n) \geq 1/\poly(H(n))$, in conjunction with learning $n$-variable parities with noise level $1/2 - \delta(n)/2^{O(H(n)^2)}$, up to a multiplicative factor of $\poly(n,\delta(n)^{-1})$ (\cref{lemma:policy-cover-to-lpn}).
\end{itemize}
For what parameter choices do these guarantees yield a plausible separation? In both, there is a factor exponential in $H(n)^2$, which arises from the lower bound on the correlation of any non-constant regression label function (\cref{lemma:f-corr}) as well as the reduction from standard LPN to batch LPN (\cref{lemma:construct-corr-lpn}). In the first guarantee, there is a factor of $\epsilon(n)$, but one should think of $\epsilon(n)$ as inverse-polynomial in the size of the latent MDP, so this term is dominated by the factor $2^{-O(H(n)^2)}$. To summarize, let us suppose for the sake of contradiction that reward-free RL is as easy as regression. Then the above arguments, in the regime $2^{O(H(n)^2)} \ll 1/\delta(n)$, suggest that learning parities with noise level $1/2-\delta(n)^{2-o(1)}$ would be as easy as learning parities with noise level $1/2-\delta(n)$, up to a multiplicative factor of $\poly(n,\delta^{-1})$.

A priori this might actually seem plausible, since for many statistical problems, the time and sample complexity of learning with noise level $1/2-\delta$ scales polynomially in $\delta^{-1}$. But if the above were true for LPN, it would actually imply an improvement over the seminal $\BKW$ algorithm \cite{blum2003noise} in the regime $\delta \leq 2^{-cn/\log n}$: starting with $\delta_1(n) := 2^{-n^{3/4}}$, a regime where $\BKW$ has time complexity $2^{O(n/\log n)}$, we could chain together the reductions to get an improved algorithm for $\delta_2(n) := \delta_1(n)^{2-o(1)}$, and then $\delta_3(n) := \delta_1(n)^{(2-o(1))^2}$, and subsequently $\delta_k(n) := \delta_1(n)^{(2-o(1))^k}$ for each $k \in \NN$. Since the time complexity increases multiplicatively by only $\poly(n,\delta^{-1})$ at step $k$, eventually we would end up with an algorithm for learning parities with noise level $1/2 - 2^{O(n/\log n)}$ in time $2^{O(n/\log n)}$. But $\BKW$ requires time complexity $2^{O(n/\log \log n)}$ in this regime, and such an improvement seems unlikely.

This is the rough blueprint of the proof of \cref{thm:separation-intro}, which we formalize as \cref{thm:main-separation}. However, the formal proof must also address two subtle technical caveats:

\paragraph{Unknown noise level.} The reduction from high-noise LPN to reward-free RL requires knowing the noise level, in order to construct the appropriate batch LPN distribution (\cref{lemma:triangle-lpn}). However, the reduction from regression to low-noise LPN produces an instance of LPN where \emph{the noise level is unknown}. We know that it is at most $1/2-\delta(n)\epsilon(n)/2^{O(H(n)^2)}$, but we cannot compute it exactly. For natural learning algorithms, this does not seem like an obstacle \--- e.g., $\BKW$ performs just as well when we know that the noise has bias at least $\delta$ as when it has bias exactly $\delta$ \--- but there is no obvious tight (generic) \emph{reduction} from the unknown noise level case to the known noise level case.

There is a simple reduction that blows up the time complexity of learning by a factor linear in the \emph{sample complexity} of learning (\cref{lemma:lpn-unknown-delta}). Unfortunately, this breaks the above chaining argument, since the sample complexity of $\BKW$ is roughly as large as its time complexity, so step $i$ of chaining multiplies the time complexity by $2^{O(n/\log n)}$ rather than $\poly(n,\delta_i^{-1})$. We fix this issue by using a sample-efficient algorithm for learning noisy parities, due to Lyubashevsky \cite{lyubashevsky2005parity}. This algorithm uses only $\poly(n)$ samples and achieves time complexity $2^{O(n/\log \log n)}$ whenever $\delta \geq 2^{-(\log n)^{1-\Omega(1)}}$. Suppose that the hypothesized time complexity equivalence between regression and reward-free RL extends to an approximate sample complexity equivalence (as made formal in the statement of \cref{thm:main-separation}). Then if we redefine $\delta_1(n) := 2^{-(\log n)^{3/4}}$ (and $H(n) := (\log n)^{3/4}$), the above chaining argument gives an algorithm for learning parities with noise level $1/2-2^{O(n/\log \log n)}$ in time $2^{O(n/\log \log n)}$. This again would improve upon the best-known algorithm, and \cref{assm:fine-lpn} codifies the belief that such an improvement is impossible.

\paragraph{Non-uniform computation.} The second caveat with the chaining argument is that it produces a sequence of algorithms (i.e. Turing Machines) $(\Alg_k)_{k \in \NN}$ where for each $k$, we know that $\Alg_k$ efficiently learns $n$-variable parities with noise level $1/2 - \delta_1(n)^{(2-o(1))^k}$. But there is no fixed $k$ for which $\Alg_k$ learns $n$-variable parities with noise level $1/2-2^{O(n/\log \log n)}$; for this, we must take $k$ to grow with $n$. Thus, the outcome of the chaining argument is a \emph{non-uniform} algorithm $\Alg^\st$ for LPN. Under the additional assumption that the hypothesized equivalence between regression and reward-free RL extends to an approximate equivalence between the description complexities of the respective algorithms (see \cref{thm:main-separation} for the formal statement), $\Alg^\st$ can be shown to be implementable with $\poly(n)$ advice. This then violates \cref{assm:fine-lpn}. Although we had to strengthen the assumption to rule out non-uniform algorithms, we do not see this as a significant modification.

\subsection{Technical limitations and future directions}\label{sec:discussion}

While \cref{thm:separation-intro} provides the first computational separation between regression and RL, there are a variety of quantitative and qualitative ways in which the result could be strengthened. In this section we enumerate the most salient directions for improvement, and discuss the technical limitations of the current approach.

\paragraph{Reward-directed vs reward-free.} \cref{thm:separation-intro} only provides a separation against reward-free RL. The reason is that the horizon $H(n)$, while super-constant, is fairly small compared to the overall time complexities at play. In particular, reward-directed RL is always possible with roughly $|\MA_n|^{O(H(n))}$ time and samples, and for our construction $|\MA_n|^{H(n)} = 2^{(\log n)^{1/3}}$ is much smaller than $2^{O(n/\log n)}$, which is the best-case time complexity of LPN (and hence our regression algorithm). Extending our approach to larger horizons would seem to require removing the factors exponential in $H(n)$ in e.g. \cref{lemma:realizable-regression-alg,lemma:policy-cover-to-lpn}. 

As a reminder, \cref{thm:no-reduction-intro} does rule out a \emph{reduction} from reward-directed RL to regression, but it does not establish a separation.

\paragraph{Distributional assumptions.} In our definition of realizable regression (\cref{def:regression-algorithm}), the covariate distribution must be a mixture of the emission distributions of latent states. One could ask whether this restriction could be removed, which would allow phrasing the separation purely in terms of a concept class $\Phi$ rather than a family of block MDPs (\cref{rmk:m-vs-phi}). Unfortunately, our regression algorithm seems to rely strongly on this distributional assumption, even in the toy construction (\cref{sec:warmup-overview}): if the covariate distribution were arbitrary, and all we knew was that the label $y$ for an emission $x$ has law only depending on $\phi(x)$ for some $\phi \in \Phi$, then by definition of $\Phi$ as $\{\Dec_\sk:\sk\in\BF_2^n\}$ we would know that the third piece of $x$ (in, e.g., \cref{eq:decsk-informal}) is $\Enc_\sk(\phi(x))$, but we would have no guarantees about the first two pieces of $x$, which in the current approach contain crucial information for our regression algorithm \--- namely, an LPN sample masked with $\phi(x)$.

\paragraph{Quantitative improvements.} Finally, another limitation of our construction is that both regression and RL are quite hard, and the gap is relatively quite small. Thus, it does not rule out the possibility that e.g. the time complexity of RL is at most quadratic in the time complexity of regression. Perhaps the strongest result one could hope for is a construction where regression is solvable in polynomial time, but RL is cryptographically hard. One of the obstacles to proving such a result is that it significantly limits the applicable algorithmic toolkit: there are very few rich concept classes $\Phi$ where regression over $\Phi$ is tractable in polynomial time.


\section{The block counter MDP construction}\label{sec:construction}
Fix $n,N,H \in \NN$, $\delta \in (0,1/2)$, and $\sk \in \BF_2^n$. We define a block MDP $M^\sk_{n,N,H,\delta}$ as follows. The horizon is $H$ and the latent state space is $\MS := \sqcup_{1 \leq h \leq H} \MS[h]$ where 
\[\MS[h] := \{h\} \times \{0,\dots,h-1\} \times \BF_2.\]
The action space is $\MA := \BF_2$, the initial latent state distribution is $\Unif(\MS[1])$, and for each $h \in [H-1]$ the latent transition distribution at state $(h,k,b_h) \in \MS[h]$ with action $a \in \MA$ is defined by
\begin{align}
  \label{eq:latent-transitions}\til \BP_h((h+1,k + \mathbbm{1}[b_h=a], 0)|(h,k,b_h),a) = \til \BP_h((h+1,k + \mathbbm{1}[b_h=a], 1)|(h,k,b_h),a) = \frac{1}{2}.
\end{align}
The emission distribution at state $s = (h,k,b_h) \in \MS$ is $Z \sim \MD^\sk_{n,N,h,\delta}(\cdot|h,k,b_h)$ defined below in \cref{def:real-emission}. Note that the set of reachable latent states at step $h$ is exactly $\MS[h]$, and the step $h$ is efficiently decodable from (the dimension of) any emission in the support of $\MD^\sk_{n,N,h,\delta}(\cdot|k,b_h)$. There is a natural identification of the emission space $\sqcup_{1 \leq h \leq H} \MZ_{n,N,h}$ (see \cref{def:emission}) within $\{0,1\}^{H^2(n+1)(N+1)}$.

\begin{definition}\label{def:nu}
Fix $h \in \NN$, $k \in \{0,\dots,h-1\}$, and $b_h \in \BF_2$. Let $\nu_{h,k,b_h}$ be the distribution of a uniformly random vector $B$ from $\BF_2^h$ subject to the constraints $\sum_{i=1}^{h-1}\mathbbm{1}[B_i=1] = k$ and $B_h = b_h$.
\end{definition}

\begin{definition}
  \label{def:cber}
For any $n \in \NN$ and $\delta \in [0,1/2]$ we define the $(n,\delta)$-\emph{correlated Bernoulli} distribution $\CBer(n,\delta) \in \Delta(\BF_2^n)$ by $2\delta \Ber(0)^{\otimes n} + (1-2\delta) \Ber(1/2)^{\otimes n}$. 
\end{definition}

\begin{definition}
  \label{def:emission}
Fix $n,N,h \in \NN$, $\sk \in \BF_2^n$, and $\delta \in(0,1/2)$. For $k \in \{0,1,\dots,h-1\}$ and $b_h \in \BF_2$, let $\til\MD_{n,h,\delta}^{\sk,0}(\cdot|h,k,b_h)$ 
be the distribution of the random matrix
\[
\begin{bmatrix} u_1 & \langle u_1,\sk\rangle + e_1 + b_1 \\
\vdots & \vdots \\
u_h& \langle u_h, \sk\rangle + e_h + b_h
\end{bmatrix} \in \BF_2^{h \times (n+1)}
\]
where $u_1,\dots,u_h \sim \Unif(\BF_2^n)$, $e = (e_1,\dots,e_h) \sim \CBer(h, \delta)$, and $b = (b_1,\dots,b_h) \sim \nu_{h,k,b_h}$ are independent.\footnote{There is a slight abuse of notation in writing $b \sim \nu_{h,k,b_h}$ since $b_h$ has two meanings. However, the $h$-th entry of the random variable $b$ is always equal to $b_h$ under the distribution $\nu_{h,k,b_h}$, so the notation is consistent.} Also let $\til\MD_{n,N,h,\delta}^\sk(\cdot|h,k,b_h)$ be the distribution of the random object
\[
\Xi := \begin{bmatrix} u_1 & \langle u_1,\sk\rangle + e_1 + b_1 & (u'_{1i}, \langle u'_{1i},\sk\rangle + e'_{1i} + b_1)_{i=1}^N \\
\vdots & \vdots & \vdots \\
u_h & \langle u_h, \sk\rangle + e_h + b_h &(u'_{hi}, \langle u'_{hi},\sk\rangle + e'_{hi} + b_h)_{i=1}^N \end{bmatrix}
\]
where $u_1, \ldots, u_h \sim \Unif(\BF_2^n), e = (e_1,\dots,e_h) \sim \CBer(h, \delta), b = (b_1,\dots,b_h) \sim \nu_{h,k,b_h}$, and $e_{ji}' \sim \Ber(1/2 - 2\delta^2)$ (for $1 \leq j \leq h, 1 \leq i \leq N$) are all independent. Let $\MZ_{n,N,h}$ be the set of objects of the same shape as $\Xi$ (note that $\MZ_{n,N,h}$ can be identified with $\BF_2^{h \times (n+1) \times (N+1)}$, although we have written $\Xi$ in the above form to highlight that the first slice along the third axis differs from the rest).
\end{definition}

\begin{definition}[Emission components]\label{def:emission-components}
Fix $n,N,h \in \NN$. We define functions $u_j, u'_{ji}: \MZ_{n,N,h} \to \BF_2^n$ and $y_j, y'_{ji}: \MZ_{n,N,h} \to \BF_2$, for all $j \in [h]$ and $i \in [N]$, as follows. For each $Z \in \MZ_{n,N,h}$, we write
\[Z =: \begin{bmatrix} u_1(Z) & y_1(Z) & (u'_{1i}(Z), y'_{1i}(Z))_{i=1}^N \\
\vdots & \vdots & \vdots \\ 
u_h(Z) & y_h(Z) & (u'_{hi}(Z), y'_{hi}(Z))_{i=1}^N
\end{bmatrix}. \]
\end{definition}

\begin{definition}[Decoding function]
  \label{def:decoder}
Let $n,N,h \in \NN$ and $t \in \BF_2^n$. We define the decoding function $\Dec_{n,N,h}^t: \MZ_{n,N,h} \to \{h\} \times \{0,\dots,h-1\}\times\BF_2$ by
\begin{align*}
\Dec_{n,N,h}^t(Z) := \Bigg(h, \,\,&\sum_{i=1}^{h-1} \mathbbm{1}[\MAJ\{y'_{ij}(Z) - \langle u'_{ij}(Z), t\rangle: j \in [N]\} \equiv 1 \bmod{2}], \\ 
&\MAJ\{y'_{h+1,j}(Z) - \langle u'_{h+1,j}(Z), t\rangle: j \in [N]\}\Bigg)
\end{align*}
where $\MAJ: \BF_2^N \to \BF_2$ is the majority function (breaking ties to $0$ for concreteness). We define $\Dec^t_{n,N}$ to be the function with domain $\sqcup_{1 \leq h \leq H} \MZ_{n,N,h}$ that applies $\Dec^t_{n,N,h}$ to any element of $\MZ_{n,N,h}$ (where $H$ will be clear from context). 
\end{definition}

\begin{definition}[Emission distribution]\label{def:real-emission}
Let $n,N,h \in \NN$, $\delta \in (0,1/2)$, and $\sk \in \BF_2^n$. For $k \in \{0,\dots,h-1\}$ and $b_h \in \BF_2$, let $\MD^\sk_{n,N,h,\delta}(\cdot|h,k,b_h)$ be the distribution on $\MZ_{n,N,h}$ with probability mass function 
\[\MD^\sk_{n,N,h,\delta}(z | h,k,b_h) := \Pr_{Z \sim \til\MD^\sk_{n,N,h,\delta}(\cdot|h,k,b_h)}\left( Z =z \mid \Dec^\sk_{n,N,h}(Z) = (h,k,b_h)\right),\]
recalling the definition of $\Dec^\sk_{n,N,h}(\cdot)$ in \cref{def:decoder}.
\end{definition}

For notational convenience, also define an MDP $\til M^\sk_{n,N,H,\delta}$ identically as above, except that the emission distribution at step $h$ and state $s \in \MS[h]$ is $Z \sim \til\MD^\sk_{n,N,h,\delta}(\cdot|s)$ (with no conditioning). Note that $\til M^\sk_{n,N,H,\delta}$ is not a block MDP, since it does not have a decoding function (i.e., distinct states do not have emission distributions with disjoint supports). However, as formalized in the following lemma, its trajectories are close in total variation distance to those of $M^\sk_{n,N,H,\delta}$, so long as $N$ exceeds a (fixed) polynomial in $\delta^{-1}$.

\begin{lemma}[Ground truth decoding error]\label{lemma:decoding-error-prob}
Let $n,N,H \in\NN$, $\delta\in(0,1/2)$, and $\sk \in \BF_2^n$. For any latent state $(h,k,b_h) \in \MS$, the unconditioned emission $Z \sim \til\MD^\sk_{n,N,h,\delta}(\cdot|h,k,b_h)$ satisfies
\begin{align}\Pr[\Dec^\sk_{n,N,h}(Z) \neq (h,k,b_h)] \leq h\exp(-\delta^4 N).\label{eq:decoding-close}
\end{align}
Thus, for any policy $\pi$, for trajectories $\tau, \til\tau$ drawn with policy $\pi$ from $M^\sk_{n,N,H,\delta}$ and $\til M^\sk_{n,N,H,\delta}$ respectively, it holds that
\[\TV(\Law(\tau), \Law(\til\tau)) \leq H^2\exp(-\delta^4 N).\]
\end{lemma}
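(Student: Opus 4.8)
The plan is to first prove the per-state bound \cref{eq:decoding-close} by a direct Hoeffding/Chernoff computation, and then upgrade it to the trajectory-level total variation bound via a step-by-step coupling of the two MDPs.

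For \cref{eq:decoding-close}, fix a latent state $(h,k,b_h) \in \MS$ and draw $Z \sim \til\MD^\sk_{n,N,h,\delta}(\cdot \mid h,k,b_h)$. Unpacking \cref{def:emission}, for every row $j \in [h]$ and copy index $i \in [N]$ one has $y'_{ji}(Z) - \langle u'_{ji}(Z), \sk\rangle = b_j + e'_{ji}$ in $\BF_2$, where $b = (b_1,\dots,b_h) \sim \nu_{h,k,b_h}$ and the bits $e'_{ji} \sim \Ber(1/2 - 2\delta^2)$ are independent of one another and of $b$. Thus, conditionally on $b$, the $N$ values $\{b_j + e'_{ji}\}_{i \in [N]}$ are i.i.d., each equal to $b_j$ with probability $1/2 + 2\delta^2$, so by Hoeffding's inequality the row-wise majority $\MAJ\{\,y'_{ji}(Z) - \langle u'_{ji}(Z), \sk\rangle : i \in [N]\,\}$ differs from $b_j$ with probability at most $\exp(-\delta^4 N)$ (with room to spare in the exponent). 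The decoder $\Dec^\sk_{n,N,h}$ of \cref{def:decoder} is a function of these row-wise majorities for at most $h$ rows — the first $h-1$ rows determine the counter coordinate and the last row determines the remaining $\BF_2$-valued coordinate — and whenever all of them equal the corresponding $b_j$, its output is $\bigl(h, \sum_{i=1}^{h-1}\mathbbm{1}[b_i = 1], b_h\bigr)$, which equals $(h,k,b_h)$ by the defining constraints of $\nu_{h,k,b_h}$ in \cref{def:nu}. A union bound over the (at most) $h$ relevant rows gives $\Pr[\Dec^\sk_{n,N,h}(Z) \ne (h,k,b_h)] \le h \exp(-\delta^4 N)$. (The inequality is valid for every $N$, and becomes informative once $N \gtrsim \delta^{-4}\log h$.)

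For the trajectory bound, first note that by \cref{def:real-emission} the distribution $\MD^\sk_{n,N,h,\delta}(\cdot \mid s)$ is precisely $\til\MD^\sk_{n,N,h,\delta}(\cdot \mid s)$ conditioned on the event $\{\Dec^\sk_{n,N,h}(Z) = s\}$; combining the elementary identity $\TV(P, P(\cdot \mid E)) = \Pr_P[E^c]$ with \cref{eq:decoding-close} yields $\TV\bigl(\MD^\sk_{n,N,h,\delta}(\cdot \mid s), \til\MD^\sk_{n,N,h,\delta}(\cdot \mid s)\bigr) \le h \exp(-\delta^4 N)$ for every $s \in \MS[h]$. Since $M^\sk_{n,N,H,\delta}$ and $\til M^\sk_{n,N,H,\delta}$ have identical initial latent distribution, latent transition kernels and (zero) rewards, and differ only in their emission kernels, I would couple a trajectory $\tau \sim \BP^{M^\sk,\pi}$ and $\til\tau \sim \BP^{\til M^\sk,\pi}$ inductively over steps: start with equal initial latent states, and at each step $h$, conditioned on the two trajectories having agreed through step $h-1$ (so $s_h = \til s_h$), draw $(x_h, \til x_h)$ from the optimal coupling of $\MD^\sk(\cdot \mid s_h)$ and $\til\MD^\sk(\cdot \mid s_h)$ and, if they coincide, couple the (common) policy randomness and transition randomness so that $a_h = \til a_h$, $r_h = \til r_h$, and $s_{h+1} = \til s_{h+1}$. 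Under this coupling, agreement of all $H$ emissions forces agreement of the entire trajectory, so by a union bound the coupling fails with probability at most $\sum_{h=1}^H h \exp(-\delta^4 N) = \tfrac{H(H+1)}{2}\exp(-\delta^4 N) \le H^2 \exp(-\delta^4 N)$, and the coupling inequality $\TV(\Law(\tau),\Law(\til\tau)) \le \Pr[\tau \ne \til\tau]$ gives the claim.

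I expect the one place that needs a little care is this coupling: because $\pi$ is an adaptive (possibly randomized) policy, a discrepancy in even a single emission can cascade into differing actions and hence differing latent paths, so the coupling has to be arranged — as above — to preserve agreement along the whole trajectory conditioned on all the per-step emission couplings succeeding, after which the stated bound follows by summing per-step failure probabilities. Everything else (the Hoeffding estimate and the conditional-distribution TV identity) is routine.
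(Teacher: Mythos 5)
Your proof is correct, and the first half is essentially the paper's: both arguments reduce the per-row decoding error to the event that at least $N/2$ of the independent $\Ber(1/2-2\delta^2)$ noise bits $e'_{ji}$ equal $1$, apply Hoeffding to get $\exp(-8\delta^4 N)\le\exp(-\delta^4 N)$ per row, and union bound over the at most $h$ rows, using $\sum_{i<h}b_i=k$ to conclude correct decoding on the complement.

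For the trajectory bound your route differs from the paper's. The paper argues in one line that a trajectory of $M^\sk_{n,N,H,\delta}$ can be generated by repeatedly drawing a whole trajectory (with latent states) from $\til M^\sk_{n,N,H,\delta}$ and rejecting unless every emission decodes to its latent state, then bounds the total variation by the probability of ever resampling. You instead work per step: you use the exact identity $\TV(P,P(\cdot\mid E))=\Pr_P[E^c]$ together with \cref{def:real-emission} to bound $\TV\bigl(\MD^\sk_{n,N,h,\delta}(\cdot\mid s),\til\MD^\sk_{n,N,h,\delta}(\cdot\mid s)\bigr)\le h\exp(-\delta^4 N)$, and then build an explicit step-by-step coupling that shares policy, reward, and transition randomness as long as the optimally coupled emissions agree, summing the per-step failure probabilities to $\tfrac{H(H+1)}{2}\exp(-\delta^4 N)\le H^2\exp(-\delta^4 N)$. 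Your version is slightly more careful on two points: it handles the adaptivity of $\pi$ explicitly (a single emission mismatch is only charged once, at the first step it occurs), and it avoids a small imprecision in the whole-trajectory rejection description, namely that conditioning the $\til M$-trajectory law on the global event ``all emissions decode correctly'' reweights latent paths by $\Pr[E]/\prod_h p_{s_h}$, where $p_{s_h}$ is the per-state acceptance probability, and these need not be equal across states; a per-step resampling (equivalently, your coupling) is what exactly reproduces $M^\sk_{n,N,H,\delta}$. The paper's argument buys brevity; yours buys airtightness at the cost of a few extra lines, and both yield the same constants.
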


\begin{proof}
To prove the first claim, recall from \cref{def:emission} that there are independent random variables $b \sim \nu_{h,k,b_h}$ and $e'_{ji} \sim \Ber(1/2-2\delta^2)$ (for $(j,i) \in [h]\times[N]$) so that $y'_{ji}(Z) = \langle u'_{ji}(Z),\sk\rangle + e'_{ji} + b_j$ for all $(j,i) \in [h]\times[N]$. Now for each $j \in [h]$ consider the event $\ME_j$ that
\[\#\{i \in [N]: y'_{ji}(Z) \neq \langle u'_{ji}(Z), \sk\rangle + b_j\} \geq \frac{N}{2}.\]
This is exactly the event that $\#\{i \in [N]: e'_{ji} = 1\} \geq N/2$, which by Hoeffding's inequality occurs with probability at most $\exp(-\delta^4 N)$. By the union bound, the event $\cup_{i \in [h]} \ME_i$ occurs with probability at most $h\exp(-\delta^4 N)$. Moreover, in the complement of this event, using \cref{def:decoder} and the fact that $\sum_{i=1}^{h-1} b_i = k$ with probability $1$, it holds that $\Dec^\sk_{n,N,h}(Z) = (h,k,b_h)$.

To prove the second claim, note that a trajectory $\tau$ from $M^\sk_{n,N,H,\delta}$ with policy $\pi$ can be drawn by repeatedly sampling a trajectory $(s_{1:H}, x_{1:H}, a_{1:H})$ from $\til M^\sk_{n,N,H,\delta}$ with policy $\pi$ (including the latent states in the trajectory for the purposes of the argument) until it holds that $\Dec^\sk_{n,N,h}(x_h) = s_h$ for all $h \in [H]$. The total variation distance between $\Law(\tau)$ and $\Law(\til \tau)$ can be upper bounded by the probability that this process for generating $\tau$ resamples at least once. By the first claim of the lemma and a union bound over $h$, this occurs with probability at most $H^2 \exp(-\delta^4 N)$.
\end{proof}

The following (straightforward) technical lemmas will be useful later.

\begin{lemma}[Approximate realizability]\label{lemma:realizability-error-decomp}
Let $n,N,H,c \in \NN$, $h \in [H]$, $\delta\in(0,1/2)$, and $\sk \in \BF_2^n$. Let $\MS[h] = \{0,\dots,h-1\}\times\BF_2$ and let $\beta \in \Delta(\MS[h])$. Let $(\pi^i)_{i\in[c]}$ be policies. Let $P$ be a stochastic transformation, and consider trajectories $\tau^i \sim \BP^{M^\sk_{n,N,H,\delta},\pi^i}$ and $\til\tau^i \sim \BP^{\til M^\sk_{n,N,H,\delta},\pi^i}$ for $i \in [c]$. Then for any function $f: \MS[h]\to[0,1]$,
\begin{align*}
&\TV\left(\Law(P((\til\tau^i)_{i\in[c]})), \sum_{s\in\MS[h]} \beta(s) \cdot \left(\til\MD^\sk_{n,N,h,\delta}(\cdot|s) \times \Ber(f(s))\right)\right)\\
&\leq \TV\left(\Law(P((\tau^i)_{i\in[c]})), \sum_{s\in\MS[h]} \beta(s) \cdot \left(\MD^\sk_{n,N,h,\delta}(\cdot|s) \times \Ber(f(s))\right)\right) \\ 
&+ (c+1)H^2\exp(-\delta^4 N)
\end{align*}
\end{lemma}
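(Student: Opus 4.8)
The plan is to derive the inequality from two applications of the triangle inequality for total variation distance, together with the data-processing inequality for stochastic transformations and the decoding-error bounds of \cref{lemma:decoding-error-prob}. For brevity write $\mu := \sum_{s\in\MS[h]}\beta(s)\cdot(\MD^\sk_{n,N,h,\delta}(\cdot|s)\times\Ber(f(s)))$ and $\til\mu := \sum_{s\in\MS[h]}\beta(s)\cdot(\til\MD^\sk_{n,N,h,\delta}(\cdot|s)\times\Ber(f(s)))$ for the two target distributions appearing in the statement.

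First I would control the discrepancy between the two trajectory tuples. By the second claim of \cref{lemma:decoding-error-prob}, for each fixed policy $\pi^i$ we have $\TV(\Law(\tau^i),\Law(\til\tau^i))\leq H^2\exp(-\delta^4 N)$. Since the tuple $(\tau^i)_{i\in[c]}$ consists of $c$ mutually independent coordinates, as does $(\til\tau^i)_{i\in[c]}$, subadditivity of total variation over product measures gives $\TV(\Law((\tau^i)_{i\in[c]}),\Law((\til\tau^i)_{i\in[c]}))\leq cH^2\exp(-\delta^4 N)$. Applying the stochastic transformation $P$ can only decrease total variation (this holds for an arbitrary Markov kernel), so $\TV(\Law(P((\tau^i)_{i\in[c]})),\Law(P((\til\tau^i)_{i\in[c]})))\leq cH^2\exp(-\delta^4 N)$.

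Next I would bound $\TV(\mu,\til\mu)$. By \cref{def:real-emission}, $\MD^\sk_{n,N,h,\delta}(\cdot|s)$ is exactly $\til\MD^\sk_{n,N,h,\delta}(\cdot|s)$ conditioned on the event $\{\Dec^\sk_{n,N,h}(Z)=s\}$, which by the first claim of \cref{lemma:decoding-error-prob} has probability at least $1-h\exp(-\delta^4 N)$ under $\til\MD^\sk_{n,N,h,\delta}(\cdot|s)$; hence $\TV(\MD^\sk_{n,N,h,\delta}(\cdot|s),\til\MD^\sk_{n,N,h,\delta}(\cdot|s))\leq h\exp(-\delta^4 N)$ for every $s$. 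Taking products with the common factor $\Ber(f(s))$ preserves this bound, and averaging over $s\sim\beta$ (using convexity of total variation in each argument) gives $\TV(\mu,\til\mu)\leq h\exp(-\delta^4 N)\leq H^2\exp(-\delta^4 N)$.

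Finally, the triangle inequality yields
\[
\TV(\Law(P((\til\tau^i)_{i\in[c]})),\til\mu)\leq \TV(\Law(P((\til\tau^i)_{i})),\Law(P((\tau^i)_{i}))) + \TV(\Law(P((\tau^i)_{i})),\mu) + \TV(\mu,\til\mu),
\]
and substituting the two bounds above gives the claimed upper bound $\TV(\Law(P((\tau^i)_{i})),\mu) + (c+1)H^2\exp(-\delta^4 N)$. The argument is essentially bookkeeping; the only point requiring care is the accounting of constants — the $c$ independent trajectory resamplings contribute $cH^2\exp(-\delta^4N)$, while the single conditioning relating $\MD^\sk$ to $\til\MD^\sk$ in the target distribution contributes the remaining $H^2\exp(-\delta^4N)$ — and making sure the data-processing step is invoked for the arbitrary stochastic transformation $P$ rather than a deterministic map.
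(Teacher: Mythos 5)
Your proof is correct and follows exactly the route the paper intends: its proof of this lemma is a one-liner citing \cref{lemma:decoding-error-prob}, the data-processing inequality, and the triangle inequality for total variation, which is precisely the decomposition you spell out (with the $c$ trajectory couplings contributing $cH^2\exp(-\delta^4N)$ and the conditioning relating $\MD^\sk$ to $\til\MD^\sk$ contributing the final $H^2\exp(-\delta^4N)$).
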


\begin{proof}
Immediate from \cref{lemma:decoding-error-prob} together with the data processing inequality and the triangle inequality for total variation distance.
\end{proof}

\begin{lemma}[Optimal policy]\label{lemma:optimal-policy}
Let $n,N,H \in\NN$, $\delta \in (0,1/2)$, and $\sk \in \BF_2^n$. Then there is a policy $\pi^\st$ such that
\[\Pr^{M^\sk_{n,N,H,\delta},\pi^\st}[s_H=(H,H-1,0)] =  \Pr^{M^\sk_{n,N,H,\delta},\pi^\st}[s_H = (H,H-1,1)] = \frac{1}{2}.\]
\end{lemma}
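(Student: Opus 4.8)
The plan is to exhibit $\pi^\st$ explicitly as the ``always guess the current bit correctly'' policy, exploiting the fact that the block MDP $M^\sk_{n,N,H,\delta}$ is genuinely decodable (this is the point of conditioning in \cref{def:real-emission}). Concretely, I would define $\pi^\st$ to be the deterministic policy that, upon observing any history ending in emission $x_h$ at step $h$, computes $\Dec^\sk_{n,N,h}(x_h) \in \{h\}\times\{0,\dots,h-1\}\times\BF_2$ and plays the third coordinate of this tuple as its action $a_h$. Note that $\pi^\st$ is allowed to depend on $\sk$; we only need existence here, not efficient computability, so this is unproblematic.

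First I would observe that along any trajectory drawn from $M^\sk_{n,N,H,\delta}$ under any policy, $\Dec^\sk_{n,N,h}(x_h) = s_h$ holds with probability $1$ for every $h \in [H]$. This is immediate from \cref{def:real-emission}: the emission distribution $\MD^\sk_{n,N,h,\delta}(\cdot|h,k,b_h)$ is obtained from $\til\MD^\sk_{n,N,h,\delta}(\cdot|h,k,b_h)$ by conditioning on the event $\{\Dec^\sk_{n,N,h}(Z) = (h,k,b_h)\}$, so its support lies entirely inside that event; equivalently, $\Dec^\sk_{n,N}$ is a valid decoding function for the block MDP. Hence, under $\pi^\st$, the action played at step $h$ is exactly the bit $b_h$ appearing in the latent state $s_h = (h,k_h,b_h)$.

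Next I would run an induction on $h$ showing that, conditional on playing $\pi^\st$, the latent state at step $h$ is almost surely of the form $(h,\,h-1,\,b_h)$ with $b_h \sim \Unif(\BF_2)$ (independent of the past). The base case $h=1$ is just the initial distribution $\Unif(\MS[1]) = \Unif(\{(1,0,0),(1,0,1)\})$. For the inductive step, if $s_h = (h,h-1,b_h)$ then $\pi^\st$ plays $a_h = b_h$ by the previous paragraph, so $\mathbbm{1}[b_h = a_h] = 1$, and the transition rule \cref{eq:latent-transitions} increments the counter from $h-1$ to $h$ while resampling the new bit uniformly, giving $s_{h+1} = (h+1,\,h,\,b_{h+1})$ with $b_{h+1} \sim \Unif(\BF_2)$. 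Taking $h = H$ yields $s_H = (H,\,H-1,\,b_H)$ with $b_H \sim \Unif(\BF_2)$, which is precisely $\Pr^{M^\sk_{n,N,H,\delta},\pi^\st}[s_H = (H,H-1,0)] = \Pr^{M^\sk_{n,N,H,\delta},\pi^\st}[s_H = (H,H-1,1)] = 1/2$. (The degenerate case $H=1$ reduces directly to the initial distribution and there is nothing to prove beyond the base case.)

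I do not expect any real obstacle in this argument; the one subtlety worth flagging is that it is essential to work with the conditioned MDP $M^\sk_{n,N,H,\delta}$ rather than $\til M^\sk_{n,N,H,\delta}$. In the latter, $\Dec^\sk_{n,N,h}$ recovers $s_h$ only with probability $1 - O(h\exp(-\delta^4 N))$ per step (\cref{lemma:decoding-error-prob}), so the analogous policy would guess correctly only with high probability, and one would obtain probabilities close to $1/2$ rather than exactly $1/2$. Since the statement asks for exact equality, we invoke perfect decodability of the block MDP, which makes the whole computation exact.
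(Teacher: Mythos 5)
Your proof is correct and follows essentially the same route as the paper: define $\pi^\st$ to decode each emission via $\Dec^\sk_{n,N,h}$ and play the decoded bit $b_h$, then induct to show the step-$h$ visitation distribution is uniform on $\{(h,h-1,0),(h,h-1,1)\}$. Your extra remarks (perfect decodability of the conditioned MDP $M^\sk_{n,N,H,\delta}$ versus $\til M^\sk_{n,N,H,\delta}$) are accurate and simply make explicit what the paper's one-line induction leaves implicit.
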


\begin{proof}
The policy $\pi^\st$ on emission $Z$ at step $h$ computes $(h,k,b_h) \gets \Dec^\sk_{n,N,h}(Z)$ and plays action $b_h$. Inductively we can see that for each $h \in [H]$, the state visitation distribution at step $h$ is uniform on $(h,h-1,0)$ and $(h,h-1,1)$. 
\end{proof}

For any function $n \mapsto \delta(n)$, we define a family of block MDPs indexed by $n$, as follows.

\begin{definition}\label{def:mdp-family}
Fix $\delta: \NN \to (0,1/2)$. We define $\MM(\delta)$ as the tuple
\[\MM(\delta) := ((\MS_n)_n,(\MA_n)_n,(H_n)_n,(\ell_n)_n,(\Phi_n)_n,(\MM_n)_n)\]
where $H_n := (\log n)^{1/3}$, $N_n := 3\delta(n)^{-4} n$, $\ell_n := H_n^2(n+1)(N_n+1)$, and
\begin{itemize}
    \item $\MS_n := \sqcup_{1 \leq h \leq H_n} \{h\} \times \{0,\dots,h-1\} \times \BF_2$
    \item $\MA_n := \BF_2$
    \item $\Phi_n := \{\Dec^\sk_{n,N_n}: \sk \in \BF_2^n\}$
    \item $\MM_n := \{M^\sk_{n,N_n,H_n,\delta(n)}: \sk \in \BF_2^n\}$.
\end{itemize}
\end{definition}

The following lemma is immediate from \cref{def:mdp-family}.

\begin{lemma}\label{lemma:family-is-computable}
There is a constant $C_{\ref{lemma:family-is-computable}}>0$ with the following property. For any $\delta: \NN \to (0,1/2)$, $\MM(\delta)$ is a $(n/\delta(n))^{C_{\ref{lemma:family-is-computable}}}$-computable block MDP family indexed by $n$ (per \cref{def:computable-family}).
\end{lemma}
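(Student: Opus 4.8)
The plan is to verify directly the two conditions of \cref{def:computable-family} with $B_n := (n/\delta(n))^C$ for a sufficiently large absolute constant $C$, which we then take to be $C_{\ref{lemma:family-is-computable}}$. For the first condition I would substitute the explicit parameter choices from \cref{def:mdp-family}. Since $H_n = (\log n)^{1/3}$, certainly $H_n \le n$; the action set has $|\MA_n| = |\BF_2| = 2$; the latent state space has $|\MS_n| = \sum_{h=1}^{H_n} 2h = H_n(H_n+1) \le O((\log n)^{2/3})$; the decoder class $\Phi_n = \{\Dec^\sk_{n,N_n} : \sk \in \BF_2^n\}$ has at most $2^n$ elements, so $\log|\Phi_n| \le n$; and the emission length is $\ell_n = H_n^2(n+1)(N_n+1)$ with $N_n = 3\delta(n)^{-4} n$, hence $\ell_n \le O((\log n)^{2/3}) \cdot (n+1) \cdot (3\delta(n)^{-4}n + 1) \le (n/\delta(n))^{O(1)}$. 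Each of these quantities is therefore at most $(n/\delta(n))^C$ once $C$ is large enough.

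For the second condition I would exhibit the circuit $\MC_{\Phi_n}$ computing the evaluation map $(\sk, Z) \mapsto \Dec^\sk_{n,N_n}(Z)$, where $\sk \in \BF_2^n$ is the length-$n$ binary string representing an element of $[|\Phi_n|]$ and $Z$ ranges over the emission space $\{0,1\}^{\ell_n}$. The circuit first recovers the step index $h$ from the shape of $Z$ under the identification of $\sqcup_{1 \le h \le H_n} \MZ_{n,N_n,h}$ inside $\{0,1\}^{\ell_n}$ fixed in \cref{def:emission,def:real-emission} (a circuit of size $\poly(\ell_n)$), and then evaluates $\Dec^\sk_{n,N_n,h}$ as in \cref{def:decoder}. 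That evaluation consists of: computing the inner products $\langle u'_{ij}(Z), \sk\rangle$ over $\BF_2$, each a parity of at most $n$ bits and hence of size $O(n)$; forming the at most $h N_n$ bits $y'_{ij}(Z) - \langle u'_{ij}(Z), \sk\rangle$; computing $O(h)$ majorities, each $\MAJ$ over $N_n$ bits (a circuit of size $\poly(N_n)$); taking indicators and summing at most $h-1$ of them as an integer (size $\poly(h)$); and assembling the output triple. The total size is $\poly(n, N_n, H_n) = \poly(n, \delta(n)^{-1}) \le (n/\delta(n))^C$ for $C$ large enough.

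The proof involves no real obstacle; the only minor care needed is to fix (and invoke) the concrete encoding convention by which a variable-shape object in $\MZ_{n,N_n,h}$ sits inside the fixed-length string space $\{0,1\}^{\ell_n}$, so that $h$ is extractable by a small circuit — but this convention is already implicit in \cref{def:emission,def:real-emission}, and with it in hand the lemma follows by setting $C_{\ref{lemma:family-is-computable}}$ to dominate all the polynomial exponents that appear above.
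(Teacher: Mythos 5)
Your proposal is correct and follows essentially the same route as the paper's proof: verify the parameter bounds (with $\ell_n$ the dominating quantity) for the first condition of \cref{def:computable-family}, and observe that $(\sk,Z)\mapsto\Dec^{\sk}_{n,N_n}(Z)$ is computable by a $\poly(n,\delta(n)^{-1})$-size circuit directly from \cref{def:decoder} for the second. The paper states this more tersely; your version simply spells out the circuit construction and the encoding convention explicitly.
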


\begin{proof}
It's straightforward to check that $\MM(\delta)$ is a block MDP family indexed by $n$ (\cref{def:block-family}). It remains to check the two conditions of \cref{def:computable-family}. For the first condition, the dominating parameter is $\ell_n$, which is indeed at most $\poly(n,\delta(n)^{-1})$. For the second condition, observe that the map $(\sk,Z) \mapsto \Dec^\sk_{n,N_n}(Z)$ indeed has a circuit of size $\poly(n,\delta(n)^{-1})$ by definition of $\Dec$ (\cref{def:decoder}).
\end{proof}

The main results of \cref{sec:regression-to-lpn} and \cref{sec:lpn-to-rl} respectively show that for any $\delta(n) \leq O(2^{-(\log n)^{1-\gamma}})$, regression for $\MM(\delta)$ is roughly as easy as learning parities with noise $1/2 - \delta$ (\cref{lemma:realizable-regression-alg}) whereas policy cover learning for $\MM(\delta)$ is roughly as hard as learning parities with noise $1/2 - 2\delta^2$ (\cref{lemma:policy-cover-to-lpn}). If one is willing to make a fine-grained assumption that the latter LPN problem is significantly harder than the former, then this already gives a computational separation. However, by chaining together the reductions and applying Lyubashevsky's algorithm for the base case \cite{lyubashevsky2005parity}, we can alternatively prove a computational separation under the straightforward hardness assumption that learning parities with noise level $1/2 - 2^{O(n/\log \log n)}$ cannot be done in time $2^{O(n/\log \log n)}$ (\cref{assm:fine-lpn}). This is accomplished in the following theorem, which is the formal version of \cref{thm:separation-intro}.

\begin{theorem}\label{thm:main-separation}
Suppose that \cref{assm:fine-lpn} holds. Then for every constant $C$, there are functions $K,S,T,B:\NN \to \NN$; a $K(n)$-computable block MDP family (\cref{def:computable-family}) $\MM = ((\MS_n)_n,(\MA_n)_n,(H_n)_n,(\ell_n)_n,(\Phi_n)_n,(\MM_n)_n)$; and an integer $D \in \NN$ so that:
\begin{itemize}
\item There is a $(S,T,\epsilon,B)$-realizable regression algorithm (\cref{def:regression-algorithm}) for $\MM$ where $\epsilon:\NN\to\RR_{>0}$ satisfies $\epsilon(n) \leq 1/L_n^C$ for all $n$, and the algorithm has description complexity $D$, and
\item For all functions $S', T', B': \NN \to \NN$ and $\alpha: \NN \to \RR_{>0}$ that satisfy
\begin{align}
\begin{split}
S'(n) &\leq K(n)^C S(n) \\
T'(n) &\leq K(n)^C T(n) \\
\alpha(n) &\geq 1/L_n^C \\ 
B'(n) &\leq K(n)^C B(n),\label{eq:resource-ineq}
\end{split}
\end{align}
there is \emph{no} $(S',T',\alpha,B')$-policy cover learning algorithm (\cref{def:rf-rl}) for $\MM$, with description complexity at most $C \cdot D$,
\end{itemize}
where $L_n := H_n |\MA_n| |\MS_n|$.
\end{theorem}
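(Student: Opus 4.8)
The plan is to take the block MDP family to be $\MM := \MM(\delta)$ from \cref{def:mdp-family} for the schedule $\delta(n) := 2^{-(\log n)^{3/4}}$, and to derive a contradiction with \cref{assm:fine-lpn} by \emph{chaining} the two reductions \cref{lemma:realizable-regression-alg} (regression for $\MM(\delta)$ reduces to LPN at noise $1/2-\delta/2^{O(H^2)}$, up to a $1/\epsilon$ factor) and \cref{lemma:policy-cover-to-lpn} (LPN at noise $1/2-2^{O(H^2)}\delta^2$ reduces to policy-cover learning for $\MM(\delta)$ together with LPN at noise $1/2-\delta/2^{O(H^2)}$). With this schedule, $H_n = (\log n)^{1/3}$, so $L_n = H_n|\MA_n||\MS_n| = O(\log n)$ and $2^{O(H_n^2)} = 2^{O((\log n)^{2/3})}$ is subpolynomial in $n$ and far smaller than $\delta(n)^{-1}$; likewise $K(n) := (n/\delta(n))^{O(1)}$, the computability bound from \cref{lemma:family-is-computable}, is subpolynomial. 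For the first bullet I would instantiate \cref{lemma:realizable-regression-alg} with accuracy $\epsilon(n) := 1/L_n^C$ and with Lyubashevsky's algorithm (\cref{theorem:lyu}) as the internal LPN solver, wrapped by \cref{lemma:lpn-unknown-delta} to handle the unknown-noise-level instances the reduction produces. Since the relevant bias is $\delta(n)\epsilon(n)/2^{O(H_n^2)} = 2^{-(\log n)^{3/4}(1+o(1))} \geq 2^{-(\log n)^{1-\Omega(1)}}$, Lyubashevsky runs in time $2^{O(n/\log\log n)}$ using only $\poly(n)$ samples, and \cref{lemma:lpn-unknown-delta} blows up the time by only a $\poly(n)$ (sample-complexity-sized) factor. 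This produces a $(S,T,\epsilon,B)$-realizable regression algorithm with $S(n) = \poly(n)$, $T(n) = 2^{O(n/\log\log n)}$, $B(n) = \poly(n,\delta(n)^{-1})$, $\epsilon(n) \leq 1/L_n^C$, and some description complexity $D = D(C) = O_C(1)$.

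For the second bullet I would argue by contradiction, using the negation in its full strength: \emph{for every} admissible tuple $(\MM',K',S',T',B',D')$, either no realizable regression algorithm with those parameters exists or a policy cover learner for $\MM'$ exists with description complexity $\leq C D'$ and resources within the $K'^C$ slack of \cref{eq:resource-ineq}. I would then inductively build LPN solvers $\Alg_1,\Alg_2,\dots$, where $\Alg_k$ is efficient (time $2^{O(n/\log\log n)}$) on instances of noise bias at least some $\gamma_k$. Take $\Alg_1 :=$ Lyubashevsky, efficient at bias $\geq \gamma_1 := \delta(n)\epsilon(n)/2^{O(H_n^2)}$. Given $\Alg_{k-1}$: (i) pick $\delta^{(k-1)}$ (a subpolynomial factor above $\gamma_{k-1}$) so that both the regression sub-call (at bias $\delta^{(k-1)}\epsilon(n)/2^{O(H_n^2)}$) and the LPN sub-call (at bias $\delta^{(k-1)}/2^{O(H_n^2)}$) are at bias $\geq \gamma_{k-1}$, and use $\Alg_{k-1}$ (wrapped via \cref{lemma:lpn-unknown-delta}) inside \cref{lemma:realizable-regression-alg} to get a realizable regression algorithm for $\MM(\delta^{(k-1)})$ with accuracy $1/L_n^C$; (ii) apply the negated hypothesis to obtain a policy cover learner $\PC_{k-1}$ for $\MM(\delta^{(k-1)})$; (iii) feed $\PC_{k-1}$ and $\Alg_{k-1}$ into \cref{lemma:policy-cover-to-lpn} to obtain $\Alg_k$, efficient at bias $\geq \gamma_k := 2^{O(H_n^2)}(\delta^{(k-1)})^2$. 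In the regime $2^{O(H_n^2)} \ll \gamma_{k-1}^{-1}$, which holds throughout since $\gamma_1 = 2^{-\Theta((\log n)^{3/4})}$ already dominates $2^{O((\log n)^{2/3})}$, the quantity $g_k := \log_2(1/\gamma_k)$ satisfies $g_k = 2g_{k-1} - O((\log n)^{2/3})$, hence $g_k \asymp 2^{k-1}(\log n)^{3/4}$ grows geometrically. I would maintain three invariants: $\Alg_k$ runs in time $2^{O(n/\log\log n)}$; $\Alg_k$ uses $2^{O(g_k)}\poly(n)$ samples; and $\Alg_k$ has description complexity $d_k$ with $d_k = (C+1)d_{k-1} + O(C)$, so $d_k = (C+1)^{O(k)}O_C(1)$. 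The time invariant holds because the per-step blowup — the \cref{lemma:policy-cover-to-lpn} overhead $\poly(n,(\delta^{(k-1)})^{-1})$, the \cref{eq:resource-ineq} slack $K'^C$ with $K' = (n/\delta^{(k-1)})^{O(1)}$, and the \cref{lemma:lpn-unknown-delta} factor $S_{\Alg_{k-1}}$ — is bounded by $2^{O(g_{k-1})}\poly(n)$, while $\sum_{k} g_k \asymp g_{K(n)}$, so the product over $\leq K(n) = O(\log n)$ steps telescopes to $2^{O(n/\log\log n)}$. The description-complexity recursion comes from $\Alg_k$ wrapping $\PC_{k-1}$ (description $\leq C D_{k-1} = C(d_{k-1}+O(1))$) and $\Alg_{k-1}$.

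Choosing $K(n) = \Theta(\log n)$ exactly so that $\gamma_{K(n)} = 2^{-n/\log\log n}$, the final solver $\Alg_{K(n)}$ learns $n$-variable parities at noise level $1/2 - 2^{-n/\log\log n}$ in time $2^{O(n/\log\log n)}$. Because $K(n)\to\infty$ and $d_{K(n)} = (C+1)^{O(\log n)} = \poly(n)$, this is not a single uniform Turing machine, but it \emph{is} implementable by a fixed universal machine equipped with $\poly(n)$ bits of input-independent advice (a description of $\Alg_{K(n)}$), which contradicts \cref{assm:fine-lpn}. Hence no policy cover learner for $\MM$ with the resources and description complexity in the theorem statement can exist.

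I expect the chaining argument to be the main obstacle, for two intertwined reasons. First, it is essential that the base case be the \emph{sample-efficient} algorithm of \cref{theorem:lyu} rather than $\BKW$ (\cref{thm:bkw}) or \cref{lemma:brute-force-lpn}: the unknown-noise-level wrapper \cref{lemma:lpn-unknown-delta} multiplies running time by sample complexity at every regression-to-LPN step, so a base case whose sample complexity matched its running time would force a per-step time blowup of $2^{\Omega(n/\log\log n)}$, making the total time after $\Theta(\log n)$ steps $2^{\Omega(n\log n/\log\log n)}$ — useless against \cref{assm:fine-lpn}. Second, the subpolynomial $2^{O(H_n^2)}$ factors appearing in both \cref{lemma:realizable-regression-alg} and \cref{lemma:policy-cover-to-lpn} must be tracked carefully so that each step genuinely nearly squares the noise bias; this is exactly where the parameter choices $H_n = (\log n)^{1/3}$ and $\delta(n) = 2^{-(\log n)^{3/4}}$ are used, balancing $2^{O(H_n^2)} \ll \delta(n)^{-1}$ against $\delta(n) \geq 2^{-(\log n)^{1-\Omega(1)}}$ (the range where \cref{theorem:lyu} is efficient). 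A secondary burden is the description-complexity accounting needed to certify that the resulting non-uniform LPN algorithm uses only $\poly(n)$ advice — which is why the theorem statement tracks description complexity and why \cref{assm:fine-lpn} is phrased for non-uniform algorithms.
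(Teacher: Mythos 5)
Your proposal is correct and follows essentially the same route as the paper's proof: the family $\MM(\delta)$ with $H_n=(\log n)^{1/3}$ and base noise $2^{-(\log n)^{3/4}}$, Lyubashevsky's sample-efficient LPN solver wrapped by \cref{lemma:lpn-unknown-delta} as the base case, chaining \cref{lemma:realizable-regression-alg} and \cref{lemma:policy-cover-to-lpn} through $\Theta(\log n)$ rounds of noise-squaring, and the description-complexity recursion $(C+1)d_{k-1}+O(1)$ yielding a $\poly(n)$-advice non-uniform algorithm that violates \cref{assm:fine-lpn}. The one caveat is that your contradiction argument only shows the regression-to-policy-cover implication fails for \emph{some} $\MM(\delta^{(k)})$ in the chain, so the witnessing family must be taken existentially (as the paper does, via ``there is some $\delta(n)$'') rather than fixed upfront as $\MM(\delta_0)$ with both bullets asserted for it.
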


Recall that $S$ and $T$ measure the sample and time complexity, respectively, of the regression algorithm, and $\epsilon$ measures the accuracy of the output predictor. Similarly, $S'$ and $T'$ measure the sample and time complexity of the policy cover learning algorithm, and $\alpha$ measures the approximation factor of the output policy cover. The functions $B$ and $B'$ measure the description complexity of the circuits produced by the regression algorithm and RL algorithm respectively, and $D$ measures the description complexity of the algorithms themselves (\cref{def:desc-complexity}); while imposing such bounds is necessary for complete rigor, we do not see them as morally important to the result.

Quantitatively, the important parameter in the above statement is $L_n$, which measures the size of the latent MDPs. Thus, the theorem is contrasting regression with inverse-polynomial accuracy against reward-free RL with inverse-polynomial coverage. In words, \cref{thm:main-separation} essentially asserts that there is a family of block MDPs $\MM$ where reward-free RL with inverse-polynomial coverage requires either \emph{strictly more samples} or \emph{strictly more time} than regression with inverse-polynomial accuracy.

\begin{remark}[Why is this a computational separation?]
The either/or guarantee of \cref{thm:main-separation} may seem unsatisfying: the goal was to prove a computational separation, and at first glance the guarantee of \cref{thm:main-separation} looks as though it could be proved via a purely information-theoretic argument (i.e. removing the requirement $T'(n) \leq K(n)^C T(n)$ in \cref{eq:resource-ineq}). This is not the case, since for \emph{every} $K(n)$-computable block MDP family $\MM$, there is a policy cover learning algorithm with statistical complexity $S'(n)$ at most $\poly(K(n))$, and with $\alpha(n) \geq 1/\poly(L_n)$. Thus, computation is indeed inherent to \cref{thm:main-separation}.

Nonetheless, it is natural to ask whether it's possible to remove the restriction on $S'(n)$, which only arises for a subtle technical reason \--- see \cref{sec:overview-chaining} for discussion of this reason.
\end{remark}

To prove \cref{thm:main-separation}, we will show that there is some $\delta(n)$ so that the block MDP family $\MM(\delta)$ (\cref{def:mdp-family}) satisfies the above criteria. 

\begin{namedproof}{\cref{thm:main-separation}}
Let $H_n$, $\MS_n$, $\MA_n$, and $\Phi_n$ be as defined in \cref{def:mdp-family} for the block MDP family $\MM(\delta)$ (note that none of these quantities depend on the choice of function $\delta$). Setting $L_n := H_n|\MA_n||\MS_n|$ as in the theorem statement, we have that $L_n \leq 4\log n$ for all $n \in \NN$. Also recall from \cref{lemma:family-is-computable} that for any function $\delta$, $\MM(\delta)$ is $(n/\delta(n))^{C_{\ref{lemma:family-is-computable}}}$-computable.

Suppose that there is a constant $C$ so that for every function $\delta(n)$, for any functions $S,T,B:\NN\to\NN$ so that there is a $(S, T, 1/L_n^C, B)$-realizable regression algorithm for $\MM(\delta)$, there is also a $(S',T',\alpha,B')$-policy cover learning algorithm for $\MM(\delta)$ satisfying the set of inequalities \cref{eq:resource-ineq} with $K(n) := (n/\delta(n))^{C_{\ref{lemma:family-is-computable}}}$, and with at most $C$ times the description complexity (\cref{def:desc-complexity}) of the regression algorithm. We will derive a contradiction to \cref{assm:fine-lpn}.

We define a sequence of functions $\delta_k: \NN \to (0,1/2)$ for $k \in \ZZ_{\geq 0}$ as follows. Define $\delta_0(n) := 2^{-(\log n)^{3/4}}$. For $k \geq 0$ define 
\begin{equation}
\delta_{k+1}(n) := C_{\ref{lemma:triangle-lpn}}(H_n) 2^{2H_n+4} L_n^{2C} C_{\ref{lemma:f-corr}}(H_n)^2 \delta_k(n)^2,
\label{eq:deltak-def}
\end{equation}
where $C_{\ref{lemma:triangle-lpn}} : \BN \to \BN$ and $C_{\ref{lemma:f-corr}} : \BN \to \BN$ are mappings defined in \cref{lemma:triangle-lpn} and \cref{lemma:f-corr}, respectively. 
Note that we can therefore write $\delta_k(n) = 2^{(2^k-1)f(n)-2^k(\log n)^{3/4}}$ where we have defined $f(n) = C_{\ref{lemma:triangle-lpn}}(H_n) 2^{2H_n+4} L_n^{2C} C_{\ref{lemma:f-corr}}(H_n)^2$. By inspecting the definitions of $C_{\ref{lemma:triangle-lpn}},C_{\ref{lemma:f-corr}}$, we may observe that there is some fixed $n_1 \in \NN$ so that $f(n) \leq 2^{O(H_n^2)} (4\log n)^{2C} \leq 2^{(\log n)^{3/4}/2} = 1/\sqrt{\delta_0(n)}$ for all $n \geq n_1$. Thus, for each $n \geq n_1$, we can check by induction that for all $k \geq 0$, \[\delta_{k+1}(n) \leq \frac{\delta_k(n)^2}{\sqrt{\delta_0(n)}} \leq \delta_k(n)^{3/2} \leq \delta_0(n)^{3/2} \leq \delta_0(n).\]
Without loss of generality, we can also let $n_1$ be sufficiently large so that $1/L_n^C \geq 2^{-n/8}$ and $1/L_n^C \geq 2^{3-H_n}$ and $2^{H_n} L_n^C \cdot 4C_{\ref{lemma:f-corr}}(H_n)\delta_0(n) < 1/2^{|\Gamma(H_n)|+2H_n+7}$ for all $n \geq n_1$. We can also check by induction that $\delta_k(n) \geq 2^{-2^k (\log n)^{3/4}}$ for all $k \geq 0$ and $n \in \NN$. Hence, for each $k \geq 0$, there is some minimal $n_2(k) \in \NN$ such that $\delta_k(n) \geq 2^{-n/8}$ for all $n \geq n_2(k)$. Note that $n_2(k+1) \geq n_2(k)$ for all $k \geq 0$.

We define an infinite sequence of algorithms $\Alg_0, \Alg_1,\dots,\Alg_k,\dots$, where $\Alg_k$ learns noisy parities with sample complexity $S_k(n,\delta,\eta)$ and time complexity $T_k(n,\delta,\eta)$, as follows. First, let $\Alg_0 := \Lyu$ be the algorithm guaranteed by \cref{theorem:lyu} with parameters $c = 3/4$ and $\epsilon = 1$. By \cref{theorem:lyu} and definition of $\delta_0$, we know that $\Alg_0$ learns noisy parities (per \cref{def:lpn-alg-advice}) with sample complexity $S_0$ and time complexity $T_0$ satisfying $S_0(n,\delta_0(n),1/2) \leq n^2$ and $T_0(n,\delta_0(n),1/2) \leq 2^{O(n/\log \log n)}$ for all $n \in \NN$. 

Now fix any $k \geq 0$; we will construct $\Alg_{k+1}$ using $\Alg_k$. We do this in several steps. First, by \cref{lemma:lpn-unknown-delta} applied with $\Alg_k$, there is an algorithm $\wAlg_k$ for learning noisy parities with unknown noise level (per \cref{def:lpn-unknown-noise}), with sample complexity $\til S_k(n,\delta,\eta)$ and time complexity $\til T_k(n,\delta,\eta)$ satisfying
\[\til S_k(n,\delta,1/n) = 4S_k(n,\delta,1/2) \log(2n) + 9\delta^{-2} \log\left(32n S_k(n,\delta,1/2)\log(2n)\right) \leq O(n\delta^{-2}) \cdot S_k(n,\delta,1/2)\]
\[\til T_k(n,\delta,1/n) \leq O(n^2\delta^{-2}) \cdot S_k(n,\delta,1/2) T_k(n,\delta, 1/2)\]
for all $\delta \in (0,1/2)$ and $n \geq n_2$, where $n_2 \in \NN$ is an absolute constant (independent of $k$). Also, we can bound the description complexity of $\wAlg_k$ as $\dc(\wAlg_k) \leq \dc(\Alg_k) + O(1)$.

Second, we apply \cref{lemma:realizable-regression-alg} with algorithm $\wAlg_k$, noise level function $\til \delta_k(n) := 2^{H_n}L_n^C \cdot 4C_{\ref{lemma:f-corr}}(H_n) \delta_k(n)$, and error function $\epsilon(n) := 1/L_n^C$. As shown above, we have $\til \delta_k(n) \geq \delta_k(n) \geq 2^{-n/8}$ and $\epsilon(n) \geq 2^{-n/8}$ for all $n \geq \max(n_1, n_2(k))$. Thus, we get that there is a $(\Sreg,\Treg,\epsilon,\Breg)$-realizable regression algorithm $\Reg_k$ for $\MM(\til\delta_k)$ where
\begin{align*}
\Sreg(n) 
&\leq 4 \cdot \til S_k\left(n, \frac{\til \delta_k}{L_n^C \cdot 4C_{\ref{lemma:f-corr}}(H_n)},1/n\right) + C_{\ref{lemma:realizable-regression-alg}} (L_n\log(n))^{C_{\ref{lemma:realizable-regression-alg}} \cdot C} \\ 
&\leq O(n2^{-2H_n}\delta_k^{-2}) \cdot S_k\left(n, 2^{H_n}\delta_k,1/2\right) \\ 
&\leq (n/\delta_k)^{C_2} \cdot S_k\left(n, \delta_k,1/2\right),
\end{align*}
and similarly
\[\Treg(n) \leq (n/\delta_k)^{C_2} \cdot S_k\left(n, \delta_k,1/2\right)T_k\left(n, \delta_k,1/2\right)\]
and
\[\Breg(n) \leq (n/\delta_k)^{C_2},\]
for all $n \geq \max(n_1, n_2(k), C_{\ref{lemma:realizable-regression-alg}})$, and some universal constant $C_2>0$. Also, $\dc(\Reg_k) \leq \dc(\wAlg_k) + O(1)$. In the above bounds, we are using the guarantees of \cref{lemma:realizable-regression-alg} together with the definition of $H_n$, the fact that $L_n^C \leq n$ for large $n$, and the fact that (without loss of generality) $S_k$ and $T_k$ are non-increasing in $\delta$. 

Third, by the assumption we made at the beginning of the proof, there is a $(S'_k,T'_k,\alpha_k,B'_k)$-policy cover learning algorithm $\PC_k$ for $\MM(\til\delta_k)$ where $\alpha(n) \geq 1/L_n^C$ and
\[S'_k(n) \leq (n/\delta_k)^{C_{\ref{lemma:family-is-computable}} \cdot C} \Sreg(n)\]
\[T'_k(n) \leq (n/\delta_k)^{C_{\ref{lemma:family-is-computable}} \cdot C} \Treg(n)\]
\[B'_k(n) \leq (n/\delta_k)^{C_{\ref{lemma:family-is-computable}} \cdot C} \Breg(n)\]
for all $n \in \NN$. 
Moreover, by assumption $\dc(\PC_k) \leq C \cdot \dc(\Reg_k)$. 
Define $n_3(k)$ to be the minimal positive integer such that $S'_k(n) \leq 2^n$ for all $n \geq n_3(k)$.

Fourth and lastly, we apply \cref{lemma:policy-cover-to-lpn} with base LPN algorithm $\wAlg_k$, policy cover learning algorithm $\PC_k$, and noise level function $\til\delta_k$. By definition of $n_1$, we can check that \[\til \delta_k(n) = 2^{H_n}L_n^C \cdot 4C_{\ref{lemma:f-corr}}(H_n) \delta_k(n) \leq 2^{H_n}L_n^C \cdot 4C_{\ref{lemma:f-corr}}(H_n) \delta_0(n) < 1/2^{|\Gamma(H_n)| + 2H_n+7}\] for all $n \geq n_1$. Also, $\alphaPC(n) \geq 2^{3-H_n}$ for all $n \geq n_1$, $\SPC(n) \leq 2^n$ for all $n \geq n_3(k)$, and $\til\delta_k(n) \geq \delta_k(n) \geq 2^{-n/4}$ for all $n \geq n_2(k)$. Thus, there is an algorithm $\Alg_{k+1}$ for learning noisy parities with sample complexity $S_{k+1}(n,\delta,\eta)$ and time complexity $T_{k+1}(n,\delta,\eta)$ satisfying
\begin{align*}
S_{k+1}(n,C_{\ref{lemma:triangle-lpn}}(H)\til\delta_k^2,1/2) &\leq (n/\til\delta_k)^{C_{\ref{lemma:policy-cover-to-lpn}}} \cdot \left(S'_k(n) + \til S_k(n,\til\delta_k/(2^{H}C_{\ref{lemma:f-corr}}(H)),1/n)\right) \\ 
&\leq (n/\delta_k)^{C_4} S_k(n,\delta_k,1/2)
\end{align*}
and
\begin{align*}
T_{k+1}(n,C_{\ref{lemma:triangle-lpn}}(H)\til\delta_k^2,1/2) 
&\leq (nB_k'(n)/\delta_k)^{C_{\ref{lemma:policy-cover-to-lpn}}} \cdot \left(T'_k(n) + \til T_k(n,\til\delta_k/(2^{H}C_{\ref{lemma:f-corr}}(H)),1/n)\right) \\ 
&\leq (nB_k'(n)/\delta_k)^{C_3} \cdot S_k(n,\delta_k,1/2) T_k(n,\delta_k,1/2) \\ 
&\leq (n/\delta_k)^{C_4} \cdot S_k(n,\delta_k,1/2) T_k(n,\delta_k,1/2)
\end{align*}
for all $n \geq \max(n_1, n_2(k), n_3(k), C_{\ref{lemma:realizable-regression-alg}},C_{\ref{lemma:policy-cover-to-lpn}})$, and some universal constants $C_3, C_4>0$. In the second inequality in each display we are substituting in the preceding bounds on $S'_k, T'_k, \Sreg, \Treg, \til S_k, \til T_k$; we are also using the fact that $\til \delta_k/(2^{H}C_{\ref{lemma:f-corr}}(H)) \geq \delta_k$ and again using monotonicity of $S_k, T_k$. In the third inequality in the second display we are using that $B'_k(n) \leq (n/\delta_k)^{C_{\ref{lemma:family-is-computable}} \cdot C} \Breg(n) \leq (n/\delta_k)^{C_{\ref{lemma:family-is-computable}} \cdot C+ C_2}$. Moreover, by construction, $\dc(\Alg_{k+1}) \leq \dc(\PC_k) + \dc(\wAlg_k) + O(1)$, and thus $\dc(\Alg_{k+1}) \leq (C+1)\dc(\Alg_k) + O(1) \leq \exp(O(k))$. 

By \cref{eq:deltak-def} and definition of $\til\delta_k$, we have that $C_{\ref{lemma:triangle-lpn}}(H) \til \delta_k^2 = \delta_{k+1}$. Thus, we have
\begin{align}
\begin{split}
S_{k+1}(n,\delta_{k+1},1/2) &\leq n^{C_4} \delta_k^{-C_4} S_k(n,\delta_k,1/2) \\
T_{k+1}(n,\delta_{k+1},1/2) &\leq n^{C_4} \delta_k^{-C_4} S_k(n,\delta_k,1/2)T_k(n,\delta_k,1/2)
\label{eq:st-recursion}
\end{split}
\end{align}
for all $n \geq \max(n_1, n_2(k), n_3(k), C_{\ref{lemma:realizable-regression-alg}},C_{\ref{lemma:policy-cover-to-lpn}})$. Now recall that $\delta_{k+1}(n) \leq \delta_k(n)^{3/2}$ for all $k \geq 0$ and $n \geq n_1$. Thus, for any $k \geq 0$, $n \geq n_1$, and $e > 0$, it holds that \begin{equation} \prod_{i=0}^k \delta_i(n)^{-e} \leq \delta_{k+1}(n)^{-e\sum_{i=1}^{k+1} (2/3)^i} \leq \delta_{k+1}(n)^{-2e}.\label{eq:delta-prod}\end{equation}
Iterating the first recursion in \cref{eq:st-recursion} and applying \cref{eq:delta-prod}, we get for any $k \geq 0$ that
\begin{equation} S_k(n,\delta_k,1/2) \leq S_0(n,\delta_0,1/2) \cdot n^{C_4 k} \prod_{i=0}^{k-1} \delta_i(n)^{-C_4} \leq n^{C_4 k + 2} \delta_k(n)^{-2C_4}
\label{eq:sample-bound-iterated}
\end{equation}
for all $n \geq \max(n_1, \max_{i < k} n_2(i), \max_{i < k} n_3(i), C_{\ref{lemma:realizable-regression-alg}},C_{\ref{lemma:policy-cover-to-lpn}})$. It follows that
\[S'_k(n) \leq n^{2C}\Sreg(n) \leq (n/\delta_k)^{C_2+2C} \cdot S_k(n,\delta_k,1/2) \leq n^{C_5 k} \delta_k(n)^{-C_5}\]
for all such $n$, where $C_5\geq 1$ is a universal constant. Thus, we have
\[n_3(k) \leq \max(n_1, \max_{i < k} n_2(i), \max_{i < k} n_3(i), C_{\ref{lemma:realizable-regression-alg}},C_{\ref{lemma:policy-cover-to-lpn}}, n_4(k))\]
where $n_4(k)$ is the minimal positive integer so that $n^{C_5 k} \delta_k(n)^{-C_5} \leq 2^{n/8}$ for all $n \geq n_4(k)$. By induction and the fact that $n_4(k+1) \geq n_4(k)$ for all $k \geq 0$, we get that
\[n_3(k) \leq \max(n_1, \max_{i < k} n_2(i), C_{\ref{lemma:realizable-regression-alg}},C_{\ref{lemma:policy-cover-to-lpn}}, n_4(k))\]
for all $k \geq 0$. Moreover $n_2(i) \leq n_2(k) \leq n_4(k)$ for all $0 \leq i \leq k$. Thus, \cref{eq:st-recursion,eq:sample-bound-iterated} in fact hold for all $n \geq \max(n_1, C_{\ref{lemma:realizable-regression-alg}},C_{\ref{lemma:policy-cover-to-lpn}}, n_4(k))$. In particular, applying \cref{eq:sample-bound-iterated} to the second recursion in \cref{eq:st-recursion} and iterating gives that for all $k \geq 0$ and all $n \geq \max(n_1, C_{\ref{lemma:realizable-regression-alg}},C_{\ref{lemma:policy-cover-to-lpn}}, n_4(k))$,
\begin{align}
T_k(n,\delta_k,1/2)
&\leq n^{C_4k+2}\delta_{k-1}^{-3C_4} T_{k-1}(n,\delta_{k-1},1/2) \\ 
&\leq n^{C_4k^2+2} \left(\prod_{i=0}^{k-1} \delta_i^{-3C_4} \right) \cdot T_0(n,\delta_0,1/2) \\ 
&\leq n^{C_4k^2+2} \delta_k^{-6C_4} 2^{O(n/\log \log n)}
\label{eq:time-bound-iterated}
\end{align}
where the last inequality uses \cref{eq:delta-prod}.

Now for each $n \in \NN$, let $k^\st(n)$ be the minimal positive integer so that $\delta_{k^\st(n)}(n) \leq 2^{-n/\log \log n}$. Note that $k^\st(n) \leq \log n$, so for all $m \geq n$ and all sufficiently large $n$, it holds that $m^{C_5 k^\st(n)} \leq m^{C_5 \log m} \leq 2^{m/16}$. Additionally, either $k^\st(n) = 0$ or $\delta_{k^\st(n)}(n) \geq \delta_{k^\st(n)-1}(n)^2 > 2^{-2n/\log \log n}$ by minimality of $k^\st(n)$. In both cases it holds that $\delta_{k^\st(n)}(n) \geq 2^{-2n/\log \log n}$ for sufficiently large $n$. Define $h(x) = \log \delta_{k^\st(n)}(x)$ and $g(x) = -x/(16C_5)$. Then $h(n) \geq g(n)$ for sufficiently large $n$.
Also, since $h(x) = (2^{k^\st(n)}-1)f(x)-2^{k^\st(n)}(\log x)^{3/4}$, we have for all $x \geq n$ that
\[h'(x) \geq -2^{k^\st(n)} \frac{d}{dx} (\log x)^{3/4} = -\frac{3}{4} 2^{k^\st(n)} \frac{1}{x(\log x)^{1/4}} \geq -\frac{3}{4(\log n)^{1/4}} \geq -1/(16C_5) = g'(x)\]
where the second inequality uses that $k^\st(n) \leq \log(n)$, and the last inequality holds so long as $n$ is sufficiently large. We conclude that $h(m) \geq g(m)$ for all $m \geq n$, and thus $\delta_{k^\st(n)}(m) \geq 2^{-m/(16C_5)}$ for all $m \geq n$. But then $m^{C_5 k^\st(n)} \delta_{k^\st(n)}(m)^{-C_5} \leq 2^{m/8}$ for all $m \geq n$, and hence $n_4(k^\st(n)) \leq n$. This means that $n \geq \max(n_1, C_{\ref{lemma:realizable-regression-alg}},C_{\ref{lemma:policy-cover-to-lpn}}, n_4(k^\st(n)))$, so applying \cref{eq:time-bound-iterated}, we get that for all sufficiently large $n$,
\[T_{k^\st(n)}(n,\delta_{k^\st(n)}(n),1/2) \leq n^{C_4 (k^\st(n))^2 + 2} \delta_{k^\st(n)}(n)^{-6C_4} 2^{O(n/\log \log n)} \leq 2^{O(n/\log \log n)}\]
where the final inequality uses the previously-derived lower bound on $\delta_{k^\st(n)}(n)$ and upper bound on $k^\st(n)$. 

Consider the non-uniform algorithm $\Alg^\st$ for learning parities with noise level $2^{-n/\log \log n}$, defined as follows: on input of dimension $n$, pad the noise from $1/2-2^{-n/\log \log n}$ to $1/2 - \delta_{k^\st(n)}$ and simulate the algorithm $\Alg_{k^\st(n)}$. Recall that we showed $\dc(\Alg_k) \leq \exp(O(k))$ for all $k \geq 0$. Thus, we have $\dc(\Alg_{k^\st(n)}) \leq \poly(n)$. So $\Alg^\st$ can be implemented with $\poly(n)$ advice; simulating $\Alg_{k^\st(n)}$ incurs a multiplicative overhead of only $\poly(n)$ in the time complexity (see discussion after \cref{def:desc-complexity}). So the time complexity of $\Alg^\st$ is $2^{O(n/\log \log n)}$. This contradicts \cref{assm:fine-lpn}.
\end{namedproof}


\section{Reducing regression to low-noise LPN}\label{sec:regression-to-lpn}


The main result of this section is \cref{lemma:realizable-regression-alg}, which constitutes the first half of the separation between reward-free RL and regression. It states that $\epsilon$-accurate regression in the family of MDPs $\MM(\delta)$ (\cref{def:mdp-family}) can be efficiently reduced to learning parities with noise level $1/2-\delta\epsilon/2^{O(H^2)}$.

\begin{lemma}\label{lemma:realizable-regression-alg}
There is a constant $C_{\ref{lemma:realizable-regression-alg}}$ with the following property. Let $n_0 \in \NN$. Let $\Alg$ be an algorithm for learning noisy parities with unknown noise level, with time complexity $T(n,\delta,\eta)$ and sample complexity $S(n,\delta,\eta)$ (\cref{def:lpn-unknown-noise}). Fix any $\delta: \NN \to (0, 1/2)$ and $\epsilon: \NN \to (0,1)$ with $\epsilon(n)  \geq 2^{-n/8}$ and $\delta(n) \geq 2^{-n/8}$ for all $n \geq n_0$. Then there is a $(\Sreg,\Treg,\epsilon,\Breg)$-realizable regression algorithm (\cref{def:regression-algorithm}) for $\MM(\delta)$ where 
\begin{subequations}
\begin{align}
\begin{split}
\Sreg(n) &\leq 4\cdot S\left(n,\frac{\delta\epsilon}{4C_{\ref{lemma:f-corr}}(H)},\frac{1}{n}\right) + C_{\ref{lemma:realizable-regression-alg}}\cdot\left(\frac{\log n}{\epsilon}\right)^{C_{\ref{lemma:realizable-regression-alg}}}\label{eq:sreg}
\end{split}\\
\begin{split}
\Treg(n) &\leq C_{\ref{lemma:realizable-regression-alg}}\log (n) \cdot T\left(n,\frac{\delta\epsilon}{4C_{\ref{lemma:f-corr}}(H)},\frac{1}{n}\right) \\ 
&\qquad+ C_{\ref{lemma:realizable-regression-alg}}\cdot \left(\frac{n}{\delta\epsilon}\right)^{C_{\ref{lemma:realizable-regression-alg}}} \cdot S\left(n,\frac{\delta\epsilon}{4C_{\ref{lemma:f-corr}}(H)},\frac{1}{n}\right)\label{eq:treg}
\end{split}\\
\begin{split}
\Breg(n) &\leq C_{\ref{lemma:realizable-regression-alg}}\cdot \left(\frac{n}{\delta}\right)^{C_{\ref{lemma:realizable-regression-alg}}}
\label{eq:breg}
\end{split}
\end{align}
\end{subequations}
for all $n \geq \max(n_0, C_{\ref{lemma:realizable-regression-alg}})$.
\end{lemma}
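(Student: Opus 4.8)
The plan is to have the regression algorithm produce $O(H)$ candidate predictors and then select the best one on held-out data, where all parameters of the family ($\delta(n)$, $H=H_n$, $N_n$, and the constant $C_{\ref{lemma:f-corr}}(H)$) are computable from $n$. Fix $n$, an MDP $M^\sk_{n,N_n,H_n,\delta}\in\MM_n$, a step $h\le H_n$, a law $\beta\in\Delta(\MS[h])$, and let $f(k,b_h):=\EE[y\mid\phi^\st(x)=(h,k,b_h)]$. The algorithm splits its $\Sreg(n)$ samples into a learning portion $L$ and a validation portion $V$. One candidate is the constant predictor $\MR_0:=\clip{\frac1{|L|}\sum_{i\in L}y^i}{[0,1]}$. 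By \cref{lemma:f-corr}, either $f$ is $L^2(\beta)$-close (within the target accuracy, up to constants) to a constant, in which case $\MR_0$ has excess risk $O(\epsilon)$ with high probability, or there is a ``good'' tuple $(m^\star,r^\star)\in(\{0,\dots,h-1\}\times\{0,1\})\setminus\{(0,0)\}$ for which $g_{m',r'}(b):=\sum_{i=1}^{m'}b_i+r'b_h$ satisfies $|\EE[(2f-1)(2g_{m^\star,r^\star}-1)]|\ge\epsilon/C_{\ref{lemma:f-corr}}(H)$, the expectation being over $(k,b_h)\sim\beta$ and $b_{1:h-1}$ uniform subject to $\sum_{i<h}\mathbbm{1}[b_i=1]=k$. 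Since the algorithm cannot tell which tuple is good, it will try all $O(h)$ of them.

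For each tuple $(m',r')$, the algorithm builds an LPN instance \emph{without knowing $\sk$}. By \cref{lemma:decoding-error-prob} (equivalently \cref{lemma:realizability-error-decomp}) and the choice $N_n=3\delta^{-4}n$, we may pretend the emissions $x^i$ come from the unconditioned distribution $\til\MD^\sk_{n,N_n,h,\delta}$ rather than $\MD^\sk_{n,N_n,h,\delta}$, at a cost of only $\poly(|L|)e^{-\Omega(n)}$ in the failure probability. In that view the $j$-th main row of $x^i$ is $(u_j(x^i),\langle u_j(x^i),\sk\rangle+e^i_j+b^i_j)$ with $e^i\sim\CBer(h,\delta)$ and $u_{1:h}$ independent uniform, all independent of the $b^i_j$ and of $y^i$ given the latent state. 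The algorithm forms $v^i:=\sum_{j\le m'}u_j(x^i)+r'u_h(x^i)$ and $w^i:=\sum_{j\le m'}y_j(x^i)+r'y_h(x^i)+y^i$; then $w^i=\langle v^i,\sk\rangle+\xi^i$ with $\xi^i=\big(\sum_{j\le m'}e^i_j+r'e^i_h\big)+\big(g_{m',r'}(b^i)+y^i\big)$, and since $v^i$ is uniform and independent of $\xi^i$, the pair $(v^i,w^i)$ is a genuine $\LPN_{n,\delreal}(\sk)$ sample with a fixed $\delreal$. Because $(m',r')\ne(0,0)$, the index set of the first summand is nonempty, so by \cref{def:cber} it is $\Ber(1/2-\delta)$; a short computation gives that $g_{m',r'}(b)+y$ has bias $\frac12\EE[(2f-1)(2g_{m',r'}-1)]$, so by \cref{lem:bernoulli-convolve}, $|\delreal|=\delta\,|\EE[(2f-1)(2g_{m',r'}-1)]|$, which for the good tuple is at least $\delta\epsilon/C_{\ref{lemma:f-corr}}(H)\ge\delta\epsilon/(4C_{\ref{lemma:f-corr}}(H))$. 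The algorithm runs $\Alg$ on $(v^i,w^i)_{i\in L}$ with claimed noise bound $\delta\epsilon/(4C_{\ref{lemma:f-corr}}(H))$ and failure probability $1/n$, obtaining a candidate $\hat\sk$; this costs $S(n,\delta\epsilon/(4C_{\ref{lemma:f-corr}}(H)),1/n)$ samples and $T(n,\delta\epsilon/(4C_{\ref{lemma:f-corr}}(H)),1/n)$ time, and for the good tuple recovers $\hat\sk=\sk$.

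Given $\hat\sk$, the decoder $\Dec^{\hat\sk}_{n,N_n,h}$ coincides with $\phi^\st$ when $\hat\sk=\sk$; using a fresh sub-batch of $L$ the algorithm estimates $\hat f$ by averaging $y^i$ over those $i$ whose emission decodes to each state (ignoring states of empirical frequency below $\epsilon/\poly(h)$, which contribute $O(\epsilon)$ to the risk), and sets $\MR^{(m',r')}(x):=\hat f(\Dec^{\hat\sk}_{n,N_n,h}(x))$, a circuit of size $\poly(n,\delta^{-1})$ by \cref{lemma:family-is-computable}. Finally it evaluates $\MR_0$ and all the $\MR^{(m',r')}$ on $V$ and outputs the one minimizing $\frac1{|V|}\sum_{i\in V}(\MR(x^i)-y^i)^2$. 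Since $\EE[(\MR(x)-y)^2]=\EE[(\MR(x)-f(\phi^\st(x)))^2]+\EE[(f(\phi^\st(x))-y)^2]$ with the second term independent of $\MR$, a union bound over the $O(h)=O(\log n)$ candidates together with Hoeffding shows the selected predictor's excess risk is within $\epsilon/3$ of the smallest candidate's, hence at most $\epsilon$ by the dichotomy above. Summing the three sample/time contributions and the $\poly(|L|)e^{-\Omega(n)}$ and concentration failure terms yields \cref{eq:sreg}, \cref{eq:treg}, \cref{eq:breg} for a suitable constant $C_{\ref{lemma:realizable-regression-alg}}$, with total success probability $\ge1-1/n-o(1)\ge2/3$ for $n\ge\max(n_0,C_{\ref{lemma:realizable-regression-alg}})$.

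I expect the crux to be the noise analysis of the aggregated LPN samples together with matching constants against \cref{lemma:f-corr}: one must verify that combining rows $\{1,\dots,m'\}$ (and possibly $h$) of the emission leaves the noise bias at $\Theta(\delta\cdot\mathrm{corr}(f,g_{m',r'}))$ rather than collapsing it — this is precisely why the construction puts the correlated noise $\CBer(h,\delta)$ on the main rows, since independent per-row noise would give bias $\Theta(\delta^{m'+r'})$, which would push the resulting LPN noise toward $1/2$ and destroy the reduction — and that both the ``near-constant'' branch of \cref{lemma:f-corr} and the empirical estimation of $\hat f$ from decoded states fit within the target accuracy $\epsilon$ after the held-out selection. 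The passage from $\MD^\sk$ to $\til\MD^\sk$, the fact that an LPN solver returns the \emph{exact} secret (so $\Dec^{\hat\sk}=\phi^\st$ with no global relabeling of latent states to undo), and the bookkeeping of the three sample budgets are comparatively routine.
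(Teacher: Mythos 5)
Your overall architecture matches the paper's (\cref{lemma:regression-alg}): a constant candidate, one LPN-based candidate per aggregation tuple, estimation of $\hat f$ by decoding with the recovered key, and selection on held-out data; the use of $\CBer(h,\delta)$ to keep the aggregated noise at bias $\delta$, and the $\MD^\sk\to\til\MD^\sk$ bookkeeping, are also as in the paper. However, there is a genuine gap in your dichotomy. You claim that \cref{lemma:f-corr} yields ``either $f$ is near-constant or some tuple $(m^\star,r^\star)\neq(0,0)$ has $|\EE[(2f-1)(2g_{m^\star,r^\star}-1)]|\ge \epsilon/C_{\ref{lemma:f-corr}}(H)$,'' and accordingly you \emph{always} add the label $y^i$ to the aggregated response. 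But the hypotheses of \cref{lemma:f-corr} are biases under \emph{both} $\mu_{\beta,f}$ and $\mu_{\beta,\alpha}$; its contrapositive only says that \emph{some} constraint in the combined family is violated, and the violated one may be a $\mu_{\beta,\alpha}$ constraint, i.e.\ a bias of the prefix sum itself (weighted by $|\alpha-\tfrac12|$), with all of your label-correlations $\EE[(2f-1)(2g_{m,r}-1)]$ essentially zero. This is why the paper's candidate set $I$ carries a third index $r'\in\{0,1\}$ and its $\LFC$ call with $r'=0$ builds the LPN sample \emph{without} the label, exploiting the skew of $\beta$; your proposal keeps only the $r'=1$ half.

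The omission is fatal, not cosmetic. Take $h=2$ and write states as $(k,b)\in\{0,1\}^2$, so $b_1=k$. Choose $\beta(0,0)=\beta(0,1)=0.4$, $\beta(1,0)=\beta(1,1)=0.1$, and $f(0,\cdot)=0.55$, $f(1,\cdot)=0.7$, so that $\beta(s)\bigl(2f(s)-1\bigr)\equiv 0.04$ is constant across states. Since the three nontrivial prefix sums $g_{1,0}=b_1$, $g_{0,1}=b_2$, $g_{1,1}=b_1+b_2$ are (up to sign) the nonconstant characters of $\BF_2^2$, every one of your correlations $\sum_s\beta(s)(2f(s)-1)(2g_{m,r}(s)-1)$ vanishes exactly, so each of your constructed LPN instances has noise bias $0$ and $\Alg$ returns an arbitrary key; meanwhile $\sum_s\beta(s)(f(s)-\alpha)^2\approx 3.6\times 10^{-3}$, which exceeds the target $\epsilon(n)=1/L_n^{C}$ for large $n$, so the constant candidate is also bad. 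Hence no candidate in your list is guaranteed accurate and the validation step cannot save you. In the paper's algorithm the tuple $(m,r,r')=(1,0,0)$ handles this case: here $b_1$ has bias $0.3$, so the unlabeled sample $(u_1,y_1)$ is an LPN sample with bias $2\delta\cdot 0.3\ge 2\delta\epsilon$, and $\sk$ is recovered. To repair your proof you must add, for every $(m,r)\neq(0,0)$, the candidate built from the aggregated sample without the label (and route the resulting accuracy requirement through the $\mu_{\beta,\alpha}$ constraints exactly as in \cref{lemma:learn-from-corr}); the rest of your argument then goes through along the paper's lines.
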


In the above display, we have written $H = H_n$ (defined in \cref{def:mdp-family}), $\delta = \delta(n)$, and $\epsilon=\epsilon(n)$ for notational simplicity. From the definition of realizable regression, we may observe that each regression sample $(Z^i, F^i)$ has distribution
\[(Z^i,F^i) \sim \sum_{s \in \MS[h]} \beta(s) \MD^\sk_{n,N,h,\delta}(\cdot|s) \times \Ber(f(s))\]
for some fixed, unknown secret $\sk \in \BF_2^n$ (corresponding to a particular block MDP in $\MM(\delta)$), step $h \in [H]$, latent state distribution $\beta \in \Delta(\MS[h])$, and link function $f: \MS[h] \to [0,1]$. The goal of regression is to learn $\EE[F|Z]$, which we accomplish by either (a) learning $\sk$ and then directly estimating $f$, or (b) learning a constant predictor. The challenge is to show that the first method is tractable whenever $f$ is non-trivially far from constant. There are two main conceptual ingredients:

\begin{enumerate}
\item First, recall that an emission $Z \sim \MD^\sk_{n,N,h,\delta}(\cdot|s)$ contains a vector of $h$ correlated LPN samples, where the responses of the LPN samples are additively ``masked'' by a random vector $b \sim \nu_s$ (\cref{def:emission}). We show that if the regression label function $f$ is far from constant, then $F$ must be correlated with some ``prefix sum'' of $b$ (\cref{lemma:f-corr})

\item Second, we show that if such a correlation exists, it can be converted into a standard LPN sample with low noise level, and hence the secret key $\sk$ can be recovered (\cref{lemma:learn-from-corr}). This crucially uses the fact that the noise terms $e_1,\dots,e_h$ in the emission $Z$ are drawn from a correlated Bernoulli distribution (\cref{def:cber}) rather than a product distribution.
\end{enumerate}

Given these ingredients, the proof of \cref{lemma:realizable-regression-alg} is straightforward via appropriate generalization bounds. For most of the section, we consider emissions sampled from $\til\MD^\sk_{n,N,H,\delta}(\cdot|s)$ rather than $\MD^\sk_{n,N,H,\delta}(\cdot|s)$, and account for the (exponentially small) distance between these distributions at the end when we prove \cref{lemma:realizable-regression-alg}.

We start by proving \cref{lemma:f-corr}. Informally, the lemma is equivalent to the statement that any label function $f: \MS[h]\to[0,1]$ that approximately satisfies a certain set of $2h$ linear constraints should be nearly constant. Since $|\MS[h]| = 2h$, it suffices to show that the constraints are (quantitatively) linearly independent. Towards this end, we need \cref{lemma:partial-sum-rep}, below, which gives an algebraic description for these linear constraints. For notational convenience, we first define $\mu_{\beta,f}$ to be the joint distribution of $(b,F)$ induced by $\beta$ and $f$:

\begin{definition}
Let $h \in \NN$ and let $\MS[h] := \{h\}\times\{0,\dots,h-1\}\times\BF_2$. For $\beta \in \Delta(\MS[h])$ and $f: \MS[h] \to [0,1]$, we define distribution $\mu_{\beta,f} \in \Delta(\BF_2^{h+1})$ by
\[\mu_{\beta,f} := \sum_{s \in \MS[h]} \beta(s) \nu_{s} \times \Ber(f(s))\]
where $\nu_s \in \Delta(\BF_2^h)$ was defined in \cref{def:nu}. For a real number $\alpha \in [0,1]$, we write $\mu_{\beta,\alpha}$ to denote $\mu_{\beta,f_\alpha}$ for constant function $f_\alpha(s) = \alpha$. We write $(B, F) \sim \mu$ to denote that $B$ is the first $h$ bits of the sample from $\mu$, and $F$ is the last bit.
\end{definition}

\begin{lemma}\label{lemma:partial-sum-rep}
Let $h \in \NN$ and set $\MS[h] := \{h\}\times\{0,\dots,h-1\} \times \BF_2$. Fix $\beta \in \Delta(\MS[h])$. Then for each integer $0 \leq m < h$, there is a degree-$m$ polynomial $P_m: \RR \to \RR$ and real numbers $c_{m,\beta}, \til{c}_{m,\beta} \in \RR$ so that for all functions $f: \MS[h] \to [0,1]$,
\[\Prr_{(B,F) \sim \mu_{\beta,f}}[B_1 + \dots + B_m + F \equiv 0 \bmod{2}] = c_{m,\beta} + \sum_{i=0}^{h-1} P_m(i) \left(\beta(h,i,0) f(h,i,0) + \beta(h,i,1) f(h,i,1)\right)\]
\[\Prr_{(B,F) \sim \mu_{\beta,f}}[B_1 + \dots + B_m + B_h + F \equiv 0 \bmod{2}] = \til{c}_{m,\beta} + \sum_{i=0}^{h-1} P_m(i) \left(\beta(h,i,0) f(h,i,0) - \beta(h,i,1) f(h,i,1)\right).\]

Moreover, every coefficient of $P_m$ is at most $2^m \cdot h!$ in magnitude, and the degree-$m$ coefficient of $P_m$ is exactly $(-1)^{m+1} 2^m (h-1-m)!/(h-1)!$.
\end{lemma}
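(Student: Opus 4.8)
The plan is to compute the probability $\Pr_{(B,F)\sim\mu_{\beta,f}}[B_1+\dots+B_m+F\equiv 0]$ by conditioning on the latent state $s=(h,i,r)$, which determines the law of $B$ via $\nu_{h,i,r}$ and the law of $F$ via $\Ber(f(s))$. Since $B_h=r$ is fixed under $\nu_{h,i,r}$ and $(B_1,\dots,B_{h-1})$ is a uniformly random weight-$i$ vector in $\BF_2^{h-1}$, the quantity that drives everything is $q_m(i) := \Pr[B_1+\dots+B_m\equiv 0]$ where $(B_1,\dots,B_{h-1})$ is uniform over weight-$i$ vectors. First I would derive a closed form for $q_m(i)$: conditioning further, this is a hypergeometric-type probability, $q_m(i) = \binom{h-1}{i}^{-1}\sum_{j\text{ even}}\binom{m}{j}\binom{h-1-m}{i-j}$. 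The key structural claim is that $i\mapsto \binom{h-1}{i} q_m(i)$ (equivalently, after clearing the binomial denominator) is a polynomial in $i$ of degree exactly $m$; I will extract $P_m$ from this. Concretely, using the standard identity $\sum_{j\text{ even}}\binom{m}{j}z^j = \tfrac12((1+z)^m+(1-z)^m)$ applied via generating functions / finite differences, one shows $\binom{h-1}{i}q_m(i)$ is, up to the factor $\binom{h-1}{i}$, expressible as a $\binom{h-1}{i}$-weighted combination that telescopes to a degree-$m$ polynomial in $i$ when divided out. The cleanest route is probably: write $q_m(i) = \tfrac12 + \tfrac12 \cdot \frac{\sum_k (-1)^k \binom{m}{k}\binom{h-1-m}{i-k}}{\binom{h-1}{i}}$ (splitting into even and odd $j$), and recognize the ratio as a polynomial in $i$ of degree $m$ by the Vandermonde-type identity and the fact that $\binom{h-1-m}{i-k}/\binom{h-1}{i}$ is a ratio of falling factorials that simplifies to a rational function whose numerator has degree $m-k$ ... — actually the cleanest is to note $\binom{h-1}{i}q_m(i)$ itself need not be polynomial, so instead define $P_m$ directly as the degree-$m$ polynomial interpolating the map $i\mapsto (\text{appropriate normalization of } q_m(i))$ and verify the degree and leading coefficient by the difference calculus. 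I will set $P_m(i)$ so that $\beta(h,i,0)f(h,i,0)+\beta(h,i,1)f(h,i,1)$ gets multiplied by $2 q_m(i)-1$ up to sign (the $-1$ part absorbed into $c_{m,\beta}$), after summing over $r\in\BF_2$ and over $F$'s randomness.

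Next I would handle the two cases uniformly. For the first display ($B_1+\dots+B_m+F$): conditioning on $s=(h,i,r)$, the event probability is $q_m(i)(1-f(s)) + (1-q_m(i))f(s) = q_m(i) + (1-2q_m(i))f(s)$, independent of $r$; summing against $\beta$ gives $c_{m,\beta} = \sum_{i,r}\beta(h,i,r)q_m(i)$ (which is a constant depending only on $\beta$, not on $f$) plus $\sum_{i}(1-2q_m(i))(\beta(h,i,0)f(h,i,0)+\beta(h,i,1)f(h,i,1))$, so $P_m(i) := 1-2q_m(i)$ up to the normalization making coefficients as claimed — I may need to rescale, but since the statement only asserts existence of *some* degree-$m$ polynomial with the stated leading coefficient, I will compute the leading coefficient of $1-2q_m(i)$ and check it equals $(-1)^{m+1}2^m(h-1-m)!/(h-1)!$; if a constant rescaling is needed I fold it into $P_m$ and re-examine, but I expect $1-2q_m$ to already be right. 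For the second display, $B_h=r$ adds $r$ to the parity, so for state $(h,i,r)$ the event $B_1+\dots+B_m+B_h+F\equiv 0$ has probability $q_m(i)+(1-2q_m(i))f(s)$ when $r=0$ and $(1-q_m(i))+(2q_m(i)-1)f(s)$ when $r=1$; summing against $\beta$ produces $\til c_{m,\beta}$ (the $f$-independent part) plus $\sum_i P_m(i)(\beta(h,i,0)f(h,i,0)-\beta(h,i,1)f(h,i,1))$ with the *same* $P_m(i)=1-2q_m(i)$ — the sign flip on $r=1$ is exactly the $-\beta(h,i,1)f(h,i,1)$ in the statement. So both displays fall out of the same computation of $q_m$.

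The coefficient bounds: once $P_m(i) = 1 - 2q_m(i)$ is written as $P_m(i) = 1 - \binom{h-1}{i}^{-1}\big(\text{signed sum of binomials}\big)$, I need to argue it is a polynomial of degree $m$ with coefficients bounded by $2^m h!$. The degree claim follows from the identity $2q_m(i)-1 = \binom{h-1}{i}^{-1}\sum_{j}(-1)^j\binom{m}{j}\binom{h-1-m}{i-j}$ and the fact that $\binom{h-1-m}{i-j}/\binom{h-1}{i} = \frac{(i)(i-1)\cdots(i-j+1)\,(h-1-i)(h-2-i)\cdots(m-i+1-\text{stuff})}{(h-1)(h-2)\cdots(h-m)}$ — i.e. a product of $m$ linear factors in $i$ divided by $(h-1)!/(h-1-m)!$, so each term is a degree-$\le m$ polynomial with integer coefficients of size $\le h!$ times $1/((h-1)!/(h-1-m)!)$; summing over $j$ with $\binom{m}{j}\le 2^m$ gives the $2^m h!$ bound. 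The leading ($i^m$) coefficient: only the $j=0$ term $\binom{h-1-m}{i}/\binom{h-1}{i}$ contributes to degree $m$, and its $i^m$ coefficient is $(-1)^m \cdot \frac{(h-1-m)!}{(h-1)!}$ before the overall factor, giving after the $2q_m-1$ sign and the factor $2$ exactly $(-1)^{m+1}2^m(h-1-m)!/(h-1)!$, matching the claim (the factor-of-$2$ discrepancy I will reconcile by being careful: the statement's $P_m$ might be $2q_m(i)-1$ not $1-2q_m(i)$, and the displays' signs of $c_{m,\beta}$ absorb the difference — I will fix the sign convention to make the leading coefficient come out as stated).

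\medskip

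\textbf{Main obstacle.} The delicate part is the bookkeeping in the finite-difference / hypergeometric identity that shows $i\mapsto 2q_m(i)-1$ is genuinely a \emph{polynomial} of degree exactly $m$ (not just degree $\le h-1$), and pinning down its leading coefficient and coefficient bounds precisely. The even/odd-$j$ splitting via $\tfrac12((1+z)^m\pm(1-z)^m)$ is clean, but converting $\sum_j (-1)^j\binom{m}{j}\binom{h-1-m}{i-j}\big/\binom{h-1}{i}$ into an explicit polynomial in $i$ and tracking the leading term and the $\le 2^m h!$ bound without sign errors is where the real care is needed; everything else (conditioning on the latent state, separating the $f$-dependent and $f$-independent parts, handling the $B_h$ term) is routine.
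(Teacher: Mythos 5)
Your route is essentially the paper's: condition on the latent state $(h,k,r)$, reduce everything to $q_m(k):=\Pr[B_1+\dots+B_m\equiv 0]$ for a uniform weight-$k$ vector in $\BF_2^{h-1}$, take $P_m(k)=1-2q_m(k)$, and get the second display from the fact that $B_h=r$ flips the parity exactly when $r=1$ (the paper phrases this flip as replacing $f$ by $\til f$ with $\til f(h,k,1)=1-f(h,k,1)$ and reusing the first display, but that is the same computation). The separation into an $f$-independent constant $c_{m,\beta}$ and the $f$-dependent sum, and the crude coefficient bound via falling factorials, also match the paper.

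The one genuine error is in your leading-coefficient argument: the claim that ``only the $j=0$ term contributes to degree $m$'' is false. Writing $Q_i(t)=t(t-1)\cdots(t-i+1)$, one has
\[
\frac{\binom{h-1-m}{k-j}}{\binom{h-1}{k}} \;=\; \frac{Q_j(k)\,Q_{m-j}(h-1-k)}{Q_m(h-1)},
\]
which is a polynomial in $k$ of degree exactly $m$ for \emph{every} $j$, with $k^m$-coefficient $(-1)^{m-j}/Q_m(h-1)$. Hence in $2q_m(k)-1=\sum_j(-1)^j\binom{m}{j}\binom{h-1-m}{k-j}/\binom{h-1}{k}$ every term contributes to degree $m$, the signs $(-1)^j(-1)^{m-j}=(-1)^m$ all align, and the $k^m$-coefficient is $(-1)^m\sum_j\binom{m}{j}/Q_m(h-1)=(-1)^m 2^m (h-1-m)!/(h-1)!$; flipping the sign for $P_m=1-2q_m$ gives the stated $(-1)^{m+1}2^m(h-1-m)!/(h-1)!$. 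As written, your computation (a single $j=0$ term of size $(h-1-m)!/(h-1)!$, then ``the factor $2$'') cannot produce the $2^m$ — that factor comes precisely from summing $\binom{m}{j}$ over all $j$, which is also how the paper obtains it. This is a local, fixable slip (and you flagged the spot), but the leading coefficient is exactly what the downstream lower-triangular conditioning argument (\cref{lemma:f-corr}) needs, so the corrected summation over all $j$ must be carried out. Your degree-$m$ claim is fine once you note the common sign $(-1)^m$ prevents cancellation of the top coefficients.
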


In particular, for each $0 \leq m < h$ there is a linear constraint on the function $s \mapsto \beta(s)f(s)$ whose coefficient for state $s=(h,k,b_h)$ is the evaluation of a polynomial $P_m(k)$ of degree $m$, and there is another constraint whose coefficient is the evaluation of $(1-2b_h)P_m(k)$. 

\begin{proof}
For any nonnegative integer $i$, define $Q_i: \RR \to \RR$ by $Q_i(t) = t(t-1) \cdots (t-i+1)$. Observe that $Q_i(k) = \frac{k!}{(k-i)!}$ for any integer $k \geq i$, and moreover $Q_i(k) = 0$ for any integer $0 \leq k < i$. For any $s = (h,k,b_h) \in \MS[h]$ and integer $0 \leq m < h$, the first $h-1$ coordinates of $B\sim\nu_s$ are uniform over $\BF_2^{h-1}$ subject to having Hamming weight $k$ (\cref{def:nu}). Thus, for any integer $0 \leq i \leq m$, a combinatorial argument gives
\begin{align}
\Prr_{B \sim \nu_s}\left[\sum_{j=1}^m \mathbbm{1}[B_j=1] = i\right] \nonumber
&= \mathbbm{1}[k+m-h+1 \leq i \leq k] \frac{\binom{m}{i} \binom{h-1-m}{k-i}}{\binom{h-1}{k}} \nonumber \\ 
&= \binom{m}{i} \frac{Q_i(k) Q_{m-i}(h-1-k)}{Q_m(h-1)} \label{eq:b1bm-pmf}
\end{align}
Now, for any integer $0 \leq m < h$ and $f:\MS[h]\to[0,1]$, we can write
\allowdisplaybreaks
\begin{align*}
&\Prr_{(B,F) \sim \mu_{\beta,f}}[B_1+\dots+B_m+F \equiv 0 \bmod{2}] \\ 
&= \sum_{s \in \MS[h]} \beta(s) \left(\Prr_{B \sim \nu_{s}}[B_1+\dots+B_m \equiv 1 \bmod{2}] \cdot f(s) + \Prr_{B \sim \nu_{s}}[B_1+\dots+B_m \equiv 0 \bmod{2}] \cdot (1 - f(s))\right) \\
&= \sum_{(h,k,b_h) \in \MS[h]} \beta(h,k,b_h) \Bigg(\sum_{\substack{0 \leq i \leq m \\ i \equiv 1 \bmod{2}}} \binom{m}{i}\frac{Q_i(k)Q_{m-i}(h-1-k)}{Q_m(h-1)}f(h,k,b_h) \\
&\qquad\qquad\qquad\qquad\qquad+ \sum_{\substack{0 \leq i \leq m \\ i \equiv 0 \bmod{2}}} \binom{m}{i}\frac{Q_i(k)Q_{m-i}(h-1-k)}{Q_m(h-1)} (1-f(h,k,b_h))\Bigg) \\
&= \sum_{(h,k,b_h) \in \MS[h]} \left(\sum_{i=0}^m \binom{m}{i} \frac{Q_i(k) Q_{m-i}(h-1-k)}{Q_m(h-1)} (-1)^{i+1}\right) \beta(h,k,b_h)f(h,k,b_h) \\ 
&\qquad+ \sum_{(h,k,b_h) \in \MS[h]} \sum_{\substack{0 \leq i \leq m \\ i \equiv 0 \bmod{2}}} \binom{m}{i} \frac{Q_i(k)Q_{m-i}(h-1-k)}{Q_m(h-1)} \beta(h,k,b_h) \\
&= c_{m,\beta} + \sum_{(h,k,b_h) \in \MS[h]} P_m(k) \beta(h,k,b_h)f(h,k,b_h)
\end{align*}
where we used \cref{eq:b1bm-pmf} in the second equality, and in the final equality we have defined $P_m$ and $c_{m,\beta}$ by \[P_m(t) := \frac{1}{Q_m(h-1)}\sum_{i=0}^m \binom{m}{i}Q_i(t)Q_{m-i}(h-1-t) (-1)^{i+1}\] and \[c_{m,\beta} := \sum_{(h,k,b_h) \in \MS} \sum_{\substack{0 \leq i \leq m \\ i \equiv 0 \bmod{2}}} \binom{m}{i} \frac{Q_i(k)Q_{m-i}(h-1-k)}{Q_m(h-1)} \beta(h,k,b_h).\]
Note that neither $P_m$ nor $c_{m,\beta}$ depends on $f$.

It's clear that $P_m$ is a polynomial of degree at most $m$. Finally, note that the degree-$m$ coefficient of $Q_i(t)Q_{m-i}(h-t)$ (in $t$) is $(-1)^{m-i}$, so the degree-$m$ coefficient of $P_m(t)$ is \[\frac{1}{Q_m(h-1)} \sum_{i=0}^m \binom{m}{i} (-1)^{m-i}(-1)^{i+1} = \frac{2^m(-1)^{m+1}}{Q_m(h-1)}\] as claimed. Every coefficient of $Q_i(t)$ is at most $i!$ in absolute value, and every coefficient of $Q_{m-i}(h-1-t)$ is at most $h!$ in absolute value, so every coefficient of $P_m(t)$ is at most $\frac{2^m m! h!}{Q_m(h-1)} = 2^m m! (h-1-m)! \cdot h \leq 2^m h!$ in absolute value.

To prove the second claim of the lemma, notice that if $(B,F) \sim \mu_{\beta,f}$ then $(B, F+B_h)$ has distribution $\mu_{\beta,\til f}$ where \[\til{f}(h,k,b_h) := \begin{cases} f(h,k,b_h) & \text{ if } b_h = 0 \\ 1 - f(h,k,b_h) & \text{ if } b_h = 1 \end{cases}.\]
Thus, applying the first claim of the lemma to function $\til f$, we get
\begin{align*}
&\Prr_{(B,F)\sim\mu_{\beta, f}}[B_1+\dots+B_m+B_h+F\equiv 0 \bmod{2}]  \\
&=\Prr_{(B,F)\sim\mu_{\beta, \til f}}[B_1+\dots+B_m+F\equiv 0 \bmod{2}] \\
&= c_{m,\beta} + \sum_{k=0}^{h-1} P_m(k)\left(\beta(h,k,0)\til{f}(h,k,0) + \beta(h,k,1)\til{f}(h,k,1)\right) \\
&= c_{m,\beta} + \sum_{k=0}^{h-1} P_m(k) \beta(h,k,1) + \sum_{k=0}^{h-1} P_m(k) \left(\beta(h,k,0) f(h,k,0) - \beta(h,k,1) f(h,k,1)\right)
\end{align*} 
which proves the second claim with $\til{c}_{m,\beta} := c_{m,\beta} + \sum_{k=0}^{h-1} P_m(k) \beta(h,k,1)$.
\end{proof}

Linear independence of the constraints described in \cref{lemma:partial-sum-rep} follows from symbolic linear independence of the polynomials $\{P_m(k), (1-2b_h)P_m(k)\}_{0 \leq m < h}$. This argument can be made quantitative via properties of the appropriate Vandermonde matrix. We will need the following condition number bounds:

\begin{lemma}[see e.g. \cite{guggenheimer1995simple}]\label{lemma:gug-bound}
Let $n \in \NN$ and let $A \in \RR^{n \times n}$ be a matrix. Then
\[\frac{\sigma_{\max}(A)}{\sigma_{\min}(A)} \leq \frac{2}{|\det(A)|} \left(\frac{\norm{A}_F}{\sqrt{n}}\right)^n.\]
\end{lemma}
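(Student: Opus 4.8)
The plan is to reduce the matrix inequality to a scalar inequality among the singular values of $A$ and then close it with AM--GM. Write $\sigma_1 \geq \sigma_2 \geq \dots \geq \sigma_n \geq 0$ for the singular values of $A$. If $A$ is singular then $\det(A)=0$, the right-hand side is $+\infty$, and there is nothing to prove; so I would assume $A$ invertible, i.e.\ all $\sigma_i > 0$. Using the standard identities $\sigma_{\max}(A)=\sigma_1$, $\sigma_{\min}(A)=\sigma_n$, $|\det(A)|=\prod_{i=1}^n \sigma_i$, and $\norm{A}_F^2=\sum_{i=1}^n \sigma_i^2$, the claim is equivalent to
\[\sigma_1\cdot\frac{\prod_{i=1}^n \sigma_i}{\sigma_n} \;=\; \sigma_1^2\prod_{i=2}^{n-1}\sigma_i \;\leq\; 2\left(\frac1n\sum_{i=1}^n \sigma_i^2\right)^{n/2}.\]

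The key step is to apply AM--GM to a list of exactly $n$ nonnegative reals: two copies of $\sigma_1^2/2$ together with $\sigma_2^2,\sigma_3^2,\dots,\sigma_{n-1}^2$. (The case $n=1$ is immediate, since then the left side of the lemma is $1$ and the right side is $2$.) The arithmetic mean of this list is $\tfrac1n(\sigma_1^2+\sigma_2^2+\cdots+\sigma_{n-1}^2)\leq \tfrac1n\sum_{i=1}^n\sigma_i^2 = \norm{A}_F^2/n$, while its geometric mean is $\bigl(\tfrac{\sigma_1^4}{4}\prod_{i=2}^{n-1}\sigma_i^2\bigr)^{1/n}$. Hence AM--GM gives $\bigl(\tfrac{\sigma_1^4}{4}\prod_{i=2}^{n-1}\sigma_i^2\bigr)^{1/n}\leq \norm{A}_F^2/n$; raising both sides to the power $n/2$ and taking the square root of the (perfect-square) product on the left yields $\tfrac{\sigma_1^2}{2}\prod_{i=2}^{n-1}\sigma_i \leq (\norm{A}_F/\sqrt n)^n$, which is exactly the displayed reformulation. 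Undoing the substitution recovers the lemma.

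I do not expect a real obstacle here: the argument is elementary and self-contained. The only care needed is the bookkeeping in the reformulation --- checking that $\sigma_1\prod_{i=1}^n\sigma_i/\sigma_n$ collapses to $\sigma_1^2\prod_{i=2}^{n-1}\sigma_i$ and that the exponents of $n/2$ match --- together with the two degenerate cases $n=1$ and $A$ singular. Alternatively, one may simply cite \cite{guggenheimer1995simple}, which proves this bound (indeed with a sharper constant than $2$); for the use of this lemma later in the section, any bound of the form $\poly(n,1/|\det(A)|)\cdot\norm{A}_F^n$ would suffice, so the exact constant is immaterial.
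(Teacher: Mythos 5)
Your proof is correct, and I checked the bookkeeping: the reformulation $\sigma_1\prod_i\sigma_i/\sigma_n=\sigma_1^2\prod_{i=2}^{n-1}\sigma_i$ is right, the list of $n$ numbers (two copies of $\sigma_1^2/2$ plus $\sigma_2^2,\dots,\sigma_{n-1}^2$) has arithmetic mean $\frac1n\sum_{i=1}^{n-1}\sigma_i^2\leq\norm{A}_F^2/n$, and raising the AM--GM inequality to the power $n/2$ gives exactly $\tfrac{\sigma_1^2}{2}\prod_{i=2}^{n-1}\sigma_i\leq(\norm{A}_F/\sqrt n)^n$, which is the claim; the $n=1$ and singular cases are handled sensibly. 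Note that the paper itself gives no proof at all for this lemma --- it is stated with a bare citation to Guggenheimer --- so there is no "paper approach" to compare against; your argument is essentially the standard proof of Guggenheimer's estimate, reconstructed self-containedly, and it is a perfectly acceptable substitute for the citation. One small quibble: Guggenheimer's original bound is precisely the one stated, with constant $2$ (sharper constants come from later refinements by other authors), but as you say the constant is immaterial for how the lemma is used in \cref{lemma:vandermonde-lsv,lemma:lt-lsv}.
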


\begin{lemma}\label{lemma:vandermonde-lsv}
Let $h \in \NN$, and let $V \in \RR^{h\times h}$ be the Vandermonde matrix of $\{0,1,2,\dots,h-1\}$:
\[ V := \begin{bmatrix} 1 & 0^1 & \dots & 0^{h-1} \\ 
1 & 1^1 & \dots & 1^{h-1} \\ \vdots & \vdots & & \vdots \\ 
1 & (h-1)^1 & \dots & (h-1)^{h-1} \end{bmatrix}.\]
Then $\sigma_{\min}(V) \geq \frac{1}{2h^{h^2}}$.
\end{lemma}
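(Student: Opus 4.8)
\textbf{Proof proposal for \cref{lemma:vandermonde-lsv}.}

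The plan is to apply \cref{lemma:gug-bound} to the Vandermonde matrix $V$ and extract a lower bound on $\sigma_{\min}(V)$ from the resulting condition number bound together with an upper bound on $\sigma_{\max}(V)$. Rearranging the conclusion of \cref{lemma:gug-bound} with $n = h$ and $A = V$ gives
\[\sigma_{\min}(V) \geq \frac{|\det(V)|}{2}\left(\frac{\sqrt h}{\norm{V}_F}\right)^h \cdot \frac{\sigma_{\min}(V)}{\sigma_{\max}(V)} \cdot \frac{\sigma_{\max}(V)}{\sigma_{\min}(V)},\]
which is circular as written; the correct route is $\sigma_{\min}(V) = \sigma_{\max}(V) \cdot \frac{\sigma_{\min}(V)}{\sigma_{\max}(V)} \geq \sigma_{\max}(V) \cdot \frac{|\det(V)|}{2}(\sqrt h/\norm{V}_F)^h$, but that is the wrong direction. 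Instead I would use $\sigma_{\min}(V) \geq \frac{|\det V|}{\sigma_{\max}(V)^{h-1}}$ (since $|\det V| = \prod_i \sigma_i(V) \leq \sigma_{\min}(V)\sigma_{\max}(V)^{h-1}$), and then bound $\sigma_{\max}(V)$ from above and $|\det V|$ from below directly. This avoids \cref{lemma:gug-bound} entirely, which is cleaner; alternatively one can feed the two ingredients below into \cref{lemma:gug-bound} to get $\sigma_{\max}/\sigma_{\min}$ and then divide $\sigma_{\max}$ by that ratio.

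The two ingredients are: first, the classical Vandermonde determinant formula $\det(V) = \prod_{0 \leq i < j \leq h-1}(j - i)$, which is a positive integer, hence $|\det(V)| \geq 1$. Second, an upper bound on $\sigma_{\max}(V) \leq \norm{V}_F = \sqrt{\sum_{k=0}^{h-1}\sum_{j=0}^{h-1} k^{2j}}$; crudely, each entry $k^j$ satisfies $|k^j| \leq (h-1)^{h-1} \leq h^h$, and there are $h^2$ entries, so $\norm{V}_F \leq h \cdot h^h = h^{h+1}$. Combining, $\sigma_{\min}(V) \geq |\det V| / \norm{V}_F^{h-1} \geq 1/(h^{h+1})^{h-1} = h^{-(h^2 - 1)} \geq h^{-h^2}$, which is even stronger than the claimed bound $\frac{1}{2h^{h^2}}$; to match the stated form one simply loosens the constant. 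The main (and essentially only) obstacle is bookkeeping: making sure the crude bounds on $|k^j|$ and on the number of terms compose to give at least the claimed $\frac{1}{2h^{h^2}}$ for all $h \geq 1$, including small edge cases $h = 1$ where $V = [1]$ and $\sigma_{\min}(V) = 1 \geq 1/2$, which holds trivially. No deep step is required; the lemma is a routine consequence of the Vandermonde determinant formula and a trivial Frobenius-norm estimate.
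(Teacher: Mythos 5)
Your argument is correct, and it reaches the stated bound (in fact a slightly stronger one, $h^{-(h^2-1)}$) by a route that differs mildly from the paper's. The paper proves the lemma by invoking \cref{lemma:gug-bound} to bound the condition number, $\sigma_{\max}(V)/\sigma_{\min}(V) \leq \frac{2}{|\det V|}(\norm{V}_F/\sqrt h)^h \leq 2h^{h^2}$, and then converting this into a lower bound on $\sigma_{\min}(V)$ via the observation $\sigma_{\max}(V) \geq 1$ (this last step is exactly the "divide $\sigma_{\max}$ by the ratio" alternative you mention in passing, and it is why the condition-number detour is not circular: one needs a lower bound on $\sigma_{\max}$, which is trivially available). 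You instead bypass \cref{lemma:gug-bound} entirely and use the elementary inequality $|\det V| = \prod_i \sigma_i(V) \leq \sigma_{\min}(V)\,\sigma_{\max}(V)^{h-1}$, combined with $\sigma_{\max}(V) \leq \norm{V}_F \leq h^{h+1}$ and the Vandermonde determinant formula giving $|\det V| \geq 1$. Both proofs rest on the same two ingredients (determinant at least $1$, crude entry/Frobenius bound); your version is self-contained and avoids citing Guggenheimer's inequality, while the paper's version reuses \cref{lemma:gug-bound}, which it needs anyway for \cref{lemma:lt-lsv}, so nothing is lost or gained quantitatively in either direction. Your edge-case check at $h=1$ and the loosening to match the stated constant $\frac{1}{2h^{h^2}}$ are fine.
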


\begin{proof}
By a standard identity, $|\det(V)| \geq 1$. Also, every entry of $V$ is at most $h^{h-1}$, so $\norm{V}_F \leq h^h$. Thus, by \cref{lemma:gug-bound},
\[\frac{\sigma_{\max}(V)}{\sigma_{\min}(V)} \leq \frac{2}{|\det(V)|} \left(\frac{\norm{V}_F}{\sqrt{h}}\right)^{h} \leq 2h^{h^2}.\]
But certainly $\sigma_{\max}(V) \geq 1$, so the lemma follows.
\end{proof}

\begin{lemma}\label{lemma:lt-lsv}
Fix $\alpha,\beta>0$ and $n \in \NN$. Let $A \in \RR^{n\times n}$ be a lower-triangular matrix with $|A_{ii}| \geq \alpha$ and $|A_{ij}| \leq \beta$ for all $i,j \in [n]$. Then $\sigma_{\min}(A) \geq \frac{\alpha}{2}(\alpha/(\beta\sqrt{n}))^n$. 
\end{lemma}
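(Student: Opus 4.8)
This is a direct application of the Guggenheimer-type condition number bound (\cref{lemma:gug-bound}), exactly mirroring the proof of \cref{lemma:vandermonde-lsv}. The three ingredients I would assemble are: (i) a lower bound on $|\det(A)|$, using lower triangularity; (ii) an upper bound on $\norm{A}_F$, using the entrywise bound; and (iii) a lower bound on $\sigma_{\max}(A)$, so that controlling the condition number $\sigma_{\max}(A)/\sigma_{\min}(A)$ yields a lower bound on $\sigma_{\min}(A)$.

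\textbf{Step 1 (determinant).} Since $A$ is lower triangular, $\det(A) = \prod_{i=1}^n A_{ii}$, so $|\det(A)| = \prod_{i=1}^n |A_{ii}| \geq \alpha^n$.

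\textbf{Step 2 (Frobenius norm).} Every entry satisfies $|A_{ij}| \leq \beta$ (in particular this also forces $\beta \geq \alpha$, though we do not need that), hence $\norm{A}_F^2 = \sum_{i,j} A_{ij}^2 \leq n^2\beta^2$, i.e.\ $\norm{A}_F \leq n\beta$. Therefore $\left(\norm{A}_F/\sqrt{n}\right)^n \leq (\beta\sqrt{n})^n$.

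\textbf{Step 3 (largest singular value and conclusion).} We have $\sigma_{\max}(A) = \norm{A}_{\mathrm{op}} \geq |e_1^\t A e_1| = |A_{11}| \geq \alpha$. Plugging Steps 1 and 2 into \cref{lemma:gug-bound} gives
\[\frac{\sigma_{\max}(A)}{\sigma_{\min}(A)} \leq \frac{2}{|\det(A)|}\left(\frac{\norm{A}_F}{\sqrt{n}}\right)^n \leq \frac{2}{\alpha^n}(\beta\sqrt{n})^n = 2\left(\frac{\beta\sqrt{n}}{\alpha}\right)^n,\]
so that $\sigma_{\min}(A) \geq \frac{\sigma_{\max}(A)}{2}\left(\frac{\alpha}{\beta\sqrt{n}}\right)^n \geq \frac{\alpha}{2}\left(\frac{\alpha}{\beta\sqrt{n}}\right)^n$, as claimed.

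\textbf{Main obstacle.} There really isn't one: every step is a routine estimate, and the only point requiring a moment's thought is Step 3, namely remembering that $\sigma_{\max}(A) \geq \max_i |A_{ii}|$ (so that the diagonal lower bound $\alpha$ transfers to the operator norm), which is needed to turn the condition-number bound from \cref{lemma:gug-bound} into an absolute lower bound on $\sigma_{\min}(A)$.
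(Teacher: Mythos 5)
Your proof is correct and takes essentially the same route as the paper: both apply \cref{lemma:gug-bound} with $|\det(A)| \geq \alpha^n$ (lower triangularity), $\norm{A}_F \leq n\beta$, and $\sigma_{\max}(A) \geq \alpha$. Your Step 3 just spells out the diagonal-entry justification for $\sigma_{\max}(A) \geq \alpha$ that the paper states without comment.
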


\begin{proof}
From \cref{lemma:gug-bound} and the fact that $|\det(A)| = \prod_{i=1}^n |A_{ii}| \geq \alpha^n$, we have
\[\frac{\sigma_{\max}(A)}{\sigma_{\min}(A)} \leq \frac{2}{\alpha^n} (\beta \sqrt{n})^n.\]
But $\sigma_{\max}(A) \geq \alpha$, from which the lemma follows.
\end{proof}


We can now formally state and prove \cref{lemma:f-corr}.

\begin{lemma}\label{lemma:f-corr}
Let $h \in \NN$, $\epsilon > 0$, and set $\MS[h] = \{h\}\times\{0,\dots,h-1\} \times \BF_2$. Fix $\beta \in \Delta(\MS[h])$ and $f: \MS[h] \to [0,1]$, and let $\alpha := \sum_{s \in \MS[h]} \beta(s) f(s)$. Suppose that for all $1 \leq m < h$ we have that
\begin{align}\begin{split}
\left|\Prr_{(B,F)\sim \mu_{\beta,f}}[B_1+\dots+B_m+F\equiv 0\bmod{2}] - \frac{1}{2}\right| &\leq \epsilon \\
\left|\Prr_{(B,F)\sim \mu_{\beta,\alpha}}[B_1+\dots+B_m+F\equiv 0\bmod{2}] - \frac{1}{2}\right| &\leq \epsilon 
\label{eq:bf-unbiased-1}
\end{split}\end{align}
and for all $0 \leq m < h$ we have that
\begin{align}\begin{split}
\left|\Prr_{(B,F)\sim \mu_{\beta,f}}[B_1+\dots+B_m+B_h+F\equiv 0\bmod{2}] - \frac{1}{2}\right| \leq \epsilon \\
\left|\Prr_{(B,F)\sim \mu_{\beta,\alpha}}[B_1+\dots+B_m+B_h+F\equiv 0\bmod{2}] - \frac{1}{2}\right| \leq \epsilon.
\label{eq:bf-unbiased-2}
\end{split}\end{align}
Then \[\sum_{s\in\MS[h]} \beta(s)\left|f(s) - \alpha\right| \leq C_{\ref{lemma:f-corr}}(h)\cdot \epsilon\]
where $C_{\ref{lemma:f-corr}}(h) := 4\sqrt{2}(2h)^{3(h+1)^2}$.
\end{lemma}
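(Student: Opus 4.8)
The plan is to reduce the statement to a linear-algebraic inequality about a single structured matrix, and then invoke the condition-number bounds of \cref{lemma:gug-bound,lemma:vandermonde-lsv,lemma:lt-lsv}. Set $g(s) := \beta(s)(f(s)-\alpha)$ for $s\in\MS[h]$, so that $\sum_{s\in\MS[h]} g(s)=0$ by the definition of $\alpha$ and $\sum_{s}\beta(s)|f(s)-\alpha| = \sum_s|g(s)|$. For $i\in\{0,\dots,h-1\}$ let $u_i := g(h,i,0)+g(h,i,1)$ and $v_i := g(h,i,0)-g(h,i,1)$, and let $P\in\RR^{h\times h}$ have entries $P_{m,i} := P_m(i)$, where $P_0,\dots,P_{h-1}$ are the polynomials from \cref{lemma:partial-sum-rep}. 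Applying \cref{lemma:partial-sum-rep} once to $f$ and once to the constant function $f_\alpha\equiv\alpha$ and subtracting, the unknown constants $c_{m,\beta},\til c_{m,\beta}$ cancel and one gets $(Pu)_m = \Prr_{\mu_{\beta,f}}[B_1+\dots+B_m+F\equiv 0 \bmod 2] - \Prr_{\mu_{\beta,\alpha}}[B_1+\dots+B_m+F\equiv 0 \bmod 2]$, and similarly $(Pv)_m$ equals the analogous difference of the probabilities with an extra $B_h$ in the sum. By \cref{eq:bf-unbiased-1,eq:bf-unbiased-2} and the triangle inequality, every coordinate of $Pu$ and $Pv$ indexed by $m\geq 1$ is at most $2\epsilon$ in absolute value; the $m=0$ coordinate of $Pv$ is bounded the same way, while the $m=0$ coordinate of $Pu$ equals $\Prr_{\mu_{\beta,f}}[F\equiv 0 \bmod 2] - \Prr_{\mu_{\beta,\alpha}}[F\equiv 0 \bmod 2] = 0$ (equivalently $-\sum_s g(s)$, since $P_0\equiv -1$). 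Hence $\norm{Pu}_\infty\leq 2\epsilon$ and $\norm{Pv}_\infty\leq 2\epsilon$.

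Since $g(h,i,0)=(u_i+v_i)/2$ and $g(h,i,1)=(u_i-v_i)/2$, we have $|g(h,i,0)|+|g(h,i,1)|\leq |u_i|+|v_i|$, so $\sum_s|g(s)|\leq \norm{u}_1+\norm{v}_1\leq \sqrt h\,(\norm{u}_2+\norm{v}_2)$. By definition of $\sigma_{\min}$, $\norm{u}_2\leq \norm{Pu}_2/\sigma_{\min}(P)\leq 2\epsilon\sqrt h/\sigma_{\min}(P)$ and likewise for $v$, so $\sum_s\beta(s)|f(s)-\alpha| = \sum_s|g(s)| \leq 4h\epsilon/\sigma_{\min}(P)$. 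It therefore suffices to show $\sigma_{\min}(P)\geq 4h/C_{\ref{lemma:f-corr}}(h) = h/(\sqrt2\,(2h)^{3(h+1)^2})$.

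To lower bound $\sigma_{\min}(P)$, factor $P = AV^\top$, where $V$ is the Vandermonde matrix of $0,1,\dots,h-1$ as in \cref{lemma:vandermonde-lsv} and $A\in\RR^{h\times h}$ is the lower-triangular matrix whose $m$-th row is the coefficient vector of $P_m$. By \cref{lemma:partial-sum-rep}, $|A_{m,m}| = 2^m(h-1-m)!/(h-1)!$ and $|A_{m,j}|\leq 2^m h!\leq 2^{h-1}h!$ for all $m,j$; and since $2^m(h-1-m)!\geq 2^{h-2}$ for every $0\leq m\leq h-1$ (this is the elementary estimate $k!\geq 2^{k-1}$ for $k\geq 0$, with $k=h-1-m$), we get $|A_{m,m}|\geq 2^{h-2}/(h-1)!$. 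Now $\sigma_{\min}(P)\geq\sigma_{\min}(A)\,\sigma_{\min}(V)$; \cref{lemma:vandermonde-lsv} gives $\sigma_{\min}(V)\geq 1/(2h^{h^2})$, and \cref{lemma:lt-lsv} applied with $\alpha_0 := 2^{h-2}/(h-1)!$ and $\beta_0 := 2^{h-1}h!$ gives $\sigma_{\min}(A)\geq \tfrac{\alpha_0}{2}(\alpha_0/(\beta_0\sqrt h))^h$. Plugging in the crude bounds $(h-1)!\leq h^{h-1}$, $h!\leq h^h$, and comparing the resulting powers of $2$ and of $h$ term by term against $h/(\sqrt2\,(2h)^{3(h+1)^2})$, one verifies $\sigma_{\min}(P)\geq h/(\sqrt2\,(2h)^{3(h+1)^2})$, which finishes the proof. (As an alternative to the split, one can instead apply \cref{lemma:gug-bound} directly to $P$, using $|\det P| = |\det A|\,|\det V|\geq |\det A| = \prod_m|A_{m,m}|\geq ((h-1)!)^{-h}$, the entrywise bound $|P_{m,i}|\leq 2^{h-1}h!\,h^h$, and $\sigma_{\max}(P)\geq\sqrt h$ because the first row of $P$ is $(-1,\dots,-1)$; this loses slightly less.)

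The only genuinely delicate point is this final quantitative estimate on $\sigma_{\min}(P)$: the diagonal of $A$ is super-exponentially small, and using the crudest bound $|A_{m,m}|\geq 1/(h-1)!$ is in fact \emph{too} lossy to beat $C_{\ref{lemma:f-corr}}(h)$ once $h$ is large, so one genuinely needs the improved lower bound $|A_{m,m}|\geq 2^{h-2}/(h-1)!$ (equivalently, the exact determinant computation) to make the numbers close. Everything else — the cancellation of the unknown constants via \cref{lemma:partial-sum-rep}, the translation of the hypotheses into $\norm{Pu}_\infty,\norm{Pv}_\infty\leq 2\epsilon$, and the $\ell_1/\ell_2$ bookkeeping — is routine.
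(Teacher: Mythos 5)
Your proof is correct and follows essentially the same route as the paper's: cancel the unknown constants of \cref{lemma:partial-sum-rep} by comparing $f$ with the constant function $\alpha$, translate the hypotheses into $\norm{Pu}_\infty,\norm{Pv}_\infty\le 2\epsilon$ (with the $m=0$ coordinate of $Pu$ vanishing because both distributions give $F$ the same marginal), factor $P$ as the lower-triangular coefficient matrix times $V^\top$, and invoke \cref{lemma:lt-lsv,lemma:vandermonde-lsv}. One small correction to your closing remark: the sharper diagonal bound $|A_{m,m}|\ge 2^{h-2}/(h-1)!$ is valid but not actually needed — the paper uses only $|A_{m,m}|\ge 1/h^h$ (even weaker than $1/(h-1)!$), and the slack built into $C_{\ref{lemma:f-corr}}(h)=4\sqrt{2}(2h)^{3(h+1)^2}$ comfortably absorbs the resulting loss, as a direct exponent comparison shows.
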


\begin{proof}
Define $g: \MS[h] \to [0,1]$ by $g(s) = \beta(s)(f(s) - \alpha)$. For every $0 \leq m < h$, using \cref{eq:bf-unbiased-1} and \cref{eq:bf-unbiased-2} respectively, we have
\begin{align}
\left|\Prr_{(B,F)\sim\mu_{\beta,f}}\left[\sum_{i=1}^m B_i +F\equiv 0\bmod{2}\right] - \Prr_{(B,F)\sim\mu_{\beta,\alpha}}\left[\sum_{i=1}^m B_i +F\equiv 0\bmod{2}\right]\right| &\leq 2\epsilon,
\label{eq:bf-close-1}\\
\left|\Prr_{(B,F)\sim\mu_{\beta,f}}\left[\sum_{i=1}^m B_i +B_h+F\equiv 0\bmod{2}\right] - \Prr_{(B,F)\sim\mu_{\beta,\alpha}}\left[\sum_{i=1}^m B_i +B_h+F\equiv 0\bmod{2}\right]\right| &\leq 2\epsilon.
\label{eq:bf-close-2}
\end{align}
Note that to establish \cref{eq:bf-close-1} for $m=0$, we have used that
\begin{align}
  \label{eq:marginal-alpha}
  \Pr_{(B,F) \sim \mu_{\beta,f}}[F\equiv 0\bmod{2}] = \Pr_{(B,F) \sim \mu_{\beta,\alpha}}[F\equiv 0\bmod{2}] = 1-\alpha.
\end{align}
Applying \cref{lemma:partial-sum-rep} with functions $f$ and $(h,k,b_h)\mapsto\alpha$, and using the definition of $g$, it follows from \cref{eq:bf-close-1} that
\[
\norm{
\begin{bmatrix} 
P_0(0) & P_0(1) & \dots & P_0(h-1) \\
P_1(0) & P_1(1) & \dots & P_1(h-1) \\
\vdots & \vdots & & \vdots \\
P_{h-1}(0) & P_{h-1}(1) & \dots & P_{h-1}(h-1)
\end{bmatrix}
\begin{bmatrix} 
g(h,0,0) + g(h,0,1) \\ 
g(h,1,0) + g(h,1,1) \\
\vdots \\
g(h,h-1,0) + g(h,h-1,1)
\end{bmatrix}
}_\infty 
\leq 2\epsilon
\]
and from \cref{eq:bf-close-2} that
\[
\norm{
\begin{bmatrix} 
P_0(0) & P_0(1) & \dots & P_0(h-1) \\
P_1(0) & P_1(1) & \dots & P_1(h-1) \\
\vdots & \vdots & & \vdots \\
P_{h-1}(0) & P_{h-1}(1) & \dots & P_{h-1}(h-1)
\end{bmatrix}
\begin{bmatrix} 
g(h,0,0) - g(h,0,1) \\ 
g(h,1,0) - g(h,1,1) \\
\vdots \\
g(h,h-1,0) - g(h,h-1,1)
\end{bmatrix}
}_\infty 
\leq 2\epsilon.
\]
Now let $V$ be the Vandermonde matrix of $\{0,1,\dots,h-1\}$, and for any polynomial $P \in \RR[t]$ and $d \geq 0$ let $\MC_d(P)$ denote the degree-$d$ coefficient of $P$. Then we can equivalently write
\begin{align*}
&\norm{
\begin{bmatrix} 
\MC_0(P_0) & 0  & \dots & 0 \\
\MC_0(P_1) & \MC_1(P_1)  & \dots & 0 \\
\vdots & \vdots & \ddots  & \vdots \\
\MC_0(P_{h-1}) & \MC_1(P_{h-1}) & \dots & \MC_{h-1}(P_{h-1})
\end{bmatrix}
V^\t
\begin{bmatrix} 
g(h,0,0) + g(h,0,1) \\ 
g(h,1,0) + g(h,1,1) \\
\vdots \\
g(h,h-1,0) + g(h,h-1,1)
\end{bmatrix}
}_\infty
\leq 2\epsilon
\end{align*}
and
\begin{align*}
&\norm{
\begin{bmatrix} 
\MC_0(P_0) & 0  & \dots & 0 \\
\MC_0(P_1) & \MC_1(P_1)  & \dots & 0 \\
\vdots & \vdots & \ddots  & \vdots \\
\MC_0(P_{h-1}) & \MC_1(P_{h-1}) & \dots & \MC_{h-1}(P_{h-1})
\end{bmatrix}
V^\t
\begin{bmatrix} 
g(h,0,0) - g(h,0,1) \\ 
g(h,1,0) - g(h,1,1) \\
\vdots \\
g(h,h-1,0) - g(h,h-1,1)
\end{bmatrix}
}_\infty
\leq 2\epsilon
\end{align*}
where the coefficient matrix is lower-triangular since each polynomial $P_i$ has degree $i$. Now from \cref{lemma:partial-sum-rep} we also know that the diagonal entries of the coefficient matrix are all at least $1/h^h$ in absolute value, and all entries are at most $2^{h-1}h^h$ in absolute value, so the least singular value is at least $(2h)^{-2h^2-2h}$ by \cref{lemma:lt-lsv}. From \cref{lemma:vandermonde-lsv}, we have $\sigma_{\min}(V^\t) \geq 1/(2h^{h^2})$. Thus, we get 
\[
\norm{
\begin{bmatrix} 
g(h,0,0) + g(h,0,1) \\ 
g(h,1,0) + g(h,1,1) \\
\vdots \\
g(h,h-1,0) + g(h,h-1,1)
\end{bmatrix}
}_2
\leq 4h^{h^2}(2h)^{2h(h+1)}\epsilon \sqrt{h}
\]
\[
\norm{
\begin{bmatrix} 
g(h,0,0) - g(h,0,1) \\ 
g(h,1,0) - g(h,1,1) \\
\vdots \\
g(h,h-1,0) - g(h,h-1,1)
\end{bmatrix}
}_2
\leq 4h^{h^2}(2h)^{2h(h+1)}\epsilon \sqrt{h}.
\]
The triangle inequality gives
\[\sqrt{\sum_{s\in\MS[h]} g(s)^2} \leq 4h^{h^2}(2h)^{2h(h+1)}\epsilon \sqrt{h}\]
from which we get
\[\sum_{s\in\MS[h]} |g(s)| \leq 4\sqrt{2}hh^{h^2}(2h)^{2h(h+1)}\epsilon\]
which proves the lemma.
\end{proof}

Next, we prove that if any of the constraints in \cref{eq:bf-unbiased-1} or \cref{eq:bf-unbiased-2} are violated, then it is possible to efficiently construct an LPN sample with noise $1/2-\Omega(\delta\epsilon)$ from each regression sample, and thereby efficiently learn $\sk$.

\begin{lemma}\label{lemma:learn-from-corr}
Let $\Alg$ be an algorithm for learning noisy parities with unknown noise level with time complexity $T(n,\delta,\eta)$ and sample complexity $S(n,\delta,\eta)$ (\cref{def:lpn-unknown-noise}). Then there is an algorithm $\LFC$ with the following property. Let $n,N,h,d \in \NN$ and $\delta,\eta,\epsilon>0$. Let $\MS[h] := \{h\}\times\{0,1,\dots,h-1\}\times\BF_2$ and $I := (\{0,\dots,h-1\}\times\BF_2 \setminus (0,0)) \times \BF_2$. Let $\sk \in \BF_2^n$, $\beta\in\Delta(\MS[h])$, and $f: \MS[h]\to[0,1]$. For notational convenience, write $\mu^1 := \mu_{\beta,f}$ and $\mu^0 := \mu_{\beta,\alpha}$
where $\alpha := \sum_{s\in\MS[h]} \beta(s)f(s)$. 

Let $(m,r,r') \in I$. If 
\begin{align}
  \label{eq:yxf-asm}
\left|\Pr_{(B,F)\sim\mu^{r'}}\left[\sum_{i=1}^m B_i + rB_h + F \equiv 0 \bmod{2}\right] - \frac{1}{2}\right| \geq \epsilon
\end{align}
and $d \geq S(n,2\delta\epsilon,\eta)$, then
\[\Pr[\LFC((Z^i,F^i)_{i=1}^d, m,r,r',\delta,\epsilon,\eta) = \sk] \geq 1 -\eta]\]
where $(Z^i,F^i)_{i=1}^d$ are independent samples $(Z^i,F^i) \sim \sum_{s\in\MS[h]} \beta(s)\til\MD^\sk_{n,N,h,\delta}(\cdot|s) \times \Ber(f(s))$. Moreover, the time complexity of $\LFC$ is $T(n,2\delta\epsilon,\eta) + d \cdot \poly(n,N,h)$.
\end{lemma}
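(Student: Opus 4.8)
The plan is to build $\LFC$ so that it converts each regression sample into a single LPN sample and then runs $\Alg$ on the resulting dataset. On input $((Z^i,F^i)_{i=1}^d, m, r, r', \delta, \epsilon, \eta)$, the algorithm parses each $Z^i$ into its components via \cref{def:emission-components} and, for each $i \in [d]$, forms the pair $(\tilde u^i, \tilde y^i)$ given (with all arithmetic over $\BF_2$) by
\[ \tilde u^i := \sum_{j=1}^m u_j(Z^i) + r\,u_h(Z^i), \qquad \tilde y^i := \sum_{j=1}^m y_j(Z^i) + r\,y_h(Z^i) + r'\,F^i, \]
and it returns $\Alg\bigl((\tilde u^i,\tilde y^i)_{i=1}^d,\, 2\delta\epsilon,\, \eta\bigr)$. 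Let $S \subseteq [h]$ be the index set aggregated by $\tilde u^i$, i.e.\ $S = \{1,\dots,m\}$ when $r=0$ and $S = \{1,\dots,m\}\cup\{h\}$ when $r=1$; since $(m,r,r')\in I$ we have $(m,r)\neq(0,0)$, and since $m<h$ the element $h$ is not among $1,\dots,m$, so $S$ is nonempty with $|S|=m+r$. Thus $\tilde u^i = \sum_{j\in S}u_j(Z^i)$ and $\sum_{j=1}^m y_j(Z^i)+r\,y_h(Z^i) = \sum_{j\in S}y_j(Z^i)$. Note also that the hypothesis $\eqref{eq:yxf-asm}$ forces $\epsilon\le 1/2$, so $0<2\delta\epsilon<1/2$.

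The crux is to show that $(\tilde u^i,\tilde y^i)_{i=1}^d$ are i.i.d.\ draws from $\LPN_{n,\delreal}(\sk)$ for a single $\delreal\in[-1/2,1/2]$ with $\lvert\delreal\rvert\ge 2\delta\epsilon$. Realize each $(Z^i,F^i)$ as in \cref{def:emission}: draw $s^i\sim\beta$, then $u_1(Z^i),\dots,u_h(Z^i)\sim\Unif(\BF_2^n)$, $e^i\sim\CBer(h,\delta)$, $b^i\sim\nu_{s^i}$, and $F^i\sim\Ber(f(s^i))$, with $y_j(Z^i)=\langle u_j(Z^i),\sk\rangle+e^i_j+b^i_j$, and all $d$ samples independent. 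The structural point is that the three groups $\{u_j(Z^i)\}_{j}$, $e^i$, and $(b^i,F^i)$ are mutually independent: by \cref{def:emission} they are conditionally mutually independent given $s^i$, and, because the laws of $\{u_j(Z^i)\}_j$ and $e^i$ do not depend on $s^i$ (only $(b^i,F^i)$ does), mutual independence survives the mixture over $s^i\sim\beta$. Hence $\tilde u^i=\sum_{j\in S}u_j(Z^i)$ is a sum of $|S|\ge 1$ i.i.d.\ uniform vectors, so $\tilde u^i\sim\Unif(\BF_2^n)$ and the $\tilde u^i$ are independent across $i$, while
\[ \tilde y^i-\langle\tilde u^i,\sk\rangle = E^i+W^i, \qquad E^i:=\sum_{j\in S}e^i_j,\quad W^i:=\sum_{j=1}^m b^i_j+r\,b^i_h+r'\,F^i, \]
and the triple $(\tilde u^i, E^i, W^i)$ is mutually independent (its three coordinates being functions of the three independent groups above). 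By the defining property of $\CBer(h,\delta)$ (\cref{def:cber}) applied to the nonempty set $S$, we have $E^i\sim\Ber(1/2-\delta)$.

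It remains to lower bound the bias of $W^i$. If $r'=1$, then $(b^i,F^i)$ has law $\mu_{\beta,f}=\mu^1$, so $W^i$ is distributed as $B_1+\dots+B_m+rB_h+F$ under $\mu^1$ and \eqref{eq:yxf-asm} gives $\lvert\Pr[W^i\equiv 0\bmod{2}]-\tfrac12\rvert\ge\epsilon$ directly. If $r'=0$, then $W^i=b^i_1+\dots+b^i_m+r\,b^i_h$, whose law depends only on the $B$-marginal of $\mu^1$, which equals the $B$-marginal of $\mu^0=\mu_{\beta,\alpha}$; setting $p:=\Pr[W^i\equiv 1\bmod{2}]$ and using that $B$ and $F$ are independent under $\mu^0$ with $F\sim\Ber(\alpha)$, a direct expansion gives $\Pr_{(B,F)\sim\mu^0}[B_1+\dots+B_m+rB_h+F\equiv 0\bmod{2}]-\tfrac12=(\tfrac12-p)(1-2\alpha)$, so \eqref{eq:yxf-asm} forces $\lvert\tfrac12-p\rvert\ge\epsilon$ because $\lvert 1-2\alpha\rvert\le 1$, i.e.\ $\lvert\Pr[W^i\equiv 0\bmod{2}]-\tfrac12\rvert\ge\epsilon$ again. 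In both cases, \cref{lem:bernoulli-convolve} together with $E^i\sim\Ber(1/2-\delta)$, the bias bound on $W^i$, and the mutual independence of $(\tilde u^i,E^i,W^i)$ shows that $\tilde y^i-\langle\tilde u^i,\sk\rangle\sim\Ber(1/2-\delreal)$ with $\delreal:=2\delta\bigl(\tfrac12-\Pr[W^i\equiv 1\bmod{2}]\bigr)$ independent of $i$, $\lvert\delreal\rvert\ge 2\delta\epsilon$, and this noise independent of $\tilde u^i$. Therefore $(\tilde u^i,\tilde y^i)_{i=1}^d$ are i.i.d.\ $\LPN_{n,\delreal}(\sk)$, and since $0<2\delta\epsilon\le 1/2$, $\lvert\delreal\rvert\ge 2\delta\epsilon$, and $d\ge S(n,2\delta\epsilon,\eta)$, \cref{def:lpn-unknown-noise} yields $\Pr[\Alg((\tilde u^i,\tilde y^i)_{i=1}^d,2\delta\epsilon,\eta)=\sk]\ge 1-\eta$, as claimed.

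For the running time, parsing each $Z^i\in\BF_2^{h\times(n+1)\times(N+1)}$ and summing the at most $h$ relevant rows of length $n+1$ costs $\poly(n,N,h)$ per sample, for a total of $d\cdot\poly(n,N,h)$, after which the single call to $\Alg$ costs $T(n,2\delta\epsilon,\eta)$; summing gives the stated bound. I expect the only genuinely delicate part to be the $r'=0$ case of the bias bound on $W^i$ — where the label $F^i$ is not used at all and one must recover the bias of the pure-$B$ parity from a hypothesis stated in terms of $\mu^0$ — along with the routine but essential check that the covariate randomness $\{u_j(Z^i)\}_j$, the correlated-Bernoulli noise $e^i$, and the pair $(b^i,F^i)$ fully decouple, so that the LPN noise level $\delreal$ is a single fixed number rather than varying with the sample.
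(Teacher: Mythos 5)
Your proposal is correct and follows essentially the same route as the paper's proof: the identical aggregation $(\sum_{j\le m}u_j + r u_h,\ \sum_{j\le m}y_j + r y_h + r'F)$, the same reduction of the $r'=0$ case to the bias of the pure-$B$ parity (using that $\mu^0,\mu^1$ share the $B$-marginal and that $F\perp B$ under $\mu^0$, your identity $(\tfrac12-p)(1-2\alpha)$ being an equivalent reformulation of the paper's ratio argument), the $\CBer$ property for the aggregated noise, and \cref{lem:bernoulli-convolve} plus the unknown-noise-level guarantee of $\Alg$. The only cosmetic deviation is that you feed all $d$ samples to $\Alg$, whereas \cref{def:lpn-unknown-noise} states its correctness and time guarantees for exactly $S(n,2\delta\epsilon,\eta)$ samples, so (as the paper does) you should pass only the first $S(n,2\delta\epsilon,\eta)$ constructed samples.
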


\begin{proof}
The algorithm $\LFC$ outputs
\[\wh\sk := \Alg\left(\left(r \cdot u_h(Z^i) + \sum_{j=1}^m u_j(Z^i), r'F^i + r \cdot y_h(Z^i) + \sum_{j=1}^m y_j(Z^i) \right)_{i=1}^{S(n,2\delta\epsilon,\eta)}, 2\delta\epsilon,\eta\right)\]
where the maps $Z \mapsto u_j(Z)$ and $Z \mapsto y_j(Z)$ are as defined in \cref{def:emission-components}.
\paragraph{Analysis.} Observe that
\begin{align*}
\left|\Pr_{(B,F) \sim \mu^1}\left[\sum_{i=1}^m B_i + rB_h \equiv 0 \bmod{2}\right] - \frac{1}{2}\right|
&= \left|\Pr_{(B,F) \sim \mu^0}\left[\sum_{i=1}^m B_i + rB_h \equiv 0 \bmod{2}\right] - \frac{1}{2}\right| \\ 
&= \frac{1}{2|\alpha-\frac{1}{2}|}\left|\Pr_{(B,F) \sim \mu^0}\left[\sum_{i=1}^m B_i + rB_h + F \equiv 0 \bmod{2}\right] - \frac{1}{2}\right| \\ 
&\geq \left|\Pr_{(B,F) \sim \mu^0}\left[\sum_{i=1}^m B_i + rB_h + F \equiv 0 \bmod{2}\right] - \frac{1}{2}\right|
\end{align*}
where the first equality uses that $\mu^0, \mu^1$ have the same marginal distribution over $B$, and the second equality uses that $F$ is independent of $B$ under $\mu^0$. Thus, regardless of whether $r'=0$ or $r'=1$, \cref{eq:yxf-asm} implies the following bound on $\mu^1=\mu_{\beta,f}$:
\[\left|\Pr_{(B,F) \sim \mu^1}\left[\sum_{i=1}^m B_i + rB_h + r'F \equiv 1 \bmod{2}\right] - \frac{1}{2}\right| \geq \epsilon.\]
Let $p \in [0,1]$ denote the probability in the above display. The samples $(Z^i,F^i)_{i=1}^d$ are independent and identically distributed; fix any $i \in [d]$. Since $u_1(Z^i),\dots,u_h(Z^i) \sim \Unif(\BF_2^n)$ are independent, and at least one of $r$ or $m$ is nonzero (by definition of $I$), it's clear that $r\cdot u_h(Z^i) + \sum_{j=1}^m u_j(Z^i) \sim \Unif(\BF_2^n)$. Now condition on $u_1(Z^i),\dots,u_h(Z^i)$. By \cref{def:emission}, we have that
\begin{align*}
&r'F^i + r\cdot y_h(Z^i) + \sum_{j=1}^m y_j(Z^i)  - \left\langle r\cdot u_h(Z^i) + \sum_{j=1}^m u_j(Z^i), \sk\right\rangle \\ 
&= r'F^i + r\cdot (e_h+B_h) + \sum_{j=1}^m (e_j+B_j) \\
&= (re_h+e_1+\dots+e_m) + r'F^i + rB_h + \sum_{j=1}^m B_j 
\end{align*}
where $(e_1,\dots,e_h) \sim \CBer(h,\delta)$ and $(B,F) \sim \sum_{s\in\MS[h]} \beta(s)\nu_{s} \times \Ber(f(s)) = \mu^1$ are independent random variables that satisfy $e_h+B_h = y_h(Z^i) - \lng u_h(Z^i), \sk \rng$ and $e_j + B_j = y_j(Z^i) - \lng u_j(Z^i), \sk \rng$ for each $j \in [m]$. From the fact that $(e_1,\dots,e_h)\sim\CBer(h,\delta)$ and \cref{lemma:cber-sum}, we have that $re_h+e_1+\dots+e_m \sim \Ber(1/2-\delta)$. From the definition of $p$, we have that $r'F^i + rB_h + \sum_{j=1}^m B_j \sim \Ber(p)$. Thus, by \cref{lem:bernoulli-convolve}, the above sum has distribution $\Ber(1/2-2\delta(1/2-p))$. This means that the inputs to $\Alg$ are independent LPN samples with noise level $1/2-2\delta(1/2-p)$. Since $|1/2-p| \geq \epsilon$, we have \[1/2-2\delta(1/2-p) \in [0, 1/2-2\delta\epsilon] \cup [1/2+2\delta\epsilon, 1].\] Hence, by the guarantee of $\Alg$, we get that $\wh \sk = \sk$ with probability at least $1 - \eta$ over the samples $(Z^i,F^i)_{i=1}^d$. This proves correctness of $\LFC$. The time complexity bound is also immediate.
\end{proof}

Above, we crucially used the fact that the noise terms $e_1,\dots,e_h$ in the definition of $\til \MD^\sk_{n,N,h,\delta}(\cdot|k,b_h)$ (\cref{def:emission}) are drawn from the correlated Bernoulli distribution $\CBer(h,\delta)$, and not the product distribution $\Ber(1/2-\delta)^{\otimes h}$, so that the sum of two or more noise terms (over $\BF_2$) still has bias $\delta$ rather than $O(\delta^2)$. This property is formalized below: 

\begin{lemma}\label{lemma:cber-sum}
Fix $n \in \NN$ and $\delta \in (0,1/2)$. Let $X\sim \CBer(n,\delta)$. For any nonempty set $S\subseteq [n]$, it holds that $\sum_{i\in S} X_i \sim \Ber(1/2-\delta)$.
\end{lemma}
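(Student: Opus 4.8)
The plan is to unfold the definition of $\CBer(n,\delta)$ (\cref{def:cber}) and condition on which of the two mixture components is realized. By definition, a draw $X \sim \CBer(n,\delta)$ is, with probability $2\delta$, the all-zeros vector (the component $\Ber(0)^{\otimes n}$), and, with probability $1-2\delta$, a uniformly random vector in $\BF_2^n$ (the component $\Ber(1/2)^{\otimes n}$). Fix any nonempty $S \subseteq [n]$ and set $Y := \sum_{i \in S} X_i \bmod 2$.

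First I would handle the two cases. In the first case $X = 0$, so $Y = 0$ deterministically. In the second case $X \sim \Unif(\BF_2^n)$; here the key (standard) fact is that for any nonempty index set $S$, the parity $\sum_{i \in S} X_i$ of a uniformly random vector is itself $\Unif(\BF_2)$ — e.g. pick any $i_0 \in S$, condition on all other coordinates, and note that flipping $X_{i_0}$ flips $Y$, so $Y$ is uniform. Combining the two cases, $\Pr[Y = 1] = 2\delta \cdot 0 + (1-2\delta)\cdot \tfrac12 = \tfrac12 - \delta$, which is exactly the statement that $Y \sim \Ber(1/2-\delta)$.

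There is essentially no obstacle here; the only thing to be mildly careful about is that the argument genuinely uses nonemptiness of $S$ (for $S = \emptyset$ the parity is the constant $0$), and that the contribution of the degenerate component vanishes precisely because its parity is always $0$. I would present this as a two-line computation rather than a formal case split if space is tight.
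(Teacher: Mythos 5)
Your proposal is correct and follows the same argument as the paper: decompose $\CBer(n,\delta)$ into its two mixture components $\Ber(0)^{\otimes n}$ (weight $2\delta$) and $\Ber(1/2)^{\otimes n}$ (weight $1-2\delta$), note the parity is $0$ in the first case and uniform in the second (using nonemptiness of $S$), and combine to get bias $\delta$. No gaps; this matches the paper's proof of \cref{lemma:cber-sum}.
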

\begin{proof}
Note that $\CBer(n,\delta)$ is a mixture of the distribution $\Ber(0)^{\otimes n}$ and the distribution $\Ber(1/2)^{\otimes n}$. Consider the event $\ME$ that $X$ is drawn from the latter distribution, which occurs with probability $1-2\delta$. Then $\sum_{i\in S} X_i | \ME$ has distribution $\Ber(1/2)$, whereas $\sum_{i \in S} X_i | \overline{\ME}$ has distribution $\Ber(0)$. The lemma follows.   
\end{proof}

We now combine \cref{lemma:f-corr,lemma:learn-from-corr} to give a regression algorithm. The idea is to compute the best constant predictor $\MR^0$, as well as one predictor $\MR^{m,r,r'}$ for each of the possible tuples $(m,r,r') \in I$ identified in \cref{lemma:learn-from-corr}. Each of these latter predictors $\MR^{m,r,r'}$ is constructed by using $\LFC$ to estimate a candidate $\hat \sk^{m,r,r'}$ for $\sk$ and then estimating $f$ under the assumption that $\Dec^{\hat\sk^{m,r,r'}}_{n,N,h}$ is the true decoding function. After constructing all of these predictors, the algorithm then uses fresh samples to select the best predictor. In \cref{lemma:regression-alg}, this algorithm is formalized and analyzed in the setting where the emissions are drawn from $\til\MD^{\sk}_{n,N,h,\delta}$ rather than $\MD^\sk_{n,N,h,\delta}$.

\begin{lemma}\label{lemma:regression-alg}
Let $\Alg$ be an algorithm for learning noisy parities with unknown noise level, with time complexity $T(n,\delta,\eta)$ and sample complexity $S(n,\delta,\eta)$ (\cref{def:lpn-unknown-noise}). Then there is an algorithm $\RegressAlg$ with the following property. Let $n,N,h,d \in \NN$ and $\delta,\eta,\epsilon > 0$. Let $\MS[h] := \{h\}\times\{0,1,\dots,h-1\}\times\BF_2$ and $\delreg := \delta\epsilon/(4C_{\ref{lemma:f-corr}}(h))$.

For any $\sk \in \BF_2^n$, distribution $\beta \in \Delta(\MS[h])$, and function $f: \MS[h] \to [0,1]$, given $d$ independent samples $(Z^i, F^i)_{i=1}^d$ with \[(Z^i,F^i) \sim \mureg := \sum_{s \in \MS[h]} \beta(s) \cdot \left( \til\MD^\sk_{n,N,h,\delta}(\cdot|s) \times \Ber(f(s))\right),\] the output of $\hat\MR \gets \RegressAlg((Z^i,F^i)_{i=1}^d, \delta, \epsilon)$, with probability at least $1-\eta$, is a $(\beta,f,\epsilon)$-predictor (\cref{def:reg-predictor}) with respect to $\til\MD^\sk_{n,N,h,\delta}$, so long as
\begin{equation} d \geq \max(4S(n, \delreg,\eta/4), 2^{21}h^4\epsilon^{-4}\log(32h/\eta))\label{eq:regression-alg-d-lb}\end{equation}
and $N \geq \delta^{-4}\log(512h^3/\epsilon^2)$.

Moreover, $\hat\MR$ is a circuit of size $\poly(n,N,h,\log d)$, and the time complexity of $\RegressAlg$ is $O(h) \cdot T(n, \delreg, \eta/4) + d\cdot \poly(n,h,N)$.


\end{lemma}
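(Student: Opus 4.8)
The plan is to have $\RegressAlg$ output the best of a short list of candidate predictors, using the dichotomy supplied by \cref{lemma:f-corr} and \cref{lemma:learn-from-corr}. Split the $d$ samples into a \emph{training} half and a \emph{validation} half. From the training half, build: (i) a constant predictor $\MR^\circ$ that outputs the empirical mean of the labels, clipped to $[0,1]$; and (ii) for each tuple $(m,r,r')$ in the index set $I := (\{0,\dots,h-1\}\times\BF_2\setminus\{(0,0)\})\times\BF_2$ of \cref{lemma:learn-from-corr}, a candidate key $\wh\sk^{m,r,r'} := \LFC((Z^i,F^i)_i, m,r,r',\delta,\tau,\eta/4)$, where $\tau := \epsilon/(4C_{\ref{lemma:f-corr}}(h))$ (so $\delta\tau = \delreg$), together with the predictor $\MR^{m,r,r'}(x) := \wh f^{m,r,r'}(\Dec^{\wh\sk^{m,r,r'}}_{n,N,h}(x))$, where $\wh f^{m,r,r'}(s)$ is the clipped empirical average of $F^i$ over the training indices $i$ with $\Dec^{\wh\sk^{m,r,r'}}_{n,N,h}(Z^i)=s$ (and $0$ if there are none). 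Finally, output whichever of these $O(h)$ candidates minimizes empirical squared loss $\frac{2}{d}\sum_{i\in\text{val}}(\MR(Z^i)-F^i)^2$ on the validation half.

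\textbf{The dichotomy.} Let $\alpha := \sum_s\beta(s)f(s)$, and call a tuple $(m,r,r')\in I$ \emph{bad} if $|\Pr_{\mu^{r'}}[\sum_{i\le m}B_i+rB_h+F\equiv0] - \tfrac12|\ge\tau$, in the notation of \cref{lemma:learn-from-corr} ($\mu^1=\mu_{\beta,f}$, $\mu^0=\mu_{\beta,\alpha}$). In Case A (no bad tuple), I claim all hypotheses of \cref{lemma:f-corr} hold with its $\epsilon$ set to $\tau$: the $\mu_{\beta,f}$-constraints are exactly the $r'=1$ tuples, and the matching $\mu_{\beta,\alpha}$-constraints follow from the $r'=0$ tuples, since under $\mu^0$ the bit $F\sim\Ber(\alpha)$ is independent of $B$, so by \cref{lem:bernoulli-convolve} the bias of $\sum_{i\le m}B_i+F$ under $\mu^0$ is $2|\tfrac12-\alpha|$ times the bias of $\sum_{i\le m}B_i$, hence at most the latter, which is controlled by tuple $(m,0,0)$ (and likewise the $B_h$-versions via $(m,1,0)$ — note every such $(m,r)\ne(0,0)$ lies in $I$). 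Thus \cref{lemma:f-corr} gives $\sum_s\beta(s)|f(s)-\alpha|\le C_{\ref{lemma:f-corr}}(h)\tau=\epsilon/4$, so $\EE_{s\sim\beta}(f(s)-\alpha)^2\le\epsilon/4$ and the constant predictor $x\mapsto\alpha$ has excess risk $\le\epsilon/4$; a Hoeffding bound (using $d\ge\Omega(\epsilon^{-1}\log(1/\eta))$) makes $\MR^\circ$ within $\sqrt{\epsilon}/2$ of $\alpha$, so $\MR^\circ$ has excess risk $\le\epsilon/2$. In Case B (some bad tuple $(m,r,r')$), since the training half has size $d/2\ge S(n,2\delta\tau,\eta/4)=S(n,2\delreg,\eta/4)\le S(n,\delreg,\eta/4)$, \cref{lemma:learn-from-corr} gives $\wh\sk^{m,r,r'}=\sk$ with probability $\ge1-\eta/4$; on that event, standard concentration (Chernoff on the number of training samples landing in each state with $\beta(s)>\epsilon/(16h)$, Hoeffding on the conditional label averages, and the crude bound $\beta(s)\le1$ for the rare states) combined with \cref{lemma:decoding-error-prob} — which, via $N\ge\delta^{-4}\log(512h^3/\epsilon^2)$, bounds the per-sample probability that $\Dec^\sk_{n,N,h}(Z^i)$ differs from the true latent state by $\le\epsilon^2/(512h^2)$, so misattributed training samples are negligibly few — shows $\MR^{m,r,r'}$ has excess risk $\le\epsilon/2$, using $d\ge\Omega(h^2\epsilon^{-2}\log(h/\eta))$ (slack in the stated bound).

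\textbf{Selection and bookkeeping.} In either case the candidate list contains a predictor of excess risk $\le\epsilon/2$. Because $\EE[F\mid Z]=\EE[f(s)\mid Z]$ (conditioning on the emission, over the random true latent), the population squared loss of \emph{any} predictor $\MR$ decomposes exactly as its excess risk plus the predictor-independent constant $\EE_{s\sim\beta}[f(s)(1-f(s))]$; hence minimizing empirical squared loss on the validation half is the same as minimizing empirical excess risk. Since the $O(h)$ candidates are determined by the training half and independent of the validation half, a union bound plus Hoeffding (using validation size $\ge\Omega(\epsilon^{-2}\log(h/\eta))$, covered by the stated bound on $d$) ensures every candidate's empirical loss is within $\epsilon/8$ of its population loss, so the selected predictor has excess risk $\le\epsilon/2+\epsilon/4<\epsilon$, i.e.\ it is a $(\beta,f,\epsilon)$-predictor with respect to $\til\MD^\sk_{n,N,h,\delta}$ (\cref{def:reg-predictor}). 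The overall failure probability is at most $\eta$ by a union bound over the four events above. For the resource bounds: $\Dec^t_{n,N,h}$ has circuit size $\poly(n,N,h)$ and each $\wh f$ is a table of $O(h)$ values of bit-length $O(\log d)$, so each candidate (hence the output) is a circuit of size $\poly(n,N,h,\log d)$; the running time is $O(h)$ calls to $\LFC$, costing $O(h)\,T(n,\delreg,\eta/4) + d\cdot\poly(n,h,N)$ in total, plus $d\cdot\poly(n,h,N)$ for constructing the $\wh f$'s and doing the selection. The main obstacle is the parameter bookkeeping in the dichotomy: one must set the threshold $\tau$ precisely at $\epsilon/(4C_{\ref{lemma:f-corr}}(h))$ so that simultaneously \cref{lemma:f-corr} yields a good constant predictor in Case A and \cref{lemma:learn-from-corr} only requires an unknown-noise LPN solver at noise level $1/2-2\delreg$ in Case B, and one must notice that it is exactly the $r'=0$ tuples of $I$ that translate "no bad tuple" into the $\mu_{\beta,\alpha}$-hypotheses of \cref{lemma:f-corr}.
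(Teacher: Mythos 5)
Your proposal is correct and follows essentially the same route as the paper's proof: the same dichotomy driven by \cref{lemma:f-corr}, one $\LFC$-based candidate per tuple in $I$ plus a constant predictor, empirical conditional-mean estimation under the recovered key (with \cref{lemma:decoding-error-prob} handling decoding errors), and selection on held-out samples via the loss decomposition into excess risk plus a predictor-independent variance term. The only deviations are cosmetic: the paper splits the data into four quarters (so the $\LFC$ samples are disjoint from the $f$-estimation samples, whereas your reuse needs the good-estimation event to be defined with respect to the true decoder and combined by a union bound rather than conditioning), it sets the $\LFC$ threshold to $\epsilon/(8C_{\ref{lemma:f-corr}}(h))$ so the noise level is exactly $\delreg$ (your choice of $\tau$ gives $2\delreg$ and needs the harmless monotonicity of $S,T$ in $\delta$), and your detour through \cref{lem:bernoulli-convolve} in Case A is unnecessary since the $r'=0$ tuples are literally the $\mu_{\beta,\alpha}$-constraints.
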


\begin{proof}
For notational convenience, define $\alpha := \sum_{s \in \MS[h]} \beta(s) \cdot f(s)$. Also note that the algorithm has access to the parameter $h$ (as it is determined by any sample $Z^i$). 
  

\paragraph{Algorithm description.} First, $\RegressAlg$ computes $\hat\alpha := \frac{4}{d}\sum_{i=1}^{d/4} F^i$. The first candidate predictor $\MR^0$ is the constant function $\MR^0(Z) := \hat\alpha$. The remaining predictors are indexed by $I := (\{0,\dots,h-1\}\times\BF_2\setminus(0,0))\times\BF_2$. In particular, for each $(m,r,r') \in I$, $\RegressAlg$ computes
\[\wh\sk^{m,r,r'} \gets \LFC((Z^i,F^i)_{i=d/4+1}^{d/2}, m, r, r', \delta, \epsilon/(8C_{\ref{lemma:f-corr}}(h)), \eta/4),\]
where $\LFC$ was defined in \cref{lemma:learn-from-corr}.
Next, for each $(m,r,r') \in I$, $\RegressAlg$ computes a decoding function $\Dec^{m,r,r'}$ defined by $\Dec^{m,r,r'}(Z) := \Dec^{\wh\sk^{m,r,r'}}_{n,N,h}(Z)$ (\cref{def:decoder})
and a regressor $f^{m,r,r'}: \MS[h] \to \RR$ defined by
\[f^{m,r,r'}(s) := \frac{1}{\#\{d/2 < i \leq 3d/4: \Dec^{m,r,r'}(Z^i) = s\}} \sum_{i=d/2+1}^{3d/4} F^i \cdot \mathbbm{1}[\Dec^{m,r,r'}(Z^i) = s].\]
Finally, for each $(m,r,r') \in I$, the candidate predictor $\MR^{m,r,r'}$ is defined as $\MR^{m,r,r'}(Z) := f^{m,r,r'}(\Dec^{m,r,r'}(Z))$.

The output of $\RegressAlg$ is the predictor
\[\hat\MR \gets \argmin_{\MR \in \mathfrak{R}} \frac{2}{d}\sum_{i=3d/4+1}^d \left(\MR(Z^i) - F^i\right)^2\] 
where $\mathfrak{R} := \{\MR^0\}\cup\{\MR^{m,r,r'}: (m,r,r') \in I\}$.

\paragraph{Analysis.} The circuit size bound on $\hat\MR$ is immediate from the algorithm description; indeed, for any $t \in \BF_2^n$, the function $\Dec^t_{n,N,h}$ has circuit size $\poly(n,N,h)$, and each constant $f^{m,r,r'}(s)$ (as well as $\hat \alpha$) is a rational number in $[0,1]$ with denominator at most $d$, so it has circuit size $\poly(\log d)$.

We next prove that $\hat\MR$ is a $(\beta,f,\epsilon)$-predictor with respect to $\til\MD^\sk_{n,N,h,\delta}$, with probability at least $1-\eta$. By Hoeffding's inequality and the fact that all of the predictors $\MR\in\mathfrak{R}$ are independent of $(Z^i,F^i)_{i=3d/4+1}^d$, there is some event $\ME_1$ occurring with probability at least $1-\eta/8$ over the samples $(Z^i,F^i)_{i=3d/4+1}^d$ so that, under $\ME_1$, 
\[\EE_{(Z,F)\sim\mureg}(\hat\MR(Z) - F)^2 \leq \frac{2\log(8(2h+1)/\eta)}{\sqrt{d/4}} + \min_{\MR\in\mathfrak{R}} \EE_{(Z,F)\sim\mureg}(\MR(Z) - F)^2.\]
So long as $d \geq 64\epsilon^{-2} \log(8(2h+1)/\eta)$, it thus suffices to show that at least one of the predictors in $\mathfrak{R}$ has at most $\epsilon/2$ excess risk (with high probability over the first $3d/4$ samples). We proceed to show that this is the case.

By Hoeffding's inequality, since $d/4 \geq 128\epsilon^{-2}\log(16/\eta)$, we have, under some event $\ME_2$ occurring with probability at least $1-\eta/8$ over the samples $(Z^i,F^i)_{i=1}^{d/4}$, that $|\hat\alpha - \alpha| \leq \epsilon/8$. We proceed to condition the first $d/4$ samples, which fixes $\hat\alpha$ and thus the estimator $\MR^0$, and we restrict to the event that $\ME_2$ holds. Now we consider two cases. In the first case, $\MR^0$ itself has low excess risk, i.e.
\begin{align*}
\EE_{(Z,F)\sim \mureg} (\MR^0(Z) - F)^2 
&= \sum_{s \in \MS[h]} \beta(s) \left(f(s)(1-\hat\alpha)^2 + (1-f(s))\hat\alpha^2\right)\\
&\leq \frac{\epsilon}{2} + \sum_{s\in\MS} \beta(s) \Var(\Ber(f(s))).
\end{align*}
Next we consider the second case, in which the reverse inequality holds. Thus, 
\[\sum_{s\in\MS[h]} \beta(s)\left|f(s) - \hat\alpha\right| \geq \sum_{s\in\MS[h]} \beta(s)(f(s) - \hat\alpha)^2 > \frac{\epsilon}{2}.\]
 By the triangle inequality, under $\ME_2$, we then get
\[\sum_{s\in\MS[h]} \beta(s)|f(s)-\alpha| > \frac{\epsilon}{8}.\]
By \cref{lemma:f-corr}, it follows that there is some $(m,r,r') \in I$ such that 
\[\left|\Prr_{(B,F) \sim \mu^{r'}}\left[\sum_{i=1}^m B_i + rB_h + F \equiv 0 \bmod{2}\right] - \frac{1}{2}\right| > \frac{\epsilon}{8C_{\ref{lemma:f-corr}}(h)} = \frac{\delreg}{2\delta}\]
where $\mu^0 := \sum_{s \in \MS[h]} \beta(s)\nu_s \times \Ber(\alpha)$ and $\mu^1 := \sum_{s \in \MS[h]} \beta(s)\nu_{s} \times \Ber(f(s))$, and the final equality is by definition of $\delreg$. By the guarantee of \cref{lemma:learn-from-corr}, since $d/4 \geq S(n,\delreg, \eta/4)$, we have for this particular $(m,r,r')$ that \[\Pr[\wh\sk^{m,r,r'} = \sk] \geq 1-\eta/4\] where the probability is over the samples $(Z^i,F^i)_{i=d/4+1}^{d/2}$. Let the event $\{\wh\sk^{m,r,r'}=\sk\}$ be denoted by $\ME_3$. Now condition additionally on the samples $(Z^i,F^i)_{i=d/4+1}^{d/2}$, which determines $\wh\sk^{m,r,r'}$ and thus $\Dec^{m,r,r'}$. Restrict to the event that $\ME_3$ holds. Then $\Dec^{m,r,r'} \equiv \Dec^\sk_{n,N,h}$, so
\begin{align}
\Prr_{\substack{s\sim\beta \\ Z \sim \til\MD^\sk_{n,N,h,\delta}(\cdot|s)}}[\Dec^{m,r,r'}(Z) \neq s] 
&\leq h \cdot \exp(-\delta^4 N) \nonumber\\ 
&\leq \frac{\epsilon^2}{128|\MS[h]|^2} \label{eq:decoding-error}
\end{align}
where the first inequality is by \cref{lemma:decoding-error-prob} and the second inequality uses that $N \geq \delta^{-4}\log(512h^3/\epsilon^2)$. Thus, we can bound the excess risk of $\MR^{m,r,r'}$ as follows:
\begin{align}
&\EE_{(Z,F)\sim \mureg} (\MR^{m,r,r'}(Z) - F)^2 - \sum_{s \in \MS[h]} \beta(s) \Var(\Ber(f(s)))\nonumber\\
&= \sum_{s \in \MS[h]} \beta(s) \EE_{Z \sim \til\MD^\sk_{n,N,h,\delta}(\cdot|s)} (\MR^{m,r,r'}(Z) - f(s))^2 \nonumber\\ 
&= \sum_{s \in \MS[h]} \beta(s) \EE_{Z \sim \til\MD^\sk_{n,N,h,\delta}(\cdot|s)} (f^{m,r,r'}(\Dec^{m,r,r'}(Z)) - f(s))^2 \nonumber\\ 
&\leq \frac{\epsilon^2}{128|\MS[h]|^2} + \sum_{s \in \MS[h]} \beta(s)(f^{m,r,r'}(s) - f(s))^2 \nonumber\\
&\leq \frac{\epsilon}{4} + |\MS[h]|\max_{s\in\MS[h]: \beta(s) \geq \epsilon/(8|\MS[h]|)} \beta(s)(f^{m,r,r'}(s) - f(s))^2 \label{eq:mbc-excess-risk}
\end{align}
 where the first inequality uses \cref{eq:decoding-error} and holds under $\ME_3$, and the second inequality bounds each term of the summation for which $\beta(s) \leq \epsilon/(8|\MS[h]|)$ by $\epsilon/(8|\MS[h]|)$, so that in total these terms contribute at most $\epsilon/8$. But now fix any $s\in\MS[h]$ with $\beta(s) \geq \epsilon/(8|\MS[h]|)$. By Hoeffding's inequality, since $d/4 \geq 2^{15}|\MS[h]|^4\epsilon^{-4}\log(16|\MS[h]|/\eta)$, under some event $\ME_{4}^{s}$ occurring with probability at least $1-\eta/(8|\MS[h]|)$, it holds that 
\[\left|\frac{4}{d}\#\{d/2 < i \leq 3d/4: \Dec^{m,r,r'}(Z^i) = s\} - \Prr_{\substack{s'\sim\beta \\ Z\sim \til\MD^\sk_{n,N,h,\delta}(\cdot|s')}}[\Dec^{m,r,r'}(Z) = s]\right| \leq \frac{\epsilon^2}{128|\MS[h]|^2}.\]
Moreover, from \cref{eq:decoding-error}, we know that under $\ME_3$, 
\[\left|\Prr_{\substack{s'\sim\beta \\ Z\sim \til\MD^\sk_{n,N,h,\delta}(\cdot|s')}}[\Dec^{m,r,r'}(Z) = s] - \beta(s)\right| \leq \frac{\epsilon^2}{128|\MS[h]|^2}.\]
Thus, under $\ME_3 \cap \ME_4^s$,
\begin{align}
  \label{eq:e34-bound}
  \left|\frac{4}{d}\#\{d/2 < i \leq 3d/4: \Dec^{m,r,r'}(Z^i) = s\} - \beta(s)\right| \leq \frac{\epsilon^2}{128|\MS[h]|^2} \leq \frac{\epsilon}{16|\MS[h]|}\beta(s).
\end{align}
Similarly, by Hoeffding's inequality, we have that, under some event $\ME_5^s$ that occurs with probability at least $1-\eta/(8|\MS[h]|)$, that
\[\left|\frac{4}{d}\sum_{i=d/2+1}^{3d/4}F^i \cdot \mathbbm{1}[\Dec^{m,r,r'}(Z^i) = s] - \EE_{\substack{s'\sim\beta \\ Z \sim \til\MD^\sk_{n,N,h,\delta}(\cdot|s') \\ F \sim \Ber(f(s'))}} F \cdot \mathbbm{1}[\Dec^{m,r,r'}(Z) = s]\right| \leq \frac{\epsilon}{128|\MS[h]|}.\]
From \cref{eq:decoding-error}, under $\ME_3$, 
\[\left|\EE_{\substack{s'\sim\beta \\ Z \sim \til\MD^\sk_{n,N,h,\delta}(\cdot|s') \\ F \sim \Ber(f(s'))}} F \cdot \mathbbm{1}[\Dec^{m,r,r'}(Z) = s] - \beta(s) f(s) \right| \leq \frac{\epsilon^2}{128|\MS[h]|^2}.\]
Thus, under $\ME_3 \cap \ME_5^{s}$, 
\begin{align}
  \label{eq:e35-bound}
  \left|\frac{4}{d}\sum_{i=d/2+1}^{3d/4}F^i \cdot \mathbbm{1}[\Dec^{m,r,r'}(Z^i) = s] - \beta(s)f(s)\right| \leq \frac{\epsilon}{64|\MS[h]|}.
\end{align}
From the definition of $f^{m,r,r'}$, we conclude that, under $\ME_3 \cap \ME_4^{s} \cap \ME_5^{s}$, \allowdisplaybreaks
\begin{align*}
&|f^{m,r,r'}(s) - f(s)| \\
&\leq \left|\frac{\frac{4}{d}\sum_{i=d/2+1}^{3d/4}F^i\cdot\mathbbm{1}[\Dec^{m,r,r'}(Z^i) = s]}{\frac{4}{d}\#\{d/2<i\leq 3d/4: \Dec^{m,r,r'}(Z^i)=s\}} - \frac{\frac{4}{d}\sum_{i=d/2+1}^{3d/4}F^i\cdot\mathbbm{1}[\Dec^{m,r,r'}(Z^i) = s]}{\beta(s)}\right| \\ 
&\qquad+ \frac{\left|\frac{4}{d}\sum_{i=d/2+1}^{3d/4}F^i\cdot\mathbbm{1}[\Dec^{m,r,r'}(Z^i) = s] - \beta(s)f(s)\right|}{\beta(s)} \\ 
&\leq \left|\frac{1}{\frac{4}{d}\#\{d/2<i\leq 3d/4: \Dec^{m,r,r'}(Z^i)=(s)\}} - \frac{1}{\beta(s)}\right| \\ 
&\qquad+ \frac{1}{\beta(s)}\left|\frac{4}{d}\sum_{i=d/2+1}^{3d/4}F^i\cdot\mathbbm{1}[\Dec^{m,r,r'}(Z^i) = s] - \beta(s)f(s)\right| \\ 
&\leq \frac{(\epsilon/16|\MS[h]|)\beta(s)}{(1/2)\beta(s)^2} + \frac{\epsilon^2/(64|\MS[h]|)}{\beta(s)} \\ 
&\leq \frac{\epsilon}{4|\MS[h]|\beta(s)},
\end{align*}
where the third inequality uses \cref{eq:e34-bound,eq:e35-bound}. 
Let us write $\ME_4 := \bigcup_{s :\ \beta(s) \geq \ep/(8|\MS[h]|)} \ME_4^s$, and define $\ME_5$ similarly. Each of $\ME_4, \ME_5$ occurs with probability at least $1-\eta/8$. By a union bound, the above display holds for all $s\in\MS$ with $\beta(s) \geq \epsilon/(8|\MS[h]|)$ under the event $\ME_3 \cap \ME_4 \cap \ME_5$. 
So, by \cref{eq:mbc-excess-risk}, under the event $\ME_3 \cap \ME_4 \cap \ME_5$,  we have that
\[\EE_{(Z,F)\sim \mureg} (\MR^{m,r,r'}(Z) - F)^2 - \sum_{s \in \MS} \beta(s) \Var(\Ber(f(s))) \leq \frac{\epsilon}{2}.\]
This completes the second case: under event $\ME_2$, in either case, with probability at least $1-3\eta/4$, at least one of the predictors has excess risk at most $\epsilon/2$. We now apply the union bound with events $\ME_2$ and $\ME_1$, which each occur with probability at least $1-\eta/8$. Hence, with probability at least $1-\eta$, the excess risk of $\hat\MR$ is at most $\epsilon$, i.e. $\hat\MR$ is a $(\beta,f,\epsilon)$-predictor with respect to $\til \MD^\sk_{n,N,h,\delta}(\cdot|s)$, as defined in \cref{def:reg-predictor}.

\paragraph{Time complexity.} The first part of the algorithm consists of $O(h)$ calls to $\LFC$, for overall time complexity of $O(h) \cdot T(n,\delreg,\eta/4) + d \cdot \poly(n,N,h)$ (by \cref{lemma:learn-from-corr}). The second part requires computing the functions $f^{m,r,r'}$, for overall time complexity of $d \cdot \poly(n,N,h)$. The third and final part requires selecting $\hat \MR$, which again has time complexity $d \cdot \poly(n,N,h)$.
\end{proof}

The proof of \cref{lemma:realizable-regression-alg} follows almost directly by applying \cref{lemma:regression-alg} with the parameters $n,N,H$ set in \cref{def:mdp-family}, and unpacking the definition of \cref{def:regression-algorithm}. The only remaining detail is to handle the discrepancy between $\MD^\sk_{n,N,H,\delta}$ (the true emission distribution of $M^\sk_{n,N,H,\delta}$) and $\til \MD^\sk_{n,N,H,\delta}$ (the ``unconditioned'' version of the emission distribution, that shows up in \cref{lemma:regression-alg}). 

\begin{proof}[\textbf{Proof of \cref{lemma:realizable-regression-alg}}]
Let $\RegressAlg$ be the algorithm guaranteed by \cref{lemma:regression-alg}, instantiated with the algorithm $\Alg$ for learning noisy parities. Then the regression algorithm on input $(x^i,y^i)_{i=1}^d$ and $n \geq \max(n_0, C_{\ref{lemma:realizable-regression-alg}})$ simply computes and outputs $\hat \MR \gets \RegressAlg((x^i,y^i)_{i=1}^d, \delta(n),\epsilon(n))$. When $n < \max(n_0, C_{\ref{lemma:realizable-regression-alg}})$, the regression algorithm iterates over all decoding functions $\Dec^t_{n,N}$ (for $t \in \BF_2^n$), computes the corresponding optimal regressor for each decoding function, and outputs the regressor with minimal empirical loss. 

\paragraph{Analysis.} When $n < \max(n_0,C_{\ref{lemma:realizable-regression-alg}})$, it's clear that for sufficiently large sample complexity, the output regressor will be an accurate predictor per \cref{def:regression-algorithm}. The lemma statement requires no bounds on $\Sreg(n)$ and $\Treg(n)$ in this case, so we are done with this case. Now fix $n \geq \max(n_0,C_{\ref{lemma:realizable-regression-alg}})$ and $M := M^\sk_{n,N_n,H_n,\delta(n)} \in \MM(\delta)$. For notational convenience write $N=N_n$, $H=H_n$, $\delta=\delta(n)$, and $\epsilon=\epsilon(n)$. Set 
\begin{equation} \Sreg(n) := \max(4S(n,\delta\epsilon/(4C_{\ref{lemma:f-corr}}(H)),1/n), 2^{21}H^4\epsilon^{-4}\log(32Hn))
\label{eq:sreg-def}
\end{equation}
and note that it satisfies the claimed asymptotic \cref{eq:sreg}. We may assume without loss of generality that $S(n,\delta\epsilon/(4C_{\ref{lemma:f-corr}}(H)),1/n) \leq O(C_{\ref{lemma:f-corr}}(H)^2 \delta^{-2}\epsilon^{-2} n 2^n)$ (since we can interleave $\Alg$ with $\Brute$, which has time complexity bounded as in \cref{lemma:brute-force-lpn}). It follows from \cref{eq:sreg-def}, the assumption that $\delta,\epsilon \geq 2^{-n/8}$, the definition of $H=H(n)$, and the assumption that $n \geq C_{\ref{lemma:realizable-regression-alg}}$ that $\Sreg(n) \leq 2^{2n}$ so long as $C_{\ref{lemma:realizable-regression-alg}}$ is a sufficiently large absolute constant.

Fix $h \in [H]$ and $\beta\in\Delta(\MS[h])$, where $\MS[h] := \{h\}\times\{0,\dots,h-1\}\times\BF_2$ is the set of states reachable at step $h$ in $M$. By construction, the emission distribution of $M$ for any state $s \in \MS[h]$ is $\BO_h(\cdot|s) := \MD^\sk_{n,N,h,\delta}(\cdot|s)$ and the decoding function is $\Dec^\sk_{n,N}(\cdot)$. Let $(Z^i,F^i)_{i=1}^{\Sreg(n)}$ be i.i.d. samples where $Z^i \sim \sum_{s\in\MS[h]} \beta(s)\BO_h(\cdot|s)$ and $F^i \in \{0,1\}$ satisfies $F^i \perp Z^i | \Dec^\sk_{n,N}(Z^i)$. Then
\[(Z^i,F^i) \sim \sum_{s\in\MS[h]} \beta(s)\MD^\sk_{n,N,h,\delta}(\cdot|s) \times \Ber(f(s))\]
where $f(s) := \E[F^i|\Dec^\sk_{n,N,h}(Z^i)=s]$. By \cref{lemma:decoding-error-prob} and choice of $N$, it follows that the distribution of $(Z^i,F^i)_{i=1}^{\Sreg(n)}$ is within total variation distance $H\exp(-3n) \cdot \Sreg(n)$ of
\[\sum_{s\in\MS[h]} \beta(s)\til\MD^\sk_{n,N,h,\delta}(\cdot|s) \times \Ber(f(s)).\]
By definition of $H$, the previously derived bound $\Sreg(n) \leq 2^{2n}$, and the assumption that $n \geq C_{\ref{lemma:realizable-regression-alg}}$, we have that $H\exp(-3n)\cdot\Sreg(n) \leq \exp(-n)$. Thus, applying \cref{lemma:regression-alg}, and using the fact that \cref{eq:regression-alg-d-lb} is satisfied (by definition of $\Sreg(n)$) and $N \geq \delta^{-4}\log(512H^3/\epsilon^2)$ (by definition of $N, H$, the assumption that $\epsilon \geq 2^{-n/8}$, and the assumption that $n \geq C_{\ref{lemma:realizable-regression-alg}}$), we have with probability at least $1 - 4/n - \exp(-n) \geq 2/3$ that $\MR$ is a $(\beta,f,\epsilon/2)$-predictor with respect to $\til\MD^\sk_{n,N,h,\delta}$. In this event, by \cref{lemma:decoding-error-prob} and choice of $N$, $H$, $\epsilon$, we get that $\MR$ is a $(\beta,f,\epsilon)$-predictor with respect to $\MD^\sk_{n,N,h,\delta}$.

This proves correctness of the algorithm (with respect to \cref{def:regression-algorithm}). The time complexity bound \cref{eq:treg} and circuit size bound \cref{eq:breg} follow immediately from \cref{lemma:regression-alg} along with the parameter choices of $N$, $H$, and the fact that $S(n,\delta\epsilon/(4C_{\ref{lemma:f-corr}}(H)),1/n) \leq 2^{O(n)}$.
\end{proof}

\section{Reducing high-noise LPN to reward-free RL}\label{sec:lpn-to-rl}
The main result of this section is the following lemma, which shows that there is an efficient reduction from learning parities with noise level $2^{O(H^2)} \cdot \delta^2$ to reward-free reinforcement learning (as defined in \cref{def:rf-rl}) in the family of block MDPs $\MM(\delta)$ (\cref{def:mdp-family}), together with learning parities with noise level $2^{-O(H^2)} \cdot \delta$.

\begin{lemma}\label{lemma:policy-cover-to-lpn}
There is a constant $C_{\ref{lemma:policy-cover-to-lpn}}$ with the following property. Fix $n_0 \in \NN$. Let $\Alg$ be an algorithm for learning noisy parities with unknown noise level with time complexity $T(n,\delta,\eta)$ and sample complexity $S(n,\delta,\eta)$ (\cref{def:lpn-unknown-noise}). Fix a function $\delta: \NN \to (0,1/2)$ and complexity measures $\SPC, \TPC, \BPC: \NN \to \NN$ and $\alphaPC: \NN \to \RR_{>0}$ for $\MM(\delta)$. Suppose that $\PC$ is a $(\SPC,\TPC,\alphaPC,\BPC)$-policy cover learning algorithm (\cref{def:rf-rl}) for $\MM(\delta)$, and that the following bounds hold for all $n \geq n_0$: $\alphaPC(n) \geq 2^{3-H_n}$, $\SPC(n) \leq 2^n$, and $\delta(n) \in (2^{-n/4}, 1/2^{H_n^2+2H_n+7})$. 

Then there is an algorithm $\Alg'$ for learning noisy parities with sample complexity $S'(n,\delta,\eta)$ and time complexity $T'(n,\delta,\eta)$ satisfying
\[S'(n,\delsmall,1/2) \leq (n/\delta)^{C_{\ref{lemma:policy-cover-to-lpn}}} \cdot \left(\SPC(n) + S(n, \delta/(2^{H}C_{\ref{lemma:f-corr}}(H)), 1/n)\right)\]
\[T'(n,\delsmall,1/2) \leq (n\BPC(n)/\delta)^{C_{\ref{lemma:policy-cover-to-lpn}}} \cdot \left(\TPC(n) + T(n,\delta/(2^{H}C_{\ref{lemma:f-corr}}(H)),1/n)\right)\]
for all $n \geq \max(n_0, C_{\ref{lemma:policy-cover-to-lpn}})$, where $\delsmall(n) := C_{\ref{lemma:triangle-lpn}}(H)\delta(n)^2$.
\end{lemma}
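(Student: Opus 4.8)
The plan is to reduce learning $n$-variable parities at noise level $1/2-\delsmall(n)$, where $\delsmall(n)=C_{\ref{lemma:triangle-lpn}}(H_n)\delta(n)^2$, to policy-cover learning in the family $\MM(\delta)$ (\cref{def:mdp-family}) together with a single auxiliary solve of LPN at noise level $1/2-\delta(n)/(2^{H_n}C_{\ref{lemma:f-corr}}(H_n))$, following the blueprint of \cref{sec:full-argument-overview}; throughout write $H=H_n$, $N=N_n=3\delta^{-4}n$, $\delta=\delta(n)$, and fix the unknown secret $\sk$. The first part is to simulate interactive episodic access to $\til M^\sk_{n,N,H,\delta}$ under an arbitrary adaptive policy using only i.i.d.\ samples from $\LPN_{n,\delsmall}(\sk)$. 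I would do this in two sub-steps. First, reduce $\LPN_{n,\delsmall}(\sk)$ to i.i.d.\ sampling from the static ``triangle'' distribution $\mu_\sk$ (\cref{def:mu-defn}): $\mu_\sk$ is a list of LPN samples with independent uniform covariates whose noise terms, after averaging over the random mask bits, form a $2^{O(H^2)}\delta^2$-Santha--Vazirani source (\cref{lemma:z-near-unif}); the hypothesis $\delta<1/2^{H^2+2H+7}$ keeps this parameter below the threshold needed for the batch size $|\Gamma(H)|=O(H^2)$, so the generic reduction of \cref{lemma:batch-lpn} (formalized as \cref{lemma:triangle-lpn}) applies with only $\poly(n,\delta^{-1})$ overhead, and this is where the factor $C_{\ref{lemma:triangle-lpn}}(H)$ enters. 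Second, given one sample $Z\sim\mu_\sk$, produce one full episode against any queried policy $\pi$: at step $h$, fold the actions $a_1,\dots,a_{h-1}$ played so far into the appropriate rows of $Z$ using the additive homomorphicity over $\BF_2$ of the emission components, then randomly permute those rows, exactly as sketched in \cref{sec:full-argument-overview}; the resulting trajectory is distributed as $\BP^{\til M^\sk_{n,N,H,\delta},\pi}$. Across the $\le\SPC(n)$ episodes drawn by $\PC$, \cref{lemma:decoding-error-prob} (with $N=3\delta^{-4}n$, so $e^{-\delta^4N}=e^{-3n}$, using $\delta>2^{-n/4}$) bounds the total-variation distance between this simulation and genuine interaction with the block MDP $M^\sk_{n,N,H,\delta}$ by $e^{-n}$, so $\PC$ cannot distinguish the two and with probability $\ge 2/3-e^{-n}$ returns a set $\Psi$ that is an $(\alphaPC(n),1/4)$-policy cover for $M^\sk_{n,N,H,\delta}$, with every $\pi\in\Psi$ a circuit of size $\le\BPC(n)$.

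Next, turn the policy cover into a low-noise LPN instance. Let $\bar\pi:=\Unif(\Psi)$ be the randomized policy that draws $\pi\sim\Unif(\Psi)$ once per episode and then follows it; working with this single mixed policy rather than having to locate a good member of $\Psi$ is exactly why the expectation form of \cref{def:pc} is convenient. By \cref{lemma:optimal-policy}, $\max_{\pi'}d^{M^\sk,\pi'}_H((H,H-1,0))=\tfrac12$, so \cref{eq:pc} and $\alphaPC(n)\ge 2^{3-H}$ give $d^{M^\sk,\bar\pi}_H((H,H-1,0))=\EE_{\pi\sim\Unif(\Psi)}d^{M^\sk,\pi}_H((H,H-1,0))\ge 2^{3-H}(\tfrac12-\tfrac14)=2^{1-H}$, while open-loop indistinguishability of the counter MDP (\cref{def:open-loop-indist}) gives $\Pr^{M^\sk,\pizero}[s_H=(H,H-1,0)]=2^{-H}$ for the constant-$0$ policy $\pizero$. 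Now draw $m$ labeled step-$H$ emissions: flip a fair coin $y^i\in\{0,1\}$, simulate a trajectory of $\pizero$ if $y^i=0$ and of $\bar\pi$ if $y^i=1$, and record the step-$H$ emission $Z^i$ and the label $y^i$. Then $(\phi^\st(Z^i),y^i)$ has exactly the shape required by \cref{lemma:learn-from-corr} and \cref{lemma:f-corr}: the state marginal is some $\beta\in\Delta(\MS[H])$, and $y^i\mid\phi^\st(Z^i)=s\sim\Ber(g(s))$ with $\alpha:=\EE_{s\sim\beta}g(s)=\tfrac12$, $g((H,H-1,0))=d^{\bar\pi}_H((H,H-1,0))/(d^{\bar\pi}_H((H,H-1,0))+d^{\pizero}_H((H,H-1,0)))\ge\tfrac23-o(1)$, and $\beta((H,H-1,0))\ge 2^{-H}$, hence $\sum_s\beta(s)|g(s)-\tfrac12|\ge\Omega(2^{-H})$ (the $o(1)$ and the $\til M^\sk$-versus-$M^\sk$ discrepancy are absorbed via \cref{lemma:decoding-error-prob,lemma:realizability-error-decomp}). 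Since the constant-$\tfrac12$ constraints in \cref{lemma:f-corr} hold trivially, its contrapositive yields a tuple $(m_0,r,r')\in I$ (in the notation of \cref{lemma:learn-from-corr}, with $r'=1$) for which the corresponding constraint is violated by at least $\Omega(2^{-H}/C_{\ref{lemma:f-corr}}(H))$.

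To finish: for each of the $O(H)$ candidate tuples, run the reduction inside \cref{lemma:learn-from-corr}, forming $\bigl(r\,u_H(Z^i)+\sum_{j\le m_0}u_j(Z^i),\ r'y^i+r\,y_H(Z^i)+\sum_{j\le m_0}y_j(Z^i)\bigr)_i$; by \cref{lemma:cber-sum} (any nonempty sum of the $\CBer$ noise terms has bias exactly $\delta$) and \cref{lem:bernoulli-convolve}, for the good tuple these are i.i.d.\ $\LPN_{n,\delreal}(\sk)$ samples with $|\delreal|\ge\delta/(2^{H}C_{\ref{lemma:f-corr}}(H))$ up to a fixed constant factor, so feeding them to $\Alg$ with noise-bias lower bound $\delta/(2^{H}C_{\ref{lemma:f-corr}}(H))$ and confidence $1/n$ returns $\sk$ with probability $\ge 1-1/n$ whenever $m\ge S(n,\delta/(2^{H}C_{\ref{lemma:f-corr}}(H)),1/n)$. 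Then select among the $O(H)$ candidates $\widehat\sk$ the one best explaining a fresh batch of $\poly(\delsmall^{-1})$ raw samples from $\LPN_{n,\delsmall}(\sk)$, via \cref{lemma:select-alg} at noise level $\delsmall>0$; since $\sk$ is among the candidates with high probability, this outputs $\sk$, and a union bound over the $O(1)$ failure events gives success probability $\ge\tfrac12$, which suffices for \cref{def:lpn-alg-advice} at $\eta=\tfrac12$ (general $\eta$ by standard amplification: repeat $O(\log1/\eta)$ times and select). The claimed sample bound $(n/\delta)^{O(1)}\bigl(\SPC(n)+S(n,\delta/(2^{H}C_{\ref{lemma:f-corr}}(H)),1/n)\bigr)$ and time bound $(n\BPC(n)/\delta)^{O(1)}\bigl(\TPC(n)+T(n,\delta/(2^{H}C_{\ref{lemma:f-corr}}(H)),1/n)\bigr)$ then follow by totalling the per-episode costs (each episode costs $\poly(n,\delta^{-1})$ to process its $\mu_\sk$-sample, plus $\poly(\BPC(n),n,\log\TPC(n))$ to evaluate the relevant policy), using $\SPC(n)\le\TPC(n)$, $S\le T$, $H=(\log n)^{1/3}$ so $2^{O(H^2)}=n^{o(1)}$, and $\delsmall^{-1}\le\poly(n/\delta)$.

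The main obstacle is the simulation step: constructing $\mu_\sk$ (\cref{def:mu-defn}), bounding its Santha--Vazirani parameter by $2^{O(H^2)}\delta^2$ after averaging over the mask bits (\cref{lemma:z-near-unif}), and verifying that a single $\mu_\sk$-sample reproduces the law of an arbitrary-policy episode of $\til M^\sk$ exactly via additive homomorphicity and row permutations (including the encryption rows). Everything downstream re-uses the machinery already developed for the regression direction (\cref{lemma:f-corr,lemma:learn-from-corr,lemma:cber-sum,lemma:select-alg}), so this first step is the technical heart of the argument; a secondary subtlety, already flagged above, is the reliance on the expectation form of the policy-cover definition so that the mixed policy $\bar\pi$ can play the role of the discriminating policy.
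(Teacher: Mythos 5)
Your proposal is correct and follows essentially the same route as the paper's proof: simulate episodes of $\til M^\sk_{n,N,H,\delta}$ from $\LPN_{n,\delsmall}(\sk)$ via the triangle-LPN construction (\cref{lemma:triangle-lpn}) and \cref{lemma:counter-trajectory}, run $\PC$ with the total-variation error controlled by \cref{lemma:decoding-error-prob}, then contrast the uniform mixture over the cover against $\pizero$ and feed the labeled step-$H$ emissions through the \cref{lemma:f-corr}/\cref{lemma:learn-from-corr} machinery at noise level $1/2-\delta/(2^{H}C_{\ref{lemma:f-corr}}(H))$, finishing with \cref{lemma:select-alg} on fresh $\LPN_{n,\delsmall}$ samples. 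The parameter choices, the use of the expectation form of the policy-cover definition, and the resource accounting all match the paper's argument (which packages the contrastive step as \cref{lemma:constrast-learn}).
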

In the above display, we are writing $\delsmall = \delsmall(n)$, $H=H_n$, and $\delta=\delta(n)$ for notational simplicity. Recall from \cref{def:mdp-family} that $H_n$ is the horizon of each MDP in $\MM(\delta)_n$. The proof of \cref{lemma:policy-cover-to-lpn} consists of four main ingredients:

\begin{enumerate}
    \item First, we show that the \emph{dynamic} problem of simulating interaction with the MDP $\til M^\sk_{n,N,H,\delta}$ (which is very close to $M^\sk_{n,N,H,\delta}$) can be reduced to a \emph{static} problem of simulating a variant of the learning parities with noise distribution where batches of samples have correlated noise terms (\cref{lemma:counter-trajectory}). We call this distribution ``triangle LPN'' and formally denote it by $\LPN^{\mathsf{tri}}_{n,N,H,\delta}(\sk)$ (\cref{def:mu-defn}).
    \item Second, we give a generic reduction showing that a batch of samples from any LPN variant with \emph{weakly} correlated noise terms can be simulated using standard LPN samples, so long as the batch size is small (\cref{lemma:construct-corr-lpn}) or there is certain additional structure to the joint distribution (\cref{lemma:large-batch-lpn}). Roughly speaking, if the correlations can be bounded by $\gamma$, then the batch can be simulated using samples from $\LPN_{n,O(\gamma)}(\sk)$.
    \item Third, we show that the triangle LPN distribution satisfies the above desiderata with correlation level $O(\delta^2)$ (up to a constant depending only on $H$), which means that a sample from triangle LPN can be efficiently simulated using samples from $\LPN_{n,O(\delta^2)}(\sk)$ (\cref{lemma:triangle-lpn}).
    \item Fourth, we show that given any policy $\pi$ on $M^\sk_{n,N,H,\delta}$ that visits state $(H, H-1,0)$ with probability significantly more than that of the uniform policy, we can recover $\sk$ using an algorithm for learning parities with noise level $1/2-O(\delta)$. This is achieved by a contrastive learning approach, where we draw samples from the uniform policy as well as from $\pi$ and label each sample by its origin (\cref{lemma:constrast-learn}). By the guarantee on $\pi$, the labels must be correlated with the latent state, which enables recovering $\sk$ using the $\LFC$ algorithm (which invokes the algorithm for learning parities with noise) developed in \cref{sec:regression-to-lpn}. 
\end{enumerate}

Given these ingredients, the proof of \cref{lemma:policy-cover-to-lpn} is straightforward. Given samples from $\LPN_{n,\delsmall}(\sk)$, we can approximately simulate the execution of $\PC$ on $M^\sk_{n,N,H,\delta}$, generating a policy cover. By the definition of a policy cover \cref{eq:pc}, the mixture policy that follows a random policy from the cover must visit state $(H,H-1,0)$ with much higher probability than the policy that takes uniformly random actions. Thus, we can recover $\sk$. This outline is formalized below.

\begin{proof}[\textbf{Proof of \cref{lemma:policy-cover-to-lpn}}]
For notational simplicity, we again omit the various parameters' dependencies on $n$. On input $(u_i,y_i)_{i=1}^{S'(n,\delsmall,1/2)}$ and $\delsmall$, the algorithm $\Alg'$ does the following. Note that it knows $n$ (from the length of the vectors $u_i$) and thus $N$, $H$, and $\delta$. First, it simulates the execution of $\PC$. For each episode $t$ of interaction, let $I^{(t)}$ be the index set of $|\Gamma(H)|\cdot (N+1)$ unused samples from $(u_i,y_i)_i$. The algorithm $\Alg'$ computes a triangle LPN batch
\[W^{(t)} := \TriAlg((u_i,y_i)_{i \in I^{(t)}}, H,N,\delta),\]
where $\TriAlg$ is defined in \cref{lemma:triangle-lpn}. It then implements episode $t$ using the algorithm $\DrawTraj$ (defined in \cref{lemma:counter-trajectory}) with parameters $H$, $N$, $\delta$, and input $W^{(t)}$. Eventually, $\PC$ produces a set of policies $\Psi$, where each policy $\pi \in\Psi$ is represented as a circuit $\MC_\pi$. Set $D := 2S(n,\delta/(2^H C_{\ref{lemma:f-corr}}(H)),1/n)$. Let $J^{(1)},\dots,J^{(D)}$ be (disjoint) index sets each containing $|\Gamma(H)| \cdot (N+1)$ unused samples from $(u_i,b_i)_i$. For $1 \leq d \leq D$, the algorithm computes a triangle LPN batch 
\[\overline W^{(d)} := \TriAlg((u_i,y_i)_{i \in J^{(d)}}, H,N,\delta).\]
The algorithm then computes the set $\hat{\mathcal{L}} := \CL((\overline W^{(d)})_{d=1}^D, (\MC_\pi)_{\pi \in \Psi}, N, H, \delta)$. Finally, let $K$ be the index set of $9\delsmall^{-2}\log 8(H+1)n$ unused samples from $(u_i,y_i)_i$. The algorithm computes and outputs $\hat \sk := \Select((u_i,b_i)_{i \in K}, \hat{\mathcal{L}})$.

\paragraph{Correctness analysis.} Fix $n \geq n_0$. Suppose that the samples $(u_i,y_i)_{i=1}^{S'(n,\delsmall,1/2)}$ are drawn independently from $\LPN_{n,\delsmall}(\sk)$ for some $\sk \in \BF_2^n$. We claim that $\Alg'$ (approximately) simulates $\PC$ on the MDP $M^\sk_{n,N,H,\delta}$. Indeed, for each episode $t$ of simulated interaction, \cref{lemma:triangle-lpn} (using the assumption on $\delta(n)$) guarantees that the batch $W^{(t)}$ produced by $\TriAlg$ is distributed according to $\LPN^{\mathsf{tri}}_{n,N,H,\delta}(\sk)$ (\cref{def:mu-defn}). Similarly, it guarantees that $(\overline W^{(d)})_{d=1}^D$ are independent samples from $\LPN^{\mathsf{tri}}_{n,N,H,\delta}(\sk)$, which we will use later.

By the guarantee on $\DrawTraj$ (\cref{lemma:counter-trajectory}), the $t$-th episode of simulated interaction with $\PC$ is distributed according to the MDP $\til M^\sk_{n,N,H,\delta}$ (meaning that each new emission has the correct distribution according to the episodic RL model for $\til M^\sk_{n,N,H,\delta}$, conditioned on the episode's history). By \cref{lemma:decoding-error-prob}, the distribution of each simulated emission, conditioned on the emission history, is within total variation distance $H\exp(-\delta^4 N)$ of the distribution under $M^\sk_{n,N,H,\delta}$. Summing over all steps and all episodes, the total variation error of the simulation of $\PC$ (compared to an execution of $\PC$ with interactive access to $M^\sk_{n,N,H,\delta}$) can be bounded by $H^2\exp(-\delta^4 N) \cdot \SPC(n)$, which is at most $\exp(-n)$ by choice of $H$ and $N$, and the assumption that $\SPC(n) \leq 2^n$. Thus, unpacking the guarantee on $\PC$ (\cref{def:rf-rl}), there is an event $\ME_1$ that occurs with probability at least $2/3 - \exp(-n)$, under which the set of policies $\Psi$ produced by the simulation is a $(\alphaPC(n),1/4)$-policy cover (\cref{def:pc}) for $M^\sk_{n,N,H,\delta}$. In particular, by the fact that the latent state $(H,H-1,0)$ in $M^\sk_{n,N,H,\delta}$ is reachable at step $H$ with probability $1/2$ (\cref{lemma:optimal-policy}), we have under $\ME_1$ that \begin{equation}\frac{1}{|\Psi|} \sum_{\pi \in \Psi} d^{M^\sk_{n,N,H,\delta},\pi}_H(H,H-1,0) \geq \alphaPC(n) \cdot \left(\frac{1}{2}-\frac{1}{4}\right) \geq 2^{1-H},\label{eq:exploratory-policy-exists}\end{equation}
where the first inequality is by \cref{eq:pc} and the second inequality is by the lemma assumption on $\alphaPC$.

From now on, condition on $\ME_1$, so that \cref{eq:exploratory-policy-exists} holds. Then we can apply \cref{lemma:constrast-learn} to the output $\hat \ML$ of $\CL$. Using also the fact that $(\overline W^{(d)})_{d=1}^D$ are independent samples from $\LPN^{\mathsf{tri}}_{n,N,H,\delta}(\sk)$ and the choices of $N$ and $D$, we get that $|\hat \ML| \leq 4(H+1)$, and that there is an event $\ME_2$ under which $\sk \in \hat \ML$, and moreover
\[\Pr[\ME_2] \geq 1 - 1/n - O(\delta^{-2}2^{2H}C_{\ref{lemma:f-corr}}(H)^2 H^2 \exp(-2n)).\] Now condition additionally on $\ME_2$, so that $\sk \in \hat\ML$. Then the guarantee of $\Select$ (\cref{lemma:select-alg}) applies, so there is an event $\ME_3$ which occurs with probability at least $1-1/n$, under which the output of $\Select$ is $\sk$. In this event, the output of $\Alg'$ is $\sk$ as well. Thus, $\Alg'$ succeeds in the event $\ME_1\cap\ME_2\cap\ME_3$, which by the union bound occurs with probability at least
\[2/3 - \exp(-n) - 2/n - O(\delta^{-2}2^{2H}C_{\ref{lemma:f-corr}}(H)^2H^2\exp(-2n)) \geq 1/2.\]
The final inequality holds by assumption that $\delta \geq 2^{-n/4}$ and definition of $H=H(n)$, so long as $n \geq n_0$ and $n \geq C_{\ref{lemma:policy-cover-to-lpn}}$ where $C_{\ref{lemma:policy-cover-to-lpn}}$ is chosen to be a sufficiently large (absolute) constant.

\paragraph{Time complexity.} For notational simplicity let $M$ refer to $M^\sk_{n,N,H,\delta}$. By assumption, $\PC$ uses at most $\SPC(M)$ episodes of interaction. By \cref{lemma:triangle-lpn}, constructing the triangle LPN sample for each episode takes time $\poly(n,N,2^{H^2})$, and implementing the trajectory takes time $\poly(n,N,H)$. Thus, simulating $\PC$ takes time $\TPC(M) + \poly(n,N,2^{H^2}) \cdot \SPC(M)$. Next, again by \cref{lemma:triangle-lpn}, computing $(\overline W^{(d)})_{d=1}^D$ takes time $\poly(n,N,2^{H^2}) \cdot S(n,\delta/(2^H C_{\ref{lemma:f-corr}}(H)),1/n)$. By assumption, the circuit representation of each $\pi \in \Psi$ has size at most $\BPC(M)$, and $|\Psi| \leq \TPC(M)$, so by \cref{lemma:constrast-learn}, the invocation of $\CL$ takes time 
\begin{align*}
&O(\BPC(M) \cdot \TPC(M)) + O(H) \cdot T(n,\delta/(2^H C_{\ref{lemma:f-corr}}(H)), 1/n) \\&+ \poly(n,N,\BPC(M),H) \cdot S(n,\delta/(2^H C_{\ref{lemma:f-corr}}(H)), 1/n).
\end{align*}
Finally, since $|K| = \poly(\delta^{-1}, \log(Hn))$ (note that $\delsmall^{-1} \leq \delta^{-2}$), and $|\hat\ML| \leq O(H)$, the invocation of $\Select$ takes time $\poly(\delta^{-1}, H, N)$. Combining the above bounds and using that $H = (\log n)^{1/3}$ and $N = 3\delta^{-4} n$ (\cref{def:mdp-family}) and also (without loss of generality) $S \leq T$ pointwise, we get that the overall time complexity of $\Alg'$ is
\[\poly(\delta^{-1},n,\BPC(M)) \cdot \left(\TPC(M) + T(n,\delta/(2^HC_{\ref{lemma:f-corr}}(H)),1/n)\right)\]
as claimed.

\paragraph{Sample complexity.} Constructing $(W^{(t)})_t$ requires $|\Gamma(H)| \cdot (N+1) \cdot \SPC(M)$ samples, and constructing $(\overline W^{(d)})_{d=1}^D$ requires $|\Gamma(H)| \cdot (N+1) \cdot 2S(n,\delta/(2^H C_{\ref{lemma:f-corr}}(H)),1/n)$ samples. The invocation of $\Select$ uses $\poly(\delta^{-1},\log(Hn))$ samples. Since $|\Gamma(H)| = O(H^2)$ and by definition of $H$ and $N$, the claimed sample complexity bound follows.
\end{proof}

\subsection{From static to dynamic: simulating a trajectory}

We next show that an episode of interaction with the MDP $\til M^\sk_{n,N,H,\delta}$ (defined in \cref{sec:construction}) can be simulated using a sample from a \emph{static} distribution that we call ``triangle LPN'', defined below. A sample from the triangle LPN distribution consists of a batch of LPN samples where the noise terms are correlated, and in particular are jointly drawn from the following distribution $\mu_{H,N,\delta}$.


\begin{definition}
  \label{def:mu-defn}
For $H,N \in \NN$ and $\delta \in (0,1/2)$, let $\Gamma(H)$ denote the set $\{i,j: 1 \leq j \leq i \leq H\}$. We define $\mu_{H,N,\delta} \in \Delta(\BF_2^{\Gamma(H)\times[N+1]})$ to be the distribution of the (partial) random tensor 
$\Xi$ defined by
\[\Xi^{i,j} := \begin{bmatrix} e_{ij} + b_j \\ e'_{ij1} + b_j \\ e'_{ij2} + b_j \\ \vdots \\ e'_{ijN} + b_j \end{bmatrix} \in \BF_2^{N+1} \qquad \forall\,(i,j) \in \Gamma(H)\]
where $b_1,\dots,b_H \sim \Ber(1/2)$ and $e_i \sim \CBer(i, \delta)$ (for all $i \in [H]$) and $e'_{ijk} \sim \Ber(1/2 - 2\delta^2)$ (for all $1 \leq j \leq i \leq H$ and $k \in [N]$) are all independent. We define the \emph{triangle LPN} distribution $\LPN^{\mathsf{tri}}_{n,N,H,\delta}(\sk)$ with secret key $\sk \in \BF_2^n$ and parameters $H,N,\delta$ to be the distribution of the partial random tensor $(u_{ijk},y_{ijk})_{(i,j,k) \in \Gamma(H)\times[N+1]}$ where
\[(u_{ijk},y_{ijk}-\langle u_{ijk}, \sk\rangle)_{(i,j,k) \in \Gamma(H)\times[N+1]} \sim \Unif(\BF_2^n)^{\otimes \Gamma(H)\times[N+1]} \times \mu_{H,N,\delta}.\]


With the above notation, we also let $\mu^0_{H,\delta}$ be the distribution of the random variable $Z = (Z_{ij})_{1 \leq j \leq i \leq H}$ where $Z_{ij} = e_{ij} + b_j$ for $1 \leq j \leq i \leq H$.
\end{definition}


\begin{lemma}[Drawing a trajectory]\label{lemma:counter-trajectory}
There is an interactive algorithm $\DrawTraj$ with the following property. Let $n,N,H \in \NN$ and $\sk \in \BF_2^n$, and $\delta \in (0, 1/2^{|\Gamma(H)|+2H+7})$. After initialization, the algorithm outputs an emission $x_1$ and sets $h \gets 2$. Then, so long as $h \leq H$, when the algorithm next receives an action $a_{h-1}$, it outputs an emission $x_h$ and sets $h \gets h+1$. 

The correctness guarantee of $\DrawTraj$ is the following. Suppose that $(u_{ijk},y_{ijk})_{(i,j,k) \in \Gamma(H)\times[N+1]}$ is a random variable distributed according to $\LPN^{\mathsf{tri}}_{n,N,H,\delta}(\sk)$. Then $\DrawTraj$ with initial input $(u_{ijk},y_{ijk})_{(i,j,k) \in \Gamma(H)\times[N+1]}$ implements an episode of interaction with $\til M^\sk_{n,N,H,\delta}$ under the episodic RL access model.\footnote{We only defined this model explicitly for block MDPs (\cref{sec:block-rl}); however, the definition does not use the block assumption in any way, and thus makes sense also for general partially observable MDPs such as $\til M^\sk_{n,N,H,\delta}$.} 

Moreover, the time complexity of $\DrawTraj$ is $\poly(n,N,H)$.
\end{lemma}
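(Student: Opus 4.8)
The plan is to describe $\DrawTraj$ explicitly and verify that each emitted $x_h$ has the correct conditional law given the episode's history. Recall from \cref{sec:construction} that a trajectory of $\til M^\sk_{n,N,H,\delta}$ under a policy $\pi$ is generated by drawing i.i.d.\ bits $b_1, \ldots, b_H \sim \Ber(1/2)$; the latent state at step $h$ is $(h, \sum_{i<h} \mathbbm{1}[b_i = a_i], b_h)$ where $a_1, \ldots, a_{h-1}$ are the actions played so far, and the emission at step $h$ is an independent draw from $\til\MD^\sk_{n,N,h,\delta}(\cdot \mid h, k, b_h)$ with $k = \sum_{i<h}\mathbbm{1}[b_i=a_i]$. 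The key structural fact, which the construction was designed around, is that this emission distribution (unconditioned on decoding) is: sample $(b'_1, \ldots, b'_{h-1}) \sim \nu_{h,k,b_h}$, i.e.\ uniform over $\BF_2^{h-1}$ with Hamming weight $k$ in the first $h-1$ coordinates, then output $h$ rows where row $j$ is an LPN sample (plus $N$ auxiliary LPN samples) masked by $b'_j$, with the noise terms on the masks drawn from $\CBer(h,\delta)$ along the ``main'' column and $\Ber(1/2-2\delta^2)$ for the auxiliary columns.

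First I would specify the algorithm. From the triangle LPN batch $(u_{ijk}, y_{ijk})_{(i,j,k) \in \Gamma(H)\times[N+1]}$, the rows indexed by $i$ carry (after subtracting $\langle u, \sk\rangle$) the masked bits $e_{ij} + b_j$ and $e'_{ijk} + b_j$ for $j \le i$. At step $h$, $\DrawTraj$ uses the row $i = h$: it has available $h$ LPN samples (indexed by $j \in [h]$), each masked by $b_j$, with the correlated-Bernoulli noise structure exactly matching $\til\MD^\sk_{n,N,h,\delta}$. Having received actions $a_1, \ldots, a_{h-1}$, for each $j < h$ the algorithm homomorphically adds $a_j$ to the $j$-th LPN sample (and all its auxiliary samples) — this is valid because adding $a_j$ to the response of $(u, \langle u,\sk\rangle + e + b_j)$ yields $(u, \langle u, \sk\rangle + e + (b_j + a_j))$, a sample masked by $b_j + a_j$. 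It then applies a \emph{uniformly random permutation} $\sigma_h$ to the first $h-1$ (now-shifted) samples, leaving the $h$-th sample in place, and outputs the resulting $h$-row object as $x_h$. The random permutation is the crucial ingredient that erases all information about $b_1, \ldots, b_{h-1}$ beyond $(\mathbbm{1}[b_j + a_j = 1])_{j<h}$ as a multiset, which is exactly the counter value $k = \#\{j < h : b_j \ne a_j\}$; wait — we want $k = \sum_{j<h}\mathbbm{1}[b_j = a_j]$, so I'd track this sign bookkeeping carefully, but morally the permutation reduces the list of shifted masks to the uniform distribution on weight-$k'$ strings for the appropriate $k'$, matching $\nu_{h, k, b_h}$.

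Next I would do the correctness proof by induction on $h$. The induction hypothesis is that after emitting $x_1, \ldots, x_{h-1}$ and receiving $a_1, \ldots, a_{h-1}$, the conditional distribution of the remaining unused randomness (rows $i \ge h$ of the batch, together with the $b_j$'s for $j \ge h$, the $\CBer$ noise, and the fresh permutations to be drawn) is exactly the posterior in $\til M^\sk_{n,N,H,\delta}$ given the same history. The base case $h=1$ is immediate from the definition of $\LPN^{\mathsf{tri}}$. For the inductive step, the main computation is to check that: (a) the $j$-th row of the $i=h$ block, after adding $a_j$, is a sample masked by $b_j + a_j$ where $b_j \sim \Ber(1/2)$ independent across $j$ — this uses that $u_{hjk}$ are uniform and independent (so $\langle u_{hjk}, \sk\rangle$ is uniform given everything else, absorbing any dependence), and that the $b_j$'s on row $h$ are ``fresh'' relative to the history because rows $1, \ldots, h-1$ used the \emph{same} $b_j$'s but were emitted only after random permutation, so the history reveals only symmetric functions; (b) applying $\sigma_h$ makes the multiset of shifted masks uniform conditioned on its Hamming weight, which is distributed as the counter value, so the joint law of $(x_h, \text{latent state})$ matches; (c) the $\CBer(h,\delta)$ and $\Ber(1/2-2\delta^2)$ noise structure on row $h$ is precisely that of $\til\MD^\sk_{n,N,h,\delta}$. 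The constraint $\delta < 1/2^{|\Gamma(H)| + 2H + 7}$ presumably enters to ensure that a union bound over the $O(2^{|\Gamma(H)|})$-sized event space needed for the permutation argument (or for the $\CBer$ mixture components to behave) holds; I'd pin down exactly where it is used when writing the details. The time complexity bound $\poly(n,N,H)$ is then clear, since each step does $O(hN)$ field operations on vectors of length $n+1$ plus sampling a permutation on $[h-1]$.

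The main obstacle I anticipate is the bookkeeping in step (b) — verifying that the random permutation, combined with the fact that rows $1, \ldots, h-1$ of the batch were \emph{also} permuted before emission, really does make the conditional distribution exactly (not just approximately) match $\nu_{h,k,b_h}$, accounting for the subtle point that the \emph{same} underlying $b_j$'s thread through all rows $i \ge j$ but are revealed to the learner only through independent random permutations at each step. The cleanest way to handle this is probably to introduce an explicit coupling: exhibit a joint construction of a batch from $\LPN^{\mathsf{tri}}_{n,N,H,\delta}(\sk)$ and a trajectory from $\til M^\sk_{n,N,H,\delta}$ such that $\DrawTraj$ applied to the former produces the latter, and then argue the coupling is consistent with the interactive access model (i.e.\ the algorithm never ``peeks'' at future randomness). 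I would also need a small lemma that a uniformly random vector in $\BF_2^{h-1}$, after conditioning on its image under a uniformly random permutation having a fixed Hamming-weight profile, is uniform on that Hamming-weight class — essentially by symmetry — which connects the permuted LPN-row structure to \cref{def:nu}.
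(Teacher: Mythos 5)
Your algorithm and proof plan are essentially the paper's: at step $h$ it uses the $i=h$ block of the triangle sample, homomorphically shifts the first $h-1$ rows by the actions (the paper adds $a_j+1$ rather than $a_j$, which resolves your sign worry since the shifted mask becomes $\mathbbm{1}[b_j=a_j]$, matching $\nu_{h,k,b_h}$), applies a fresh uniformly random permutation, and proves \emph{exact} correctness by conditioning on the history together with the realized $b_{1:h-1}$, then using freshness of the row-$h$ covariates, noise, permutation and $b_h$, plus permutation-invariance of $\CBer(h,\delta)$. Two small corrections to your write-up: what makes this work is precisely that conditioning/coupling step, not your aside that ``the history reveals only symmetric functions'' of the $b_j$'s (it does not---each earlier emission's unpermuted last row essentially reveals $b_g$, as it must for decodability), and the upper bound on $\delta$ is never used in this lemma's proof; it is simply carried along from the triangle-LPN generation lemma.
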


\begin{proof}
On input $(u_{ijk},y_{ijk})_{(i,j,k) \in \Gamma(H)\times[N+1]}$, we define $\DrawTraj$ as follows. The initial emission is 
\[x_1 := \begin{bmatrix} u_{111} & y_{111} & (u_{11i}, y_{11i})_{i=2}^{N+1} \end{bmatrix}.\]
Consider any step $2 \leq h \leq H$ and suppose that the actions received so far are $a_1,\dots,a_{h-1}$. Then the emission at step $h$ is
\[x_h := \begin{bmatrix}
u_{h,\sigma(1),1} & y_{h,\sigma(1),1} + a_{\sigma(1)} + 1 & (u_{h,\sigma(1),i}, y_{h,\sigma(1),i} + a_{\sigma(1)} + 1)_{i=2}^{N+1} \\ 
\vdots & \vdots & \vdots \\ 
u_{h,\sigma(h-1),1} & y_{h,\sigma(h-1),1} + a_{\sigma(h-1)} +1 & (u_{h,\sigma(h-1),i}, y_{h,\sigma(h-1),i} + a_{\sigma(h-1)} + 1)_{i=2}^{N+1} \\
u_{h,h,1} & y_{h,h,1} & (u_{h,h,i}, y_{h,h,i})_{i=2}^{N+1}
\end{bmatrix}\]
for an independent and uniformly random permutation $\sigma: [h-1]\to[h-1]$ (re-sampled at each step).

\paragraph{Analysis.} The claimed time complexity bound is immediate from the algorithm definition. We now prove correctness by arguing that $\DrawTraj$ is \emph{implicitly} simulating a latent state trajectory. Consider the first emission. By definition of $\LPN^{\mathsf{tri}}_{n,N,H,\delta}(\sk)$, we know that $(u_{11k},y_{11k})_{k\in[N+1]}$ has the same distribution as
\[\begin{bmatrix}
u_{111} & \langle u_{111}, \sk\rangle + e_{11} + b_1 \\ 
u_{112} & \langle u_{112}, \sk\rangle + e_{111} + b_1 \\ 
\vdots \\ 
u_{1,1,N+1} & \langle u_{1,1,N+1} , \sk \rangle + e_{1,1,N} + b_1
\end{bmatrix}\]
where $b_1 \sim \Ber(1/2)$, $e_{11} \sim \CBer(1,\delta) \equiv \Ber(1/2-\delta)$ and $e_{11k} \sim \Ber(1/2-2\delta^2)$ (for $1 \leq k \leq N$) are all independent. This is exactly distributed as $\frac{1}{2}\til\MD^\sk_{n,N,1,\delta}(\cdot|1,0,0) + \frac{1}{2}\til\MD^\sk_{n,N,1,\delta}(\cdot|1,0,1)$ (per \cref{def:emission}), which is the emission distribution of the random state $(1,0, b_1)$ at step $1$ in $\til M^\sk_{n,N,H,\delta}$.

Now consider any step $2 \leq h \leq H$, and condition on the history $x_{1:h-1},a_{1:h-1}$ and realizations of previously-defined random variables $b_1,\dots,b_{h-1}$. We claim that $x_h$ is distributed according to $\til \MD^\sk_{n,N,h,\delta}(\cdot|h,k,b_h)$ where $k := \sum_{i=1}^{h-1}\mathbbm{1}[b_i=a_i]$ and $b_h \sim \Ber(1/2)$. Indeed, by definition of $\LPN^{\mathsf{tri}}_{n,N,H,\delta}(\sk)$, the emission $x_h$ has the same distribution as
{\renewcommand*{\arraystretch}{2}

\[\begin{bmatrix}
u_{h,\sigma(1),1} & \substack{\langle u_{h,\sigma(1),1}, \sk\rangle + e_{h,\sigma(1)} \\+ b_{\sigma(1)} + a_{\sigma(1)} + 1} & \left(u_{h,\sigma(1),i}, \substack{\langle u_{h,\sigma(1),i},\sk\rangle + e'_{h,\sigma(1),i} \\+ b_{\sigma(1)} + a_{\sigma(1)} + 1}\right)_{i=2}^{N+1} \\ 
\vdots & \vdots & \vdots \\ 
u_{h,\sigma(h-1),1} & \substack{\langle u_{h,\sigma(h-1),1},\sk\rangle + e_{h,\sigma(h-1)} \\+ b_{\sigma(h-1)} + a_{\sigma(h-1)} + 1} & \left(u_{h,\sigma(h-1),i}, \substack{\langle u_{h,\sigma(h-1),i},\sk\rangle + e'_{h,\sigma(h-1),i} \\+ b_{\sigma(h-1)} + a_{\sigma(h-1)} + 1}\right)_{i=2}^{N+1} \\
u_{h,h,1} & \langle u_{h,h,1}, \sk\rangle + e_{h,h} + b_h & (u_{h,h,i}, \langle u_{h,h,i}, \sk\rangle + e'_{h,h,i} + b_h)_{i=2}^{N+1}
\end{bmatrix}\]}where $\sigma \sim \Unif(\Sym_{h-1})$, $u_{hji} \sim \Unif(\BF_2^n)$ (for all $j,i$), $b_h \sim \Ber(1/2)$, $e_h \sim \CBer(h,\delta)$, and $e'_{hji} \sim \Ber(1/2-2\delta^2)$ (for all $j,i$) are all independent (here, $\Sym_{h-1}$ is the set of permutations on $[h-1]$). 

Let us additionally condition on $(u_{h,\sigma(j),i})_{(j,i) \in [h]\times[N+1]}$; doing so does not affect the distribution of the remaining random variables (in particular, it does not affect the distribution of $\sigma$ \--- this uses permutation invariance of the joint distribution of $(u_{hji})_{j,i}$). Since $\sigma$ is a uniformly random permutation on $[h-1]$, the vector \[(b_{\sigma(1)}+a_{\sigma(1)}+1,\dots,b_{\sigma(h-1)}+a_{\sigma(h-1)}+1)\] is uniformly random among the vectors in $\BF_2^{h-1}$ of Hamming weight $k$, so that the vector $(b_{\sigma(1)}+a_{\sigma(1)}+1,\dots,b_{\sigma(h-1)}+a_{\sigma(h-1)}+1, b_h)$ has distribution $\nu_{h,k,b_h}$. Let additionally condition on $\sigma$ and $b_h$, which determines the vector $(b_{\sigma(1)} + a_{\sigma(1)} + 1, \ldots,  b_{\sigma(h-1)} + a_{\sigma(h-1)} + 1, b_h)$. Since the distribution $\CBer(h,\delta)$ is invariant under permutations, it is evident that $(e_{h,\sigma(1)}, \ldots, e_{h,\sigma(h-1)}, e_{h,h}) \sim \CBer(h, \delta)$. Finally, if we additionally condition on $(e_{h,\sigma(1)}, \ldots, e_{h,\sigma(h-1)}, e_{h,h})$, then all the random variables $e'_{h,\sigma(j),i}$ and $e'_{h,h,i}$ are still independent and distributed according to $\Ber(1/2 - 2\delta^2)$ (for $1 \leq j \le h-1$ and $2 \leq i \leq N+1$).
Summarizing, from \cref{def:emission}, we have that the step-$h$ emission $x_h$ has distribution $\til\MD^\sk_{n,N,h,\delta}(\cdot|h,k,b_h)$, which is indeed the step-$h$ emission distribution of the (random) state $(h,k, b_h)$ in the (partially observable) MDP $\til M^\sk_{n,N,H,\delta}$.
\end{proof}

\subsection{Learning parities with weakly dependent noise}\label{sec:dependent-lpn}

In this section we present generic algorithms for constructing a batch of LPN samples with (weakly) \emph{dependent} noise from standard, fully independent LPN samples. Equivalently, these algorithms can be thought of as reductions: they prove computational hardness of learning from batches of LPN samples \--- where the joint noise distribution of each batch is near-uniform but not necessarily a product distribution \--- under a standard LPN hardness assumption. The main results of the section are \cref{lemma:construct-corr-lpn} and \cref{lemma:large-batch-lpn}; the former lemma illustrates the key techniques, and the latter lemma is a variant needed for technical reasons.

Concretely, \cref{lemma:construct-corr-lpn} formalizes \cref{lemma:batch-lpn}. It shows that for any joint distribution $p$ on $\BF_2^k$ that is a $\delta$-Santha-Vazirani source (\cref{def:sv-source}) \--- meaning that each bit is within $\delta$ of uniform conditioned on all previous bits \--- a batch of $k$ LPN samples with independent noise $\Ber(1/2 - 2^{k+2}\delta)$ can be efficiently converted into a batch of $k$ samples where the noise follows the given distribution $p$. Note that the bias needed in the input samples scales only linearly in $\delta$, though it scales exponentially in the batch size $k$. Larger bias in the input samples makes the conversion easier; in the limiting case, if the bias were $1/2$, then one could efficiently compute $\sk$ from the input samples, after which constructing the batch would be trivial. The challenge is in showing that a small bias suffices.

\begin{lemma}\label{lemma:construct-corr-lpn}
There is an algorithm $\EntLPN$ with the following property. Fix $n,k \in \NN$, $\delta\in(0,1/2^{k+3})$, and $p \in \Delta(\BF_2^k)$. Suppose that \[\Prr_{Z\sim p}[Z_i=1|Z_{<i}=z_{<i}] \in [1/2-\delta,1/2+\delta]\] for all $z \in \BF_2^k$ and $i \in [k]$. For every $\sk \in \BF_2^n$, for independent samples $(a_i,y_i)_{i=1}^k$ from $\LPN_{n,2^{k+2}\delta}(\sk)$, the output of $\EntLPN((a_i,y_i)_{i=1}^k, p, \delta)$ is $(a'_i,y'_i)_{i=1}^k$ where
\[(a'_1,\dots,a'_k,(y'_1 - \langle a'_1,\sk\rangle, y'_2 - \langle a'_2,\sk\rangle, \dots, y'_k - \langle a'_k,\sk\rangle)) \sim \Unif(\BF_2^n)^{\otimes k} \times p.\]
Moreover, the time complexity of $\EntLPN$ is $\poly(n, 2^k)$.
\end{lemma}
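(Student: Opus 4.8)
The plan is to prove \cref{lemma:construct-corr-lpn} by induction on the batch size $k$, building up the correlated batch one coordinate at a time. The key structural fact I would exploit is that a $\delta$-Santha-Vazirani source $p$ on $\BF_2^k$ can be decomposed coordinate-by-coordinate: the conditional law of $Z_i$ given $Z_{<i} = z_{<i}$ is $\Ber(1/2 - \delta_{z_{<i}})$ for some bias $\delta_{z_{<i}} \in [-\delta,\delta]$ depending on the prefix. The core building block I need is the following: given an LPN sample $(a,y)$ with noise bias $2\theta$ (for suitable $\theta$), and given any \emph{known} affine function $F : \BF_2^n \to \BF_2$ of the secret, I can produce a sample $(a', y')$ such that $y' - \langle a', \sk\rangle = F(\sk) + e'$ where $e' \sim \Ber(1/2 - \theta')$ for a \emph{prescribed} $\theta'$ with $|\theta'| \le \theta$, with $a'$ uniform and independent. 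This is exactly the maneuver sketched in \cref{sec:warmup-overview}: to realize a sample masked by $F(\sk) = \langle \alpha, \sk \rangle + \beta$, add $(\alpha, \beta)$ to $(a,y)$; and to reduce the bias from $2\theta$ down to $\theta'$, convolve with an independent $\Ber(1/2 - \theta'/(2\theta))$ bit using \cref{lem:bernoulli-convolve} (which multiplies biases: $2\theta \cdot \frac{\theta'}{2\theta} = \theta'$ — wait, \cref{lem:bernoulli-convolve} gives bias $2\delta_1\delta_2$, so I convolve $\Ber(1/2-\theta)$, i.e. one LPN sample with half its nominal bias, appropriately — I will track the constant-factor bookkeeping carefully, which is where the $2^{k+2}$ factor comes from).

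The inductive step works as follows. Suppose I have already produced $(a'_1, y'_1), \dots, (a'_{j-1}, y'_{j-1})$ whose residuals $r_i := y'_i - \langle a'_i, \sk\rangle$ are jointly distributed as the first $j-1$ coordinates of $p$, using up $j-1$ of the input samples. I have observed the values $r_1, \dots, r_{j-1}$? No — crucially I have \emph{not} observed them, since I don't know $\sk$. This is the key subtlety: the conditional bias $\delta_{z_{<j}}$ depends on the realized prefix $z_{<j} = (r_1, \dots, r_{j-1})$, which is a (known) affine function of $\sk$. So I cannot just "look up" the bias. Instead, following the second bullet in \cref{sec:warmup-overview}, I construct a \emph{randomized} affine function: I want $(a'_j, y'_j)$ with $y'_j - \langle a'_j,\sk\rangle$ equal to $Z_j$ conditioned on $Z_{<j} = (r_1,\dots,r_{j-1})$. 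Since each conditional law $\Ber(1/2 - \delta_{z_{<j}})$ is a convex combination of "output $0$", "output $1$", and "output a fair coin", and these weights depend on $z_{<j}$, I can sample auxiliary coins $(c_0, c_1, c_2)$ (a distribution over which branch to take, for each possible prefix — but I must make this choice uniformly so as not to reveal the prefix) and write the output as $c_0 + c_1 \cdot (\text{const}) + c_2 \cdot (\text{fresh LPN residual})$, an affine function of $\sk$ once the $c$'s and the known masks are fixed. The $j$-th fresh LPN sample $(a_j, y_j)$ supplies the fair-coin-plus-LPN-noise branch; I mask it by the appropriate affine combination of the residual-defining functions from steps $1, \dots, j-1$ plus a constant, so that conditioned on $Z_{<j} = z_{<j}$ it has exactly law $\Ber(1/2 - \delta_{z_{<j}})$.

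**The main obstacle** is verifying that this construction genuinely produces the correct \emph{joint} distribution on residuals equal to $p$, and simultaneously that $a'_j$ is uniform and independent of everything — the two requirements pull against each other, since making $a'_j$ independent constrains how I can use the auxiliary randomness. The cleanest way to handle this is to condition on $\sk$ being fixed (the statement is "for every $\sk$"), then argue by induction that the tuple $(a'_1,\dots,a'_j, r_1, \dots, r_j)$ has law $\Unif(\BF_2^n)^{\otimes j} \times p_{\le j}$, where $p_{\le j}$ is the marginal of $p$ on the first $j$ coordinates; the inductive hypothesis plus a careful tower-of-conditional-expectations computation over the fresh sample $(a_j, y_j)$ and the auxiliary coins should close it. The $a'_j$-uniformity falls out because $a'_j = a_j + (\text{something independent of } a_j)$ and $a_j$ is uniform; but I must ensure the "something" — which may involve $a'_1, \dots, a'_{j-1}$ — is independent of $a_j$, which holds since $(a_j, y_j)$ is drawn independently of the earlier samples. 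The bias bookkeeping (propagating $\delta \mapsto 2^{k+2}\delta$) requires that at each of the $k$ steps the required input bias at most doubles relative to the output bias, plus the $\le \delta$ conditional biases all being simultaneously realizable from a single $\Ber(1/2 - \text{const}\cdot\delta)$ source; I would set the constant generously ($2^{k+2}$) to absorb the factor-of-two-per-level blowup and the extra factor from \cref{lem:bernoulli-convolve}. The condition $\delta < 1/2^{k+3}$ ensures all the intermediate biases stay in $[-1/2, 1/2]$ so the convolutions are well-defined. Finally, the $\poly(n, 2^k)$ runtime is immediate: $k$ rounds, each doing $O(1)$ vector operations in $\BF_2^n$ plus sampling from an explicitly-described distribution over $O(1)$ branches indexed by a length-$(j-1)$ prefix, for which enumerating the $2^{j-1} \le 2^k$ possible prefixes suffices.
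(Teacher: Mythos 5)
Your high-level architecture matches the paper's: build the batch coordinate-by-coordinate, and at step $j$ mask a fresh LPN sample by a \emph{randomized} affine combination of a constant and the previously-constructed residuals $r_1,\dots,r_{j-1}$ (evaluated implicitly at $\sk$ by adding the corresponding $(a'_i,y'_i)$'s to $(a_j,y_j)$), so that conditioned on any realization of the prefix the new noise bit has the prescribed conditional law. The independence/uniformity bookkeeping you describe for $a'_j$ and the tower-of-conditionals induction are also the right way to close the argument.

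However, there is a genuine gap at the technical heart of the proof: you assert, but do not establish, that the required distribution over affine coefficients exists. Concretely, you need the following statement: for any target $q:\BF_2^{j-1}\to[1/2-2^{-(k+3)},\,1/2+2^{-(k+3)}]$ (obtained from the conditional biases of $p$ after rescaling by the fresh sample's bias via \cref{lem:bernoulli-convolve}), there is a distribution over coefficient vectors $(F_0,F_1,\dots,F_{j-1})\in\BF_2^{j}$, \emph{independent of the prefix}, such that $F_0+\sum_i F_i z_i \sim \Ber(q(z))$ simultaneously for every fixed $z\in\BF_2^{j-1}$. Your sketched justification --- that each conditional law is a mixture of ``output $0$'', ``output $1$'', and ``output a coin'', with auxiliary coins $(c_0,c_1,c_2)$ chosen ``uniformly so as not to reveal the prefix'' --- does not deliver this: if the branch weights cannot depend on the prefix, it is not at all clear (and is false without the near-uniformity hypothesis, as the one-out-of-three counterexample of Arora--Ge shows) that an \emph{arbitrary} prefix-dependent target $q$ can be matched; moreover, as literally written your affine form $c_0+c_1\cdot(\mathrm{const})+c_2\cdot(\text{fresh residual})$ has no dependence on $r_1,\dots,r_{j-1}$ at all, so its conditional law cannot vary with the prefix. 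The paper proves exactly this existence statement (\cref{lemma:cond-dist-linearization}) by writing the constraints as a linear system in the probability mass function of the coefficient distribution, starting from the exact solution for the constant target $q\equiv 1/2$, and perturbing; keeping the perturbed object a genuine probability distribution (nonnegative, summing to one) is where the quantitative near-uniformity threshold $2^{-(k+3)}$ enters, via a least-singular-value bound $\sigma_{\min}\geq 2^{(k-2)/2}$ for the $\BF_2$ inner-product matrix (\cref{lem:b-conditioning}). This is also the true source of the $2^{k+2}\delta$ input bias: it is not a ``factor-of-two-per-level'' blowup as you suggest, but a single rescaling $p^{(i)}(z)=\tfrac12-\bigl(\tfrac12-\Pr[Z_i=1\mid Z_{<i}=z]\bigr)/(2^{k+3}\delta)$ needed to bring the target within range of the linearization lemma, after which one convolution with the fresh noise of bias $2^{k+2}\delta$ recovers the exact conditional bias. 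Without proving this linearization step, the induction cannot be closed.
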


\paragraph{Proof sketch: adding fresh noise?} An obvious attempt at proving this lemma would be to show that there is some distribution $\til p \in \Delta(\BF_2^k)$ so that if $X \sim \Ber(1/2 - 2^{k+2}\delta)^{\otimes k}$ and $Y \sim \til p$ are independent, then $Z = X+Y$ is distributed according to $p$. If this were true, one could construct $(a'_i,y'_i)_{i=1}^k$ by simply adding some appropriate (joint) noise to the vector $y$ of responses in the input batch $(a_i,y_i)_{i=1}^k$. Unfortunately, this is false even for $k=2$, and the counterexample showed up even in the warm-up technical overview, \cref{sec:warmup-overview}. Reparametrizing that example to match the current notation, suppose that $p$ is the distribution of the random vector $Z = (e_1 + b, e_2 + b) \in \BF_2^2$ where $e_1, e_2 \sim \Ber(1/2-\sqrt{\delta})$ and $b \sim \Ber(1/2)$ are independent. Then $Z_2|Z_1=0$ and $Z_2|Z_1=1$ are both $O(\delta)$-close to $\Ber(1/2)$. However, since $\TV(Z_1+Z_2, \Ber(1/2)) = \Omega(\delta)$, the data processing inequality implies that if we have $Z=X+Y$ for independent random variables $X,Y$, then \[\TV(X_1+X_2, \Ber(1/2)) \geq \TV(X_1+Y_1+X_2+Y_2, \Ber(1/2)+Y_1+Y_2) = \TV(Z_1+Z_2, \Ber(1/2)) = \Omega(\delta).\] Thus, if the input noise distribution is $X \sim \Ber(1/2-\eta)^{\otimes 2}$, then (by e.g. \cref{lem:bernoulli-convolve}) the bias $\eta$ must be $\Omega(\sqrt{\delta})$. Unfortunately, it's crucial for the application of the reduction that the input noise distribution has bias $O(\delta)$, so this approach does not work.

For this particular choice of $p$, there is a simple transformation (described in \cref{sec:warmup-overview}) to construct the desired batch, with joint noise distribution $p$, from a single sample from $\LPN_{n,2\delta^2}(\sk)$, but it's not immediately clear how to generalize that transformation even slightly \--- e.g., if $p$ is instead the distribution of $Z = (e_1+b, e_2+b, e_3+b)$ for $e_1,e_2,e_3 \sim \Ber(1/2-\delta)$ and $b \sim \Ber(1/2)$. 

The takeaway of the above analysis seems to be that we need to add noise that is \emph{correlated} with the noise in the input LPN samples. Indeed, suppose that we have managed to construct a batch of $k-1$ LPN samples $(a'_i,y'_i)_{i=1}^{k-1}$ with joint noise distribution $p_{1:k-1}$, so all that remains is to construct a random variable $(a'_k,y'_k)$ where $a'_k \sim \Unif(\BF_2^n)$ and $y'_k$ has the appropriate conditional distribution: \[y'_k - \langle a'_k, \sk\rangle \sim \Ber(\Pr_{Z\sim p}[Z_k=1|Z_j = y'_j-\langle a'_j,\sk\rangle \, \forall j \in [k-1]]).\]
By assumption on $p$, this Bernoulli random variable has bias at most $\delta$. Thus, if we knew the bias (say, $\delta'$), then we could construct $(a'_k,y'_k)$ from a fresh input sample $(a_k,y_k) \sim \LPN_{n,2^{k+2}\delta}(\sk)$ by adding $y_k$ to an independent Bernoulli random variable with bias $\delta'/(2^{k+3}\delta)$. This gets around the above obstacle because $\delta'$ implicitly depends on the noise terms $(y_i-\langle a_i,\sk\rangle)_{i=1}^{k-1}$ of the input samples. Unfortunately, it therefore also depends on $\sk$, so we cannot hope to learn it.

\paragraph{Solution: adding affine noise (in $\sk$).} The first insight (previously employed for key-dependent message cryptography \cite{applebaum2009fast}) is to observe that we can sometimes ``simulate'' adding something to the noise term of an LPN sample $(a_k,y_k)$ that depends on $\sk$, by appropriately adjusting not just $y_k$ but also $a_k$. In particular, suppose that $(a_k, y_k) \sim \LPN_{n,\delta}(\sk)$ and $F(\sk) = \alpha + \langle \beta,\sk\rangle$ where $\alpha,\beta$ are independent of $(a_k,y_k)$. Then the random variable
\[(a'_k,y'_k) := (a_k - \beta, y_k + \alpha)\]
satisfies that $a'_k \sim \Unif(\BF_2^n)$ and moreover that, conditioned on $a'_k$,
\[y'_k - \langle a'_k, \sk\rangle = y_k + \alpha - \langle a_k - \beta, \sk\rangle = y_k - \langle a_k,\sk\rangle + F(\sk).\]
Thus, for all intents and purposes, we have ``added'' the affine function $F(\sk)$ to the noise term. It remains to argue that we can write the random variable $\Ber(1/2 - \delta'/(2^{k+3}\delta))$ as a (random) affine function in $\sk$.

\paragraph{Linearization.} Note that $1/2 - \delta'/(2^{k+3}\delta)$ is some known function (say, $q$) of the unknown vector $z := (y'_j-\langle a'_j,\sk\rangle)_{j=1}^{k-1}$. Thinking of $z$ as an affine function in $\sk$, we can write down the coefficients of any affine function that is a linear combination of $z_1,\dots,z_{k-1}$ (along with a constant term), i.e. $F_0 + F_1z_1 + \dots + F_{k-1}z_{k-1}$. Thus, it suffices to show that there is a distribution $\mu$ over such linear combinations so that
\begin{equation} F_0 + F_1 z_1 + \dots + F_{k-1} z_{k-1} \sim \Ber(q(z))\label{eq:fz-q-constraint}\end{equation}
for all fixed $z$. We accomplish this by a perturbation argument. It's easy to see that if $q$ is the constant function $q(z) = 1/2$, then we can simply define $\mu := \Ber(1/2)^{\otimes k}$. Of course $q$ may not be constant, but by assumption we know that $q$ is entry-wise \emph{close} to constant. Moreover, for each $z$ the induced constraint \cref{eq:fz-q-constraint} on $\mu$ is linear in the density function of $\mu$, and we can show that the system of constraints (across all $z$) is essentially well-conditioned. Hence, a small perturbation to $q$ preserves satisfiability of the system. Some care has to be taken to ensure that e.g. the perturbed $\mu$ is still a distribution, but this is the main idea.

We proceed to the formal proof of \cref{lemma:construct-corr-lpn}, starting with the following least singular bound that will be needed for the perturbation argument.

\begin{lemma}
  \label{lem:b-conditioning}
Fix $k \in \NN$ and let $B \in \RR^{2^k-1 \times 2^k-1}$ be the matrix with rows and columns indexed by nonzero vectors in $\{0,1\}^k$, where $B_{uv} = \langle u,v\rangle \bmod{2}$ for any nonzero vectors $u,v \in \{0,1\}^k$. Then we have the following properties (note that we are treating $B$ as a real-valued matrix, not a matrix over $\BF_2$):
\begin{itemize}
    \item $B\mathbbm{1} = 2^{k-1}\mathbbm{1}$.
    \item The least singular value of $B$ satisfies $\sigma_{\min}(B) \geq 2^{(k-2)/2}$.
\end{itemize}
\end{lemma}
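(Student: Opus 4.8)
The matrix $B$ has a clean algebraic description: if we let $\widehat B \in \RR^{2^k \times 2^k}$ be the full matrix indexed by \emph{all} vectors in $\{0,1\}^k$ (including zero) with $\widehat B_{uv} = \langle u,v\rangle \bmod 2$, then $\widehat B = \tfrac12(\mathbf{1}\mathbf{1}^\t - W)$ where $W$ is the $2^k \times 2^k$ Walsh--Hadamard matrix, $W_{uv} = (-1)^{\langle u,v\rangle}$. This is because $(-1)^{\langle u,v\rangle} = 1 - 2(\langle u,v\rangle \bmod 2)$. Since $W$ is symmetric with $WW^\t = 2^k I$, its eigenvalues are all $\pm 2^{k/2}$, and it is easy to check that $W\mathbf{1} = 2^k e_0$ (the all-ones vector is orthogonal to every nonzero character, and equals the $0$-character itself). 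So the plan is to first establish the identity $B\mathbf{1} = 2^{k-1}\mathbf{1}$ directly by a counting argument — for a fixed nonzero $u$, exactly half of the nonzero $v$ satisfy $\langle u,v\rangle \equiv 1$, namely $2^{k-1}$ of them, since $v \mapsto \langle u,v\rangle$ is a surjective linear functional on $\BF_2^k$ whose kernel has size $2^{k-1}$, and removing $v = 0$ (which lies in the kernel) leaves $2^{k-1}$ vectors with $\langle u,v\rangle \equiv 1$ and $2^{k-1}-1$ with $\langle u,v\rangle \equiv 0$. Hence each row sum of $B$ is $2^{k-1}$.

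For the singular value bound, I would relate $B$ (the principal submatrix of $\widehat B$ obtained by deleting the row and column indexed by $0$) to $\widehat B = \tfrac12(\mathbf{1}\mathbf{1}^\t - W)$. The cleanest route: for any unit vector $x \in \RR^{2^k-1}$, embed it as $\bar x \in \RR^{2^k}$ by setting the $0$-coordinate to zero. Then $x^\t B x = \bar x^\t \widehat B \bar x = \tfrac12\big((\mathbf{1}^\t \bar x)^2 - \bar x^\t W \bar x\big)$. Now $\bar x^\t W \bar x$ has absolute value at most $\|W\| = 2^{k/2}$, and $(\mathbf{1}^\t \bar x)^2 \geq 0$. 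This gives $x^\t B x \geq -\tfrac12 2^{k/2}$, which controls the symmetric part in the wrong direction; instead I should argue via the decomposition of $\RR^{2^k}$. Better: note $\widehat B + \tfrac12 W = \tfrac12 \mathbf{1}\mathbf{1}^\t$ is rank one, so $\widehat B$ agrees with $-\tfrac12 W$ except on the span of $\mathbf{1}$. On the orthogonal complement of $\mathbf{1}$, $\widehat B$ acts as $-\tfrac12 W$, which has all singular values $\tfrac12 2^{k/2} = 2^{(k-2)/2}$. The subspace $\{\bar x : x \in \RR^{2^k-1}\}$ (vectors vanishing on coordinate $0$) has dimension $2^k - 1$, and its intersection with $\mathbf{1}^\perp$ has dimension at least $2^k - 2$; on this large subspace $\|\widehat B \bar x\| = 2^{(k-2)/2}\|\bar x\|$. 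A short argument handling the one remaining dimension — e.g. using that $B\mathbf{1} = 2^{k-1}\mathbf{1}$ shows $B$ has a genuine eigenvalue $2^{k-1} \geq 2^{(k-2)/2}$ on the all-ones direction within $\RR^{2^k-1}$ — lets me conclude $\sigma_{\min}(B) \geq 2^{(k-2)/2}$ by a min-max / interlacing argument, since every singular value of $B$ is at least the smallest over these complementary pieces.

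Alternatively, and perhaps more robustly, I would compute $B^\t B = B^2$ (as $B$ is symmetric) directly: using $\widehat B = \tfrac12(\mathbf{1}\mathbf{1}^\t - W)$ one gets $\widehat B^2 = \tfrac14(\mathbf{1}\mathbf{1}^\t \cdot 2^k - \mathbf{1}\mathbf{1}^\t W - W\mathbf{1}\mathbf{1}^\t + 2^k I) = 2^{k-2}(I + \mathbf{1}\mathbf{1}^\t) - \tfrac12\mathbf{1}(W\mathbf{1})^\t$ after simplification (using $W$ symmetric and $W\mathbf{1} = 2^k e_0$). Restricting the quadratic form $x^\t B^2 x = \bar x^\t \widehat B^2 \bar x$ to vectors $\bar x$ with zero $0$-coordinate kills the $\mathbf{1}(W\mathbf{1})^\t = 2^k \mathbf{1}e_0^\t$ cross term (since $e_0^\t \bar x = 0$), leaving $x^\t B^2 x = 2^{k-2}(\|x\|^2 + (\mathbf{1}^\t x)^2) \geq 2^{k-2}\|x\|^2$. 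Taking square roots gives $\sigma_{\min}(B) \geq 2^{(k-2)/2}$ immediately. This is the approach I expect to actually carry out, since it is fully elementary. The main obstacle is bookkeeping the Walsh--Hadamard identities carefully — in particular getting the cross-terms in $\widehat B^2$ right and verifying that deleting the $0$-indexed coordinate exactly annihilates the offending rank-one pieces — but there is no conceptual difficulty.
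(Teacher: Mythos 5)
Your route~2 argument is correct, and it reaches the goal by a genuinely different path from the paper. For the first bullet both you and the paper use the same counting fact (a nonzero linear functional on $\BF_2^k$ has exactly $2^{k-1}$ non-kernel points). For the singular value bound, the paper computes the Gram matrix $B^\top B$ entrywise by a second counting argument: for distinct nonzero $u,v$, the map $w \mapsto (\langle u,w\rangle,\langle v,w\rangle)$ is surjective onto $\BF_2^2$, so exactly $2^{k-2}$ vectors $w$ satisfy $\langle u,w\rangle \equiv \langle v,w\rangle \equiv 1$, giving $B^\top B = 2^{k-2}\mathbbm{1}\mathbbm{1}^\top + 2^{k-2}I$ and hence $\lambda_{\min}(B^\top B) \geq 2^{k-2}$ in two lines. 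You obtain exactly the same identity (your quadratic-form computation says $x^\top B^2 x = 2^{k-2}(\|x\|^2 + (\mathbbm{1}^\top x)^2)$ for all $x$) via the Walsh--Hadamard decomposition $\widehat B = \tfrac12(\mathbbm{1}\mathbbm{1}^\top - W)$, $W^2 = 2^k I$, $W\mathbbm{1} = 2^k e_0$; this is a bit longer but more systematic, and it generalizes readily to other character-sum matrices. Two small points to tighten when writing it up: (i) your intermediate simplification $-\tfrac12\mathbbm{1}(W\mathbbm{1})^\top$ should be the symmetrized pair $-\tfrac14\bigl(\mathbbm{1}(W\mathbbm{1})^\top + (W\mathbbm{1})\mathbbm{1}^\top\bigr)$, though both cross terms die in the quadratic form since $e_0^\top \bar x = 0$; (ii) the identity $x^\top B^2 x = \bar x^\top \widehat B^2 \bar x$ is \emph{not} a generic fact about principal submatrices --- it holds here because the $0$-indexed row and column of $\widehat B$ vanish ($\langle 0,v\rangle = 0$), so $\widehat B$ is the block direct sum of a $1\times 1$ zero block with $B$; state that one-line observation explicitly. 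Your route~1 sketch (combining bounds on $\mathbbm{1}^\perp$ and one extra direction via interlacing) is shakier as written, since a lower bound on $\|Bx\|$ over two complementary pieces does not by itself bound $\sigma_{\min}$, but since you defer to route~2 this does not affect the proposal.
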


\begin{proof}
For any nonzero vector $v \in \{0,1\}^k$, the number of $u \in \{0,1\}^k$ such that $\langle u,v\rangle \equiv 1 \bmod{2}$ is exactly $2^{k-1}$. The first property immediately follows. Next, for any distinct nonzero vectors $u,v \in \{0,1\}^k$, we observe that the number of $w \in \{0,1\}^k$ such that $\langle u,w\rangle \equiv \langle v,w \rangle \equiv 1 \bmod{2}$ is exactly $2^{k-2}$. Thus, $B^\t B = 2^{k-2} \mathbbm{1}\mathbbm{1}^\t + 2^{k-2} I$. It follows that $\lambda_{\min}(B^\t B) \geq 2^{k-2}$ and thus $\sigma_{\min}(B) \geq 2^{(k-2)/2}$.
\end{proof}

Using \cref{lem:b-conditioning}, we can now formally prove that \cref{eq:fz-q-constraint} is solvable for any near-uniform function $q$. Note that the distance to uniform needs to be exponentially small in the batch size $k$, hence the factor of $2^{k+2}$ in the bias of the input LPN samples in \cref{lemma:construct-corr-lpn}. It is an interesting question whether this dependence can be removed.

\begin{lemma}\label{lemma:cond-dist-linearization}
Fix $k \in \NN$, and pick any function $q: \BF_2^k \to [1/2 - 2^{-(k+3)}, 1/2+2^{-(k+3)}]$. Then there is a random variable $F = (F_0,\dots,F_k)$ on $\BF_2^{k+1}$ such that for all $z \in \BF_2^k$, it holds that
\[\Pr(F_0 + z_1 F_1 + \dots + z_k F_k \equiv 1 \bmod{2}) = q(z).\]
Moreover, there is an algorithm that takes $q$ as input and samples $F$ in time $2^{O(k)}$.
\end{lemma}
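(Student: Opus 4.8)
The plan is to set this up as a linear algebra problem over the reals and solve it by a perturbation argument off the uniform case, exactly as sketched in the discussion preceding the lemma. Fix $k$ and $q$. A candidate random variable $F = (F_0,\dots,F_k) \in \BF_2^{k+1}$ is specified by its probability mass function $\mu \in \Delta(\BF_2^{k+1})$, a vector of $2^{k+1}$ nonnegative reals summing to $1$. For each fixed $z \in \BF_2^k$, the constraint $\Pr(F_0 + z_1F_1 + \dots + z_kF_k \equiv 1) = q(z)$ is linear in $\mu$: writing $w(z) := (1, z_1,\dots,z_k) \in \BF_2^{k+1}$, the constraint says $\sum_{v : \langle w(z), v\rangle \equiv 1} \mu(v) = q(z)$. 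Rather than work with all $2^{k+1}$ coordinates of $\mu$ at once, I would instead parametrize $F$ more cleverly so that the system becomes square and well-conditioned: pick $F_0 \sim \Ber(1/2)$ independent of $(F_1,\dots,F_k)$, so that $\Pr(F_0 + \langle z, F_{1:k}\rangle \equiv 1) = 1/2$ whenever we're free, and more generally handle the general $q$ by writing $F = F^{\mathsf{unif}} \oplus F'$ where $F^{\mathsf{unif}}$ is the all-uniform perturbation. The cleanest route: observe that it suffices to find a \emph{signed measure} $\mu$ on $\BF_2^{k+1}$ that is close to uniform (so automatically a genuine probability distribution) satisfying all the constraints, and that the constraint matrix, after an appropriate reduction, is the matrix $B$ from \cref{lem:b-conditioning}.

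Concretely, here are the steps I would carry out. First, reduce to the homogeneous ($F_0$-free) picture: seek $(F_1,\dots,F_k)$ with distribution $\pi \in \Delta(\BF_2^k)$ and set $F_0 \sim \Ber(1/2-c_0)$ independently for a constant $c_0$; by symmetry of $F_0$, actually take $F_0$ deterministic won't work, so instead absorb everything into $\pi$ over $\BF_2^{k+1}$ directly. Let me restate: index the target constraints by the $2^k$ vectors $w(z) = (1,z) \in \BF_2^{k+1}$, $z \in \BF_2^k$. These $2^k$ vectors together with the all-ones constraint $\sum_v \mu(v) = 1$ do not span $\RR^{2^{k+1}}$, so the system is underdetermined and we have freedom to choose $\mu$; I will use that freedom to pick $\mu$ close to $\Unif(\BF_2^{k+1})$. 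Second, make the ``near-uniform'' claim precise: write $\mu = \Unif(\BF_2^{k+1}) + \eta$ where $\eta$ is a signed measure with $\sum_v \eta(v) = 0$. The uniform part contributes exactly $1/2$ to each constraint $\Pr(\langle w(z),F\rangle \equiv 1)$, so $\eta$ must satisfy $\sum_{v : \langle w(z),v\rangle \equiv 1} \eta(v) = q(z) - 1/2 =: r(z)$, with $|r(z)| \le 2^{-(k+3)}$. Third, solve for $\eta$: search for $\eta$ supported on $v$ of the form $v = w(z')$ for $z' \in \BF_2^k$, i.e. $\eta = \sum_{z'} \lambda_{z'} \mathbbm{1}_{w(z')}$. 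The constraint becomes $\sum_{z'} \lambda_{z'} \mathbbm{1}[\langle w(z), w(z')\rangle \equiv 1] = r(z)$. Since $\langle w(z), w(z')\rangle = 1 + \langle z, z'\rangle \bmod 2$, and noting $w(0) = (1,0,\dots,0)$ corresponds to the zero vector in the ``$z$'' coordinate but not in $\BF_2^{k+1}$, I would split off the $z = 0$ / $z' = 0$ rows and columns and reduce the remaining $(2^k-1)\times(2^k-1)$ system to the matrix $\bar B$ with entries $\mathbbm{1}[\langle z,z'\rangle \equiv 0]$ for nonzero $z,z'$, which equals $J - B$ where $B$ is as in \cref{lem:b-conditioning} and $J$ is all-ones. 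Using $B\mathbbm{1} = 2^{k-1}\mathbbm{1}$ and $\sigma_{\min}(B) \ge 2^{(k-2)/2}$ from \cref{lem:b-conditioning}, I can invert the relevant system and bound $\|\lambda\|_\infty$ (and hence $\|\eta\|_\infty$) by $\text{poly}(2^k) \cdot \|r\|_\infty \le \text{poly}(2^k)\cdot 2^{-(k+3)}$; the exact arithmetic needs care but the point is only that $\|\eta\|_\infty < 2^{-(k+1)}$, which guarantees $\mu(v) = 2^{-(k+1)} + \eta(v) \ge 0$ for all $v$, so $\mu$ is a bona fide distribution. Fourth, the algorithm: $B$, $J$, and $r$ are all explicitly writable in time $2^{O(k)}$, the linear solve is $2^{O(k)}$ by Gaussian elimination, and sampling from $\mu$ (a distribution on a set of size $2^{k+1}$) is $2^{O(k)}$; this gives the claimed runtime.

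The main obstacle I anticipate is getting the conditioning constants to line up so that the final $\|\eta\|_\infty$ bound is genuinely below $2^{-(k+1)}$ given only $\|r\|_\infty \le 2^{-(k+3)}$ — this is where the precise spectral bounds of \cref{lem:b-conditioning} must be used carefully, and where the choice to restrict $\eta$'s support to $\{w(z') : z'\}$ (rather than all of $\BF_2^{k+1}$) matters, since that restriction is exactly what turns the constraint matrix into the well-conditioned $B$/$J-B$ and avoids a hopelessly underdetermined or ill-conditioned system. A secondary subtlety is the bookkeeping around the $z=0$ row: the constraint for $z = 0$ reads $\Pr(F_0 \equiv 1) = q(0)$, which is a single extra linear condition that I would satisfy by allowing a rank-one adjustment to $\eta$ along the $w(0)$ direction, and I should check this doesn't blow up the other constraints (it won't, by the $B\mathbbm{1} = 2^{k-1}\mathbbm{1}$ identity, which says the all-ones direction is an eigenvector). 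Once those constants are verified, the rest is routine.
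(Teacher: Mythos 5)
Your high-level strategy is the same as the paper's: perturb a near-uniform base distribution, restrict the perturbation to a small set of coordinates so that the constraint system reduces to the well-conditioned parity matrix of \cref{lem:b-conditioning}, bound the perturbation in $\ell_\infty$ to keep the measure nonnegative, and do everything explicitly in time $2^{O(k)}$. However, there is a genuine gap in your Step 3. If $\eta$ is supported only on vectors of the form $w(z')=(1,z')$, then the constraint at $z=\mathbf{0}$ reads $\sum_{z'}\lambda_{z'}=q(\mathbf{0})-1/2$ (because $\langle w(\mathbf{0}),w(z')\rangle\equiv 1$ for \emph{every} $z'$), while the normalization requirement $\sum_v\eta(v)=0$ reads $\sum_{z'}\lambda_{z'}=0$. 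These are incompatible whenever $q(\mathbf{0})\neq 1/2$, so no $\eta$ of the proposed form exists (and you cannot fix this by rescaling $\mu$ afterwards, since rescaling destroys the constraint values). Your proposed remedy, a rank-one adjustment ``along the $w(\mathbf{0})$ direction,'' does not repair this: adding mass at the point $w(\mathbf{0})$ (or adding a multiple of the all-ones vector to $\lambda$) shifts the $z=\mathbf{0}$ constraint and the normalization constraint by exactly the same amount, again because $\langle w(z),w(\mathbf{0})\rangle\equiv 1$ for all $z$; the identity $B\mathbbm{1}=2^{k-1}\mathbbm{1}$ is not relevant to this coupling.

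The fix is to place part of the correction on a vector whose first coordinate is $0$. The paper's construction does this cleanly: it takes the base measure $\Ber(q(\mathbf{0}))\times\Ber(1/2)^{\otimes k}$, so the $z=\mathbf{0}$ constraint is satisfied exactly and all other constraints equal $1/2$; it puts the perturbation on the coordinates $\{(0,z'):z'\neq\mathbf{0}\}$, whose constraint matrix is literally the matrix $B$ of \cref{lem:b-conditioning} and which never touch the $z=\mathbf{0}$ constraint; and it absorbs the normalization correction at the all-zeros vector $\mathbf{0}\in\BF_2^{k+1}$, which appears in no parity constraint at all. Alternatively, your support $\{(1,z')\}$ can be salvaged by adjoining a free mass at $\mathbf{0}$: the resulting $2^k\times 2^k$ bordered system (first row and column all ones, remaining block $J-B$) has Schur complement $-B$, so invertibility and conditioning again follow from \cref{lem:b-conditioning}, but you would then have to prove an explicit least-singular-value bound for the bordered matrix, since the whole lemma hinges on the margin between $\|q-\tfrac12\mathbbm{1}\|_\infty\leq 2^{-(k+3)}$ and the atom size $2^{-(k+1)}$ of your uniform base; this is exactly the arithmetic you deferred. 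With either repair the rest of your plan (the $\ell_\infty$ bound on $\eta$ and the $2^{O(k)}$ runtime) goes through as in the paper.
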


\begin{proof}
We represent $q$ as an element of $\RR^{\BF_2^k}$, and we also identify $\Delta(\BF_2^{k+1})$ with the simplex in $\RR^{\BF_2^{k+1}}$. Throughout this proof, we let $\mathbbm{1}$ denote the all-ones real vector, of appropriate dimension per the context. Set $\delta := \norm{q - \frac{1}{2}\mathbbm{1}}_\infty \leq 2^{-(k+3)}$. Let $A \in \RR^{2^{k+1} \times 2^k}$ be the matrix with rows indexed by $\BF_2^{k+1}$ and columns indexed by $\BF_2^k$, so that for any $f \in \BF_2^{k+1}$ and $z \in \BF_2^k$, we define $A_{fz} := \mathbbm{1}[f_0+f_1z_1+\dots+f_kz_k \equiv 1 \bmod{2}]$. Then for any distribution $\mu \in \Delta(\BF_2^{k+1})$, it is clear that for all $z \in \BF_2^k$,
\begin{equation}
\Prr_{F\sim \mu}(F_0+z_1F_1+\dots+z_kF_k \equiv 1 \bmod{2}) = (A^\t \mu)_z.\label{eq:fz-pr-linear}\end{equation}
We construct a distribution $\mu^\st \in \Delta(\BF_2^{k+1})$ satisfying $A^\t \mu^\st = q$ as follows. First, define $\bar \mu \in \Delta(\BF_2^{k+1})$ by 
\[\bar\mu := \Ber(q(\mathbf{0})) \times \Ber(1/2)^{\otimes k}\]
where $\mathbf{0}$ denotes the all-0s vector (taken to be of appropriate dimension per the context).  
Next, define $\MZ = \BF_2^k \setminus \mathbf{0}$ and $\MF = \{0\} \times \MZ \subseteq \BF_2^{k+1}$, and define the matrix $B := (A_{\MF\MZ})^\t$. Note that $B$ is exactly the matrix defined in \cref{lem:b-conditioning}. Let $q_\MZ \in \BR^\MZ$ denote the vector whose entry corresponding to $z \in \MZ$ is $q(z)$. We construct $\mu^\st$, viewed as a vector in $\BR^{\BF_2^{k+1}}$, by separately defining its components: $\mu^\st_\MF$ for entries in $\MF$, $\mu^\st_{\mathbf{0}}$ for the $\mathbf{0}$-entry, and all other entries: 
\begin{align} \mu^\st_\MF &:=  \bar\mu_\MF + B^{-1} \left(q_\MZ - \frac{1}{2}\mathbbm{1}\right)\label{eq:pdfmu1} \\
\mu^\st_{\mathbf{0}} &:=  \bar\mu_{\mathbf{0}} + \mathbbm{1}^\t B^{-1} \left(\frac{1}{2}\mathbbm{1} - q_\MZ\right)\label{eq:pdfmu2}\\
  \mu^\st_f &:=  \bar\mu_f \qquad \forall f \in \BF_2^{k+1} \setminus (\MF \cup \{\mathbf{0}\}).\label{eq:pdfmu3}
\end{align}
We must first show that $\mu^\st$ is a well-defined distribution on $\BF_2^{k+1}$. First, by \cref{lem:b-conditioning} and the fact that $B$ is a symmetric matrix, we have that $B$ is invertible. Moreover, 
\begin{align}\label{eq:bdiff}\norm{B^{-1}\left(q_\MZ - \frac{1}{2}\mathbbm{1}\right)}_\infty \leq \norm{B^{-1}\left(q_\MZ - \frac{1}{2}\mathbbm{1}\right)}_2 \leq 2^{(2-k)/2} \norm{q_\MZ - \frac{1}{2}\mathbbm{1}}_2 \leq 2\norm{q_\MZ - \frac{1}{2}\mathbbm{1}}_\infty \leq 2\delta\end{align}
where the second inequality is by \cref{lem:b-conditioning} and the final inequality is by definition of $\delta$. Also, by construction, we know that $\frac{\min(q(\mathbf{0}), 1-q(\mathbf{0}))}{2^k}\leq \bar\mu_f \leq \frac{1}{2^k}$ for all $f \in \BF_2^{k+1}$. It follows that for all $f \in \MF$,
\[0 \leq \frac{1/2-\delta}{2^k} - 2\delta \leq \frac{\min(q(\mathbf{0}), 1-q(\mathbf{0}))}{2^k} - 2\delta \leq \mu^\st_f \leq \frac{1}{2^k} + 2\delta \leq 1,\]
where the first inequality is because $(2^{k+2}+2)\delta \leq 1$, the second inequality is by definition of $\delta$, the third and fourth inequalities use \cref{eq:pdfmu1} and \cref{eq:bdiff}, and the final inequality is because $\delta \leq 1/4$. Next, since $B\mathbbm{1} = 2^{k-1} \mathbbm{1}$ (\cref{lem:b-conditioning}), we have $B^{-1}\mathbbm{1} = 2^{1-k}\mathbbm{1}$, so \[|\mu^\st_{\mathbf{0}}-\bar\mu_{\mathbf{0}}| = 2^{1-k}\left|\frac{\langle\mathbbm{1},\mathbbm{1}\rangle}{2} - \langle q_\MZ,\mathbbm{1}\rangle\right| \leq 2 \norm{\frac{1}{2} \mathbbm{1} - q_\MZ}_\infty \leq 2\delta\] and thus, like above, $0 \leq \mu^\st_{\mathbf{0}} \leq 1$. Finally, since $\bar\mu$ is a distribution, it's immediate from \cref{eq:pdfmu3} that $0 \leq \mu^\st_f \leq 1$ for all $f \in \BF_2^{k+1} \setminus (\MF \cup \{\mathbf{0}\})$. This shows that all entries of $\mu^\st$ are between $0$ and $1$. Next, observe that
\[\mathbbm{1}^\t \mu^\st = \mathbbm{1}^\t \bar\mu + \mathbbm{1}^\t B^{-1}\left(q_\MZ - \frac{1}{2}\mathbbm{1}\right) + \mathbbm{1}^\t B^{-1} \left(\frac{1}{2}\mathbbm{1} - q_\MZ\right) = 1\] 
where the final equality uses that $\bar\mu$ is a distribution. We conclude that $\mu^\st$ is a distribution.

Next, we must prove that $A^\t \mu^\st = q$. First recall that $\bar\mu$ is the distribution $\Ber(q(\mathbf{0})) \times \Ber(1/2)^{\otimes k}$. Thus, one can see from \cref{eq:fz-pr-linear} that $(A^\t \bar\mu)_{\mathbf{0}} = q(\mathbf{0})$, whereas for any $z \in \MZ$, $(A^\t \bar\mu)_z = 1/2$. Now since $A_{f,\mathbf{0}} = 0$ for all $f \in \MF \cup \{\mathbf{0}\}$, and $\mu^\st_{f} = \bar\mu_{f}$ for all $f \in \BF_2^{k+1}\setminus(\MF\cup\{\mathbf{0}\})$, we see that $(A^\t \mu^\st)_{\mathbf{0}} = (A^\t \bar\mu)_{\mathbf{0}} = q(\mathbf{0})$ as desired. Moreover, using that $A_{\mathbf{0},z} = 0$ for all $z \in \BF_2^k$,
\[(A^\t \mu^\st)_\MZ = (A^\t \bar\mu)_\MZ + A_{\MF\MZ}^\t (\mu^\st_\MF - \bar\mu_\MF) = \frac{1}{2}\mathbbm{1} + BB^{-1}\left(q_\MZ - \frac{1}{2}\mathbbm{1}\right) = q_\MZ\]
as needed. Since $\MZ\cup\{\mathbf{0}\} = \BF_2^k$, this proves that $A^\t\mu^\st = q$.

Finally, the claimed time complexity bound for sampling $F \sim \mu^\st$ follows from the fact that the probability mass function $\mu^\st$ (of size $2^{k+1}$) has an explicit formula (described in \cref{eq:pdfmu1,eq:pdfmu2,eq:pdfmu3}) and can be computed in time $2^{O(k)}$.
\end{proof}



We can now complete the proof of \cref{lemma:construct-corr-lpn}, following the sketch described above for generating the $k$-th sample of the batch given the first $k-1$ samples (and the target distribution $p$).

\begin{namedproof}{\cref{lemma:construct-corr-lpn}}
Throughout, $\sk \in \BF_2^n$ is fixed and unknown to the algorithm. The algorithm $\EntLPN$ constructs $(a'_i,y'_i)_{i=1}^k$ iteratively from $i=1$ to $k$. At each step $i$, we will maintain the invariant that $(a'_1,\dots,a'_i,(y'_1-\langle a'_1,\sk\rangle,\dots,y'_i-\langle a'_i,\sk\rangle))$ is distributed according to $\Unif(\BF_2^n)^{\otimes i} \times p_{1:i}$, for any $\sk \in \BF_2^n$, where $p_{1:i}$ is the marginal distribution of $Z_{1:i}$ for $Z \sim p$.

At step $i=1$, $\EntLPN$ sets
\[a'_1 := a_1\]
\[e'_1 \sim \Ber\left(\frac{1}{2} - \frac{1}{2}\frac{\frac{1}{2} - p_1(1)}{2^{k+2}\delta}\right)\]
\[y'_1 := y_1 + e'_1\]
where $p_1$ is the marginal distribution of $Z_1$ for $Z\sim p$. Note that $e'_1$ is well-defined since $p_1(1) \in [1/2-\delta,1/2+\delta]$ (by the lemma's assumption). By construction, $a'_1 \sim \Unif(\BF_2^n)$. Moreover, since $y_1 = \langle a_1,\sk\rangle + e_1$, where $e_1 \sim \Ber(1/2 - 2^{k+2} \delta)$ is independent of $a_1$, it follows that $y'_1 - \langle a'_1,\sk\rangle = e_1 + e'_1$ is independent of $a'_1$ and (by \cref{lem:bernoulli-convolve}) has distribution $\Ber(p_1(1))$. Thus, indeed $(a'_1, y'_1 - \langle a'_1,\sk\rangle) \sim \Unif(\BF_2^n) \times p_1$.

Now fix $1 < i \leq k$ and suppose that $\EntLPN$ has constructed $(a'_j,y'_j)_{j=1}^{i-1}$ with the desired distribution, using $(a_j,y_j)_{j=1}^{i-1}$. At step $i$, $\EntLPN$ computes the function $p^{(i)}: \BF_2^{i-1} \to [0,1]$ defined by
\begin{align}
  \label{eq:pi-defn}
  p^{(i)}(z_{1:i-1}) := \frac{1}{2} - \frac{\frac{1}{2} - \Pr_{Z\sim p}[Z_i=1|Z_{1:i-1}=z_{1:i-1}]}{2^{k+3}\delta}.
\end{align}
By assumption, it holds that $\Pr_{Z\sim p}[Z_i=1|Z_{1:i-1}=z_{1:i-1}] \in [1/2-\delta,1/2+\delta]$, and thus $p^{(i)}(z_{1:i-1}) \in [1/2 - 1/2^{k+3}, 1/2 + 1/2^{k+3}]$, for all $z_{1:i-1} \in \BF_2^{i-1}$. Applying the algorithm guaranteed by \cref{lemma:cond-dist-linearization}, $\EntLPN$ samples a random variable $(F^{(i)}_0,F^{(i)}_1,\dots,F^{(i)}_{i-1})$ with the property that
\begin{align}
  \label{eq:Fproperty}
  \Pr(F^{(i)}_0+F^{(i)}_1z_1+\dots+F^{(i)}_{i-1}z_{i-1} \equiv 1 \bmod{2}) = p^{(i)}(z_{1:i-1})
\end{align}
for all $z_{1:i-1} \in \BF_2^{i-1}$. 
Finally, $\EntLPN$ sets
\[a'_i := a_i + F^{(i)}_1 a'_1 + \dots + F^{(i)}_{i-1} a'_{i-1}\]
\[y'_i := y_i + F^{(i)}_0 + F^{(i)}_1 y'_1 + \dots + F^{(i)}_{i-1} y'_{i-1}.\] 
It remains to argue that $(a'_j,y'_j)_{j=1}^i$ has the desired distribution. To see this, first observe that the following four random variables are, by construction, mutually independent:
\begin{itemize}\setlength\itemsep{0.1em}
    \item $(a'_j,y'_j)_{j=1}^{i-1}$,
    \item $a_i$,
    \item $y_i - \langle a_i, \sk\rangle$,
    \item $(F_0^{(i)},\dots,F_{i-1}^{(i)})$.
\end{itemize}
Thus, suppose we condition on the first $i-1$ samples $(a'_j,y'_j)_{j=1}^{i-1}$. Then still $a_i \sim \Unif(\BF_2^n)$, and moreover $a_i$ is independent of $(F^{(i)}_0,\dots,F^{(i)}_{i-1})$. Thus, conditioned on $(a_j', y_j')_{j=1}^{i-1}$, we have that $a'_i$ is distributed according to $\Unif(\BF_2^n)$.

Next we also condition on $a'_i$. Since $a'_i$ is independent of $(F^{(i)}_0,\dots,F^{(i)}_{i-1})$ conditioned on $(a_j', y_j')_{j=1}^{i-1}$, the distribution of $(F^{(i)}_0,\dots,F^{(i)}_{i-1})$ conditioned on $\{ (a_j', y_j')_{j=1}^{i-1}, a_i' \}$ still satisfies the property \cref{eq:Fproperty}: for every $z \in \BF_2^{i-1}$, the distribution of $F^{(i)}_0+F^{(i)}_1z_1+\dots+F^{(i)}_{i-1}z_{i-1}$ is still $\Ber(p^{(i)}(z_{1:i-1}))$. We have that
\begin{align}
y'_i - \langle a'_i, \sk\rangle 
&= y_i - \langle a_i, \sk\rangle + F^{(i)}_0 + F^{(i)}_1(y'_1 - \langle a'_1, \sk\rangle) + \dots + F^{(i)}_{i-1}(y'_{i-1} - \langle a'_{i-1}, \sk\rangle).\label{eq:yprime-diff}
\end{align}
Using the property \cref{eq:Fproperty}, we have $F^{(i)}_0+\sum_{j=1}^{i-1} F^{(i)}_j (y'_j - \langle a'_j, \sk\rangle) \sim \Ber(p^{(i)}((y'_j-\langle a'_j,\sk\rangle)_{j=1}^{i-1}))$. Also $y_i - \langle a_i,\sk\rangle \sim \Ber(1/2 - 2^{k+2} \delta)$ is independent of $(F^{(i)}_j)_{j=0}^{i-1}$. 
Thus, combining \cref{eq:yprime-diff}, the definition of $p^{(i)}$ in \cref{eq:pi-defn}, and \cref{lem:bernoulli-convolve}, we get
\[y'_i - \langle a'_i, \sk\rangle \sim \Ber\left(\Pr_{Z\sim p}\left[Z_i=1|Z_j = y'_j-\langle a'_j,\sk\rangle \quad \forall j \in [i-1]\right]\right).\]
It follows that the conditional distribution of $(a'_i, y'_i-\langle a'_i,\sk\rangle)$, given $(a'_j,y'_j)_{j=1}^{i-1}$, is \[\Unif(\BF_2^n) \times \Ber\left(\Pr_{Z\sim p}\left[Z_i=1|Z_j = y'_j-\langle a'_j,\sk\rangle \quad \forall j \in [i-1]\right]\right).\]
By the inductive hypothesis, we get that the distribution of $(a'_j,y'_j-\langle a'_j,\sk\rangle)_{j=1}^i$ is 
\[\Unif(\BF_2^n)^{\otimes i} \times p_{1:i}\] as desired. Finally, we note that the claimed time complexity bound on $\EntLPN$ follows from the algorithm description and \cref{lemma:cond-dist-linearization}.
\end{namedproof}


\paragraph{Intermission: tying back to triangle LPN.} In this section we are considering generic joint noise distributions, but ultimately we are interested in the noise distribution $\mu_{H,N,\delta}$ that induces the triangle LPN distribution $\LPN_{n,N,H,\delta}^{\mathsf{tri}}(\sk)$ (see \cref{def:mu-defn}). In particular, we ultimately will want to show that a triangle LPN sample $(u_{ijk},y_{ijk})_{(i,j,k) \in \Gamma(H) \times [N+1]}$ can be efficiently generated from standard LPN samples with bias $O_H(\delta^2)$. To this end, in \cref{sec:triangle-lpn-hardness} we will use \cref{lemma:construct-corr-lpn} to show that \emph{part} of a triangle LPN sample (specifically, the part $(u_{ij1}, y_{ij1})_{(i,j) \in \Gamma(H)}$ corresponding to the noise distribution $\mu^0_{H,N,\delta}$) can be efficiently generated from standard LPN samples with bias $O_H(\delta^2)$. We pay a factor $\exp(O(H^2))$ in the bias, but this is fine since $\exp(O(H^2)) \ll \delta^{-1}$ in our parameter regime of interest (see \cref{def:mdp-family}).

Unfortunately, we cannot directly use \cref{lemma:construct-corr-lpn} to construct the entire triangle LPN sample, since it consists of $O(H^2 \cdot N)$ (correlated) LPN samples, and thus we would pay a factor of $\exp(N)$ in the bias. Since $N \asymp \poly(\delta^{-1})$ in our parameter regime of interest, this is far too large.

However, there is significant structure in a triangle LPN sample that we are not yet exploiting. In particular, suppose we have generated $(u_{ij1},y_{ij1})_{(i,j) \in \Gamma(H)}$ (according to the correct marginal distribution). Conditioning on these variables induces some altered distribution for the latent variables $(b_1,\dots,b_H)$, which depends on $\sk$. If we knew this distribution and could correctly sample the $H$ variables $b_1,\dots,b_H$, then we could generate $(u_{ijk},y_{ijk})_{(i,j,k) \in \Gamma(H) \times \{2,\dots,N+1\}}$ (according to the correct conditional distribution) by simply generating $|\Gamma(H)|\cdot N$ independent samples $(\til u_{ijk},\til y_{ijk})_{(i,j,k) \in \Gamma(H) \times \{2,\dots,N+1\}}$ from $\LPN_{n,O_H(\delta^2)}(\sk)$ and adding $b_j$ to each $\til y_{ijk}$. The key point is that we are generating these $|\Gamma(H)|\cdot N$ samples all at once rather than one-by-one, which avoids needing to condition on $\Omega(N)$ variables, and is only possible since these samples, unlike $(u_{ij1},y_{ij1})_{(i,j) \in \Gamma(H)}$, have sufficient mutually independent noise.

Of course, we do not know the conditional distribution on $(b_1,\dots,b_H)$ because we do not know $\sk$. However, we can get around this issue via the same ideas as in the proof of \cref{lemma:construct-corr-lpn}. The following lemma will let us sample an affine function (in $\sk$) which, if evaluated at $\sk$, has the same distribution as $(b_1,\dots,b_H)$.

\begin{lemma}
  \label{lemma:uv-construction}
There is an algorithm $\AffSample$ with the following property. Fix $n,k,H \in \NN$ and a function $q: \BF_2^k \to \Delta(\BF_2^H)$ such that \[\Prr_{W\sim q(z)}[W_h = 1 | W_{<h} = w_{<h}] \in [1/2 - 1/2^{H+k+2}, 1/2 + 1/2^{H+k+2}]\] for all $z \in \BF_2^k$, $h \in [H]$, and $w \in \BF_2^H$. Fix any $\sk,a_1,\dots,a_k \in \BF_2^n$ and $y_1,\dots,y_k \in \BF_2$. The output of $\AffSample(q, (a_i,y_i)_{i=1}^k)$ is $(U^i,v^i)_{i=1}^H$ such that $U^1,\dots,U^H \in \BF_2^n$ and $v^1,\dots,v^H \in \BF_2$, and the distribution of $(v^h - \langle U^h,\sk\rangle)_{h=1}^H$ is $q((y_i-\langle a_i,\sk\rangle)_{i=1}^k)$.

Moreover, the time complexity of $\AffSample$ is $\poly(n,2^k,2^H)$.
\end{lemma}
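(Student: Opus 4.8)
The plan is to generalize the linearization argument from the proof of \cref{lemma:construct-corr-lpn}, building the $H$ output pairs $(U^h,v^h)$ one coordinate at a time via the chain rule. Throughout, $\sk$ is fixed and unknown while $(a_i,y_i)_{i=1}^k$ are known to the algorithm; abbreviate $z := (y_i - \langle a_i,\sk\rangle)_{i=1}^k \in \BF_2^k$, and note that although the algorithm cannot compute $z$, it can compute the affine map $\sk \mapsto z$ (its coefficients are the $a_i$ and $y_i$). The invariant maintained after stage $h$ is: for every $\sk$, the joint law over the algorithm's coins of $\big(v^1 - \langle U^1,\sk\rangle,\dots,v^h - \langle U^h,\sk\rangle\big)$ equals the marginal of $q(z)$ on its first $h$ coordinates. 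After stage $H$ this is exactly the claimed guarantee.

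For the inductive step at stage $h$, I would first define $g_h : \BF_2^{k+h-1} \to [0,1]$ by $g_h(x) := \Prr_{W\sim q(x_{1:k})}[W_h = 1 \mid W_{<h} = x_{k+1:k+h-1}]$, which the algorithm computes from $q$ in time $2^{O(k+H)}$. The lemma's hypothesis gives $g_h(x) \in [1/2 - 2^{-(H+k+2)}, 1/2 + 2^{-(H+k+2)}]$, and since $h \le H$ we have $2^{-(H+k+2)} \le 2^{-(k+h-1)-3}$, so this interval lies in $[1/2 - 2^{-(k+h-1)-3}, 1/2 + 2^{-(k+h-1)-3}]$. Hence \cref{lemma:cond-dist-linearization}, applied with its ``$k$'' equal to $k+h-1$ and its ``$q$'' equal to $g_h$, yields a samplable random vector $F^{(h)} = (F^{(h)}_0,\dots,F^{(h)}_{k+h-1})$ with $\Pr\big(F^{(h)}_0 + \sum_{\ell=1}^{k+h-1} x_\ell F^{(h)}_\ell \equiv 1 \bmod 2\big) = g_h(x)$ for all $x$. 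The algorithm draws $F^{(h)}$ with fresh randomness and sets $v^h := F^{(h)}_0 + \sum_{i=1}^k F^{(h)}_i y_i + \sum_{j=1}^{h-1} F^{(h)}_{k+j} v^j$ and $U^h := \sum_{i=1}^k F^{(h)}_i a_i + \sum_{j=1}^{h-1} F^{(h)}_{k+j} U^j$, so that $v^h - \langle U^h,\sk\rangle = F^{(h)}_0 + \sum_{i=1}^k F^{(h)}_i(y_i - \langle a_i,\sk\rangle) + \sum_{j=1}^{h-1} F^{(h)}_{k+j}(v^j - \langle U^j,\sk\rangle)$, i.e.\ $F^{(h)}$ is evaluated at the affine-in-$\sk$ vector $(z, w_{<h})$ where $w_j := v^j - \langle U^j,\sk\rangle$.

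To verify the invariant, I would condition on all randomness used in stages $1,\dots,h-1$, which fixes $(U^j,v^j)_{j<h}$ and hence $w_{<h}$; since $F^{(h)}$ is drawn independently, $w_h := v^h - \langle U^h,\sk\rangle$ is then distributed as $\Ber(g_h(z,w_{<h})) = \Ber\big(\Prr_{W\sim q(z)}[W_h=1 \mid W_{<h}=w_{<h}]\big)$. Thus the conditional law of $w_h$ given the past depends only on $w_{<h}$ and agrees with the conditional law of $W_h$ given $W_{<h}=w_{<h}$ under $W\sim q(z)$; combined with the inductive hypothesis $w_{<h}\sim q(z)_{1:h-1}$, the chain rule gives $w_{1:h}\sim q(z)_{1:h}$, completing the induction. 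The running time is $H$ invocations of the sampler of \cref{lemma:cond-dist-linearization} at dimension at most $k+H$, each costing $2^{O(k+H)}$, plus $\poly(n,k,H)$ $\BF_2$-arithmetic per stage, for a total of $\poly(n,2^k,2^H)$.

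I expect this to be essentially routine given \cref{lemma:cond-dist-linearization} and the structure of the proof of \cref{lemma:construct-corr-lpn}; the only points requiring care are the parameter bookkeeping (checking at every stage $h\le H$ that the near-uniformity bound on $g_h$ is strong enough for \cref{lemma:cond-dist-linearization} at dimension $k+h-1$) and being explicit that the fresh vector $F^{(h)}$ is independent of all of $(U^j,v^j)_{j<h}$, which is what makes the conditioning argument — and hence the chain-rule step — valid.
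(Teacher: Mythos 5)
Your proposal is correct and follows essentially the same route as the paper's proof: iterate over $h$, apply \cref{lemma:cond-dist-linearization} at dimension $k+h-1$ to the conditional probability function (the paper splits your single vector $F^{(h)}$ into coefficients $F^{(h)}$ for the $z$-coordinates and $G^{(h)}$ for the previously constructed $w$-coordinates), form $U^h,v^h$ by the same affine combination, and conclude by conditioning on the earlier stages and using independence of the fresh coefficient vector. Your explicit check that $2^{-(H+k+2)} \leq 2^{-(k+h+2)}$ for $h \leq H$ is exactly the parameter bookkeeping the paper relies on implicitly.
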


Note that \cref{lemma:uv-construction} is essentially an extension of \cref{lemma:cond-dist-linearization} where each $q(z)$ is now a distribution over $\BF_2^H$ rather than $\BF_2$. Accordingly, it is proved by iteratively applying \cref{lemma:cond-dist-linearization}. Also note that the description complexity of $q$ is $2^{k+H}$, so the claimed time complexity bound is indeed feasible.


\begin{proof}
The algorithm $\AffSample$ constructs $(U^h,v^h)_{h=1}^H$ iteratively from $h=1$ to $H$. At step $h$, we will maintain the invariant that $(v^i - \langle U^i,\sk\rangle)_{i=1}^h$ is distributed as $X_{1:h}$ where $X \sim q((y_j-\langle a_j,\sk\rangle)_{j=1}^k)$.

In particular, fix any step $h$, and suppose that $\AffSample$ has already constructed $(U^i,v^i)_{i=1}^{h-1}$ such that $(v^i-\langle U^i,\sk\rangle)_{i=1}^{h-1}$ is distributed as $X_{1:h-1}$ where $X \sim q((y_j-\langle a_j,\sk\rangle)_{j=1}^k)$. For each $z \in \BF_2^k$ and $w \in \BF_2^{h-1}$, the algorithm $\AffSample$ computes $p^{(h)}(z,w) := \Pr_{X \sim q(z)}[X_h=1|X_{1:h-1} = w]$. 
By the lemma assumption, it holds that $p^{(h)}(z,w) \in [1/2 - 1/2^{H+k+2}, 1/2 + 1/2^{H+k+2}]$ for all $z \in \BF_2^k$ and $w \in \BF_2^{h-1}$. By \cref{lemma:cond-dist-linearization} applied to $p^{(h)}$, $\AffSample$ can construct random vectors $(F^{(h)}_0,\dots,F^{(h)}_k)$ and $(G^{(h)}_1,\dots,G^{(h)}_{h-1})$ such that 
\begin{align}
  \label{eq:prfg-phza}
  \Pr\left(F^{(h)}_0 + \sum_{j=1}^k z_j F^{(h)}_j + \sum_{i=1}^{h-1} w_i G^{(h)}_i \equiv 1 \mod 2\right) = p^{(h)}(z,w)
\end{align}
for all $z \in \BF_2^k$ and $w \in \BF_2^{h-1}$. Next, $\AffSample$ computes
\[U^h := \sum_{j=1}^k F^{(h)}_j a_j + \sum_{i=1}^{h-1} G^{(h)}_i U^i\]
\[v^h := F^{(h)}_0 + \sum_{j=1}^k F^{(h)}_j y_j + \sum_{i=1}^{h-1} G^{(h)}_i v^i.\]
Observe that conditioned on $(U^i,v^i)_{i=1}^{h-1}$, we have, using \cref{eq:prfg-phza},
\begin{align*}
v^h - \langle U^h, \sk\rangle 
&= F^{(h)}_0 + \sum_{j=1}^k (y_j - \langle a_j,\sk\rangle) F^{(h)}_j + \sum_{i=1}^{h-1} (v^i - \langle U^i,\sk\rangle) G^{(h)}_i \\
&\sim \Ber(p^{(h)}((y_j-\langle a_j,\sk\rangle)_{j=1}^k, (v^i - \langle U^i,\sk\rangle)_{i=1}^{h-1})).
\end{align*}
Since this is the distribution of $X_h|X_{1:h-1} = (v^i-\langle U^i,\sk\rangle)_{i=1}^{h-1}$ where $X \sim q((y_j-\langle a_j,\sk\rangle)_{j=1}^k)$, it follows that $(v^i-\langle U^i,\sk\rangle)_{i=1}^h$ is distributed as $X_{1:h}$ as desired.

To analyze the time complexity of $\AffSample$, note that the most time-intensive computations at each round $h$ are the computation of $p^{(h)}$, which takes time $\poly(2^h, 2^k)$, and the construction of random vectors $(F_i)_i, (G_j)_j$, which take time $\poly(2^h, 2^k)$ by \cref{lemma:cond-dist-linearization}. Furthermore, computation of $U^h, v^h$ takes time $\poly(h,k,n)$. 
\end{proof}

Using \cref{lemma:uv-construction}, we can prove the following technical lemma, which describes a reduction from standard LPN to batch LPN where the exponential dependence on the batch size can be mitigated if the joint noise distribution has certain structure \--- as is the case for the triangle LPN distribution.

\begin{lemma}\label{lemma:large-batch-lpn}
There is an algorithm $\Alg$ with the following property. Fix $n,k,H,N \in \NN$, $\delta\in(0,1/2)$, and a function $q: \BF_2^k \to \Delta(\BF_2^H)$ such that \[\Prr_{W\sim q(z)}[W_h = 1 | W_{<h} = w_{<h}] \in [1/2 - 1/2^{H+k+2}, 1/2 + 1/2^{H+k+2}]\] for all $z \in \BF_2^k$, $h \in [H]$, and $w \in \BF_2^H$. For each $z \in \BF_2^k$, define $\til{q}(z) \in \Delta(\BF_2^{H\times N})$ as the distribution of $(e_{ij}+X_i)_{i,j=1}^{H,N}$ where $X \sim q(z)$ and $(e_{ij})_{i,j}^{H,N} \sim \Ber(1/2-\delta)^{\otimes (H \times N)}$ are independent. Fix any $\sk,a_1,\dots,a_k \in \BF_2^n$ and $y_1,\dots,y_k \in \BF_2$. For independent samples $(a_{ij},y_{ij})_{i,j=1}^{H,N}$ from $\LPN_{n,\delta}(\sk)$, 
the output of $\Alg(q,(a_i,y_i)_{i=1}^k, (a_{ij},y_{ij})_{i,j=1}^{H,N})$ is $(a'_{ij},y'_{ij})_{i,j=1}^{H,N}$ where 
\[(a'_{ij},y'_{ij} - \langle a'_{ij}, \sk\rangle)_{i,j=1}^{H,N} \sim \Unif(\BF_2^n)^{\otimes (H \times N)} \times \til{q}((y_i-\langle a_i,\sk\rangle)_{i=1}^k).\]
Moreover, the time complexity of $\Alg'$ is $\poly(n,N,2^k,2^H)$.
\end{lemma}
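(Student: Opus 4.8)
The plan is to combine \cref{lemma:uv-construction} with the trivial observation that once the latent bits $X = (X_1,\dots,X_H)$ are fixed, generating a full batch of $HN$ LPN samples with joint noise $\til q(z)$ is easy using $HN$ \emph{fully independent} LPN samples with bias $\delta$. Concretely, the algorithm $\Alg$ first runs $\AffSample(q, (a_i,y_i)_{i=1}^k)$ (from \cref{lemma:uv-construction}) to obtain $(U^i,v^i)_{i=1}^H$, where $U^i \in \BF_2^n$ and $v^i \in \BF_2$, with the guarantee that $(v^h - \langle U^h,\sk\rangle)_{h=1}^H$ is distributed according to $q((y_i-\langle a_i,\sk\rangle)_{i=1}^k)$. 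Then, for each $(i,j) \in [H]\times[N]$, it sets
\[a'_{ij} := a_{ij} + U^i, \qquad y'_{ij} := y_{ij} + v^i.\]
I would then verify that this output has the claimed distribution. The key point is that the four collections of random variables $\{(a_{ij})_{i,j}\}$, $\{(y_{ij} - \langle a_{ij},\sk\rangle)_{i,j}\}$, $\{(U^i)_i\}$, and $\{(v^i - \langle U^i,\sk\rangle)_i\}$ are mutually independent: the first two because the input LPN samples are fresh and independent of the first $k$ samples fed to $\AffSample$, and the last two by the structure of $\AffSample$'s output (this is where the guarantee of \cref{lemma:uv-construction} that the \emph{affine-evaluated} quantity has a prescribed law, independent of the representation, is crucial). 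Conditioning on $(U^i,v^i)_{i=1}^H$: since $a_{ij} \sim \Unif(\BF_2^n)$ is independent of everything, $a'_{ij} = a_{ij} + U^i \sim \Unif(\BF_2^n)$, and these are independent across $(i,j)$. Conditioning further on $(a'_{ij})_{i,j}$, we have
\[y'_{ij} - \langle a'_{ij},\sk\rangle = (y_{ij} - \langle a_{ij},\sk\rangle) + (v^i - \langle U^i,\sk\rangle),\]
where the two summands are independent, the first $\sim \Ber(1/2-\delta)$, and the second equal to $X_i$ for $X \sim q((y_i-\langle a_i,\sk\rangle)_{i=1}^k)$. Hence, writing $e_{ij} := y_{ij} - \langle a_{ij},\sk\rangle$, the joint distribution of $(y'_{ij} - \langle a'_{ij},\sk\rangle)_{i,j}$ is exactly that of $(e_{ij} + X_i)_{i,j}$ with $e \sim \Ber(1/2-\delta)^{\otimes(H\times N)}$ independent of $X \sim q((y_i-\langle a_i,\sk\rangle)_{i=1}^k)$, which is precisely $\til q((y_i - \langle a_i,\sk\rangle)_{i=1}^k)$ by definition. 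Combined with the uniformity and independence of the $a'_{ij}$, this gives the claimed law $\Unif(\BF_2^n)^{\otimes(H\times N)} \times \til q((y_i-\langle a_i,\sk\rangle)_{i=1}^k)$.

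For the time complexity, the only nontrivial call is to $\AffSample$, which runs in time $\poly(n,2^k,2^H)$ by \cref{lemma:uv-construction}; the remaining $O(HN)$ vector additions over $\BF_2^n$ take time $\poly(n,N,H)$. So the overall running time is $\poly(n,N,2^k,2^H)$ as claimed.

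I do not anticipate a genuine obstacle here: the lemma is essentially bookkeeping on top of \cref{lemma:uv-construction}, which does the real work (the linearization/perturbation argument for sampling the affine representation of the latent distribution). The one point requiring care is the mutual-independence claim among the four collections of random variables, and in particular that conditioning on $(a'_{ij})_{i,j}$ does not disturb the conditional law of $(v^i - \langle U^i,\sk\rangle)_i$ — this follows because $(a'_{ij})_{i,j}$ is a function of $(a_{ij})_{i,j}$ and $(U^i)_i$ only, and $(a_{ij})_{i,j}$ is independent of everything else while $(U^i)_i$ is, by the structure of $\AffSample$, independent of $(v^i - \langle U^i,\sk\rangle)_i$. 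I would state this explicitly mirroring the analogous step in the proof of \cref{lemma:construct-corr-lpn}.
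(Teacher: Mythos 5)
Your construction and analysis coincide with the paper's proof: run $\AffSample$ on $(a_i,y_i)_{i=1}^k$, then output $a'_{ij} := a_{ij} + U^i$, $y'_{ij} := y_{ij} + v^i$, and argue that $(a'_{ij})_{i,j}$ is uniform while, conditionally on $(a'_{ij})_{i,j}$, the noise terms $y'_{ij} - \langle a'_{ij},\sk\rangle = (y_{ij}-\langle a_{ij},\sk\rangle) + (v^i - \langle U^i,\sk\rangle)$ have law $\til q((y_i-\langle a_i,\sk\rangle)_{i=1}^k)$. So the approach is the same and the conclusion is reached the same way.

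One justification you give is stronger than what \cref{lemma:uv-construction} provides and is, in general, false: you assert that $(U^i)_{i}$ is independent of $(v^i - \langle U^i,\sk\rangle)_{i}$ ``by the structure of $\AffSample$''. The lemma only pins down the marginal law of $(v^h - \langle U^h,\sk\rangle)_{h}$; both $U^h$ and $v^h - \langle U^h,\sk\rangle$ are built from the same random coefficients $F^{(h)}, G^{(h)}$, whose joint law from \cref{lemma:cond-dist-linearization} need not be a product. Already for $k=H=1$ with $a_1 \neq 0$, observing $U^1 = F_1 a_1$ reveals $F_1$, which can bias the conditional law of $v^1 - \langle U^1,\sk\rangle = F_0 + F_1(y_1 - \langle a_1,\sk\rangle)$. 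Fortunately this independence is never needed: the correct (and sufficient) statement, which is the one the paper uses, is that conditioned on the \emph{entire} output $(U^i,v^i)_{i=1}^H$ of $\AffSample$ (and on the fresh noise terms $(y_{ij}-\langle a_{ij},\sk\rangle)_{i,j}$), the vectors $(a'_{ij})_{i,j}$ are i.i.d.\ uniform because the $(a_{ij})_{i,j}$ are; hence $(a'_{ij})_{i,j}$ is independent of the pair $\bigl((U^i,v^i)_i,\ (y_{ij}-\langle a_{ij},\sk\rangle)_{i,j}\bigr)$ as a whole, so conditioning on $(a'_{ij})_{i,j}$ leaves the joint law of $(v^i-\langle U^i,\sk\rangle)_i$ and $(y_{ij}-\langle a_{ij},\sk\rangle)_{i,j}$ (which are mutually independent since $\AffSample$ never touches the fresh samples) unchanged. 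Also note that your phrase ``conditioning further on $(a'_{ij})$'' after having conditioned on $(U^i,v^i)_i$ is internally inconsistent — under that conditioning $(v^i-\langle U^i,\sk\rangle)_i$ is deterministic — so the write-up should condition only on $(a'_{ij})_{i,j}$ at that step, exactly as above.
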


\begin{proof}
The algorithm $\Alg$ first applies \cref{lemma:uv-construction} with inputs $q$ and $(a_i,y_i)_{i=1}^k$, to construct random vectors $U^1,\dots,U^H \in \BF_2^n$ and random variables $v^1,\dots, v^H \in \BF_2$ such that the distribution of $(v^h - \langle U^h, \sk\rangle)_{h=1}^H$, conditioned on $(a_i,y_i)_{i=1}^k$, is precisely $q((y_i-\langle a_i,\sk\rangle)_{i=1}^k)$. This is possible in time $\poly(n, 2^k, 2^H)$ by \cref{lemma:uv-construction}. Then $\Alg$ outputs $(a'_{ij},y'_{ij})_{i,j=1}^{H,N}$ where 
\[ a'_{ij} := a_{ij} + U^i\] 
\[ y'_{ij} := y_{ij} + v^i\]
for $(i,j) \in [H]\times [N]$. To see why this suffices, note that $(a'_{ij})_{i,j=1}^{H,N} \sim \Unif(\BF_2^n)^{\otimes (H\times N)}$ since $(a_{ij})_{i,j=1}^{H,N}$ is uniform and independent of $(U^i)_{i=1}^H$. Next, for any $(i,j) \in [H]\times[N]$, we can write
\[y'_{ij}-\langle a'_{ij}, \sk\rangle = y_{ij} - \langle a_{ij},\sk\rangle + v^i - \langle U^i, \sk\rangle.\] 
We know that $(U^i, v^i)_{i=1}^H$ is independent of $(a'_{ij})_{i,j=1}^{H,N}$, so conditioned on $(a'_{ij})_{i,j=1}^{H,N}$, the distribution of $(v^i - \langle U^i,\sk\rangle)_{i=1}^H$ is still $q((y_i-\langle a_i,\sk\rangle)_{i=1}^k)$. Moreover, conditioned on all of $(a'_{ij})_{i,j=1}^{H,N}$ and $(U^i,v^i)_{i=1}^H$, the distribution of $(y_{ij} - \langle a_{ij},\sk\rangle)_{i,j=1}^{H,N}$ is still $\Ber(1/2-\delta)^{\otimes (H\times N)}$. This implies in particular that $(y_{ij} - \lng a_{ij}, \sk \rng )_{i,j=1}^{H,N}$ and $(v^i - \lng U^i, \sk \rng )_{i=1}^H$ are independent conditioned on $(a_{ij}')_{i,j=1}^{H,N}$. Thus, conditioned on $(a'_{ij})_{i,j=1}^{H,N}$, the distribution of $(y'_{ij}-\langle a'_{ij}, \sk\rangle)_{i,j=1}^{H,N}$ is $\til{q}((y_i-\langle a_i,\sk\rangle)_{i=1}^k)$, by definition of $\til{q}$. It follows that the joint distribution of $(a'_{ij},y'_{ij} - \langle a'_{ij}, \sk\rangle)_{i,j=1}^{H,N}$ is $\Unif(\BF_2^n)^{\otimes (H \times N)} \times \til{q}((y_i-\langle a_i,\sk\rangle)_{i=1}^k)$ as claimed.

\end{proof}

\subsection{Hardness of triangle LPN}\label{sec:triangle-lpn-hardness}

We now apply the generic reductions from \cref{sec:dependent-lpn} to prove the following lemma, which states that a sample from $\LPN^{\mathsf{tri}}_{n,N,H,\delta}$ can be efficiently constructed from standard LPN samples with noise level $1/2 - O_H(\delta^2)$. Recall the definitions of $\mu_{H,N,\delta}$ and $\Gamma(H)$ from \cref{def:mu-defn}.

\begin{lemma}\label{lemma:triangle-lpn}
There is an algorithm $\TriAlg$ with the following property. Fix $n,H,N \in \NN$ and $\delta \in (0, 1/2^{|\Gamma(H)|+2H+7})$. Define $C_{\ref{lemma:triangle-lpn}}(H) := 2^{|\Gamma(H)|+2H+6}$. For every $\sk \in \BF_2^n$, for independent samples $(a_i,y_i)_{i=1}^{|\Gamma(H)| \cdot (N+1)}$ from $\LPN_{n,C_{\ref{lemma:triangle-lpn}}(H)\delta^2}(\sk)$, the output \[(a'_{ijk},y'_{ijk})_{(i,j,k) \in \Gamma(H)\times [N+1]} := \TriAlg((a_i,y_i)_{i=1}^{|\Gamma(H)|\cdot (N+1)},H,N,\delta)\] satisfies
\[(a'_{ijk},y'_{ijk} - \langle a'_{ijk},\sk\rangle) \sim \Unif(\BF_2^n)^{\otimes \Gamma(H) \times [N+1]} \times \mu_{H,N,\delta}\]
and thus has distribution $\LPN^{\mathsf{tri}}_{n,N,H,\delta}$. Moreover, the time complexity of $\TriAlg$ is $\poly(n, N,2^{H^2})$.
\end{lemma}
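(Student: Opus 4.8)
I would build $\TriAlg$ in two stages, following the ``Intermission'' discussion preceding \cref{sec:triangle-lpn-hardness}: first produce the $k := |\Gamma(H)|$ samples of the ``first slice'' $(a'_{ij1},y'_{ij1})_{(i,j)\in\Gamma(H)}$, whose joint noise is the correlated distribution $\mu^0_{H,\delta}$ of \cref{def:mu-defn}; then, conditioned on that slice, produce the remaining $|\Gamma(H)|\cdot N$ samples, which only need fresh $\Ber(1/2-2\delta^2)$ noise plus the latent masks $b_1,\dots,b_H$. At the outset $\TriAlg$ splits its $|\Gamma(H)|(N+1)$ input samples (noise level $1/2-C_{\ref{lemma:triangle-lpn}}(H)\delta^2$) into a block of $|\Gamma(H)|$ and a block of $|\Gamma(H)|N$, and whenever a stage needs samples at a smaller bias it degrades the relevant block by adding fresh independent noise, which is legal by \cref{lem:bernoulli-convolve} since $C_{\ref{lemma:triangle-lpn}}(H)\delta^2$ is the largest bias that occurs anywhere in the construction.

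\textbf{Stage 1: the first slice.} Fix an ordering of $\Gamma(H)$ in which each column is contiguous — reveal $Z_{jj},Z_{j+1,j},\dots,Z_{Hj}$ for $j=1,\dots,H$. The key self-contained claim is that, under this ordering, $\mu^0_{H,\delta}$ is a $\delta_0$-Santha--Vazirani source (\cref{def:sv-source}) with $\delta_0\le 2^{2H+4}\delta^2$. To see this, use the mixture form of $\CBer$: $Z_{ij}=e_{ij}+b_j$ where each row $e_i$ is the all-zero vector (probability $2\delta$) or uniform. The first entry $Z_{jj}$ of each column has conditional bias exactly $0$, since $b_j$ is fresh and independent of all previously revealed variables. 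For $i>j$, conditioned on any prefix, $Z_{ij}$ is the sum of $e_{ij}$ (a row-$i$ variable) and $b_j$, which are \emph{conditionally independent} because no dependency path between them survives the conditioning; moreover $b_j$ has conditional bias $O_H(\delta)$ (it has been observed at most $H$ times through bias-$\le\delta$ noise) and $e_{ij}$ has conditional bias $O_H(\delta)$ (intra-row correlations are only learnable through channels masked by already weakly-biased $b$'s), so \cref{lem:bernoulli-convolve} gives conditional bias $O_H(\delta^2)$. With this, $\TriAlg$ calls $\EntLPN$ (\cref{lemma:construct-corr-lpn}) on $p=\mu^0_{H,\delta}$ with batch size $k=|\Gamma(H)|$ and parameter $\delta_0$, feeding it the first input block degraded to noise level $1/2-2^{k+2}\delta_0$; this is possible because $2^{k+2}\delta_0\le 2^{|\Gamma(H)|+2H+6}\delta^2=C_{\ref{lemma:triangle-lpn}}(H)\delta^2$, and $\delta_0<1/2^{k+3}$ as required by \cref{lemma:construct-corr-lpn} thanks to the hypothesis $\delta<1/2^{|\Gamma(H)|+2H+7}$. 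The output $(a'_{ij1},y'_{ij1})_{(i,j)}$ then has exactly the $k=1$ marginal of $\LPN^{\mathsf{tri}}_{n,N,H,\delta}(\sk)$.

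\textbf{Stage 2: the remaining samples.} Let $q_{\mathrm{post}}:\BF_2^{\Gamma(H)}\to\Delta(\BF_2^H)$ send a slice-noise pattern $z$ to the posterior marginal of $(b_1,\dots,b_H)$ given $(e_{ij}+b_j)_{(i,j)}=z$; it is explicit, with density proportional to $\prod_i\big(2\delta\,\One[\beta_{1:i}=z_{i,1:i}]+(1-2\delta)2^{-i}\big)$, and efficiently tabulable. The crucial observation is that in $\mu_{H,N,\delta}$ the ``rest'' $(e'_{ij\ell}+b_j)_{(i,j)\in\Gamma(H),\ell\in[N]}$ is a function only of $b$ and of fresh $\Ber(1/2-2\delta^2)$ noise that is independent of the first slice; hence, conditioned on the realized first slice with noise pattern $z$, the rest is distributed as ``draw $b\sim q_{\mathrm{post}}(z)$, then add fresh noise.'' A second self-contained bound shows $q_{\mathrm{post}}$ satisfies the hypothesis of \cref{lemma:uv-construction} (and of \cref{lemma:large-batch-lpn}), i.e.\ its conditional biases are $\le 1/2^{H+|\Gamma(H)|+2}$: the conditional bias of each $b_h$ is $O_H(\delta)$ by the same ``few noisy observations'' reasoning as in Stage 1, and $O_H(\delta)\le 1/2^{H+|\Gamma(H)|+2}$ under the hypothesis on $\delta$. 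Then $\TriAlg$ runs $\AffSample$ (\cref{lemma:uv-construction}) with $q=q_{\mathrm{post}}$, batch size $|\Gamma(H)|$, dimension $H$, on the Stage-1 samples, obtaining $(U^h,v^h)_{h=1}^H$ with $(v^h-\langle U^h,\sk\rangle)_h\sim q_{\mathrm{post}}(\text{realized slice noise})$; and for each of the $|\Gamma(H)|\cdot N$ entries $(i,j,k)$ with $k\ge 2$ it takes one fresh sample $(\tilde a,\tilde y)$ from the second input block degraded to noise level $1/2-2\delta^2$ and outputs $(a'_{ijk},y'_{ijk}):=(\tilde a+U^{j},\tilde y+v^{j})$, so that $y'_{ijk}-\langle a'_{ijk},\sk\rangle$ is a fresh $\Ber(1/2-2\delta^2)$ bit plus $b_j$. (Equivalently one may invoke the packaged \cref{lemma:large-batch-lpn}.) Concatenating the Stage-1 and Stage-2 outputs and integrating over the Stage-1 randomness yields the joint law $\Unif(\BF_2^n)^{\otimes\Gamma(H)\times[N+1]}\times\mu_{H,N,\delta}=\LPN^{\mathsf{tri}}_{n,N,H,\delta}(\sk)$, and the running time is $\poly(n,N,2^{|\Gamma(H)|},2^H)=\poly(n,N,2^{H^2})$ by the complexity bounds on $\EntLPN$ and $\AffSample$ plus $\poly(n,N)$ for the degradations and for tabulating $q_{\mathrm{post}}$.

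\textbf{Main obstacle.} The heart of the argument is the two Santha--Vazirani-type estimates — that $\mu^0_{H,\delta}$ is a $2^{2H+4}\delta^2$-SV source and that $q_{\mathrm{post}}$ has conditional biases below $2^{-H-|\Gamma(H)|-2}$ — which must hold \emph{uniformly over all conditioning patterns}, including atypical ones in which many rows of $e$ happen to look ``clean.'' Doing the bias bookkeeping tightly enough to fit inside the stated constants (a cruder accounting still gives $\Theta_H(\delta^2)$ and $\Theta_H(\delta)$ but with a worse exponent in $H$, which would merely force a correspondingly larger $C_{\ref{lemma:triangle-lpn}}(H)$) is the delicate step; once the two estimates are in hand, the rest is matching index sets and invoking \cref{lemma:construct-corr-lpn}, \cref{lemma:uv-construction}, and \cref{lem:bernoulli-convolve}.
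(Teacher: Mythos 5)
Your two-stage architecture is exactly the paper's: Stage 1 is \cref{lemma:triangle-lpn-0} (i.e., $\EntLPN$ from \cref{lemma:construct-corr-lpn} applied to $p=\mu^0_{H,\delta}$ with batch size $k=|\Gamma(H)|$ and SV parameter $2^{2H+4}\delta^2$), and Stage 2 is \cref{lemma:large-batch-lpn}/\cref{lemma:uv-construction} applied to the posterior of $(b_1,\dots,b_H)$ given the realized first slice, with fresh $\Ber(1/2-2\delta^2)$ noise obtained by degrading the remaining input samples via \cref{lem:bernoulli-convolve}. Your parameter matching is also right: $2^{k+2}\cdot 2^{2H+4}\delta^2=C_{\ref{lemma:triangle-lpn}}(H)\delta^2$, the check $2^{2H+4}\delta^2<2^{-(k+3)}$ from the hypothesis on $\delta$, and the $\poly(n,N,2^{H^2})$ runtime all line up with the paper.

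The gap is in the one place where the lemma has real content: the two near-uniformity estimates, and in particular your Stage-1 claim that $\mu^0_{H,\delta}$ is a $2^{2H+4}\delta^2$-Santha--Vazirani source. Your justification asserts that, under the column-wise ordering, $e_{ij}$ and $b_j$ are \emph{conditionally independent} given the prefix "because no dependency path survives the conditioning," and then convolves two $O_H(\delta)$-biased bits. This is not sound: conditioning on revealed sums $Z_{i'j'}=e_{i'j'}+b_{j'}$ opens collider paths, e.g.\ $e_{ij}$ is coupled to $e_{ij'}$ within row $i$ (both rows of a $\CBer$ draw), $Z_{ij'}$ couples $e_{ij'}$ to $b_{j'}$, $Z_{i'j'}$ couples $b_{j'}$ to $e_{i'j'}$, row $i'$ couples $e_{i'j'}$ to $e_{i'j}$, and $Z_{i'j}$ couples $e_{i'j}$ to $b_j$; so after conditioning, $e_{ij}$ and $b_j$ are correlated, and the intermediate claim that $e_{ij}$ retains conditional bias $O_H(\delta)$ is also unsubstantiated. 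The true statement — that all these induced correlations surface only at order $2^{O(H)}\delta^2$ rather than $\delta$ — is precisely what \cref{lemma:z-near-unif} proves, and the paper does it not by an independence-plus-convolution argument but by writing the pmf of $\mu^0_{H,\delta}$ as an average over $b$ of an explicit product and directly bounding the ratio of probabilities of two configurations differing in one coordinate (pairing up $b$'s via the bit-flip bijection); note it even proves the stronger bound with conditioning on \emph{all} other coordinates, which implies your prefix/SV version. Your Stage-2 estimate (posterior bias of each $b_h$ given the slice is $O(2^H\delta)$) is asserted by the same analogy; it is the easier bound and is established in \cref{lemma:x-cond-near-unif}, again by explicit computation. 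So the reduction skeleton is correct, but what you defer as "delicate bookkeeping" is not bookkeeping — it is the proof of the lemma, and the specific shortcut you propose for it would fail.
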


As previously discussed in \cref{sec:dependent-lpn}, there are two steps to the construction. First, we apply \cref{lemma:construct-corr-lpn} to construct a batch LPN sample $(u_{ij1},y_{ij1})_{(i,j)\in\Gamma(H)}$ with noise distribution $\mu^0_{H,N,\delta}$. Second, we apply \cref{lemma:large-batch-lpn} to extend this to a full triangle LPN sample, by implicitly sampling the latent bits $b_1,\dots,b_H$ (\cref{def:mu-defn}) from the conditional distribution induced by $(u_{ij1},y_{ij1})_{(i,j)\in\Gamma(H)}$. 

For the first step, the main precondition that we have to verify is that the joint noise distribution $\mu^0_{H,\delta}$ is near-uniform. Quantitatively, we need that for any partial assignment of the noise variables, each unassigned noise variable has conditional distribution within $O_H(\delta^2)$ of uniform. This is the content of the following lemma.


\begin{lemma}\label{lemma:z-near-unif}
Let $H \in \NN$ and $\delta \in (0, 1/2^{H+3})$. Consider the random variable $Z \sim \mu^0_{H,\delta}$. For any $(\til i, \til j)\in\Gamma(H)$ and $(z_{ij})_{i,j} \in \BF_2^{\Gamma(H)}$, it holds that
\[\Pr[Z_{\til i \til j}=1 | Z_{ij} = z_{ij} \forall (i,j) \neq (\til i,\til j)] \in [1/2 - 2^{2H+4} \delta^2, 1/2 + 2^{2H+4} \delta^2].\]
\end{lemma}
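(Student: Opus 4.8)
Recall that $Z \sim \mu^0_{H,\delta}$ has the form $Z_{ij} = e_{ij} + b_j$ for $(i,j) \in \Gamma(H)$, where $b_1,\dots,b_H \sim \Ber(1/2)$ and $e_i \sim \CBer(i,\delta)$ are independent. The plan is to compute the conditional distribution of $Z_{\til i\til j}$ given all other coordinates by summing over the latent variables. The key structural fact, from \cref{def:cber}, is that each $\CBer(i,\delta)$ is a mixture: with probability $2\delta$, $e_i = \mathbf{0}$ (``frozen''), and with probability $1-2\delta$, $e_i \sim \Ber(1/2)^{\otimes i}$ (``fresh''). Let $\mathsf{F} \subseteq [H]$ denote the random set of rows $i$ for which $e_i$ is frozen; then $\Pr[\mathsf{F} = S] = (2\delta)^{|S|}(1-2\delta)^{H-|S|}$.

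First I would observe that conditioned on $\mathsf{F}$ and on the latent bits $(b_j)_j$, the coordinates $Z_{ij}$ decouple: if $i \in \mathsf{F}$ then $Z_{ij} = b_j$ deterministically, while if $i \notin \mathsf{F}$ then $Z_{ij} \sim \Ber(1/2)$ independently. So the event $\{Z_{ij} = z_{ij} \ \forall (i,j) \neq (\til i,\til j)\}$ has conditional probability (given $\mathsf F$ and $b$) equal to $2^{-m}$ times an indicator that all frozen-row constraints $z_{ij} = b_j$ are satisfied, where $m$ is the number of observed non-frozen coordinates. The point is that the target coordinate $Z_{\til i\til j}$ is uniform given $(\mathsf F, b)$ on the event $\til i \notin \mathsf F$, and equals $b_{\til j}$ on the event $\til i \in \mathsf F$. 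Hence the conditional bias of $Z_{\til i\til j}$ is governed entirely by the posterior probability $\Pr[\til i \in \mathsf F \mid \text{observations}]$ together with the posterior on $b_{\til j}$ restricted to that event — and $\Pr[\til i \in \mathsf F] = 2\delta$ a priori. I would then bound the posterior probability of the ``bad'' events (some row frozen) by a crude ratio argument: the numerator involves at least one factor of $2\delta$, and the denominator (the full marginal likelihood of the observed pattern) is bounded below by the contribution of the all-fresh configuration $\mathsf F = \emptyset$, which has prior weight $(1-2\delta)^H \geq 1 - 2H\delta \geq 1/2$ and likelihood $2^{-|\Gamma(H)| + 1}$ (all observed coordinates uniform).

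Carrying this out: write $\Pr[Z_{\til i\til j} = 1 \mid \text{obs}] = \tfrac12 \Pr[\til i \notin \mathsf F \mid \text{obs}] + \Pr[b_{\til j} = 1, \til i \in \mathsf F \mid \text{obs}]$. Since $\Pr[\til i \notin \mathsf F \mid \text{obs}] = 1 - \Pr[\til i \in \mathsf F \mid \text{obs}]$, this equals $\tfrac12 + (\Pr[b_{\til j}=1,\til i \in \mathsf F \mid \text{obs}] - \tfrac12 \Pr[\til i \in \mathsf F \mid \text{obs}])$, whose absolute deviation from $\tfrac12$ is at most $\tfrac12 \Pr[\til i \in \mathsf F \mid \text{obs}]$. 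So it suffices to show $\Pr[\til i \in \mathsf F \mid \text{obs}] \leq 2^{2H+5}\delta^2$. By Bayes, this posterior is $\sum_{S \ni \til i} (2\delta)^{|S|}(1-2\delta)^{H-|S|} L(S) \big/ \sum_{S} (2\delta)^{|S|}(1-2\delta)^{H-|S|} L(S)$, where $L(S) \in \{0, 2^{-(\text{\# observed non-frozen coords under }S)}\}$ is the conditional likelihood of the observations (zero when a frozen-row equality constraint is violated). Every term in the numerator carries $|S| \geq 1$ with $\til i \in S$, but more importantly each such $S$ also forces all of row $\til i$'s observed entries to equal $b_{\til j}$'s — however I only need the crude bound $(2\delta)^{|S|} \leq 2\delta$ for the leading factor and then $|S| \geq 1$ gives another... actually I need \emph{two} powers of $\delta$. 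The cleanest route: split on whether $|S| = 1$ (so $S = \{\til i\}$) or $|S| \geq 2$. For $|S| \geq 2$ the term is at most $(2\delta)^2 \cdot 2^{-(|\Gamma(H)| - |\Gamma(H)\cap(\{\text{cols of }S\}\times\cdots)|)} \le (2\delta)^2 \cdot 2^H$ crudely (at most $H$ observed coordinates are in frozen rows, $L(S) \le 2^{-(|\Gamma(H)|-1-H)}$, and $L(\emptyset) = 2^{-(|\Gamma(H)|-1)}$, giving ratio $\le 2^{H+1}$). For $S = \{\til i\}$, the likelihood $L(\{\til i\})$ is nonzero only if all $\til i - 1$ other observed entries in column... wait, row $\til i$ has entries $Z_{\til i, j}$ for $j \le \til i$, i.e. $\til i - 1$ of them besides the target — each forced to equal $b_j$. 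I would argue $L(\{\til i\})/L(\emptyset) \le 2^H$ as well, and crucially pick up the second power of $\delta$ from the fact that conditioning on the observations in the \emph{other} coordinates of column $\til j$ (rows $i \neq \til i$ with $i \le \til j$ or $i \ge \til j$...) — hmm, this is where I need to be careful, because a single-frozen-row configuration genuinely only has one power of $\delta$.

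The main obstacle, and the crux of the whole lemma, is exactly this: naively $\Pr[\til i \in \mathsf F \mid \text{obs}]$ is of order $\delta$, not $\delta^2$, from the $S = \{\til i\}$ term. The resolution must be that on the event $\til i \in \mathsf F$, the target $Z_{\til i\til j} = b_{\til j}$, and \emph{also} the posterior on $b_{\til j}$ itself is close to uniform — so the two near-$\tfrac12$ quantities combine. Concretely I expect the right decomposition is: deviation of $\Pr[Z_{\til i \til j} = 1 \mid \text{obs}]$ from $\tfrac12$ equals $\big| \Pr[b_{\til j} = 1 \mid \til i \in \mathsf F, \text{obs}] - \tfrac12\big| \cdot \Pr[\til i \in \mathsf F \mid \text{obs}]$, plus a term from non-$\mathsf F$-rows if the non-frozen part of $Z_{\til i \til j}$ could be biased — but it can't, $\Ber(1/2)$ is exactly uniform. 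So the whole deviation is $\le \big| \Pr[b_{\til j}=1 \mid \til i \in \mathsf F,\text{obs}] - \tfrac12\big| \cdot \Pr[\til i \in \mathsf F \mid \text{obs}] \le (2^{H+?}\delta)(2^{H+?}\delta) = 2^{O(H)}\delta^2$, where the first factor is bounded by showing that even conditioned on $\til i$ being frozen, some \emph{other} row in column $\til j$ is fresh with probability $\ge 1 - 2^{H+1}\delta$ and on that event $b_{\til j}$ is masked, and the second factor is bounded by the $S=\{\til i\}$-vs-$\emptyset$ ratio $\le 2^H$ times $2\delta$. I would write this out carefully via Bayes with the partition of the posterior into $\{\til i \in \mathsf F\}$ and its complement, and within $\{\til i \in \mathsf F\}$ a further partition on whether any other row $i' \le \til j$ or the unique structure of $\Gamma(H)$ guarantees a fresh row sharing column $\til j$. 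Verifying that $\Gamma(H)$ always contains at least one pair $(i',\til j)$ with $i' \neq \til i$ whenever $\til j < H$ — and handling the edge case $\til j = H$ where only row $H$ touches column $H$ — will require a small case analysis, but the constant $2^{2H+4}$ in the statement is generous enough to absorb all the crude bounds.
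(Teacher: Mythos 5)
Your final decomposition is correct, and it organizes the argument genuinely differently from the paper. The paper proves the lemma by a direct likelihood computation: it writes $\Pr[Z=z]$ as $\frac{1}{2^H}\sum_{b}\prod_{i=1}^H\bigl(2\delta\,\mathbbm{1}[z_{ij}=b_j\ \forall j\le i]+(1-2\delta)2^{-i}\bigr)$, compares $z$ with the string $z'$ obtained by flipping the target coordinate, isolates the row-$\til i$ factor (whose coefficient $\tfrac{2\delta}{1-2\delta}2^{\til i}$ supplies the first power of $\delta$), and then pairs each surviving $b$ with the string obtained by flipping $b_{\til j}$; the paired terms differ only through the product over the remaining rows, which lies in $[1,1+\delta 2^{H+3}]$, giving the second power of $\delta$. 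Your route is the probabilistic counterpart: conditioning on the frozen set $\mathsf{F}$, the fresh-row case contributes exactly $1/2$, so the deviation equals $\bigl|\Pr[b_{\til j}=1\mid \til i\in\mathsf{F},\mathrm{obs}]-\tfrac12\bigr|\cdot\Pr[\til i\in\mathsf{F}\mid\mathrm{obs}]$ \emph{exactly}, and you bound each factor by a Bayes-ratio estimate against the dominant configuration (all rows fresh, resp.\ only row $\til i$ frozen). The two powers of $\delta$ have the same provenance in both proofs (row $\til i$ frozen; $b_{\til j}$ nearly unbiased given everything else, which is also what the paper's bit-flip pairing encodes), but your factorization is an exact identity and arguably more transparent, at the cost of a Bayes computation over frozen-set configurations instead of a one-line pairing.

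Two loose ends to fix when writing it up. First, the intermediate claim in your middle paragraph that $\Pr[\til i\in\mathsf{F}\mid\mathrm{obs}]\le 2^{2H+5}\delta^2$ is a remnant of the abandoned first attempt and is false: that posterior is genuinely of order $\delta$ (the configuration $\mathsf{F}=\{\til i\}$ has the \emph{same} likelihood as $\mathsf{F}=\emptyset$, since pinning the $\til i-1$ observed bits of row $\til i$ costs a factor $2^{-(\til i-1)}$ in $b$ that exactly offsets the saved per-coordinate factors of $\tfrac12$); only the product of your two factors is $O(\delta^2)$. Second, your ``crude'' accounting needs one sharpening: bounding the likelihood ratio by $2^{H}$ per frozen row and summing over sets with binomial counting gives a series with ratio roughly $H\,2^{H+1}\delta$, which the hypothesis $\delta<2^{-(H+3)}$ does \emph{not} make less than $1$. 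Use instead $L(S)/L(\emptyset)\le 2^{\,1+\sum_{i\in S}i-\max(S)}$ together with a product bound such as $\sum_{T}(2\delta)^{|T|}2^{\sum_{i\in T}i}=\prod_{i}(1+2\delta 2^{i})\le e^{1/2}$ under $\delta<2^{-(H+3)}$. With that, the frozen-row posterior is $O(\delta)$ and the conditional bias of $b_{\til j}$ is $O(2^{H}\delta)$ (the second-frozen-row terms carry prior $(2\delta)^2$ and likelihood boost at most $2^{H}$ against the denominator's $\Theta(\delta)$ mass on $\mathsf{F}=\{\til i\}$), so the product fits inside $2^{2H+4}\delta^2$; the edge case $(\til i,\til j)=(H,H)$ is trivial since then no observed entry involves $b_H$ and the deviation is exactly $0$.
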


It's crucial that the bias be $O_H(\delta^2)$ and not $O_H(\delta)$. To give some intuition for why this stronger guarantee holds, recall from \cref{def:mu-defn} that a sample from $\mu^0_{H,\delta}$ is distributed as the partial random matrix
\[\begin{matrix}
    e_{11} + B_1 & e_{21} + B_1 & \dots & e_{H1} + B_1 \\ 
    & e_{22} + B_2 & \dots & e_{H2} + B_2 \\ 
    & & \ddots & \vdots \\ 
    & & & e_{HH} + B_H
\end{matrix}\]
where $B_1,\dots,B_H \sim \Ber(1/2)$ are independent, and for each column $j$, the random vector $(e_{j1},\dots,e_{jj})$ has distribution $\CBer(j, \delta)$ (\cref{def:cber}). Suppose that we want to guess the value of e.g. $Z_{H1} := e_{H1} + B_1$, and we know the values of $Z_{ij} = e_{ij} + B_j$ for all $(i,j) \neq (H,1)$. Intuitively, one of the best things we can do is guess $e_{i1} + B_1$ for some $i \neq H$, which has error $e_{i1} + e_{H1} \sim \Ber(1/2 - 2\delta^2)$ since $e_{i1}$ and $e_{H1}$ are independent. We could improve the guess somewhat by taking the majority of $Z_{11},\dots,Z_{H-1,1}$, but the error would still be $1/2 - O_H(\delta^2)$.

Alternately, we could try to make a guess with error e.g. $e_{H1} + e_{H2}$, which has distribution $\Ber(1/2-\delta)$ since $e_{H1}$ and $e_{H2}$ are correlated. However, each entry in column $H$ has a different uniformly random bit $B_i$, so it seems impossible to ``cancel out'' these bits to make such a guess. Of course, this is just intuition; we give a formal proof below.


\begin{namedproof}{\cref{lemma:z-near-unif}}
Recall that $Z_{ij} = e_{ij} + B_j$ where all $e_i \sim \CBer(i,\delta)$ and $B_j \sim \Ber(1/2)$ are independent. Thus, for any $z \in \BF_2^{\Gamma(H)}$, we have
\begin{align*}
\Pr[Z=z]
&= \frac{1}{2^H} \sum_{b \in \BF_2^H} \Pr[Z=z|B=b] \\
&= \frac{1}{2^H} \sum_{b \in \BF_2^H} \prod_{i=1}^H \left(2\delta \mathbbm{1}[z_{ij} = b_j \forall j \leq i] + (1-2\delta)2^{-i}\right) \\ 
&= \frac{(1-2\delta)^H}{2^H 2^{\binom{H+1}{2}}} \sum_{b \in \BF_2^H} \prod_{i=1}^H \left(\frac{2\delta}{1-2\delta} 2^i \mathbbm{1}[z_{ij}=b_j \forall j \leq i] + 1 \right).
\end{align*}
Thus, we get the following bounds:
\begin{equation} \frac{(1-2\delta)^H}{2^{\binom{H+1}{2}}} \leq \Pr[Z=z] \leq \frac{(1-2\delta)^H}{2^{\binom{H+1}{2}}} \cdot \exp(4\delta 2^{H+1}) \leq \frac{e(1-2\delta)^H}{2^{\binom{H+1}{2}}}\label{eq:z-upper-lower}\end{equation}
where the last inequality uses the assumption that $\delta \leq 1/2^{H+3}$. Now fix some $1 \leq \til j \leq \til i \leq H$ and $z \in \BF_2^{\Gamma(H)}$ with $z_{\til i\til j} = 1$. Define $z' \in \BF_2^{\Gamma(H)}$ by \[z'_{ij} = \begin{cases} 0 & \text{ if } (i,j) = (\til i,\til j) \\ z_{ij} & \text{ otherwise}\end{cases}.\]
Then we can write\allowdisplaybreaks
\begin{align*}
&\left|\frac{1}{2} - \Pr[Z_{\til i\til j}=1|Z_{ij}=z_{ij}\forall (i,j) \neq (\til i,\til j)] \right| \\
&= \left|\frac{1}{2} - \frac{\Pr[Z=z]}{\Pr[Z=z] + \Pr[Z=z']}\right| \\
&= \left|\frac{\Pr[Z=z'] - \Pr[Z=z]}{2(\Pr[Z=z]+\Pr[Z=z'])}\right| \\ 
&\leq \frac{2^{\binom{H+1}{2}}}{2(1-2\delta)^H} \cdot \left|\Pr[Z=z'] - \Pr[Z=z]\right| \\ 
&\leq \frac{1}{2^{H+1}} \left|\sum_{b \in \BF_2^H} \left(\prod_{i=1}^H \left(\frac{2\delta}{1-2\delta} 2^i \mathbbm{1}[z'_{ij}=b_j \forall j \leq i] + 1 \right) -\prod_{i=1}^H \left(\frac{2\delta}{1-2\delta} 2^i \mathbbm{1}[z_{ij}=b_j \forall j \leq i] + 1 \right)\right)\right| \\ 
&= \frac{1}{2^{H+1}} \left|\sum_{b\in\BF_2^H} \left(\frac{2\delta}{1-2\delta} 2^{\til i} \mathbbm{1}[z'_{\til i j}=b_j\forall j \leq \til i] - \frac{2\delta}{1-2\delta}2^{\til i}\mathbbm{1}[z_{\til i j}=b_j\forall j \leq \til i]\right)\prod_{i \neq \til i} \left(\frac{2\delta}{1-2\delta} 2^i \mathbbm{1}[z_{ij}=b_j\forall j \leq i] + 1\right)\right|.
\end{align*}
Now for notational convenience, for each $b \in \BF_2^H$, define
\[ f(b) := \mathbbm{1}[z'_{\til i j}=b_j \forall j \leq \til i] - \mathbbm{1}[z_{\til ij}=b_j \forall j \leq \til i] \] 
\[ g(b) := \prod_{i \neq \til i} \left(\frac{2\delta}{1-2\delta} 2^i \mathbbm{1}[z_{ij}=b_j\forall j \leq i] + 1\right)\] 
so that the above bound can be restated as
\[\left|\frac{1}{2} - \Pr[Z_{\til i\til j}=1|Z_{ij}=z_{ij}\forall (i,j) \neq (\til i,\til j)] \right| \leq \frac{1}{2^{H+1}} \frac{2\delta}{1-2\delta} 2^{\til i} \left|\sum_{b\in\BF_2^H} f(b) g(b)\right|.\]
Let $S_+ := \{b \in \BF_2^H: b_j = z'_{\til i j} \forall j \leq \til i\}$ and $S_- := \{\til i \in \BF_2^H: b_j = z_{\til ij} \forall j \leq \til i\}$; observe that $S_+ = f^{-1}(1)$, and $S_- = f^{-1}(-1)$, and $f(b) = 0$ for all $b \in \BF_2^H \setminus (S_+ \cup S_-)$. Moreover, since $z$ and $z'$ differ only at coordinate $\til j$, the map $\iota: \BF_2^n \to \BF_2^n$ that flips coordinate $\til j$ is a bijection between $S_+$ and $S_-$. Finally, observe that for every $b \in \BF_2^H$,
\[ 1 \leq g(b) \leq \prod_{i \neq \til i} \exp(4\delta 2^i) \leq \exp(4\delta 2^{H+1}) \leq 1 + \delta 2^{H+3}\]
again using the assumption that $\delta \leq 1/2^{H+3}$. Combining these observations, we have
\begin{align*}
\left|\frac{1}{2} - \Pr[Z_{\til i\til j}=1|Z_{ij}=z_{ij}\forall (i,j) \neq (\til i,\til j)] \right| 
&\leq \frac{1}{2^{H+1}} \frac{2\delta}{1-2\delta} 2^{\til i} \left|\sum_{b\in S_+} g(b) - \sum_{b \in S_-} g(b) \right| \\ 
&= \frac{1}{2^{H+1}} \frac{2\delta}{1-2\delta} 2^{\til i} \left|\sum_{b\in S_+} g(b) - g(\iota(b)) \right| \\ 
&\leq \frac{1}{2^{H+1}} \frac{2\delta}{1-2\delta} 2^{\til i} \cdot |S_+| \delta 2^{H+3} \\ 
&\leq 2^{2H+4} \delta^2
\end{align*}
as desired. 
\end{namedproof}

It is immediate from the above lemma and \cref{lemma:construct-corr-lpn} that we can efficiently construct a batch LPN sample with joint noise distribution $\mu^0_{H,\delta}$ (\cref{def:mu-defn}) using standard LPN samples with noise level $1/2 - O_H(\delta^2)$:

\begin{lemma}\label{lemma:triangle-lpn-0}
 There is an algorithm $\Alg$ with the following property. Fix $n,H\in \NN$ and $\delta \in (0,1/2^{|\Gamma(H)|+H+3})$. For every $\sk \in \BF_2^n$, for independent samples $(a_i,y_i)_{i=1}^{|\Gamma(H)|}$ from $\LPN_{n,2^{|\Gamma(H)|+2H+6}\delta^2}(\sk)$, the output of $\Alg((a_i,y_i)_{i=1}^{|\Gamma(H)|}, \delta)$ is $(a'_{ij},y'_{ij})_{(i,j)\in \Gamma(H)}$ where
\[(a'_{ij}, y'_{ij} - \langle a'_{ij}, \sk\rangle)_{(i,j) \in \Gamma(H)} \sim \Unif(\BF_2^n)^{\otimes \Gamma(H)} \times \mu^0_{H,\delta}.\]
Moreover, the time complexity of $\Alg$ is $\poly(n, 2^{H^2})$.
\end{lemma}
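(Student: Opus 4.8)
\textbf{Proof plan for \cref{lemma:triangle-lpn-0}.} The statement asserts that we can construct a batch of $|\Gamma(H)|$ LPN samples whose joint noise follows the distribution $\mu^0_{H,\delta}$ (the first-column noise structure from \cref{def:mu-defn}), starting only from standard LPN samples with noise level $1/2 - 2^{|\Gamma(H)|+2H+6}\delta^2$. The plan is to apply the generic reduction \cref{lemma:construct-corr-lpn} with batch size $k := |\Gamma(H)|$, target distribution $p := \mu^0_{H,\delta}$, and Santha-Vazirani parameter $\delta' := 2^{2H+4}\delta^2$.

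First I would check the hypotheses of \cref{lemma:construct-corr-lpn}. That lemma requires $p$ to be a $\delta'$-Santha-Vazirani source (\cref{def:sv-source}), i.e.\ that every coordinate of a sample from $p$, conditioned on any realization of the \emph{preceding} coordinates (under some fixed ordering), is within $\delta'$ of uniform. \cref{lemma:z-near-unif} gives the stronger statement that each coordinate $Z_{\til i\til j}$ of $Z \sim \mu^0_{H,\delta}$ is within $2^{2H+4}\delta^2$ of uniform even conditioned on \emph{all other} coordinates; a fortiori it is within $2^{2H+4}\delta^2$ of uniform conditioned on any subset of the others, so for any enumeration of $\Gamma(H)$ the distribution $\mu^0_{H,\delta}$ is a $(2^{2H+4}\delta^2)$-Santha-Vazirani source. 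I would also verify the numerical precondition of \cref{lemma:construct-corr-lpn}, namely $\delta' < 1/2^{k+3}$: since $\delta < 1/2^{|\Gamma(H)|+H+3}$ we get $2^{2H+4}\delta^2 < 2^{2H+4} \cdot 2^{-2|\Gamma(H)|-2H-6} = 2^{-2|\Gamma(H)|-2} < 2^{-|\Gamma(H)|-3}$, using $|\Gamma(H)| \geq 1$. This is where the constant $2^{|\Gamma(H)|+H+3}$ in the hypothesis is needed.

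Next I would invoke \cref{lemma:construct-corr-lpn} itself. Applied with the above parameters, it provides an algorithm $\EntLPN$ which, given independent samples from $\LPN_{n, 2^{k+2}\delta'}(\sk)$ — and $2^{k+2}\delta' = 2^{|\Gamma(H)|+2}\cdot 2^{2H+4}\delta^2 = 2^{|\Gamma(H)|+2H+6}\delta^2$, exactly matching the noise level in the statement — outputs $(a'_{ij}, y'_{ij})_{(i,j)\in\Gamma(H)}$ with $(a'_{ij}, y'_{ij}-\langle a'_{ij},\sk\rangle)_{(i,j)\in\Gamma(H)} \sim \Unif(\BF_2^n)^{\otimes\Gamma(H)} \times \mu^0_{H,\delta}$. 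So I simply take $\Alg := \EntLPN$ (with the fixed choices of $k$, $p$, $\delta'$ hardcoded as functions of the input dimension-shape, which is determined by $H$, itself readable from the number of input samples). The time complexity of $\EntLPN$ is $\poly(n, 2^k) = \poly(n, 2^{|\Gamma(H)|})$, and since $|\Gamma(H)| = \binom{H+1}{2} = O(H^2)$ this is $\poly(n, 2^{O(H^2)}) = \poly(n, 2^{H^2})$ after absorbing constants, giving the claimed bound.

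The only genuine content here is checking that the ordering-dependent Santha-Vazirani condition of \cref{def:sv-source} follows from the ordering-independent bound of \cref{lemma:z-near-unif}, and tracking the constant so that $2^{k+2}\delta'$ lands on precisely $2^{|\Gamma(H)|+2H+6}\delta^2$; neither is hard, so this lemma is essentially a bookkeeping corollary of \cref{lemma:construct-corr-lpn,lemma:z-near-unif}. I do not anticipate any real obstacle. The proof can be written in a few lines.
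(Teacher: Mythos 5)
Your proposal is correct and matches the paper's proof: the paper likewise takes $\Alg$ to be $\EntLPN$ from \cref{lemma:construct-corr-lpn} applied with batch size $k = |\Gamma(H)|$, target distribution $\mu^0_{H,\delta}$, and Santha--Vazirani parameter $2^{2H+4}\delta^2$, with the SV condition supplied by \cref{lemma:z-near-unif} (the full-conditional bound there implies the prefix-conditional bound by averaging, exactly as you note). Your bookkeeping of the noise level $2^{|\Gamma(H)|+2H+6}\delta^2$, the precondition $\delta' < 1/2^{k+3}$, and the $\poly(n,2^{H^2})$ runtime all agree with the paper.
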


\begin{proof}
On input $\{(a_i,y_i)_{i=1}^{\Gamma(H)}, \delta\}$, $\Alg$ simply applies the algorithm $\EntLPN$ specified in \cref{lemma:construct-corr-lpn} with inputs $(a_i,y_i)_{i=1}^{|\Gamma(H)|}$, $\mu^0_{H,\delta}$, and $2^{2H+4}\delta^2$. Note that the probability mass function of $\mu^0_{H,\delta}$ has description size $2^{|\Gamma(H)|}$ and can be explicitly computed in time $2^{O(H^2)}$. The analysis is immediate from \cref{lemma:construct-corr-lpn} and \cref{lemma:z-near-unif}. (Note that \cref{lemma:construct-corr-lpn} is being applied with the value of $\delta$ set to $2^{2H+4}\delta^2$, and with dimension $k := |\Gamma(H)|$.) 
\end{proof}

For the second step of constructing a triangle LPN sample, we need to show that the conditional distribution of $(b_1,\dots,b_H)$ is near-uniform for any realization of $(u_{ij1},y_{ij1})_{(i,j)\in\Gamma(H)}$, although the quantitative bound that we need is weaker than above. We show that this bound holds in the following lemma.

\begin{lemma}\label{lemma:x-cond-near-unif}
Let $H \in \NN$ and $\delta \in (0, 1/2^{H+3})$. Let $e_i \sim \CBer(i,\delta)$ for $i \in [H]$ and $B_j \sim \Ber(1/2)$ for $j \in [H]$ be independent, and define $Z_{ij} = e_{ij} + B_j$ for all $(i,j) \in \Gamma(H)$. Then for every $z \in \BF_2^{\Gamma(H)}$, $b \in \BF_2^H$, and $h \in [H]$, it holds that
\[\Pr[B=b|Z=z] \in [1/2^H - 16\delta, 1/2^H + 16\delta]\]
and moreover
\[\Pr[B_h=b_h|Z=z\land B_{-h}=b_{-h}] \in [1/2 - 16\delta 2^{H+1}, 1/2 + 16\delta 2^{H+1}]\]
\end{lemma}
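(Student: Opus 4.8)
The strategy is to reuse the exact probability mass formula for $Z$ derived in the proof of \cref{lemma:z-near-unif}, together with the simple observation that conditioning on $Z=z$ is equivalent to a reweighting of the uniform distribution over $B$. Recall from that proof that for any $b \in \BF_2^H$,
\[\Pr[Z=z\mid B=b] = \prod_{i=1}^H \left(2\delta \mathbbm{1}[z_{ij}=b_j \,\forall j\leq i] + (1-2\delta)2^{-i}\right) = \frac{(1-2\delta)^H}{2^{\binom{H+1}{2}}}\prod_{i=1}^H\left(\tfrac{2\delta}{1-2\delta}2^i\mathbbm{1}[z_{ij}=b_j\,\forall j\leq i]+1\right).\]
Let $g_z(b) := \prod_{i=1}^H\left(\tfrac{2\delta}{1-2\delta}2^i\mathbbm{1}[z_{ij}=b_j\,\forall j\leq i]+1\right)$, so that $\Pr[Z=z\mid B=b]$ is a fixed positive constant (independent of $b$) times $g_z(b)$, and hence $\Pr[B=b\mid Z=z] = g_z(b)/\sum_{b'}g_z(b')$. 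The key estimate, already used in \cref{lemma:z-near-unif} and following directly from $\delta \leq 1/2^{H+3}$, is that $1 \leq g_z(b) \leq \exp(4\delta 2^{H+1}) \leq 1+\delta 2^{H+3}$ for every $b$.

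First I would establish the first bullet: since $1 \leq g_z(b) \leq 1+\delta 2^{H+3}$, we have $2^H \leq \sum_{b'}g_z(b') \leq 2^H(1+\delta 2^{H+3})$, so $\Pr[B=b\mid Z=z] = g_z(b)/\sum_{b'}g_z(b')$ lies in $[1/(2^H(1+\delta 2^{H+3})),\, (1+\delta 2^{H+3})/2^H]$. A first-order expansion of $1/(1+\delta 2^{H+3})$ and the bound $\delta 2^{H+3}\le 1$ then gives both endpoints within $16\delta$ of $2^{-H}$; this is a short calculation (note $(1+x)^{-1} \geq 1-x$ and $\delta 2^{H+3}\cdot 2^{-H} = 8\delta \leq 16\delta$). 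For the second bullet, I would write $\Pr[B_h=b_h \mid Z=z \land B_{-h}=b_{-h}] = g_z(b)/(g_z(b)+g_z(b^{\oplus h}))$ where $b^{\oplus h}$ is $b$ with coordinate $h$ flipped, again because both conditional probabilities are the same fixed constant times the respective $g_z$ values. Then $\left|\tfrac12 - \tfrac{g_z(b)}{g_z(b)+g_z(b^{\oplus h})}\right| = \tfrac{|g_z(b)-g_z(b^{\oplus h})|}{2(g_z(b)+g_z(b^{\oplus h}))} \leq \tfrac{\delta 2^{H+3}}{2\cdot 2} = \delta 2^{H+1}$, using $|g_z(b)-g_z(b^{\oplus h})| \leq (1+\delta 2^{H+3}) - 1 = \delta 2^{H+3}$ and $g_z(b)+g_z(b^{\oplus h}) \geq 2$. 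Since $\delta 2^{H+1} \leq 16\delta 2^{H+1}$, the claimed bound follows (in fact with substantial room to spare).

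There is essentially no hard obstacle here: the lemma is a direct corollary of the mass-function computation and the multiplicative near-uniformity of $g_z$ that were already worked out in \cref{lemma:z-near-unif}. The only mild point of care is bookkeeping the constants so that both parts come out with the stated slack ($16\delta$ and $16\delta 2^{H+1}$ respectively), and making sure the ratio manipulations $\Pr[B=b\mid Z=z] = g_z(b)/\sum g_z$ and $\Pr[B_h=b_h\mid Z=z,B_{-h}=b_{-h}] = g_z(b)/(g_z(b)+g_z(b^{\oplus h}))$ are justified by the fact that $\Pr[Z=z\mid B=b]$ factors as a $b$-independent constant times $g_z(b)$ and that $B$ is a priori uniform. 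I would state these two ratio identities explicitly at the start and then the estimates are immediate.
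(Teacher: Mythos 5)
Your proof is correct and follows essentially the same route as the paper: both start from the product formula for $\Pr[Z=z\mid B=b]$, observe that it is a $b$-independent constant times a factor $g_z(b)\in[1,1+O(\delta 2^{H})]$, and bound the two conditional probabilities as ratios of such factors (the paper phrases the second bound via upper/lower bounds on $\Pr[B=b\mid Z=z]$ and $\Pr[B_{-h}=b_{-h}\mid Z=z]$ rather than your slightly more direct identity $g_z(b)/(g_z(b)+g_z(b^{\oplus h}))$, but the computation is the same). One cosmetic caveat: the chain $\exp(4\delta 2^{H+1})\le 1+\delta 2^{H+3}$, inherited from the paper's proof of \cref{lemma:z-near-unif}, is not literally justified by $e^x\le 1+x$; one should either use the sharper exponent $\tfrac{2\delta}{1-2\delta}2^{H+1}$ together with convexity of $\exp$ on $[0,1]$, or settle for the bound $1+8\delta 2^{H+1}$ used in the paper's own proof of this lemma --- and either way your final estimates still fit comfortably within the stated $16\delta$ and $16\delta 2^{H+1}$ slack.
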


\begin{proof}
For any $z \in \BF_2^{\Gamma(H)}$ and $b \in \BF_2^H$, we can write
\[\Pr[B=b \land Z=z] = \frac{1}{2^H} \Pr[Z=z|B=b] = \frac{1}{2^H}\prod_{i=1}^H \left(2\delta\mathbbm{1}[z_{ij}=b_j\forall j \leq i] + (1-2\delta)2^{-i}\right).\]
Thus, using the fact that for non-negative real numbers $a_1, \ldots, a_H$ with $a_1 + \cdots + a_H \leq 1$, we have $\prod_{i=1}^H (1+a_i) \leq 1 + 2 \sum_{i=1}^H a_i$, 
\[\frac{(1-2\delta)^H}{2^H 2^{\binom{H+1}{2}}} \leq \Pr[B=b \land Z=z] \leq \frac{(1-2\delta)^H}{2^H 2^{\binom{H+1}{2}}} \cdot (1 + 8\delta 2^{H+1}).\]
Summing over all $b \in \{0,1\}^H$, it follows that
\[\frac{(1-2\delta)^H}{2^{\binom{H+1}{2}}} \leq \Pr[Z=z] \leq \frac{(1-2\delta)^H}{2^{\binom{H+1}{2}}} \cdot (1 + 8\delta 2^{H+1})\]
and thus
\begin{equation} \frac{1}{2^H(1 + 8\delta 2^{H+1})} \leq \Pr[B=b|Z=z] \leq \frac{1+8\delta 2^{H+1}}{2^H}
\label{eq:b-given-z-bound}
\end{equation}
which implies the first claimed bound. Next, for any $h \in [H]$, we get
\begin{align*}
\Pr[B_h=b_h|Z=z\land B_{-h}=b_{-h}]
&= \frac{\Pr[B_h=b_h\land B_{-h}=b_{-h}|Z=z]}{\Pr[B_{-h}=b_{-h}|Z=z]} \\ 
&\leq \frac{1}{2}(1+8\delta 2^{H+1})^2 \\
&\leq \frac{1}{2} + 16\delta 2^{H+1}
\end{align*}
where the second inequality applies \cref{eq:b-given-z-bound} to both numerator and denominator, and the final inequality uses that $8\delta 2^{H+1} \leq 2$. This implies the second claimed bound.
\end{proof}

\begin{proof}[\textbf{Proof of \cref{lemma:triangle-lpn}}]
The algorithm $\TriAlg$ proceeds in two steps. First, apply \cref{lemma:triangle-lpn-0} with the first $|\Gamma(H)|$ samples $(a_i,y_i)_{i=1}^{|\Gamma(H)|}$, and parameter $\delta$, to compute $(a'_{ij1},y'_{ij1})_{(i,j)\in \Gamma(H)}$ where 
\begin{equation}
(a'_{ij1}, y'_{ij1} - \langle a'_{ij1}, \sk\rangle)_{(i,j) \in \Gamma(H)} \sim \Unif(\BF_2^n)^{\otimes \Gamma(H)} \times \mu^0_{H,\delta}.
\label{eq:part-1-dist}
\end{equation}
Second, for each $z \in \BF_2^{\Gamma(H)}$, compute the conditional distribution $X|Z=z$, where $e_i \sim \CBer(i,\delta)$ for $i \in [H]$ and $X_j \sim \Ber(1/2)$ are independent, and $Z_{ij} = e_{ij} + X_j$ for $(i,j) \in \Gamma(H)$. By \cref{lemma:x-cond-near-unif} and the assumption that $\delta \leq 1/2^{|\Gamma(H)| + 2H + 7}$, it holds that
\begin{align}\Pr[X_h=1|Z=z \land X_{-h}=x_{-h}] \in [1/2 - 2^{|\Gamma(H)| + H+2}, 1/2 + 2^{|\Gamma(H)| + H+2}]\label{eq:uniformity-xz}
\end{align}
for all $x \in \BF_2^H$, $z \in \BF_2^{\Gamma(H)}$, and $h \in [H]$. Now the algorithm applies \cref{lemma:large-batch-lpn} with inputs (1) $q$ defined by $q(z) := \textsf{Law}(X|Z=z)$, (2) $(a'_{ij1},y'_{ij1})_{i,j \in \Gamma(H)}$, and (3) the remaining LPN samples, with added noise: $(a_i,y_i+\til{e}_i)_{i=|\Gamma(H)|+1}^{|\Gamma(H)| \cdot N}$, where $\til{e}_i \sim \Ber(1/2 - 1/2^{|\Gamma(H)|+2H+6})$ are independent. Call the output $(a'_{ijk},y'_{ijk})_{(i,j,k) \in \Gamma(H) \times \{2,\dots,N+1\}}$. By the uniformity bound \cref{eq:uniformity-xz}, the fact that the samples $(a_i,y_i+\til{e}_i)$ have distribution $\LPN_{n,2\delta^2}$ (by \cref{lem:bernoulli-convolve}), and \cref{lemma:large-batch-lpn}, conditioned on $(a'_{ij1},y'_{ij1})_{(i,j)\in \Gamma(H)}$ (which are independent of $(a_i,y_i)_{i=|\Gamma(H)|+1}^{|\Gamma(H)| \cdot N}$), the output satisfies
\begin{equation}
(a'_{ijk},y'_{ijk} - \langle a'_{ijk}, \sk\rangle)_{(i,j,k) \in \Gamma(H) \times \{2,\dots,N+1\}} \sim \Unif(\BF_2^n)^{\otimes (\Gamma(H) \times [N])} \times \til{q}((y'_{ij1} - \langle a'_{ij1}, \sk\rangle)_{(i,j) \in \Gamma(H)})
\label{eq:part-2-dist}
\end{equation}
where $\til{q}(z)$ is the conditional distribution of $(e'_{ijk} + X_j)_{(i,j,k) \in \Gamma(H) \times \{2,\dots,N+1\}}$ for independent $e'_{ijk} \sim \Ber(1/2 - 2\delta^2)$, given that $Z=z$.

It follows from \cref{eq:part-1-dist}, \cref{eq:part-2-dist}, and the definition of $\mu_{H,N,\delta}$ that the joint distribution $(a'_{ijk},y'_{ijk})_{(i,j,k) \in \Gamma(H) \times [N+1]}$ satisfies
\[(a'_{ijk}, y'_{ijk}-\langle a'_{ijk}, \sk\rangle)_{(i,j,k) \in \Gamma(H)\times[N+1]} \sim \Unif(\BF_2^n)^{\otimes \Gamma(H) \times [N+1]} \times \mu_{H,N,\delta}\]
as desired.
\end{proof}

\subsection{Recovering the secret}

In this section we provide the final ingredient in the proof of \cref{lemma:policy-cover-to-lpn}: an algorithm that takes as input a policy with non-trivial state visitation distribution on $M^\sk_{n,N,H,\delta}$ (specifically one that visits the state $(H,H-1,0)$ with probability significantly better than the uniform policy, but the choice of state is not crucial) and, with the help of an algorithm for learning parities at the \emph{lower} noise level $1/2 - O_H(\delta)$, computes a small set of candidates that includes the secret key $\sk$.

\begin{lemma}\label{lemma:constrast-learn}
Let $\Alg$ be an algorithm for learning noisy parities with unknown noise level with time complexity $T(n,\delta,\eta)$ and sample complexity $S(n,\delta,\eta)$ (\cref{def:lpn-unknown-noise}). Then there is an algorithm $\CL$ with the following property. Let $n,N,H,B \in \NN$, $\sk \in \BF_2^n$, and $\delta > 0$. Let $(\MC_\pi)_{\pi \in \Psi}$ be a set of circuit representations of policies $\pi \in \Psi$ where $\size(\MC_\pi) \leq B$ for each $\pi$. Suppose that \[\EE_{\pi\sim\Unif(\Psi)} d^{M^\sk_{n,N,H,\delta},\pi}_H(H,H-1,0) \geq 2^{1-H},\] and let $\delreg := \delta/(2^H C_{\ref{lemma:f-corr}}(H))$. If $N \geq 3\delta^{-4} n$, then
\begin{equation} \Pr[\sk \in \CL((W_i)_{i=1}^{2S(n,\delreg,1/n)}, (\MC_\pi)_{\pi \in\Psi}, N,H,\delta)] \geq 1 - 1/n - O(\delreg^{-2}H^2\exp(-2n))\label{eq:cl-correctness}\end{equation}
where $(W_i)_{i=1}^{2S(n,\delreg,1/n)}$ are independent samples from $\LPN^{\mathsf{tri}}_{n,N,H,\delta}(\sk)$, and the output of $\CL$ is a set of size at most $4(H+1)$. Moreover, the time complexity of $\CL$ is 
\begin{equation} O(B \cdot |\Psi|) + O(H) \cdot T(n,\delreg,1/n) + \poly(n,N,B,H) \cdot S(n,\delreg,1/n). \label{eq:cl-time}\end{equation}
\end{lemma}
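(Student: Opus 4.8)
\textbf{Proof plan for \cref{lemma:constrast-learn}.}

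The plan is to implement a contrastive learning scheme: draw samples that are labeled by whether they came from the uniform policy or from a random policy in $\Psi$, argue that this label is correlated with the latent state, and then invoke the $\LFC$ subroutine from \cref{lemma:learn-from-corr} to recover $\sk$. Concretely, $\CL$ will split its $2S(n,\delreg,1/n)$ triangle-LPN samples into two halves. Using the first half, together with $\DrawTraj$ (\cref{lemma:counter-trajectory}), it simulates $S(n,\delreg,1/n)$ trajectories on $\til M^\sk_{n,N,H,\delta}$ where in each trajectory the policy is chosen to be $\pizero$ (the policy playing action $0$ always), and using the second half it simulates $S(n,\delreg,1/n)$ trajectories where the policy is drawn uniformly from $\Psi$. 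For each trajectory it records the step-$H$ emission $x_H$ and a label $F \in \{0,1\}$ indicating which of the two populations the trajectory came from. By \cref{lemma:counter-trajectory} and \cref{lemma:decoding-error-prob}, the resulting $(x_H, F)$ pairs are within total variation $O(S(n,\delreg,1/n) \cdot H \exp(-\delta^4 N))$ of samples from a mixture $\sum_{s \in \MS[H]} \beta(s) \MD^\sk_{n,N,H,\delta}(\cdot|s) \times \Ber(f(s))$, where $\beta$ is the average over the two populations of the step-$H$ visitation distributions and $f(s)$ is the posterior probability that $F = 1$ given latent state $s$. Since $N \geq 3\delta^{-4} n$, this total variation error is at most $O(H \exp(-2n) \cdot S(n,\delreg,1/n))$, which (using $S \leq 2^{O(n)}$ without loss of generality, as in \cref{lemma:realizable-regression-alg}) accounts for the $O(\delreg^{-2} H^2 \exp(-2n))$ slack in \cref{eq:cl-correctness}.

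The next step is to show that the label function $f$ is non-trivially correlated with the latent state, and in particular that one of the constraints in \cref{lemma:f-corr} is violated. Here the key point is that $\pizero$ has $d_H^{M^\sk,\pizero}(H,H-1,0) = 2^{-H}$ (it reaches the all-correct counter state only if $b_1 = \cdots = b_{H-1} = 0$, a probability-$2^{-H}$ event under the latent dynamics), while by hypothesis $\E_{\pi \sim \Unif(\Psi)} d_H^{M^\sk,\pi}(H,H-1,0) \geq 2^{1-H}$. Thus the average of $f$ weighted against $\beta$, restricted to the state $(H,H-1,0)$, differs across the two populations by at least $\Omega(2^{-H})$ relative to the base rate, which forces $\sum_{s \in \MS[H]} \beta(s)|f(s) - \alpha| \geq \Omega(2^{-H})$ where $\alpha := \sum_s \beta(s) f(s)$. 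Contrapositive of \cref{lemma:f-corr} then yields some $(m,r,r') \in I$ with $|\Pr_{(B,F) \sim \mu^{r'}}[B_1 + \cdots + B_m + rB_H + F \equiv 0 \bmod 2] - 1/2| > 2^{-H}/C_{\ref{lemma:f-corr}}(H) = \delreg/(2\delta) \cdot (\text{constant})$; more carefully, we want the bias to be at least $\delreg/(2\delta)$ so that \cref{lemma:learn-from-corr} produces LPN samples with noise level $1/2 - \delreg$. (This may require a slightly more careful accounting of constants, or re-deriving the correlation bound with the $\Omega(2^{-H})$ explicitly; I expect the constant $4(H+1)$ on the output set size and the exact threshold to come out of this bookkeeping.) Since the algorithm cannot check which $(m,r,r')$ is the good one, it runs $\LFC$ for each of the $O(H)$ possible tuples $(m,r,r')$ and also includes the constant-function fallback; this yields at most $2 \cdot |I| \cdot |\BF_2| / \ldots = O(H)$ candidates, which we bound by $4(H+1)$.

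The main obstacle, and the reason for the somewhat delicate constant management, is reconciling two things: \cref{lemma:learn-from-corr} as stated takes \emph{realizable regression samples} $(Z^i, F^i)$ from $\sum_s \beta(s) \til\MD^\sk_{n,N,h,\delta}(\cdot|s) \times \Ber(f(s))$, but here we only have access to the full triangle-LPN sample and must run $\DrawTraj$ to produce the trajectories — so we must verify that the label $F$ (policy-origin indicator) is indeed conditionally independent of $x_H$ given $\phi^\st(x_H)$, which holds because the choice of which population a trajectory belongs to is made \emph{before} the trajectory is generated and the emission distribution depends only on the latent state. A second subtlety is that $\beta$ is a mixture whose $\pizero$-component puts mass $2^{-H}$ on the target state, so the relevant state may have tiny $\beta$-mass; but \cref{lemma:f-corr} is robust to this since it only needs $\sum_s \beta(s)|f(s) - \alpha|$ to be bounded below, and the weighted discrepancy at state $(H,H-1,0)$ alone, $\beta(H,H-1,0) \cdot |f(H,H-1,0) - \alpha|$, is $\Omega(2^{-2H})$ after accounting for the fact that $f$ at that state is bounded away from $\alpha$ by a constant (roughly $1/3$, since one population reaches it with probability $2^{-H}$ and the other with probability $\geq 2^{1-H}$, making the posterior $\geq 2/3$). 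Substituting $C_{\ref{lemma:f-corr}}(H) = 4\sqrt{2}(2H)^{3(H+1)^2}$ and tracking the $2^{-H}$ factors is where the bound $\delreg = \delta/(2^H C_{\ref{lemma:f-corr}}(H))$ comes from. Once the good tuple is identified abstractly, correctness of $\LFC$ (failure probability $1/n$ per invocation, absorbed into the $1/n$ term after a union bound over $O(H)$ invocations by rescaling) and the time complexity bound \cref{eq:cl-time} follow directly: simulating $|\Psi|$ trajectories costs $O(B \cdot |\Psi|)$ for policy evaluation plus $\poly(n,N,H)$ per trajectory, and the $O(H)$ calls to $\LFC$ cost $O(H) \cdot T(n,\delreg,1/n) + \poly(n,N,B,H) \cdot S(n,\delreg,1/n)$.
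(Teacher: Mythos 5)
Your proposal follows essentially the same route as the paper: simulate trajectories under $\pizero$ and under the uniform mixture $\pimix$ over $\Psi$ via $\DrawTraj$, label step-$H$ emissions by policy of origin, argue via \cref{lemma:f-corr} that the label correlates with some prefix-sum statistic, and run $\LFC$ once per tuple $(m,r,r')\in I$, giving $|I|\leq 4(H+1)$ candidates; the TV accounting via \cref{lemma:decoding-error-prob}/\cref{lemma:realizability-error-decomp} is also the same.

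Two wrinkles in your version are worth fixing, and the paper's construction shows how. First, your dataset is stratified (exactly $S$ samples with label $0$ and $S$ with label $1$), so the pairs $(x_H,F)$ are not i.i.d.\ draws from $\sum_s \beta(s)\til\MD^\sk_{n,N,H,\delta}(\cdot|s)\times\Ber(f(s))$, which is what \cref{lemma:learn-from-corr} (and hence the guarantee of $\Alg$ on i.i.d.\ LPN samples) requires; the paper instead pairs the $i$-th $\pizero$-trajectory with the $i$-th $\pimix$-trajectory and flips an independent coin $F_i\sim\Ber(1/2)$ to decide which emission to keep, so the $D$ labeled samples are exactly i.i.d.\ from the mixture with $\beta=\tfrac12 d^{\pizero}_H+\tfrac12 d^{\pimix}_H$ and $f(s)=d^{\pimix}_H(s)/(d^{\pizero}_H(s)+d^{\pimix}_H(s))$. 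Second, your correlation bound $\Omega(2^{-2H})$ is both a miscalculation and, if taken at face value, insufficient: it would only certify a bias of order $2^{-2H}/C_{\ref{lemma:f-corr}}(H)$, yielding LPN noise level worse than $1/2-\delreg$, so $\Alg$ run at noise parameter $\delreg$ would have no guarantee. The correct computation uses $\alpha=\tfrac12$ exactly and sums the contributions of \emph{both} states $(H,H-1,0)$ and $(H,H-1,1)$ (every policy visits these two with equal probability, so the hypothesis gives $d^{\pimix}_H\geq 2^{1-H}$ at each, versus $2^{-H}$ for $\pizero$), giving $\sum_s\beta(s)|f(s)-\alpha|\geq 2^{-(H+1)}$; feeding this through \cref{lemma:f-corr} and \cref{lem:bernoulli-convolve} produces bias $2\delta\cdot\frac{1}{2^{H+1}C_{\ref{lemma:f-corr}}(H)}=\delreg$, which is precisely why $\delreg=\delta/(2^H C_{\ref{lemma:f-corr}}(H))$ appears in the statement. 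With these two repairs your argument matches the paper's proof.
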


\begin{proof}
The algorithm proceeds as follows. Let $D := S(n,\delreg,1/n)$, and let $\pimix$ be the policy that chooses a uniformly random $\pi \in \Psi$ and then follows $\pi$ for the entire episode. Compute independent trajectories $(\tau^{\pimix,i})_{i=1}^{D}$ using $\DrawTraj$ (\cref{lemma:counter-trajectory}) with policy $\pimix$ and the samples $(W_i)_{i=1}^{D}$. Also let $\pizero$ be the policy that always outputs action $0$, and compute independent trajectories $(\tau^{\pizero,i})_{i=1}^{D}$ using $\DrawTraj$ with policy $\pizero$ and the samples $(W_i)_{i=D+1}^{2D}$. 

Next, let $\LFC$ be the algorithm defined in \cref{lemma:learn-from-corr}, instantiated with the noisy parity learning algorithm $\Alg$. For each $(m,r,r') \in I$, compute \[\sk^{m,r,r'} \gets \LFC((Z_i,F_i)_{i=1}^D, m, r, r', \delta, 1/(2^H C_{\ref{lemma:f-corr}}(H)))\] where for each $1 \leq i \leq D$, $(Z_i,F_i) := P(\tau^{\pizero,i},\tau^{\pimix,i})$, and $P$ is the stochastic transformation defined as follows: first, draw $F_i \sim \Ber(1/2)$. Second, if $F_i = 0$, then $Z_i$ is the emission at step $H$ in trajectory $\tau^{\pizero,i}$; otherwise, $Z_i$ is the emission at step $H$ in trajectory $\tau^{\pimix,i}$. 

Finally, output $\{\sk^{m,r,r'}: (m,r,r') \in I\}$.

\paragraph{Analysis.} For notational convenience, set $M := M^\sk_{n,N,H,\delta}$ and $\til M := \til M^\sk_{n,N,H,\delta}$. Let $\MS[H] := \{H\}\times\{0,\dots,H-1\}\times\BF_2$. Define $\beta\in\Delta(\MS[H])$ by \[\beta(s) := \frac{1}{2} d^{M,\pimix}_H(s) + \frac{1}{2} d^{M,\pizero}_H(s).\]
Observe that the samples $(Z_i,F_i)_{i=1}^D$ are independent. We analyze the distribution of each $(Z_i,F_i) = P(\tau^{\pizero,i},\tau^{\pimix,i})$ using \cref{lemma:realizability-error-decomp}; note that $\tau^{\pizero,i} \sim (\til M,\pizero)$ and $\tau^{\pi,i} \sim (\til M,\pimix)$ by \cref{lemma:counter-trajectory}. Consider hypothetical trajectories $\tau^{\pizero} \sim (M,\pizero)$ and $\tau^\pimix \sim (M,\pimix)$, and set $(Z,F) := P(\tau^\pizero,\tau^\pimix)$. Then by definition of $P$ and $\beta$ we have that $Z \sim \sum_{s\in\MS[H]} \beta(s) \MD^\sk_{n,N,H,\delta}(\cdot|s)$. Moreover, for any latent state $s \in \MS[H]$, for any $z$ such that $\Dec^\sk_{n,N,H}(z) = s$, we have
\begin{align*}\Pr[F = 1 | Z=z] 
&= \frac{\frac{1}{2}\Pr[Z=z|F=1]}{\frac{1}{2}\Pr[Z=z|F=0] + \frac{1}{2}\Pr[Z=z|F=1]} \\ 
&= \frac{\MD^\sk_{n,N,H,\delta}(z|s) d^{M,\pimix}_H(s)}{\MD^\sk_{n,N,H}(z|s) d^{M,\pizero}_H(s) + \MD^\sk_{n,N,H,\delta}(z|s) d^{M,\pimix}_H(s)} \\ 
&= \frac{d^{M,\pimix}_H(s)}{d^{M,\pizero}_H(s)+d^{M,\pimix}_H(s)} =: f(s)
\end{align*}
and hence
\[(Z,F) \sim \sum_{s\in\MS[H]} \beta(s) \MD^\sk_{n,N,H,\delta}(\cdot|s) \times \Ber(f(s)).\]
Applying \cref{lemma:realizability-error-decomp} to each $(Z_i,F_i)$ and summing over $i \in [D]$ gives
\begin{align}
  \label{eq:D-sample-tvd}
  \TV\left(\Law((Z_i,F_i)_{i=1}^D), \left(\sum_{s\in\MS[H]} \beta(s) \til\MD^\sk_{n,N,H,\delta}(\cdot|s)\times \Ber(f(s))\right)^{\otimes D}\right) \leq 3DH^2\exp(-\delta^4 N).
\end{align}
Note that $\alpha := \sum_{s\in\MS[H]} \beta(s)f(s) = \frac{1}{2}$, and so
\begin{align*}
\sum_{s\in\MS[H]} \beta(s)|f(s)-\alpha| 
&\geq \beta(H,H-1,0)|f(H,H-1,0) - 1/2| + \beta(H,H-1,1)|f(H,H-1,1)-1/2| \\ 
&= \frac{1}{4}|d^{M,\pimix}_H(H,H-1,0) - d^{M,\pizero}_H(H,H-1,0)| \\
&\qquad+ \frac{1}{4}|d^{M,\pimix}_H(H,H-1,1) - d^{M,\pizero}_H(H,H-1,1)| \\ 
&\geq \frac{1}{2^{H+1}}.
\end{align*}
Note that the equality above uses the definitions of $\beta$ and $f$. The second inequality above uses the fact that $\pizero$ visits $(H,H-1, 0)$ and $(H,H-1,1)$ with probability $2^{-H}$ each, by definition of the latent transitions \cref{eq:latent-transitions}, whereas by assumption $\pimix$ visits each of those states with probability at least $2^{1-H}$ (every policy visits $(H,H-1,0)$ and $(H,H-1,1)$ with the same probability). By \cref{lemma:f-corr}, there is some $(m,r,r') \in I$ such that 
\[\left|\Pr_{(B,F)\sim\mu^{r'}}[B_1+\dots+B_m+rB_h+F\equiv 0\bmod{2}] - \frac{1}{2}\right| \geq \frac{1}{2^{H+1} \cdot C_{\ref{lemma:f-corr}}(H)}\]
where $\mu^0 := \mu_{\beta,1/2}$ and $\mu^1 := \mu_{\beta,f}$. By \cref{lemma:learn-from-corr}, \cref{eq:D-sample-tvd}, and the choice of $D$, it holds that \[\Pr[\sk^{m,r,r'} = \sk] \geq 1 - 1/n - 3DH^2\exp(-\delta^4 N) = 1 - 1/n - O(\delreg^{-2}H^2\exp(-2n)),\] where the last inequality uses the assumption on $N$ and the fact that $D = S(n,\delreg,1/n) \leq O(n\delreg^{-2}2^n)$ (otherwise, by \cref{lemma:brute-force-lpn}, $\Alg$ could be replaced by $\Brute$). The claim \cref{eq:cl-correctness} follows.

\paragraph{Time complexity.} Since $\size(\MC_\pi) \leq B$ for all $\pi \in \Psi$, an action from $\pimix$ can be drawn in time $O(\log|\Psi|) + \poly(n,N,B,H)$, where the first term bounds the complexity of drawing a random policy from $\Psi$.\footnote{This is in a random access model; one could avoid this modelling assumption by e.g. subsampling $\Psi$ at the beginning of the algorithm and using the same subsample for each episode, but we will make the random-access modelling assumption to avoid unnecessary machine-level implementation details.} We may assume without loss of generality that $|\Psi| \leq \poly(n) \cdot 2^n$ (since otherwise there is a trivial algorithm for recovering $\sk$), so this bound simplifies to $\poly(n,N,B,H)$. Thus, by \cref{lemma:counter-trajectory} and the definition of $D$, the time complexity of drawing $D$ trajectories each from $\pimix$ and $\pizero$ is $\poly(n,N,B,H) \cdot S(n,\delreg,1/n)$. Computing the regression samples $(Z_i,F_i)_{i=1}^D$ takes time $\poly(n,N,H) \cdot S(n,\delreg,1/n)$. By \cref{lemma:learn-from-corr}, the $O(H)$ calls to $\LFC$ together take time $O(H) \cdot T(n,\delreg,1/n)$. This proves the claimed time complexity bound of the overall algorithm. 
\end{proof}


\section{Oracle lower bound}
\label{sec:oracle-lb}
The main result of this section is \cref{thm:reduction-prp}, the formal statement of \cref{thm:no-reduction-intro}. Recall from \cref{def:rl-reg-reduction-2} that a \emph{reduction from RL to regression} is one that makes few oracle calls, and a \emph{computational reduction from RL to regression} is one that is moreover oracle-efficient, i.e. computationally efficient aside from the oracle calls. 

By standard results on the statistical complexity of RL in block MDPs \cite{jiang2017contextual}, any $K(n)$-computable block MDP family (\cref{def:computable-family}) trivially admits a $(T,\epsilon)$-reduction from RL to regression with $T(n) \leq \poly(K(n))$, since the reduction has no computational restrictions and therefore only needs to call the sampling oracle, which it only needs to do $\poly(H_n,A_n,S_n,\log|\Phi_n|) \leq \poly(K(n))$ times. This raises two questions:
\begin{enumerate}
\item A priori, it's unclear whether the dependence on $\log|\Phi_n|$ is necessary, since the reduction also has access to a regression oracle with low population error, potentially obviating issues of generalization.\footnote{In typical oracle-efficient algorithms for RL, the oracles are assumed only to have low \emph{sample} error, and the dependence on $\log|\Phi|$ arises in the analysis via generalization bounds.} Could it be the case that for \emph{any} block MDP family, there is a $(T,\epsilon)$-reduction from RL to regression with $T(n) \leq \poly(H_n,A_n,S_n)$ and $\epsilon(n) \geq 1/\poly(H_n,A_n,S_n)$?
\item The above reduction is in no way oracle-efficient. For any $K(n)$-computable block MDP family, is there a $(T,\epsilon,B)$-\emph{computational} reduction from RL to regression with $T(n) \leq \poly(K(n))$, $B(n) \leq \poly(K(n))$, and $\epsilon(n) \geq 1/\poly(K(n))$?
\end{enumerate}

We provide strong negative answers to both questions, which turn out to be linked. In \cref{sec:oracle-lb-first}, we construct a block MDP family $\Moracle$ for which there is no reduction from RL to regression (\cref{thm:oracle-lb}) without exponential dependencies. This family does not admit succinct optimal policies so it is not a computable block MDP family. In \cref{sec:oracle-lb-second}, we rectify this issue and show that there is a computable block MDP family $\tilMoracle$ for which there is no computational reduction from RL to regression (\cref{thm:reduction-prp}) without exponential dependencies. This second result formally proves \cref{thm:no-reduction-intro}, and uses the first result as a key ingredient.

  \subsection{Ruling out reductions from RL to regression}
  \label{sec:oracle-lb-first}
  We begin by defining the block MDP family $\Moracle$. 
Fix $N,H \in \BN$, and set $X := 2^{\lceil \log N^5 \rceil}$ (i.e., $X$ is the smallest power of 2 which is at least $N^5$). Write $\MX := [X] = \{ 0,1\}^{\log X}$ and $\MS := ( [H] \times \{0,1,2\}) \cup \{ \perp \}$. (For convenience, we omit the dependence of $\MX, \MS$ on $X$ and $H$, respectively.) Consider a function $\phi : \MX \to \MS$.  We define an MDP $M_{X,H}^\phi$ with emission space $\MX$ and latent state space $\MS$ as follows:  the action space is $\MA = \{0,1\}$, the initial latent state distribution is $\til \BP_0:= \mathrm{Unif}\{(1,0), (1,1) \}$, and the latent transitions are defined  as follows: 
\begin{align}
  \til \BP((h+1, 2) | (h, 2), a) = 1 & \quad \forall h \in [H-1], a \in \{0,1\} \nonumber\\
  \til \BP((h+1, 2) | (h, 1-a), a) = 1 & \quad \forall h \in [H-1], a \in \{0,1\} \nonumber\\
  \til \BP((h+1, 0) | (h, a), a) = \til \BP((h+1, 1) | (h,a), a) = \frac{1}{2} & \quad \forall h \in [H-1], a \in \{0,1\}\nonumber.
\end{align}
The idea of the latent transition $\til \BP$ is that if the learner does not play action $a$ at state $(h,a)$, it transitions to a ``failure'' state $(h+1, 2)$. Otherwise, it transitions to a uniformly random state in $\{ (h+1,0), (h+1,1) \}$. Finally, the state $\perp \in \MS$ is unreachable and always transitions to itself. 

The  emission distribution for the MDP $M_{X,H}^\phi$ is defined as follows:
given a latent state $s \in \MS$, the emission distribution $\MD_\phi(\cdot|s) :=\mathrm{Unif}(\phi^{-1}(s))$ (i.e., the emission is a uniformly random element in the preimage of $s$). Let $\Phiset$ denote the set of all functions $\phi : \MX \to \MS$.

Finally, the reward function $\br:\MS\to[0,1]$ is defined as follows: $\br((H,0)) = \br((H,1)) = 1$, and all other rewards are $0$. Note that the latent state space in any MDP $M_{X,H}^\phi$ has a layered structure, with states $(h,b) \in [H] \times \{0,1,2\}$ only accessible at step $h$; accordingly, we have dropped the subscript $h$ from the latent state transitions $\til\BP$ and the reward function $\br$.

Finally, recall that for a general policy $\pi$, $\BP^{M_{X,H}^{\phi}}$ denotes the distribution of a trajectory  drawn from $M_{X,H}^\phi$, according to the policy $\pi$; for convenience, we shorten this to $\BP^{\phi, \pi}$, and denote the corresponding expectation as $\E^{\phi, \pi}$. 

\paragraph{Definition of $\UPhi$.} We now define a distribution $\phi^\st \sim \UPhi \in \Delta(\Phiset)$, according to the following process. First, for a vector $\til x  = (\til x_{s, i})_{(s,i) \in \MS \times [N]}$, we let $\Dec_{\til x}$ denote the function $\phi^\st : \MX \to \MS$, defined as follows: for $x \in \MX$, $\phi^\st(x)$ is:
\begin{itemize}
\item Equal to the smallest $s$ for which there exists $i \in [N]$ so that $\til x_{s,i} = x$;
\item Equal to $\perp$ if no such $s$ exists.
\end{itemize}

We then let $\til x_{s,i} \in \MX$ be independent and uniformly random elements of $\MX$, for $(s,i) \in \MS \times [N]$. Finally, we let $\UPhi$ be the distribution of $\phi^\st := \Dec_{\til x}$.

Let $\Edistinct \subset \Phiset$ denote the set of $\phi \in \Phiset$ so that $|\phi^{-1}(s)| = N$ for each $s \in \MS \backslash \{ \perp \}$. 
Note that, for $\phi^\st := \Dec_{\til x}$ for some $\til x \in \MX^{\MS \times [N]}$, we have $\phi \in \Edistinct$ if and only if the values of $\til x_{s,i}$ are distinct for all $s \in \MS, i \in [N]$. Moreover, if $\phi \in \Edistinct$, then $\phi(\til x_{s,i}) = s$ for each $s \in \MS \backslash \{ \perp \},\ i \in [N]$. We show in \cref{lem:edistinct-prob} below that $\UPhi(\Edistinct)$ is close to 1 (which results from our choice of $X$ to be sufficiently large compared to $N$).

\paragraph{Conditional distributions for $\UPhi$.} 
Let $\MT : \MS \times [N] \to \MX \cup \{ \perp\}$ be a partial function mapping $\MS \times [N]$ to $\MX$. We interpret points $(s,i)$ for which $\MT(s,i) \neq \perp$ as the domain of $\MT$, and write $\dom(\MT) := \{ (s,i) \ : \ \MT(s,i) \neq \perp\}$. We let $\UPhi|_\MT \in \Delta(\Phiset)$ be the distribution of $\Dec_{\til x}$, where $\til x = (\til x_{s,i})_{(s,i) \in \MS \times [N]}$ has its components independently and uniformly distributed on $\MX$ conditioned on the event $\{\til x_{s,i} = \MT(s,i) \ \forall (s,i) \in \dom(\MT)\}$.

\begin{definition}[Random trajectory \& mean label]\label{def:random-trajectory}
For any general policy $\pi$, we define $\sigma_\pi \in \Delta(\MA^H \times \MX^H)$ to be the distribution of $(a_{1:H}, x_{1:H})$ defined by:
\begin{align}
a_h = \pi_h(a_{1:h-1}, x_{1:h}), \quad \mbox{ where } \quad x_1, \ldots, x_H \sim \Unif(\MX) \mbox{ are i.i.d.}\nonumber
\end{align}
Given $L : \MX^H \times \MA^H \to [0,1]$, we then define $\mu_{L,\pi} := \E_{\sigma_\pi}[L(x_{1:H} , a_{1:H})]$.
\end{definition}

\paragraph{The block MDP family.} 
We now define a block MDP family $\Moracle$, indexed by $n \in \BN$, in \cref{def:oracle-block-mdp-family} below. Roughly speaking, the family consists of the MDPs $M_{X, H}^\phi$, for $\phi \in \Phi_{X, H}$ where the parameters $X, H$ are chosen to scale with the size parameter $n$ in a suitable way. 
\begin{defn}
  \label{def:oracle-block-mdp-family}
  We define a block MDP family $\Moracle$ as the tuple
  \begin{align}
\Moracle := ((\MS_n)_n, (\MA_n)_n, (H_n)_n, (\ell_n)_n, (\Phi_n)_n, (\MM_n)_n)\nonumber,
  \end{align}
  where, for $N_n := 2^{n}, H_n := n, X_n = 2^{\lceil \log N_n^5 \rceil}$, we have:
  \begin{itemize}
  \item $\MS_n = ([H_n] \times \{0,1,2\}) \cup \{ \perp \}$
  \item $\MA_n = \{0,1\}$
  \item $\ell_n =  \log X_n $, so that the observation space is $\MX_n := \{0,1\}^{\ell_n} \equiv [X_n]$. 
  \item $\Phi_n = \Phi_{X_n, H_n} = \MS_n^{\MX_n}$
  \item $\MM_n = \{ M_{X_n, H_n}^\phi \ : \ \phi \in \Phi_n \}$.
  \end{itemize}
\end{defn}




We are ready to state our first lower bound for oracle reductions from RL to regression: 
\begin{theorem}
  \label{thm:oracle-lb}
  There are constants $C_{\ref{thm:oracle-lb}}, C_{\ref{thm:oracle-lb}}'$ and block MDP family $\MM$ (\cref{def:block-family}) satisfying $\max\{ |\MS_n|, |\MA_n|, \ell_n \} \leq O(H_n)$ so that the following holds. Consider complexity measures $\Tred : \BN \to \BN, \epred : \BN \to (0,1)$.  
  Suppose that there is a $(\Tred,\epred)$-reduction $\Alg$ from RL to regression for the family $\MM$. Then for $n \geq C_{\ref{thm:oracle-lb}}'$, we must have either 
  \begin{align}
    \epred(n) \leq C_{\ref{thm:oracle-lb}} \cdot 2^{-H_n/8} \quad \mbox{ or } \quad \Tred(n) \geq C_{\ref{thm:oracle-lb}}^{-1} \cdot 2^{H_n/8} \label{eq:oracle-lb-conclusion}.
  \end{align}
\end{theorem}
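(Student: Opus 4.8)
\textbf{Proof proposal for \cref{thm:oracle-lb}.} The plan is to argue that, when the decoding function is drawn from $\UPhi$, a reduction making at most $\Tred(n)$ oracle calls cannot meaningfully exploit the regression oracle, and therefore cannot output a good policy for $M_{X_n,H_n}^{\phi^\st}$ unless $\epred(n)$ is exponentially small or $\Tred(n)$ is exponentially large. The key point is the interplay between two structural facts about the family: (i) the latent MDP has \emph{open-loop indistinguishability} (in the terminology of \cref{def:open-loop-indist}), so that every fixed action sequence visits the rewarding states $(H,0),(H,1)$ with total probability only $2^{1-H}$; and (ii) when $\phi^\st\sim\UPhi$ and the reduction has seen only $\poly(\Tred(n))$ emissions, each new emission it encounters is (with overwhelming probability) a fresh uniform element of $\MX_n$ whose latent state it has no information about. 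Fact (ii) is exactly the content of the conditional-distribution machinery set up around $\UPhi|_\MT$: conditioned on the partial function $\MT$ recording the emissions seen so far, the remaining emissions decode to states that are essentially independent of anything the reduction has done, because $X_n \gg N_n^4 \gg (\text{number of emissions the reduction sees})$.

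First I would formalize a ``simulation'' argument. Given a $(\Tred,\epred)$-reduction $\Alg$, I would build a single algorithm $\SA$ that runs $\Alg$ while answering both oracle types itself: the sampling oracle is answered by simulating the \emph{known} latent MDP and drawing fresh uniform emissions for each latent state (recording them in a partial function $\MT$); the regression oracle call $(\MB_\pi,\MB_L,h)$ is answered by returning the \emph{constant} predictor $x\mapsto \mu_{L,\pi}$ from \cref{def:random-trajectory} (the expected label under a random-emission rollout of $\pi$). The heart of the proof is two lemmas, which the excerpt foreshadows as \cref{lem:pq-approx} and \cref{lem:constant-fn-predictor}. The first says that, over the randomness of the still-unrevealed part of $\phi^\st\sim\UPhi|_\MT$, the trajectory distribution in $M_{X_n,H_n}^{\phi^\st}$ under any policy the reduction can name is within $\poly(\Tred(n))/X_n$-total-variation of the simulated distribution — this is where open-loop indistinguishability enters, since a policy that has never seen an informative emission is, in distribution, a mixture of open-loop policies. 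The second says that the constant $\mu_{L,\pi}$ is genuinely an $\epsilon$-accurate regression predictor with $\epsilon = O(2^{-H_n/8})$: because each latent state's emission set is a fresh random $N_n$-subset of $\MX_n$, the label function $f(s)=\E^{\phi,\pi}[L\mid \phi^\st(x_h)=s]$ is, up to exponentially small error, uncorrelated with the observed $x_h$, so the best predictor and the constant predictor have nearly the same risk; this uses a Chernoff/concentration bound over the random preimages (here $N_n = 2^n$ and $H_n = n$ make the error $2^{-\Omega(H_n)}$), and also the $\Edistinct$ event from \cref{lem:edistinct-prob}.

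Granting these, the argument closes as follows. If $\epred(n) > C_{\ref{thm:oracle-lb}}\cdot 2^{-H_n/8}$, then the constant predictor $x\mapsto\mu_{L,\pi}$ is a valid answer for an $\epred(n)$-accurate regression oracle (by the second lemma), so $\SA$ faithfully simulates a legitimate oracle environment for $\Alg$; hence with probability $\ge 1/2$ minus the simulation error from the first lemma, $\Alg$ — and thus $\SA$ — outputs a policy $\hat\pi$ with suboptimality at most $1/2$ in $M_{X_n,H_n}^{\phi^\st}$, i.e.\ $\E^{\phi^\st,\hat\pi}[\text{reward at }H]\ge 1/2$. But $\hat\pi$ is a circuit that $\SA$ produced having seen only $\poly(\Tred(n))$ emissions, so by the first lemma its reward in the true MDP is within $\poly(\Tred(n))/X_n$ of its reward in the simulated MDP, and in the simulated MDP (where emissions carry no latent information) every policy — being a mixture of open-loop policies — gets reward at most $2^{1-H_n}$. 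Combining, $1/2 \le 2^{1-H_n} + \poly(\Tred(n))/X_n + (\text{failure prob.})$, which forces $\Tred(n) \ge C_{\ref{thm:oracle-lb}}^{-1}2^{H_n/8}$ once $n$ is large enough (recalling $X_n \ge N_n^5 = 2^{5n}$ and $H_n=n$). This gives the dichotomy \cref{eq:oracle-lb-conclusion}.

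\textbf{Main obstacle.} The delicate step is the first lemma: I must show that \emph{no} policy the reduction can write down — including adaptive, history-dependent ones built by inspecting the emissions returned by the sampling oracle — can do better than an open-loop policy, \emph{and} that its behavior in the real MDP matches the simulation. The subtlety is that the reduction could, in principle, query the sampling oracle on a policy $\pi$ that hard-codes specific emission strings $x\in\MX_n$ and branches on whether it observes them; since those strings are elements of the preimages $\phi^{\st-1}(s)$ for the already-revealed part of $\MT$, such a policy \emph{is} informative. The resolution is a counting bound: the reduction sees at most $O(H_n\cdot\Tred(n))$ distinct emissions total, each latent state has $N_n=2^n$ preimages, and the relevant ``informative'' emissions are a vanishing fraction, so conditioning on $\MT$ and on the $\Edistinct$-type events, the probability that a fresh sampled emission coincides with a hard-coded string is $O(H_n\Tred(n)/N_n)$ per step. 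Propagating this carefully through $H_n$ steps and $\Tred(n)$ oracle calls, and checking that it dominates the total-variation gap in both the simulation and the final policy-evaluation, is the bulk of the work; everything else is bookkeeping with the union bound.
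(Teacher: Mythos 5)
Your proposal is essentially the paper's own proof: the same family, the same simulation of both oracles (sampling answered by running the known latent dynamics with fresh uniform emissions recorded in $\MT$, regression answered by a constant predictor equal to the mean label), the same two key lemmas (concentration of state-visitation probabilities around the open-loop values, and near-constancy of the conditional label function $G$, both via McDiarmid over the random preimages on the event $\Edistinct$), the same handling of adaptivity by conditioning on the revealed emissions with an $O(H\cdot|\dom(\MT)|/N_n)$ coupling error, and the same final contradiction between the reduction's guaranteed $1/2$-optimal output and the $2^{1-H_n}$ reward ceiling. Two details you gloss over are handled by machinery you already invoke but should be stated: the reduction's label functions also take the rewards $r_{1:H}$ as input, so one must pass to $L(\cdot,\cdot,\bzero)$ and use the low-reward-probability bound to argue this barely changes the conditional means (the constant-predictor lemma fails for reward-dependent labels otherwise); and the simulation/transfer errors scale as $\poly(\Tred(n))/N_n$ plus $\sqrt{\log(1/\delta)/N_n}$ concentration terms, not $\poly(\Tred(n))/X_n$, though with $N_n = 2^{H_n}$ the claimed $2^{H_n/8}$ dichotomy still follows.
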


  \begin{algorithm}[t]
    \caption{$\SimulateReduction(\Alg, X, H, \MO_{\til x}, \ep,\delta)$: simulation of $\Alg$ and regression oracle}
    \label{alg:simulate-oracle}
    \begin{algorithmic}[1]
      \Require Reduction $\Alg$ from RL to regression, parameters $X, H \in \BN$, $\delta, \ep \in (0,1)$, and an oracle $\MO_{\til x}$ which supports queries to individual elements of a vector $\til x \in \MX^{\MS \times [N]}$.
      \State Let $\phi^\st := \Dec_{\til x}$.
      \State Initialize $\MT(s,i) \gets \perp$ for all $s,i$. 
      \State Execute $\Alg$ until its first oracle call.
      \For{Each oracle call of $\Alg$}
      \If{The oracle call is a sampling oracle call with input $\MB_\pi$}
      \State \Return $\mathtt{SimulateSampling}(\MB_\pi, \MO_{\til x})$.
      \ElsIf{The oracle call is a regression oracle call with input $(\MB_\pi, \MB_L, h)$}
      \State \Return $\mathtt{SimulateRegression}(\MB_\pi, \MB_L, h, \MO_{\til x},\epsilon,\delta)$.
      \EndIf
      \EndFor
      \State \Return the output policy $\hat \pi$ of $\Alg$.

      \Function{\SimulateSampling}{$\MB_\pi, \MO$}
      \State Draw $s_1 \sim \til \BP_0$. 
      \For{$1 \leq h \leq H$}
      \State Given $s_h$, draw $i_h \sim \Unif([N])$, and set $x_h = \MO({i_h, s_h})$, $a_h := \pi_h(x_{1:h}, a_{1:h-1})$, $r_h := \br_h(s_h)$.
      \State Set $\MT(s_h,i_h) \gets \MO(s_h,i_h)$.
      \State Draw $s_{h+1} \sim \til \BP(\cdot \mid s_h, a_h)$.
      \EndFor
      \State \Return the trajectory $(s_{1:H},x_{1:H}, a_{1;H}, r_{1:H})$. 
      \EndFunction

      \Function{\SimulateRegression}{$\MB_\pi, \MB_L, h, \MO, \epsilon, \delta$}
      \State \label{line:oracle-w} Set $m:= C_0 \log(1/\delta)/\ep^2$, for a sufficiently large constant $C_0$.
      \State \label{line:sample-trajs}Draw $m$ i.i.d.~trajectories $((s_{1:H}^i, x_{1:H}^i, a_{1:H}^i, r_{1:H}^i))_{i=1}^m$ by calling $\SimulateSampling(\MB_\pi, \MO)$ $m$ times.
      \State \Return the constant function $\hat \mu :=  \frac 1m \sum_{i=1}^m L(x_{1:H}^i, a_{1:H}^i, r_{1:H}^i)$. 
      \EndFunction

      \end{algorithmic}
    \end{algorithm}

    \subsubsection{Lemmas for \cref{thm:oracle-lb}}
    We first prove several lemmas useful in the proof of \cref{thm:oracle-lb}. First, we show that $\Edistinct$ occurs with high probability over $\UPhi$. 
\begin{lemma}
  \label{lem:edistinct-prob}
Given $N, H \in \BN$ with $H \leq N/4$, for $X = 2^{\lceil \log N^5 \rceil}$, the event $\Edistinct \subset \Phi_{X, H}$ satisfies $\UPhi(\Edistinct) \geq 1-1/N$. 
\end{lemma}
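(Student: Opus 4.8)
The statement to prove is \cref{lem:edistinct-prob}: that when $X = 2^{\lceil \log N^5 \rceil} \geq N^5$ and $H \leq N/4$, a random $\phi^\st \sim \UPhi$ lies in $\Edistinct$ with probability at least $1 - 1/N$. Recall that $\phi^\st = \Dec_{\til x}$ where the entries $\til x_{s,i}$ for $(s,i) \in \MS \times [N]$ are i.i.d.\ uniform on $\MX$, and that $\phi^\st \in \Edistinct$ iff all these entries are pairwise distinct (so that each $s \in \MS \setminus \{\perp\}$ has exactly $N$ preimages). So the claim is just a birthday-bound style estimate on the probability that $|\MS| \cdot N$ i.i.d.\ uniform samples from a set of size $X$ are all distinct.

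The plan is a straightforward union bound over pairs. Let $m := |\MS \setminus \{\perp\}| \cdot N = 3H \cdot N$ be the number of sampled entries (the state $\perp$ contributes no entries to $\til x$; alternatively, even counting all of $\MS$, the bound $|\MS| = 3H + 1 \leq N$ using $H \leq N/4$ suffices, so $m \leq N^2$ either way). For each of the $\binom{m}{2} \leq m^2/2$ unordered pairs of distinct index tuples $(s,i) \neq (s',i')$, the probability that $\til x_{s,i} = \til x_{s',i'}$ is exactly $1/X$, since both are independent uniform draws from $\MX$. By a union bound,
\[
\Pr_{\til x}[\,\exists\,(s,i)\neq(s',i') : \til x_{s,i} = \til x_{s',i'}\,] \;\leq\; \frac{m^2}{2X} \;\leq\; \frac{N^4}{2 N^5} \;=\; \frac{1}{2N} \;\leq\; \frac{1}{N}.
\]
Thus $\UPhi(\Edistinct) = 1 - \Pr[\text{some collision}] \geq 1 - 1/N$, which is the desired bound. (If one wants to be careful that $\Edistinct$ is precisely the no-collision event: when all $\til x_{s,i}$ are distinct, $\Dec_{\til x}$ maps each $\til x_{s,i}$ to $s$ and so $|\phi^{-1}(s)| = N$ for every $s \neq \perp$; conversely a collision $\til x_{s,i} = \til x_{s',i'}$ with $s < s'$ makes that point decode to $s$ rather than $s'$, so $|\phi^{-1}(s')| < N$. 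This equivalence is already noted in the text preceding the lemma.)

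There is essentially no obstacle here — the only things to get right are the bookkeeping of $m$ in terms of $H$ and $N$ (using $H \leq N/4$ to absorb constants, so that $3HN \leq N^2$ comfortably and hence $m^2 \leq N^4$), and the observation that the choice $X \geq N^5$ was made exactly so that the collision probability $m^2/(2X)$ is at most $1/(2N) \leq 1/N$. If desired, one could tighten to the precise bound $1 - 1/N$ without slack, but the slack version above already gives what is claimed. I would present this as a two-line union-bound argument with a sentence recalling the characterization of $\Edistinct$ as the event that all $\til x_{s,i}$ are distinct.
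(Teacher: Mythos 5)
Your proof is correct and essentially identical to the paper's, which also bounds the expected number of colliding pairs by $(4HN)^2/X \leq N^4/X \leq 1/N$ and applies Markov's inequality (equivalent to your union bound over pairs). One small nit: $\til x$ does include entries indexed by $s=\perp$ per the paper's definition of $\UPhi$, but your alternative accounting with $|\MS| \leq N$ already covers this, so nothing breaks.
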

\begin{proof}
Recall that $|\MS| \leq 4H$ and $|\MX| = X$. For $(s,i) \in \MS \times [N]$, let $\til x_{s,i}$ be independently and uniformly distributed elements of $\MX$.  For $s,s' \in \MS,\ i,i' \in [N]$ for which $(s,i) \neq (s',i')$, note that $\Pr(\til x_{s,i} = \til x_{s', i'}) = 1/X$. Let $P$ denote the number of distinct pairs $(s,i), (s',i')$ for which $\til x_{s,i} = \til x_{s',i'}$, so that $\E[P] \leq |\MS \times [N]|^2/X \leq  (4HN)^2/X$.  Thus, $\Pr(P \geq 1) \leq (4HN)^2/X \leq N^4/X$ by Markov's inequality. Using that $X \geq N^5$, we see that $P = 0$ with probability at least $1-1/N$; moreover, under the event that $P = 0$, we have that $\phi^\st := \Dec_{\til x}$ satisfies $\phi^\st \in \Edistinct$, so that $\UPhi(\Edistinct) \geq \Pr(P = 0) \geq 1-1/N$. 
\end{proof}

    The following lemmas show that conditioning on a small set $\MT \subset \MS \times [N] \times \MX$ does not significantly change the distribution $\BP^{\phi^\st, \pi}$, for any general policy $\pi$:
\begin{lemma}
  \label{lem:diff-bound}
  Fix $\til x, \til x' \in \MX^{\MS \times [N]}$ and define  \[Z := \#\{(s,i) \in \MS \times [N]: x_{s,i} \neq x_{s,i}'\}.\] 
  Furthermore suppose that all of the values $\til x_{s,i}$ are distinct. Write $\phi = \Dec_{\til x}$ and $\phi' = \Dec_{\til x'}$. Then for any general policy $\pi$,
  \begin{align}
\tvd{\BP^{\phi, \pi}}{\BP^{\phi', \pi}} \leq \frac{4HZ}{N}\nonumber.
  \end{align}
\end{lemma}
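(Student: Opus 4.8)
\textbf{Proof plan for \cref{lem:diff-bound}.}

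The plan is to couple the two trajectory distributions $\BP^{\phi,\pi}$ and $\BP^{\phi',\pi}$ by running them against the \emph{same} latent MDP dynamics and the same policy, and observing that the only place where $\phi$ and $\phi'$ enter is in the emission distributions $\MD_\phi(\cdot|s) = \Unif(\phi^{-1}(s))$ versus $\MD_{\phi'}(\cdot|s) = \Unif((\phi')^{-1}(s))$. Since the latent transitions $\til\BP$, the initial distribution $\til\BP_0$, and the rewards $\br$ are identical for $M^\phi_{X,H}$ and $M^{\phi'}_{X,H}$, a standard chain-rule / hybrid argument for total variation over trajectories reduces the bound to controlling, for each step $h$ and each reachable latent state $s$, the quantity $\tvd{\MD_\phi(\cdot|s)}{\MD_{\phi'}(\cdot|s)}$. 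More precisely, I would write the trajectory distribution as a product over $h$ of (latent transition) $\times$ (emission) $\times$ (policy, which is deterministic given history) factors, and use the standard fact that the TV distance between two such product-form distributions is at most the sum over $h$ of the expected (over the prefix) TV distance between the step-$h$ emission distributions at the realized latent state $s_h$.

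The next step is to bound $\tvd{\MD_\phi(\cdot|s)}{\MD_{\phi'}(\cdot|s)}$ for a fixed latent state $s \neq \perp$. Because all the values $\til x_{s,i}$ are distinct, $\phi = \Dec_{\til x}$ satisfies $\phi^{-1}(s) = \{\til x_{s,i} : i \in [N]\}$, a set of exactly $N$ elements, and $\MD_\phi(\cdot|s)$ is uniform on this set. The set $(\phi')^{-1}(s)$ need not have size exactly $N$, but since both are uniform distributions supported inside $\MX$, I can bound the TV distance by the number of coordinates $i$ for which $\til x_{s,i} \neq \til x'_{s,i}$: changing a single coordinate can move at most $O(1/N)$ of the mass (removing one point from the support and adding another, up to the normalization discrepancy caused by collisions in $\til x'$). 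A clean way to do this rigorously is to bound $\tvd{\MD_\phi(\cdot|s)}{\MD_{\phi'}(\cdot|s)} \le \frac{C \cdot Z_s}{N}$ where $Z_s := \#\{i \in [N]: \til x_{s,i} \neq \til x'_{s,i}\}$ and $C$ is an absolute constant, by comparing $\Unif(\{\til x_{s,i}\})$ to $\Unif((\phi')^{-1}(s))$ through the intermediate distribution $\Unif(\{\til x'_{s,i}\})$ and then accounting for possible collisions among the $\til x'_{s,i}$ (which only affect normalization and hence give a lower-order correction). One should also handle $s = \perp$ or unreachable states trivially, since $\perp$ is never visited.

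Combining the two steps: summing over the (at most $H$) steps and over the per-step latent state, and using $\sum_s Z_s = Z$ together with $|\MS| = O(H)$, gives $\tvd{\BP^{\phi,\pi}}{\BP^{\phi',\pi}} \le \sum_{h=1}^H \E_{s_h}[\tvd{\MD_\phi(\cdot|s_h)}{\MD_{\phi'}(\cdot|s_h)}] \le H \cdot \max_s \frac{C Z_s}{N} \le \frac{C H Z}{N}$, and by tracking constants (each $Z_s \le Z$, and the emission-distribution TV bound has a small absolute constant) one gets the stated bound $\frac{4HZ}{N}$. The main obstacle I anticipate is the second step: getting the constant in $\tvd{\MD_\phi(\cdot|s)}{\MD_{\phi'}(\cdot|s)} \le \frac{4Z_s}{N}$ exactly right, since $(\phi')^{-1}(s)$ can be smaller than $N$ due to collisions in $\til x'$ or due to some $\til x'_{s,i}$ being ``claimed'' by a lexicographically smaller state; one has to argue that these effects only help (shrink the support, hence only reshuffle already-accounted-for mass) and do not blow up the constant. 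Everything else — the hybrid/chain-rule decomposition over trajectory steps and the fact that the non-emission components of the two MDPs agree — is routine.
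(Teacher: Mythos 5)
Your plan follows essentially the same route as the paper's proof: both arguments use the chain rule for total variation over the trajectory (exploiting that the latent dynamics, rewards, and policy factors are identical for $M^\phi_{X,H}$ and $M^{\phi'}_{X,H}$), reduce to the per-step TV distance $\tvd{\MD_\phi(\cdot|s_h)}{\MD_{\phi'}(\cdot|s_h)}$, bound that by $O(Z/N)$, and sum over the $H$ steps.

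One caveat on your second step: the intermediate claim $\tvd{\MD_\phi(\cdot|s)}{\MD_{\phi'}(\cdot|s)} \leq C Z_s/N$ with $Z_s := \#\{i : \til x_{s,i} \neq \til x'_{s,i}\}$ is false as stated, because a changed coordinate in a \emph{different} block $t \neq s$ can, through the lexicographic tie-breaking in $\Dec_{\til x'}$, collide with an element of block $s$ and remove it from $(\phi')^{-1}(s)$ even when $Z_s = 0$. The correct per-state statement (and the one the paper proves) is that $\phi^{-1}(s)$ and $(\phi')^{-1}(s)$ differ in at most $2Z$ elements, where $Z$ is the \emph{total} number of changed coordinates; since you ultimately bound $Z_s \leq Z$ anyway, your final chain of inequalities survives once $Z_s$ is replaced by $Z$ throughout. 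Also, the normalization issue you flag (that $|(\phi')^{-1}(s)|$ may be smaller than $N$) does not "only help": the uniform measure on a smaller support puts \emph{more} mass per point, and the paper handles this by first disposing of the trivial case $Z > N/4$ and then using $|(\phi')^{-1}(s)| \geq N - 2Z \geq N/2$, which is exactly where the constant $4$ in $\frac{2Z}{N-2Z} \leq \frac{4Z}{N}$ comes from. With those two adjustments your argument matches the paper's.
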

\begin{proof}
  If $Z > N/4$ then the statement is trivially true, so we assume henceforth that $Z \leq N/4$. For any $s \in \MS$, note that $\phi^{-1}(s)$ and $(\phi')^{-1}(s)$ differ in at most $2Z$ elements. Moreover, since all of the values $\til x_{s,i}$ are distinct, $|\phi^{-1}(s)| = N$ for each $s \in \MS$. Recalling the definition $\MD_\phi(\cdot \mid s) := \Unif(\phi^{-1}(s))$, it follows that $\tvd{\MD_{\phi}(\cdot \mid s)}{\MD_{\phi'}(\cdot \mid s)} \leq \frac{2Z}{N - 2Z} \leq \frac{4Z}{N}$ for each $s \in \MS$, since $N \geq 4Z$. Thus, for any general policy $\pi$, 
  \begin{align}
    & \tvd{\BP^{\phi, \pi}}{\BP^{\phi', \pi}} \nonumber\\
    &\leq  \sum_{h=0}^{H-1} \E^{\phi, \pi} \left[ \tvd{\BP^{\phi,\pi}(x_{h+1} = \cdot \mid s_{1:h+1}, x_{1:h}, a_{1:h})}{\BP^{{\phi',\pi}}(x_{h+1} = \cdot \mid s_{1:h+1}, x_{1:h}, a_{1:h})}\right]\nonumber\\
    &= \sum_{h=1}^H \E^{\phi, \pi} \left[ \tvd{\MD_\phi(\cdot \mid s_h)}{\MD_{\phi'}(\cdot \mid s_h)} \right]\nonumber\\
    &\leq 4ZH/N\nonumber,
  \end{align}
  where the first inequality uses \cref{lem:tvd-chain} as well as the fact that the latent state transitions of $M_{X,H}^\phi, M_{X,H}^{\phi'}$ are identical. 
\end{proof}

\begin{lemma}
  \label{lem:coupling-uphi}
  Consider any mapping $\MT : \MS \times [N] \to \MX \cup \{ \perp\}$. Then there is a coupling $\SV \in \Delta(\Phiset \times \Phiset)$ of $\UPhi$ and $\UPhi|_{\MT}$ so that, for any general policy $\pi$, for $(\phi, \phi') \sim \SV$, under the event that either $\phi$ or $\phi'$ satisfies $\Edistinct$, 
  \begin{align}
\tvd{\BP^{M_{X,H}^\phi, \pi}}{\BP^{M_{X,H}^{\phi'}, \pi}} \leq \frac{4HZ}{N}\label{eq:phi-phip-close},
  \end{align}
  where $Z = |\{(s,i) \in \MS \times [N] \ : \ \MT(s,i) \neq \perp\}|$.
\end{lemma}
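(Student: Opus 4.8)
\textbf{Proof plan for \cref{lem:coupling-uphi}.} The plan is to build the coupling $\SV$ by a natural ``shared randomness'' construction: sample a single random table $\til x = (\til x_{s,i})_{(s,i) \in \MS \times [N]}$ with all entries i.i.d.\ uniform on $\MX$, and then define $\phi := \Dec_{\til x}$ together with a modified table $\til x'$ that agrees with $\til x$ outside $\dom(\MT)$ and is overwritten to equal $\MT(s,i)$ on $\dom(\MT)$, setting $\phi' := \Dec_{\til x'}$. By construction $\phi \sim \UPhi$, and since the entries of $\til x'$ are independent, uniform on $\MX$ outside $\dom(\MT)$, and fixed to $\MT(s,i)$ on $\dom(\MT)$, we have $\phi' \sim \UPhi|_\MT$ exactly as required. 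The key point is that $\til x$ and $\til x'$ differ only on coordinates $(s,i) \in \dom(\MT)$, so the Hamming-type count $\#\{(s,i) : \til x_{s,i} \neq \til x'_{s,i}\}$ is at most $|\dom(\MT)| = Z$.

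First I would verify the two marginal claims (that $\phi \sim \UPhi$ and $\phi' \sim \UPhi|_\MT$), which are immediate from the definitions of $\UPhi$, $\UPhi|_\MT$, and $\Dec$. Then, to get \cref{eq:phi-phip-close}, I would invoke \cref{lem:diff-bound}. That lemma requires the reference table to have distinct entries; so I would condition on whichever of $\phi$ or $\phi'$ lies in $\Edistinct$. If $\phi \in \Edistinct$ then all entries of $\til x$ are distinct, and \cref{lem:diff-bound} applied with the pair $(\til x, \til x')$ (in that order, so $\til x$ plays the role of the ``distinct'' table) yields $\tvd{\BP^{M_{X,H}^\phi,\pi}}{\BP^{M_{X,H}^{\phi'},\pi}} \leq 4HZ/N$. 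If instead $\phi' \in \Edistinct$, then all entries of $\til x'$ are distinct, and I would apply \cref{lem:diff-bound} with the pair $(\til x', \til x)$, using symmetry of total variation distance, to get the same bound. In either case the conclusion holds, which is exactly the event hypothesized in the lemma statement.

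The only mild subtlety — and the step I would be most careful about — is matching the hypotheses of \cref{lem:diff-bound} precisely: that lemma is stated for a fixed pair $\til x, \til x'$ with \emph{all values of $\til x_{s,i}$ distinct}, and with $Z$ defined as the number of disagreeing coordinates. In our coupling the disagreeing coordinates are a subset of $\dom(\MT)$, so $\#\{(s,i): \til x_{s,i}\neq \til x'_{s,i}\} \leq Z$; since the bound $4HZ/N$ in \cref{lem:diff-bound} is monotone in this count, replacing it by $Z$ is harmless. I do not anticipate any real obstacle here; the argument is essentially a bookkeeping exercise of threading \cref{lem:diff-bound} through the two symmetric cases, and everything else follows directly from the definitions of $\Dec_{(\cdot)}$, $\UPhi$, and $\UPhi|_\MT$.
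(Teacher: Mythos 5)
Your proposal is correct and is essentially the same argument as the paper's: both construct the coupling by drawing a single uniform table $\til x$, overwriting the coordinates in $\dom(\MT)$ to obtain $\til x'$, setting $(\phi,\phi') := (\Dec_{\til x}, \Dec_{\til x'})$, and then invoking \cref{lem:diff-bound} on the event that one of the two decoders lies in $\Edistinct$. Your extra care about the two symmetric cases and the fact that the number of disagreeing coordinates is at most $Z$ is exactly the bookkeeping the paper's proof implicitly performs.
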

\begin{proof}
  Consider a random variable $(\til x, \til x')$, where $\til x_{s,i} \sim \Unif(\MX)$ i.i.d.~for each $(s,i) \in \MS \times [N]$, and where
  \begin{align}
    \til x'_{s,i} = \begin{cases}
      \til x_{s,i} &: \MT(s,i) = \perp \\
      \MT(s,i) &: \MT(s,i) \neq \perp.
    \end{cases}\nonumber
  \end{align}
  Let $\phi := \Dec_{\til x}$ and $\phi' := \Dec_{\til x'}$, and $\SV$ be the distribution of $(\phi, \phi')$. Note that $\SV$ is a coupling of $\UPhi$ and $\UPhi|_\MT$. Moreover, under the event that either $\phi$ or $\phi'$ satisfies $\Edistinct$ (i.e., $\til x_{s,i}$ are distinct for all $s,i$, or $\til x_{s,i}'$ are distinct for all $s,i$), the conditions of \cref{lem:diff-bound} are satisfied, and thus \cref{eq:phi-phip-close} holds.
\end{proof}

The following lemma shows that the latent MDP underlying $\Moracle$ satisfies open-loop indistinguishability (\cref{def:open-loop-indist}). In fact, there is a simple expression for the probability of encountering each given latent state given that any fixed action sequence is played: 
\begin{lemma}
  \label{lem:qs-formula}
Fix $h \in [H]$, $s_h \in \MS$, and a sequence $a_{1:H} \in \MA^H$. Then it holds that
  \begin{align}
\sum_{ s_{-h} \in \MS^{H-1}} \prod_{g=1}^H \til \BP( s_g |  s_{g-1}, a_{g-1}) = \begin{cases}
      1-2^{1-h} &:  s_h = (h, 2) \\
      2^{-h} &:  s_h \in \{ (h,0), (h,1) \}
    \end{cases}\label{eq:qs-lem-formula}
    =: Q_h(s_h)
  \end{align}
where $s_{-h} \in \MS^{H-1}$ represents a vector indexed by $[H] \setminus \{h\}$, and $\til \BP(s_1|s_0,a_0)$ formally denotes $\til \BP_0(s_1)$.
\end{lemma}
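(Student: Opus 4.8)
\textbf{Proof plan for \cref{lem:qs-formula}.}

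The plan is to prove this by a direct induction on $h$, computing the quantity $Q_h(s_h) := \sum_{s_{-h}} \prod_{g=1}^H \til\BP(s_g \mid s_{g-1}, a_{g-1})$ step by step. First I would observe that the product $\prod_{g=1}^H \til\BP(s_g \mid s_{g-1}, a_{g-1})$, when summed over all coordinates $s_g$ for $g > h$, simply collapses to $1$ (since for each such coordinate we are summing a probability distribution over its support), and likewise the sum over $s_g$ for $g < h$ telescopes through the Chapman--Kolmogorov relation. Thus $Q_h(s_h)$ is exactly the probability that the open-loop policy playing $a_{1:h-1}$ lands in latent state $s_h$ at step $h$; formally, $Q_h(s_h) = \sum_{s_{h-1} \in \MS} Q_{h-1}(s_{h-1}) \til\BP(s_h \mid s_{h-1}, a_{h-1})$, with base case $Q_1(s_1) = \til\BP_0(s_1) = \tfrac12 \mathbbm{1}[s_1 \in \{(1,0),(1,1)\}]$, matching \cref{eq:qs-lem-formula} for $h=1$ since $1 - 2^{1-1} = 0$ and $2^{-1} = \tfrac12$.

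Next I would carry out the inductive step using the explicit latent transitions. Suppose $Q_{h-1}$ has the claimed form. The only states reachable at step $h-1$ with positive probability are $(h-1,0)$, $(h-1,1)$, and $(h-1,2)$, with $Q_{h-1}((h-1,2)) = 1 - 2^{2-h}$ and $Q_{h-1}((h-1,0)) = Q_{h-1}((h-1,1)) = 2^{1-h}$. From the transition rules: the failure state $(h,2)$ is reached from $(h-1,2)$ with probability $1$ (regardless of $a_{h-1}$), and from $(h-1, 1-a_{h-1})$ with probability $1$. The states $(h,0)$ and $(h,1)$ are each reached only from $(h-1, a_{h-1})$, each with probability $\tfrac12$. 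Therefore $Q_h((h,0)) = Q_h((h,1)) = \tfrac12 \cdot Q_{h-1}((h-1,a_{h-1})) = \tfrac12 \cdot 2^{1-h} = 2^{-h}$, and $Q_h((h,2)) = Q_{h-1}((h-1,2)) + Q_{h-1}((h-1,1-a_{h-1})) = (1 - 2^{2-h}) + 2^{1-h} = 1 - 2^{1-h}$; all other states at step $h$ have probability $0$, consistent with the formula since $\MS[h] = \{h\}\times\{0,1,2\}$. This completes the induction.

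Finally, I would note that the formula manifestly does not depend on the action sequence $a_{1:H}$ (the symmetry between $a_{h-1} = 0$ and $a_{h-1} = 1$ in the transition rules makes the dependence cancel at each step), which is precisely the open-loop indistinguishability claim. I do not anticipate a genuine obstacle here; the only mild care needed is bookkeeping with the convention $\til\BP(s_1 \mid s_0, a_0) := \til\BP_0(s_1)$ and making sure the "marginalize-out-future-coordinates" step is stated cleanly (it follows from the fact that $\sum_{s_g} \til\BP(s_g \mid s_{g-1}, a_{g-1}) = 1$ applied from $g = H$ down to $g = h+1$).
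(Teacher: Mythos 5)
Your proof is correct, but it is organized differently from the paper's. You compute $Q_h(s_h)$ by forward induction: marginalize out the coordinates $s_g$ for $g>h$ using $\sum_{s_g}\til\BP(s_g\mid s_{g-1},a_{g-1})=1$, identify $Q_h$ with the step-$h$ marginal of the open-loop sequence, and verify the recursion $Q_h(s_h)=\sum_{s_{h-1}}Q_{h-1}(s_{h-1})\til\BP(s_h\mid s_{h-1},a_{h-1})$ against the explicit transition rules, checking the closed form $\bigl(1-2^{1-h},\,2^{-h},\,2^{-h}\bigr)$ at each step. The paper instead works with whole latent trajectories: it classifies every positive-probability sequence $s_{1:H}$ by the first time $j^\st$ the failure state is reached, observes that the sequence is then essentially determined and its product weight is $2^{1-j^\st}$ (with the appropriate treatment of never-failing sequences), and sums the resulting geometric series separately for $s_h=(h,2)$ and $s_h\in\{(h,0),(h,1)\}$. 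Both arguments are elementary and yield \cref{eq:qs-lem-formula}; your inductive route trades the paper's trajectory classification and geometric-series bookkeeping for a three-case recursion check, and it makes the independence of $Q_h$ from $a_{1:H}$ (the open-loop indistinguishability) visible step by step rather than at the end. The only point to state carefully in your writeup is the one you already flag: the marginalization over future coordinates uses that $\til\BP(\cdot\mid s,a)$ is a normalized distribution for every $s\in\MS$ (including $\perp$), whereas the paper's enumeration sidesteps this by only ever summing over sequences with positive product weight.
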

\begin{proof}
  Consider any sequence $s_{1:H} \in \MS^H$ so that $\prod_{g=1}^H \til \BP(s_g | s_{g-1}, a_{g-1}) > 0$.  Let $j^\st(s_{1:H}) := \min \{ j \in [H] \ : \ s_j = (j,2) \}$, with $j^\st(s_{1:H}) = H+1$ if the minimum is over an empty set. We must have $s_g = (g,2)$ for all $g > j$. Moreover, for all $g < \min\{j-1, H\}$, we must have $s_g = (g,a_g)$, and for $g=j-1$, we must have $s_g = (g, 1-a_g)$. For such a sequence $s_{1:H}$, we therefore have $\prod_{g=1}^H \til \BP(s_g | s_{g-1}, a_{g-1}) = 2^{1-j^\st(s_{1:H})}$ if $j^\st(s_{1:H}) < H$ and $\prod_{g=1}^H \til \BP(s_g | s_{g-1}, a_{g-1}) = 2^{2-H}$ if $j^\st(s_{1:H}) = H$.

  If $s_h = (h,2)$, then we must have $j^\st(s_{1:H}) \leq h$, and so the quantity in \cref{eq:qs-lem-formula} is equal to $\sum_{j=1}^h 2^{1-j} = 2^{1-h}$. If $s_h \in \{(h,0), (h,1)\}$, then we must have $j^\st(s_{1:H}) > h$, and so the quantity in \cref{eq:qs-lem-formula} is equal to $\sum_{j=h+1}^{H-1} 2^{1-j} + 2^{2-H} = 2^{-h}$, as desired.
\end{proof}

\begin{lemma}
  \label{lem:pq-approx}
  Consider any general policy $\pi$, $h \in [H]$ and $\bar s_h = (h,b) \in \MS$, as well as $\delta \in (0,1)$. There is an event $\MF_{\pi, h, \bar s_h}$ that occurs with probability $1-\delta$ over the randomness of $\phi^\st \sim \UPhi$, so that, under $\MF_{\pi, h, \bar s_h} \cap \Edistinct$, we have
  \begin{align}
\left| \BP^{\phi^\st, \pi}[s_h = \bar s_h] - Q(\bar s_h) \right| \leq \sqrt{\frac{\log 2/\delta}{N}}\nonumber,
  \end{align}
  where $Q(\bar s_h)$ is as defined in \cref{eq:qs-lem-formula}.
\end{lemma}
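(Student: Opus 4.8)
\textbf{Proof proposal for \cref{lem:pq-approx}.}

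The plan is to express $\BP^{\phi^\st, \pi}[s_h = \bar s_h]$ as an expectation over action sequences of an empirical quantity, and then apply a concentration inequality over the randomness of $\phi^\st \sim \UPhi$. First I would observe that by the open-loop indistinguishability structure established in \cref{lem:qs-formula}, for a fixed emission sequence $x_{1:H}$ (and hence a fixed action sequence $a_{1:H}$ determined by $\pi$), conditioning the trajectory on the observations $x_{1:H}$ decouples the latent path: the latent state $s_h$ is distributed according to the product of transition probabilities along the path, \emph{except} that the observations $x_{1:H}$ carry information about the latent states through $\phi^\st$. The key point is that, under $\Edistinct$, each emission $x$ decodes deterministically to $\phi^\st(x)$, and for a uniformly random emission $x \sim \Unif(\MX)$ the decoded state $\phi^\st(x)$ equals a particular $s$ with probability exactly $N/X$ (independently across the $|\MS \backslash \{\perp\}|$ non-sink states), which is tiny by choice of $X = 2^{\lceil \log N^5 \rceil}$.

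The cleaner route, which I would actually pursue: run the \texttt{SimulateSampling} argument in reverse. Consider drawing a trajectory from $M_{X,H}^{\phi^\st}$ under $\pi$. The latent states $s_{1:H}$ are generated by the true latent dynamics, and given $s_g$ the emission $x_g$ is a uniformly random element of $(\phi^\st)^{-1}(s_g)$, i.e.\ $x_g = \til x_{s_g, i_g}$ for $i_g \sim \Unif([N])$ (using that $\phi^\st = \Dec_{\til x}$ and $\phi^\st \in \Edistinct$). Crucially, the \emph{distribution} of the trajectory $(s_{1:H}, a_{1:H})$ produced this way does not depend on the values $\til x_{s,i}$ at all: the actions are $a_g = \pi_g(x_{1:g}, a_{1:g-1})$, and conditioned on the indices $i_{1:h}$ being distinct (no collision among the $\leq H$ sampled emissions, which holds with probability $\geq 1 - H^2/N$ deterministically over $\phi^\st$), the emissions $x_{1:g}$ are exchangeable fresh draws that reveal nothing. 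Hence $\BP^{\phi^\st,\pi}[s_h = \bar s_h] = \E_{a_{1:H} \sim \pi}\!\left[\sum_{s_{-h}} \prod_g \til\BP(s_g|s_{g-1},a_{g-1})\right] + (\text{error} \leq H^2/N) = Q(\bar s_h) + O(H^2/N)$, using \cref{lem:qs-formula} and the fact that $Q(\bar s_h)$ is constant over all action sequences.

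Actually this gives a deterministic bound, which is cleaner but might be slightly too strong to match the claimed $\sqrt{\log(2/\delta)/N}$ rate unless collisions are the dominant error — so the honest version is: condition on $\Edistinct$, and let $\MF_{\pi,h,\bar s_h}$ be the (probability $1$, or the relevant high-probability) event; the only randomness that matters is whether the $\leq H$ emissions sampled within an episode collide in index, and a union bound / martingale argument over the $m$-free single-episode process gives the stated deviation. \textbf{The main obstacle} I anticipate is handling the dependence correctly when $\pi$ is adaptive: the action $a_g$ depends on $x_{1:g}$, which depend on $\phi^\st$, so one cannot naively say the trajectory distribution is $\phi^\st$-independent. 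The resolution is to argue exchangeability/symmetry — the collection $(\til x_{s,i})$ is i.i.d.\ uniform, so conditioned on the \emph{set} of $(s,i)$ indices visited, the actual emission values are a uniformly random injection into $\MX$, and since $\pi$ only sees these values (not $\phi^\st$ globally), its behavior is statistically identical to running on freshly drawn uniform emissions that are then retroactively consistent. Making this coupling precise — essentially showing $\sigma_\pi$ (\cref{def:random-trajectory}) is the right surrogate — is the crux, and it is where the choice of $X \gg N^5$ and \cref{lem:edistinct-prob} get used to control collision probabilities.
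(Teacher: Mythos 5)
Your opening computation is on the right track and matches the first half of the paper's argument: averaging over $\phi^\st \sim \UPhi$, the actions seen by $\pi$ are distributed as under $\sigma_\pi$ (\cref{def:random-trajectory}), and since \cref{lem:qs-formula} gives the \emph{same} value $Q_h(\bar s_h)$ for every action sequence, one gets $\E_{\UPhi}\bigl[\BP^{\phi^\st,\pi}[s_h=\bar s_h]\bigr] = Q(\bar s_h)$. The gap is in the next step: your claim that this holds \emph{deterministically} for every $\phi^\st \in \Edistinct$ up to an error $O(H^2/N)$ is false, and collision accounting cannot repair it. For a fixed decoder, $\BP^{\phi^\st,\pi}[s_h=\bar s_h]$ is an exact population quantity (there is no episode-sampling noise in it); it depends on $\phi^\st$ through empirical fractions of the form $\frac1N\#\{i : \pi \text{ acts a given way on } \til x_{s,i}\}$. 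Concretely, for $h=2$ and a deterministic $\pi$, $\BP^{\phi^\st,\pi}[s_2=(2,0)] = \frac14\bigl(\frac1N\#\{i:\pi_1(\til x_{(1,0),i})=0\} + \frac1N\#\{i:\pi_1(\til x_{(1,1),i})=1\}\bigr)$, and each fraction deviates from its $\Unif(\MX)$ mean by $\Theta(N^{-1/2})$ for a typical draw of the $\til x$'s. So the fluctuation is genuinely of order $N^{-1/2}$, it lives in the randomness of $\phi^\st$, and this is precisely why the lemma is a high-probability statement with the event $\MF_{\pi,h,\bar s_h}$ and rate $\sqrt{\log(2/\delta)/N}$. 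Moreover, within-episode index collisions are a red herring: the latent states visited in one episode lie in distinct layers, so under $\Edistinct$ their preimages are disjoint and no collision can occur; your $O(H^2/N)$ error term has no source.

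What is missing is a concentration argument over the i.i.d. emission values, which is how the paper finishes. One writes $\BP^{\phi^\st,\pi}[s_h=\bar s_h] = F\bigl((\til x_{s,i})_{s\in\MS, i\in[N]}\bigr)$, where $F$ averages over index tuples $\bi \in [N]^H$ the probability of reaching $\bar s_h$ under the action sequence $\pi$ produces on the emissions indexed by $\bi$; changing a single coordinate $\til x_{s,i}$ (with $s$ reachable at step $k$) only affects tuples with $i_k = i$, i.e.\ a $1/N$ fraction, so $F$ has bounded differences $1/N$ and McDiarmid's inequality (\cref{thm:mcdiarmid}) yields the stated deviation around $\E_{\UPhi}[F] = Q(\bar s_h)$. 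Your proposed ``union bound / martingale argument over the single-episode process'' does not substitute for this, because the randomness to be controlled is in $\phi^\st$, not in the episode; to complete your proof you would need to add exactly this bounded-differences step (or an equivalent concentration bound for functions of the $|\MS|\times N$ independent uniform values).
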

\begin{proof}
Consider any $\bar s_h \in \MS$ of the form $\bar s_h = (h,b)$.  To begin, fix any $\til x \in \MX^{\MS\times [N]}$ and $\phi := \Dec_{\til x}$ with $\phi \in \Edistinct$. For any $x \in \MX$ and $s \in \MS$, we may write $\MD_\phi(x | s) := \frac 1N \sum_{i=1}^N \One\{ x = \til x_{s,i} \}$. Thus, we may compute that
  \begin{align}
    & \BP^{\phi, \pi}[s_h = \bar s_h]\nonumber\\
    &= \sum_{\bar s_{-h} \in \MS^{H-1}} \sum_{\bar x_{1:H} \in \MX^H} \BP^{\phi, \pi}[s_{1:H} = \bar s_{1:H}, x_{1:H} = \bar x_{1:H}]\nonumber\\
    &= \sum_{\bar s_{-h} \in \MS^{H-1}}\sum_{\bar x_{1:H} \in \MX^H}  \frac{1}{N^H} \sum_{\bi = (i_1, \ldots, i_H) \in [N]^H} \prod_{g=1}^H \One\{\bar x_g = \til x_{\bar s_g, i_g} \} \cdot \til \BP(\bar s_g | \bar s_{g-1}, a_{g-1}^{\bi}) \nonumber\\
    &= \frac{1}{N^H} \sum_{\bi = (i_1, \ldots, i_H) \in [N]^H} \sum_{\bar s_{-h} \in \MS^{H-1}}  \prod_{g=1}^H \til \BP(\bar s_g| \bar s_{g-1},  a_{g-1}^{\bi}) \label{eq:pphi-prods},
  \end{align}
  where above we have inductively defined $a_g^\bi \in \MA$ by $a_g^\bi := \pi_g(a_{1:g-1}^\bi, (\til x_{\bar s_1, i_1}, \ldots, \til x_{\bar s_g, i_g}))$ (note that $a_g^\bi$ depends on $\bar s_{1:H}$; we omit this dependence).

  Now recall that $\phi^\st \sim \UPhi$ is defined as $\phi^\st := \Dec_{\til x}$ where $\til x_{s,i} \sim \Unif(\MX)$ are i.i.d.~for all $s \in \MS$ and $i \in [N]$. Thus, for each $\bi = i_{1:H} \in [N]^H$, it holds from \cref{def:random-trajectory} that
  \begin{align}
\E_{\UPhi} \left[ \sum_{\bar s_{-h} \in \MS^{H-1}}  \prod_{h=1}^H \til \BP(\bar s_h| \bar s_{h-1}, a_{h-1}^{\bi}) \right] &= \E_{a_{1:H} \sim \sigma_\pi} \left[ \sum_{\bar s_{-h} \in \MS^{H-1}} \prod_{g=1}^H \til \BP(\bar s_g | \bar s_{g-1}, a_{g-1}) \right]  = Q_h(\bar s_h)\label{eq:sums-pq},
  \end{align}
  where the second equality uses \cref{lem:qs-formula}. Let us define $F : \MX^{\MS\times[N]} \to \BR$ by
  \begin{align}
F((y_{s,i})_{s\in\MS, i \in [N]}) := \frac{1}{N^H} \sum_{\bi = i_{1:H} \in [N]^H} \sum_{\bar s_{-h} \in \MS^{H-1}} \prod_{g=1}^H \til \BP(\bar s_g | \bar s_{g-1}, a_{g-1}^\bi)\nonumber,
  \end{align}
  where $a_g^\bi = \pi_g(a_{1:g-1}^\bi, (y_{\bar s_1,i_1}, \ldots, y_{\bar s_g,i_g}))$, so that $\BP^{\phi^\st,\pi}[s_h=\bar s_h] = F((\til x_{s,i})_{s\in\MS,i\in[N]})$. Note that $F$ satisfies the bounded differences property \cref{eq:bd} with $c_{s,i} = 1/N$ for all $i \in [N], s \in \MS$. Thus, by \cref{thm:mcdiarmid}, there is some event $\MF_{\pi, h, \bar s_h}$ that occurs with probability $1-\delta$ so that, under $\MF_{\pi, h, \bar s_h}$, we have that
  \begin{align}
\left| Q(\bar s_h) - \BP^{\phi^\st, \pi}[s_h = \bar s_h] \right| &= \left| F((\til x_{s,i})_{s \in \MS,i \in [N]}) - \E_{\UPhi}[F((\til x_{s,i})_{s \in \MS,i \in [N]})] \right| \leq \sqrt{\frac{\log 2/\delta}{N}}\nonumber,
  \end{align}
  where the equality above uses \cref{eq:sums-pq,eq:pphi-prods}.
\end{proof}

In the following lemma we show that for any policy $\pi$ and label function $L$ that are independent of the true decoding function $\phi^\st$, it holds with high probability over the choice of decoding function that the optimal predictor $G(s)$ (of the label given the current state) is close to constant. The proof crucially uses the fact that for any fixed action sequence, the induced state visitation distribution at each step does not depend on the action sequence.

  \begin{lemma}
    \label{lem:constant-fn-predictor}
    Fix $\delta \in (0,1)$, $h \in [H]$, a general policy $\pi$, and a function $L : \MX^H \times \MA^H \to [0,1]$. Let  $G(s) := \E^{\phi^\st, \pi}[L(x_{1:H}, a_{1:H}) \mid s_h =s]$. Then there is an event $\ME'_{\pi, L, h}$ occurring with probability at least $1-\delta$ over the choice of $\phi^\st\sim \UPhi$, so that, under $\ME'_{\pi, L, h} \cap \Edistinct$, the following holds: the value $\mu_{L,\pi} := \E_{\sigma_\pi}[L(x_{1:H}, a_{1:H})]$ satisfies  $\E^{\phi^\st, \pi}[|G(s_h) - \mu_{L, \pi}|] \leq \ep$, where $\ep = C_{\ref{lem:constant-fn-predictor}} \sqrt{\log(1/\delta)/N}$, for some constant $C_{\ref{lem:constant-fn-predictor}}$.
  \end{lemma}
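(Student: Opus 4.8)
\textbf{Proof proposal for \cref{lem:constant-fn-predictor}.}

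The plan is to express the conditional expectation $G(s_h)$ in terms of the random emission variables $\til x_{s,i}$, view $\E^{\phi^\st,\pi}[|G(s_h) - \mu_{L,\pi}|]$ as a function of these underlying i.i.d.~variables, identify its expectation over $\UPhi$ with something close to zero, and then apply McDiarmid's inequality (\cref{thm:mcdiarmid}) to conclude concentration. This mirrors the structure of the proof of \cref{lem:pq-approx}, but now the quantity of interest is a weighted average of label values rather than a visitation probability.

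First I would fix $\til x \in \MX^{\MS \times [N]}$ with $\phi := \Dec_{\til x} \in \Edistinct$, so that $\MD_\phi(x \mid s) = \frac1N \sum_{i=1}^N \One\{x = \til x_{s,i}\}$ for each $s \in \MS$. Expanding $G(s_h) = \E^{\phi,\pi}[L(x_{1:H},a_{1:H}) \mid s_h]$ analogously to \cref{eq:pphi-prods}, one writes $\BP^{\phi,\pi}[s_h = \bar s_h] \cdot G(\bar s_h)$ as a sum over multi-indices $\bi \in [N]^H$ (with $i_h$ ranging freely) of products $\prod_{g} \til\BP(\bar s_g \mid \bar s_{g-1}, a_{g-1}^\bi) \cdot L((\til x_{\bar s_g, i_g})_g, a_{1:H}^\bi)$, where $a_g^\bi$ is defined inductively from $\pi$ and the emissions $\til x_{\bar s_1, i_1},\dots,\til x_{\bar s_g, i_g}$ exactly as in \cref{lem:pq-approx}. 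The key observation is that, because the latent transition probabilities $\prod_g \til\BP(\bar s_g \mid \bar s_{g-1}, a_{g-1})$ depend on $\bar s_{1:H}$ and $a_{1:H}$ but \emph{summing over} $\bar s_{-h}$ gives a quantity independent of the action sequence (\cref{lem:qs-formula}), the averaged quantity $\sum_{\bar s_{-h}} \BP^{\phi,\pi}[s_h = \bar s_h, \text{rest}] G(\bar s_h)$ collapses, in expectation over $\UPhi$, to $Q_h(\bar s_h) \cdot \mu_{L,\pi}$ by \cref{def:random-trajectory}. Concretely, I would define $F_{\bar s_h}((y_{s,i})_{s,i}) := \BP^{\phi^\st,\pi}[s_h = \bar s_h] \cdot G(\bar s_h)$ as a function of the emission table, verify that $\E_{\UPhi}[F_{\bar s_h}] = Q_h(\bar s_h) \cdot \mu_{L,\pi}$ (using that the $\til x_{s,i}$ are i.i.d.~uniform, so that the induced action sequence has law $\sigma_\pi$ when we sum the transition weights over $\bar s_{-h}$), and check the bounded-differences constant: changing one $\til x_{s,i}$ affects at most $N^{H-1}$ of the $N^H$ multi-indices and changes each summand by $O(1)$, so $F_{\bar s_h}$ satisfies \cref{eq:bd} with $c_{s,i} = O(1/N)$.

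Then I would apply \cref{thm:mcdiarmid} to each of the $O(H)$ functions $F_{\bar s_h}$ (for $\bar s_h \in \{(h,0),(h,1),(h,2)\}$) with failure probability $\delta/(4H)$, and take $\ME'_{\pi,L,h}$ to be the intersection of the resulting events, which occurs with probability at least $1-\delta$. On $\ME'_{\pi,L,h} \cap \Edistinct$, we get $|F_{\bar s_h}((\til x_{s,i})_{s,i}) - Q_h(\bar s_h)\mu_{L,\pi}| \leq O(\sqrt{\log(H/\delta)/N})$ simultaneously for all $\bar s_h$. Finally, to convert this into the $L^1$ bound on $G(s_h) - \mu_{L,\pi}$, I would write
\[
\E^{\phi^\st,\pi}[|G(s_h) - \mu_{L,\pi}|] = \sum_{\bar s_h} \BP^{\phi^\st,\pi}[s_h = \bar s_h] \cdot |G(\bar s_h) - \mu_{L,\pi}|,
\]
bound $|G(\bar s_h) - \mu_{L,\pi}| \cdot \BP^{\phi^\st,\pi}[s_h = \bar s_h] \leq |F_{\bar s_h} - Q_h(\bar s_h)\mu_{L,\pi}| + \mu_{L,\pi} |\BP^{\phi^\st,\pi}[s_h=\bar s_h] - Q_h(\bar s_h)|$ (a triangle inequality splitting off the perturbation in the visitation probability, for which I invoke \cref{lem:pq-approx} applied at the same failure level and folded into $\ME'_{\pi,L,h}$), and sum over the $O(1)$ values of $\bar s_h$. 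Since $\mu_{L,\pi} \in [0,1]$ and $Q_h$ is bounded, this yields the claimed bound with $\ep = C_{\ref{lem:constant-fn-predictor}}\sqrt{\log(1/\delta)/N}$ after absorbing the $\log H$ and constant factors.

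The main obstacle I anticipate is the bookkeeping in the first step: carefully justifying that $\E_{\UPhi}[F_{\bar s_h}] = Q_h(\bar s_h)\mu_{L,\pi}$ requires being precise that the action sequence $a^\bi_{1:H}$ induced by feeding $\pi$ the emissions $(\til x_{\bar s_1, i_1}, \dots, \til x_{\bar s_H, i_H})$ has, in expectation over the i.i.d.~uniform $\til x$, exactly the distribution $\sigma_\pi$ — and that this holds for the purpose of averaging even though the transition weights $\prod_g \til\BP(\bar s_g \mid \bar s_{g-1}, a^\bi_{g-1})$ couple $\bar s_{1:H}$ to the actions. The resolution is that we sum over all $\bar s_{-h}$ \emph{before} taking the expectation, and for any fixed realized action sequence $a_{1:H}$ the sum $\sum_{\bar s_{-h}} \prod_g \til\BP(\cdot)$ equals $Q_h(\bar s_h)$ regardless of $a_{1:H}$ (this is precisely \cref{lem:qs-formula}), so the coupling disappears and the remaining expectation over the emissions that feed into $L$ is exactly $\mu_{L,\pi}$. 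Everything else is routine concentration and triangle inequalities.
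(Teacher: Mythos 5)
Your proposal is correct and follows essentially the same route as the paper's proof: expand $\BP^{\phi^\st,\pi}[s_h=\bar s_h]\cdot G(\bar s_h)$ as a sum over multi-indices, identify its $\UPhi$-expectation with $Q_h(\bar s_h)\,\mu_{L,\pi}$ via \cref{lem:qs-formula} and the law $\sigma_\pi$ (crucially using that $L$ does not depend on the latent states), apply McDiarmid with bounded differences $1/N$, and combine with \cref{lem:pq-approx} through the same triangle-inequality decomposition. The only cosmetic difference is your union bound at level $\delta/(4H)$ over what are in fact only the three states $\bar s_h \in \{h\}\times\{0,1,2\}$; a constant number of events suffices (as in the paper), which avoids the spurious $\log H$ factor you would otherwise need to absorb.
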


  \begin{proof}
    
  Consider any $\bar s_h \in \MS$ of the form $\bar s_h = (h,b)$.  To begin, fix any $\til x \in \MX^{\MS\times [N]}$ and $\phi := \Dec_{\til x}$ with $\phi \in \Edistinct$. For any $x \in \MX$ and $s \in \MS$, we may write $\MD_\phi(x | s) := \frac 1N \sum_{i=1}^N \One\{ x = \til x_{s,i} \}$. Thus, we may compute that
  \begin{align}
    & \BP^{\phi, \pi}[s_h = \bar s_h] \cdot \E^{\phi, \pi}[L(x_{1:H}, a_{1:H}) \mid s_h = \bar s_h]\nonumber\\
    =& \sum_{\bar s_{-h} \in \MS^{H-1}} \sum_{\bar a_{1:H} \in \MA^H, \bar x_{1:H} \in \MS^H} \BP^{\phi, \pi}[s_{1:H} = \bar s_{1:H}, x_{1:H} = \bar x_{1:H}, a_{1:H} = \bar a_{1:H}] \cdot L(\bar x_{1:H}, \bar a_{1:H})\nonumber\\
    =& \sum_{\bar s_{-h} \in \MS^{H-1}} \sum_{\bar x_{1:H} \in \MX^H} \frac{1}{N^H} \sum_{\bi = i_{1:H} \in [N]^H} \left(\prod_{g=1}^H \One \{ \bar x_g = \til x_{i_g, \bar s_g} \} \cdot \til \BP(\bar s_g | \bar s_{g-1}, a_g^\bi)\right) \cdot L(\bar x_{1:H}, a_{1:H}^\bi)\nonumber\\
    =& \sum_{\bar s_{-h} \in \MS^{H-1}} \frac{1}{N^H} \sum_{\bi = i_{1:H} \in [N]^H} \left(\prod_{g=1}^H \til \BP(\bar s_g | \bar s_{g-1}, a_g^\bi)\right) \cdot L((\til x_{i_1, \bar s_1}, \ldots, \til x_{i_H, \bar s_H}), a_{1:H}^\bi)\label{eq:pphi-expand},
  \end{align}
  where above we have inductively defined $a_g^\bi \in \MA$ by $a_g^\bi = \pi_g(a_{1:g-1}^\bi, (\til x_{i_1, \bar s_1}, \ldots, \til x_{i_g, \bar s_g}))$ (note that $a_g^\bi$ depends on $\bar s_{1:H}$; we omit this dependence). 
  
Now recall that $\phi^\st \sim \UPhi$ is defined as $\phi^\st := \Dec_{\til x}$ where $\til x_{s,i} \sim \Unif(\MX)$ are i.i.d.~for all $s \in \MS$ and $i \in [N]$. This means that for any fixed sequences $\bi = i_{1:H} \in [N]^H$ and $\bar s_{1:H} \in \MS^H$, we have $((\til x_{i_1, \bar s_1}, \ldots, \til x_{i_H, \bar s_H}), a_{1:H}^\bi) \sim \sigma_\pi$ (\cref{def:random-trajectory}).  
  Thus, for each $\bi = i_{1:H} \in [N]^H$ and $\bar s_{1:H} \in \MS^H$, it holds that
  \begin{align}
    & \E_{\UPhi}\left[ \prod_{g=1}^H \til \BP(\bar s_g | \bar s_{g-1}, a_{g-1}^\bi) \cdot L((\til x_{i_1, \bar s_1}, \ldots, \til x_{i_H, \bar s_H}), a_{1:H}^\bi)\right] \nonumber\\
    &=  \E_{(a_{1:H}, x_{1:H}) \sim\sigma_\pi}\left[ \prod_{g=1}^H \til \BP(\bar s_g | \bar s_{g-1}, a_{g-1}) \cdot L(x_{1:H}, a_{1:H})\right] \label{eq:uphi-exp}
  \end{align}
  where the expectation in the first line is jointly over $(\til x, \phi^\st)$. Summing the display over $\bar s_{-h} \in \MS^{h-1}$ and applying \cref{lem:qs-formula} gives that for each $\bi = i_{1:H} \in [N]^H$ and $\bar s_h \in \MS$,
  \begin{align}
\E_{\UPhi}\left[ \sum_{\bar s_{-h} \in \MS^{H-1}} \prod_{g=1}^H \til \BP(\bar s_g | \bar s_{g-1}, a_{g-1}^\bi) \cdot L((\til x_{i_1, \bar s_1}, \ldots, \til x_{i_H, \bar s_H}), a_{1:H}^\bi)\right] = Q_h(\bar s_h) \cdot \E_{(a_{1:H}, x_{1:H}) \sim \sigma_\pi} \left[   L(x_{1:H}, a_{1:H})\right]\label{eq:F-exp}.
  \end{align}
The above equality crucially uses that $L(x_{1:H}, a_{1:H})$ does not depend on $\bar s_{1:H}$ (and thus could fail to hold if e.g. $L$ were also a function of the rewards $r_{1:H}$). Next, we define a function $F : \MX^{\MS\times[N]} \to \BR$ by 
  \begin{align}
F((y_{s,i})_{s\in\MS, i \in [N]}) := \frac{1}{N^H} \sum_{\bi = i_{1:H} \in [N]^H} \sum_{\bar s_{-h} \in \MS^{H-1}} \prod_{g=1}^H \til \BP(\bar s_g | \bar s_{g-1}, a_g^\bi) \cdot L((y_{\bar s_1, i_1}, \ldots, y_{\bar s_H, i_h}), a_{1:H}^\bi)\nonumber,
  \end{align}
  where $a_g^\bi = \pi_g(a_{1:g-1}^\bi, (y_{\bar s_1, i_1}, \ldots, y_{\bar s_g, i_g}))$ (so that the dependence of $a_g^\bi$ on $\bar s_1, \ldots, \bar s_g$ and $\{ y_{s,i}\}_{s,i}$ is suppressed). 
  Note that changing a single value in $(y_{s,i})_{s\in\MS,i\in[N]}$ only affects the value of at most $N^{H-1}$ terms in the outer summation over $\bi = i_{1:H} \in [N]^H$ (in particular, changing $y_{s,i}$ can only affect the terms indexed by $\bi \in [N]^H$ for which $\bi_k = i$, where $k \in [H]$ is the unique step of the MDP at which state $s$ is reachable). 
  Moreover, for each $\bi$, we have  $\sum_{\bar s_{-h} \in \MS^{H-1}} \prod_{g=1}^H \til \BP(\bar s_g | \bar s_{g-1}, a_g^\bi) \cdot L((y_{\bar s_1, i_1}, \ldots, y_{\bar s_H, i_H}), a_{1:H}^\bi) \in [0,1]$.  Thus, $F$ satisfies the bounded differences property \cref{eq:bd} with $c_{s,i} = 1/N$ for all $(s,i) \in \MS\times [N]$. 
  Thus, by McDiarmid's Inequality (\cref{thm:mcdiarmid}), under some event $\ME_{\bar s_h}$ that occurs with probability at least $1-\delta$ over $(\til x, \phi^\st)$ sampled as above, we have that
  \begin{align}
\left| F((\til x_{s,i})_{s\in\MS,i \in [N]}) - \E_{\UPhi}[F((\til x_{s,i})_{s\in\MS,i \in [N]})] \right| \leq \sqrt{\frac{\log 2/\delta}{N}}\label{eq:L-mcdiarmid}.
  \end{align}
  Note that averaging \cref{eq:F-exp} over all $\bi = i_{1:H} \in [N]^H$ gives that
  \begin{align}
\E_{\UPhi}[F((\til x_{i,s})_{i \in [N], s \in \MS})] = Q(\bar s_h) \cdot \E_{(a_{1:H}, x_{1:H}) \sim \sigma_\pi}[L(x_{1:H}, a_{1:H})]\label{eq:FQ-exp}.
  \end{align}
  By \cref{eq:pphi-expand,eq:FQ-exp,eq:L-mcdiarmid}, under $\ME_{\bar s_h} \cap \Edistinct$, we have 
  \begin{align}
\left| \BP^{\phi^\st, \pi}[s_h = \bar s_h] \cdot \E^{\phi^\st, \pi}[L(x_{1:H}, a_{1:H}) \mid s_h = \bar s_h] - Q(\bar s_h) \cdot \E_{\sigma_\pi}[L(x_{1:H}, a_{1:H})]\right| \leq \sqrt{\frac{\log 2/\delta}{N}}\label{eq:pql-ineq}.
  \end{align}
  Recall that by definition $\mu_{L,\pi} = \E_{\sigma_\pi}[L(x_{1:H},a_{1:H})]$ (\cref{def:random-trajectory}). It then follows that, under the event $\Edistinct \cap \bigcap_{\bar s_h \in \{h\} \times \{0,1,2\}} (\ME_{\bar s_h} \cap \MF_{\bar s_h})$ (where $\MF_{\bar s_h}$ is the event defined in \cref{lem:pq-approx}, which occurs with probability at least $1-\delta$), 
  \begin{align}
     \E^{\phi^\st, \pi}\left[ |G(s_h) - \mu_{L,\pi}|\right] =&  \sum_{\bar s_h \in \{h\} \times \{0,1,2\}} \BP^{\phi^\st, \pi}[s_h= \bar s_h] \cdot |G(s_h) - \mu_{L,\pi}|\nonumber\\
    \leq & \sum_{\bar s_h \in \{ h \} \times \{0,1,2\}} \left| \BP^{\phi^\st, \pi}[s_h = \bar s_h] \cdot  G(s_h) - Q(\bar s_h) \cdot \mu_{L,\pi} \right| + \left|Q(\bar s_h) - \BP^{\phi^\st, \pi}[s_h = \bar s_h] \right| \cdot |\mu_{L,\pi}|\nonumber\\
    \leq & \sum_{\bar s_h \in \{h \} \times \{0,1,2\}} 2 \sqrt{\frac{\log 2/\delta}{N}} = 6 \sqrt{\frac{\log 2/\delta}{N}}\nonumber,
  \end{align}
  where the second inequality uses \cref{eq:pql-ineq,lem:pq-approx} (along with the fact that $|\mu_{L,\pi}| \leq 1$). Finally, set $\ME'_{\pi,L,h} := \bigcap_{\bar s_h \in \{ h \} \times \{0,1,2\}} (\ME_{\bar s_h} \cap \MF_{\bar s_h})$ and observe that $\Pr[\ME'_{\pi,L,h}] \geq 1-6\delta$ by the union bound. Rescaling $\delta$ appropriately then completes the proof. 
\end{proof}

\begin{lemma}
  \label{lem:acc-predictor}
  Fix $\phi \in \Phiset$, $h \in [H]$, a general policy $\pi$, a function $L : (\MX \times \MA \times [0,1])^H \to [0,1]$, and some $\hat\mu \in [0,1]$. Define $G(s) := \E^{\phi, \pi}[L(x_{1:H}, a_{1:H}, r_{1:H}) \mid s_h = s]$. Then
  \begin{align}
\E^{\phi, \pi}\left[ (\hat\mu - L(x_{1:H}, a_{1:H}, r_{1:H}))^2\right] - \E^{\phi, \pi} \left[(G(s_h) - L(x_{1:H}, a_{1:H}, r_{1:H}))^2\right] &\leq  3\E^{\phi, \pi}[|G(s_h) - \hat \mu|]\nonumber.
  \end{align}
  In particular, the constant function $x \mapsto \mu$ is a $(d_h^{\phi, \pi}, G, 3 \E^{\phi, \pi}[|G(s_h) - \hat\mu|])$-accurate predictor with respect to $\MD_{\phi}$ (\cref{def:reg-predictor}). 
\end{lemma}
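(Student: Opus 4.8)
This is a standard bias--variance decomposition, taken against the $\sigma$-algebra generated by the latent state $s_h$. Writing $Y := L(x_{1:H},a_{1:H},r_{1:H})$ for brevity, the plan is to first expand, for any fixed constant $\hat\mu$,
\[(\hat\mu - Y)^2 = (\hat\mu - G(s_h))^2 + 2(\hat\mu - G(s_h))(G(s_h) - Y) + (G(s_h) - Y)^2,\]
and then take expectations under $\BP^{\phi,\pi}$. The cross term vanishes: conditioning on $s_h$ and using that $G(s_h) = \E^{\phi,\pi}[Y \mid s_h]$ by definition, we get $\E^{\phi,\pi}[(\hat\mu - G(s_h))(G(s_h) - Y) \mid s_h] = (\hat\mu - G(s_h))\bigl(G(s_h) - \E^{\phi,\pi}[Y \mid s_h]\bigr) = 0$. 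Hence
\[\E^{\phi,\pi}\bigl[(\hat\mu - Y)^2\bigr] - \E^{\phi,\pi}\bigl[(G(s_h) - Y)^2\bigr] = \E^{\phi,\pi}\bigl[(\hat\mu - G(s_h))^2\bigr].\]

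Next I would bound the right-hand side. Since $L$ takes values in $[0,1]$, so does $G(s)$ for every $s$, and $\hat\mu \in [0,1]$ by hypothesis; therefore $|\hat\mu - G(s_h)| \le 1$ almost surely, so $(\hat\mu - G(s_h))^2 \le |\hat\mu - G(s_h)|$ pointwise. Taking expectations gives $\E^{\phi,\pi}[(\hat\mu - G(s_h))^2] \le \E^{\phi,\pi}[|G(s_h) - \hat\mu|] \le 3\,\E^{\phi,\pi}[|G(s_h) - \hat\mu|]$, which is the claimed inequality (the constant $3$ carries plenty of slack; $1$ would do).

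For the ``in particular'' clause, the plan is simply to reinterpret the quantity just bounded in the language of \cref{def:reg-predictor}. The constant map $\MR : x \mapsto \hat\mu$ defines a function $\MX \to [0,1]$, and since $G(s)$ does not depend on the emission, integrating over $x \sim \MD_\phi(\cdot \mid s)$ yields
\[\EE_{\substack{s \sim d_h^{\phi,\pi} \\ x \sim \MD_\phi(\cdot \mid s)}} (\MR(x) - G(s))^2 = \E^{\phi,\pi}\bigl[(\hat\mu - G(s_h))^2\bigr] \le 3\,\E^{\phi,\pi}[|G(s_h) - \hat\mu|],\]
where I have used that the $s_h$-marginal of $\BP^{\phi,\pi}$ is exactly $d_h^{\phi,\pi}$. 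By the equivalent formulation of the predictor condition in \cref{def:reg-predictor}, this says precisely that $\MR$ is a $(d_h^{\phi,\pi}, G, 3\,\E^{\phi,\pi}[|G(s_h)-\hat\mu|])$-accurate predictor with respect to $\MD_\phi$.

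There is no real obstacle here. The only step needing any care is the conditioning: one must invoke the tower property against $s_h$ (not the full history), and one must note that $G$ as defined is genuinely the conditional mean of the label given $s_h$ — both of which hold by definition. Everything else is the elementary observation that $t^2 \le |t|$ on $[-1,1]$ together with the fact that a latent-state-measurable integrand's expectation under $\BP^{\phi,\pi}$ is an expectation under $d_h^{\phi,\pi}$.
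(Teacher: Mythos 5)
Your proof is correct, and it takes a slightly different (and in fact sharper) route than the paper. You decompose $(\hat\mu - Y)^2$ directly around $G(s_h)$ and kill the cross term by the tower property, obtaining the exact identity
\[\E^{\phi,\pi}\bigl[(\hat\mu - Y)^2\bigr] - \E^{\phi,\pi}\bigl[(G(s_h) - Y)^2\bigr] = \E^{\phi,\pi}\bigl[(\hat\mu - G(s_h))^2\bigr],\]
and then use $t^2 \le |t|$ on $[-1,1]$, which yields the bound with constant $1$ (so the stated constant $3$ is slack, as you note). The paper instead routes the same computation through the population mean $\mu_{L,\pi}^{\phi} := \E^{\phi,\pi}[L] = \E^{\phi,\pi}[G(s_h)]$: it uses $\E^{\phi,\pi}[G(s_h)L] = \E^{\phi,\pi}[G(s_h)^2]$ (the same vanishing cross term in disguise) to write the excess as $\E^{\phi,\pi}[(G(s_h)-\mu_{L,\pi}^{\phi})^2] + (\hat\mu - \mu_{L,\pi}^{\phi})^2$, i.e.\ variance plus squared bias, and then needs the triangle inequality and Jensen to convert $|G(s_h)-\mu_{L,\pi}^{\phi}|$ and $|\hat\mu - \mu_{L,\pi}^{\phi}|$ back into $\E^{\phi,\pi}[|G(s_h)-\hat\mu|]$, which is exactly where the factor $3$ comes from. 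The two arguments compute the same quantity; yours is the more economical packaging and proves a marginally stronger statement, while the paper's form makes the bias/variance split around $\mu_{L,\pi}^{\phi}$ explicit (at the cost of the larger constant). Your handling of the ``in particular'' clause is also fine: since the predictor is the constant $\hat\mu$, the equivalent formulation in \cref{def:reg-predictor} reduces to $\E_{s \sim d_h^{\phi,\pi}}[(\hat\mu - G(s))^2]$, and the $s_h$-marginal of $\BP^{\phi,\pi}$ is $d_h^{\phi,\pi}$ by definition, which is all that is needed.
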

\begin{proof}
   Let us write $\mu_{L,\pi}^{\phi} := \E^{\phi, \pi}[L(x_{1:H}, a_{1:H}, r_{1:H})] = \EE^{\phi,\pi}[G(s_h)]$. Then 
  \begin{align}
    & \E^{\phi, \pi}\left[ (\mu - L(x_{1:H}, a_{1:H}, r_{1:H}))^2\right] - \E^{\phi, \pi} \left[(G(s_h) - L(x_{1:H}, a_{1:H}, r_{1:H}))^2\right]\nonumber\\
    &= \E^{\phi^\st, \pi}\left[   G(s_h)^2-(\mu_{L,\pi}^{\phi})^2 \right] + (\hat\mu - \mu_{L,\pi}^{\phi})^2\nonumber\\
    &= \E^{\phi,\pi}[(G(s_h) - \mu_{L,\pi}^{\phi})^2] + (\hat\mu - \mu_{L,\pi}^{\phi})^2 \nonumber\\
    &\leq \E^{\phi, \pi}[|G(s_h) - \mu_{L,\pi}^{\phi}|] + |\hat\mu - \mu_{L,\pi}^{\phi}|\nonumber\\
    &\leq \E^{\phi, \pi}[|G(s_h) - \hat\mu|] + 2 |\hat\mu - \mu_{L,\pi}^{\phi}|\nonumber\\
    &\leq 3\E^{\phi, \pi}[|G(s_h) - \hat\mu|]\nonumber, 
  \end{align}
  where the first equality uses the fact that $\E^{\phi, \pi}[G(s_h) \cdot L(x_{1:H}, a_{1:H}, r_{1:H})] = \E^{\phi, \pi}[G(s_h)^2]$ (since $\E[L(x_{1:H}, a_{1:H}, r_{1:H}) \mid s_h] = G(s_h)$), the first inequality uses the fact that all quantities are in the interval $[0,1]$, the second inequality invokes the triangle inequality, and the final inequality uses Jensen's inequality and the fact that $\mu_{L,\pi}^{\phi} = \E^{\phi, \pi}[G(s_h)]$. By \cref{def:reg-predictor}, it follows that $\mu_{L,\pi}$ is a $(d_h^{\phi, \pi}, G, 3 \E^{\phi, \pi}[|G(s_h) - \mu|])$-predictor with respect to $\MD_\phi$. 
\end{proof}

Below we state a few well-known results that were used in the above proofs. 
\begin{theorem}[McDiarmid's Inequality]
  \label{thm:mcdiarmid}
  Let $\MX$ be a set and $X_1, \ldots, X_n \in \MX$ be $\MX$-valued independent random variables. Suppose that $F : \MX^n \to \BR$ satisfies
  \begin{align}
\sup_{x_1, \ldots, x_n, x_i' \in \MX} | F(x_i', x_{-i}) - F(x_1, \ldots, x_n)| \leq c_i\label{eq:bd}  \end{align}
  for each $i \in [n]$. Then for every $t > 0$,
  \begin{align}
\Pr \left[ \left| F(X_1, \ldots, X_n) - \E[F(X_1, \ldots, X_n)]\right| \geq t \right] \leq {2}\exp\left( \frac{-2t^2}{\sum_{i=1}^n c_i^2} \right)\nonumber.
  \end{align}
\end{theorem}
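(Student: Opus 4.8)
The plan is to prove McDiarmid's inequality (\cref{thm:mcdiarmid}) via the standard Doob martingale argument combined with Hoeffding's lemma; it is the use of Hoeffding's lemma (rather than a crude bound $|D_i| \le c_i$) that yields the sharp constant $2$ in the exponent. First I would set up the Doob martingale associated to $F$: working with the filtration generated by $X_1,\dots,X_n$, define $V_i := \E[F(X_1,\dots,X_n) \mid X_1,\dots,X_i]$ for $0 \le i \le n$, so that $V_0 = \E[F(X_1,\dots,X_n)]$, $V_n = F(X_1,\dots,X_n)$, and $(V_i)_{i=0}^n$ is a martingale. Setting $D_i := V_i - V_{i-1}$, we have $V_n - V_0 = \sum_{i=1}^n D_i$ and $\E[D_i \mid X_1,\dots,X_{i-1}] = 0$.

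The key structural step is to control the conditional range of each $D_i$. Fix $i$ and a realization $x_1,\dots,x_{i-1}$ of $X_1,\dots,X_{i-1}$; since $X_i$ is independent of $X_{i+1},\dots,X_n$, on this event we may write $V_i = g(X_i)$ and $V_{i-1} = \E[g(X_i)]$, where $g(x) := \E[F(x_1,\dots,x_{i-1},x,X_{i+1},\dots,X_n)]$, so $D_i = g(X_i) - \E[g(X_i)]$. Integrating the bounded differences hypothesis \cref{eq:bd} over the independent coordinates $X_{i+1},\dots,X_n$ shows $|g(x) - g(x')| \le c_i$ for all $x,x' \in \MX$, hence the range of $g$ lies in an interval of width at most $c_i$, and so does the conditional distribution of $D_i$ given $X_1,\dots,X_{i-1}$. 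Because this conditional distribution has mean zero, Hoeffding's lemma gives $\E[e^{\lambda D_i} \mid X_1,\dots,X_{i-1}] \le \exp(\lambda^2 c_i^2 / 8)$ for every $\lambda \in \RR$.

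Next I would chain these estimates using the tower property, peeling off one coordinate at a time: $\E[e^{\lambda(V_n - V_0)}] = \E\big[e^{\lambda \sum_{i=1}^{n-1} D_i}\,\E[e^{\lambda D_n} \mid X_1,\dots,X_{n-1}]\big] \le \exp(\lambda^2 c_n^2/8)\cdot \E[e^{\lambda \sum_{i=1}^{n-1} D_i}]$, and iterating downward in the index gives $\E[e^{\lambda(V_n - V_0)}] \le \exp\big(\tfrac{\lambda^2}{8}\sum_{i=1}^n c_i^2\big)$. Markov's inequality then yields $\Pr[F(X_1,\dots,X_n) - \E[F(X_1,\dots,X_n)] \ge t] \le \exp\big(-\lambda t + \tfrac{\lambda^2}{8}\sum_i c_i^2\big)$ for all $\lambda > 0$; optimizing at $\lambda = 4t/\sum_i c_i^2$ produces the bound $\exp(-2t^2/\sum_i c_i^2)$. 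Running the identical argument for $-F$, which satisfies the same bounded differences condition \cref{eq:bd}, and combining by a union bound gives the claimed two-sided inequality.

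The main obstacle is the structural step of the second paragraph --- showing that, conditioned on $X_1,\dots,X_{i-1}$, the martingale difference $D_i$ has range at most $c_i$ --- since this is exactly where independence of the coordinates and the bounded differences property must be combined with care; everything else is a routine Chernoff-style computation. The only auxiliary ingredient is Hoeffding's lemma (if $\E[Y] = 0$ and $Y$ takes values in an interval of width $\ell$, then $\E[e^{\lambda Y}] \le e^{\lambda^2 \ell^2/8}$), which follows from a second-order Taylor bound on the cumulant generating function of $Y$; I would either cite it as standard or include the short self-contained argument.
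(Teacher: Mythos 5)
Your proof is correct: the Doob martingale decomposition, the conditional range bound $|D_i| \le c_i$ obtained by integrating the bounded-differences hypothesis over the independent later coordinates, Hoeffding's lemma, the tower-property chaining, and the optimization $\lambda = 4t/\sum_i c_i^2$ together give exactly the stated bound, and the union bound over $\pm F$ supplies the factor $2$. Note, however, that the paper does not prove this statement at all --- it is listed under ``well-known results'' (McDiarmid's inequality) and simply cited; your argument is the standard textbook proof of that result, so there is nothing in the paper to compare it against, and it can stand as a self-contained justification if one were needed.
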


\begin{lemma}[Chain rule for total variation distance]
  \label{lem:tvd-chain}
  Let $\MY$ be a set and let $\BP, \BQ$ be distributions over random variables $Y_{1:n} = (Y_1, \ldots, Y_n) \in \MY^n$, for some $n \in \BN$. Then
  \begin{align}
\tvd{\BP}{\BQ} \leq \sum_{i=0}^{n-1} \E_{y_{1:i} \sim \BP} \left[ \tvd{\BP(Y_{i+1} = \cdot  \mid Y_{1:i} = y_{1:i})}{\BQ(Y_{i+1} = \cdot  \mid Y_{1:i} = y_{1:i})}\right]\nonumber.
  \end{align}
\end{lemma}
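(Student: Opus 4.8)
The plan is to prove the chain rule by a standard hybrid (telescoping) argument. Since every application of this lemma in the paper is over finite alphabets, I would work in the discrete setting, where all the conditional distributions $\BP(Y_{i+1} = \cdot \mid Y_{1:i} = y_{1:i})$ and $\BQ(Y_{i+1} = \cdot \mid Y_{1:i} = y_{1:i})$ are unambiguously defined (for general Polish spaces one would first invoke the existence of regular conditional probabilities, but that is not needed here).

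First I would introduce, for each $i \in \{0,1,\ldots,n\}$, an auxiliary ``hybrid'' distribution $\BH_i$ on $\MY^n$: sample $Y_{1:i}$ from the $\BP$-marginal on the first $i$ coordinates, and then sample $Y_{i+1:n}$ from $\BQ(\cdot \mid Y_{1:i})$. By construction $\BH_n = \BP$ and $\BH_0 = \BQ$, so the triangle inequality for total variation distance gives $\tvd{\BP}{\BQ} = \tvd{\BH_n}{\BH_0} \leq \sum_{i=0}^{n-1} \tvd{\BH_{i+1}}{\BH_i}$. The core step is then to show that each consecutive term is bounded by the corresponding summand in the lemma. Both $\BH_{i+1}$ and $\BH_i$ draw their first $i$ coordinates $y_{1:i}$ from the \emph{same} distribution, so by joint convexity of total variation under a common mixture, $\tvd{\BH_{i+1}}{\BH_i} \leq \E_{y_{1:i}\sim\BP}[\tvd{\BH_{i+1}(\cdot\mid y_{1:i})}{\BH_i(\cdot\mid y_{1:i})}]$. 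For a fixed prefix $y_{1:i}$, both conditional laws sample $Y_{i+1}$ (one from $\BP(\cdot\mid y_{1:i})$, the other from $\BQ(\cdot\mid y_{1:i})$) and then pass the result through the \emph{same} stochastic map $y_{i+1}\mapsto \BQ(\cdot\mid y_{1:i+1})$ to produce $Y_{i+2:n}$; hence by the data-processing inequality for total variation, $\tvd{\BH_{i+1}(\cdot\mid y_{1:i})}{\BH_i(\cdot\mid y_{1:i})} \leq \tvd{\BP(Y_{i+1}=\cdot\mid Y_{1:i}=y_{1:i})}{\BQ(Y_{i+1}=\cdot\mid Y_{1:i}=y_{1:i})}$. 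Substituting back and summing yields the claim.

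There is no genuinely hard step here; the one place to be careful is the bound on $\tvd{\BH_{i+1}}{\BH_i}$ — specifically, checking that $\BH_{i+1}$ and $\BH_i$ agree in law on $Y_{1:i}$ and that, conditionally on $Y_{1:i+1}$, they also agree on $Y_{i+2:n}$, so that their only ``disagreement'' lies in coordinate $i+1$ and can thus be isolated by the mixture step followed by data processing. An alternative, fully equivalent route is induction on $n$: the base case $n=1$ is an equality, and the inductive step follows from the two-step chain rule $\tvd{\BP}{\BQ} \leq \tvd{\BP(Y_1=\cdot)}{\BQ(Y_1=\cdot)} + \E_{y_1\sim\BP}[\tvd{\BP(Y_{2:n}=\cdot\mid Y_1=y_1)}{\BQ(Y_{2:n}=\cdot\mid Y_1=y_1)}]$ together with the inductive hypothesis applied to each conditional pair, using that nested expectations $\E_{y_1\sim\BP}\,\E_{y_{2:i}\sim\BP(\cdot\mid y_1)}$ collapse to $\E_{y_{1:i}\sim\BP}$.
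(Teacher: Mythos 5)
Your proposal is correct. The paper itself states \cref{lem:tvd-chain} without proof, as one of a few ``well-known results,'' so there is no authorial argument to compare against; your hybrid/telescoping construction (equivalently, the induction via the two-step chain rule) is the standard proof and every step checks out: $\BH_n=\BP$, $\BH_0=\BQ$, the triangle inequality gives the telescoping bound, consecutive hybrids share the law of $Y_{1:i}$ so joint convexity applies, and conditionally on $y_{1:i}$ both hybrids push $Y_{i+1}$ through the identical kernel $y_{i+1}\mapsto \BQ(\cdot\mid y_{1:i+1})$, so data processing isolates the step-$(i+1)$ discrepancy. The only point worth stating explicitly is the treatment of prefixes $y_{1:i}$ with $\BP(y_{1:i})>0$ but $\BQ(y_{1:i})=0$: the conditional $\BQ(\cdot\mid y_{1:i})$ must be fixed by some convention, and as long as the same version is used both to define the hybrids and in the right-hand side of the lemma, the argument is unaffected (and the lemma, as applied in \cref{lem:diff-bound}, is insensitive to this choice since all spaces there are finite).
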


\subsubsection{Proof of \cref{thm:oracle-lb}}
We now proceed to the proof of \cref{thm:oracle-lb}. 
  \begin{proof}[Proof of \cref{thm:oracle-lb}]
We consider the block MDP family $\MM = \Moracle$ (\cref{def:oracle-block-mdp-family}). Note that this family certainly satisfies $\max\{ |\MS_n|, |\MA_n|, \ell_n\} = O(\max\{H_n, n\}) \leq O(H_n)$. 
Fix a $(\Tred,\epred)$-reduction $\Alg$ from RL to regression for $\Moracle$ (\cref{def:rl-reg-reduction-2}). 
    
Fix $n \in \BN$. We will write $T := 1+\Tred(n)$, $\ep := \epred(n)$, $N := N_n$, $H := H_n$, $X := X_n = 2^{\ell_n}$, $\MS := \MS_n$, $\MA := \MA_n$, $\ell := \ell_n$, $\Phi := \Phi_n = \Phi_{X,H}$, and $\br := \br_n$. Set $\delta := 1/(16T)$. 

Suppose for the purpose of contradiction that \cref{eq:oracle-lb-conclusion} does not hold. Then since $N = 2^H$, we have $T < \min\{2^H, N^{1/8}\}/C_{\ref{thm:oracle-lb}}$ and $\ep \geq C_{\ref{thm:oracle-lb}} \cdot \max\{ 2^{-H}, N^{-1/8} \}$, where the constant $C_{\ref{thm:oracle-lb}}$ is sufficiently large (as specified below).  
  Choose $\til x \in \MX^{\MS \times [N]}$ by letting $\til x_{s, i} \sim \Unif(\MX)$ for each $(s,i) \in \MS \times [N]$ independently.  Let $\MO_{\til x}$ be the oracle which, upon query $\MO_{\til x}(s, i)$, returns the value $\til x_{s,i}$. 
 
  We consider the execution of $\SimulateReduction(\Alg, X, H,  \MO_{\til x}, \ep, \delta)$ (\cref{alg:simulate-oracle}), which simulates the algorithm $\Alg$ together with a particular implementation of the sampling and regression oracles. 
  In particular, given $\til x$ sampled as above, the algorithm sets $\phi^\st := \Dec_{\til x}$, so that $\phi^\st \sim \UPhi$. In turn, the choice of $\phi^\st \sim \UPhi$ induces an MDP $M_{X,H}^{\phi^\st}$. Note that the latent transitions $\til \BP_h$ and reward function $\br$ of the MDP $M_{X,H}^{\phi^\st}$ do not depend on the choice of $\phi^\st$. 
  \cref{alg:simulate-oracle} also maintains a mapping $\MT: \MS \times [N] \to \MX \cup \{ \perp\}$, which we use purely for analysis purposes. 
To simulate the execution of $\Alg$, \cref{alg:simulate-oracle} proceeds as follows:
  \begin{itemize}
  \item  Each sampling oracle call, which receives as input a circuit $\MB_\pi$ (computing a general policy $\pi$) and the oracle $\MO_{\til x}$, is implemented by the subprocedure $\SimulateSampling(\MB_\pi, \MO_{\til x})$. This procedure generates a sequence $s_{1:H}, x_{1:H}, a_{1:H}, r_{1:H}$ of latent states, contexts, actions, and rewards as follows: at step $h$, it samples a random $i_h \sim \Unif([N])$ to index the context $x_h := \til x_{s_h,i_h}$ (which is accessed by calling $\MO_{\til x}(s_h,i_h)$), chooses $a_h$ according to $\pi$, and observes $r_h, s_{h+1}$ according to the transition and reward function of the MDP $M_{X,H}^{\phi^\st}$. Note that, under the event that $\phi^\st\in \Edistinct$ (and thus all elements of $\til x$ are distinct), the resulting distribution of $s_{1:H}, x_{1:H}, a_{1:H}, r_{1:H}$ is exactly the distribution $\BP^{{\phi^\st}, \pi}$ of a trajectory drawn from $M^{\phi^\st}_{X,H}$.
  \item Each regression oracle call, which is implemented by the subprocedure \SimulateRegression, receives as input a step $h \in [H]$, oracle $\MO_{\til x}$, parameters $\epsilon,\delta>0$, and circuits $\MB_\pi, \MB_L$, representing a general policy $\pi$ and a labeling function $L : (\MX \times \MA \times [0,1])^H \to [0,1]$, respectively. For $m = C_0 \log(1/\delta)/\ep^2$ for a sufficiently large constant $C_0$ (which will be specified below), \SimulateRegression draws $m$ i.i.d.~trajectories from the distribution $\BP^{\phi^\st, \pi}$ (by invoking \SimulateSampling), and then returns the mean of the labeling function $L$ on these trajectories.
  \item At the end of its execution, \cref{alg:simulate-oracle} returns the output policy $\hat \pi$ of the algorithm $\Alg$. 
  \end{itemize}
We will show that  with high probability over the choice of $\phi^\st \sim \UPhi$ and the execution of $\Alg$: (a) each of the calls to $\SimulateSampling$ produces trajectory samples drawn from $M_{X, H}^{\phi^\st}$ (thus satisfying \cref{def:sampling-oracle}), (b) each of the calls to $\SimulateRegression$ produces a prediction function which is accurate (as per \cref{def:reg-predictor}), and (c) $\hat \pi$ has large suboptimality in $M_{X,H}^{\phi^\st}$. These statements will thus allow us to derive a contradiction to the assumption that $\Alg$ is a $(T, \ep)$-reduction from RL to regression. 

  \paragraph{Simulating the sampling oracle.} As we have remarked above, if $\phi^\st \in \Edistinct$, then for any state $s_h$, the set $(\phi^\st)^{-1}(s_h)$ is precisely the set $\{\til x_{s_h,i}: i \in [N]\}$, which moreover has size exactly $N$. Thus, $\til x_{s_h,i_h}$ (for $i_h \sim \Unif([N])$) is distributed according to $D_{\phi^\st}(\cdot|s_h)$. This means that each call to $\SimulateSampling(\MB_\pi, \MO_{\til x})$ in \cref{alg:simulate-oracle} generates a trajectory drawn exactly from the distribution $\BP^{{\phi^\st}, \pi}$.

  \paragraph{Simulating the regression oracle.} By \cref{def:rl-reg-reduction-2}, we know that $T - 1 = \Treg(n)$ is an upper bound on the number of regression oracle calls made by $\Alg$. For $t \in [T-1]$, let the input to the $t$th 
  regression oracle call of $\Alg$ be denoted by $(\MB_{\pi_t}, \MB_{L_t}, h_t)$, for a general policy $\pi_t$ and a labeling function $L_t : (\MX \times \MA \times [0,1])^H \to [0,1]$, where we are identifying $\Delta(\{0,1\})$ with $[0,1]$ in the natural way. 
  Finally, to simplify notation, let $\pi_T$ denote the output policy of $\Alg$, and choose $L_T$ arbitrarily. 
  Note that $\pi_t, L_t$ (for $t \in [T]$) are all random variables. 

For each $t \in [T]$, define $L_t': (\MX\times\MA)^H \to [0,1]$ by $L_t'(x_{1:H}, a_{1:H}) := L_t(x_{1:H}, a_{1:H}, \bzero)$, where $\bzero = (0, \ldots, 0) \in [0,1]^H$. 
For each $t \in [T]$, 
define $G_t' : \MS \to [0,1]$ by  $G_t'(s) := \E^{\phi^\st, \pi_t}[L_t'(x_{1:H}, a_{1:H}) \mid s_{h_t} =s]$ and $G_t : \MS \to [0,1]$ by $G_t(s) := \E^{\phi^\st, \pi_t}[L_t(x_{1:H}, a_{1:H}, r_{1:H}) | s_{h_t} = s]$. Let $\MT_t$ denote the value of the mapping $\MT$ maintained by \cref{alg:simulate-oracle} directly before the $t$th regression oracle call. 
The following lemma gives that $\phi^\st$ and $(\pi_t, L_t, h_t)$ are independent conditioned on $\MT_t$. 

\begin{lemma}
  \label{lem:phi-oracle-ci}
Suppose $\phi^\st \sim \UPhi$ and the algorithm $\Alg$ is executed as in \cref{alg:simulate-oracle}. Then for any $t \in [T]$, $\phi^\st$ and $(\pi_t, L_t, h_t)$ are independent conditioned on $\MT_t$.
\end{lemma}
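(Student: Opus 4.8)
\textbf{Proof proposal for \cref{lem:phi-oracle-ci}.}

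The plan is to show by induction on $t$ that the joint distribution of $(\phi^\st, \MT_t, \pi_t, L_t, h_t)$ factors appropriately, using the key structural fact about $\SimulateSampling$: the only way in which the execution of $\Alg$ interacts with $\phi^\st = \Dec_{\til x}$ is through the contexts $x_h = \til x_{s_h, i_h}$ that it observes during sampling-oracle calls, and each such context is \emph{recorded} into the mapping $\MT$ (line ``$\MT(s_h, i_h) \gets \MO(s_h, i_h)$'' of \cref{alg:simulate-oracle}). The regression-oracle subprocedure \SimulateRegression{} also only observes $\til x$ via calls to \SimulateSampling, so the same property holds for it. Consequently, at any point in the execution of \cref{alg:simulate-oracle}, the entire transcript of $\Alg$ so far is a (randomized) function of $\MT$ (the recorded entries) together with $\Alg$'s internal randomness, which is independent of $\til x$. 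In particular, the inputs $(\pi_t, L_t, h_t)$ to the $t$th regression-oracle call are a deterministic function of ($\Alg$'s randomness, and) $\MT_t$.

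More precisely, I would argue as follows. Condition on $\MT_t$; I claim that conditioned on $\MT_t$, the pair $(\phi^\st, (\pi_t, L_t, h_t))$ is independent. Recall from the definition of $\UPhi|_{\MT}$ that the conditional law of $\phi^\st$ given $\{\til x_{s,i} = \MT(s,i) : (s,i) \in \dom(\MT)\}$ is exactly $\UPhi|_{\MT}$; that is, $\MT_t$ is a sufficient statistic in the sense that $\phi^\st \perp \{\til x_{s,i}\}_{(s,i) \notin \dom(\MT_t)}$ given $\{\til x_{s,i}\}_{(s,i) \in \dom(\MT_t)}$ — but actually we need the stronger statement that the transcript up to the $t$th regression call depends on $\til x$ only through $\MT_t$. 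To see this, trace through \cref{alg:simulate-oracle}: each call to $\SimulateSampling$ samples latent states $s_{1:H}$ and indices $i_{1:H}$ using only $\Alg$'s randomness and the (fixed, $\phi^\st$-independent) latent transitions of $M_{X,H}^{\phi^\st}$; reads $x_h := \MO_{\til x}(s_h, i_h) = \til x_{s_h, i_h}$; and immediately sets $\MT(s_h, i_h) \gets \til x_{s_h, i_h}$. Hence every entry of $\til x$ that has been read is recorded in $\MT$, and the partial transcript (hence the next oracle query of $\Alg$, hence $(\pi_t, L_t, h_t)$) is a measurable function of $(R, \MT_t)$ where $R$ denotes $\Alg$'s randomness, with $R$ independent of $\til x$ (and hence of $\phi^\st$).

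Given this, the conclusion is immediate: conditioned on $\MT_t$, the random variable $(\pi_t, L_t, h_t) = g(R, \MT_t)$ for some function $g$, and $R \perp \til x$, so $R \perp \phi^\st$ even after conditioning on $\MT_t$ (since $\MT_t$ is itself a function of $R$ and $\til x$, one must be slightly careful: what we want is that $(R, \text{entries of } \til x \text{ outside } \dom(\MT_t))$ and $(\text{entries of } \til x \text{ inside } \dom(\MT_t))$ are jointly such that $\phi^\st$ — determined by all of $\til x$ — is conditionally independent of $(\pi_t,L_t,h_t)$ — determined by $R$ and the inside entries — given $\MT_t$). This follows because $\til x_{s,i} \sim \Unif(\MX)$ are i.i.d.\ across $(s,i)$, so the ``outside'' entries of $\til x$ are independent of everything revealed so far; and the remaining dependence of $\phi^\st$ on the ``inside'' entries is exactly captured by $\MT_t$. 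I would make this rigorous by a short measure-theoretic argument: condition on the event $\{\dom(\MT_t) = D, \MT_t = \tau\}$ for fixed $D, \tau$, observe that on this event $(\pi_t,L_t,h_t)$ depends only on $R$ (since the trace is reconstructed from $R$ and the recorded contexts $\tau$), while $\phi^\st$'s conditional law is $\UPhi|_\tau$, and $R \perp \til x \mid \{\MT_t = \tau\}$ by the i.i.d.\ structure plus independence of $R$.

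The main obstacle is purely bookkeeping: making the phrase ``the transcript depends on $\til x$ only through $\MT_t$'' precise, since $\MT_t$ is built up adaptively and the set $\dom(\MT_t)$ is itself random and correlated with the transcript. The clean way to handle this is to define, for each fixed realization of $\Alg$'s randomness $R$, a deterministic process that runs $\Alg$ and \emph{simultaneously} reconstructs $\MT_t$; one then checks that whenever two realizations of $\til x$ agree on all entries that this process would ever query before the $t$th regression call, the process produces the same $\MT_t$ and the same $(\pi_t, L_t, h_t)$. Since the set of queried entries is exactly $\dom(\MT_t)$ by construction, this gives the desired measurability, and independence then follows from the product structure of $\UPhi$. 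No heavy machinery is required beyond this careful unwinding of the simulation.
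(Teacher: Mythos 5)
Your proposal is correct and follows essentially the same route as the paper's proof: both hinge on the observation that the simulation only reads $\til x$ at coordinates it records into $\MT_t$, so conditioned on $\MT_t$ the unqueried coordinates of $\til x$ remain i.i.d.\ uniform and independent of the transcript (hence of $(\pi_t, L_t, h_t)$), which the paper packages as a short Bayes-rule computation over the unqueried block $\til x'$ while you package it as transcript-measurability in $(R,\MT_t)$. The bookkeeping subtlety you flag (adaptivity of $\dom(\MT_t)$) is handled in the paper by the same "never accessed, hence the transcript probability does not depend on $\til x'$" argument you sketch.
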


The proof of \cref{lem:phi-oracle-ci} is given following the proof of the theorem. 
Fix any $t \in [T]$, and condition on the values of $(\pi_t, L_t, h_t)$ and of $\MT_t$; thus the value of $L_t'$ is determined as well. 
\cref{lem:constant-fn-predictor} gives that, for some subset $\ME'_{\pi_t, L'_t, h_t} \subset \Phiset$ that depends only on $\pi_t, L_t, h_t$ and satisfies $\UPhi(\ME_{\pi_t, L'_t, h_t}') \geq 1-\delta$, 
each $\phi' \in \ME_{\pi_t, L'_t, h_t}' \cap \Edistinct$ satisfies $\E^{\phi', \pi_t}[|G_t'(s_{h_t}) - \mu_{L_t', \pi_t}|] \leq C_{\ref{lem:constant-fn-predictor}} \sqrt{\log(1/\delta)/N}$. Next, \cref{lem:coupling-uphi} gives that there is a coupling $\SV_{\MT_t}$ of $\UPhi, \UPhi|_{\MT_t}$ so that, for $(\phi',\phi^\st) \sim \SV_{\MT_t}$ (where $\phi' \sim \UPhi$ and $\phi^\st \sim \UPhi|_{\MT_t}$), when $\phi^\st\in \Edistinct$, we have
\begin{align}
\left| \E^{\phi', \pi_t}[|G_t'(s_{h_t}) - \mu_{L_t', \pi_t}|] - \E^{\phi^\st, \pi_t}[|G_t'(s_{h_t}) - \mu_{L_t', \pi_t}|]\right| &\leq   \tvd{\BP^{\phi', \pi_t}}{\BP^{\phi^\st, \pi_t}} \leq \frac{4H |\dom(\MT_t)|}{N} \leq \frac{16mH^2 T}{N}.\label{eq:phiprime-phistar-diff}
\end{align}
Above, the final inequality uses that $|\dom(\MT_t)| \leq H^2 T m$, since the total number of calls to \SimulateSampling over the course of \cref{alg:simulate-oracle} is at most $Tm$ (as each call to $\SimulateRegression$ calls $\SimulateSampling$ $m$ times), and each call to \SimulateSampling increases $|\dom(\MT)|$ by at most $H$. 

Let $\ME_{\MT_t, \pi_t, L_t, h_t} \subset \Phiset$ be defined by
\begin{align}
  \ME_{\MT_t, \pi_t, L_t, h_t} := \{ \phi^\st \in \Phiset \ : \ \exists \phi' \in \ME_{\pi_t, L'_t, h_t}' \mbox{ s.t. } \SV_{\MT_t}(\phi', \phi^\st) > 0 \}.\nonumber
\end{align}
Since $\SV_{\MT_t}$ is a coupling of $\UPhi, \UPhi|_{\MT_t}$ and $\UPhi(\ME'_{\pi_t, L'_t, h_t}) \geq 1-\delta$, we have $\UPhi|_{\MT_t}(\ME_{\MT_t, \pi_t, L_t, h_t}) \geq 1-\delta$. Moreover,  for all $\phi^\st \in \Edistinct \cap \ME_{\MT_t, \pi_t, L_t, h_t}$, we have, using \cref{eq:phiprime-phistar-diff}, 
\begin{align}
  \E^{\phi^\st, \pi_t}[|G'(s_{h_t}) - \mu_{L_t', \pi_t}|] &\leq   \frac{16mH^2 T}{N} + C_{\ref{lem:constant-fn-predictor}} \sqrt{\log(1/\delta)/N}\label{eq:edistinct-et}.
\end{align}

Since $\UPhi|_{\MT_t}(\ME_{\MT_t, \pi_t, L_t, h_t}) \geq 1-\delta$, by \cref{lem:phi-oracle-ci},  with probability at least $1-\delta$ under the distribution of $\phi^\st \sim \UPhi$ conditioned on $\MT_t, \pi_t, L_t, h_t$, we have $\phi^\st \in \ME_{\MT_t, \pi_t, L_t, h_t}$. 
Let $\MF_t$ denote the event that $\phi^\st \in \ME_{\MT_t, \pi_t, L_t, h_t}$. Since $\MT_t, \pi_t, L_t, h_t$ are generated according to the draw of $\phi^\st \sim \UPhi$ and the execution of $\Alg$, it follows that $\MF_t$ occurs with probability at least $1-\delta$ under $\phi^\st \sim \UPhi$ and the execution of $\Alg$. 
Moreover, \cref{eq:edistinct-et} holds under the event $\{ \phi^\st \in \Edistinct\} \cap \MF_t$. 
Below, we will use \cref{eq:edistinct-et} to show that  with high probability, all calls to \SimulateRegression return an accurate predictor (per \cref{def:reg-predictor}). 

\paragraph{Bounding the probability of a nonzero reward.} Using a similar argument as above, we next show that each of the policies $\pi_t$ (including the output policy $\pi_T$) rarely receives nonzero reward for the MDP $M_{X,H}^{\phi^\st}$. To do so, let us again fix a particular choice of $\MT_t, \pi_t, L_t, h_t$. By \cref{lem:pq-approx}, for some event $\MF_{\pi_t}$ occurring
  with probability at least $1-\delta$ over an independent draw of $\phi' \sim \UPhi$, we have that under $\MF_{\pi_t} \cap \Edistinct$, 
  \begin{align}
\E^{\phi', \pi_t}[|L_t(x_{1:H}, a_{1:H}, r_{1:H}) - L_t'(x_{1:H}, a_{1:H})|] \leq \BP^{\phi', \pi_t}\left[ r_H \neq 0 \right] \leq 2^{1-H} + \sqrt{\frac{\log 1/\delta}{N}}\label{eq:rews-nonzero},
  \end{align}
  where the final inequality uses \cref{lem:qs-formula} to give $Q_H((H,0)) = Q_H((H,1)) = 2^{-H}$; recall that the reward is only nonzero at states $(H,0)$ and $(H,1)$. 
Using \cref{eq:rews-nonzero},  an application of \cref{lem:coupling-uphi} yields, via the same argument as above, that there is an event $\MF_t'$ that occurs with probability at least $1-\delta$ under $\phi^\st \sim \UPhi$ and the execution of $\Alg$ so that, under $\{\phi^\st \in \Edistinct\}\cap \MF_t'$, we have
  \begin{align}
\E^{\phi^\st, \pi_t}[|L_t(x_{1:H}, a_{1:H}, r_{1:H}) - L_t'(x_{1:H}, a_{1:H})|] \leq \BP^{\phi^\st, \pi_t}[r_H \neq 0] \leq \frac{16m H^2 T }{N} + 2^{1-H} + \sqrt{\frac{\log 1/\delta}{N}}\label{eq:phist-rews-nonzero},
  \end{align}
  from which it follows that 
  \begin{align}
    \E^{\phi^\st, \pi_t}[|G_t(s_{h_t}) - G_t'(s_{h_t})|] &\leq  \E^{\phi^\st, \pi_t}\left[ \E^{\phi^\st, \pi_t} [ |L_t(x_{1:H}, a_{1:H}, r_{1:H}) - L_t'(x_{1:H}, a_{1:H})|\mid s_{h_t}] \right] \nonumber\\
    &\leq \frac{16m H^2 T}{N} +   2^{1-H} + \sqrt{\frac{\log 1/\delta}{N}}\label{eq:gt-ce}.
  \end{align}
\paragraph{Wrapping up.}   Combining \cref{eq:edistinct-et,eq:gt-ce}, we see that under the event $\{ \phi^\st \in \Edistinct\} \cap \bigcap_{t \leq T} (\MF_t \cap \MF_t')$, for all $t \in [T]$,
    \begin{align}
    \label{eq:gt-mu}
    \E^{\phi^\st, \pi_t}[|G_t(s_{h_t}) - \mu_{L_t', \pi_t}|] \leq \frac{32mH^2 T}{N} + 2^{1-H} + (C_{\ref{lem:constant-fn-predictor}} + 1) \sqrt{\frac{\log 1/\delta}{N}}.
    \end{align}
    For each $t \in [T]$, let us denote the $m$ i.i.d.~trajectories sampled from $\BP^{\phi^\st, \pi}$ on \cref{line:sample-trajs} of \SimulateRegression by $(x_{1:H}^{t,i}, a_{1:H}^{t,i}, r_{1:H}^{t,i})_{i\in[m]}$, and let $\hat \mu_t := \frac 1m \sum_{i=1}^m L(x_{1:H}^{t,i}, a_{1:H}^{t,i}, r_{1:H}^{t,i})$. Since $\E^{\phi^\st, \pi_t}[G_t(s_{h_t})] = \E^{\phi^\st, \pi_t}[L(x_{1:H}, a_{1:H}, r_{1:H})]$ for each $t$,
    a Chernoff bound gives that, with probability at least $1-\delta$, we have that
    \begin{align}
\left|\hat \mu_t - \E^{\phi^\st, \pi_t}[G_t(s_{h_t})] \right|  =      \left| \hat \mu_t -\E^{\phi^\st, \pi_t}[L_t(x_{1:H}, a_{1:H}, r_{1:H})] \right| \leq \frac{1}{10} \sqrt{\frac{C_0\log 1/\delta}{m}} = \ep/10\label{eq:yi-average},
    \end{align}
    as long as the constant $C_0$ is chosen sufficiently large (recall the choice of $m$ on \cref{line:oracle-w}). Averaging over the choice of $(\pi_t, L_t, h_t)$ and taking a union bound over all $t \in [T]$ gives that there is some event $\Fchernoff$, occurring with probability at least $1-\delta T$ under the draw of $\phi^\st \sim \UPhi$ and the execution of $\Alg$, under which \cref{eq:yi-average} holds for each $t \in [T]$. Using the facts above, we may now establish the following lemma: 
  \begin{lemma}
    \label{clm:all-good-predictors}
There is some event $\ME_{\ref{clm:all-good-predictors}}$ that occurs with probability at least $3/4$ under the draw of $\phi^\st \sim \UPhi$ and the execution of $\Alg$ in \cref{alg:simulate-oracle} so that, under $\ME_{\ref{clm:all-good-predictors}}$, for each $t \in [T]$, 
    \begin{align}
\E^{\phi^\st, \pi_t} \left[\left|G_t(s_{h_t}) - \hat \mu_t\right|\right] \leq \frac{\ep}{3} \qquad \mbox{ and } \qquad V_1^{\phi^\st, \pi_T} < 1/4\nonumber.
    \end{align}
    (Note that the above expectation is over $s_{h_t}$ drawn from $M_{X,H}^{\phi^\st}$ for the policy $\pi_t$.) 
  \end{lemma}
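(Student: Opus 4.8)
The plan is to take $\ME_{\ref{clm:all-good-predictors}}$ to be the intersection of all the good events already isolated in the preceding argument, namely
\[
\ME_{\ref{clm:all-good-predictors}} := \{\phi^\st \in \Edistinct\} \cap \bigcap_{t \leq T} \big(\MF_t \cap \MF_t'\big) \cap \Fchernoff,
\]
and to verify that under this event the two displayed bounds hold, after which the probability estimate is a routine union bound. For the probability: \cref{lem:edistinct-prob} gives $\Pr[\phi^\st \notin \Edistinct] \leq 1/N$; each of $\MF_t^c$ and $(\MF_t')^c$ has probability at most $\delta$, so $\Pr\big[\bigcup_{t\leq T}(\MF_t^c \cup (\MF_t')^c)\big] \leq 2T\delta$; and $\Pr[\Fchernoff^c] \leq T\delta$. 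Since $\delta = 1/(16T)$, the total failure probability is at most $1/N + 3T\delta = 1/N + 3/16$, which is at most $1/4$ once $N = 2^{H_n} \geq 16$ (true for $n \geq C_{\ref{thm:oracle-lb}}'$), giving $\Pr[\ME_{\ref{clm:all-good-predictors}}] \geq 3/4$.

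For the predictor bound, I would combine \cref{eq:gt-mu} with \cref{eq:yi-average} via the triangle inequality. Write $\mu_{L_t,\pi_t}^{\phi^\st} := \E^{\phi^\st,\pi_t}[G_t(s_{h_t})] = \E^{\phi^\st,\pi_t}[L_t(x_{1:H},a_{1:H},r_{1:H})]$, and let $\zeta$ denote the right-hand side of \cref{eq:gt-mu}, i.e. $\zeta := \tfrac{32mH^2 T}{N} + 2^{1-H} + (C_{\ref{lem:constant-fn-predictor}}+1)\sqrt{\log(1/\delta)/N}$. By Jensen's inequality, $|\mu_{L_t,\pi_t}^{\phi^\st} - \mu_{L_t',\pi_t}| \leq \E^{\phi^\st,\pi_t}[|G_t(s_{h_t}) - \mu_{L_t',\pi_t}|] \leq \zeta$ under $\ME_{\ref{clm:all-good-predictors}}$, and \cref{eq:yi-average} gives $|\hat\mu_t - \mu_{L_t,\pi_t}^{\phi^\st}| \leq \ep/10$. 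Since $\hat\mu_t$ and $\mu_{L_t',\pi_t}$ are constants given the conditioning,
\[
\E^{\phi^\st,\pi_t}[|G_t(s_{h_t}) - \hat\mu_t|] \leq \E^{\phi^\st,\pi_t}[|G_t(s_{h_t}) - \mu_{L_t',\pi_t}|] + |\mu_{L_t',\pi_t} - \hat\mu_t| \leq 2\zeta + \ep/10.
\]
It then remains to check $\zeta \leq \ep/30$, which I would do term by term using the standing assumptions $T < \min\{2^H, N^{1/8}\}/C_{\ref{thm:oracle-lb}}$, $\ep \geq C_{\ref{thm:oracle-lb}}\max\{2^{-H}, N^{-1/8}\}$, $\delta = 1/(16T)$, $m = C_0\log(1/\delta)/\ep^2$, together with $N = 2^{H}$ and $H = H_n = n$. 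Each of the three terms of $\zeta$ is a polynomial in $n$ times a negative power of $N$ (for the first term, $m$ contributes at most a factor $N^{1/4}/C_{\ref{thm:oracle-lb}}^2$, still dominated by the $1/N$), whereas $\ep \geq C_{\ref{thm:oracle-lb}} N^{-1/8}$; hence each term is at most $\ep/90$ once $n$ exceeds a constant depending on $C_0, C_{\ref{thm:oracle-lb}}, C_{\ref{lem:constant-fn-predictor}}$ — this fixes the choice of $C_{\ref{thm:oracle-lb}}'$. Consequently $2\zeta + \ep/10 \leq \ep/15 + \ep/10 = \ep/6 < \ep/3$.

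For the second conclusion, I would apply \cref{eq:phist-rews-nonzero} with $t = T$ (the output policy $\pi_T$), which holds under $\{\phi^\st\in\Edistinct\}\cap\MF_T' \supseteq \ME_{\ref{clm:all-good-predictors}}$, to get $\BP^{\phi^\st,\pi_T}[r_H \neq 0] \leq \zeta$. Since the only nonzero rewards in $M_{X,H}^{\phi^\st}$ occur at step $H$ and take values in $\{0,1\}$, the value of $\pi_T$ equals $V_1^{\phi^\st,\pi_T} = \BP^{\phi^\st,\pi_T}[r_H = 1] \leq \BP^{\phi^\st,\pi_T}[r_H \neq 0] \leq \zeta \leq \ep/30 < 1/4$ (using $\ep < 1$), as desired. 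The argument is essentially bookkeeping; the one place requiring care is verifying $\zeta \leq \ep/30$ and making the "sufficiently large $n$" thresholds explicit and mutually consistent, since $\zeta$ depends on $m$ which itself grows like $1/\ep^2$ — one must confirm this growth is still swamped by the $1/N = 2^{-n}$ factor. I do not anticipate any conceptual obstacle.
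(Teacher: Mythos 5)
Your proposal is correct and follows essentially the same route as the paper: the same event $\Fchernoff \cap \{\phi^\st \in \Edistinct\} \cap \bigcap_t(\MF_t \cap \MF_t')$ with the same union bound $1 - 3\delta T - 1/N \geq 3/4$, the same triangle-inequality chain through $\mu_{L_t',\pi_t}$ and $\E^{\phi^\st,\pi_t}[G_t(s_{h_t})]$ combining \cref{eq:gt-mu} and \cref{eq:yi-average} to get $2\zeta + \ep/10 \leq \ep/3$, and the same use of \cref{eq:phist-rews-nonzero} for $V_1^{\phi^\st,\pi_T} < 1/4$. The only trivial imprecision is attributing the bound $2^{1-H} \leq \ep/90$ to taking $n$ large, whereas (as in the paper) it comes from choosing the constant $C_{\ref{thm:oracle-lb}}$ sufficiently large in the assumption $\ep \geq C_{\ref{thm:oracle-lb}} 2^{-H}$.
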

  \begin{proof}[Proof of \cref{clm:all-good-predictors}]
        We set $\ME_{\ref{clm:all-good-predictors}}:=\Fchernoff \cap \{ \phi^\st \in \Edistinct\} \cap \bigcap_{t \in [T]} (\MF_t \cap \MF_t')$. 
Note that $\ME_{\ref{clm:all-good-predictors}}$ occurs with probability at least $1-3\delta T -1/N\geq {3/4}$ (note that we have used here \cref{lem:edistinct-prob} to bound $\UPhi(\Edistinct) \geq 1-1/N \geq 15/16$, as long as $n$ is sufficiently large).   
Under the event $\ME_{\ref{clm:all-good-predictors}}$, \cref{eq:gt-mu} holds, and thus, by Jensen's inequality, 
\begin{align}
|\mu_{L_t',\pi_t} - \E^{\phi^\st, \pi_t}[G_t(s_{h_t})]| \leq \frac{32mH^2 T}{N} + 2^{1-H} + (C_{\ref{lem:constant-fn-predictor}} + 1) \sqrt{\frac{\log 1/\delta}{N}}.\label{eq:means-diff}
\end{align}
Thus, under $\ME_{\ref{clm:all-good-predictors}}$, by \cref{eq:gt-mu,eq:yi-average,eq:means-diff}, we have
\begin{align}
\E^{\phi^\st, \pi_t} \left[\left|G_t(s_{h_t}) - \hat \mu_t\right|\right] \leq \frac{64m  H^2 T}{N} + 2^{2-H} + (2C_{\ref{lem:constant-fn-predictor}} + 2) \sqrt{\frac{\log 1/\delta}{N}} + \frac{\ep}{10}\nonumber.
\end{align}
As long as the constant $C_{\ref{thm:oracle-lb}}$ is chosen to be sufficiently large, our assumption that $\ep > C_{\ref{thm:oracle-lb}} \cdot \max\{ 2^{-H}, N^{-1/8} \}$ and $T < C_{\ref{thm:oracle-lb}}^{-1} \cdot \min \{ N^{1/8}, 2^H \}$, as well as that $N = 2^H \geq H^{C_{\ref{thm:oracle-lb}}/3}$ (which holds when $n \geq C_{\ref{thm:oracle-lb}}'$ for $C_{\ref{thm:oracle-lb}}'$ sufficiently large), 
implies that 
\begin{align}
  & \frac{64 m H^2 T}{N} + 2^{2-H} + (2C_{\ref{lem:constant-fn-predictor}} + 2) \sqrt{\frac{\log(1/\delta)}{N}} + \frac{\ep}{10}\nonumber\\
  &=  \frac{64 C_0 \log(16T) H^2 T}{N\ep^2 } + 2^{2-H} + (2C_{\ref{lem:constant-fn-predictor}} + 2) \sqrt{\frac{\log(16T)}{N}} + \frac{\ep}{10}\nonumber\\
  &\leq C \cdot \left( \frac{\log(N) \cdot N^{6/C_{\ref{thm:oracle-lb}}} \cdot C_{\ref{thm:oracle-lb}}^{-1} N^{1/8}}{N^{3/4}} + C_{\ref{thm:oracle-lb}}^{-1} \ep + \frac{\sqrt{\log(N)}}{N^{1/2}}\right) + \frac{\ep}{10} \nonumber\\
  &\leq C' \cdot \left( N^{-1/4} + C_{\ref{thm:oracle-lb}}^{-1} \ep \right) + \frac{\ep}{10} \leq \ep/3 \nonumber,
\end{align}
where $C,C'$ are some constants. 

Moreover, by \cref{eq:phist-rews-nonzero}, on the event $\ME_{\ref{clm:all-good-predictors}}$ we have that $V_1^{\phi^\st, \pi_T} \leq \frac{16mH^2 T}{N} + 2^{1-H} + \sqrt{\frac{\log 1/\delta}{N}} < 1/4$.
\end{proof}
On the event $\ME_{\ref{clm:all-good-predictors}}$, by \cref{clm:all-good-predictors,lem:acc-predictor} (where \cref{lem:acc-predictor} is applied for each $t$ with the policy $\pi_t$, label function $L_t$, and $\mu = \hat \mu_t$), we have that for each $t$, the constant function $\hat \mu_{t}$ returned by $\SimulateRegression(\MB_{\pi_t}, \MB_{L_t}, h_t, \MO_{\til x})$ is an $(d_h^{\phi^\st, \pi_t}, G_t, \ep)$-accurate predictor (\cref{def:reg-predictor}) with respect to $\MD_{\phi^\st}$. 
 Let us now choose some fixed decoding function $\til x \in \MX^{\MS\times[N]}$ and $\phi^\st := \Dec_{\til x} \in \Phi_{X,H}$, so that the event $\ME_{\ref{clm:all-good-predictors}}$ occurs with probability at least $3/4$ over the execution of $\Alg$, conditioned on $(\til x, \phi^\st)$.

Now consider the execution of $\Alg$ with the  following regression oracle $\Oregress'$ (and the same sampling oracle $\Osample$ as above): 
\begin{itemize}
\item For each regression oracle query $(\MB_{\pi_t}, \MB_{L_t}, h_t)$, if \SimulateRegression returns a (constant) function $\hat \mu$ that is $(d_h^{\phi^\st,\pi_t}, G_t, \ep)$-accurate with respect to $\MD_{\phi^\st}$, then $\Oregress'$ returns $\hat \mu$. 
\item Otherwise, $\Oregress'$ returns the mapping $x \mapsto G_t(\phi^\st(x))$. 
\end{itemize}
Certainly $\Oregress'$ is an $\ep$-accurate regression oracle for $M^{\phi^\st}_{X,H}$ per \cref{def:regression-oracle}. Moreover, under the event $\ME_{\ref{clm:all-good-predictors}}$ (which occurs with probability at least $3/4$) the transcript of $\Alg$ interacting with $\Oregress'$ is identical to the transcript of $\Alg$ when regression oracles are all implemented by \SimulateRegression as in \cref{alg:simulate-oracle} (instead of $\Oregress'$). Since under $\ME_{\ref{clm:all-good-predictors}}$, the output policy $\pi_T$ is not $1/2$-optimal (in particular, the optimal policy in $M^{\phi^\st}_{X,H}$ has value $1$), 
we have a contradiction to the fact that $\Alg$ is a $(T,\ep)$-reduction per \cref{def:rl-reg-reduction-2}. 
\end{proof}

\begin{proof}[Proof of \cref{lem:phi-oracle-ci}]
  Fix some $t \in [T]$ and some particular value of $\MT_t$. 
  The distribution of $\phi^\st$ conditioned on $\MT_t$ is $\UPhi|_{\MT_t}$, which is the distribution of $\Dec_{\til x}$ when $\til x \in \MX^{\MS \times [N]}$ has all its coordinates independent and uniform on $\MX$ conditioned on $\{ \til x_{s,i} = \MT_t(s,i) \ \forall (s,i) \in \dom(\MT_t)\}$.

  Consider any fixed values for $\til x' := ( \til x_{s,i} \ : \ (s,i) \in \MS \times [N] \backslash \dom(\MT_t) )$. Note that since $\SimulateReduction$ never accesses $\til x'$, the probability of any particular transcript of $\Alg$ (up to but not including the $t$-th regression oracle call) that is consistent with the chosen value for $\MT_t$ is the same, regardless of the value of $\til x'$. In particular, $\BP_{\phi^\st \sim \UPhi, \Alg}((\pi_t, L_t, h_t) \mid \MT_t, \til x')$ does not depend on $\til x'$. Since the marginal distribution of $\til x'$ is uniform on $\MX^{\MS \times [N] \backslash \dom(\MT_t)}$, it follows from Bayes' rule that $\BP_{\phi^\st \sim \UPhi, \Alg}(\til x' \mid \MT_t, (\pi_t, L_t, h_t))$ does not depend on $\til x'$. Hence $\BP_{\phi^\st \sim \UPhi, \Alg}(\til x = \cdot \mid \MT_t) = \BP_{\phi^\st \sim \UPhi, \Alg}(\til x = \cdot \mid \MT_t, (\pi_t, L_t, h_t))$, which completes the proof of the lemma.
\end{proof}

\subsection{Ruling out computational reductions from RL to regression}\label{sec:oracle-lb-second}

In the previous section, we established unconditional exponential lower bounds for reducing RL to regression using a family of Block MDPs $\Moracle$ (\cref{def:oracle-block-mdp-family}). For each $n \in \NN$, the set of MDPs $\MM_n$ corresponding to $\Moracle$ is the set $\{ M_{X_n,H_n}^\phi : \phi \in\Phi_{X_n, H_n}\}$ that arises from allowing $\phi$ to be an arbitrary element of $\Phi_n = \Phi_{X_n, H_n} = \MS_n^{\MX_n}$. Note that $\log |\Phi_{X_n,H_n} | = X_n \log |\MS_n| > N_n$, and $N_n$ itself needed to be exponential in $H_n$ in order for us to derive exponential lower bounds for reducing RL to regression in \cref{thm:oracle-lb}. 

Thus, while \cref{thm:oracle-lb} does rule out computational reductions from RL to regression (which are a subclass of all reductions from RL to regression) for $\Moracle$, it does not establish a gap between computational complexity with respect to the regression oracle and sample complexity, which scales with $\log|\Phi_{X_n,H_n}|$ and therefore is also exponential in $H_n$. In other words, one may argue that $\Moracle$ is ``so hard'' in a statistical sense that one should not even hope for a computationally efficient reduction. As another example, to compute the optimal policy of an MDP $M_{X_n, H_n}^\phi$ in the family $\Moracle$ one needs to evaluate $\phi_n : \MX_n \to \MS_n$, which, being an arbitrary function on an exponentially large domain, is not succinctly describable (i.e. does not have a small circuit representation) in general.


To address these objections, we now construct a \emph{computable} block MDP family (\cref{def:computable-family}) where there are exponential lower bounds against any computational reduction from RL to regression. As discussed at the beginning of the section, any computable block MDP family is statistically tractable (even without a regression oracle), so this result does establish a gap between computational complexity with respect to the regression oracle and sample complexity. Since this is an inherently computational result, it requires making a computational hardness assumption; we assume existence of a pseudorandom permutation family with sub-exponential security (\cref{asm:prf}). 

\paragraph{Construction.} To construct a computable block MDP family,
given parameters $N, H \in \BN$, recall the definition of the state space $\MS = ([H] \times \{0,1,2\}) \cup \perp$ and context space $\MX := [X]$ (where $X = 2^{\lceil \log N^5 \rceil}$) from \cref{sec:oracle-lb-first}, as well as the block MDP $M_{X,H}^\phi$, where $\phi : \MX \to \MS$.  
Suppose that $(3H+1)N = |\MS| N \leq X$, so that there is a natural injection $\MS\times[N] \xhookrightarrow{} [X]$ that identifies $\MS\times[N]$ with $[(3H+1)N] \subseteq [X]$. For any collection $\MF \subset [X]^{[X]}$ of functions $J: [X] \to [X]$, we define a subset $\Phisets$ of $\Phiset$ as
\[\Phisets := \{\phi_J: J \in \MF\}\]
where $\phi_J: \MX\to\MS$ is the \emph{$J$-decoder} defined below. We also define $\UPhis \in \Delta(\Phisets)$ to be the distribution $\Unif\{\phi_J: J \in \MF\}$. 

\begin{definition}[$J$-decoder]\label{def:j-decoder}
For each function $J : [X] \to [X]$, we define a function $\phi_J : \MX \to \MS$, as follows. Recall that we have identified the elements of $\MS \times [N]$ with the first $|\MS|N$ elements of $[X]$. For $x \in \MX$, $\phi_J(x)$ is defined as follows:
\begin{itemize}
\item If some $(s,i) \in \MS \times [N] \simeq [|\MS|N] \subset [X]$ satisfies $J((s,i)) = x$, then set $\phi_J(x) := s$, for the lexicographically first such pair $(s,i)$.
\item Otherwise, set $\phi_J(x) = \perp$. 
\end{itemize}
\end{definition}
The following lemma, which states that $\UPhi$ is the distribution of $\phi_J$ for a \emph{uniformly random} function $J : [X] \to [X]$, is immediate from the definition of $\UPhi$ in \cref{sec:oracle-lb-first}.
\begin{lemma}
  \label{eq:unif-phi}
$\UPhi \in \Delta(\Phiset)$ is exactly the distribution $\til{\mathscr{U}}_{X,H,[X]^{[X]}} = \Unif\{ \phi_J : J \in [X]^{[X]}\}$.
\end{lemma}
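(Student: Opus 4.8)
The claim is purely a matter of unwinding two definitions. On the one hand, $\UPhi \in \Delta(\Phiset)$ was defined in \cref{sec:oracle-lb-first} as the law of $\phi^\st := \Dec_{\til x}$, where $\til x = (\til x_{s,i})_{(s,i) \in \MS \times [N]}$ has its components drawn i.i.d.~from $\Unif(\MX)$. On the other hand, $\til{\mathscr{U}}_{X,H,[X]^{[X]}}$ is, by the definition of $\Phisets$ and $\UPhis$ instantiated with $\MF = [X]^{[X]}$, the law of $\phi_J$ where $J \sim \Unif([X]^{[X]})$, using the $J$-decoder of \cref{def:j-decoder}. So it suffices to exhibit a measure-preserving correspondence between the two sampling procedures that sends $\Dec_{\til x}$ to $\phi_J$.

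The key observation is that, under the identification $\MS \times [N] \simeq [|\MS|N] \subseteq [X]$ fixed just before \cref{def:j-decoder}, the restriction $J|_{\MS \times [N]}$ of a uniformly random $J : [X] \to [X]$ is exactly a tuple of i.i.d.~$\Unif(\MX)$ random variables indexed by $\MS \times [N]$ — i.e.\ it has the same law as $\til x$ — and moreover $\phi_J$ depends on $J$ only through $J|_{\MS \times [N]}$. Thus I would first note that $\phi_J = \phi_{J'}$ whenever $J|_{\MS\times[N]} = J'|_{\MS\times[N]}$, so that pushing $\Unif([X]^{[X]})$ forward through $J \mapsto \phi_J$ is the same as pushing $\Unif(\MX^{\MS\times[N]})$ forward through $\til x \mapsto \phi_{\til x}$, where I write $\phi_{\til x}$ for the $J$-decoder applied to any $J$ extending $\til x$. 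Then I would check, directly from \cref{def:j-decoder} and the definition of $\Dec_{\til x}$ in \cref{sec:oracle-lb-first}, that $\phi_{\til x} = \Dec_{\til x}$ as functions $\MX \to \MS$: both assign to $x \in \MX$ the latent state $s$ such that $(s,i)$ is the lexicographically (equivalently, on the relevant index set, the smallest-index) pair with $\til x_{s,i} = x$, and $\perp$ if no such pair exists. The only point requiring a word of care is matching ``lexicographically first $(s,i)$'' with ``smallest $s$, and then the description in terms of the ordering on $\MS \times [N]$'' — but since the tie-breaking rule in \cref{def:j-decoder} and in the definition of $\Dec$ both pick the lexicographically first witnessing pair, and $\perp$ is handled identically, these agree verbatim, and in fact this matching was already anticipated in the text defining $\phi_J$.

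There is essentially no obstacle here; the statement is an immediate consequence of how $\UPhi$ and the $J$-decoders were set up, and the proof is one or two sentences. If I wanted to be fully explicit I would phrase it as: for any $\phi \in \Phiset$, $\Pr_{J \sim \Unif([X]^{[X]})}[\phi_J = \phi] = \Pr_{\til x \sim \Unif(\MX)^{\otimes \MS\times[N]}}[\phi_{\til x} = \phi] = \Pr_{\til x}[\Dec_{\til x} = \phi] = \UPhi(\phi)$, where the first equality uses that $\phi_J$ depends only on $J|_{\MS\times[N]}$ and that this restriction is uniform on $\MX^{\MS\times[N]}$, the second uses $\phi_{\til x} = \Dec_{\til x}$, and the third is the definition of $\UPhi$.
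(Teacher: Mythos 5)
Your proposal is correct and is exactly the argument the paper has in mind: the paper gives no explicit proof, declaring the lemma ``immediate from the definition of $\UPhi$,'' and your unwinding — that $\phi_J$ depends on $J$ only through $J|_{\MS\times[N]}$, that this restriction of a uniform $J$ is i.i.d.\ uniform and hence distributed as $\til x$, and that $\phi_{\til x}=\Dec_{\til x}$ since the lexicographically first witnessing pair $(s,i)$ has the smallest witnessing $s$ (with $\perp$ handled identically) — is precisely the verification being left implicit.
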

We define our computable block MDP family by replacing $[X]^{[X]}$ with a set of pseudorandom permutations:
\begin{defn}
  \label{def:computable-block-mdp-family}
Let $(F_\ell)_{\ell \in \BN}$ be a $(t,q)$-pseudorandom permutation family for some tuple $(t,q)$ satisfying \cref{asm:prf}. Let $\MF_\ell$ denote the class of functions $\{ F_\ell(\rho, \cdot) \ : \ \rho \in \{0,1\}^\ell \}$, so that $F_\ell \in \MF_\ell$ maps $\{0,1\}^\ell \to \{0,1\}^\ell$.
  
We define the block MDP family $\tilMoracle$ to be identical to $\Moracle$ (\cref{def:oracle-block-mdp-family}) with the exception that the class $\Phi_n$ is given by $\til{\Phi}_{X_n, H_n, \MF_{\ell_n}}$. 
\end{defn}

\begin{lemma}
  \label{lem:computable-oracle-true}
  The block MDP family $\tilMoracle$ is  polynomially horizon-computable.
\end{lemma}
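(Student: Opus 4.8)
\textbf{Proof plan for \cref{lem:computable-oracle-true}.} The goal is to verify the two conditions of \cref{def:computable-family} for the family $\tilMoracle$ (\cref{def:computable-block-mdp-family}), with the bound $B_n = \poly(H_n)$. Recall that $\tilMoracle$ agrees with $\Moracle$ (\cref{def:oracle-block-mdp-family}) except that the decoding class $\Phi_n$ is replaced by $\Phisets[X_n,H_n,\MF_{\ell_n}]$ for the pseudorandom permutation family $(F_\ell)_\ell$ from \cref{asm:prf}. Since $\Moracle$ was defined with $N_n = 2^n$, $H_n = n$, $X_n = 2^{\lceil \log N_n^5\rceil}$, and $\ell_n = \log X_n$, we have $\ell_n = \Theta(n) = \Theta(H_n)$, and $|\MS_n| = 3H_n + 1$, $|\MA_n| = 2$.

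\emph{First condition (size bound).} I would check the quantities $\max\{\ell_n, |\MA_n|, |\MS_n|, H_n, \log|\Phi_n|\}$ one at a time. All of $\ell_n$, $|\MA_n|$, $|\MS_n|$, $H_n$ are manifestly $O(H_n)$ by the parameter choices above. The only nontrivial one is $\log|\Phi_n| = \log|\Phisets[X_n,H_n,\MF_{\ell_n}]|$. Here I would use that $\Phisets[X_n,H_n,\MF_{\ell_n}] = \{\phi_J : J \in \MF_{\ell_n}\}$ and $\MF_{\ell_n} = \{F_{\ell_n}(\rho,\cdot) : \rho \in \{0,1\}^{\ell_n}\}$, so $|\MF_{\ell_n}| \le 2^{\ell_n}$ and hence $\log|\Phi_n| \le \ell_n = O(H_n)$. (Note that a given $\phi_J$ may arise from several seeds $\rho$, but that only makes $|\Phi_n|$ smaller, so the bound holds regardless.) Thus all five quantities are $\poly(H_n)$.

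\emph{Second condition (efficient evaluation circuit).} I need a circuit $\MC_{\Phi_n}$ of size $\poly(H_n)$ that, given $(\phi, x)$ with $\phi$ encoded by an index in $[|\Phi_n|]$ and $x \in \MX_n$, outputs $\phi(x)$. Here I would identify the index of $\phi \in \Phi_n$ with a seed $\rho \in \{0,1\}^{\ell_n}$ so that $\phi = \phi_{F_{\ell_n}(\rho,\cdot)}$. Then, unwinding \cref{def:j-decoder}, to compute $\phi_J(x)$ with $J = F_{\ell_n}(\rho,\cdot)$ it suffices to: (i) for each $(s,i) \in \MS_n \times [N_n] \simeq [|\MS_n| N_n] \subseteq [X_n]$, compute $F_{\ell_n}(\rho, (s,i))$ and test whether it equals $x$; (ii) output $s$ for the lexicographically first such pair, or $\perp$ if none exists. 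The subtlety is that the naive implementation of step (i) ranges over $|\MS_n| N_n = \Theta(H_n 2^{H_n})$ pairs, which is exponential. The fix is exactly the efficient-invertibility guarantee of the PRP (item~\ref{it:prp-invertible} / \ref{it:eff-decode} of \cref{def:prp}): rather than enumerate, compute $(s,i) := F_{\ell_n}^{-1}(\rho, x)$ in polynomial time, check whether $(s,i) \in [|\MS_n| N_n]$ (i.e. whether $x$ is in the image of the first $|\MS_n| N_n$ inputs), and if so output the corresponding $s$, else output $\perp$. Because $F_{\ell_n}(\rho,\cdot)$ is a bijection, there is \emph{at most one} preimage of $x$, so the "lexicographically first" tiebreak in \cref{def:j-decoder} is vacuous and this computes $\phi_J(x)$ correctly. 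By item~\ref{it:eff-decode} of \cref{def:prp}, $F_{\ell_n}^{-1}$ is computable by a Turing machine (hence a circuit, by standard uniform-to-nonuniform conversion) of size $\poly(\ell_n) = \poly(H_n)$; the range check and extraction of $s$ from $(s,i)$ are trivially $\poly(\ell_n)$. Hence $\MC_{\Phi_n}$ has size $\poly(H_n)$.

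\emph{Main obstacle.} The one place that genuinely requires care is step (i): the definition of $\phi_J$ in \cref{def:j-decoder} is phrased as a search over the exponentially large set $\MS_n \times [N_n]$, and the whole point of introducing the PRP (rather than a random function) is precisely to make this search efficient. So the heart of the argument is observing that invertibility collapses the search to a single inverse-query, and that bijectivity makes the tiebreaking rule irrelevant. Everything else is bookkeeping about the parameter sizes. Finally, "polynomially horizon-computable" means $\poly(H_n)$-computable in the sense of \cref{def:computable-family}, which follows since all bounds above are $\poly(H_n)$ (recall $\ell_n, N_n$-free: the circuit size depends only on $\ell_n = O(H_n)$, not on $N_n$ directly).
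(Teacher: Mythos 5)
Your proof is correct and follows essentially the same route as the paper: bound the parameters directly, identify $\Phi_n$ with PRP seeds $\rho \in \{0,1\}^{\ell_n}$ so that $\log|\Phi_n| \leq \ell_n = O(H_n)$, and use efficient invertibility of $F_{\ell_n}(\rho,\cdot)$ (items 2--3 of \cref{def:prp}) to evaluate $\phi_{J_\rho}(x)$ with a single inverse query followed by a range check, bijectivity rendering the lexicographic tiebreak in \cref{def:j-decoder} vacuous. Your extra remarks (the seed-to-decoder map being possibly many-to-one only shrinks $|\Phi_n|$, and extracting $s$ from $(s,i)$) are correct bookkeeping that the paper treats implicitly.
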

\begin{proof}
It is immediate from \cref{def:oracle-block-mdp-family} that $\max\{ |\MS_n|, |\MA_n|, \ell_n \} \leq O(H_n)$. Furthermore, $| \log \Phi_n| = \ell_n \leq O(H_n)$. Finally, the mapping $\Phi_n \times \MX_n \ni (\phi, x) \mapsto \phi(x)$ is efficiently computable by a circuit of size $\poly(\ell_n) \leq \poly(H_n)$, as follows. Each $\phi \in \Phi_n$ may be written as $x \mapsto \phi_{J_\rho}(x)$ for some $J_\rho := F_{\ell_n}(\rho, \cdot) \in \til{\Phi}_{X_n,H_n,\MF_{\ell_n}}$. Since $J_\rho$ is invertible (\cref{it:prp-invertible} of \cref{def:prp}), we have from \cref{def:j-decoder} that \[\phi_{J_\rho}(x) = \begin{cases} J_\rho^{-1}(x) & \text{ if } J_\rho^{-1}(x) \in [|\MS_n|N_n] \\ \perp & \text{ otherwise} \end{cases}.\] Moreover, the map $(\rho,x) \mapsto J_\rho^{-1}(x)$ has a polynomial-time algorithm (\cref{it:prp-eff} of \cref{def:prp}) and hence polynomial-sized circuits, so $(\phi_{J_\rho}, x) \mapsto \phi_{J_\rho}(x)$ does as well (where we are indexing $\Phi_n = \{\phi_{J_\rho}: \rho \in \{0,1\}^{\ell_n}\}$ by $\{0,1\}^{\ell_n}$). 
\end{proof}

\begin{lemma}\label{lemma:sampling-correctness}
Let $J: [X] \to [X]$ be any function so that the restriction to $\MS\times[N] \simeq [|\MS|N] \subset [X]$ is injective. Let $\MO_J$ be an oracle implementing query access to $J$. Then for any circuit $\MB_\pi$ describing a policy $\pi$, the output of $\SimulateSampling(\MB_\pi,\MO)$ (\cref{alg:simulate-oracle}) is distributed according to $\BP^{\phi_J, \pi}$, where $\phi_J$ is the $J$-decoder (\cref{def:j-decoder}).
\end{lemma}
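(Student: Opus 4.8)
The statement is essentially a ``decoding consistency'' claim: the ad hoc sampling procedure $\SimulateSampling$, which never explicitly references a decoding function, in fact produces trajectories distributed exactly according to $\BP^{\phi_J,\pi}$ for the specific $J$-decoder $\phi_J$ of \cref{def:j-decoder}. The plan is to reduce this to two elementary facts about $\phi_J$ under the injectivity hypothesis, and then appeal to the structure of $\SimulateSampling$ already analyzed (implicitly) in the proof of \cref{thm:oracle-lb}.

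First I would unpack $\SimulateSampling(\MB_\pi,\MO_J)$ (\cref{alg:simulate-oracle}): it draws $s_1 \sim \til\BP_0$, and then at each step $h$ it draws $i_h \sim \Unif([N])$, sets $x_h := \MO_J(s_h,i_h) = J((s_h,i_h))$ (using the identification $\MS\times[N]\simeq[|\MS|N]\subseteq[X]$), sets $a_h := \pi_h(x_{1:h},a_{1:h-1})$, sets $r_h := \br_h(s_h)$, and draws $s_{h+1}\sim\til\BP(\cdot\mid s_h,a_h)$. Note that the latent transitions $\til\BP_h$, the initial distribution $\til\BP_0$, and the reward function $\br_h$ are all independent of the decoding function, and are exactly those of $M_{X,H}^{\phi_J}$ (which only depends on $\phi_J$ through its emission distributions). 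So the only thing to check is that the emission $x_h$ generated by $\SimulateSampling$, conditioned on the latent state $s_h$, is distributed according to $\MD_{\phi_J}(\cdot\mid s_h) = \Unif(\phi_J^{-1}(s_h))$.

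The key step is therefore the following claim: under the injectivity hypothesis on $J\big|_{\MS\times[N]}$, for every $s\in\MS\backslash\{\perp\}$ we have $\phi_J^{-1}(s) = \{J((s,i)) : i\in[N]\}$, and this set has size exactly $N$. For the size claim, injectivity gives that the $N$ values $J((s,1)),\dots,J((s,N))$ are distinct. For the equality, the ``$\supseteq$'' direction follows because if $x = J((s,i))$ then by \cref{def:j-decoder} we have $\phi_J(x) = s'$ for the lexicographically first pair $(s',i')$ with $J((s',i')) = x$; by injectivity the \emph{only} such pair is $(s,i)$, so $\phi_J(x) = s$. The ``$\subseteq$'' direction is immediate from \cref{def:j-decoder}: if $\phi_J(x) = s \neq \perp$ then some pair $(s,i)$ has $J((s,i)) = x$. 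Given this claim, drawing $i_h\sim\Unif([N])$ and outputting $J((s_h,i_h))$ produces a uniformly random element of a size-$N$ set that is exactly $\phi_J^{-1}(s_h)$, i.e.\ a sample from $\MD_{\phi_J}(\cdot\mid s_h)$. An inductive argument over $h$ (matching the chain-rule decomposition of $\BP^{\phi_J,\pi}$ into $\til\BP_0$, then alternating emission draws $\MD_{\phi_J}(\cdot\mid s_h)$, policy-prescribed actions $\pi_h$, reward draws $\br_h(s_h)$, and transition draws $\til\BP_h$) then shows the full output trajectory $(s_{1:H},x_{1:H},a_{1:H},r_{1:H})$ has law $\BP^{\phi_J,\pi}$.

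I expect the only subtle point — and the ``main obstacle,'' though it is minor — to be the bookkeeping around the identification $\MS\times[N]\simeq[|\MS|N]\subseteq[X]$ and checking that $\SimulateSampling$ as written queries $\MO$ at exactly the indices $(s_h,i_h)\in\MS\times[N]$ so that the correspondence with \cref{def:j-decoder} is exact; one must also note that states $s = \perp$ and indeed all unreachable states never arise in a trajectory, so the behavior of $\phi_J$ on the complement of $\bigcup_{s}\phi_J^{-1}(s)$ is irrelevant. Everything else is a routine induction, and in fact this lemma is the direct analogue for general (injective-on-$\MS\times[N]$) functions $J$ of the observation already made in the proof of \cref{thm:oracle-lb} that $\SimulateSampling$ draws exact trajectories from $M_{X,H}^{\phi^\st}$ when $\phi^\st\in\Edistinct$; the only change is replacing ``the values $\til x_{s,i}$ are distinct'' by ``$J\big|_{\MS\times[N]}$ is injective,'' which play identical roles.
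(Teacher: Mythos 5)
Your proof is correct and matches the paper's intended argument: the paper states this lemma without a separate proof, relying on exactly the observation you make (already spelled out in the "Simulating the sampling oracle" paragraph of the proof of \cref{thm:oracle-lb}), namely that injectivity of $J$ on $\MS\times[N]$ gives $\phi_J^{-1}(s)=\{J((s,i)):i\in[N]\}$ of size $N$, so the sampled emission is uniform on $\phi_J^{-1}(s_h)$, while the latent dynamics and rewards are decoder-independent. Nothing is missing.
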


Our main result of this section is stated below; it asserts that some computable block MDP family (which will be taken to be $\tilMoracle$) rules out computational reductions from RL to regression (per \cref{def:rl-reg-reduction-2}). 
\begin{theorem}
  \label{thm:reduction-prp}
  Suppose that \cref{asm:prf} holds. Then there is a polynomially horizon-computable block MDP family $\MM$ (\cref{def:computable-family}) and constants $c_0, \bar C, \bar c > 0$ so that the following holds. 
 For any complexity measures $\Tred : \BN \to \BN, \epred : \BN \to (0,1)$, defining $B : \BN \to \BN$ by $B(n) :=  H_n^{c_0}$,\footnote{Recall that $H_n$ denotes the horizon length of MDPs at index $n$ in the family $\MM$.}, if there is a computational $(\Tred, \epred, B)$-reduction $\Alg$ from RL to regression, 
 then the following holds for $n \geq \bar C$: 
\begin{align}
  \epred(n) \leq 2^{-H_n^{\bar c}} \qquad \mbox{ or } \qquad \Tred(n) \geq 2^{H_n^{\bar c}}\label{eq:computational-reduction-epT}.
    \end{align}
  \end{theorem}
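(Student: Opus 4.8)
The plan is to take $\MM := \tilMoracle$ (\cref{def:computable-block-mdp-family}), which is polynomially horizon-computable by \cref{lem:computable-oracle-true}, and to show that a computational reduction from RL to regression for this family would break the pseudorandom permutation family of \cref{asm:prf}. Let $c>0$ be the constant from \cref{asm:prf}, set $\bar c := \min(c,1)/2$, and let $c_0$ be a large constant to be fixed so that $B(n) = H_n^{c_0}$ dominates the circuit sizes appearing below. Suppose toward a contradiction that \cref{eq:computational-reduction-epT} fails for some $n \geq \bar C$, i.e.\ there is a computational $(\Tred,\epred,B)$-reduction $\Alg$ with $\Tred(n) < 2^{H_n^{\bar c}}$ and $\epred(n) > 2^{-H_n^{\bar c}}$. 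Using $N_n = 2^{H_n}$ and $\ell_n = \Theta(H_n)$, every quantity $\poly(\Tred(n), 1/\epred(n), H_n, B(n)) = 2^{O(H_n^{\bar c})}$ is below $t(\ell_n) = 2^{\ell_n^c}$, and $1/q(\ell_n) = 2^{-\ell_n^c}$ is negligible; so it suffices to build a $2^{O(H_n^{\bar c})}$-time oracle algorithm distinguishing $F_{\ell_n}(\rho,\cdot)$ (for random $\rho$) from a uniformly random function $J$ with advantage $\Omega(1)$, contradicting \cref{eq:prp-distinguish}. The key input is \cref{thm:oracle-lb}, which controls the behaviour of $\Alg$ against a \emph{uniformly random} decoder.

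\textbf{The simulation and the uniform-$J$ analysis.} Given oracle access to $J:\{0,1\}^{\ell_n}\to\{0,1\}^{\ell_n}$, identify $\MS_n\times[N_n]$ with a prefix of $[X_n]$ and let $\MO_J(s,i) := J((s,i))$. Run $\SimulateReduction(\Alg, X_n, H_n, \MO_J, \epred(n)/2, \delta)$ (\cref{alg:simulate-oracle}), where $\delta$ is a small enough constant times $1/\Tred(n)$; round the constant returned by each $\SimulateRegression$ call to a multiple of $\epred(n)/8$ (keeping it $\epred(n)/2$-accurate after rounding and of circuit size $O(\log(1/\epred(n))) \leq H_n^{\bar c} \leq B(n)$); abort after $\Tred(n)$ steps. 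Note $\SimulateSampling$ queries $J$ only on $\MS_n\times[N_n]$ and tracks latent states internally, so no inversion of $J$ is needed and no evaluation of $\phi_J$ at fresh points is required. Let $\hat\pi$ be the output policy. When $J$ is uniform, $\phi_J \sim \UPhi$ (\cref{eq:unif-phi}), so \cref{lem:edistinct-prob} gives $\phi_J \in \Edistinct$ with probability $\geq 1 - 1/N_n$, and by \cref{lemma:sampling-correctness} all $\SimulateSampling$ calls then faithfully draw from $M_{X_n,H_n}^{\phi_J}$. The proof of \cref{thm:oracle-lb} (specifically the derivation of \cref{clm:all-good-predictors} via \cref{lem:constant-fn-predictor} and \cref{lem:acc-predictor}) shows that with probability $\geq 3/4$ every $\SimulateRegression$ response is $(\epred(n)/2)$-accurate --- hence $\epred(n)$-accurate after rounding --- and $\BP^{\phi_J,\hat\pi}[r_H\neq 0] < 1/4$; by enlarging the slack constant hidden in $\delta$ (and correspondingly the sample count $m$ in $\SimulateRegression$, which stays $2^{O(H_n^{\bar c})}$), this probability can be boosted to $\geq 0.99$. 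Call this event $\ME_{\mathsf{good}}$; so $\Pr_{J\ \mathrm{unif}}[\ME_{\mathsf{good}}] \geq 0.99$, and $\ME_{\mathsf{good}}$ is \emph{checkable} in time $2^{O(H_n^{\bar c})}$ given oracle access to $J$: after recording each query tuple $(\pi_t,L_t,h_t,\hat\mu_t)$, draw $\poly(1/\epred(n))$ further trajectories via $\SimulateSampling(\cdot,\MO_J)$ to estimate $\E^{\phi_J,\pi_t}[|G_t(s_{h_t})-\hat\mu_t|] = \sum_{s}\BP^{\phi_J,\pi_t}[s_{h_t}=s]\,|G_t(s)-\hat\mu_t|$ over the $O(1)$ latent states reachable at step $h_t$, and estimate $\BP^{\phi_J,\hat\pi}[r_H\neq 0]$, and verify both are small (using \cref{lem:acc-predictor} to convert the first into accuracy). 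This check passes whenever $\ME_{\mathsf{good}}$ holds and the Chernoff errors are small, and passing it implies $\ME_{\mathsf{good}}$ up to those errors.

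\textbf{The pseudorandom case and the contradiction.} When $J = F_{\ell_n}(\rho,\cdot)$ we have $M_{X_n,H_n}^{\phi_J} \in \tilMoracle_n$, and since $J$ is a bijection, $\SimulateSampling(\cdot,\MO_J)$ is a bona fide sampling oracle for it (\cref{lemma:sampling-correctness}); also $\Alg$ then halts within $\Tred(n)$ steps and never aborts. Introduce the \emph{completed} regression oracle $\Oregressp$ that returns $\SimulateReduction$'s response on a query when that response is $\epred(n)$-accurate, and otherwise returns an explicit $B(n)$-bounded $\epred(n)$-accurate predictor $x\mapsto \mathrm{round}(G_t(\phi_J(x)))$ (which exists because $\phi_J$ has a $\poly(\ell_n)$-size circuit by \cref{lem:computable-oracle-true}); $\Oregressp$ is a valid $\epred(n)$-accurate $B(n)$-bounded nondeterministic oracle for $M_{X_n,H_n}^{\phi_J}$, so by the reduction guarantee of \cref{def:rl-reg-reduction-2}, $\Alg$ run with $\Oregressp$ and the simulated sampling oracle outputs a $1/2$-suboptimal policy (value $\geq 1/2$) with probability $\geq 1/2$. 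On the event $\ME_{\mathsf{good}}$ the transcripts of this run and of $\SimulateReduction^J$ coincide (all made queries being accurate), so its output equals $\hat\pi$. Since the checkability of $\ME_{\mathsf{good}}$ and \cref{eq:prp-distinguish} give $\Pr_\rho[\ME_{\mathsf{good}}] \geq 0.99 - 1/q(\ell_n) - o(1) \geq 0.98$, we get $\Pr_\rho[\BP^{\phi_J,\hat\pi}[r_H\neq 0] \geq 1/2] \geq 1/2 - 0.02 = 0.48$. The final distinguisher outputs $1$ iff an empirical estimate of $\BP^{\phi_J,\hat\pi}[r_H\neq 0]$ (from $\SimulateSampling$ trajectories) exceeds $3/8$: for pseudorandom $J$ this happens with probability $\geq 0.48 - o(1)$, while for uniform $J$, \cref{thm:oracle-lb}'s (boosted) intermediate bound forces $\BP^{\phi_J,\hat\pi}[r_H\neq 0] < 1/4$ with probability $\geq 0.99$, so the distinguisher outputs $1$ with probability $\leq 0.01 + o(1)$. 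The advantage is $\geq 1/3 \gg 1/q(\ell_n)$, contradicting \cref{asm:prf}. A routine check confirms the whole procedure runs in time $2^{O(H_n^{\bar c})} < t(\ell_n)$: it simulates $\Alg$ (which runs in time $\Tred(n)$ since the simulated oracle is $B(n)$-bounded) together with $O(\Tred(n)\cdot m\cdot\poly(H_n,B(n)))$ overhead, plus $2^{O(H_n^{\bar c})}$ for the accuracy and value estimates.

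\textbf{Main obstacle.} The delicate point is the pseudorandom case: the empirical-mean regression responses are provably accurate when the decoder is uniformly random --- this is exactly where open-loop indistinguishability of the counter-style latent MDP enters, through \cref{lem:constant-fn-predictor} --- but for a pseudorandom decoder this could \emph{a priori} fail, and then $\Oregressp$ would not coincide with the simulation and the reduction guarantee could not be invoked. Resolving this requires the observation that accuracy is efficiently checkable with only oracle access to $J$ (using that $\SimulateSampling$ exposes latent states, so one never needs to invert $J$ or evaluate $\phi_J$ off the support), so that accuracy transfers from the uniform to the pseudorandom setting by \cref{eq:prp-distinguish}; and then one must track constants carefully so that the ``bad policy w.h.p.\ in the uniform case'' and ``good policy w.p.\ $\geq 1/2$ in the pseudorandom case'' guarantees are genuinely separated --- which is why the intermediate success probability of \cref{thm:oracle-lb} must be boosted from $3/4$ to $1 - o(1)$ by tuning its internal slack parameter.
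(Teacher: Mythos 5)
Your proposal is correct and follows essentially the same route as the paper: the same family $\tilMoracle$, the same simulation machinery (\SimulateReduction/\SimulateSampling/\SimulateRegression with constant regression responses), the same key inputs (\cref{clm:all-good-predictors} via \cref{lem:constant-fn-predictor}, \cref{lem:acc-predictor}, \cref{lem:edistinct-prob}, \cref{lemma:sampling-correctness}), the same completed oracle $\Oregressp$ to invoke the reduction guarantee, and a PRP distinguisher built from empirical accuracy/value checks. The only difference is bookkeeping: you boost the \cref{clm:all-good-predictors} success probability (by shrinking its internal $\delta$, which its proof indeed permits since the failure probability is $3\delta T + 1/N$) and transfer the good event to the pseudorandom case via checkability, whereas the paper keeps the $3/4$ bound and instead splits into two cases on the distinguisher's acceptance probability — a cosmetic rearrangement rather than a genuinely different argument.
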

  \begin{proof}[Proof of \cref{thm:reduction-prp}]
    We consider the block MDP family $\MM := \tilMoracle$, which is polynomially horizon-computable by \cref{lem:computable-oracle-true}. Let $(F_\ell)_{\ell \in \BN}$ be the $(t,q)$-pseudorandom permutation family used to define $\tilMoracle$ (so that $(t,q)$ satisfy \cref{asm:prf}), and let $\MF_\ell$ denote the corresponding class of functions $\{ F_\ell(\rho, \cdot ) \ : \ \rho \in \{0,1\}^\ell \}$ mapping $\{0,1\}^\ell \to \{0,1\}^\ell$. Fix a computational $(\Tred, \epred, B)$-reduction $\Alg$ from RL to regression for $\tilMoracle$ (\cref{def:rl-reg-reduction-2}). Suppose for the purpose of contradiction that \cref{eq:computational-reduction-epT} does not hold. 
    
    By \cref{asm:prf} and the fact that $\ell_n \geq H_n$, we have $t(\ell_n) \geq 2^{\ell_n^c} \geq 2^{H_n^c}$ for a sufficiently small constant $c$ (which depends on the choice of constant in \cref{asm:prf}). Also, $N_n = 2^{H_n}$. It follows that, as long as the constant $\bar c>0$ in the theorem statement is sufficiently small, $\epred(n) > \max\{ 2^{-H_n}, N_n^{-1/8}, t(\ell_n)^{-1/8} \}$ and $\Tred(n) < \min \{ 2^{H_n}, N_n^{1/8}, t(\ell_n)^{1/8} \}$. 

    Fix $n \in \BN$. We will write $T := \Tred(n)+1$, $\ep := \epred(n)/3$, $N := N_n$, $H:= H_n$, $X := X_n = 2^{\ell_n}$, $\MS := \MS_n$, $\MA := \MA_n$, $\ell := \ell_n$, $\MF := \MF_\ell$, $\Phi := \Phi_n = \Phisets$, and $\br := \br_n$.  Moreover, set $\delta = 1/(72 T)$. 
    
    
\paragraph{Description of the distinguishing algorithm.}    In \cref{alg:distinguish-regression}, we introduce the procedure $\TestReduction(\MO, T, \ep)$, which takes as input an oracle $\MO$ as well as the parameters $T, \ep$. The oracle $\MO$ implements pointwise queries to either a random function from $\MF_\ell$ (which we will refer to as the case $b=0$) or a random function from $[X]^{[X]}$ (which we will refer to as the case $b=1$). $\TestReduction(\MO,T,\ep)$ 
outputs a random bit $\hat b$, with the goal of guessing $b$ with probability substantially better than $1/2$, which would contradict the security of $F$ (per \cref{def:prp}) if $\TestReduction$ is sufficiently computationally efficient.

Let $\phi^\st := \phi_\MO$ be the decoder associated to $\MO$ (\cref{def:j-decoder}). 
\TestReduction simulates the the execution of $\Alg$ on $M_{X,H}^{\phi^\st}$ by calling $\SimulateReduction(\Alg, X, H, \MO, \ep,\delta)$ (\cref{alg:simulate-oracle}). $\Alg$ makes at most $T-1$ regression oracle calls (as its total running time is at most $T$), whose inputs we denote by $(\MB_{\pi_t}, \MB_{L_t}, h_t)_{t \in [T-1]}$, and whose outputs we denote by $(\hat \mu_t)_{t \in [T-1]}$. If the actual number of regression oracle calls $\hat T$ is less than $T-1$, we choose $\pi_t, h_t$ arbitrarily, and let $L_t \equiv 0$ and $\hat \mu_t = 0$, for each $\hat T < t \leq T-1$. Finally, we denote the output policy of $\Alg$ by $\pi_T$, and choose $L_T$ arbitrarily.

    Next, \TestReduction generates estimates of the function $G_t(s) := \E^{\phi^\st, \pi_t}[L_t(x_{1:H}, a_{1:H}, r_{1:H}) \mid s_{h_t} = s]$, for each $t \in [T]$, by generating samples from the distribution of a trajectory $(s_{1:H}, x_{1:H}, a_{1:H}, r_{1:H}) \sim \BP^{\phi^\st, \pi_t}$, for each $t \in [T]$ (\cref{line:call-simsample}). This is done by using the procedure $\SimulateSampling(\pi_t, \MO)$ (from \cref{alg:simulate-oracle}), and the resulting estimate is denoted by $\hat G_t : \MS \to [0,1]$ (\cref{line:def-hatg}). In  a similar manner, on \cref{line:def-hatv}, \TestReduction computes an estimate $\hat V^t$ of $V_1^{\phi^\st, \pi_t}$ for each $t$. Finally, \TestReduction chooses its output bit as follows:
    \begin{itemize}
    \item If an empirical estimate of $\E^{\phi^\st, \pi_t}[|\hat G_t(s_{h_t}) - \hat \mu_t|]$ is large (\cref{line:disc-large}) or $\hat V^t$ is large (\cref{line:value-large}), then $\hat b := 1$ is returned.
      \item Otherwise, $\hat b := 0$ is returned. 
      \end{itemize}
      As we will show below, the first item above (where $1$ is returned) cannot happen with large probability if $b = 1$ by \cref{clm:all-good-predictors}. Thus if the first item happens when $b=0$, we will have a contradiction to the security of the PRP $F$; but, as we shall show, this will imply that $\Alg$ cannot be a valid reduction. 


\paragraph{Analysis of \TestReduction.} Set $\ep' = \ep/(40H)$. Fix any oracle $\MO: [X]\to[X]$ such that the map $(s,i) \mapsto \MO((s,i))$ is injective (recall the identification of $\MS\times[N]$ with $[|\MS|N] \subset [X]$), and consider the execution of $\TestReduction(\MO, T, \ep)$. Recall the definitions of empirical measures $\hat\BP_t, \hat\BP_t'$ and empirical expectations $\hat\EE_t, \hat\EE'_t$ from \cref{alg:distinguish-regression}. By \cref{lemma:sampling-correctness} and the assumption that $\MO$ is injective on $\MS\times[N]$, $\hat\BP_t$ and $\hat\BP_t'$ are (independent) empirical distributions, each over $m$ independent trajectories drawn from $\BP^{\phi^\st,\pi_t}$, where $m = C_1\log(72T)H^2/\epsilon^2$ as defined on \cref{line:choose-m-c1} of \cref{alg:distinguish-regression}. 

For each $t \in [T]$ and $s \in \MS$, let $p_t(s) := \BP^{\phi^\st, \pi_t}[s_{h_t} = s]$ and $\hat p_t(s) := \hat \BP_t[s_{h_t} = s]$, where $\hat \BP_t$ is defined on \cref{line:call-simsample} of \cref{alg:distinguish-regression}. Thus, for each $t \in [T]$, an application of the Chernoff bound and a union bound (over $s \in \MS$) gives that there is an event $\MG_t$ that occurs with probability $1-\delta$, under which the following property holds: for each $s \in \MS$, 
    \begin{align}
 \left| p_t(s) - \hat p_t(s) \right| = \left| \BP^{\phi^\st, \pi_t}[s_{h_t} = s] - \hat \BP_t[s_{h_t} = s] \right|  \leq \frac{C \sqrt{\log 1/\delta}}{\sqrt{m}} \leq \ep'\nonumber,
    \end{align}
   for some constant $C$. The final inequality holds as long as $C_1$ is chosen sufficiently large as a function of $C$ (in \cref{line:choose-m-c1} of \cref{alg:distinguish-regression}). For each $t$, the number of sampled trajectories $(s_{1:H}^i, x_{1:H}^i, a_{1:H}^i, r_{1:H}^i)$ in \cref{line:call-simsample} of \cref{alg:distinguish-regression} with $s_{h_t}^i = s$ is exactly $\hat p_t(s) \cdot m$. Thus, another application of the Chernoff bound and a union bound over $s \in \MS$ gives that there is an event $\MG_t'$ that occurs with probability $1-\delta$, under which the following holds: for each $s \in \MS$, 
    \begin{align}
      | \hat p_t(s) \cdot  \hat G_t(s) -p_t(s) \cdot  G_t(s)| 
      &=\Big| \hat \E_t[L_t(x_{1:H}, a_{1:H}, r_{1:H}) \cdot \One\{ s_{h_t} = s\} ] \nonumber\\ 
      &\qquad- \E^{\phi^\st, \pi_t}[L_t(x_{1:H}, a_{1:H}, r_{1:H}) \cdot \One\{ s_{h_t} = s\} ] \Big|\nonumber\\
      &\leq \frac{C\sqrt{\log 1/\delta}}{ \sqrt{m}} \leq \ep' \nonumber,
    \end{align}
    for some constant $C$. 
    Thus, under $\MG_t \cap \MG_t'$, for each $s \in \MS$, we have
    \begin{align}
      \E^{\phi^\st, \pi_t}[| \hat G_t(s_{h_t}) - G_t(s_{h_t})|] 
      &= \sum_{s \in \MS} p_t(s) \cdot | \hat G_t(s) - G_t(s)|\nonumber\\
      &\leq \sum_{s \in \MS} |\hat p_t(s) \cdot \hat G_t(s) - p_t(s) \cdot G_t(s)| + |p_t(s) - \hat p_t(s)| \cdot |\hat G_t(s)|\nonumber\\
      &\leq \frac{C\sqrt{\log 1/\delta}}{\sqrt{m}} \sum_{s \in \MS} \left( 1 + | \hat G_t(s)| \right) \leq 8H\ep'
                      \label{eq:phistar-ghat-g}
    \end{align}
    where the final inequality uses that $|\hat G_t(s)| \leq 1$ for all $s \in \MS$, and $|\MS| \leq 4H$.
    
    Finally, by another application of the Chernoff bound, for each $t$, there is some event $\MG_t''$ occurring with probability $1-\delta$, under which we have
    \begin{align}
      \left| \E^{\phi^\st, \pi_t}[| \hat G_t(s_{h_t}) - \hat \mu_t|] - \hat \E_t'[| \hat G_t(s_{h_t}) - \hat \mu_t|]\right|\leq \frac{C \sqrt{\log 1/\delta}}{\sqrt m} &\leq  \ep', \label{eq:phistar-ehatprime}\\
      \left| V_1^{\phi^\st, \pi_t} - \hat V_t \right| \leq \frac{C \sqrt{\log 1/\delta}}{\sqrt m} &\leq \ep'\label{eq:phistar-v},
    \end{align}
    for some constant $C$. Let $\MG := \bigcap_{t \in [T]} (\MG_t \cap \MG_t' \cap \MG_t'')$, so that $\MG$ occurs with probability at least $1-3\delta T = {23/24 > 11/12}$ over the execution of $\TestReduction(\MO, T, \ep)$, for any fixed $\MO:[X]\to[X]$ where $(s,i)\mapsto\MO((s,i))$ is injective. 

    Now let us take $\MO$ to be the random variable $\MO \sim \Unif([X]^{[X]})$ (note that $\phi^\st := \phi_\MO$ is now also a random variable). Then $\MO$ is distributed identically to the oracle $\MO_{\til x}$ in the proof of \cref{thm:oracle-lb}, so \cref{clm:all-good-predictors} gives that there is an event $\ME_{\ref{clm:all-good-predictors}}$ occurring with probability at least $3/4$ (over the choice of $\MO$ and the randomness of \SimulateReduction), under which for all $t \in [T]$, it holds that
 $ 
\E^{\phi^\st, \pi_t}[| G_t(s_{h_t}) - \hat \mu_t|] \leq 3\ep/4
$
and $V_1^{\phi^\st, \pi_T} \le 1/4$. Moreover, in this event, it holds (by definition of $\ME_{\ref{clm:all-good-predictors}}$ in the proof of \cref{clm:all-good-predictors}) that $\phi^\st = \phi_\MO \in \Edistinct$, so $(s,i) \mapsto \MO((s,i))$ is injective. Thus, the event $\ME_{\ref{clm:all-good-predictors}} \cap \MG$ occurs with probability at least $1-1/12-1/4 = 2/3$. Under this event, by \cref{eq:phistar-ghat-g,eq:phistar-ehatprime}, we have
    \begin{align}
      \hat \E_t'[|\hat G_t(s_{h_t}) - \hat \mu_t|] &\leq  \E^{\phi^\st, \pi_t}[|\hat G_t(s_{h_t}) - \hat \mu_t|] + \ep'\nonumber\\
      &\leq \E^{\phi^\st, \pi_t}[| G_t(s_{h_t}) - \hat \mu_t|] + 8H\ep' + \ep' \nonumber \\ 
      &\leq 3\ep/4 + 10 H \ep' \leq \ep\label{eq:beq1},
    \end{align}
    where the first two inequalities use \cref{eq:phistar-ehatprime} and \cref{eq:phistar-ghat-g} (which hold under $\MG$) respectively, the third inequality uses the first guarantee of $\ME_{\ref{clm:all-good-predictors}}$, and the final inequality uses the definition of $\ep' := \ep/(40H)$. Also, under the event $\ME_{\ref{clm:all-good-predictors}} \cap \MG$, we have $\hat V^t \leq V_1^{\phi^\st, \pi_T}  + \ep'\leq 3/8$ (using \cref{eq:phistar-v}) for all $t$. Together with \cref{eq:beq1}, this implies that \begin{equation}\Pr_{\MO\sim\Unif([X]^{[X]})}\left[\TestReduction(\MO, T, \ep) = 0\right] \geq 2/3].\label{eq:unif-case-output}\end{equation}
    Next, we instead take $\MO$ to be the random variable $\MO \sim \Unif(\MF)$, i.e. $\MO = F_\ell(\rho,\cdot) \in [X]^{[X]}$ where $\rho \sim \Unif(\{0,1\}^\ell)$. We distinguish two cases:
    \begin{itemize}
    \item First suppose that, for infinitely many $n \in \NN$, $\TestReduction(\MO, T, \ep)$ outputs $0$ with probability at most $7/12$ over the choice of $\MO$ and the randomness of the execution. Then together with \cref{eq:unif-case-output}, the Turing Machine $\mathscr{A} := \TestReduction(\cdot,T,\epsilon)$ satisfies
    \[\left|\EE_{\MO\sim\Unif([X]^{[X]})}\left[\mathscr{A}(\MO)\right] - \EE_{\MO\sim\Unif(\MF_\ell)}\left[\mathscr{A}(\MO)\right]\right| \geq 1/12 > 1/q(\ell)\]
    where the final inequality holds for sufficiently large $n$. But we claim that \TestReduction runs in time at most $O(T m H)$ with access to the oracle $\MO$. Indeed, \SimulateRegression is an $O(\log m)$-bounded regression oracle, and by definition of $m$ and the bounds $T, \epsilon^{-1} \leq O(H_n)$, we have $O(\log m) \leq H_n^{c_0} =: B(n)$ so long as $c_0$ is chosen to be sufficiently large. Thus by \cref{def:rl-reg-reduction-2}, the simulation of $\Alg$ by \SimulateReduction has running time at most $T$, excluding the implementations of the oracle calls by \SimulateSampling and \SimulateRegression. The total size of the circuits describing $\pi_1,\dots,\pi_T$ can be bounded by $O(T)$, so the total running time of those implementations is at most $O(TmH)$. Hence, the running time of \SimulateReduction in \cref{alg:distinguish-regression} is at most $O(TmH)$. It is then straightforward to check that the execution of the remainder of \TestReduction runs in time at most $O(TmH)$ as well, with access to the oracle $\MO$. But we have \[O(TmH) \leq O(T \log(1/\delta) H^3/\ep^2) \leq  O(T \log(T) \cdot t(\ell)^{1/4} \cdot t(\ell)^{2/8}) \leq O( t(\ell)^{3/4}) \leq t(\ell)\] using our assumptions on $\ep, T$ and  the fact that $t(\ell) = t(\ell_n) \geq H_n^{12} = H^{12}$ as long as $n$ is sufficiently large. 
      Hence, \TestReduction contradicts the fact that $(F_\ell)_{\ell\in\NN}$ is a $(t,q)$-pseudorandom permutation family (see in particular \cref{it:prp-indist} of \cref{def:prp}).
    \item Next suppose that, for all sufficiently large $n \in \NN$, $\TestReduction(\MO, T, \ep)$ outputs $0$ under some event $\ME_0$ that occurs with probability at least $7/12$ over the choice of $\MO$ and the randomness of the execution. From \cref{line:disc-large} of \cref{alg:distinguish-regression}, we know that under event $\ME_0$ it holds that $\hat\E_t'[|\hat G_t(s_{h_t})-\hat\mu_t|] \leq 3\epsilon/4$ for all $t \in [T]$. Thus, under the event $\ME_0 \cap \MG$ we have that, for each $t \in [T]$, 
      \begin{align}
        \E^{\phi^\st, \pi_t}[| G_t(s_{h_t}) - \hat \mu_t|] 
        &\leq \E^{\phi^\st, \pi_t}[|  \hat G_t(s_{h_t}) - \hat \mu_t |] + \ep'\nonumber\\
        &\leq \hat \E_t'[|\hat G_t(s_{h_t}) - \hat \mu_t|] + \ep' + 8H\ep'\nonumber\\
        &\leq 3\ep/4 + 10H\ep' \nonumber\\
        &\leq \ep\nonumber,
      \end{align}
      where we have used \cref{eq:phistar-ghat-g,eq:phistar-ehatprime}. It follows by \cref{lem:acc-predictor} that under $\ME_0 \cap \MG$, for each $t \in [T]$, the constant function $\hat \mu_t$ is a $(d_{h_t}^{\phi^\st, \pi_t}, G_t, 3\ep)$-accurate predictor with respect to $\MD_{\phi^\st}$.  

      Additionally, from \cref{line:value-large} of \cref{alg:distinguish-regression}, we know that $\hat V^T < 3/8$ under event $\ME_0$. Thus, by \cref{eq:phistar-v}, under $\ME_0 \cap \MG$, we have 
      \begin{align}
V_1^{\phi^\st, \pi_T} &\leq \hat V^T + \ep' < 1/2\nonumber.
      \end{align}
      Note that $M^{\phi^\st}_{X,H}$ does have a policy with value $1$, so $\pi_T$ has suboptimality greater than $1/2$.
      
      By the union bound, we know that $\ME_0 \cap \MG$ occurs with probability at least $13/24$ over the choice of $\MO$ and the execution of $\TestReduction(\MO, T, \ep)$. Thus, there exists some $\MO^\st = F_\ell(\rho^\st,\cdot) \in \MF$ so that $\ME_0 \cap \MG$ occurs with probability at least $13/24$ over the execution of $\TestReduction(\MO^\st, T, \ep)$. As in the proof of \cref{thm:oracle-lb}, we now consider modifying the execution of $\SimulateReduction(\Alg, X, H,  \MO^\st, \ep, 1/(16T))$ in \cref{line:simulate-reduction} of \cref{alg:distinguish-regression}, so that all calls to the regression oracle are implemented by the following oracle $\Oregressp$ (instead of by \SimulateRegression in \cref{alg:simulate-oracle}):
      \begin{itemize}
\item For each regression oracle query $(\MB_{\pi_t}, \MB_{L_t}, h_t)$, if \SimulateRegression returns a (constant) function $\hat \mu_t$ which is $(d_h^{\pi_t, \phi^\st}, G_t, 3\ep)$-accurate with respect to $\MD_{\phi^\st}$, then $\Oregressp$ returns $\hat \mu_t$. 
\item Otherwise, $\Oregressp$ returns the mapping $x \mapsto G_t(\phi^\st(x))$. 
\end{itemize}
Certainly $\Oregressp$ is $3\ep$-accurate per \cref{def:regression-oracle}. Since each constant function $\hat \mu_t$ and each function $G_t \circ \phi^\st$ is computable by a circuit of size $H_n^{\cprp}$, where $\cprp$ is a constant that exists by \cref{it:eff-decode} of \cref{def:prp}, the oracle $\Oregressp$ is also $H_n^{\cprp}$-bounded, and thus $B$-bounded if we take $c_0 = \cprp$. Moreover, under the event $\ME_0 \cap \MG$ (which occurs with probability at least $13/24$), the transcript of the execution of $\SimulateReduction(\Alg, X, H, \MO^\st, \ep, 1/(16T))$ is identical when regression oracle calls are implemented by $\Oregressp$ as to when they are all implemented by \SimulateRegression. And as shown above, under $\ME_0 \cap \MG$, the output policy $\pi_T$ of $\Alg$, as simulated by $\SimulateReduction(\Alg, X, H, \MO^\st,\ep, 1/(16T))$, is not $1/2$-optimal.

In all, we've shown that (for all sufficiently large $n \in \NN$), with probability at least $13/24$, the output policy $\pi_T$ of $\Alg$ (as simulated by $\SimulateReduction(\Alg, X, H, \MO^\st,\ep, 1/(16T))$) is not $1/2$-optimal. It follows from \cref{def:rl-reg-reduction-2} that $\Alg$ cannot be a computational $(\Tred, \epred, B)$-reduction. 
\end{itemize}
In both of the above cases, we have arrived at a contradiction, thus proving the theorem.
  \end{proof}

  \begin{algorithm}[t]
    \caption{\TestReduction$(\MO,T,\ep)$}
    \label{alg:distinguish-regression}
    \begin{algorithmic}[1]
      \Require Input oracle $\MO$, $T \in \BN$, $\ep \in (0,1)$. 
      \State Let $m := C_1 \log(72T)H^2/\ep^2$, for a sufficiently large constant $C_1$. \label{line:choose-m-c1}
      \State \label{line:simulate-reduction} Run $ \SimulateReduction(\Alg,X, H, \MO, \ep, 1/(16T))$; for $t \in [T-1]$, let $(\MB_{\pi_t}, \MB_{L_t}, h_t, \MO)$ denote the input to the $t$th call to \SimulateRegression, and let $\pi_T$ denote the output policy.  \Comment{\emph{\cref{alg:simulate-oracle}}}
      \For{$1 \leq t \leq T$}
      \State \label{line:call-simsample} Call $\SimulateSampling(\pi_t, \MO)$ (from \cref{alg:simulate-oracle}) $m$ times, generating $m$ trajectories $(s_{1:H}^i, a_{1:H}^i, x_{1:H}^i, r_{1:H}^i)_{i \in [m]}$, and denote the resulting empirical measure on  $(\MS \times \MA \times \MX \times [0,1])^H$ by $\hat \BP_t$, and the corresponding expectation by $\hat \E_t$.
      \State \label{line:def-hatg} Define $\hat G_t(s) := \hat \E_t[L_t(x_{1:H}, a_{1:H}, r_{1:H}) \mid s_{h_t} = s]$ for each $s \in \MS$.\Comment{\emph{Set $\hat G_t(s) := 0$ if the conditional expectation is undefined.}}
      \State Let $\hat \mu_t$ denote the constant function returned by the $t$th call to \SimulateRegression (i.e., to  $\SimulateRegression(\MB_{\pi_t}, \MB_{L_t}, h_t, \MO)$) within \SimulateReduction above. 
      \State Call $\SimulateSampling(\pi_t, \MO)$ (from \cref{alg:simulate-oracle}) $m$ more times, and denote the resulting empirical measure on  $(\MS \times \MA \times \MX \times [0,1])^H$ by $\hat \BP_t'$, and the corresponding expectation by $\hat \E_t'$  (as in \cref{line:call-simsample}).
      \State \label{line:def-hatv} Define $\hat V^t := \hat \E_t'\left[ \sum_{h=1}^H r_h \right]$. 
      \EndFor
      \If{There is $t \in [T]$ for which ${\hat \E_t'}[|\hat G_t(s_{h_t}) - \hat \mu_{t}|] > 3\ep/4$} \label{line:disc-large}
      \State\Return 1
      \EndIf
      \If{$\hat V^T \geq 3/8$}\label{line:value-large}
      \State\Return 1
      \EndIf
      \State\Return 0
      \end{algorithmic}
  \end{algorithm}



\bibliographystyle{alpha}
\bibliography{bib}

\appendix 

\section{Offline reinforcement learning}\label{sec:offline}
In this section we show that there is an efficient reduction from offline reward-directed RL with all-policy concentrability to regression, closely following a proof from \cite{chen2019information}. Informally, all-policy concentrability means that we have access to data from a policy that covers every state/action pair as well as all other policies, up to a multiplicative factor of $\kappa$. The online RL setting where we are given a succinct description of such a policy is no harder.

To formalize the reduction, we must make several definitions. First, recall that our original definition of a regression oracle for online RL (\cref{def:regression-oracle}) allows the algorithm to query any policy $\pi$ and label function $L$ and step $h$, and the oracle must produce a regressor that is accurate in expectation over the emission distribution induced by $\pi$ at step $h$. This no longer makes sense in offline RL, where the algorithm only has access to data from a fixed policy or distribution. Thus, we introduce the notion of a \emph{fixed-distribution regression oracle}, which is defined by a set of distributions $(\mu_h)_{h \in [H]}$ corresponding to the offline data. The oracle takes as input an action $a$, label function $L$, and step $h$, and it must produce a regressor that is accurate in expectation over $\mu_h$. The error is with respect to the expected label of a partial trajectory $(x,r,x')$ induced by starting at some latent state $s$ at step $h$ and then taking action $a$. We make the formal definition below.

\begin{definition}\label{def:fd-reg-oracle}
Let $M = (H,\MS,\MX,\MA,\til\BP_0,(\til\BP_h)_h,(\til\BO_h)_h,(\til\br_h)_h,\phi^\st)$ be a block MDP. For $B \in \NN$, a $B$-bounded \emph{fixed-distribution regression oracle} for $M$ is a nondeterministic function $\Oregress$ which takes as input a step $h \in [H]$, an action $a \in \MA$, and a (randomized) circuit $\MB_L$ describing a labeling function $L: \MX\times\{0,1\}\times\MX \to \Delta(\{0,1\})$, and which outputs a circuit $\MC_\MR$ describing a mapping $\MR: \MX\to[0,1]$, where $\size(\MC_\MR) \leq B$.

Let $\mu_1,\dots,\mu_H \in \Delta(\MS)$. We say that $\Oregress$ is \emph{$\epsilon$-accurate over $(\mu_h)_{h\in[H]}$} for $M$ if for each tuple $(\MB_L, h, a)$, the output $\MR := \Oregress(\MB_L,h,a)$ is a $(\mu_h, f, \epsilon)$-accurate predictor (\cref{def:reg-predictor}) with respect to $\BO_h(\cdot|s)$, where $f:\MS\to[0,1]$ is the function defined as
\[f(s) := \EE_{\substack{x \sim \til\BO_h(\cdot|s) \\ r \sim \Ber(\til\br_h(s, a)) \\ s' \sim \til \BP_h(\cdot|x,a) \\ x' \sim \til\BO_{h+1}(\cdot|s')}}[L(x,r,x')].\]
\end{definition}

\begin{remark}
We now motivate why this is an appropriate oracle for offline RL. Consider the offline RL setting where for each step $h$, we are given partial trajectories starting at step $h$ with latent state $s \sim \mu_h$ and action $a \sim \Unif(\MA)$ (and subsequently following an arbitrary policy). Then it can be seen that the fixed-distribution regression oracle for distributions $(\mu_h){h \in [H]}$ is statistically tractable, since for any action $a$ and step $h$, one can select the $1/(|\MA|H)$-fraction of the data where action $a$ is chosen at step $h$. More generally, for any data distribution (over trajectories) where every state/action pair is well-covered, there is a choice of $(\mu_h)_{h\in[H]}$ that covers every state (\cref{def:kappa-coverage}) and such that the corresponding fixed-distribution regression oracle is statistically tractable. 
\end{remark}

\begin{remark}[Online RL with exploratory policy]
Consider the online RL setting where we have access to the original regression oracle (\cref{def:regression-oracle}) as well as a succinct description of an exploratory policy, i.e. a policy $\pi$ such that the distributions $(d^\pi_h)_{h \in [H]}$ satisfy $\kappa$-coverage (\cref{def:kappa-coverage}). We can implement a query $(h, a, L)$ to the fixed-distribution regression oracle with distributions $(d^\pi_h)_{h \in [H]}$ as follows. First, define the policy $\pi^{h,a}$ that follows $\pi$ until step $h$ and subsequently plays action $a$. Second, define $L'(x_{1:H}, a_{1:H}, r_{1:H}) := L(x_h, r_h, x_{h+1})$. Third, invoke the original oracle with input $(h, \pi^{h,a}, L')$. Hence, the below reduction also applies in this online, exploratory setting.
\end{remark}


\begin{algorithm}[t]
    \caption{$\FQI(\Oregress)$: Oracle version of Fitted $Q$-Iteration}
    \label{alg:fqi}
    \begin{algorithmic}[1]
      \Require A fixed-distribution regression oracle $\Oregress$.
      \State Set $\hat Q_{H+1} \gets ((x, a) \mapsto 0)$
      \For{$h = H, H-1, \dots, 1$}
        \For{$\overline{a} \in \MA$}
            \Function{$\MB_L$}{$x,r,x'$}
                \State \textbf{return} $\frac{1}{H}\left(r + \max_{a' \in \MA} \hat Q_{h+1}(x',a')\right)$
            \EndFunction
            \State Set $\hat Q_h(\cdot, \overline a) \gets H \cdot \Oregress(\MB_L, h, \overline a)$ 
            \EndFor
            \State Set $\hat \pi_h(\cdot) \gets (x \mapsto \argmax_{a\in\MA} \hat Q_h(x, a))$
      \EndFor
      \State \textbf{return} $(\hat\pi_h)_{h \in [H]}$
      \end{algorithmic}
  \end{algorithm}

\begin{definition}\label{def:kappa-coverage}
Let $M = (H,\MS,\MX,\MA,\til\BP_0,(\til\BP_h)_h,(\til\BO_h)_h,(\til\br_h)_h,\phi^\st)$ be a block MDP, and let $\mu_1,\dots,\mu_H \in \Delta(\MS)$ be distributions. For $\kappa \geq 1$ and $h \in [H]$, we say that $\mu_h$ satisfies \emph{$\kappa$-coverage at step $h$} for $M$ if, for every general policy $(\pi_h)_{h=1}^H$, it holds that
\[\max_{s \in \MS} \frac{d^{M,\pi}_h(s)}{\mu_h(s)} \leq \kappa.\]
We then say that $(\mu_h)_{h\in[H]}$ satisfies $\kappa$-coverage for $M$ if $\mu_h$ satisfies $\kappa$-coverage at step $h$, for each $h \in [H]$.
\end{definition}

We now formally state the reduction, which is simply an implementation of $\FQI$ using the above oracle.

\begin{theorem}[c.f. \cite{chen2019information}]\label{theorem:fqi-guarantee}
Let $M = (H,\MS,\MX,\MA,\til\BP_0,(\til\BP_h)_h,(\til\BO_h)_h,(\til\br_h)_h,\phi^\st)$ be a block MDP with $\MX = \{0,1\}^\ell$, and let $\mu_1,\dots,\mu_H \in \Delta(\MS)$ be distributions. Fix $\kappa, B \geq 1$ and $\epsilon > 0$. Suppose that $(\mu_h)_{h=1}^H$ satisfies $\kappa$-coverage for $M$, and let $\Oregress$ be a $B$-bounded fixed-distribution regression oracle for $M$ that is $\epsilon$-accurate over $(\mu_h)_{h=1}^H$ (\cref{def:fd-reg-oracle}). Then the policy $\hat \pi$ produced by $\FQI(\Oregress)$ satisfies
\[\E^M V^{M,\pi^\st}_1(x_1) - \E^M V_1^{M,\hat\pi}(x_1) \leq 2H^3\sqrt{\kappa|\MA| \epsilon}\]
where $\pi^\st$ is the optimal policy for $M$. Moreover, the time complexity of the algorithm is $\poly(H, |\MA|, d, B)$.
\end{theorem}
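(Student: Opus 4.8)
The plan is to analyze \texttt{FQI} as an approximate value iteration procedure, following the classical information-theoretic analysis of Fitted $Q$-Iteration \cite{chen2019information}, but accounting carefully for the fact that the oracle's error guarantee is stated in terms of the $L^2$ distance to the \emph{latent-state-conditioned} label function $f$ (via \cref{def:fd-reg-oracle}), rather than directly in terms of Bellman error. The first step is to set up notation: write $\hat Q_h$ for the $Q$-estimate produced at step $h$, and observe that by the choice of label function $\MB_L$ in \cref{alg:fqi}, the target function $f$ that the oracle is accurate against is precisely $f(s) = \frac{1}{H}(\til\br_h(s,\overline a) + \EE_{x'}\max_{a'}\hat Q_{h+1}(x',a'))$, where the expectation over $x'$ is taken through the latent transition $s \to s'$ and then emission. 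Crucially, since $\hat Q_{h+1}$ was itself produced as a function on $\MX$, one should first argue (by downward induction) that $\hat Q_{h+1}$ is ``well-specified'' in the sense that it only depends on the latent state $\phi^\st(x')$ up to the oracle's $L^2$ error; more precisely, I would track an inductive hypothesis that there is a latent function $\bar Q_{h+1}: \MS\times\MA \to [0,1]$ with $\EE_{x'\sim\til\BO_{h+1}(\cdot|s')}(\hat Q_{h+1}(x',a') - \bar Q_{h+1}(s',a'))^2$ small in expectation over the appropriate distribution. This lets us identify $f(s)$ with $\frac{1}{H}$ times the Bellman backup $(\MT_h \bar Q_{h+1})(s,\overline a)$ of the \emph{latent} $Q$-function, up to controlled error.

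The second step is to convert the per-step $L^2$ guarantee into a bound on the one-step Bellman error of $\hat Q_h$ measured under $\mu_h$. By \cref{def:reg-predictor} and \cref{def:fd-reg-oracle}, for each $\overline a$ we have $\EE_{s\sim\mu_h}\EE_{x\sim\til\BO_h(\cdot|s)}(\Oregress(\MB_L,h,\overline a)(x) - f(s))^2 \le \epsilon$, so multiplying by $H$ gives $\EE_{s\sim\mu_h}\EE_{x\sim\til\BO_h(\cdot|s)}(\hat Q_h(x,\overline a) - H f(s))^2 \le H^2\epsilon$. Summing over the $|\MA|$ actions and combining with the inductive control on $\hat Q_{h+1}$, one obtains a bound of the form $\EE_{s\sim\mu_h}\sum_{a}(\hat Q_h(x,a) - (\MT_h\bar Q_{h+1})(s,a))^2 \lesssim H^2|\MA|\epsilon$ (absorbing lower-order terms from the induction). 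Then the standard performance-difference / telescoping argument for approximate value iteration (see \cite{chen2019information}) applies: the suboptimality of the greedy policy $\hat\pi$ is bounded by $\sum_{h=1}^H$ of the Bellman error of $\hat Q_h$ evaluated along trajectories of $\pi^\st$ and $\hat\pi$. Here the $\kappa$-coverage hypothesis (\cref{def:kappa-coverage}) is exactly what lets us change measure from those trajectory distributions $d_h^{M,\pi}$ to the offline distribution $\mu_h$: $\EE_{d_h^{M,\pi}}[g] \le \kappa\,\EE_{\mu_h}[g]$ for nonnegative $g$. Applying Cauchy–Schwarz to pass from $L^2$ Bellman error to $L^1$ error picks up a square root, and carefully bookkeeping the $H$ factors (one from the horizon sum, one from the $1/H$ scaling, one from the crude bound $Q \le H$ used to convert squared errors) yields the claimed $2H^3\sqrt{\kappa|\MA|\epsilon}$; I would not stress about matching the constant $2$ exactly, just track the polynomial in $H$, $\kappa$, $|\MA|$, $\epsilon$.

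The third, easier step is the running-time claim: \texttt{FQI} makes $H|\MA|$ oracle calls, each returning a circuit of size $\le B$; constructing each $\MB_L$ requires composing the previous layer's $\le |\MA|$ circuits of size $\le B$ with a $\max$ and an addition, so its size stays $\poly(H,|\MA|,B)$; and the final greedy policy $\hat\pi_h$ is an $\argmax$ over $|\MA|$ circuits. The emission length $\ell$ enters only through circuit input sizes. Hence the total time is $\poly(H,|\MA|,\ell,B)$ as stated.

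The main obstacle I anticipate is the first step: cleanly setting up and propagating the inductive hypothesis that each $\hat Q_{h+1}$ is approximately a function of the latent state, and being honest about \emph{which} distribution the $L^2$-closeness of $\hat Q_{h+1}$ to $\bar Q_{h+1}$ holds under. The oracle's guarantee at step $h+1$ is under $\mu_{h+1}$, but in the Bellman backup at step $h$ we need closeness under the pushforward of $\mu_h$ (at state $s$, action $\overline a$) through the latent transition. This is again handled by $\kappa$-coverage — that pushforward is dominated by $\mu_{h+1}$ up to a factor of $\kappa|\MA|$ — but one must be careful not to let the $\kappa$ factors compound across all $H$ layers, which would give $\kappa^H$ rather than $\kappa$. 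The resolution, as in \cite{chen2019information}, is to only ever invoke coverage \emph{once} per term in the final telescoping sum, rather than recursively inside the induction; so the inductive hypothesis should be phrased to carry forward a Bellman-error bound (which does not require recursive change-of-measure) rather than a value-error bound. Getting this bookkeeping right is the crux; everything else is routine.
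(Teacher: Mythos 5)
There is a genuine gap in your first step, and it is precisely the point you flag as the crux. You propose to prove by induction that $\hat Q_{h+1}$ is approximately latent-measurable (close to some $\bar Q_{h+1}:\MS\times\MA\to[0,1]$), and to transport that closeness from the oracle's distribution $\mu_{h+1}$ to the pushforward of $\mu_h$ through the latent transition by claiming this pushforward is dominated by $\mu_{h+1}$ up to $\kappa|\MA|$. That domination does not follow from \cref{def:kappa-coverage}: $\kappa$-coverage bounds $d^{M,\pi}_{h+1}/\mu_{h+1}$ for \emph{policy visitation distributions}, but $\mu_h$ is an arbitrary covering distribution, so its pushforward under a fixed action need not be any $d^{M,\pi}_{h+1}$. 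Concretely, $\mu_h$ may place constant mass on a state $s$ that is reachable only with tiny probability $\eta$; if $s$ transitions deterministically to a state $s''$ reachable only through $s$, then $\mu_{h+1}(s'')$ can legally be as small as $\eta/\kappa$ while the pushforward of $\mu_h$ gives $s''$ constant mass, so the density ratio blows up as $\eta\to 0$. As written, this step of your induction fails.

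The fix is that the entire ``well-specifiedness of $\hat Q_{h+1}$'' induction is unnecessary, and this is exactly how the paper avoids the issue. By \cref{def:fd-reg-oracle}, the oracle's target $f(s)$ is the conditional mean of the label given the latent state, and in a block MDP one has the \emph{exact} identity $Hf(s)=\br_h(x,\bar a)+\EE_{x'\sim\BP_h(\cdot|x,\bar a)}\max_{a'}\hat Q_{h+1}(x',a')$ for every $x$ in the support of $\til\BO_h(\cdot|s)$, since the reward and next-emission distribution depend on $x$ only through $\phi^\st(x)$; no approximation of $\hat Q_{h+1}$ by a latent function is needed. The oracle guarantee therefore directly yields the Bellman-residual bound $\EE_{s\sim\mu_h,(x,a)\sim\til\BO_h(\cdot|s)\times\Unif(\MA)}\bigl(\hat Q_h(x,a)-(\MT_h\hat Q_{h+1})(x,a)\bigr)^2\le\epsilon H^2$ (the paper's \cref{eq:qhat-error}). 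From there, your remaining plan coincides with the paper's: an induction (\cref{lemma:fq-error-induction}) that at each level pays $\sqrt{\kappa|\MA|}$ once to change measure onto $\mu_h$ for the Bellman-residual term and passes $\|\hat Q_{h+1}-Q^{M,\pi^\st}_{h+1}\|$ to the next level under a \emph{shifted Markovian policy} (so no coverage factor compounds), followed by the performance-difference bound (\cref{lemma:pd-l2}) and Cauchy--Schwarz, giving $2H^3\sqrt{\kappa|\MA|\epsilon}$. Your runtime paragraph is fine.
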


The proof is essentially due to \cite{chen2019information}; we simply modify their analysis to the finite-horizon setting and to use the fixed-distribution regression oracle as a black-box (thus avoiding the generalization bounds needed in \cite{chen2019information}). The main ideas of the proof are unchanged.

\begin{lemma}[{c.f. \cite[Lemma 13]{chen2019information}}]\label{lemma:pd-l2}
Let $f_1,\dots,f_H: \MX \times \MA \to \RR$ be functions, and define a policy $(\pi_h)_{h=1}^H$ by $\pi_h(x) = \argmax_{a \in \MA} f_h(x,a)$. Then for any policy $\pi^\st:\MX\to\MA$,
\begin{align*}
\E^M V^{M, \pi^\st}_1(x_1) - \E^M V^{M, \pi}_1(x_1) &\leq \sum_{h=1}^H \sqrt{\E^{M,\pi} (Q^{M,\pi^\st}_h(x_h, \pi_h^\st(x_h)) - f_h(x_h,\pi_h^\st(x_h)))^2} \\
&\qquad+ \sqrt{\E^{M,\pi} (Q^{M,\pi^\st}_h(x_h, \pi_h(x_h)) - f_h(x_h,\pi_h(x_h)))^2}
\end{align*}
\end{lemma}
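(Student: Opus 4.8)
The plan is to follow the standard performance-difference-lemma argument, adapted to the finite-horizon MDP $M$ (we may ignore the block structure and simply treat $M$ as an MDP on the observation space $\MX$), closely mirroring the analysis of \cite{chen2019information}. First I would establish the identity
\[
\E^M V^{M,\pi^\st}_1(x_1) - \E^M V^{M,\pi}_1(x_1) = \sum_{h=1}^H \E^{M,\pi}\!\left[Q^{M,\pi^\st}_h(x_h, \pi^\st_h(x_h)) - Q^{M,\pi^\st}_h(x_h, \pi_h(x_h))\right],
\]
where the expectation $\E^{M,\pi}$ is over trajectories generated by $\pi$. This follows by the usual telescoping: along a trajectory drawn from $\pi$, consider the quantity $\sum_{t<h} r_t + V^{M,\pi^\st}_h(x_h)$, whose expectation equals $\E^M V^{M,\pi^\st}_1(x_1)$ at $h=1$ and $\E^M V^{M,\pi}_1(x_1)$ at $h=H+1$ (using $V^{M,\pi^\st}_{H+1}\equiv 0$); its expected increment at step $h$ is $\E^{M,\pi}[V^{M,\pi^\st}_h(x_h) - Q^{M,\pi^\st}_h(x_h,a_h)]$, and since $a_h\sim\pi_h(x_h)$ and $V^{M,\pi^\st}_h(x_h) = Q^{M,\pi^\st}_h(x_h,\pi^\st_h(x_h))$, this is exactly the $h$-th summand. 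It is essential here that the expectation is taken under $d^{M,\pi}_h$ rather than $d^{M,\pi^\st}_h$, since downstream (in \cref{theorem:fqi-guarantee}) the regression error is only controlled under the offline distribution $\mu_h$, which $\kappa$-dominates $d^{M,\pi}_h$.

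Next, for each $h$ I would insert $f_h$ and split the per-step term into three pieces:
\[
Q^{M,\pi^\st}_h(x_h, \pi^\st_h(x_h)) - Q^{M,\pi^\st}_h(x_h, \pi_h(x_h)) = (\mathrm{I}) + (\mathrm{II}) + (\mathrm{III}),
\]
with $(\mathrm{I}) = Q^{M,\pi^\st}_h(x_h,\pi^\st_h(x_h)) - f_h(x_h,\pi^\st_h(x_h))$, $(\mathrm{II}) = f_h(x_h,\pi^\st_h(x_h)) - f_h(x_h,\pi_h(x_h))$, and $(\mathrm{III}) = f_h(x_h,\pi_h(x_h)) - Q^{M,\pi^\st}_h(x_h,\pi_h(x_h))$. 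Since $\pi_h(x_h)\in\argmax_{a\in\MA} f_h(x_h,a)$ by definition, $(\mathrm{II}) \le 0$ pointwise, so it may be dropped. Taking $\E^{M,\pi}$ of $(\mathrm{I})$ and $(\mathrm{III})$, summing over $h\in[H]$, and applying $\E[Z] \le \sqrt{\E[Z^2]}$ (Jensen) to each expectation gives precisely the two families of square-root terms in the statement: the $(\mathrm{I})$ contributions yield $\sqrt{\E^{M,\pi}(Q^{M,\pi^\st}_h(x_h,\pi^\st_h(x_h)) - f_h(x_h,\pi^\st_h(x_h)))^2}$ and the $(\mathrm{III})$ contributions yield $\sqrt{\E^{M,\pi}(Q^{M,\pi^\st}_h(x_h,\pi_h(x_h)) - f_h(x_h,\pi_h(x_h)))^2}$.

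I do not anticipate a genuine obstacle here: every step is routine. The only points requiring care are orienting the performance difference lemma correctly (expectation under $\pi$, with $Q^{M,\pi^\st}$ in both terms) and observing that $\pi_h$ and $\pi^\st_h$ are deterministic maps into $\MA$ so that all the quantities $Q^{M,\pi^\st}_h(x_h,\pi_h(x_h))$, $f_h(x_h,\pi_h(x_h))$ are well-defined — both of which hold by hypothesis. This lemma will then be combined with the definition of the fixed-distribution regression oracle and $\kappa$-coverage to prove \cref{theorem:fqi-guarantee}.
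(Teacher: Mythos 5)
Your proposal is correct and follows essentially the same route as the paper's proof: the performance difference lemma (which the paper cites rather than re-deriving via telescoping), insertion of $f_h$ with the nonpositive middle term dropped by the $\argmax$ definition of $\pi_h$, and Jensen's inequality applied termwise.
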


\begin{proof}
By the performance difference lemma \cite{kakade2002approximately}, we have
\begin{align*}
\E^M V_1^{M,\pi^\st}(x_1) - \E^M V_1^{M,\pi}(x_1)
&= \sum_{h=1}^H \E^{M,\pi}\left[ Q_h^{M,\pi^\st}(x_h, \pi_h^\st(x_h)) - Q_h^{M,\pi^\st}(x_h,\pi_h(x_h))\right] \\ 
&\leq \sum_{h=1}^H \E^{M,\pi}\left[ Q_h^{M,\pi^\st}(x_h, \pi_h^\st(x_h)) - f_h(x_h, \pi_h^\st(x_h)))\right] \\ 
&\qquad+ \sum_{h=1}^H \E^{M,\pi}\left[f_h(x_h,\pi_h(x_h)) - Q_h^{M,\pi^\st}(x_h,\pi_h(x_h))\right] \\ 
&\leq \sum_{h=1}^H \sqrt{\E^{M,\pi} (Q^{M,\pi^\st}_h(x_h, \pi_h^\st(x_h)) - f_h(x_h,\pi_h^\st(x_h)))^2} \\
&\qquad+ \sqrt{\E^{M,\pi} (Q^{M,\pi^\st}_h(x_h, \pi_h(x_h)) - f_h(x_h,\pi_h(x_h)))^2}
\end{align*}
where the first inequality is by definition of $\pi$, and the second inequality is by Jensen's inequality.
\end{proof}

\begin{lemma}[{c.f. \cite[Lemma 15]{chen2019information}}]\label{lemma:fq-error-induction}
Let $\mu_h \in \Delta(\MS)$ be a distribution that satisfies $\kappa$-coverage for $M$ at step $h \in [H]$. Let $\pi$ be a Markovian policy, and let $f_h, f_{h+1}: \MX\times\MA \to \RR$ be functions. Then there is a Markovian policy $\pi'$ such that
\begin{align*}
&\sqrt{\E^{M,\pi}(f_h(x_h,a_h)-Q^{M,\pi^\st}_h(x_h,a_h))^2} \\
&\leq \sqrt{\kappa |\MA| \EE_{\substack{s \sim \mu_h \\ (x,a) \sim \til\BO_h(\cdot|s) \times \Unif(\MA)}} \left(f_h(x,a) - \br_h(x,a) - \EE_{x' \sim \BP_h(\cdot|x,a)} \max_{a' \in \MA} f_{h+1}(x',a')\right)^2} \\ 
&+ \sqrt{\E^{M,\pi'}(f_{h+1}(x_{h+1},a_{h+1}) - Q^{M,\pi^\st}_{h+1}(x_{h+1},a_{h+1}))^2}.
\end{align*}
where $\pi^\st$ is the optimal policy for $M$.
\end{lemma}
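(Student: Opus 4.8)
Looking at Lemma~\ref{lemma:fq-error-induction}, the statement is a one-step error-propagation bound for Fitted $Q$-Iteration in the block MDP setting, and the plan is to follow the structure of the analogous argument in \cite{chen2019information} but adapted to finite horizon and the fixed-distribution oracle abstraction. The key quantity to control is $\E^{M,\pi}(f_h(x_h,a_h) - Q_h^{M,\pi^\st}(x_h,a_h))^2$, and the idea is to peel off one Bellman backup: write $Q_h^{M,\pi^\st}(x,a) = \br_h(x,a) + \E_{x'\sim\BP_h(\cdot|x,a)} \max_{a'} Q_{h+1}^{M,\pi^\st}(x',a')$, and compare $f_h$ against the ``empirical target'' $\br_h(x,a) + \E_{x'\sim\BP_h(\cdot|x,a)}\max_{a'} f_{h+1}(x',a')$.

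First I would apply the triangle inequality in $L^2(d_h^{M,\pi}\times\text{(action)})$ to split $\sqrt{\E^{M,\pi}(f_h - Q_h^{M,\pi^\st})^2}$ into (i) the distance between $f_h$ and the backed-up-$f_{h+1}$ target, and (ii) the distance between the backed-up-$f_{h+1}$ target and the backed-up-$Q_{h+1}^{M,\pi^\st}$ target, which equals $Q_h^{M,\pi^\st}$. For term (i), I would use the $\kappa$-coverage hypothesis on $\mu_h$ together with importance-weighting: the state marginal $d_h^{M,\pi}$ is dominated by $\kappa\,\mu_h$, and the action marginal under $\pi$ is dominated by $|\MA|\cdot\Unif(\MA)$, so $\E^{M,\pi}(\cdot)^2 \leq \kappa|\MA|\,\E_{s\sim\mu_h, (x,a)\sim\til\BO_h(\cdot|s)\times\Unif(\MA)}(\cdot)^2$; importantly the emission distribution $\til\BO_h(\cdot|s)$ is the same under both measures since it only depends on the latent state, so the change of measure is clean. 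This gives exactly the first term on the right-hand side. For term (ii), I would bound the difference of the two conditional expectations pointwise by Jensen: $|\E_{x'}\max_{a'} f_{h+1}(x',a') - \E_{x'}\max_{a'} Q_{h+1}^{M,\pi^\st}(x',a')| \leq \E_{x'} \max_{a'}|f_{h+1}(x',a') - Q_{h+1}^{M,\pi^\st}(x',a')| \leq \E_{x'}\sum_{a'}|f_{h+1}(x',a') - Q_{h+1}^{M,\pi^\st}(x',a')|$, then push through the $L^2$ norm.

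The construction of $\pi'$ is the step requiring the most care: I would define $\pi'$ to follow $\pi$ for steps $1,\dots,h-1$, play a uniformly random action at step $h$, and then at step $h+1$ play the action $a_{h+1}$ that achieves $\max_{a'}|f_{h+1}(x_{h+1},a') - Q_{h+1}^{M,\pi^\st}(x_{h+1},a')|$ (a deterministic function of $x_{h+1}$), so that $\E^{M,\pi}[\max_{a'}(f_{h+1}(x_{h+1},a') - Q_{h+1}^{M,\pi^\st}(x_{h+1},a'))^2] = \E^{M,\pi'}[(f_{h+1}(x_{h+1},a_{h+1}) - Q_{h+1}^{M,\pi^\st}(x_{h+1},a_{h+1}))^2]$ after accounting for the transition from step $h$ to $h+1$. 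The subtlety is matching the state-action distribution at step $h+1$: under the backup in term (ii) the next state $x'$ is drawn from $\BP_h(\cdot|x_h,a_h)$ with $x_h \sim d_h^{M,\pi}$ and $a_h\sim\pi$, which is exactly $d_{h+1}^{M,\pi}$-distributed if $\pi'$ copies $\pi$ through step $h$; but the $\sum_{a'}$ bound forces the extra $|\MA|$-style handling at step $h+1$, which is why $\pi'$ needs the argmax action choice at $h+1$ (absorbing the sum into a single term via choosing the worst action, without an extra $|\MA|$ factor since we can bound $\sum_{a'}|\cdot| \le |\MA|^{1/2}(\sum_{a'}|\cdot|^2)^{1/2}$ is \emph{not} what we want — rather, $\max_{a'}|\cdot| \le (\sum_{a'}|\cdot|^2)^{1/2}$, so choosing the argmax action gives $\E^{M,\pi'}(\cdot)^2 \ge \max_{a'}(\cdot)^2$). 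I would double-check the exact constant bookkeeping against \cite[Lemma 15]{chen2019information} here.

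The main obstacle I anticipate is getting the change-of-measure and the policy $\pi'$ definition to line up so that the final bound has precisely the stated form — no stray $|\MA|$ or $\kappa$ factors, and with the second term genuinely being an $L^2$ error at step $h+1$ under a \emph{Markovian} policy (so that the induction in the proof of Theorem~\ref{theorem:fqi-guarantee} closes). Once this lemma and Lemma~\ref{lemma:pd-l2} are in hand, the proof of Theorem~\ref{theorem:fqi-guarantee} itself is a routine unrolling: apply Lemma~\ref{lemma:pd-l2} with $f_h = Q_h^{M,\hat\pi} \equiv \hat Q_h$, then iterate Lemma~\ref{lemma:fq-error-induction} down the horizon, bounding each step-$h$ empirical-Bellman-error term by $\sqrt{\epsilon}$ via the $\epsilon$-accuracy of the oracle on the label function $L(x,r,x') = \frac1H(r + \max_{a'}\hat Q_{h+1}(x',a'))$ (noting the factor-$H$ rescaling in Algorithm~\ref{alg:fqi} keeps everything in $[0,1]$ for the oracle), summing the resulting $O(H)$ terms each of size $O(H\sqrt{\kappa|\MA|\epsilon})$, and reading off the $2H^3\sqrt{\kappa|\MA|\epsilon}$ bound; the time complexity is immediate from the $H|\MA|$ oracle calls and circuit sizes.
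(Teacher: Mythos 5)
Your proposal follows essentially the same route as the paper's proof: the same triangle-inequality split around the Bellman backup $\br_h + \EE_{x'}\max_{a'} f_{h+1}$ (using $Q^{M,\pi^\st}_h = \br_h + \EE_{x'}\max_{a'}Q^{M,\pi^\st}_{h+1}$), the same $\kappa|\MA|$ change of measure to $\mu_h \times \Unif(\MA)$ for the first term, and the same device of a Markovian $\pi'$ that copies $\pi$ up to step $h$ and then plays a maximizing action at step $h+1$ (your choice $\argmax_{a'}|f_{h+1}-Q^{M,\pi^\st}_{h+1}|$ works just as well as the paper's $\argmax_{a'}\max\{f_{h+1},Q^{M,\pi^\st}_{h+1}\}$, since $|\max_a f - \max_a g| \leq \max_a|f-g|$). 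The only blemish is your initial statement that $\pi'$ plays a uniformly random action at step $h$ — that would change the law of $x_{h+1}$ and break the final identity — but you correct this yourself in the next sentence by requiring $\pi'$ to copy $\pi$ through step $h$, which is exactly what the paper does (and the detour through $\sum_{a'}$ is unnecessary once the max-Lipschitz bound is used).
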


\begin{proof}
For notational convenience, for any $h \in [H]$ and $f: \MX\times\MA\to\RR$, let $(\MT_h f): \MX \times\MA\to\RR$ be the function defined by $(\MT_h f)(x_h, a_h) := \br_h(x_h,a_h) + \EE_{x_{h+1}\sim\BP_h(\cdot|x_h,a_h)} \max_{a_{h+1}\in\MA} f(x_{h+1},a_{h+1})$. Observe that by Bellman optimality, $\pi^\st$ satisfies $Q_h^{M,\pi^\st} = \MT_h Q_{h+1}^{M,\pi^\st}$. Thus, using the triangle inequality for the $L_2$ norm, we have
\begin{align}
\sqrt{\E^{M,\pi}(f_h(x_h,a_h) - Q_h^{M,\pi^\st}(x_h,a_h))^2} \nonumber
&\leq \sqrt{\E^{M,\pi}(f_h(x_h,a_h) - (\MT_h f_{h+1})(x_h,a_h))^2} \nonumber \\ 
&\qquad+ \sqrt{\E^{M,\pi}((\MT_h f_{h+1})(x_h,a_h) - (\MT_h Q_{h+1}^{M,\pi^\st})(x_h,a_h))^2}.\label{eq:fq-tri-ineq}
\end{align}
To bound the first term, note that
\begin{align}
\E^{M,\pi}(f_h(x_h,a_h) - (\MT_h f_{h+1})(x_h,a_h))^2
&= \sum_{s \in \MS} d^{M,\pi}_h(s) \EE_{x \sim \til\BO_h(\cdot|s)}  (f_h(x,\pi_h(x)) - (\MT_h f_{h+1})(x,\pi_h(x)))^2 \nonumber\\ 
&\leq \kappa \sum_{s \in \MS} \mu_h(s) \EE_{x \sim \til\BO_h(\cdot|s)}  (f_h(x,\pi_h(x)) - (\MT_h f_{h+1})(x,\pi_h(x)))^2 \nonumber\\ 
&\leq \kappa |\MA| \sum_{s \in \MS} \mu_h(s) \EE_{(x,a) \sim \til\BO_h(\cdot|s)\times\Unif(\MA)}  (f_h(x,a) - (\MT_h f_{h+1})(x,a))^2 \label{eq:ftf-bound}
\end{align}
where the first inequality uses that $\mu_h$ satisfies $\kappa$-coverage for $M$ at step $h$. To bound the second term, let $\pi'$ be the Markovian policy defined by
\[\pi'_k(x) := \begin{cases} \pi_k(x) & \text{ if } k \leq h \\ \argmax_{a \in \MA} \{f_{h+1}(x,a), Q^{M,\pi^\st}_{h+1}(x,a)\} & \text{ if } k = h+1 \\ \bar a & \text{ if } k > h+1 \end{cases}\]
for some arbitrary action $\bar a \in \MA$. Then note that
\begin{align}
\E^{M,\pi}((\MT_h f_{h+1})(x_h,a_h) - (\MT_h Q_{h+1}^{M,\pi^\st})(x_h,a_h))^2
&= \E^{M,\pi} \left(\max_{a \in \MA} f_{h+1}(x_{h+1},a) - \max_{a \in \MA} Q^{M,\pi^\st}_{h+1}(x_{h+1},a)\right)^2 \nonumber\\ 
&\leq \E^{M,\pi} \left(f_{h+1}(x_{h+1},\pi'_{h+1}(x_{h+1})) - Q^{M,\pi^\st}_{h+1}(x_{h+1},\pi'_{h+1}(x_{h+1}))\right)^2 \nonumber\\ 
&= \E^{M,\pi'} \left(f_{h+1}(x_{h+1}, a_{h+1}) - Q^{M,\pi^\st}_{h+1}(x_{h+1},a_{h+1})\right)^2.\label{eq:tftq-bound}
\end{align}
Substituting \cref{eq:ftf-bound,eq:tftq-bound} into \cref{eq:fq-tri-ineq} completes the proof.
\end{proof}

\begin{namedproof}{\cref{theorem:fqi-guarantee}}
The time complexity bound is immediate from the algorithm description and the assumption that $\Oregress$ is $B$-bounded. It remains to bound the suboptimality of $\hat \pi$. For each $h \in [H]$ and $\bar a \in \MA$, the assumption that $\Oregress$ is $\epsilon$-accurate over $(\mu_h)_{h=1}^H$ implies that
\begin{equation}\EE_{\substack{s_h \sim \mu_h \\ x_h \sim \til\BO_h(\cdot|s_h)}} (\hat Q_h(x_h, \bar a) - f(s_h))^2 \leq \epsilon H^2 \label{eq:qhat-error-0}\end{equation}
where
\begin{align*}
f(s_h) &:= \EE_{\substack{x'_h \sim \til\BO_h(\cdot|s_h) \\ r_h \sim \Ber(\tilde\br_h(s_h,\bar a)) \\ s_{h+1} \sim \til\BP_h(\cdot|s_h, \bar a) \\ x_{h+1} \sim \til\BO_{h+1}(\cdot|s_{h+1})}}[r + \max_{a' \in\MA} \hat Q_{h+1}(x_{h+1},a')] \\ 
&= \til\br_h(s_h,\bar a) + \EE_{\substack{x'_h \sim \til\BO_h(\cdot|s_h) \\ x_{h+1} \sim \BP_h(\cdot|x'_h, \bar a)}} \max_{a'\in\MA} \hat Q_{h+1}(x_{h+1},a') \\ 
&= \br_h(x_h, \bar a) + \EE_{x_{h+1} \sim \BP_h(\cdot|x_h, \bar a)} \max_{a'\in\MA} \hat Q_{h+1}(x_{h+1},a')
\end{align*}
and the last equality holds for any $x_h \in \MX$ with $\til\BO_h(x_h|s_h) > 0$, by the definition of a block MDP. Thus, \cref{eq:qhat-error-0} gives that
\begin{equation*}
\EE_{\substack{s_h \sim \mu_h \\ x_h \sim \til\BO_h(\cdot|s_h)}} \left(\hat Q_h(x_h,\bar a) - \br_h(x_h,\bar a) - \EE_{x_{h+1}\sim\BP_h(\cdot|x_h,\bar a)} \max_{a'\in \MA} \hat Q_{h+1}(x_{h+1}, a')\right)^2 \leq \epsilon H^2.
\end{equation*}
Averaging over all $\bar a \in \MA$ gives
\begin{equation}
\EE_{\substack{s_h \sim \mu_h \\ (x_h, \bar a) \sim \til\BO_h(\cdot|s_h)\times \Unif(\MA)}} \left(\hat Q_h(x_h,\bar a) - \br_h(x_h,\bar a) - \EE_{x_{h+1}\sim\BP_h(\cdot|x_h,\bar a)} \max_{a'\in \MA} \hat Q_{h+1}(x_{h+1}, a')\right)^2 \leq \epsilon H^2.\label{eq:qhat-error}
\end{equation}
Now we apply \cref{lemma:fq-error-induction} with distribution $\mu_h$ and functions $\hat Q_h, \hat Q_{h+1}$. By assumption that $\mu_h$ satisfies $\kappa$-coverage for $M$ at step $h$, and by \cref{eq:qhat-error}, we have that for any $h \in [H]$ and Markovian policy $\pi$, there is a Markovian policy $\pi'$ such that
\[\sqrt{\E^{M,\pi}(\hat Q_h(x_h,a_h) - Q^{M,\pi^\st}_h(x_h,a_h))^2} \leq H\sqrt{\kappa|\MA|\epsilon} + \sqrt{\E^{M,\pi'}(\hat Q_{h+1}(x_{h+1},a_{h+1}) - Q^{M,\pi^\st}_{h+1}(x_{h+1},a_{h+1}))^2}.\]
Since $\hat Q_{H+1}(\cdot,\cdot) = Q^{M,\pi^\st}_{H+1}(\cdot,\cdot) = 0$, we get by iterating the above inequality that for any $h \in [H]$ and Markovian policy $\pi$, it holds that
\begin{equation}
\sqrt{\E^{M,\pi}(\hat Q_h(x_h,a_h) - Q^{M,\pi^\st}_h(x_h,a_h))^2} \leq H^2 \sqrt{\kappa |\MA| \epsilon}.\label{eq:hatq-q-error}
\end{equation}
Applying \cref{lemma:pd-l2} with the policy $\hat \pi$ defined by $\hat\pi_h(x) := \argmax_{a \in \MA} \hat Q_h(x,a)$ and the policy $\pi^\st$ that is optimal for $M$, we get that the suboptimality of $\hat\pi$ is at most
\begin{align*}
\E^M V_1^{M,\pi^\st}(x_1) - \E^M V_1^{M,\hat\pi}(x_1)
&\leq \sum_{h=1}^H \sqrt{\E^{M,\hat\pi}(Q^{M,\pi^\st}_h(x_h,\pi^\st_h(x_h)) - \hat Q_h(x_h,\pi^\st_h(x_h)))^2} \\ 
&\qquad+ \sum_{h=1}^H \sqrt{\E^{M,\hat\pi} (Q^{M,\pi^\st}_h(x_h,\hat\pi(x_h)) - \hat Q_h(x_h,\hat\pi(x_h)))^2} \\ 
&\leq 2H^3 \sqrt{\kappa |\MA|\epsilon}
\end{align*}
where the final inequality invokes \cref{eq:hatq-q-error} for each $h \in [H]$, with the policies $\hat \pi$ and $\hat \pi \circ_h \pi^\st$ (i.e. the policy that follows $\hat \pi$ until step $h$, and subsequently follows $\pi^\st$).
\end{namedproof}


\section{Block MDPs with horizon one}\label{sec:bandits}

In this section we show that for the special case of block MDPs with horizon one, there is a reduction from online, reward-directed RL to regression and vice versa. The first reduction is essentially folklore, and can be thought of as a special case of $\FQI$, using the uniformly random policy as the exploratory policy:

\begin{proposition}
Let $\MM = ((\MS_n)_n, (\MA_n)_n, (H_n)_n, (\ell_n)_n, (\Phi_n)_n, (\MM_n)_n)$ be a block MDP family with $H_n = 1$ for all $n \in \NN$. For any function $B: \NN \to \NN$, there is a $(T, \epsilon, B)$-computational reduction from RL to regression for $\MM$, with $T(n) \leq \poly(|\MS_n|,|\MA_n|, H_n, B(n))$ and $\epsilon(n) = \Omega(1/|\MA_n|^2)$.
\end{proposition}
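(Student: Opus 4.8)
The plan is to realize a horizon-one instance of Fitted $Q$-Iteration (\cref{alg:fqi}): with $H=1$ the uniformly random policy is trivially exploratory (every policy induces the same step-$1$ visitation distribution, namely the latent initial distribution $\til\BP_0$), so the whole reduction can be carried out using only the regression oracle, with no sampling oracle calls at all. Fix $n$ and an MDP $M \in \MM_n$, and write $\MS = \MS_n$, $\MA = \MA_n$, and let $\til\BP_0, \til\BO_1, \til\br_1, \phi^\st$ denote the initial distribution, emission distribution, reward function, and decoding function of $M$. For each action $\bar a \in \MA$, I would query the regression oracle at step $h=1$ with the constant policy $\pi_{\bar a}$ (which always plays $\bar a$) and the labeling function $L(x_1, a_1, r_1) := r_1$. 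Since the policy cannot affect step $1$, we have $d_1^{M,\pi_{\bar a}} = \til\BP_0$, and $f(s) = \EE^{M,\pi_{\bar a}}[r_1 \mid \phi^\st(x_1) = s] = \til\br_1(s, \bar a)$; moreover $L$ is independent of $x_1$ given $\phi^\st(x_1)$, so by \cref{def:reg-predictor} the oracle returns a circuit $\MR_{\bar a} : \MX \to [0,1]$ with
\[\EE_{s \sim \til\BP_0,\ x \sim \til\BO_1(\cdot \mid s)} \left( \MR_{\bar a}(x) - \til\br_1(s, \bar a) \right)^2 \leq \epsilon(n).\]
The reduction then outputs a circuit computing the greedy policy $\hat\pi(x) := \argmax_{\bar a \in \MA} \MR_{\bar a}(x)$, with ties broken arbitrarily.

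For correctness I would run the standard regret-to-regression argument. Let $\pi^\st(x) := \argmax_a \til\br_1(\phi^\st(x), a)$ be an optimal policy, and note that the value of any policy $\pi$ equals $\EE_{s \sim \til\BP_0,\ x \sim \til\BO_1(\cdot|s)} \til\br_1(s, \pi(x))$, so its suboptimality is the difference of this quantity taken at $\pi^\st$ and at $\pi$. Using $\MR_{\hat\pi(x)}(x) \geq \MR_{\pi^\st(x)}(x)$ pointwise, for each $x$ with $s = \phi^\st(x)$,
\[\til\br_1(s, \pi^\st(x)) - \til\br_1(s, \hat\pi(x)) \leq \left| \til\br_1(s, \pi^\st(x)) - \MR_{\pi^\st(x)}(x) \right| + \left| \MR_{\hat\pi(x)}(x) - \til\br_1(s, \hat\pi(x)) \right|.\]
Taking expectations over $s \sim \til\BP_0$, $x \sim \til\BO_1(\cdot|s)$, bounding each of the two terms crudely by $\sum_{\bar a \in \MA} \EE|\MR_{\bar a}(x) - \til\br_1(s,\bar a)|$, and using Jensen's inequality to pass to the $L^2$ errors, I obtain suboptimality at most $2|\MA|\sqrt{\epsilon(n)}$. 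Taking $\epsilon(n) := 1/(16|\MA_n|^2) = \Omega(1/|\MA_n|^2)$ makes this at most $1/2$. Since the argument uses no internal randomness, the conclusion holds deterministically, so the probability-$1/2$ clause of \cref{def:rl-reg-reduction-2} is satisfied vacuously. (One can shave the bound to $2\sqrt{|\MA|\epsilon}$ by instead partitioning $\MX$ according to $\pi^\st$ and applying Cauchy–Schwarz, matching the $\FQI$ bound of \cref{theorem:fqi-guarantee} with $\kappa = 1$, but the cruder bound already suffices for the claimed $\Omega(1/|\MA_n|^2)$.)

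Finally I would check the resource bounds. The reduction makes exactly $|\MA_n|$ regression oracle calls and zero sampling oracle calls. When $\Oregress$ is $B(n)$-bounded, each returned circuit $\MR_{\bar a}$ has size at most $B(n)$, so assembling the $\argmax$ circuit $\MC_{\hat\pi}$ — and writing down the trivial query circuits $\MB_{\pi_{\bar a}}, \MB_L$ — takes time $\poly(|\MA_n|, B(n))$; thus the total running time, and hence a valid choice of $T(n)$, is $\poly(|\MS_n|, |\MA_n|, H_n, B(n))$. There is no genuine obstacle in this proof; the only points requiring a little care are (i) confirming the regression oracle's accuracy is guaranteed precisely against $d_1^{M,\pi_{\bar a}} = \til\BP_0$, which is exactly the distribution appearing in the value, and (ii) noting that the labeling function is allowed to depend on the observed reward, which it is by \cref{def:regression-oracle}.
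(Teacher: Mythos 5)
Your proposal is correct and follows essentially the same route as the paper's proof: for each action, query the regression oracle at step $1$ with the constant policy and the reward as label (so the target is $\til\br_1(\cdot,\bar a)$ under $\til\BP_0$), output the greedy policy over the returned predictors, and bound the suboptimality by $2|\MA_n|\sqrt{\epsilon(n)}$ via the same pointwise triangle-inequality-plus-Jensen argument, with $\epsilon(n)=1/(16|\MA_n|^2)$. The resource accounting also matches the paper.
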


\begin{proof}
We define an oracle Turing Machine $\Alg^{\Oregress}$ with access to oracle $\Oregress$ as follows. For each action $a \in \MA_n$, $\Alg$ constructs the circuit $\MB_{\pi_a}$ computing the constant function $\pi_a(x) = a$, and the circuit $\MB_L$ computing the function $(x,a,r) \mapsto \Ber(r)$. Then $\Alg$ computes $\hat \MR_a \gets \Oregress(\MB_{\pi_a},\MB_L, 1)$. After doing this for each $a \in \MA_n$, the output of $\Alg$ is the circuit $\MC_{\hat \pi}$ computing the function $\hat\pi: \{0,1\}^{\ell_n} \to \MA_n$ with $\pi(x) = \argmax_{a \in \MA_n} \hat \MR_a(x)$.

\paragraph{Analysis.} Fix $n \in \NN$ and an MDP $M = (H_n, \MS_n, \{0,1\}^{\ell_n}, \MA_n, \til\BP_0, (\til \BP_h)_h, (\til \BO_h)_h, (\til \br_h)_h,\phi^\st) \in \MM_n$. By assumption, we have $H_n = 1$, so a policy $\pi$ is a (potentially randomized) map $\{0,1\}^{\ell_n} \to \MA_n$, and the value of $\pi$ on $M$ is
\[V^{M,\pi}_1 = \EE_{\substack{s \sim \til\BP_0 \\ x \sim \til \BO_1(\cdot|s)}} \til\br_1(s, \pi(x)).\]
Suppose that $\Oregress$ is $\epsilon(n)$-accurate for $M$, where $\epsilon(n) := 1/(16 |\MA_n|^2)$. Then for each $a \in \MA_n$, by \cref{def:regression-oracle}, $\hat\MR_a$ is a $(\til\BP_0, f_a, \epsilon(n))$-accurate predictor with respect to $s\mapsto \BO_1(\cdot|s)$, where $f_a: \MS_n \to [0,1]$ is the function $f_a(s) := \EE^{M,\pi_a}[r_1|s_1=s] = \br_1(s,a)$. By \cref{def:reg-predictor}, this means that for each $a \in \MA_n$,
\begin{align}
\EE_{\substack{s \sim \til\BP_0 \\ x \sim \til\BO_1(\cdot|s)}} (\hat\MR_a(x) - \br_1(s,a))^2 \leq \epsilon(n).\label{eq:bandit-oracle-guarantee}
\end{align}
Now let $\pi^\st: \{0,1\}^{\ell_n} \to \MA_n$ be the optimal policy for $M$. For any $s \in \MS_n$ and $x \in \{0,1\}^{\ell_n}$, we have
\begin{align*}
\til\br_1(s,\pi^\st(x)) - \til\br_1(s,\hat\pi(x)) 
&= \til\br_1(s,\pi^\st(x)) - \hat\MR_{\pi^\st(x)}(x) + \hat\MR_{\pi^\st(x)}(x) - \til\br_1(s,\hat\pi(x))\\ 
&\leq \til\br_1(x,\pi^\st(x)) - \hat\MR_{\pi^\st(x)}(x) + \hat\MR_{\hat\pi(x)}(x) - \til\br_1(s,\hat\pi(x)) \\ 
&\leq 2\max_{a \in \MA_n} |\til\br_1(s,a) - \hat\MR_a(x)|
\end{align*}
where the first inequality is by definition of $\hat\pi$. Thus, we can bound the suboptimality of $\hat\pi$ as
\begin{align*}
V_1^{M, \pi^\st} - V_1^{M, \hat\pi}
&= \EE_{\substack{s \sim \til\BP_0 \\ x \sim \til\BO_1(\cdot|s)}} \br_1(s, \pi^\st(x)) - \br_1(s,\hat\pi(x)) \\ 
&\leq 2\EE_{\substack{s \sim \til\BP_0 \\ x \sim \til\BO_1(\cdot|s)}} \max_{a \in \MA_n} |\til\br_1(s,a) - \hat\MR_a(x)| \\ 
&\leq 2|\MA_n| \cdot \max_{a \in \MA_n} \EE_{\substack{s \sim \til\BP_0 \\ x \sim \til\BO_1(\cdot|s)}} |\til\br_1(s,a) - \hat\MR_a(x)| \\ 
&\leq 2|\MA_n| \sqrt{\epsilon(n)}  \leq 1/2
\end{align*}
where the third inequality is by \cref{eq:bandit-oracle-guarantee} and Jensen's inequality, and the final inequality is by choice of $\epsilon(n)$.

Moreover, if $\Oregress$ is $B(n)$-bounded, then by definition each circuit $\hat\MR_a$ has size at most $B(n)$. It follows that the time complexity of $\Alg^\Oregress$ is at most $\poly(|\MA_n|, \ell_n, B(n))$.

We conclude that $\Alg$ is a computational $(T,\epsilon,B)$-reduction from RL to regression for $\MM$, as specified by \cref{def:rl-reg-reduction-2}.
\end{proof}

We next show that solving regression is also a \emph{prerequisite} for solving reinforcement learning, even in horizon-$1$ block MDPs. The reduction is simple and does not contain novel ideas (see e.g. \cite[Appendix D]{golowich2023exploring} for a very similar reduction in the context of sparse linear MDPs) but we include it for completeness nonetheless.

\begin{proposition}
Let $\MM = ((\MS_n)_n,(\MA_n)_n,(H_n)_n,(\ell_n)_n,(\Phi_n)_n,(\MM_n)_n)$ be a block MDP family where $\MM_n$ consists of all $\Phi_n$-realizable block MDPs with state space $\MS_n$, action space $\MA_n$, horizon $H_n = 1$, and context space $\{0,1\}^{\ell_n}$. Let $\Obandits$ be an oracle that is given interactive access to an MDP and outputs a policy.

Let $\epsilon: \NN \to \RR_{>0}$ and $B: \NN \to \NN$. There is a oracle Turing Machine $\Alg^\Obandits$ with the following property. For any $n \in \NN$ and $M \in \MM_n$, suppose that the output of $\Obandits$, with interactive access to $M$, is a circuit of size at most $B(n)$, that with probability $1-o(1)$ describes a policy with sub-optimality at most $\epsilon(n)$ for $M$. Also suppose that $\Obandits$ requires at most $S(n)$ episodes of interaction. Then $\Alg^\Obandits$ is a $(S, T, \epsilon', B)$-realizable regression algorithm (\cref{def:regression-algorithm}) for $\MM$, where $T(n) \leq \poly(S(n), \ell_n, B(n))$ and $\epsilon'(n) := \epsilon(n) + 1/|\MA_n|^2$.


\end{proposition}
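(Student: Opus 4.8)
The plan is to show that a bandits oracle $\Obandits$ can be used to solve $\MM$-realizable regression by embedding the regression problem into a carefully designed horizon-one block MDP. Given a regression dataset $(x^i, y^i)_{i=1}^{S(n)}$ with covariate distribution $\beta' := \sum_{s\in\MS_n} \beta(s)\til\BO_1(\cdot|s)$ (realizable by some $M \in \MM_n$) and labels satisfying $y^i \perp x^i \mid \phi^\st(x^i)$, the idea is to construct a block MDP $\bar M$ whose emission distribution at step $1$ matches the covariate distribution, whose latent reward function encodes the regression target function $f$, and then to simulate interactive access to $\bar M$ for $\Obandits$ using only the regression samples.

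First I would define the block MDP $\bar M$: the latent state space is $\MS_n$ with initial distribution $\beta$, the action space is $\MA_n$, the horizon is $1$, the emission distribution at state $s$ is $\til\BO_1(\cdot|s)$, the decoding function is $\phi^\st$, and the latent reward function is $\til{\br}_1(s,a) := f(s)$ for \emph{all} actions $a$ (so the reward does not depend on the action). This is a valid $\Phi_n$-decodable block MDP, so $\bar M \in \MM_n$. Next I would simulate an episode of interaction: when $\Obandits$ requests an episode, draw a fresh sample $(x^i,y^i)$ from the dataset, send $x^i$ to $\Obandits$ as the emission; when $\Obandits$ plays an action $a$, return the reward $r = y^i$. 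The key observation is that since $y^i \perp x^i \mid \phi^\st(x^i)$ and $\EE[y^i \mid \phi^\st(x^i) = s] = f(s)$, the pair $(x^i, y^i)$ has exactly the same joint distribution as an emission-reward pair drawn from $\bar M$ under any policy — the reward is $\Ber(f(\phi^\st(x^i)))$-distributed conditionally on the emission, independent of the action. So the simulation is faithful, using one sample per episode, hence $S(n)$ samples total.

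Then by the oracle's guarantee, with probability $1-o(1)$ the circuit $\MC_{\hat\pi}$ output by $\Obandits$ describes a policy $\hat\pi$ with suboptimality at most $\epsilon(n)$ for $\bar M$. But in $\bar M$ the reward is action-independent, so \emph{every} policy is optimal and the value of any policy is $\EE_{s\sim\beta} f(s)$ — so the suboptimality guarantee is vacuous and does not directly give us a regressor. The fix is the standard trick: instead of using the reward to encode $f$ directly, I would use a randomized reward trick that forces the oracle to reveal $f$. Concretely, redefine the simulated MDP so that there are two actions' worth of information: for each sample, with probability $1/2$ present it as a ``type-$0$'' episode where the reward for action $a=0$ is $y^i$ and for $a=1$ is an independent fresh $\Ber(1/2)$; with probability $1/2$ present it as ``type-$1$'' with the roles swapped — but crucially the \emph{type is revealed through the emission} by appending a bit. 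Hmm, actually the cleaner standard approach (cf.\ the reduction in \cite[Appendix D]{golowich2023exploring}): augment the context with a uniformly random action label $a^i \sim \Unif(\MA_n)$, set the reward to be $\One[a = a^i] \cdot y^i$ plus $\One[a \ne a^i]\cdot \xi$ for independent noise $\xi \sim \Ber(1/2)$. Then the optimal policy, seeing $(x^i, a^i)$, plays $a^i$ when $f(\phi^\st(x^i)) > 1/2$ and any other action otherwise, and a near-optimal policy must approximately know $\sgn(f(\phi^\st(x)) - 1/2)$ on $\beta'$-most of the covariate space. From a near-optimal policy one extracts, by querying it on $(x, a)$ for varying $a$ and estimating via fresh samples, a predictor $\hat\MR(x)$ approximating $\One[f(\phi^\st(x)) > 1/2]$; combined with a separate (cheap) estimate of the magnitude, or more simply by a standard binary-search / threshold-sweeping argument over a discretized set of reward shifts, one recovers a $(\beta, f, \epsilon')$-predictor with $\epsilon'(n) = \epsilon(n) + O(1/|\MA_n|^2)$ (the $1/|\MA_n|^2$ term accounting for the suboptimality contributed by the chance of playing the ``right'' action by accident and for the discretization of $f$).

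The main obstacle is precisely this last step: making the reduction output an accurate \emph{regressor} (a real-valued predictor of $f$) rather than merely a good \emph{policy}, which is an inherently coarser object. The resolution is to run the bandits oracle not once but $O(1/\epsilon'(n))$ times on shifted versions of the MDP — for each threshold $\theta$ in a grid of $[0,1]$, define the reward for the ``correct'' action to be $\One[y^i \ge \theta']$ for an auxiliary randomization, so that a near-optimal policy for the $\theta$-shifted MDP reveals the set $\{x : f(\phi^\st(x)) \ge \theta\}$ up to $\beta'$-measure $O(\epsilon'(n))$; aggregating these level sets reconstructs $f(\phi^\st(x))$ pointwise up to error $O(\epsilon'(n))$ in $L_2(\beta')$, and the final predictor $\hat\MR$ is the resulting circuit, of size $\poly(B(n), 1/\epsilon'(n))$. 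I would carry out the steps in the order: (1) formal definition of the shifted block MDP family and verification of $\Phi_n$-realizability; (2) the faithful simulation claim with its distributional identity; (3) translating the policy suboptimality bound into a level-set accuracy bound via the performance-difference/advantage calculation for horizon one; (4) reconstruction of $\hat\MR$ from the level sets and the final $L_2$ error bound; (5) bookkeeping on sample complexity $S(n)$, time complexity $\poly(S(n), \ell_n, B(n), 1/\epsilon'(n))$, and circuit size $B(n)$ (absorbing the $\poly(1/\epsilon'(n))$ overhead into $B$ or noting it is dominated). A subtlety to watch is that the oracle's success probability $1-o(1)$ must be boosted or union-bounded over the $O(1/\epsilon'(n))$ invocations, which is fine since $1/\epsilon'(n) \le |\MA_n|^2$ is polynomial and we can amplify each call's success probability by repetition and majority vote over policies selected on a validation split, exactly as in \cref{lemma:select-alg}.
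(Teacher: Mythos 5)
There is a genuine gap, and it comes in two pieces. First, your simulated MDP is not in the family $\MM_n$, so the oracle guarantee does not apply to it. In $\MM_n$ the context space is $\{0,1\}^{\ell_n}$, the latent state space is $\MS_n$, and the reward distribution must depend only on $(\phi^\st(x),a)$ for some $\phi^\st\in\Phi_n$. When you append a uniformly random action label $a^i$ to the context and make the reward $\One[a=a^i]\,y^i+\One[a\neq a^i]\,\xi$, the reward's conditional law given the (augmented) context and action is not a function of any $\phi\in\Phi_n$ applied to a string in $\{0,1\}^{\ell_n}$ — the label $a^i$ is neither part of the latent state nor decodable by $\Phi_n$ — so $\Obandits$ is being run on an MDP outside $\MM_n$ and need not return anything useful. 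Second, even granting a per-threshold guarantee, the multi-call level-set scheme breaks the quantitative claims of the proposition: a policy that is $\epsilon$-suboptimal for the $\theta$-shifted problem only controls the \emph{margin-weighted} classification error (states with $f(s)\approx\theta$ contribute essentially nothing to suboptimality), so aggregating $O(1/\epsilon')$ thresholds does not give the clean $L_2(\beta)$ bound $\epsilon+1/|\MA_n|^2$ you assert; moreover it uses $\Omega(S(n)/\epsilon'(n))$ episodes (hence samples) and outputs a circuit of size $\poly(B(n),1/\epsilon'(n))$, whereas the statement requires sample complexity exactly $S$ and circuit size at most $B$.

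The missing idea is much simpler and avoids all of this: identify $\MA_n$ with the grid $\{0,\delta,2\delta,\dots,1-\delta\}\subset[0,1]$ with $\delta=1/|\MA_n|$, and simulate a \emph{single} run of $\Obandits$ where each episode consumes one sample $(x^i,y^i)$, the emission is $x^i$, and upon action $a$ the reward is $r\sim\Ber(1-(a-y^i)^2)$. Since $y^i\in\{0,1\}$ and $y^i\perp x^i\mid\phi^\st(x^i)$, one checks $\EE[1-r\mid \phi^\st(x^i)=s,a]=a^2+(1-2a)f(s)$, i.e.\ the simulation is faithful to a block MDP in $\MM_n$ with reward $\til\br_1(s,a)=1-(a-f(s))^2-f(s)+f(s)^2$ (squared loss as the scoring rule). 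The Bayes-optimal action is the grid point nearest $f(s)$, so the suboptimality of the returned policy $\hat\pi$ \emph{is} its excess squared error: $\EE(\hat\pi(x)-f(s))^2\le \delta^2+\epsilon(n)$, and the oracle's output circuit itself is the regressor. This gives the result in one oracle call, with exactly $S(n)$ samples, circuit size $B(n)$, success probability $1-o(1)\ge 2/3$, and error $\epsilon(n)+1/|\MA_n|^2$, matching the statement.
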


Formally speaking, \cref{def:regression-algorithm} only applies to Turing Machines, whereas $\Alg$ has oracle access, but the definition naturally generalizes. In particular, the time complexity parameter $T(n)$ is measuring the runtime of $\Alg$ assuming all oracle calls are efficient (e.g. take time $1$), and the success of $\Alg$ is conditional on the stated assumptions about the oracle $\Obandits$. The proposition can be alternatively stated with no reference to oracles as follows: if $\Obandits$ can be implemented efficiently, then $\Alg^\Obandits$ (which can therefore also be implemented efficiently) is a realizable regression algorithm in the strict sense of \cref{def:regression-algorithm}.

\begin{proof}
First, without loss of generality we may identify $\MA_n$ with the set $\{0,\delta(n), 2\delta(n),\dots, 1-\delta(n)\} \subset [0,1]$ where $\delta(n) := 1/|\MA_n|$. We now describe the behavior of $\Alg$. Given samples $(x^i,y^i)_{i=1}^{S'(n)}$ and the parameter $n$, $\Alg$ invokes $\Obandits$. To simulate an episode of interaction, it picks an unused sample $(x^i,y^i)$ and passes the context $x^i$ to the oracle. The oracle returns an action $a^i \in \MA_n$, and $\Alg$ passes the reward $r^i \sim \Ber(1-(a^i - y^i)^2)$ to the oracle. After some number of episodes of interaction (at most $S(n)$), $\Obandits$ outputs a circuit $\MC$. Then $\Alg$ simply outputs $\MC$ as well.

\paragraph{Analysis.} Let $\MD \in \Delta(\{0,1\}^{\ell_n})$ be any distribution and let $\phi \in \Phi_n$ be any decoding function. Let $(x^i, y^i)_{i=1}^{S'(n)}$ be i.i.d. samples where $x^i \sim \MD$ and $y^i \in \{0,1\}$ satisfies $y^i \perp x^i | \phi(x^i)$. Then there is a function $f: \MS_n \to [0,1]$ so that $y^i \sim \Ber(f(\phi(x^i)))$ conditioned on any $x^i$.

Consider the block MDP $M$ with state space $\MS_n$, action space $\MA_n$, horizon $1$, observation space $\{0,1\}^{\ell_n}$, initial distribution $\til \BP_0$ defined as the distribution of $\phi(x^1)$, observation distribution $\til\BO_1(\cdot|s)$ defined as the conditional distribution $x^1 | \phi(x^1) = s$, and reward function $\til \br_1(s, a) = 1 - (a - f(s))^2 - f(s) + f(s)^2$. Note that $M$ is $\phi$-decodable. Then we claim that each episode of interaction between $\Obandits$ and $\Alg$ is distributed as in $M$: indeed, $\phi(x^i)$ is distributed as $\til \BP_0$ by definition, $x^i|\phi(x^i)$ is distributed as $\til \BO_1(\cdot|\phi(x^i))$ (again by definition), and for $s \in \MS_n$ and $a \in \MA_n$, we have
\begin{align*}
\EE[1-r^i|\phi(x^i) = s, a^i = a] 
&= \EE[(a - y^i)^2|\phi(x^i) = s] \\ 
&= a^2 + (1 - 2a) \EE[y^i|\phi(x^i) = s] \\ 
&= a^2 + (1-2a)f(s) \\ 
&= 1-\til \br_1(s,a)
\end{align*}
where the second inequality uses that $y^i$ is $\{0,1\}$-valued, and the third inequality uses the definition of $f$.

By construction, we have $M \in \MM_n$. Thus, by assumption on $\Obandits$, the output of $\Obandits$ (and thus $\Alg$), after at most $S(n)$ simulated episodes of interaction with $M$, is a circuit $\MC_{\hat \pi}$ of size at most $B(n)$. Moreover, with probability $1-o(1)$, $\MC_{\hat\pi}$ computes a policy $\hat\pi: \{0,1\}^{\ell_n} \to \MA_n$ with suboptimality at most $\epsilon(n)$. Since the policy $\pi$ that maps $x \mapsto \delta(n) \lfloor f(\phi(x))/\delta(n)\rfloor \in \MA_n$ has value 
\[\E^{M,\pi}[\til\br_1(s_1,a_1)] \geq \EE_{s \sim \til \BP_0}[1 - \delta(n)^2 - f(s) + f(s)^2],\]
it follows that
\[\EE_{\substack{s\sim\til\BP_0 \\ x \sim \til\BO_1(\cdot|s)}}[(\hat\pi(x) - f(s))^2] = \EE_{s\sim\til\BP_0}[1-f(s)+f(s)^2] - \E^{M,\hat\pi}[\til\br_1(s_1,a_1)] \leq \delta(n)^2 + \epsilon(n).\]
Thus, $\hat\pi$ is a $(\til\BP_0, f, \epsilon'(n))$-predictor with respect to $\til\BO_1$, where $\epsilon'(n) := \delta(n)^2 + \epsilon(n)$. By assumption on $\Obandits$, the circuit $\MC$ describing $\hat\pi$ has size at most $B(n)$. The sample complexity $S'(n)$ of $\Alg$ is the same as the number of episodes of interaction needed by $\Obandits$, which is at most $S(n)$ by assumption. Finally, the time complexity of $\Alg$ (up to the oracle calls) is at most $S(n) \cdot \poly(\ell_n, |\MA|_n)$. So $\Alg$ is indeed a $(S, T, \epsilon', B)$-realizable regression algorithm for $\MM$ as specified by \cref{def:regression-algorithm}.

\end{proof}

\section{Block MDPs with deterministic dynamics}\label{sec:deterministic}
The following reduction is heavily inspired by the $\PPE$ algorithm from \cite{efroni2021provable}; the main difference is that their oracle is a multi-class maximum likelihood oracle, whereas ours is the regression oracle, but the analysis is roughly the same.

\begin{algorithm}[t]
    \caption{$\PPE(\delta, \Osample, \Oregress)$: Modification of Predictive Path Elimination \cite{efroni2021provable}}
    \label{alg:ppe}
    \begin{algorithmic}[1]
      \Require Parameters $\delta \in (0,1)$, a sample oracle $\Osample$, and a regression oracle $\Oregress$.
      \State Set $m \gets 128\log(2H|\MA|^2|\MS|^2/\delta)$
      \State Set $\Psi_1 \gets \{\perp\}$
      \For{$2 \leq h \leq H$}
            \State Initialize $\Psi_h \gets \emptyset$
            \For{$\ba^0 \in \Psi_{h-1}\times \MA$}
                \State $\mathsf{flagRedundant} \gets \False$
                \For{$\ba^1 \in \Psi_{h-1} \times \MA$ such that $\ba^1$ strictly precedes $\ba^0$ lexicographically}
                    \State Set $\bar\ba^0 \gets (\ba^0, a, \dots, a) \in \MA^H$ and $\bar \ba^1 \gets (\ba^1, a, \dots, a) \in \MA^H$ for a default action $a \in \MA$
                    \Function{$\MB_\pi$}{$k, \tau$}
                        \Comment{$k \in [H]$ and $\tau = (x_{1:k}, a_{1:k-1})$}
                        \State $b \sim \Unif(\{b' \in \{0,1\}: \bar\ba^{b'}_{1:k-1} = a_{1:k-1}\})$
                        \State \textbf{return} $\ba^b_k$
                    \EndFunction
                    \Function{$\MB_L$}{$x_{1:H}, a_{1:H}$}
                        \State \textbf{return} $\mathbbm{1}[a_{1:H} = \bar\ba^1_{1:H}]$
                    \EndFunction
                    \State $\MR \gets \Oregress(\MB_\pi,\MB_L,h)$
                    \For{$1 \leq i \leq m$}
                        \State $y^i \gets \Ber(1/2)$
                        \State $(x^i_{1:H}, a^i_{1:H}) \gets \Osample(\bar\ba^{y^i})$
                    \EndFor
                    \If{$\frac{1}{m}\sum_{i=1}^m (\MR(x^i_h) - y^i)^2 > \frac{1}{8}$}
                        \State $\mathsf{flagRedundant} \gets \True$
                    \EndIf
                \EndFor
                \If{not $\mathsf{flagRedundant}$}
                    \State Add $\ba^0$ to $\Psi_h$
                \EndIf
            \EndFor
      \EndFor
      \State \textbf{return} $(\Psi_h)_{h \in [H]}$
      \end{algorithmic}
  \end{algorithm}

\begin{definition}\label{def:det-dynamics}
We say that a block MDP $M = (H, \MS, \{0,1\}^\ell, \MA, \til\BP_0, (\til\BP_h)_h, (\til \BO_h)_h, (\til \br_h)_h,\phi^\st)$ \emph{has deterministic dynamics} if the distributions $\til\BP_0$ and $\til\BP_h(\cdot|s,a)$ are deterministic for all $h \in [H]$, $s \in \MS$, and $a \in \MA$.
\end{definition}

Note that a block MDP with deterministic dynamics may still have general, stochastic emission distributions.

\begin{lemma}\label{lemma:ppe-guarantee}
Let $\delta \in (0,1)$ and $B, \ell \in \NN$. Let $M = (H, \MS, \{0,1\}^\ell, \MA, \til\BP_0, (\til\BP_h)_h, (\til \BO_h)_h, (\til \br_h)_h,\phi^\st)$ be a block MDP with deterministic dynamics (\cref{def:det-dynamics}). Let $\Osample$ be a sampling oracle for $M$, and let $\Oregress$ be a $B$-bounded, $1/16$-accurate regression oracle for $M$. Then $\PPE(\delta,\Osample,\Oregress)$ has time complexity $\poly(H, \ell, |\MA|, |\MS|, B, \log(1/\delta))$, and the output $(\Psi_h)_{h \in [H]}$ satisfies the following properties with probability at least $1-\delta$: for every $h \in [H]$, it holds that $|\Psi_h| \leq |\MS|$. Also, for every $h \in [H]$ and $s \in \MS$,
\begin{equation} \max_{\ba \in \Psi_h} d^{M,\ba}_h(s) = \max_{\pi \in \Pi} d^{M,\pi}_h(s).\label{eq:det-cover}\end{equation}
\end{lemma}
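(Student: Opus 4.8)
The plan is to analyze $\PPE$ (\cref{alg:ppe}) by interpreting it as a contrastive path-elimination procedure, exactly as in \cite{efroni2021provable} but with the maximum-likelihood oracle replaced by the regression oracle. First I would set up the key structural observation about deterministic dynamics: for any action sequence $\ba \in \MA^H$ and any step $h$, the visitation distribution $d^{M,\ba}_h$ is a point mass on a single latent state, which I will denote $s_h(\ba)$. Thus an action sequence is "useful" at step $h$ precisely by the latent state it reaches there, and two sequences $\ba^0, \ba^1$ are interchangeable at step $h$ iff $s_h(\ba^0) = s_h(\ba^1)$. The goal \cref{eq:det-cover} then amounts to: the surviving set $\Psi_h$ reaches every latent state reachable by \emph{any} policy at step $h$. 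Since deterministic dynamics mean that any state reachable by a general policy is reachable by some open-loop sequence (one can read off the actions along the realized trajectory), it suffices to show $\Psi_h$ reaches every state reachable by an open-loop sequence of the form (prefix in $\Psi_{h-1}$, then one action), which is maintained inductively.

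The core of the argument is the contrastive test. Fix a comparison between $\ba^0$ and an earlier $\ba^1$ at step $h$. The policy $\MB_\pi$ flips a fair coin $b$ and follows $\bar\ba^b$, and the label is $L = \mathbbm{1}[a_{1:H} = \bar\ba^1_{1:H}] = b$ (since the dynamics and hence the action realizations are deterministic given $\bar\ba^b$). So the induced regression target is $f(s) = \EE^{M,\pi}[b \mid \phi^\st(x_h) = s]$. If $s_h(\ba^0) \neq s_h(\ba^1)$, then $\phi^\st(x_h)$ perfectly reveals $b$, so $f(s) \in \{0,1\}$ for the two relevant states and the optimal predictor achieves excess risk $0$; since $\Oregress$ is $1/16$-accurate, $\EE_{s \sim d_h^\pi, x \sim \til\BO_h(\cdot|s)}(\MR(x) - f(s))^2 \le 1/16$, and by a Chernoff bound over the $m$ fresh validation samples (noting $(\MR(x^i_h) - y^i)^2 = (\MR(x^i_h) - f(s^i_h))^2 + $ a term with conditional mean $\mathrm{Var}(\Ber(f(s)))=0$), the empirical error concentrates below $1/8$ with probability $1 - \delta/(2H|\MA|^2|\MS|^2)$, so $\ba^0$ is \emph{not} flagged redundant on account of $\ba^1$. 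Conversely, if $s_h(\ba^0) = s_h(\ba^1)$, then $f(s) = 1/2$ at that single state, so \emph{every} predictor (in particular the best constant) has squared error exactly $1/4$ in expectation, and again by Chernoff the empirical error exceeds $1/8$ with high probability, so $\ba^0$ is flagged and removed — correctly, since $\ba^1$ already covers that state and precedes it. I would collect all these events in a union bound over the at most $H|\MA|^2|\MS|^2$ comparisons (using the inductive bound $|\Psi_{h-1}| \le |\MS|$, so $|\Psi_{h-1}\times\MA|^2 \le |\MS|^2|\MA|^2$) to get total failure probability at most $\delta$.

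Given this, the two conclusions follow by induction on $h$. For $|\Psi_h| \le |\MS|$: any two surviving sequences reach distinct latent states at step $h$ (if not, the lexicographically later one would have been flagged by the earlier one, by the "same state $\Rightarrow$ flagged" direction), so $|\Psi_h|$ is at most the number of states. For \cref{eq:det-cover}: take any $s$ reachable at step $h$; by determinism it is reached by some $(\ba', a)$ with $\ba'$ an open-loop sequence reaching some state at step $h-1$, and by the inductive hypothesis $\Psi_{h-1}$ contains a sequence $\ba''$ reaching that same state at step $h-1$, so $(\ba'', a) \in \Psi_{h-1}\times\MA$ reaches $s$ at step $h$; this candidate is only flagged redundant if some earlier survivor $\ba^1$ reaches $s$ as well (by the "distinct state $\Rightarrow$ not flagged" direction), and an easy secondary induction on the lexicographic order shows that in either case $\Psi_h$ ends up containing \emph{some} sequence reaching $s$. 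The time complexity bound is immediate from inspecting the loop structure: $O(H|\MS|^2|\MA|^2)$ comparisons, each making one $B$-bounded regression oracle call and $m = O(\log(H|\MA||\MS|/\delta))$ sampling oracle calls and evaluating the size-$B$ circuit $\MR$ on $\ell$-bit inputs, for total $\poly(H,\ell,|\MA|,|\MS|,B,\log(1/\delta))$.

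The main obstacle I anticipate is getting the covering induction \cref{eq:det-cover} completely right: the elimination is based on lexicographic order within $\Psi_{h-1}\times\MA$, and one must argue carefully that even though a particular representative $(\ba'',a)$ of a reachable state $s$ may be removed, the \emph{earliest} sequence reaching $s$ is never removed (nothing precedes it that reaches $s$, and the only grounds for flagging is a predecessor reaching the same state). This requires stating the invariant as "for every reachable $s$ at step $h$, the lexicographically first element of $\Psi_{h-1}\times\MA$ reaching $s$ survives into $\Psi_h$" rather than the weaker "$s$ is covered", and then verifying the contrastive test does exactly this. A second, more routine subtlety is the concentration step: I need the validation loss decomposition to be clean, which works because $y^i \mid s^i_h$ is deterministic in both branches (label equals the coin), so there is no Bernoulli noise and the $1/8$ threshold cleanly separates excess risk $\le 1/16$ from the unavoidable $1/4$ in the collision case.
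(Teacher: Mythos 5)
Your proposal is correct and follows essentially the same route as the paper's proof: induction on $h$ with the invariant that the lexicographically first element of $\Psi_{h-1}\times\MA$ reaching each reachable state survives the test, the same two-sided analysis of the contrastive comparison (excess risk at most $1/16$ when the two sequences reach distinct states, error at least $1/4$ when they collide), and the same Hoeffding-plus-union-bound over the $O(H|\MA|^2|\MS|^2)$ comparisons. The only nitpicks — in the collision case the error is \emph{at least} $1/4$ rather than exactly $1/4$, and there the label is independent of (not determined by) the latent state — do not affect the argument.
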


\begin{proof}
Observe that since the dynamics and initialization are deterministic, for every $h \in [H]$ and $s \in \MS$ we have $\max_{\pi\in\Pi} d^{M,\pi}_h(s) \in \{0,1\}$. Moreover, the maximum is achieved by some fixed action sequence $\ba^\st(s)$. 

We now prove the lemma statement by induction on $h$. In particular, the induction hypothesis at step $h$ is that \cref{eq:det-cover} holds for all $s \in \MS$, and also $|\Psi_h| \leq \MS$. Observe that \cref{eq:det-cover} holds with probability $1$ for $h=1$ and all $s \in \MS$. Moreover $|\Psi_1| = 1$. So the induction hypothesis at step $1$ is clear.

Now fix any $2 \leq h \leq H$ and suppose that $\cref{eq:det-cover}$ holds at $h-1$ (for all $s \in \MS$). Fix any $s \in \MS$ with $\max_{\pi\in\Pi} d^{M,\pi}_h(s) = 1$. Then there is some action sequence $\ba$ that reaches $s$ at step $h$ with probability $1$. Let $s'$ be the state reached by this action sequence at step $h-1$. By the induction hypothesis, there is some $\ba' \in \Psi_{h-1}$ that reaches $s'$ at step $h-1$, with probability $1$. Thus, the sequence $(\ba'_1,\dots,\ba'_{h-2}, \ba_{h-1})$ reaches $s$ at step $h$ with probability $1$, and moreover is contained in $\Psi_{h-1} \times \MA$. Hence, the set $K(s) \subseteq \Psi_{h-1}\times\MA$ of action sequences in $\Psi_{h-1}\times\MA$ that reach $s$ at step $h-1$ is non-empty. Let $\ba^0$ denote the lexicographically first element of $K(s)$. For any lexicographically earlier $\ba^1 \in \Psi_{h-1}\times\MA$, the sequence $\ba^1$ reaches some state $\til s$ at step $h$, with probability $1$, and $\til s \neq s$. 

Now consider the policy $\pi$ and label function $L$ defined by $\MB_\pi$ and $\MB_L$ respectively, at the step of the execution of the algorithm corresponding to $\ba^0$ and $\ba^1$. Observe that $\pi$ is the policy that flips a coin $b \sim \Unif(\{0,1\})$ and follows policy $\ba^b$ for the entire episode. Thus, $d_h^{M,\pi}$ is uniform on $\{s, \til s\}$. Moreover, any trajectory $\tau = (x_{1:H}, a_{1:H})$ with $\phi^\st(x_h) = s$ must have $L(\tau) = 0$, and if $\phi^\st(x_h) = \til s$ then $L(\tau) = 1$. Thus, the function $f := \EE^{M,\pi}[L(x_{1:H},a_{1:H})|\phi^\st(x_h) = \cdot]$ has $f(s) = 0$ and $f(\til s) = 1$. This means that \[\EE_{\substack{s \sim d^{M,\pi}_h \\ y\sim\Ber(f(s))}} (f(s)-y)^2 = 0\]
and hence, since $\MR \gets \Oregress(\MB_\pi,\MB_L,h)$ is a $(d^{M,\pi}_h,f,1/16)$-accurate predictor with respect to $\til\BO_h(\cdot|s)$, 
\[\EE_{\substack{s\sim d^{M,\pi}_h \\ x \sim \til\BO_h(\cdot|s) \\ y\sim \Ber(f(s))}} (\MR(x) -y)^2 \leq \frac{1}{16}.\]
Observe that the samples $(x_h^i,y^i)_{i=1}^m$ are independent, and the distribution of each sample is identical to that of $(x,y)$ where $s\sim d^{M,\pi}_h$, $x \sim \til\BO_h(\cdot|s)$, and $y \sim \Ber(f(s))$. Thus $\EE (\MR(x_h^i) - y^i)^2 \leq 1/16$ for each $i \in [m]$. Since $m \geq 128\log(2H|\MA||\MS|^2/\delta)$, Hoeffding's inequality gives that $\frac{1}{m}\sum_{i=1}^m (\MR(x_h^i) - y^i)^2 \leq \frac{1}{8}$ with probability at least $1-\delta/(2H|\MA||\MS|^2)$. By the induction hypothesis, $|\Psi_{h-1}| \leq |\MS|$, so by a union bound over all $\ba^1 \in \Psi_{h-1}\times\MA$ that lexicographically precede $\ba^0$, we get that with probability at least $1-\delta/(2H|\MS|)$, the action sequence $\ba^0$ is not marked redundant, and thus is added to $\Psi_h$. Thus, by another union bound, \cref{eq:det-cover} is satisfied at step $h$ for all $s \in \MS$ with probability at least $1-\delta/(2H)$.

Next we check that $|\Psi_h| \leq |\MS|$. Indeed, consider any $\ba^0$ and $\ba^1$ that reach the same state $s \in \MS$ at step $h$, such that $\ba^1$ is lexicographically earlier than $\ba^0$. Then in fact for each $i \in [m]$, the label $y^i$ is independent of $x_h^i$. So $\EE(\MR(x_h^i) - y^i)^2 \geq \frac{1}{4}$. By Hoeffding's inequality we get that $\frac{1}{m}\sum_{i=1}^m (\MR(x_h^i) - y^i)^2 \geq \frac{1}{8}$ with probability at least $1-\delta/(2H|\MA|^2|\MS|^2)$. So $\ba^0$ is not added to $\Psi_h$. By a union bound over $\ba^0, \ba^1 \in \Psi_{h-1}\times \MA$, we get that with probability at least $1-\delta/(2H)$, only the lexicographically first action sequence to reach any given state is added to $\Psi_h$, so indeed $|\Psi_h| \leq |\MS|$. This completes the induction.
\end{proof}

\begin{proposition}
Let $\MM = ((\MS_n)_n,(\MA_n)_n,(H_n)_n,(\ell_n)_n,(\Phi_n)_n,(\MM_n)_n)$ be a block MDP family in which each MDP has deterministic dynamics (\cref{def:det-dynamics}). For any function $B:\NN\to\NN$, there is a $(T,\epsilon,B)$-computational reduction from RL to regression for $M$ (\cref{def:rl-reg-reduction-2}), with $T(n) \leq \poly(|\MS_n|,|\MA_n|,H_n,\ell_n,B(n))$ and $\epsilon(n) := 1/(16H_n^6 |\MS_n||\MA_n|)$.
\end{proposition}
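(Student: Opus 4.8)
The plan is to combine the $\PPE$ algorithm (\cref{alg:ppe}) with Fitted $Q$-Iteration (\cref{alg:fqi}), using the policy cover produced by $\PPE$ as the basis for an exploratory distribution that feeds into $\FQI$. Concretely, the reduction $\Alg^{\Oregress,\Osample}$ proceeds in two phases. First I would run $\PPE(\delta, \Osample, \Oregress)$ with $\delta$ a small constant (say $1/10$), where the regression oracle calls $\Oregress(\MB_\pi, \MB_L, h)$ are passed along to $\Alg$'s regression oracle, and where $\Osample$ is simulated by querying $\Alg$'s sampling oracle on the relevant open-loop policies. By \cref{lemma:ppe-guarantee}, with probability at least $1-\delta$ the output $(\Psi_h)_{h\in[H]}$ satisfies $|\Psi_h| \leq |\MS_n|$ and achieves \cref{eq:det-cover}: for every $h$ and every reachable latent state $s$, some action sequence in $\Psi_h$ reaches $s$ at step $h$ with probability $1$ (since the dynamics are deterministic, $\max_{\pi} d_h^{M,\pi}(s) \in \{0,1\}$).

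Second, I would convert $(\Psi_h)_{h\in[H]}$ into distributions $(\mu_h)_{h\in[H]}$ satisfying $\kappa$-coverage (\cref{def:kappa-coverage}) for $\kappa = H_n|\MA_n||\MS_n|$ and then invoke $\FQI$. The natural choice is $\mu_h := \EE_{\ba \sim \Unif(\Psi_h)}[d_h^{M,\ba}]$; since every reachable $s$ has $d_h^{M,\ba}(s) = 1$ for some $\ba \in \Psi_h$ and $|\Psi_h| \leq |\MS_n|$, we get $\mu_h(s) \geq 1/|\MS_n|$ for all reachable $s$, hence $d_h^{M,\pi}(s)/\mu_h(s) \leq |\MS_n| \leq \kappa$. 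To implement the fixed-distribution regression oracle of \cref{def:fd-reg-oracle} for $(\mu_h)$ using $\Alg$'s regression oracle, on query $(\MB_L, h, a)$ I would form the policy $\pi^{h,a}$ that plays a uniformly random sequence from $\Psi_h$ up to step $h$, then plays $a$ at step $h$ (and arbitrary actions afterwards); its state-visitation at step $h$ is exactly $\mu_h$. I then relabel via $L'(x_{1:H}, a_{1:H}, r_{1:H}) := L(x_h, r_h, x_{h+1})$ and call $\Oregress(\MB_{\pi^{h,a}}, \MB_{L'}, h)$. The returned predictor is $(\mu_h, f, \epsilon(n))$-accurate with $f$ matching the link function in \cref{def:fd-reg-oracle}, so this is a valid $\epsilon(n)$-accurate, $B(n)$-bounded fixed-distribution regression oracle over $(\mu_h)$. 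Applying \cref{theorem:fqi-guarantee}, $\FQI$ outputs $\hat\pi$ with suboptimality at most $2H_n^3\sqrt{\kappa |\MA_n| \epsilon(n)} = 2H_n^3\sqrt{H_n|\MA_n|^2|\MS_n| \epsilon(n)}$, which is at most $1/2$ by the choice $\epsilon(n) = 1/(16 H_n^6 |\MS_n||\MA_n|)$ (after checking the constants: $2H_n^3 \sqrt{H_n |\MA_n|^2 |\MS_n| / (16 H_n^6 |\MS_n| |\MA_n|)} = 2 H_n^3 |\MA_n|^{1/2}/(4 H_n^{5/2}|\MA_n|^{1/2}) = H_n^{1/2}/2$ — so I should tighten $\epsilon(n)$ to $1/(16 H_n^7 |\MS_n||\MA_n|)$, or equivalently absorb the extra $H_n$; the precise exponent in the statement is $H_n^6$, so I would either adjust the $\kappa$ bound or recheck — this is a routine constant-chasing step).

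For the running time and success probability: $\PPE$ runs in time $\poly(H_n, \ell_n, |\MA_n|, |\MS_n|, B(n), \log(1/\delta))$ by \cref{lemma:ppe-guarantee}, and $\FQI$ runs in time $\poly(H_n, |\MA_n|, \ell_n, B(n))$ by \cref{theorem:fqi-guarantee}, with the fixed-distribution oracle implemented by $O(|\Psi_h| + |\MA_n|) \leq \poly(|\MS_n|, |\MA_n|)$ overhead per call; the number of oracle calls is polynomial in the stated parameters, matching the claimed bound on $T(n)$. The output circuit $\MC_{\hat\pi}$ computing $\hat\pi_h(x) = \argmax_a \hat Q_h(x,a)$ has size $\poly(H_n, |\MA_n|, \ell_n, B(n))$, which we take to be within $B(n)$ (or, more carefully, we state the reduction with an output bound that is a fixed polynomial in $B(n)$ and the other parameters, as in the other propositions of this appendix). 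The union bound over the failure of $\PPE$ (probability $\leq \delta = 1/10$) and the success requirement of $\FQI$ (which is deterministic given an accurate oracle) gives overall success probability $\geq 9/10 \geq 1/2$ as required by \cref{def:rl-reg-reduction-2}.

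I expect the main obstacle to be \textbf{bookkeeping around the exploratory distribution}: verifying that $\pi^{h,a}$, which mixes over the open-loop sequences in $\Psi_h$, genuinely induces $\mu_h$ at step $h$ in the block MDP (not just the latent MDP), that the relabeled $L'$ together with $\Oregress$'s accuracy guarantee produces exactly the link function $f$ demanded by \cref{def:fd-reg-oracle}, and that $\kappa$-coverage holds for \emph{all} general policies (not just Markovian ones) — here one uses that in a deterministic-dynamics block MDP every reachable latent state is reached with probability $1$ by some fixed action sequence, so the set of reachable latent states at each step coincides for all policies, and $\FQI$'s analysis (\cref{lemma:fq-error-induction}) only ever needs coverage of states that are actually reachable. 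The rest is assembling known pieces, with the one genuine numerical check being the constant in $\epsilon(n)$ against $\FQI$'s $2H^3\sqrt{\kappa|\MA|\epsilon}$ bound.
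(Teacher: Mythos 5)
Your proposal is correct and follows essentially the same route as the paper: run $\PPE$ to get the per-step covers $\Psi_h$, take $\mu_h := \EE_{\ba\sim\Unif(\Psi_h)} d_h^{M,\ba}$, implement the fixed-distribution oracle of \cref{def:fd-reg-oracle} via the mixture-then-play-$a$ policy $\pi^{a,h}$ with the relabeling $L'(x_{1:H},a_{1:H},r_{1:H}) := L(x_h,r_h,x_{h+1})$, and finish with \cref{theorem:fqi-guarantee}. The only wrinkle you flagged resolves itself: you already proved $d_h^{M,\pi}(s)/\mu_h(s) \leq |\MS_n|$, so you should invoke \cref{theorem:fqi-guarantee} with $\kappa = |\MS_n|$ (as the paper does), not $\kappa = H_n|\MA_n||\MS_n|$; then the suboptimality is $2H_n^3\sqrt{|\MS_n||\MA_n|\,\epsilon(n)} = 1/2$ exactly with the stated $\epsilon(n) = 1/(16H_n^6|\MS_n||\MA_n|)$, and no tightening of $\epsilon$ is needed.
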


\begin{proof}
The reduction $\Alg$ has access to oracles $\Oregress$ and $\Osample$ and takes as input a parameter $n \in \NN$. First, $\Alg$ computes $(\Psi_h)_{h \in [H]} \gets \PPE(1/2, \Osample, \Oregress)$. Next, for each $\bar a\in\MA$ and $h \in [H]$, let $\pi^{\bar a, h}$ be the policy that picks $\ba \sim \Unif(\Psi_h)$, and then follows the action sequence $(\ba_1,\dots,\ba_{h-1}, \bar a, \dots,\bar a)$. Let $\MO$ be the procedure that takes as input a (randomized) circuit $\MB_L$, a step $h \in [H]$, and an action $a \in \MA$, and returns $\Oregress(\MB_{\pi^{a,h}}, \MB'_L, h)$ where $\MB_{\pi^{a,h}}$ is a circuit describing $\pi^{a,h}$, and $\MB'_L$ is the circuit defined by $\MB'_L(x_{1:H},a_{1:H}, r_{1:H}) = \MB_L(x_h, r_h, x_{h+1})$. Then $\Alg$ computes and outputs $\hat \pi \gets \FQI(\MO)$. 

\paragraph{Analysis.} Fix $n \in \NN$ and $M \in \MM_n$. For notational simplicity we will write $\MA = \MA_n$, $\MS = \MS_n$, $\MX = \MX_n = \{0,1\}^{\ell_n}$, and $H = H_n$ for the majority of the analysis. Suppose that $\Oregress$ is a $B(n)$-bounded, $\epsilon(n)$-accurate regression oracle for $M$, and $\Osample$ is a sampling oracle for $M$. By \cref{lemma:ppe-guarantee} and the fact that $\epsilon(n) \leq 1/16$, there is an event $\ME$ that occurs with probability at least $1/2$ under which $\max_{\ba \in \Psi_h} d^{M,\ba}_h(s) = \max_{\pi\in\Pi} d^{M,\pi}_h(s)$ for all $h \in [H]$ and $s \in \MS$, and also $|\Psi_h| \leq |\MS|$ for all $h \in [H]$. Condition on $\ME$ henceforth. For each $h \in [H]$, let $\mu_h \in \Delta(\MS)$ be the distribution with density $\mu_h(s) = \EE_{\ba\in\Psi_h} d^{M,\ba}_h(s)$. Observe that $d^{M,\pi}_h(s) \leq |\Psi_h| \mu_h(s) \leq |\MS| \mu_h(s)$ for every $s \in \MS$ and $\pi \in \Pi$, so $\mu_h$ satisfies $|\MS|$-coverage at step $h$ for $M$ (\cref{def:kappa-coverage}).

Next, we claim that $\MO$ is a $B(n)$-bounded fixed-distribution regression oracle for $M$ that is $\epsilon(n)$-accurate over $(\mu_h)_{h\in[H]}$ (\cref{def:fd-reg-oracle}). Indeed, fix any tuple $(\MB_L, h, a)$ where $\MB_L$ is a randomized circuit describing a function $L:\MX\times\{0,1\}\times\MX\to\Delta(\{0,1\})$, and $h \in [H]$ and $a \in \MA$. By the definition of $\MO$, the oracle output $\MR \gets \MO(\MB_L, h, a)$ is $(d^{M,\pi^{a,h}},f,\epsilon(n))$-accurate with respect to $\til\BO_h$, where $f(s) = \EE^{M, \pi^{a,h}}[L'(x_{1:H},a_{1:H},r_{1:H}) | s_h = s]$. Since $\pi^{a,h}$ (restricted to the first $h-1$ steps) is the uniform mixture policy over $\Psi_h$, we have $d^{M,\pi^{a,h}}_h = \mu_h$. Since policy $\pi^{a,h}$ plays action $a$ deterministically from step $h$ onwards, we have (by definition of $L'$) that
\[f(s) = \E^{M,\pi^{a,h}}[L(x_h, r_h, x_{h+1})|s_h=s] = \EE_{\substack{x_h \sim \til\BO_h(\cdot|s) \\ r_h \sim \Ber(\til\br_h(s, a)) \\ x_{h+1} \sim \BP_h(\cdot|x_h, a)}} L(x_h, r_h, x_{h+1}).\]
Moreover, since $\Oregress$ is $B(n)$-bounded, it's immediate that $\size(\MR) \leq B(n)$. Thus, \cref{def:fd-reg-oracle} is indeed satisfied.

By \cref{theorem:fqi-guarantee}, the sub-optimality of the policy $\hat\pi$ produced by $\FQI(\MO)$ is at most $2H_n^3\sqrt{|\MS_n||\MA_n|\epsilon(n)}$, which is at most $1/2$ by definition of $\epsilon$. Moreover, since $\Oregress$ is $B(n)$-bounded, it's clear from the algorithm description and \cref{lemma:ppe-guarantee,theorem:fqi-guarantee} that the time complexity of $\Alg$ is at most $\poly(H_n,|\MS_n|,|\MA_n|,\ell_n,B(n))$.
\end{proof}


\end{document}